\theoremstyle{plain}
\newtheorem{thm}{\protect\theoremname}
\theoremstyle{definition}
\newtheorem{example}[thm]{\protect\examplename}
\theoremstyle{remark}
\newtheorem{rem}[thm]{\protect\remarkname}
\theoremstyle{plain}
\newtheorem{assumption}{\protect\assumptionname}
\theoremstyle{plain}
\newtheorem{lem}[thm]{\protect\lemmaname}
\theoremstyle{definition}
\newtheorem{defn}[thm]{\protect\definitionname}
\theoremstyle{plain}
\newtheorem{cor}[thm]{\protect\corollaryname}
\theoremstyle{plain}
\newtheorem{prop}[thm]{\protect\propositionname}
\providecommand{\corollaryname}{Corollary}
\providecommand{\definitionname}{Definition}
\providecommand{\examplename}{Example}
\providecommand{\lemmaname}{Lemma}
\providecommand{\propositionname}{Proposition}
\providecommand{\theoremname}{Theorem}
\providecommand{\assumptionname}{Assumption}
\providecommand{\corollaryname}{Corollary}
\providecommand{\definitionname}{Definition}
\providecommand{\examplename}{Example}
\providecommand{\lemmaname}{Lemma}
\providecommand{\propositionname}{Proposition}
\providecommand{\remarkname}{Remark}
\providecommand{\theoremname}{Theorem}
\begin{document}
\title{A Rigorous Framework for the Mean Field Limit\\of Multilayer Neural
Networks \thanks{A conference version \cite{pham2021global} of the work appears in
ICLR 2021.}}
\author{Phan-Minh Nguyen\thanks{The Voleon Group. The majority of this work was done when P.-M. Nguyen
was at Department of Electrical Engineering, Stanford University.}$\quad$and Huy Tuan Pham\thanks{Department of Mathematics, Stanford University. This work was done
in part while H. T. Pham was at the University of Cambridge.} \thanks{Author ordering is randomized.}}
\maketitle
\begin{abstract}
We develop a mathematically rigorous framework for multilayer neural
networks in the mean field regime. As the network's widths increase,
the network's learning trajectory is shown to be well captured by
a meaningful and dynamically nonlinear limit (the \textit{mean field}
limit), which is characterized by a system of ODEs. Our framework
applies to a broad range of network architectures, learning dynamics
and network initializations. Central to the framework is the new idea
of a \textit{neuronal embedding}, which comprises of a non-evolving
probability space that allows to embed neural networks of arbitrary
widths.

Using our framework, we prove several properties of large-width multilayer
neural networks. Firstly we show that independent and identically
distributed initializations cause strong degeneracy effects on the
network's learning trajectory when the network's depth is at least
four. Secondly we obtain several global convergence guarantees for
feedforward multilayer networks under a number of different setups.
These include two-layer and three-layer networks with independent
and identically distributed initializations, and multilayer networks
of arbitrary depths with a special type of correlated initializations
that is motivated by the new concept of \textit{bidirectional diversity}.
Unlike previous works that rely on convexity, our results admit non-convex
losses and hinge on a certain universal approximation property, which
is a distinctive feature of infinite-width neural networks and is
shown to hold throughout the training process. Aside from being the
first known results for global convergence of multilayer networks
in the mean field regime, they demonstrate flexibility of our framework
and incorporate several new ideas and insights that depart from the
conventional convex optimization wisdom.
\end{abstract}
\tableofcontents{}

\section{Introduction}

A major outstanding theoretical challenge in deep learning is the
understanding of the learning dynamics of multilayer neural networks.
A precise characterization of the learning trajectory is typically
hard, primarily owing to the highly nonlinear and complex structure
of deep learning architectures, which departs from convex optimization
even when the loss function is convex. Recent progresses tackle this
challenge with one simplification: they consider networks whose widths
are very large, ideally approaching infinity. In particular, under
suitable conditions, as the width increases, the network's behavior
during training is expected to be captured by a meaningful limit.

One such type of analysis exploits exchangeability of neurons. \cite{mei2018mean,chizat2018,rotskoff2018neural,sirignano2018mean}
show that under a suitable scaling limit, the learning dynamics of
wide two-layer neural networks can be captured by a Wasserstein gradient
flow of a probability measure over weights. In this limit -- which
is usually referred to as the \textit{mean field (MF) limit}, the
network weights evolve nonlinearly with time. The MF scaling of two-layer
networks require a certain normalization to be applied to the last
layer, together with a learning rate that compensates for this normalization.
The MF limit under the Wasserstein gradient flow formulation has led
to a fruitful line of research that explains and uncovers interesting
properties of two-layer networks, such as their optimization efficacy.
Let us delve into a few further high-level details of the two-layer
case, before discussing the interesting challenge in the multilayer
case.

\subsection{Two-layer MF network: a brief overview}

Let us informally present a sampled subset of interesting results
from this line of works. To fix ideas, we consider the usual two-layer
neural network:
\[
\hat{\mathbf{y}}_{\text{2-layer}}\left(x;W\right)=\frac{1}{n}\sum_{i=1}^{n}w_{2,i}\sigma\left(\left\langle w_{1,i},x\right\rangle \right).
\]
Here $x\in\mathbb{R}^{d}$ is the input, $\sigma:\;\mathbb{R}\to\mathbb{R}$
is a nonlinear activation function, and $W=\left\{ w_{1,i},w_{2,i}\right\} _{i\in\left[n\right]}$
is the set of weights with $w_{1,i}\in\mathbb{R}^{d}$ and $w_{2,i}\in\mathbb{R}$,
for $i\in\left[n\right]$ the set of integers from $1$ and $n$.
This network has $n$ neurons; $n$ is also referred to as the width
of the network. The scaling factor $1/n$ is special to the MF scaling
in two-layer networks and we will see its role shortly.

\paragraph*{An ``infinite-width'' representation.}

One key idea in this line of work is to introduce the following representation:
\[
\hat{y}_{\text{2-layer}}\left(x;\mu\right)=\int w_{2}\sigma\left(\left\langle w_{1},x\right\rangle \right)\mu(dw_{1},dw_{2}),
\]
where $\mu$ is a probability measure on $\mathbb{R}^{d+1}$. It is
easy to see that by choosing $\mu=\frac{1}{n}\sum_{i=1}^{n}\delta_{w_{1,i},w_{2,i}}$
the empirical measure over the weights $W$, one identifies $\hat{y}_{\text{2-layer}}\left(x;\mu\right)=\hat{\mathbf{y}}_{\text{2-layer}}\left(x;W\right)$.
This identification is possible thanks to the previously mentioned
scaling factor $n$.

One way to rationalize this representation is as follows: there is
a special symmetry in the two-layer neural network, in which
\[
\hat{\mathbf{y}}_{\text{2-layer}}\left(x;W\right)=\hat{\mathbf{y}}_{\text{2-layer}}(x;\{w_{1,\Pi(i)},w_{2,\Pi(i)}\}_{i\in\left[n\right]})
\]
for any permutation $\Pi$ on the set of integers $\left[n\right]$.
The representation via $\mu$ is a neat way to factor our this symmetry
and capture the exchangeability of neurons. Of course, one is not
restricted to only empirical measures for $\mu$. Therefore this representation
allows one to reason about two-layer neural networks with \textit{arbitrary}
widths. In other words, it gives us the ability to take the infinite-width
limit $n\to\infty$. This is an important observation that is central
to this line of works.

\paragraph*{The learning dynamics at infinite width: the MF limit.}

We are interested in understanding the learning dynamics of the network
in the infinite-width limit. Consider the continuous-time gradient
descent learning rule (with respect to $W$) for the loss $\ell$:
\[
\frac{d}{dt}W(t)=-n\nabla_{W}\mathbb{E}_{Z}\left[\ell(Y,\hat{\mathbf{y}}_{\text{2-layer}}\left(X;W(t)\right))\right].
\]
Here $t$ denotes the time and $Z=(X,Y)$ a random variable that represents
the training data. Note the scaling factor $n$, which compensates
for the previously mentioned factor $1/n$ and therefore allows the
learning update to be on the ``correct'' order. To see this, we
rewrite the learning rule:
\begin{align*}
\frac{d}{dt}w_{1,i}(t) & =-\mathbb{E}_{Z}\left[\partial_{2}\ell(Y,\hat{\mathbf{y}}_{\text{2-layer}}\left(X;W(t)\right))\cdot w_{2,i}(t)\sigma'\left(\left\langle w_{1,i}(t),X\right\rangle \right)X\right],\\
\frac{d}{dt}w_{2,i}(t) & =-\mathbb{E}_{Z}\left[\partial_{2}\ell(Y,\hat{\mathbf{y}}_{\text{2-layer}}\left(X;W(t)\right))\cdot\sigma\left(\left\langle w_{1,i}(t),X\right\rangle \right)\right],
\end{align*}
where $\partial_{2}\ell$ is the derivative of $\ell$ w.r.t. the
second variable. In this form of the learning rule, one sees that
if $w_{1,i}(t)$ and $w_{2,i}(t)$ all have magnitudes on order $O(1)$
independent of $n$, then so are their updates $\frac{d}{dt}w_{1,i}(t)$
and $\frac{d}{dt}w_{2,i}(t)$. Hence if they are initialized to be
on this order, one can expect to see the same order of weights and
weight movements at any finite time $t$. This is a feature of the
MF scaling.

Suppose that the initialization is sampled $(w_{1,i}(0),w_{2,i}(0))\sim\mu_{0}$
independently for each $i\in\left[n\right]$, for a probability measure
$\mu_{0}$ on $\mathbb{R}^{d+1}$. We would like to study the empirical
measure over the weights $W(t)$:
\[
\mu_{t}^{n}=\frac{1}{n}\sum_{i=1}^{n}\delta_{w_{1,i}(t),w_{2,i}(t)}.
\]
At $t=0$, it is a standard result that $\mu_{0}^{n}\to\mu_{0}$ weakly
as $n\to\infty$, under suitable regularity conditions. We are interested
in a similar statement for any time $t$. To that end, we recall the
``infinite-width'' representation $\hat{y}_{\text{2-layer}}\left(x;\mu\right)$
and introduce the following distributional dynamics in the Wasserstein
space of probability measures on $\mathbb{R}^{d+1}$:
\[
\partial_{t}\mu_{t}(w_{1},w_{2})={\rm div}\Big(\mu_{t}(w_{1},w_{2})\nabla_{(w_{1},w_{2})}\Psi(w_{1},w_{2};\mu_{t})\Big),
\]
in which $\Psi(w_{1},w_{2};\mu)=\mathbb{E}_{Z}\left[\partial_{2}\ell(Y,\hat{y}_{\text{2-layer}}\left(X;\mu\right))\cdot w_{2}\sigma\left(\left\langle w_{1},X\right\rangle \right)\right]$
and the initialization is $\mu_{0}$. This dynamics is a Wasserstein
gradient flow. Prior works \cite{mei2018mean,chizat2018} prove the
following type of result:
\begin{thm}[Two-layer MF network with distributional representation, $n\to\infty$,
informal and simplified]
\label{thm:intro_MF}Under suitable regularity conditions, for any
finite constant $T$, as $n\to\infty$, $\mu_{t}^{n}\to\mu_{t}$ weakly
and uniformly over $t\in\left[0,T\right]$.
\end{thm}

The precise statement includes a quantitative convergence rate, more
realistic learning rules such as discrete-time stochastic gradient
descent and other variations. Theorem \ref{thm:intro_MF} formalizes
the notion of an infinite-width limit: we call $\mu_{t}$ the MF limit.

\paragraph*{An application of the MF limit: proving global convergence.}

Theorem \ref{thm:intro_MF} conveys an interesting message: one can
study the width-$n$ neural network by analyzing the MF limit $\mu_{t}$.
One success story is the study of optimization efficacy. In particular,
\cite{chizat2018} proves the following type of result:
\begin{thm}[Two-layer MF network with distributional representation, $t\to\infty$,
informal and simplified]
\label{thm:intro_global_opt}Suppose that the support of $\mu_{0}$
is $\mathbb{R}^{d+1}$ (i.e. it has full support at initialization)
and the loss $\ell$ is convex in the second variable. Under suitable
regularity and convergence conditions, as $t\to\infty$,
\[
\mu_{t}\to\inf_{\mu}\mathbb{E}_{Z}\left[\ell(Y,\hat{y}_{\text{2-layer}}\left(X;\mu\right))\right].
\]
\end{thm}

This global convergence result affirms positively the message that
taking $n\to\infty$ under the MF limit can lead to meaningful learning.
Similar global convergence results have been established for different
types of learning rules and, in special occasions, with quantitative
convergence rates. To understand the significance of this result,
we note a remarkable feature of the MF limit $\mu_{t}$: it represents
a genuinely nonlinear dynamics. To contrast the situation, another
line of works (e.g. \cite{jacot2018neural,chizat2018note}) show that
under a different scaling, in the infinite-width limit, the neural
network is equivalent to a parameterized model which is linear in
its parameter. In that scaling regime, the learning dynamics hence
simplifies into a linear dynamics and consequently it is relatively
clear how to attain global convergence using the usual convex optimization
wisdom. The MF limit is distinct in this sense, but it also comes
with a nontrivial problem: insights from convex optimization may no
longer apply. This is indeed the case in the proof of the global convergence
results of \cite{chizat2018,mei2018mean}.

We refer to \cite{nguyen2020phdthesis,bach2021gradient} for further
overview discussions on two-layer MF neural networks. See also Section
\ref{sec:Discussion} for a partial list of works.

\subsection{Multilayer MF network: the challenge and our contributions}

Recall, an important milestone is to find a representation that allows
to interpolate to an infinite-width limit. For two-layer networks,
by exploiting exchangeability among neurons, one can achieve this
goal and use the representation to successfully analyze properties
of the neural networks in the infinite-width limit. In multilayer
networks, exchangeability is, however, not a priori obvious and hence
poses a highly non-trivial challenge. In particular, the presence
of intermediate layers exhibits multiple symmetry groups with intertwined
actions on the model. To illustrate the point, let us consider a simple
three-layer fully-connected neural network which assumes the following
form (modulo scaling factors):
\[
\hat{\mathbf{y}}_{\text{3-layer}}\left(x\right)=\sum_{i=1}^{n_{2}}w_{3,i}\sigma\bigg(\sum_{j=1}^{n_{1}}w_{2,ij}\sigma\left(\left\langle w_{1,j},x\right\rangle \right)\bigg),
\]
for a set of parameters $\left\{ w_{3,i},w_{2,ij},w_{1,j}\right\} _{i\in\left[n_{2}\right],\;j\in\left[n_{1}\right]}$.
In the matrix notation:
\[
\hat{\mathbf{y}}_{\text{3-layer}}\left(x\right)=w_{3}^{\top}\sigma\left(W_{2}\sigma\left(W_{1}x\right)\right).
\]
Under any two permutations $\Pi_{1}:\;\left[n_{1}\right]\to\left[n_{1}\right]$
and $\Pi_{2}:\;\left[n_{2}\right]\to\left[n_{2}\right]$, we recognize:
\[
\hat{\mathbf{y}}_{\text{3-layer}}\left(x\right)=w_{3}^{\top}\Pi_{2}^{\top}\sigma\left(\Pi_{2}W_{2}\Pi_{1}^{\top}\sigma\left(\Pi_{1}W_{1}x\right)\right).
\]
The fact that the weight matrix $W_{2}$ in the middle layer is under
the simultaneous influence of both actions $\Pi_{1}$ and $\Pi_{2}$,
is what makes the three-layer case specifically and the multilayer
case in general different from the two-layer case, more challenging
and at the same time also a highly interesting problem. With this
blocker on the strategy to extend the two-layer case, even the goal
of obtaining a representation that captures networks with arbitrary
widths becomes less approachable. Indeed prior attempts in \cite{nguyen2019mean,araujo2019mean,sirignano2019mean}
arrive at quite complex solutions or require a certain strong assumption
that leads to undesirable properties (see Section \ref{sec:Discussion}),
and yet these attempts already have to do away with the Wasserstein
gradient flow formulation.

In short, finding a suitable formulation that is amenable to the infinite-width
limit-taking procedure, simultaneous at all layers, requires innovation
beyond the Wasserstein gradient flow idea of the two-layer case. The
formulation should faithfully describe settings where nonlinear and
meaningful learning trajectories take place. To compound the difficulty,
a useful formulation should lend a way to analyze properties of multilayer
neural networks in the infinite-width limit, for instance, how well
these networks could be optimized despite the strong presence of nonlinearity
and the lack of convexity. These are the considerations one ought
to keep in mind when tackling the challenge.

This work responses to this challenge with the proposal of a mathematically
rigorous framework for the MF limit of multilayer neural networks.
The framework is built on an innovative idea of a \textsl{neuronal
embedding}. More importantly, using this framework, we prove several
properties of multilayer networks, which incorporate new insights
and ideas. Specifically, our key contributions can be summarized as
follows:
\begin{itemize}
\item \textbf{(Sections \ref{sec:Framework}, \ref{sec:Existence-MF} and
\ref{sec:Main-result}) }We develop a framework for the MF limit of
multilayer neural networks under stochastic gradient descent (SGD)
training and suitable scalings. We introduce the concept of a neuronal
embedding, which comprises of a non-evolving probability space that
can embed neural networks of arbitrary widths. In this framework,
the MF limit is described by a system of ordinary differential equations
(ODEs), which govern the evolutions of different functions that represent
the weights at different layers and are adapted to the given neuronal
embedding. The complete framework is described in Section \ref{sec:Framework}
and the well-posedness of the MF limit is proven in Section \ref{sec:Existence-MF}.
Our main result in this thread is stated in Section \ref{sec:Main-result},
where the MF limit is proven to track closely characteristics of a
wide multilayer network under SGD training, with quantitative bounds
on the required widths.

In fact, our framework is quite general, admits a broad variety of
initialization schemes (including, but not limited to, independent
and identically distributed (i.i.d.) initializations) and operates
in Hilbert spaces. This allows for firstly describing the MF behavior
for generic multilayer setups (including fully-connected and convolutional
networks in Euclidean spaces that are common in practice), and secondly
obtaining \textit{dimension-free} quantitative bounds.
\item \textbf{(Section \ref{sec:iid_init})} Using the neuronal embedding
framework, we uncover strong degeneracy properties caused by i.i.d.
initializations. Specifically we prove that with at least four layers,
the MF limits, and hence the neural networks, are substantially simplified
under i.i.d. initializations: at an intermediate layer, each weight
evolves as a function of only time, its own initialization and the
initial biases associated with its connected neurons. An implication
is that when the initial biases are constant, different intermediate
layers evolve independently of each other. Remarkably, for common
neural network architectures, all weights (or biases) at each intermediate
layer then evolve by translation: they differ from their respective
initializations by the same deterministic amount, and the effective
number of parameters at each intermediate layer thus collapses to
only one.
\item Our framework allows to study the optimization efficacy of multilayer
neural networks trained under SGD in the infinite-width limit. In
particular:
\begin{itemize}
\item \textbf{(Section \ref{sec:global_convergence_iid})} We prove convergence
to the global optimum for two-layer and three-layer networks under
i.i.d. initializations, with suitable regularity conditions and convergence
assumptions. Some of these assumptions are mild and natural in neural
network learning. The key convergence assumption in this section turns
out to be necessary for global convergence to hold, i.e. it is impossible
to attain global convergence if this convergence assumption fails.
\item \textbf{(Section \ref{sec:global_convergence_general})} Avoiding
the degeneracy effect of i.i.d. initializations, we prove global convergence
for multilayer networks of arbitrary depths under a special type of
correlated initializations and a similar set of assumptions. Here
we introduce the new concept of \textsl{bidirectional diversity}.
\item \textbf{(Section \ref{sec:Global-convergence-ms})} We also establish
global convergence in the above settings under Morse-Sard conditions
that are usually assumed in the literature for MF two-layer networks.
This demonstrates flexibility of our framework: it can handle situations
where the two-layer Wasserstein gradient flow formulation works, as
well as situations where such formulation finds difficulty.
\end{itemize}
Two novel features that our global convergence results have in common
are firstly the role of a certain universal approximation property
which is natural of nonlinear neural networks, and secondly the admission
of non-convex losses. Importantly the universal approximation property
is shown to hold at any finite training time (but not necessarily
at infinite time) via topological invariance arguments. These new
insights signal the departure from conventional wisdoms of convex
optimization.

The idea of bidirectional diversity that we introduce in Section \ref{sec:global_convergence_general}
strikes directly to the universal approximation insight. Roughly speaking,
it helps ``propagating'' the universal approximation property from
the first layer to the second last layer. This is to be contrasted
with i.i.d.-initialized networks: universal approximation at the first
layer suffices when there are few layers, but as the number of layers
increases, due to degeneracy by i.i.d. initializations, the middle
layers become a bottleneck that generally prohibits universal approximation
to be propagated. Bidirectional diversity aims to break this bottleneck.
\end{itemize}
We defer a more technical discussion on the related literature to
Section \ref{sec:Discussion}. Proofs of several intermediate results
are deferred to the appendices. Readers who are interested in global
convergence of networks with more than three layers may skip directly
to Sections \ref{sec:global_convergence_general} and \ref{sec:Global-convergence-ms},
which we have made relatively self-contained with minimal references
to the previous sections.

\subsection{Notations}

For an integer $n$, we use $\left[n\right]$ to denote the set $\left\{ 1,...,n\right\} $.
We shall use $\left\langle \cdot,\cdot\right\rangle $ and $\left|\cdot\right|$
to indicate respectively the inner product and its induced norm for
a Hilbert space, and $\left|\cdot\right|$ to indicate the absolute
value for $\mathbb{R}$. We use $\sigma_{{\rm alg}}\left(U\right)$
to denote the sigma-algebra generated by a random variable $U$. We
write ${\rm cl}\left(S\right)$ to denote the closure of a set $S$
in a topological space. We use $K$ to denote a generic absolute constant
that may change from line to line. For a probability space $\left(\Omega,{\cal F},P\right)$,
we will suppress the presence of the sigma-algebra ${\cal F}$ wherever
unimportant. Given two events $\mathcal{E}$ and $\mathcal{E}'$,
we say that ${\cal E}'$ occurs with probability at least $1-\delta$
on the event ${\cal E}$ and write $\mathbb{P}({\cal E}';{\cal E})\ge1-\delta$
if $\mathbb{P}((\lnot{\cal E}')\cap{\cal E})\le\delta$. 

\section{A General Framework\label{sec:Framework}}

In this section, we describe our setup of a general multilayer neural
network with a generalized (stochastic) learning dynamics. In particular,
it covers several common neural network architectures as well as the
SGD training dynamics. We then describe the corresponding MF limit.

\subsection{Multilayer neural network and generalized learning dynamics\label{subsec:NN}}

We consider the following generalized neural network with $L$ layers:
\begin{align}
\hat{\mathbf{y}}\left(k,x\right)\equiv\hat{\mathbf{y}}\left(x;\mathbf{W}\left(k\right)\right) & =\phi_{L+1}\left({\bf H}_{L}\left(k,x,1\right)\right),\label{eq:NN}
\end{align}
in which we define ${\bf H}_{L}\left(k,x,1\right)$ recursively: 
\begin{align*}
{\bf H}_{1}\left(k,x,j_{1}\right)\equiv{\bf H}_{1}\left(x,j_{1};\mathbf{W}\left(k\right)\right) & =\phi_{1}\left({\bf w}_{1}(k,j_{1}),x\right),\qquad j_{1}\in\left[n_{1}\right],\\
{\bf H}_{i}\left(k,x,j_{i}\right)\equiv{\bf H}_{i}\left(x,j_{i};\mathbf{W}\left(k\right)\right) & =\frac{1}{n_{i-1}}\sum_{j_{i-1}=1}^{n_{i-1}}\phi_{i}\left({\bf w}_{i}\left(k,j_{i-1},j_{i}\right),{\bf b}_{i}\left(k,j_{i}\right),{\bf H}_{i-1}\left(k,x,j_{i-1}\right)\right),\\
 & \qquad\qquad j_{i}\in\left[n_{i}\right],\;i=2,...,L.
\end{align*}
The above equations describe the forward pass in the neural network.
We explain the quantities in the following:
\begin{itemize}
\item $x\in\mathbb{X}$ is the input, and $\mathbb{X}$ is the input space.
\item $k\in\mathbb{N}_{\geq0}$ is the (discrete) time.
\item $\mathbf{W}\left(k\right)=\left\{ \mathbf{w}_{1}\left(k,\cdot\right),\mathbf{w}_{i}\left(k,\cdot,\cdot\right),\mathbf{b}_{i}\left(k,\cdot\right),\;\;i=2,...,L\right\} $
is the collection of neural network parameters (weights and biases)
at time $k$.
\item $\mathbf{w}_{1}:\;\mathbb{N}_{\geq0}\times\left[n_{1}\right]\to\mathbb{W}_{1}$
is the weight of the first layer (which also includes the bias). Similarly
for $i=2,...,L$, $\mathbf{w}_{i}:\;\mathbb{N}_{\geq0}\times\left[n_{i-1}\right]\times\left[n_{i}\right]\to\mathbb{W}_{i}$
and $\mathbf{b}_{i}:\;\mathbb{N}_{\geq0}\times\left[n_{i}\right]\to\mathbb{B}_{i}$
are the weight and bias of the $i$-th layer. Here $n_{i}$ is the
number of neurons at the $i$-th layer, $\mathbb{W}_{i}$ and $\mathbb{B}_{i}$
are separable Hilbert spaces, and we take $n_{L}=1$.
\item $\phi_{1}:\;\mathbb{W}_{1}\times\mathbb{X}\to\mathbb{H}_{1}$, $\phi_{i}:\;\mathbb{W}_{i}\times\mathbb{B}_{i}\times\mathbb{H}_{i-1}\to\mathbb{H}_{i}$
for $i=2,...,L$, and $\phi_{L+1}:\;\mathbb{H}_{L}\to\hat{\mathbb{Y}}$,
where again $\mathbb{H}_{i}$ and $\hat{\mathbb{Y}}$ are separable
Hilbert spaces. 
\end{itemize}
In other words, the network $\hat{\mathbf{y}}\left(k,x\right)$ is
a state-dependent mapping that takes $x$ as input and is dependent
on the state $\mathbf{W}\left(k\right)$, which is allowed to vary
with time $k$.

The network is trained by the following (discrete-time) stochastic
learning dynamics. At each time $k$, we draw independently a data
sample $z\left(k\right)=\left(x\left(k\right),y\left(k\right)\right)\sim{\cal P}$,
where ${\cal P}$ is the data distribution on $\mathbb{X}\times\mathbb{Y}$
and $y\left(k\right)\in\mathbb{Y}$ a separable Hilbert space . Given
an initialization $\mathbf{W}\left(0\right)$, we update $\mathbf{W}\left(k\right)$
into $\mathbf{W}\left(k+1\right)$ as follows: 
\begin{align*}
{\bf w}_{1}\left(k+1,j_{1}\right) & ={\bf w}_{1}\left(k,j_{1}\right)-\epsilon\xi_{1}^{\mathbf{w}}\left(k\epsilon\right)\Delta_{1}^{{\bf w}}\left(k,z\left(k\right),j_{1}\right),\qquad\forall j_{1}\in\left[n_{1}\right],\\
{\bf w}_{i}\left(k+1,j_{i-1},j_{i}\right) & ={\bf w}_{i}\left(k,j_{i-1},j_{i}\right)-\epsilon\xi_{i}^{\mathbf{w}}\left(k\epsilon\right)\Delta_{i}^{{\bf w}}\left(k,z\left(k\right),j_{i-1},j_{i}\right),\\
{\bf b}_{i}\left(k+1,j_{i}\right) & ={\bf b}_{i}\left(k,j_{i}\right)-\epsilon\xi_{i}^{\mathbf{b}}\left(k\epsilon\right)\Delta_{i}^{{\bf b}}\left(k,z\left(k\right),j_{i}\right),\qquad\forall j_{i-1}\in\left[n_{i-1}\right],\;j_{i}\in\left[n_{i}\right],\;i=2,...,L.
\end{align*}
We explain the quantities in the following: 
\begin{itemize}
\item $\epsilon\in\mathbb{R}_{>0}$ is the learning rate, and $\xi_{i}^{\mathbf{w}}$
and $\xi_{i}^{\mathbf{b}}$ are mappings from $\mathbb{R}$ to $\mathbb{R}$,
representing the different learning rate schedules for each of the
weights and biases. Note that we allow the learning rate schedules
to take non-positive values.
\item To define the updates $\Delta_{i}^{\mathbf{w}}$ and $\Delta_{i}^{\mathbf{b}}$
requires additional definitions. Firstly, for $z=\left(x,y\right)$,
we define: 
\[
\Delta_{L}^{{\bf H}}\left(k,z,1\right)\equiv\Delta_{L}^{{\bf H}}\left(z,1;\mathbf{W}\left(k\right)\right)=\sigma_{L}^{\mathbf{H}}\left(y,\hat{\mathbf{y}}\left(k,x\right),{\bf H}_{L}\left(k,x,1\right)\right).
\]
Then we define recursively: 
\begin{align*}
\Delta_{i}^{{\bf w}}\left(k,z,j_{i-1},j_{i}\right) & \equiv\Delta_{i}^{{\bf w}}\left(z,j_{i-1},j_{i};\mathbf{W}\left(k\right)\right)\\
 & =\sigma_{i}^{\mathbf{w}}\left(\Delta_{i}^{{\bf H}}\left(k,z,j_{i}\right),{\bf w}_{i}\left(k,j_{i-1},j_{i}\right),{\bf b}_{i}\left(k,j_{i}\right),{\bf H}_{i}\left(k,x,j_{i}\right),{\bf H}_{i-1}\left(k,x,j_{i-1}\right)\right),\\
\Delta_{i}^{{\bf b}}\left(k,z,j_{i}\right) & \equiv\Delta_{i}^{{\bf b}}\left(z,j_{i};\mathbf{W}\left(k\right)\right)\\
 & =\frac{1}{n_{i-1}}\sum_{j_{i-1}=1}^{n_{i-1}}\sigma_{i}^{\mathbf{b}}\left(\Delta_{i}^{{\bf H}}\left(k,z,j_{i}\right),{\bf w}_{i}\left(k,j_{i-1},j_{i}\right),{\bf b}_{i}\left(k,j_{i}\right),{\bf H}_{i}\left(k,x,j_{i}\right),{\bf H}_{i-1}\left(k,x,j_{i-1}\right)\right),\\
\Delta_{i-1}^{{\bf H}}\left(k,z,j_{i-1}\right) & \equiv\Delta_{i-1}^{{\bf H}}\left(z,j_{i-1};\mathbf{W}\left(k\right)\right)\\
 & =\frac{1}{n_{i}}\sum_{j_{i}=1}^{n_{i}}\sigma_{i-1}^{\mathbf{H}}\left(\Delta_{i}^{{\bf H}}\left(k,z,j_{i}\right),{\bf w}_{i}\left(k,j_{i-1},j_{i}\right),{\bf b}_{i}\left(k,j_{i}\right),{\bf H}_{i}\left(k,x,j_{i}\right),{\bf H}_{i-1}\left(k,x,j_{i-1}\right)\right),\\
 & \qquad i=L,...,2,\\
\Delta_{1}^{{\bf w}}\left(k,z,j_{1}\right) & \equiv\Delta_{1}^{{\bf w}}\left(z,j_{1};\mathbf{W}\left(k\right)\right)=\sigma_{1}^{\mathbf{w}}\left(\Delta_{1}^{{\bf H}}\left(k,z,j_{i}\right),{\bf w}_{1}\left(k,j_{1}\right),x\right),
\end{align*}
in which the functions are: 
\begin{align*}
\sigma_{L}^{\mathbf{H}} & :\;\mathbb{Y}\times\hat{\mathbb{Y}}\times\mathbb{H}_{L}\to\hat{\mathbb{H}}_{L},\\
\sigma_{i}^{\mathbf{w}} & :\;\hat{\mathbb{H}}_{i}\times\mathbb{W}_{i}\times\mathbb{B}_{i}\times\mathbb{H}_{i}\times\mathbb{H}_{i-1}\to\mathbb{W}_{i},\\
\sigma_{i}^{\mathbf{b}} & :\;\hat{\mathbb{H}}_{i}\times\mathbb{W}_{i}\times\mathbb{B}_{i}\times\mathbb{H}_{i}\times\mathbb{H}_{i-1}\to\mathbb{B}_{i},\\
\sigma_{i-1}^{\mathbf{H}} & :\;\hat{\mathbb{H}}_{i}\times\mathbb{W}_{i}\times\mathbb{B}_{i}\times\mathbb{H}_{i}\times\mathbb{H}_{i-1}\to\hat{\mathbb{H}}_{i-1},\qquad i=L,...,2,\\
\sigma_{1}^{\mathbf{w}} & :\;\hat{\mathbb{H}}_{1}\times\mathbb{W}_{1}\times\mathbb{X}\to\mathbb{W}_{1},
\end{align*}
for separable Hilbert spaces $\hat{\mathbb{H}}_{i}$. Note that the
above equations describe the backward pass in the neural network. 
\end{itemize}
The introduced framework is quite general, while certain assumptions
can be further relaxed. We observe that several common network architectures
and training processes can be cast as special cases.
\begin{example}[Fully-connected networks]
\label{exa:fully-connected}We describe the simple setting of a fully-connected
network with 1-dimensional output and an activation function $\varphi_{i}:\;\mathbb{R}\to\mathbb{R}$
at the $i$-th layer. Specifically the network output assumes the
form:
\[
\hat{\mathbf{y}}\left(x;\mathbf{W}\right)=\frac{1}{n_{L-1}}\left\langle \mathbf{w}_{L},\varphi_{L-1}\left(\mathbf{b}_{L-1}+\frac{1}{n_{L-2}}\mathbf{W}_{L-1}\varphi_{L-2}\left(...\varphi_{1}\left(\mathbf{W}_{1}\left[\begin{array}{c}
x\\
1
\end{array}\right]\right)\right)\right)\right\rangle +\mathbf{b}_{L},
\]
in which $x\in\mathbb{R}^{d}$, $\mathbf{W}=\left\{ \mathbf{w}_{L},\mathbf{W}_{L-1},...,\mathbf{W}_{1},\mathbf{b}_{L},...,\mathbf{b}_{2}\right\} $,
$\mathbf{w}_{L}\in\mathbb{R}^{n_{L-1}}$, $\mathbf{W}_{i}\in\mathbb{R}^{n_{i}\times n_{i-1}}$
with $n_{0}=d+1$, $n_{L}=1$ and $\mathbf{b}_{i}\in\mathbb{R}^{n_{i}}$.
This case fits into our framework with $\mathbb{X}=\mathbb{R}^{d}$,
$\mathbb{W}_{1}=\mathbb{R}^{d+1}$, $\mathbb{H}_{1}=\mathbb{R}$ and
$\mathbb{W}_{i}=\mathbb{B}_{i}=\mathbb{H}_{i}=\mathbb{Y}=\hat{\mathbb{Y}}=\mathbb{R}$
for $2\leq i\leq L$. We also have: 
\begin{align*}
\phi_{1}\left(w,x\right) & =\left\langle w_{1:d},x\right\rangle +w_{d+1},\\
\phi_{i}\left(w,b,h\right) & =w\varphi_{i-1}\left(h\right)+b,\quad2\leq i\leq L,\\
\phi_{L+1}\left(h\right) & =h.
\end{align*}
Consider the regularized loss function:
\begin{align*}
{\rm Loss}\left(\mathbf{W};z\right) & ={\cal L}\left(y,\hat{y}\left(x;\mathbf{W}\right)\right)+\frac{1}{n_{1}}\sum_{j_{1}=1}^{n_{1}}\Phi_{1}\left(w_{1,j_{1}}\right)+\frac{1}{n_{L-1}}\sum_{j_{L-1}=1}^{n_{L-1}}\Phi_{L}\left(w_{L,j_{L-1}}\right)\\
 & \qquad+\sum_{i=2}^{L-1}\left(\frac{1}{n_{i-1}n_{i}}\sum_{j_{i-1}=1}^{n_{i-1}}\sum_{j_{i}=1}^{n_{i}}\Phi_{i}\left(w_{i,j_{i-1}j_{i}}\right)\right)+\sum_{i=2}^{L}\left(\frac{1}{n_{i}}\sum_{j_{i}=1}^{n_{i}}\Psi_{i}\left(b_{i,j_{i}}\right)\right),
\end{align*}
where ${\cal L}:\;\mathbb{R}\times\mathbb{R}\to\mathbb{R}_{\geq0}$,
$\Phi_{i}:\;\mathbb{R}\to\mathbb{R}_{\geq0}$ for $i\geq1$, $\Phi_{1}:\;\mathbb{R}^{d+1}\to\mathbb{R}_{\geq0}$,
$\Psi_{i}:\;\mathbb{R}\to\mathbb{R}_{\geq0}$, $w_{1,j_{1}}$ is the
$j_{1}$-th row of $\mathbf{W}_{1}$, $w_{i,j_{i-1}j_{i}}$ is the
$\left(j_{i-1},j_{i}\right)$-th entry of $\mathbf{W}_{i}$ for $2\leq i\leq L-1$,
$w_{L,j_{L-1}}$ is the $j_{L-1}$-th entry of $\mathbf{w}_{L}$,
and $b_{i,j_{i}}$ is the $j_{i}$-th entry of $\mathbf{b}_{i}$.
If we train the network by SGD w.r.t. this loss, then $\hat{\mathbb{H}}_{i}=\mathbb{R}$
and 
\begin{align*}
\sigma_{L}^{\boldsymbol{H}}\left(y,\hat{y},h\right) & =\partial_{2}{\cal L}\left(y,\hat{y}\right),\\
\sigma_{i}^{\mathbf{w}}\left(\Delta,w,b,g,h\right) & =\Delta\varphi_{i-1}\left(h\right)+\Phi_{i}'\left(w\right),\\
\sigma_{i}^{\mathbf{b}}\left(\Delta,w,b,g,h\right) & =\Delta+\Psi'_{i}\left(b\right),\\
\sigma_{i-1}^{\mathbf{H}}\left(\Delta,w,b,g,h\right) & =\Delta w\varphi'_{i-1}\left(h\right),\qquad2\leq i\leq L,\\
\sigma_{1}^{\mathbf{w}}\left(\Delta,w,x\right) & =\Delta\left[\begin{array}{c}
x\\
1
\end{array}\right]+\nabla\Phi_{1}\left(w\right).
\end{align*}
Observe that when there is no regularization (i.e. no $\Phi_{i}$
and $\Psi_{i}$), $\sigma_{i}^{\mathbf{w}}\left(\Delta,w,b,g,h\right)$
is independent of $w$ and $b$, and the same holds for $\sigma_{i}^{\mathbf{b}}$
and $\sigma_{1}^{\mathbf{w}}$.
\end{example}

\begin{example}[Convolutional networks]
\label{exa:conv}

Our framework can also describe networks that are not of the fully-connected
type. For illustration, we consider the first two layers of a convolutional
network with an activation $\varphi:\;\mathbb{R}\to\mathbb{R}$ and
pooling operation ${\rm pool}\left(\cdot\right)$; a description of
the complete network (which may contain fully-connected layers) can
be done in a similar fashion to Example \ref{exa:fully-connected}.
Here $\mathbb{X}=\left(\mathbb{R}^{p\times p}\right)^{n_{c}}$, where
in the context of a square image input, $p$ is the number of pixels
per row and $n_{c}$ is the number of channels (which is $3$ for
RGB images and $1$ for gray-scale images). We take $\mathbb{W}_{1}=\mathbb{R}^{f_{1}\times f_{1}\times n_{c}}\times\mathbb{R}$
and $\mathbb{W}_{2}=\mathbb{R}^{f_{2}\times f_{2}}$, where $f_{1}$
and $f_{2}$ are the filter sizes, $\mathbb{B}_{2}=\mathbb{R}$, $\mathbb{H}_{1}=\mathbb{R}^{p_{1}\times p_{1}}$
and $\mathbb{H}_{2}=\mathbb{R}^{p_{2}\times p_{2}}$. Then:
\begin{align*}
\phi_{1}\left(\left(w,b\right),x\right) & =w*x+b\boldsymbol{1}_{p_{1}},\\
\phi_{2}\left(w,b,h\right) & =w*{\rm pool}\left(\varphi\left(h\right)\right)+b\mathbf{1}_{p_{2}},
\end{align*}
where $*$ denotes (strided) convolution and $\boldsymbol{1}_{p_{i}}$
is an all-one matrix in $\mathbb{R}^{p_{i}\times p_{i}}$. The dimensions
$p_{1}$ and $p_{2}$ are determined by the actual convolution operation,
its stride size, its padding type and the input size. In this context,
$n_{1}$ and $n_{2}$ are the numbers of filters at the first and
second layer respectively. One can also specify the forms of $\sigma_{i}^{\mathbf{w}}$,
$\sigma_{i}^{\mathbf{b}}$ and $\sigma_{i}^{\mathbf{H}}$ upon the
choice of a loss function, with SGD training.
\end{example}

The examples of fully-connected and convolutional neural networks
serve as the main motivation to study the generalized neural network
model as described. In both of these examples, the spaces are finite-dimensional
Euclidean spaces, while in the generalized model, the spaces are allowed
to be infinite-dimensional. Similarly while SGD w.r.t. a loss function
is the typical choice of learning dynamics for these examples, in
our framework, the learning dynamics is more general. We shall see
that the key ideas hold regardless of the specific details. In particular,
the ultimate goal is to understand properties of $\mathbf{W}\left(k\right)$
in the limit of large $n_{i}$ and small $\epsilon$, via a limiting
object that is well-defined and has an explicit form. To this end,
we introduce the mean field limit in the next section.

\subsection{Mean field limit\label{subsec:MF}}

We now describe the mean field (MF) limit. Given a probability space
$\left(\Omega,{\cal F},P\right)=\prod_{i=1}^{L}\left(\Omega_{i},{\cal F}_{i},P_{i}\right)$
with $\Omega_{L}=\left\{ 1\right\} $, we independently sample $C_{i}\sim P_{i}$,
$1\leq i\leq L$. From here onwards, we hide the sigma-algebras ${\cal F}$,
${\cal F}_{i}$ wherever unimportant. In the following, we use $\mathbb{E}_{C_{i}}$
to denote the expectation w.r.t. the random variable $C_{i}\sim P_{i}$
and $c_{i}$ to denote a dummy variable $c_{i}\in\Omega_{i}$. The
space $\left(\Omega,P\right)$ is key to our MF formulation and is
referred to as the \textit{neuronal ensemble}. The choice of the neuronal
ensemble bridges the connection between the earlier described neural
network and the MF limit; this connection shall be established later
in Section \ref{sec:Main-result}. For the moment we treat the MF
limit as an independent object from the neural network.

Given the neuronal ensemble, we obtain the MF limit as follows. It
entails the following quantity: 
\begin{align*}
\hat{y}\left(t,x\right)\equiv\hat{y}\left(x;W\left(t\right)\right) & =\phi_{L+1}\left(H_{L}\left(t,x,1\right)\right),
\end{align*}
in which $H_{L}\left(t,x,1\right)$ is computed recursively: 
\begin{align*}
H_{1}\left(t,x,c_{1}\right)\equiv H_{1}\left(x,c_{1};W\left(t\right)\right) & =\phi_{1}\left(w_{1}\left(t,c_{1}\right),x\right),\qquad\forall c_{1}\in\Omega_{1},\\
H_{i}\left(t,x,c_{i}\right)\equiv H_{i}\left(x,c_{i};W\left(t\right)\right) & =\mathbb{E}_{C_{i-1}}\left[\phi_{i}\left(w_{i}\left(t,C_{i-1},c_{i}\right),b_{i}\left(t,c_{i}\right),H_{i-1}\left(t,x,C_{i-1}\right)\right)\right],\\
 & \qquad\forall c_{i}\in\Omega_{i},\;i=2,...,L.
\end{align*}
This corresponds to the forward pass of the neural network. We note
the similarity with the corresponding quantities of the neural network:
\begin{itemize}
\item $x\in\mathbb{X}$ is the input and $t\in\mathbb{R}_{\geq0}$ is the
(continuous) time.
\item $W\left(t\right)=\left\{ w_{1}\left(t,\cdot\right),w_{i}\left(t,\cdot,\cdot\right),b_{i}\left(t,\cdot\right),\;\;i=2,...,L\right\} $
is the collection of MF parameters at time $t$. 
\item $w_{1}:\;\mathbb{R}_{\geq0}\times\Omega_{1}\to\mathbb{W}_{1}$, and
for $i=2,...,L$, $w_{i}:\;\mathbb{R}_{\geq0}\times\Omega_{i-1}\times\Omega_{i}\to\mathbb{W}_{i}$
and $b_{i}:\;\mathbb{R}_{\geq0}\times\Omega_{i}\to\mathbb{B}_{i}$. 
\end{itemize}
In correspondence with the neural network's learning dynamics for
$\mathbf{W}\left(k\right)$, the MF limit also entails a continuous-time
evolution dynamics for $W\left(t\right)$. This dynamics takes the
form of a system of ODEs, which we refer to as the \textit{MF ODEs},
given an initialization $W\left(0\right)$:
\begin{align*}
\frac{\partial}{\partial t}w_{1}\left(t,c_{1}\right) & =-\xi_{1}^{\mathbf{w}}\left(t\right)\mathbb{E}_{Z}\left[\Delta_{1}^{w}\left(t,Z,c_{1}\right)\right],\qquad\forall c_{1}\in\Omega_{1},\\
\frac{\partial}{\partial t}w_{i}\left(t,c_{i-1},c_{i}\right) & =-\xi_{i}^{\mathbf{w}}\left(t\right)\mathbb{E}_{Z}\left[\Delta_{i}^{w}\left(t,Z,c_{i-1},c_{i}\right)\right],\\
\frac{\partial}{\partial t}b_{i}\left(t,c_{i}\right) & =-\xi_{i}^{\mathbf{b}}\left(t\right)\mathbb{E}_{Z}\left[\Delta_{i}^{b}\left(t,Z,c_{i}\right)\right],\qquad\forall c_{i-1}\in\Omega_{i-1},\;c_{i}\in\Omega_{i},\;i=2,...,L,
\end{align*}
where $\mathbb{E}_{Z}$ denotes the expectation w.r.t. the data $Z=\left(X,Y\right)\sim{\cal P}$,
and the update quantities are defined by the following recursion:
\begin{align*}
\Delta_{L}^{H}\left(t,z,1\right) & \equiv\Delta_{L}^{H}\left(z,1;W\left(t\right)\right)=\sigma_{L}^{\mathbf{H}}\left(y,\hat{y}\left(t,x\right),H_{L}\left(t,x,1\right)\right),\\
\Delta_{i}^{w}\left(t,z,c_{i-1},c_{i}\right) & \equiv\Delta_{i}^{w}\left(z,c_{i-1},c_{i};W\left(t\right)\right)\\
 & =\sigma_{i}^{\mathbf{w}}\left(\Delta_{i}^{H}\left(t,z,c_{i}\right),w_{i}\left(t,c_{i-1},c_{i}\right),b_{i}\left(t,c_{i}\right),H_{i}\left(t,x,c_{i}\right),H_{i-1}\left(t,x,c_{i-1}\right)\right),\\
\Delta_{i}^{b}\left(t,z,c_{i}\right) & \equiv\Delta_{i}^{b}\left(z,c_{i};W\left(t\right)\right)\\
 & =\mathbb{E}_{C_{i-1}}\left[\sigma_{i}^{\mathbf{b}}\left(\Delta_{i}^{H}\left(t,z,c_{i}\right),w_{i}\left(t,C_{i-1},c_{i}\right),b_{i}\left(t,c_{i}\right),H_{i}\left(t,x,c_{i}\right),H_{i-1}\left(t,x,C_{i-1}\right)\right)\right],\\
\Delta_{i-1}^{H}\left(t,z,c_{i-1}\right) & \equiv\Delta_{i-1}^{H}\left(z,c_{i-1};W\left(t\right)\right)\\
 & =\mathbb{E}_{C_{i}}\left[\sigma_{i-1}^{\mathbf{H}}\left(\Delta_{i}^{H}\left(t,z,C_{i}\right),w_{i}\left(t,c_{i-1},C_{i}\right),b_{i}\left(t,C_{i}\right),H_{i}\left(t,x,C_{i}\right),H_{i-1}\left(t,x,c_{i-1}\right)\right)\right],\\
 & \qquad i=L,...,2,\\
\Delta_{1}^{w}\left(t,z,c_{1}\right) & \equiv\Delta_{1}^{w}\left(z,c_{1};W\left(t\right)\right)=\sigma_{1}^{\mathbf{w}}\left(\Delta_{1}^{H}\left(t,z,c_{1}\right),w_{1}\left(t,c_{1}\right),x\right).
\end{align*}
This recursion corresponds to the backward pass of the neural network.
\begin{rem}
The definition of a MF limit model $\hat{y}\left(t,x\right)$ based
on the neuronal ensemble $\left(\Omega,P\right)$ gives a way to define
a large class of neural networks that encapsulates networks of arbitrary
sizes. More specifically, let us write $W=\left\{ w_{1},\,w_{i},\,b_{i}:\;i=2,...,L\right\} $
in place $W\left(t\right)$ and $\hat{y}\left(x;W,\Omega,P\right)$
in place of $\hat{y}\left(t,x\right)$ to ignore the time $t$ and
make explicit the dependency on the neuronal ensemble. Similarly here
let us also write $\mathbf{W}=\left\{ \mathbf{w}_{1},\,\mathbf{w}_{i},\,\mathbf{b}_{i}:\;i=2,...,L\right\} $
in place of $\mathbf{W}\left(k\right)$ and $\hat{\mathbf{y}}\left(x;\mathbf{W},n_{1},...,n_{L}\right)$
in place of $\hat{\mathbf{y}}\left(k,x\right)$. Then by defining
the class $\mathsf{NN}_{\infty}=\left\{ \hat{y}\left(\cdot;W,\Omega,P\right)\right\} _{W,\Omega,P}$
that is indexed by $\left(W,\Omega,P\right)$ while fixing other parameters
(such as the number of layers $L$), one sees that any finite-sized
neural network $\hat{\mathbf{y}}\left(\cdot;\mathbf{W},n_{1},...,n_{L}\right)$
belongs to $\mathsf{NN}_{\infty}$. This correspondence can be seen
by the following identification: $\Omega=\prod_{i=1}^{L}\Omega_{i}$
with $\Omega_{i}=\left[n_{i}\right]$, $P=\prod_{i=1}^{L}P_{i}$ with
$P_{i}$ a uniform measure on $\left[n_{i}\right]$, and
\begin{align*}
w_{i}\left(j_{i-1},j_{i}\right) & =\mathbf{w}_{i}\left(j_{i-1},j_{i}\right),\qquad\forall j_{i-1}\in\Omega_{i-1}=\left[n_{i-1}\right],\;j_{i}\in\Omega_{i}=\left[n_{i}\right],\\
b_{i}\left(j_{i}\right) & =\mathbf{b}_{i}\left(j_{i}\right),\qquad\forall j_{i}\in\Omega_{i}=\left[n_{i}\right],\\
w_{1}\left(j_{1}\right) & =\mathbf{w}_{1}\left(j_{1}\right),\qquad\forall j_{1}\in\Omega_{1}=\left[n_{1}\right],
\end{align*}
for $2\leq i\leq L$. In particular, there exists $\hat{y}\left(\cdot;W,\Omega,P\right)\in\mathsf{NN}_{\infty}$
such that
\[
\hat{y}\left(\cdot;W,\Omega,P\right)=\hat{\mathbf{y}}\left(\cdot;\mathbf{W},n_{1},...,n_{L}\right).
\]
More generally one may observe that a similar correspondence holds
for both the forward pass and the backward pass; for example,
\begin{align*}
H_{i}\left(x,j_{i};W,\Omega,P\right) & =\mathbf{H}_{i}\left(x,j_{i};\mathbf{W},n_{1},...,n_{L}\right),\\
\Delta_{i}^{H}\left(z,j_{i};W,\Omega,P\right) & =\Delta_{i}^{{\bf H}}\left(z,j_{i};\mathbf{W},n_{1},...,n_{L}\right),\qquad\forall j_{i}\in\Omega_{i}=\left[n_{i}\right],
\end{align*}
where the quantities are rewritten forms of $H_{i}\left(t,x,c_{i}\right)$,
$\Delta_{i}^{H}\left(t,z,c_{i}\right)$, $\mathbf{H}_{i}\left(k,x,j_{i}\right)$
and $\Delta_{i}^{\mathbf{H}}\left(k,z,c_{i}\right)$ respectively.
As such, roughly speaking, the dynamics of any finite-sized neural
network can be identified with a MF dynamics, modulo the differences
in time discretization and stochastic sampling of the data. The same
observation is made in \cite{wojtowytsch2020banach}, which instead
studies it from the function space approximation perspective.
\end{rem}

\subsection{Preliminaries\label{subsec:Preliminaries}}

We describe several preliminaries that are necessary for the next
steps. First we consider several structural assumptions.
\begin{assumption}
\label{enu:Assump_lrSchedule}The learning rate schedules are bounded
and Lipschitz: 
\begin{align*}
\max_{1\leq i\leq L}\left|\xi_{i}^{\mathbf{w}}\left(t\right)\right|,\quad\max_{2\leq i\leq L}\left|\xi_{i}^{\mathbf{b}}\left(t\right)\right| & \leq K,\\
\max_{1\leq i\leq L}\left|\xi_{i}^{\mathbf{w}}\left(t\right)-\xi_{i}^{\mathbf{w}}\left(t'\right)\right|,\quad\max_{2\leq i\leq L}\left|\xi_{i}^{\mathbf{b}}\left(t\right)-\xi_{i}^{\mathbf{b}}\left(t'\right)\right| & \leq K\left|t-t'\right|.
\end{align*}
\end{assumption}

\begin{assumption}[\textit{Forward pass assumptions}]
\label{enu:Assump_forward} $\phi_{1}$ satisfies: 
\[
\left|\phi_{1}\left(w,x\right)-\phi_{1}\left(w',x\right)\right|\leq K\left|w-w'\right|,
\]
for all $w,w'\in\mathbb{W}_{1}$ and for ${\cal P}$-almost every
$x$. For $i=2,...,L$, $\phi_{i}$ satisfies: 
\begin{align*}
\left|\phi_{i}\left(w,b,h\right)\right| & \leq K\left(1+\left|w\right|+\left|b\right|\right),\\
\left|\phi_{i}\left(w,b,h\right)-\phi_{i}\left(w',b',h'\right)\right| & \leq K\left(1+\left|w\right|+\left|w'\right|+\left|b\right|+\left|b'\right|\right)\left|h-h'\right|\\
 & \quad+K\left(\left|w-w'\right|+\left|b-b'\right|\right),
\end{align*}
for all $w,w'\in\mathbb{W}_{i}$, $b,b'\in\mathbb{B}_{i}$, and $h,h'\in\mathbb{H}_{i-1}$.
Finally $\phi_{L+1}$ satisfies: 
\[
\left|\phi_{L+1}\left(h\right)-\phi_{L+1}\left(h'\right)\right|\leq K\left|h-h'\right|,
\]
for all $h,h'\in\mathbb{H}_{L}$.
\end{assumption}

\begin{assumption}[\textit{Backward pass assumptions}]
\label{enu:Assump_backward} $\sigma_{1}^{\mathbf{w}}$ satisfies:
\begin{align*}
\left|\sigma_{1}^{\mathbf{w}}\left(\Delta,w,x\right)\right| & \leq K\left(1+\left|\Delta\right|\right),\\
\left|\sigma_{1}^{\mathbf{w}}\left(\Delta,w,x\right)-\sigma_{1}^{\mathbf{w}}\left(\Delta',w',x\right)\right| & \leq K\left(\left|\Delta-\Delta'\right|+\left|w-w'\right|\right),
\end{align*}
for all $w,w'\in\mathbb{W}_{1}$, $\Delta,\Delta'\in\hat{\mathbb{H}}_{1}$
and for ${\cal P}$-almost every $x$. For $i=2,...,L$, $\sigma_{i}^{\mathbf{w}}$
and $\sigma_{i}^{\mathbf{b}}$ satisfy the following growth bounds:
\[
\max\left(\left|\sigma_{i}^{\mathbf{w}}\left(\Delta,w,b,g,h\right)\right|,\;\left|\sigma_{i}^{\mathbf{b}}\left(\Delta,w,b,g,h\right)\right|\right)\leq K\left(1+\left|\Delta\right|\right),
\]
as well as the following perturbation bounds: 
\begin{align*}
 & \max\Big(\left|\sigma_{i}^{\mathbf{w}}\left(\Delta,w,b,g,h\right)-\sigma_{i}^{\mathbf{w}}\left(\Delta',w',b',g',h'\right)\right|,\\
 & \qquad\left|\sigma_{i}^{\mathbf{b}}\left(\Delta,w,b,g,h\right)-\sigma_{i}^{\mathbf{b}}\left(\Delta',w',b',g',h'\right)\right|\Big)\\
 & \leq K\left(1+\left|\Delta\right|+\left|\Delta'\right|\right)\left|h-h'\right|+K\left(\left|\Delta-\Delta'\right|+\left|w-w'\right|+\left|b-b'\right|+\left|g-g'\right|\right).
\end{align*}
For $i=2,...,L$, $\sigma_{i-1}^{\mathbf{H}}$ satisfies the growth
bound: 
\[
\left|\sigma_{i-1}^{\mathbf{H}}\left(\Delta,w,b,g,h\right)\right|\leq K\left(1+\left|\Delta\right|\right)\left(1+\left|w\right|+\left|b\right|\right),
\]
and the perturbation bound: 
\begin{align*}
 & \left|\sigma_{i-1}^{\mathbf{H}}\left(\Delta,w,b,g,h\right)-\sigma_{i-1}^{\mathbf{H}}\left(\Delta',w',b',g',h'\right)\right|\\
 & \leq K\left(1+\left|w\right|+\left|w'\right|+\left|b\right|+\left|b'\right|\right)\left|\Delta-\Delta'\right|\\
 & \quad+K\left(1+\left|\Delta\right|+\left|\Delta'\right|\right)\left(\left|w-w'\right|+\left|b-b'\right|\right)\\
 & \quad+K\left(1+\left|\Delta\right|+\left|\Delta'\right|\right)\left(1+\left|w\right|+\left|w'\right|+\left|b\right|+\left|b'\right|\right)\left(\left|g-g'\right|+\left|h-h'\right|\right).
\end{align*}
Finally $\sigma_{L}^{\mathbf{H}}$ satisfies: 
\[
\left|\sigma_{L}^{\mathbf{H}}\left(y,\hat{y},h\right)\right|\leq K,\qquad\left|\sigma_{L}^{\mathbf{H}}\left(y,\hat{y},h\right)-\sigma_{L}^{\mathbf{H}}\left(y,\hat{y}',h'\right)\right|\leq K\left(\left|h-h'\right|+\left|\hat{y}-\hat{y}'\right|\right),
\]
for ${\cal P}$-almost every $y$.
\end{assumption}

\begin{rem}
We remark that these assumptions can be relaxed, e.g. $\phi_{i}\left(w,b,h\right)$
may be allowed to grow super-linearly with the variables, at the expense
of suitable additional assumptions\footnote{Indeed this has been done in our previous iterate of the paper, posted
on arXiv.}. Here we pay attention to a simpler setting, which covers neural
network setups of interest that are relevant to Sections \ref{sec:global_convergence_iid}
and \ref{sec:global_convergence_general}.
\end{rem}

We also equip the neural network and its MF limit with several norms.
In particular, we define for the neural network parameters: 
\begin{align*}
\interleave{\bf w}_{i}\interleave_{t} & =\bigg(\frac{1}{n_{i-1}n_{i}}\sum_{j_{i-1}=1}^{n_{i-1}}\sum_{j_{i}=1}^{n_{i}}\sup_{s\leq t}\left|{\bf w}_{i}\left(\left\lfloor s/\epsilon\right\rfloor ,j_{i-1},j_{i}\right)\right|^{50}\bigg)^{1/50},\\
\interleave{\bf b}_{i}\interleave_{t} & =\bigg(\frac{1}{n_{i}}\sum_{j_{i}=1}^{n_{i}}\sup_{s\leq t}\left|{\bf b}_{i}\left(\left\lfloor s/\epsilon\right\rfloor ,j_{i}\right)\right|^{50}\bigg)^{1/50},\qquad i=2,...,L,\\
\interleave{\bf w}_{1}\interleave_{t} & =\Big(\frac{1}{n_{1}}\sum_{j_{1}=1}^{n_{1}}\sup_{s\leq t}\left|\mathbf{w}_{1}\left(\left\lfloor s/\epsilon\right\rfloor ,j_{1}\right)\right|^{50}\Big)^{1/50}.
\end{align*}
We also introduce the notation:
\[
\interleave\mathbf{W}\interleave_{t}=\max\left(\max_{1\leq i\leq L}\interleave{\bf w}_{i}\interleave_{t},\;\max_{2\leq i\leq L}\interleave{\bf b}_{i}\interleave_{t}\right).
\]
We also have similarly for the MF limit: 
\begin{align*}
\interleave w_{i}\interleave_{t} & =\mathbb{E}\left[\sup_{s\leq t}\left|w_{i}\left(s,C_{i-1},C_{i}\right)\right|^{50}\right]^{1/50},\\
\interleave b_{i}\interleave_{t} & =\mathbb{E}\left[\sup_{s\leq t}\left|b_{i}\left(s,C_{i}\right)\right|^{50}\right]^{1/50},\qquad i=2,...,L,\\
\interleave w_{1}\interleave_{t} & =\mathbb{E}\left[\sup_{s\le t}\left|w_{1}\left(s,C_{1}\right)\right|^{50}\right]^{1/50},
\end{align*}
as well as
\[
\interleave W\interleave_{t}=\max\left(\max_{1\leq i\leq L}\interleave w_{i}\interleave_{t},\;\max_{2\leq i\leq L}\interleave b_{i}\interleave_{t}\right).
\]
For convenience, let us define the quantities:
\begin{align*}
\mathsf{max}_{t}^{w}\left(W\right) & =\max_{2\leq i\leq L}\sup_{s\leq t}\left|w_{i}\left(s,C_{i-1},C_{i}\right)\right|,\\
\mathsf{max}_{t}^{b}\left(W\right) & =\max_{2\leq i\leq L}\sup_{s\leq t}\left|b_{i}\left(s,C_{i}\right)\right|,
\end{align*}
which are random variables. Note that $\mathsf{max}_{t}^{w}\left(W\right)$
does not involve $w_{1}$.

For a set of MF parameters $W$, we define 
\begin{align*}
\left\Vert W\right\Vert _{t} & =\max\left(\max_{1\leq i\leq L}\left\Vert w_{i}\right\Vert _{t},\;\max_{2\leq i\leq L}\left\Vert b_{i}\right\Vert _{t}\right),\\
\left\Vert w_{i}\right\Vert _{t} & =\mathbb{E}\left[\sup_{s\leq t}\left|w_{i}\left(s,C_{i-1},C_{i}\right)\right|^{2}\right]^{1/2},\\
\left\Vert b_{i}\right\Vert _{t} & =\mathbb{E}\left[\sup_{s\leq t}\left|b_{i}\left(s,C_{i}\right)\right|^{2}\right]^{1/2},\qquad i=2,...,L,\\
\left\Vert w_{1}\right\Vert _{t} & =\mathbb{E}\left[\sup_{s\leq t}\left|w_{1}\left(s,C_{1}\right)\right|^{2}\right]^{1/2}.
\end{align*}
Note that this defines a norm on the space of MF parameters. As such,
we can define the following distance for two sets of MF parameters
$W$ and $W'$:
\begin{align}
\left\Vert W-W'\right\Vert _{t} & =\max\left(\max_{1\leq i\leq L}\left\Vert w_{i}-w_{i}'\right\Vert _{t},\;\max_{2\leq i\leq L}\left\Vert b_{i}-b_{i}'\right\Vert _{t}\right),\label{eq:def_MF_distance}\\
\left\Vert w_{i}-w_{i}'\right\Vert _{t} & =\mathbb{E}\left[\sup_{s\leq t}\left|w_{i}\left(s,C_{i-1},C_{i}\right)-w_{i}'\left(s,C_{i-1},C_{i}\right)\right|^{2}\right]^{1/2},\nonumber \\
\left\Vert b_{i}-b_{i}'\right\Vert _{t} & =\mathbb{E}\left[\sup_{s\leq t}\left|b_{i}\left(s,C_{i}\right)-b_{i}'\left(s,C_{i}\right)\right|^{2}\right]^{1/2},\qquad i=2,...,L,\nonumber \\
\left\Vert w_{1}-w_{1}'\right\Vert _{t} & =\mathbb{E}\left[\sup_{s\leq t}\left|w_{1}\left(s,C_{1}\right)-w_{1}'\left(s,C_{1}\right)\right|^{2}\right]^{1/2}.\nonumber 
\end{align}

\section{Existence and Uniqueness of the Solution of the MF ODEs\label{sec:Existence-MF}}

We study the well-posedness of the solution of the MF ODEs introduced
in Section \ref{subsec:MF}. For this purpose specifically, we consider
the following sub-Gaussian norm for $w_{i}$, $i\geq2$:
\[
\left\llbracket w_{i}\right\rrbracket _{\psi,t}=\sqrt{50}\sup_{m\geq1}\frac{1}{\sqrt{m}}\mathbb{E}\left[\sup_{s\leq t}\left|w_{i}\left(s,C_{i-1},C_{i}\right)\right|^{m}\right]^{1/m},\qquad i=2,...,L,
\]
and accordingly define
\[
\left\llbracket W\right\rrbracket _{\psi,t}=\max\left(\max_{2\leq i\leq L}\left\llbracket w_{i}\right\rrbracket _{\psi,t},\;\max_{2\leq i\leq L}\interleave b_{i}\interleave_{t},\;\interleave w_{1}\interleave_{t}\right).
\]
The factor $\sqrt{50}$ is for the convenience that $\left\llbracket w_{i}\right\rrbracket _{\psi,t}\geq\interleave w_{i}\interleave_{t}$
and hence $\left\llbracket W\right\rrbracket _{\psi,t}\geq\interleave W\interleave_{t}$.

Denote by $\frak{W}_{T}$ the space of MF parameters $W$ such that
$\|W\|_{T}<\infty$. Given a terminal time $T\geq0$ and an initialization
$W\left(0\right)$, we define the mapping $F$ that associates $W'\in\mathfrak{W}_{T}$
with 
\begin{align*}
F\left(W'\right)\left(t,c_{1},...,c_{L}\right) & =\Big\{ F_{1}^{w}\left(W'\right)\left(t,c_{1}\right),\;F_{2}^{w}\left(W'\right)\left(t,c_{1},c_{2}\right),\;F_{2}^{b}\left(W'\right)\left(t,c_{2}\right),\\
 & \qquad...,F_{L}^{w}\left(W'\right)\left(t,c_{L-1},c_{L}\right),\;F_{L}^{b}\left(W'\right)\left(t,c_{L}\right)\Big\},
\end{align*}
in which
\begin{align*}
F_{1}^{w}\left(W'\right)\left(t,c_{1}\right) & =w_{1}\left(0,c_{1}\right)-\int_{0}^{t}\xi_{1}^{\mathbf{w}}\left(s\right)\mathbb{E}_{Z}\left[\Delta_{1}^{w}\left(Z,c_{1};W'\left(s\right)\right)\right]ds,\\
F_{i}^{w}\left(W'\right)\left(t,c_{i-1},c_{i}\right) & =w_{i}\left(0,c_{i-1},c_{i}\right)-\int_{0}^{t}\xi_{i}^{\mathbf{w}}\left(s\right)\mathbb{E}_{Z}\left[\Delta_{i}^{w}\left(Z,c_{i-1},c_{i};W'\left(s\right)\right)\right]ds,\\
F_{i}^{b}\left(W'\right)\left(t,c_{i}\right) & =b_{i}\left(0,c_{i}\right)-\int_{0}^{t}\xi_{i}^{\mathbf{b}}\left(s\right)\mathbb{E}_{Z}\left[\Delta_{i}^{b}\left(Z,c_{i};W'\left(s\right)\right)\right]ds,\qquad i=2,...,L.
\end{align*}
Observe that at initialization $F\left(W'\right)\left(0,\cdot,...,\cdot\right)=W\left(0\right)$,
whereas the quantities in the above time integrals are computed w.r.t.
$W'$. In the following, when referring to a solution $W$ to the
MF ODEs on $[0,T]$, we mean an element of $\mathfrak{W}_{T}$ satisfying
$F(W)=W$. We say that $W$ is a solution to the MF ODEs on $t\in[0,\infty)$
if its restriction to $[0,T]$ is a solution to the MF ODEs on $[0,T]$
for all $T>0$. 
\begin{thm}
\label{thm:existence ODE}Assume that the initialization $W\left(0\right)$
of the MF ODEs satisfies $\left\llbracket W\right\rrbracket _{\psi,0}\leq K$.
Then under Assumptions \ref{enu:Assump_lrSchedule}-\ref{enu:Assump_backward},
there exists a unique solution to the MF ODEs on $t\in[0,\infty)$.
\end{thm}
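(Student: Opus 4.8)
The plan is to reformulate the MF ODEs as a fixed-point equation in integral form, prove local existence and uniqueness by a contraction-mapping argument using the bounds in Assumptions~\ref{enu:Assump_lrSchedule}--\ref{enu:Assump_backward}, and then promote this to a global statement via an \emph{a priori} bound that rules out finite-time blow-up. Writing $W=(w_1,\{w_i\}_{i=2}^L,\{b_i\}_{i=2}^L)$ and letting $F(t,W)$ denote the collection of right-hand sides of the MF ODEs, a solution on $[0,T]$ is a curve $t\mapsto W(t)$ that is continuous in the norms of Section~\ref{subsec:Preliminaries} (with $w_i(t,\cdot,\cdot)$, $b_i(t,\cdot)$, $w_1(t,\cdot)$ measurable in their $\Omega$-arguments) and solves $W(t)=W(0)+\int_0^t F(s,W(s))\,ds$. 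A preliminary lemma should establish that whenever $\|W\|_t<\infty$ all intermediate quantities ($H_i$, $\hat y$, $\Delta^H_i$, $\Delta^w_i$, $\Delta^b_i$) are well defined — the Bochner integrals over $C_{i-1}\sim P_{i-1}$ and $Z\sim\mathcal P$ converge, and measurability in the $\Omega$- and $Z$-variables propagates through the forward and backward recursions — and that they obey quantitative bounds: $\|H_i\|$ is bounded by a polynomial in $\|W\|_t$ (Assumption~\ref{enu:Assump_forward}); $\sup_z\|\Delta^H_i(t,z,\cdot)\|$ is bounded by a polynomial in $\|w_j\|_t,\|b_j\|_t$ for $j>i$ \emph{only}, since $|\sigma^{\mathbf H}_L|\le K$ and the growth bound for $\sigma^{\mathbf H}_{i-1}$ involves only its $\Delta,w,b$ slots and not the two $H$ slots; hence $\|\Delta^w_i\|,\|\Delta^b_i\|$ are likewise bounded; and the corresponding perturbation bounds hold with polynomial-in-$(\|W\|_t,\|W'\|_t)$ prefactors multiplying $\|W-W'\|_t$ (Assumption~\ref{enu:Assump_backward}).

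For local well-posedness, fix $\rho=\|W\|_0<\infty$ and work on the closed ball $\mathcal B=\{\|W\|_{T_0}\le 2\rho+1\}$ in the space of continuous curves on $[0,T_0]$. The growth bounds give $\|F(s,W(s))\|\le M(\rho)$ on $\mathcal B$, so the Picard map $\Gamma:W\mapsto W(0)+\int_0^{\cdot} F(s,W(s))\,ds$ sends $\mathcal B$ into itself once $T_0\,M(\rho)\le\rho+1$. The perturbation bounds, propagated from the output layer downward through the layers (a finite chain of estimates, or one Gr\"onwall with a large constant, bounding the forward-pass differences, then the backward-pass differences, then $\|F(\cdot,W)-F(\cdot,W')\|$ by $C(\rho)\,\|W-W'\|_{T_0}$), show $\Gamma$ is Lipschitz on $\mathcal B$ with constant $\le C(\rho)\,T_0$, hence a contraction after shrinking $T_0$. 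Banach's fixed point theorem then gives a unique local solution on $[0,T_0]$, with $T_0$ depending only on $\rho$ (and $K,\mathsf p,L$); Lipschitz continuity of $t\mapsto\xi^{\mathbf w}_i,\xi^{\mathbf b}_i$ (Assumption~\ref{enu:Assump_lrSchedule}) makes $s\mapsto F(s,W(s))$ continuous, so the solution is in fact $C^1$ and solves the ODEs classically.

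The step I expect to carry the real content is the \emph{a priori} bound globalizing the solution. Because $|\sigma^{\mathbf H}_L|\le K$ and the recursion defining $\Delta^H_i$ feeds only on $\Delta^H_{i+1},w_{i+1},b_{i+1}$, a downward induction on $i$ shows that $\|w_L\|_t,\|b_L\|_t$ grow at most linearly in $t$, hence $\sup_z\|\Delta^H_{L-1}(t,z,\cdot)\|\le C(1+t)^{\mathsf p}$, hence $\|w_{L-1}\|_t,\|b_{L-1}\|_t$ grow polynomially in $t$, and so on down to $w_1$ — whose $L^{\mathsf p}$-norm is handled identically, since the growth bound for $\sigma^{\mathbf w}_1$ does not involve $w_1$. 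Thus any solution on $[0,T)$ obeys $\|W\|_t\le G(T)$ for an explicit increasing function $G$ (with a tower of $\mathsf p$'s in the exponents across the $L$ layers), so no finite-time blow-up can occur. A standard continuation argument finishes: if the maximal existence interval were $[0,T^\ast)$ with $T^\ast<\infty$, then $\sup_{t<T^\ast}\|W(t)\|\le G(T^\ast)<\infty$, so applying the local existence theorem from a time near $T^\ast$ with step size depending only on $G(T^\ast)$ extends the solution beyond $T^\ast$, a contradiction; hence the solution exists on $[0,\infty)$. Global uniqueness follows from local uniqueness and connectedness of $[0,\infty)$, or directly from a Gr\"onwall estimate on each $[0,T]$ using $G(T)$ as the polynomial prefactor in the perturbation bounds, applied to the monotone quantity $\sup_{s\le t}\|W(s)-W'(s)\|$, which then stays $0$. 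Apart from the a priori bound, the remaining work is bookkeeping: tracking the polynomial exponents through the $L$ layers, and verifying that joint measurability in the $\Omega$-variables and Bochner-integrability survive each step of the recursions and of the Picard iteration.
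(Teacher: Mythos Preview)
Your proposal is correct and shares the same two load-bearing ingredients as the paper's proof: the Lipschitz estimates on the forward/backward recursions (the paper's Lemmas~\ref{lem:Lipschitz forward MF} and~\ref{lem:Lipschitz backward MF}) and, crucially, the downward-induction \emph{a priori} bound exploiting that $\Delta_i^H$ depends only on $(w_j,b_j)_{j>i}$ (the paper's Lemma~\ref{lem:a priori MF}). The organizational difference is in how global existence is obtained. You run the classical route: local Picard--Lindel\"of on a small interval $[0,T_0]$ whose length depends on the current norm, then continuation via the \emph{a priori} bound on solutions. The paper instead works directly on an arbitrary $[0,T]$: it proves the \emph{a priori} bound not for solutions but for Picard iterates (elements of $R(F^{(L)})$), so that the Lipschitz constant of $F$ on $R(F^{(L)})$ depends only on $\|W\|_0$ and $T$, and then uses the standard $T^k/k!$ estimate on $\|F^{(k)}(W)-F^{(k)}(W')\|_T$ to force convergence of the Picard sequence without ever shrinking the time interval. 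Your approach is perhaps more familiar and modular; the paper's avoids the restart/continuation step at the cost of tracking which image $R(F^{(k)})$ one is in. Either way the key insight --- that the layerwise structure of the backward pass yields polynomial-in-$t$ growth bounds independent of the forward quantities --- is identical.
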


The rest of this section is devoted to the proof of this theorem.
To prove the theorem, we first collect a useful a priori estimate.
\begin{lem}
\label{lem:bounds MF a priori}Under Assumptions \ref{enu:Assump_lrSchedule}
and \ref{enu:Assump_backward}, given an initialization $W\left(0\right)$,
a solution $W$ to the MF ODEs, if exists, must satisfy that for any
$t\in[0,\infty)$, 
\[
\interleave W\interleave_{t},\quad\max_{1\leq i\leq L}\mathbb{E}\left[\sup_{s\leq t}\underset{Z\sim{\cal P}}{{\rm ess\text{-}sup}}\left|\Delta_{i}^{H}\left(Z,C_{i};W\left(s\right)\right)\right|^{50}\right]^{1/50}\leq K^{\kappa_{L}}\left(1+t^{\kappa_{L}}\right)\left(1+\interleave W\interleave_{0}^{\kappa_{L}}\right),
\]
where $\kappa_{L}=K^{L}$ for some constant $K>1$ sufficiently large.

A similar result holds for $\left\llbracket \cdot\right\rrbracket _{\psi,t}$
norm. Under Assumptions \ref{enu:Assump_lrSchedule} and \ref{enu:Assump_backward},
given an initialization $W\left(0\right)$, for any $t\in[0,\infty)$,
there exists $K_{0}\left(t\right)\geq1$ of the form
\[
K_{0}\left(t\right)=K^{\kappa_{L}}\left(1+t^{\kappa_{L}}\right)\left(1+\left\llbracket W\right\rrbracket _{\psi,0}^{\kappa_{L}}\right),
\]
where $\kappa_{L}=K^{L}$ for some constant $K>1$ sufficiently large,
such that the following holds. A solution $W$ to the MF ODEs, if
exists, must satisfy that for any $t\in[0,\infty)$, $\interleave W\interleave_{t}\leq\left\llbracket W\right\rrbracket _{\psi,t}\leq K_{0}\left(t\right)$.
Furthermore, by assuming $\left\llbracket W\right\rrbracket _{\psi,0}<\infty$,
for any $B\geq0$,
\[
\mathbb{P}\left(\mathsf{max}_{t}^{w}\left(W\right)\geq K_{0}\left(t\right)B\right)\leq2Le^{1-K_{1}B^{2}},
\]
for some universal constant $K_{1}>0$.
\end{lem}

Recall the bounds in Lemma \ref{lem:bounds MF a priori} are given
by $K_{0}\left(t\right)$, which is a function of the initialization
$W\left(0\right)$ and non-decreasing with $t$. These a priori bounds
lead us to consider the following spaces, given an initialization
$W\left(0\right)$ and an arbitrary terminal time $T>0$:
\begin{itemize}
\item The space ${\cal W}_{T}$ of MF parameters $W'=\left\{ W'\left(t\right)\right\} _{t\leq T}=\left\{ w_{1}'\left(t,\cdot\right),w_{i}'\left(t,\cdot,\cdot\right),b_{i}'\left(t,\cdot\right),\;\;i=2,...,L\right\} _{t\leq T}$
such that
\[
\interleave W'\interleave_{T}\leq K_{0}\left(T\right).
\]
\item The space ${\cal W}_{T}^{0}\subset{\cal W}_{T}$ of MF parameters
$W'\in{\cal W}_{T}$ such that
\begin{align*}
\left\llbracket W'\right\rrbracket _{\psi,T} & \leq K_{0}\left(T\right),\\
\mathbb{P}\left(\mathsf{max}_{T}^{w}\left(W'\right)\geq K_{0}\left(T\right)B\right) & \leq2Le^{1-K_{1}B^{2}}\quad\forall B\geq0,
\end{align*}
and $W'\left(0\right)=W\left(0\right)$ (and hence every elements
$W'$ in ${\cal W}_{T}^{0}$ share the same initialization $W\left(0\right)$).
It is easy to see that ${\cal W}_{T}^{0}\subset{\cal W}_{T}$ is valid
since $\interleave W'\interleave_{T}\leq\left\llbracket W'\right\rrbracket _{\psi,T}$.
\end{itemize}
We equip these spaces with the metric $\left(W',W''\right)\mapsto\left\Vert W'-W''\right\Vert _{T}$.
By Lemma \ref{lem:bounds MF a priori}, we know that any solution
$W$ to the MF ODEs, if exists, must belong to ${\cal W}_{T}^{0}$.

The proof of Theorem \ref{thm:existence ODE} follows from a Picard-type
iteration. It is easy to see that a solution to the MF ODEs is a fixed
point of $F$ and vice versa. Also note that by the same argument
of Lemma \ref{lem:bounds MF a priori}, one can prove the following:
\begin{lem}
\label{lem:a priori mapping F}Under Assumptions \ref{enu:Assump_lrSchedule}
and \ref{enu:Assump_backward}, for any $W'\in{\cal W}_{T}^{0}$,
$F\left(W'\right)\in{\cal W}_{T}^{0}$.
\end{lem}

We have the following key result:
\begin{lem}
\label{lem:difference MF}For a given $B\geq0$, consider two collections
of MF parameters $W',W''\in{\cal W}_{T}$ such that
\begin{align*}
\mathbb{P}\left(\mathsf{max}_{T}^{w}\left(W'\right)\geq K_{0}\left(T\right)B\right) & \leq2Le^{1-K_{1}B^{2}},\\
\mathbb{P}\left(\mathsf{max}_{T}^{w}\left(W''\right)\geq K_{0}\left(T\right)B\right) & \leq2Le^{1-K_{1}B^{2}}.
\end{align*}
Under Assumptions \ref{enu:Assump_lrSchedule}-\ref{enu:Assump_backward},
for any $t\leq T$,
\[
\left\Vert F\left(W'\right)-F\left(W''\right)\right\Vert _{t}\leq\left(KK_{0}\left(T\right)\right)^{2L+2}\int_{0}^{t}\left(\left(1+B\right)\left\Vert W'-W''\right\Vert _{s}+\sqrt{L}e^{-K_{1}B^{2}/2}\right)ds.
\]
\end{lem}

We are now ready to prove Theorem \ref{thm:existence ODE}.
\begin{proof}[Proof of Theorem \ref{thm:existence ODE}]
We perform a Picard-type iteration argument. Consider an arbitrary
finite $T\geq0$. Consider $W',W''\in{\cal W}_{T}^{0}$. From Lemma
\ref{lem:difference MF}:
\begin{align*}
\left\Vert F\left(W'\right)-F\left(W''\right)\right\Vert _{t} & \leq\left(KK_{0}\left(T\right)\right)^{2L+2}\left(\left(1+B\right)\int_{0}^{t}\left\Vert W'-W''\right\Vert _{s}ds+T\sqrt{L}e^{-K_{1}B^{2}/2}\right)\\
 & \equiv k_{1}\left(1+B\right)\int_{0}^{t}\left\Vert W'-W''\right\Vert _{s}ds+k_{2}e^{-k_{3}B^{2}},
\end{align*}
for any $B>0$. By Lemma \ref{lem:a priori mapping F}, $F$ maps
${\cal W}_{T}^{0}$ to ${\cal W}_{T}^{0}$. As such, we can iterate
this inequality to obtain:
\begin{align*}
 & \left\Vert F^{\left(m\right)}\left(W'\right)-F^{\left(m\right)}\left(W''\right)\right\Vert _{T}\\
 & \leq k_{1}\left(1+B\right)\int_{0}^{T}\left\Vert F^{\left(m-1\right)}\left(W'\right)-F^{\left(m-1\right)}\left(W''\right)\right\Vert _{T_{2}}dT_{2}+k_{2}e^{-k_{3}B^{2}}\\
 & \leq k_{1}^{2}\left(1+B\right)^{2}\int_{0}^{T}\int_{0}^{T_{2}}\left\Vert F^{\left(m-2\right)}\left(W'\right)-F^{\left(m-2\right)}\left(W''\right)\right\Vert _{T_{3}}\mathbb{I}\left(T_{2}\leq T\right)dT_{3}dT_{2}\\
 & \qquad+k_{2}\sum_{\ell=1}^{2}\frac{\left(Tk_{1}k_{2}\left(1+B\right)\right)^{\ell-1}}{\ell!}e^{-k_{3}B^{2}}\\
 & \ldots\\
 & \leq k_{1}^{m}\left(1+B\right)^{m}\int_{0}^{T}\int_{0}^{T_{2}}...\int_{0}^{T_{m}}\left\Vert W'-W''\right\Vert _{T_{m+1}}\mathbb{I}\left(T_{m}\leq...\leq T_{2}\leq T\right)dT_{m+1}...dT_{2}\\
 & \qquad+k_{2}\sum_{\ell=1}^{m}\frac{\left(Tk_{1}k_{2}\left(1+B\right)\right)^{\ell-1}}{\ell!}e^{-k_{3}B^{2}}\\
 & \leq\frac{1}{m!}T^{m}k_{1}^{m}\left(1+B\right)^{m}\left\Vert W'-W''\right\Vert _{T}+k_{2}e^{Tk_{1}k_{2}\left(1+B\right)-k_{3}B^{2}}\\
 & \leq\frac{1}{m!}T^{m}k_{1}^{m}\left(1+\sqrt{m}\right)^{m}\left\Vert W'-W''\right\Vert _{T}+k_{2}e^{Tk_{1}k_{2}\left(1+\sqrt{m}\right)-k_{3}m},
\end{align*}
where we choose $B=\sqrt{m}$ in the last display. Note that since
$\interleave W\interleave_{0}<\infty$, $K_{0}\left(T\right)$ and
hence $k_{1},k_{2}$ are finite for finite $T$. By substituting $W''=F\left(W'\right)$,
we obtain:
\[
\sum_{m=1}^{\infty}\left\Vert F^{\left(m+1\right)}\left(W'\right)-F^{\left(m\right)}\left(W'\right)\right\Vert _{T}=\sum_{m=1}^{\infty}\left\Vert F^{\left(m\right)}\left(W''\right)-F^{\left(m\right)}\left(W'\right)\right\Vert _{T}<\infty.
\]
Hence as $m\to\infty$, $F^{\left(m\right)}\left(W'\right)$ converges
in $\|\cdot\|_{T}$ to a limit $W\in\frak{W}_{T}$, which is a fixed
point of $F$. By Lemma \ref{lem:bounds MF a priori}, $W$ belongs
to ${\cal W}_{T}^{0}$. 

The uniqueness of the fixed point comes from the above estimate, since
if $W'$ and $W''$ are fixed points of $F$ then they are both in
${\cal W}_{T}^{0}$, and 
\begin{align*}
\text{\ensuremath{\left\Vert W'-W''\right\Vert }}_{T} & =\left\Vert F^{\left(m\right)}\left(W'\right)-F^{\left(m\right)}\left(W''\right)\right\Vert _{T}\\
 & \leq\frac{1}{m!}T^{m}k_{1}^{m}\left(1+\sqrt{m}\right)^{m}\left\Vert W'-W''\right\Vert _{T}+k_{2}e^{Tk_{1}k_{2}\left(1+\sqrt{m}\right)-k_{3}m},
\end{align*}
and one can take $m$ arbitrarily large. This proves that the solution
exists and is unique on $t\in\left[0,T\right]$. Since $T$ is arbitrary,
we have existence and uniqueness of the solution to the MF ODEs on
the time interval $[0,\infty)$.
\end{proof}
The proofs of the lemmas are in Appendix \ref{sec:Remaining-proofs-existence-MF}.

\section{Main Result: Connection between Neural Network and MF Limit\label{sec:Main-result}}

\subsection{Neuronal Embedding and the Coupling Procedure\label{subsec:Neuronal-Embedding}}

\paragraph*{Neuronal embedding.}

To formalize a connection between the neural network and its MF limit,
we consider their initializations. In practical scenarios, to set
the initial parameters $\mathbf{W}\left(0\right)$ of the neural network,
one typically randomizes $\mathbf{W}\left(0\right)$ according to
some distributional law $\rho$. We note that since the neural network
is defined w.r.t. a set of finite integers ${\bf n}=\left\{ n_{1},...,n_{L}\right\} $
that represents its size, so is $\rho$. In the context of infinite-width
limits of neural networks, we would like to accommodate a sequence
of neural networks of diverging sizes ${\bf n}$ (where $n_{1},...,n_{L-1}\to\infty$
and $n_{L}=1$). As such, it is useful to also consider a family $\mathsf{Init}$
of initialization laws, each of which is indexed by the set of finite
integers ${\bf n}=\left\{ n_{1},...,n_{L}\right\} $ (with $n_{L}=1$):
\begin{align*}
\mathsf{Init} & =\{\rho:\;\rho\text{ is the initialization law of a neural network of size {\bf n}=\ensuremath{\left\{  n_{1},...,n_{L}\right\} } },\\
 & \qquad n_{1},...,n_{L}\in\mathbb{N}_{>0},\;n_{L}=1\}.
\end{align*}
We make the following crucial definitions.
\begin{defn}[Unit neuronal embedding]
\label{def:neuronal_embedding-unit}Given an initialization law $\rho$
of a neural network of size ${\bf n}=\left\{ n_{1},\dots,n_{L}\right\} $
(where $n_{L}=1$), we call $\left(\Omega,P,\left\{ w_{i}^{0}\right\} _{i\in\left[L\right]},\left\{ b_{i}^{0}\right\} _{2\leq i\leq L}\right)$
a \textit{unit neuronal embedding} for this neural network if there
exists a sampling rule $\overline{P}_{{\bf n}}=\prod_{i=1}^{L}\overline{P}_{n_{i}}$
such that the following hold:

\begin{enumerate}
\item $\left(\Omega,P\right)=\prod_{i=1}^{L}\left(\Omega_{i},P_{i}\right)$
a product space and $\Omega_{L}=\{1\}$. We recall that $\left(\Omega,P\right)$
is called a neuronal ensemble.
\item $\overline{P}_{n_{i}}$ is a distribution over $\Omega_{i}^{n_{i}}$
whose marginals are given by $P_{i}$. Note it is not necessary that
$\overline{P}_{n_{i}}$ is factored as a product of $P_{i}$'s.
\item The deterministic functions $w_{1}^{0}:\;\Omega_{1}\to\mathbb{W}_{1}$,
$w_{i}^{0}:\;\Omega_{i-1}\times\Omega_{i}\to\mathbb{W}_{i}$ and $b_{i}^{0}:\;\Omega_{i}\to\mathbb{B}_{i}$,
$2\leq i\leq L$ are such that if --- with an abuse of notations
--- we sample $\left(C_{i}\left(j_{i}\right)\right)_{i\in[L],j_{i}\in\left[n_{i}\right]}\sim\overline{P}_{{\bf n}}$,
then 
\[
{\rm Law}\left(w_{1}^{0}\left(C_{1}\left(j_{1}\right)\right),\;w_{i}^{0}\left(C_{i-1}\text{\ensuremath{\left(j_{i-1}\right)}},C_{i}\left(j_{i}\right)\right),\;b_{i}^{0}\left(C_{i}\text{\ensuremath{\left(j_{i}\right)}}\right),\;\;j_{1}\in\left[n_{1}\right],\;j_{i}\in\left[n_{i}\right],\;i=2,...,L\right)=\rho.
\]
\end{enumerate}
\end{defn}

\begin{defn}[Neuronal embedding]
\label{def:neuronal_embedding}Given a family of initialization laws
$\mathsf{Init}$, we call $\left(\Omega,P,\left\{ w_{i}^{0}\right\} _{i\in\left[L\right]},\left\{ b_{i}^{0}\right\} _{2\leq i\leq L}\right)$
a \textit{neuronal embedding }for $\mathsf{Init}$ if it is a unit
neuronal embedding for any law $\rho$ in $\mathsf{Init}$.
\end{defn}

On one hand, we concern chiefly with the notion of a neuronal embedding,
which carries the idea of infinite-width limits. On the other hand,
the unit neuronal embedding -- as a standalone notion -- is useful
when one is to obtain a quantitative (finite-width) result, such as
Theorem \ref{thm:gradient descent coupling} and Corollary \ref{cor:gradient descent quality}
below. Note also that if the family $\mathsf{Init}$ contains only
one initialization law, then a unit neuronal embedding for this law
is obviously a neuronal embedding for $\mathsf{Init}$. We shall thus
routinely refer to a unit neuronal embedding as a neuronal embedding,
whenever there is no risk of confusion.

\paragraph*{$\eta$-independence.}

An important aspect of the neuronal embedding is the sampling rule
$\overline{P}_{{\bf n}}$. The product structure $\overline{P}_{{\bf n}}=\prod_{i=1}^{L}\overline{P}_{n_{i}}$
implies layer-wise independence. At each layer $i\in\left[L\right]$,
a canonical example of a sampling rule is one in which the samples
are i.i.d., i.e. $\overline{P}_{n_{i}}=P_{i}\times...\times P_{i}$
($n_{i}$-time product). In fact, we shall require a weaker condition,
given in the following.
\begin{defn}[$\eta$-independence]
We say that $(X_{1},\dots,X_{n})\in\Omega_{0}^{n}$ are \textit{$\eta$-independent}
if for all $1$-bounded functions $f$ that maps from $\Omega_{0}$
to a separable Hilbert space, for any $i\in\left[n\right]$, almost
surely,
\[
\left|\mathbb{E}\left[f(X_{i})\middle|\left\{ X_{i'},\;i'<i\right\} \right]-\mathbb{E}\left[f(X_{i})\right]\right|\le\eta.
\]
\end{defn}

\begin{assumption}[$\bar{\eta}$-independence for neuronal embedding]
\label{assump:neuronal-embedding}Let $\bar{\eta}=\left(\eta_{1},...,\eta_{L-1}\right)$
where $\eta_{i}=n_{i}^{-0.501}$. For the neuronal embedding in Definition
\ref{def:neuronal_embedding} (or Definition \ref{def:neuronal_embedding-unit}),
for each index $\mathbf{n}$ in the family $\mathsf{Init}$, the sampling
rule $\overline{P}_{{\bf n}}$ satisfies that $\left(C_{i}\left(j_{i}\right)\right)_{j_{i}\in\left[n_{i}\right]}\sim\overline{P}_{n_{i}}$
are $\eta_{i-1}$-independent for all $i\in\left[L-1\right]$. In
this case, we say the neuronal embedding satisfies $\bar{\eta}$-independence.
\end{assumption}

It is easy to see that in the canonical example where $\overline{P}_{n_{i}}=P_{i}\times...\times P_{i}$
for all $i\in\left[L-1\right]$ and all indices $\mathbf{n}$ from
$\mathsf{Init}$, the above assumption is trivially satisfied; that
is, any $n$ independent random variables are $n^{-c}$-independent
with $c=\infty$.

\begin{rem}
\label{rem:Init_family}When $\mathsf{Init}$ contains more than one
law, if a neuronal embedding exists, then $\mathsf{Init}$ must satisfy
a certain consistency property. For instance, under the canonical
example where $\overline{P}_{n_{i}}=P_{i}\times...\times P_{i}$ for
all $i\in\left[L-1\right]$ and all indices $\mathbf{n}$ from $\mathsf{Init}$,
if a neuronal embedding with this sampling rule exists, then the following
must hold. Suppose that $\rho$ indexed by $\left\{ n_{1},...,n_{L}\right\} $
and $\rho'$ indexed by $\left\{ n_{1}',...,n_{L}'\right\} $ are
elements of $\mathsf{Init}$ such that $n_{1}\leq n_{1}'$, ..., $n_{L-1}\leq n_{L-1}'$,
and suppose that
\[
{\rm Law}\left(\mathbf{w}_{1}'\left(0,j_{1}\right),\mathbf{w}_{i}'\left(0,j_{i-1},j_{i}\right),\mathbf{b}_{i}'\left(0,j_{i}\right):\;j_{i}\in\left[n_{i}'\right],\;i=1,...,L\right)=\rho'.
\]
Then we must have that
\[
{\rm Law}\left(\mathbf{w}_{1}'\left(0,j_{1}\right),\mathbf{w}_{i}'\left(0,j_{i-1},j_{i}\right),\mathbf{b}_{i}'\left(0,j_{i}\right):\;j_{i}\in S_{i},\;i=1,...,L\right)=\rho,
\]
for any collection of $L$ sets $S_{i}$, $i=1,...,L$, where each
$S_{i}$ is a subset of $\left[n_{i}'\right]$ with size $\left|S_{i}\right|=n_{i}$.
\end{rem}

\paragraph*{Coupling procedure.}

To proceed, we perform the following \textit{coupling procedure}:
\begin{enumerate}
\item Given a family of initialization laws $\mathsf{Init}$, let $\left(\Omega,P,\left\{ w_{i}^{0}\right\} _{i\in\left[L\right]},\left\{ b_{i}^{0}\right\} _{2\leq i\leq L}\right)$
be a neuronal embedding of $\mathsf{Init}$.
\item We form the MF ODEs' initialization $W\left(0\right)$ by setting
$w_{1}\left(0,\cdot\right)=w_{1}^{0}\left(\cdot\right)$, $w_{i}\left(0,\cdot,\cdot\right)=w_{i}^{0}\left(\cdot,\cdot\right)$
and $b_{i}\left(0,\cdot\right)=b_{i}^{0}\left(\cdot\right)$ for $2\leq i\leq L$.
With this initialization, we obtain the MF limit's trajectory $W\left(t\right)$,
for $t\in\mathbb{R}_{\geq0}$, according to the neuronal ensemble
$\left(\Omega,P\right)$.
\item Given $\mathbf{n}=\left\{ n_{1},...,n_{L}\right\} $, we find a sampling
rule $\overline{P}_{\mathbf{n}}=\prod_{i=1}^{L}\overline{P}_{n_{i}}$.
For each $i\in\left[L\right]$, we sample $\left(C_{i}\left(j_{1}\right),\dots,C_{i}\left(j_{n_{i}}\right)\right)\sim\overline{P}_{n_{i}}$.
We then form the neural network initialization $\mathbf{W}\left(0\right)$
by setting $\mathbf{w}_{1}\left(0,j_{1}\right)=w_{1}^{0}\left(C_{1}\left(j_{1}\right)\right)$,
$\mathbf{w}_{i}\left(0,j_{i-1},j_{i}\right)=w_{i}^{0}\left(C_{i-1}\left(j_{i-1}\right),C_{i}\left(j_{i}\right)\right)$
and $\mathbf{b}_{i}\left(0,j_{i}\right)=b_{i}^{0}\left(C_{i}\left(j_{i}\right)\right)$
for $j_{1}\in\left[n_{1}\right]$, $j_{i}\in\left[n_{i}\right]$,
$2\leq i\leq L$. With this initialization, we obtain the neural network's
trajectory $\mathbf{W}\left(k\right)$ for $k\in\mathbb{N}_{\geq0}$,
with the data $z\left(k\right)$ being generated independently of
$C_{i}\left(j_{i}\right)$'s and hence $\mathbf{W}\left(0\right)$.
\end{enumerate}
Hence we see that the connection is formalized on the basis of the
initialization, and in particular, the neuronal ensemble $\left(\Omega,P\right)$.
Note that $W\left(t\right)$ is a deterministic trajectory for $t\in\mathbb{R}_{\geq0}$
and is independent of $\left\{ n_{1},...,n_{L}\right\} $, whereas
$\mathbf{W}\left(k\right)$ is random for all $k\in\mathbb{N}_{\geq0}$
due to the randomness of $C_{i}\left(j_{i}\right)$ and the generation
of the training data $z\left(k\right)$. We define a measure of closeness
between $\mathbf{W}\left(\left\lfloor t/\epsilon\right\rfloor \right)$
and $W\left(t\right)$ for the whole interval $t\in\left[0,T\right]$:
\begin{align}
\mathscr{D}_{T}\left(W,\mathbf{W}\right)=\max\Bigg( & \max_{2\leq i\leq L}\bigg(\frac{1}{n_{i-1}n_{i}}\sum_{j_{i-1}=1}^{n_{i-1}}\sum_{j_{i}=1}^{n_{i}}\sup_{t\le T}\left|{\bf w}_{i}\left(\left\lfloor t/\epsilon\right\rfloor ,j_{i-1},j_{i}\right)-w_{i}\left(t,C_{i-1}\left(j_{i-1}\right),C_{i}\left(j_{i}\right)\right)\right|^{2}\bigg)^{1/2},\nonumber \\
 & \max_{2\leq i\leq L}\bigg(\frac{1}{n_{i}}\sum_{j_{i}=1}^{n_{i}}\sup_{t\le T}\left|{\bf b}_{i}\left(\left\lfloor t/\epsilon\right\rfloor ,j_{i}\right)-b_{i}\left(t,C_{i}\left(j_{i}\right)\right)\right|^{2}\bigg)^{1/2},\nonumber \\
 & \bigg(\frac{1}{n_{1}}\sum_{j_{1}=1}^{n_{1}}\sup_{t\le T}\left|{\bf w}_{1}\left(\left\lfloor t/\epsilon\right\rfloor ,j_{1}\right)-w_{1}\left(t,C_{1}\left(j_{1}\right)\right)\right|^{2}\bigg)^{1/2}\Bigg).\label{eq:dist_W}
\end{align}
Note that by definition, $\mathscr{D}_{T}\left(W,\mathbf{W}\right)$
is a random quantity due to the randomness of $\left\{ C_{i}\left(j_{i}\right)\right\} _{i\in\left[L\right]}$
and $\left\{ \mathbf{W}\left(\left\lfloor t/\epsilon\right\rfloor \right)\right\} _{t\in\left[0,T\right]}$.

The idea of the coupling procedure is closely related to the ``propagation
of chaos'' argument \cite{sznitman1991topics}. Here, instead of
playing the role of a proof technique, the coupling serves as a vehicle
to establish the connection between the neural network's trajectory
and the MF trajectory on the basis of the neuronal embedding.

\subsection{Main Theorem\label{subsec:Main-Theorem}}

Let us consider an assumption on the initialization:
\begin{assumption}[Initialization]
\label{assump:init}The functions $w_{i}^{0}$ and $b_{i}^{0}$ of
the neuronal embedding satisfy the conditions:
\begin{align*}
\max_{2\leq i\leq L}\sup_{m\geq1}\frac{1}{\sqrt{m}}\mathbb{E}\left[\left|w_{i}^{0}\left(C_{i-1},C_{i}\right)\right|^{m}\right]^{1/m} & \leq K,\\
\max_{2\leq i\leq L}\sup_{m\geq1}\frac{1}{\sqrt{m}}\mathbb{E}\left[\left|b_{i}^{0}\left(C_{i}\right)\right|^{m}\right]^{1/m} & \leq K,\\
\sup_{m\geq1}\frac{1}{\sqrt{m}}\mathbb{E}\left[\left|w_{1}^{0}\left(C_{1}\right)\right|^{m}\right]^{1/m} & \leq K.
\end{align*}
As such, following the coupling procedure, the initialization $W\left(0\right)$
of the MF ODEs satisfies $\interleave W\interleave_{0}\leq K<\infty$.
\end{assumption}

We are now ready to state the main theorem.
\begin{thm}
\label{thm:gradient descent coupling}Given a family $\mathsf{Init}$
of initialization laws and a tuple of positive integers $\left\{ n_{1},...,n_{L}\right\} $
with $n_{L}=1$, perform the coupling procedure as described in Section
\ref{subsec:Neuronal-Embedding}. Under Assumptions \ref{enu:Assump_lrSchedule}-\ref{enu:Assump_backward},
\ref{assump:init} and \ref{assump:neuronal-embedding}, there exist
constants $c_{1}\in\left(0,0.5\right)$ and $c_{2}\in\left(0,1/52\right)$,
such that for any $\delta>0$, $L\geq1$ and $T\in\epsilon\mathbb{N}_{\geq0}$,
the following holds. There exist $n^{*}=n^{*}\left(T,L,c_{1},c_{2}\right)\geq1$
and $\epsilon^{*}=\epsilon^{*}\left(T,L,c_{1},c_{2}\right)\leq1$
such that for any $n_{\min}\geq n^{*}$ and $\epsilon\in\left(0,\epsilon^{*}\right)$,
\begin{align*}
\mathbb{P}\bigg(\mathscr{D}_{T}\left(W,\mathbf{W}\right) & \geq K\left(n_{\min}^{-c_{1}}+\epsilon^{c_{1}}\right)\sqrt{\log\left(\frac{1}{\delta}n_{\max}^{2}+e\right)}\bigg)\leq2\delta+KLn_{\max}\exp\left(-Kn_{\min}^{c_{2}}\right).
\end{align*}
Here $n_{\min}=\min_{1\leq j\leq L-1}n_{j}$ and $n_{\max}=\max_{1\leq j\leq L}n_{j}$.
\end{thm}

Roughly speaking, with $n_{i}=\Theta\left(n\right)$ for $i\in\left[L-1\right]$
and $\epsilon\ll1/\log\left(n\right)$, we have $\mathbf{W}\left(\left\lfloor t/\epsilon\right\rfloor \right)\approx W\left(t\right)$
for all $t\in\left[0,T\right]$ and large $n$. We note that the exponents
$c_{1}$ and $c_{2}$ are independent of the terminal time $T$ and
the number of layers $L$. It is an interesting task to derive explicit
constant values for $c_{1}$ and $c_{2}$, which we have not done
given the complex dependency of these exponents on other hidden constants
in our current analysis.
\begin{rem}
Under the stronger assumption of boundedness of the initial weight
distributions at all except the first layer, in our work's previous
preprint, we show that a similar result to Theorem \ref{thm:gradient descent coupling}
holds with $c_{1}=0.5$. There an even stronger result is achieved,
in which we define $\mathscr{D}_{T}\left(W,\mathbf{W}\right)$ via
$L^{\infty}$ distance, instead of $L^{2}$ distance as done in Eq.
(\ref{eq:dist_W}). 
\end{rem}

The theorem gives a connection between $\mathbf{W}\left(\left\lfloor t/\epsilon\right\rfloor \right)$,
which involves finitely many neurons, and the MF limit $W\left(t\right)$,
whose description is independent of the number of neurons. It lends
a way to extract properties of the neural network in the many-neurons
limit.
\begin{cor}
\label{cor:gradient descent quality}Consider any test function $\psi:\mathbb{H}_{i}\to\mathbb{S}$
which is $K$-Lipschitz and $K$-bounded, i.e.
\[
\left|\psi\left(h\right)-\psi\left(h'\right)\right|\leq K\left|h-h'\right|,\qquad\left|\psi\left(h\right)\right|\leq K,
\]
where $\mathbb{S}$ is a separable Hilbert space. Under the same setting
as Theorem \ref{thm:gradient descent coupling}, for any $\delta>0$,
we have with probability at least $1-3\delta-KLn_{\max}\exp\left(-Kn_{\min}^{c_{2}}\right)$,
\[
\sup_{t\le T}\left|\frac{1}{n_{i}}\sum_{j_{i}=1}^{n_{i}}\mathbb{E}_{Z}\left[\psi\left({\bf H}_{i}\left(\left\lfloor t/\epsilon\right\rfloor ,X,j_{i}\right)\right)\right]-\mathbb{E}_{Z}\mathbb{E}_{C_{i}}\left[\psi\left(H_{i}\left(t,X,C_{i}\right)\right)\right]\right|=\tilde{O}\left(n_{\min}^{-c_{1}}+\epsilon^{c_{1}}\right),
\]
where $\tilde{O}$ hides the dependency on $T$, $L$ and $\delta$
as well as the logarithmic factors $\log n_{\max}$ and $\log\left(1/\epsilon\right)$.
Furthermore, for any test function $\psi:\mathbb{Y}\times\hat{\mathbb{Y}}\to\mathbb{S}$
which is $K$-Lipschitz in the second variable, uniformly in the first
variable,
\[
\sup_{t\leq T}\left|\mathbb{E}_{Z}\left[\psi\left(Y,\hat{\mathbf{y}}\left(\left\lfloor t/\epsilon\right\rfloor ,X\right)\right)\right]-\mathbb{E}_{Z}\left[\psi\left(Y,\hat{y}\left(t,X\right)\right)\right]\right|=\tilde{O}\left(n_{\min}^{-c_{1}}+\epsilon^{c_{1}}\right),
\]
with probability at least $1-2\delta-KLn_{\max}\exp\left(-Kn_{\min}^{c_{2}}\right)$.
\end{cor}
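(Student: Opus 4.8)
The plan is to reduce both claims to the forward-pass comparison between the finite network and its mean field limit under the coupling of Section~\ref{subsec:Neuronal-Embedding}, plus, for the first claim only, one additional layer of classical concentration over the i.i.d.\ neuron variables $C_{i}(j_{i})$. The key input is the fact -- proven as part of (and extractable from) the proof of Theorem~\ref{thm:gradient descent coupling}, and re-derivable directly from ``$\mathscr{D}_{T}(W,\mathbf{W})$ small'' by a telescoping-over-layers argument in the style of Lemma~\ref{lem:Lipschitz forward MF} (comparing $\mathbf{H}_{i}$ with $H_{i}$ in place of two copies of $H_{i}$), using Assumption~\ref{enu:Assump_forward}, Lemma~\ref{lem:a priori MF}, and at each layer a law-of-large-numbers step over the i.i.d.\ $C_{i-1}(j_{i-1})$ -- that on the same event of probability at least $1-2\delta-2\exp(-n_{1}^{1/(2+\mathsf{p})})$ on which $\mathscr{D}_{T}(W,\mathbf{W})$ is small, both
\[
\sup_{t\le T}\ \max_{i,\,j_{i}}\ \mathbb{E}_{Z}\bigl|\mathbf{H}_{i}(\lfloor t/\epsilon\rfloor,X,j_{i})-H_{i}(t,X,C_{i}(j_{i}))\bigr|\quad\text{and}\quad\sup_{t\le T}\ \mathbb{E}_{Z}\bigl|\hat{\mathbf{y}}(\lfloor t/\epsilon\rfloor,X)-\hat{y}(t,X)\bigr|
\]
are at most a quantity $\mathcal{E}$ of the same form as the bound of Theorem~\ref{thm:gradient descent coupling}, i.e.\ $\mathcal{E}=\tilde{O}(n_{\min}^{-1/2}+\sqrt{\epsilon})$.

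\emph{First claim.} Fix $i$ and $t$ and split the displayed difference as $\mathrm{I}(t)+\mathrm{II}(t)$, where
\[
\mathrm{I}(t)=\frac{1}{n_{i}}\sum_{j_{i}=1}^{n_{i}}\mathbb{E}_{Z}\bigl[\psi(\mathbf{H}_{i}(\lfloor t/\epsilon\rfloor,X,j_{i}))-\psi(H_{i}(t,X,C_{i}(j_{i})))\bigr],\qquad\mathrm{II}(t)=\frac{1}{n_{i}}\sum_{j_{i}=1}^{n_{i}}g_{j_{i}}(t)-\mathbb{E}_{C_{i}}\bigl[g_{C_{i}}(t)\bigr],
\]
with $g_{c_{i}}(t):=\mathbb{E}_{Z}[\psi(H_{i}(t,X,c_{i}))]$. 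Since $\psi$ is $K$-Lipschitz, $|\mathrm{I}(t)|\le\frac{K}{n_{i}}\sum_{j_{i}}\mathbb{E}_{Z}|\mathbf{H}_{i}(\lfloor t/\epsilon\rfloor,X,j_{i})-H_{i}(t,X,C_{i}(j_{i}))|$, so $\sup_{t\le T}|\mathrm{I}(t)|\le K\mathcal{E}$ on the good event. For $\mathrm{II}(t)$, the $g_{j_{i}}(t)$ are i.i.d.\ across $j_{i}$ (the $C_{i}(j_{i})$ being i.i.d.\ and independent of $Z$) with $|g_{j_{i}}(t)|\le K$ since $\psi$ is $K$-bounded, so a bounded-differences (Hoeffding/McDiarmid) inequality for $\mathbb{S}$-valued i.i.d.\ averages gives $|\mathrm{II}(t)|\le K n_{i}^{-1/2}\sqrt{\log(1/\delta')}$ with probability $\ge1-\delta'$ for each fixed $t$ (and $\mathrm{II}\equiv0$ when $i=L$, as $n_{L}=1$, $\Omega_{L}=\{1\}$). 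To obtain the supremum over $t$, note $t\mapsto g_{c_{i}}(t)$ is Lipschitz with a constant $L_{g}\le\exp(\bar{K}(1+\|W\|_{0}^{\bar{K}})(1+T^{\bar{K}}))$ (from the MF ODEs, Lemma~\ref{lem:a priori MF}, and the Lipschitz bounds on $\psi$ and the $\phi$'s), hence so is $\mathrm{II}$; applying the pointwise bound on a net of $[0,T]$ of mesh $\lesssim(n_{i}^{1/2}L_{g})^{-1}$ and union-bounding over its $O(n_{i}^{1/2}L_{g}T)$ points yields $\sup_{t\le T}|\mathrm{II}(t)|=\tilde{O}(n_{i}^{-1/2})$ with probability $\ge1-\delta$, the net cardinality entering only through $\log$-factors absorbed into $\tilde{O}$. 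As $n_{i}\ge n_{\min}$ for $i\le L-1$, combining $\mathrm{I}$ and $\mathrm{II}$ (and splitting the probability budget) gives the first claim.

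\emph{Second claim.} Here there is no averaging over output neurons ($n_{L}=1$), so no concentration over $C_{L}$ is needed: by $K$-Lipschitzness of $\psi$ in the second argument, uniformly in the first,
\[
\bigl|\mathbb{E}_{Z}[\psi(Y,\hat{\mathbf{y}}(\lfloor t/\epsilon\rfloor,X))]-\mathbb{E}_{Z}[\psi(Y,\hat{y}(t,X))]\bigr|\le K\,\mathbb{E}_{Z}\bigl|\hat{\mathbf{y}}(\lfloor t/\epsilon\rfloor,X)-\hat{y}(t,X)\bigr|,
\]
and the right-hand side is $\le K\mathcal{E}=\tilde{O}(n_{\min}^{-1/2}+\sqrt{\epsilon})$ uniformly over $t\le T$ on the good event, by the forward comparison recalled above.

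\emph{Main obstacle.} The one substantive ingredient is this forward-pass comparison with the stated failure probability and rate. Propagating it across layers requires, at each layer, concentration of $\frac{1}{n_{i-1}}\sum_{j_{i-1}}\phi_{i}(\cdots,\mathbf{H}_{i-1}(\cdots,j_{i-1}))$ around its $C_{i-1}$-expectation; this is delicate at the first layer, where $\mathbf{H}_{1}$ is only $L^{\mathsf{p}}$-controlled (the first-layer weights having possibly unbounded support), so one must truncate and apply a Bernstein-type bound -- the source of the $\exp(-n_{1}^{1/(2+\mathsf{p})})$ term -- while the polynomial growth in Assumption~\ref{enu:Assump_forward} compounds multiplicatively through the $L$ layers to produce the $\exp(\bar{K}(1+\|W\|_{0}^{\bar{K}})(1+T^{\bar{K}}))$ prefactor. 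However, all of this is precisely the machinery behind Theorem~\ref{thm:gradient descent coupling}, which we are entitled to invoke; relative to that theorem, the corollary adds only the extra i.i.d.\ average over $j_{i}$ (Hoeffding) and the pointwise-to-uniform-in-$t$ upgrade (a Lipschitz-in-time net), both routine.
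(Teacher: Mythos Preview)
Your proposal is correct and follows essentially the same approach as the paper's proof: invoke the forward-pass comparison established inside the proof of Theorem~\ref{thm:gradient descent coupling} to control the Lipschitz term, and use Hoeffding-type concentration over the i.i.d.\ $C_{i}(j_{i})$ for the empirical-versus-expectation term. The only cosmetic difference is that the paper splits your $\mathrm{I}(t)$ further into a comparison at the aligned time $\lfloor t/\epsilon\rfloor\epsilon$ plus a separate time-discretization term $Q_{3}$ (via Lemma~\ref{lem:Lipschitz of paths}); as a consequence their concentration term lives on the discrete grid $\{0,\epsilon,2\epsilon,\dots\}$ and the sup over $t$ is handled by a union bound over those $O(T/\epsilon)$ points (this is the source of the $\log(1/\epsilon)$ factor), rather than by your Lipschitz-in-$t$ net.
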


As per Remark \ref{rem:Init_family}, we note that the statements
in Theorem \ref{thm:gradient descent coupling} and Corollary \ref{cor:gradient descent quality}
have explicit quantitative dependence on the hidden widths $n_{i}$,
and hence one may consider $\mathsf{Init}$ that contains only one
initialization law.

We observe that while the MF trajectory $W\left(t\right)$ is defined
as per the choice of the neuronal embedding $\left(\Omega,P,\left\{ w_{i}^{0}\right\} _{i\in\left[L\right]},\left\{ b_{i}^{0}\right\} _{2\leq i\leq L}\right)$,
which may not be unique. On the other hand, the neural network's trajectory
$\mathbf{W}\left(t\right)$ depends on the randomization of the initial
parameters $\mathbf{W}\left(0\right)$ according to an initialization
law from the family $\mathsf{Init}$ (as well as the data $z\left(t\right)$)
and hence is independent of this choice. Another corollary of Theorem
\ref{thm:gradient descent coupling} is that given the same family
$\mathsf{Init}$, the MF trajectory is insensitive to the choice of
the neuronal embedding of $\mathsf{Init}$.
\begin{cor}
\label{cor:MF_insensitivity}Consider a family $\mathsf{Init}$ of
initialization laws, such that it contains a sequence of indices $\left\{ \left\{ n_{1}\left(n\right),...,n_{L}\left(n\right)\right\} :\;n\in\mathbb{N}\right\} $
in which $n_{\min}\left(n\right)\to\infty$ and $n_{\min}^{-c}\left(n\right)\log n_{\max}\left(n\right)\to0$
as $n\to\infty$ for any $c>0$, with $n_{\min}\left(n\right)=\min_{1\leq i\le L-1}n_{i}\left(n\right)$
and $n_{\max}\left(n\right)=\max_{1\leq i\le L-1}n_{i}\left(n\right)$.

Let $W\left(t\right)$ and $\hat{W}\left(t\right)$ be two MF trajectories
associated with two choices of neuronal embeddings of $\mathsf{Init}$,
$\left(\Omega,P,\left\{ w_{i}^{0}\right\} _{i\in\left[L\right]},\left\{ b_{i}^{0}\right\} _{2\leq i\leq L}\right)$
and $\left(\hat{\Omega},\hat{P},\left\{ \hat{w}_{i}^{0}\right\} _{i\in\left[L\right]},\left\{ \hat{b}_{i}^{0}\right\} _{2\leq i\leq L}\right)$
respectively. Suppose that both neuronal embeddings satisfy Assumptions
\ref{assump:neuronal-embedding} and \ref{assump:init}. Let us also
assume Assumptions \ref{enu:Assump_lrSchedule}-\ref{enu:Assump_backward}.

For any $T\in\mathbb{R}_{\geq0}$ and any set of positive integers
$\left\{ n_{1},...,n_{L}\right\} $ with $n_{L}=1$, if we independently
sample $U_{i}\left(j_{i}\right)\sim P_{i}$ and $\hat{U}_{i}\left(j_{i}\right)\sim\hat{P}_{i}$
for $j_{i}\in\left[n_{i}\right]$ and $i\in\left[L\right]$, then
${\rm Law}\left({\cal W}\left(n_{1},...,n_{L},T\right)\right)={\rm Law}(\hat{{\cal W}}\left(n_{1},...,n_{L},T\right))$,
where ${\cal W}\left(n_{1},...,n_{L},T\right)$ denotes the following
collection on $W\left(t\right)$:
\begin{align*}
{\cal W}\left(n_{1},...,n_{L},T\right) & =\Big\{ w_{1}\left(t,U_{1}\left(j_{1}\right)\right),\;w_{i}\left(t,U_{i-1}\left(j_{i-1}\right),U_{i}\left(j_{i}\right)\right),\;b_{i}\left(t,U_{i}\left(j_{i}\right)\right):\\
 & \qquad j_{i}\in\left[n_{i}\right],\;i\in\left[L\right],\;t\in\left[0,T\right]\Big\},
\end{align*}
and $\hat{{\cal W}}\left(n_{1},...,n_{L},T\right)$ denotes a similar
collection on $\hat{W}\left(t\right)$.
\end{cor}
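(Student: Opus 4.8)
The plan is to route the comparison through the finite-width neural network, whose law under the coupling procedure is governed by the initialization family $\mathsf{Init}$ and the data stream alone, and is therefore the same for both embeddings; Theorem~\ref{thm:gradient descent coupling} then transfers this insensitivity to the MF trajectory.

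Fix $T$ and the tuple $\{n_1,\dots,n_L\}$. Since $\min_{1\le i\le L-1}n_i(n)\to\infty$ and $n_L(n)=1=n_L$, for all large $n$ the index $\{N_1,\dots,N_L\}:=\{n_1(n),\dots,n_L(n)\}$ of $\mathsf{Init}$, with law $\rho^{(n)}$, satisfies $N_i\ge n_i$ for every $i$. Run the coupling procedure of Section~\ref{subsec:Neuronal-Embedding} for both embeddings with this width, the same learning rate $\epsilon=\epsilon(n)$ with $T/\epsilon(n)\in\mathbb N$, and the same i.i.d.\ data stream $z(0),z(1),\dots\sim\mathcal P$; fix once and for all, for each $i$, an infinite i.i.d.\ sequence of $P_i$-samples $C_i(1),C_i(2),\dots$ (resp.\ $\hat P_i$-samples $\hat C_i(1),\hat C_i(2),\dots$) and use its first $N_i$ terms to build the width-$\{N_i\}$ neural networks $\mathbf W^{(n)},\hat{\mathbf W}^{(n)}$ and its first $n_i$ terms to build $\mathcal W(n_1,\dots,n_L,T),\hat{\mathcal W}(n_1,\dots,n_L,T)$; the latter two are then single random elements not depending on $n$. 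By part~2 of Definition~\ref{def:neuronal_embedding}, $\mathrm{Law}(\mathbf W^{(n)}(0))=\rho^{(n)}=\mathrm{Law}(\hat{\mathbf W}^{(n)}(0))$, and since the discrete-time update is a fixed deterministic function of the initialization and of the embedding-independent data stream, $\{\mathbf W^{(n)}(s)\}_{s\ge0}$ and $\{\hat{\mathbf W}^{(n)}(s)\}_{s\ge0}$ have the same law; restricting to the first $n_i$ neurons at each layer and to $s=\lfloor t/\epsilon\rfloor$, $t\in[0,T]$, defines sub-collections $\mathcal V^{(n)},\hat{\mathcal V}^{(n)}$ of the same shape as $\mathcal W(n_1,\dots,n_L,T)$, with $\mathrm{Law}(\mathcal V^{(n)})=\mathrm{Law}(\hat{\mathcal V}^{(n)})$.

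Now choose $\epsilon(n)\downarrow0$ and $\delta(n)\downarrow0$ slowly enough (passing to a subsequence of $n$ if necessary) that both the right-hand side of Theorem~\ref{thm:gradient descent coupling} and the failure probability $2\delta(n)+2\exp(-n_1(n)^{1/(2+\mathsf p)})$ tend to $0$; this is possible because $N_{\min}\to\infty$, $n_1(n)\to\infty$, and the prefactor $\exp(\bar K(1+\|W\|_0^{\bar K})(1+T^{\bar K}))$ is bounded uniformly in $n$ by virtue of the uniform bound $\|W\|_0\le K$ (and $\|\hat W\|_0\le K$) in part~2 of Definition~\ref{def:neuronal_embedding}. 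Then $\mathscr D_T(W,\mathbf W^{(n)})\to0$ and $\mathscr D_T(\hat W,\hat{\mathbf W}^{(n)})\to0$ in probability. Because the coupling procedure uses the very same samples $C_i(j_i)$ to define the MF parameters and the neural-network initialization, $\mathcal W(n_1,\dots,n_L,T)$ and $\mathcal V^{(n)}$ live on a common probability space with supremum distance at most $\mathscr D_T(W,\mathbf W^{(n)})$, and similarly $\hat{\mathcal W}$ and $\hat{\mathcal V}^{(n)}$. All these are continuous (indeed $C^1$) $[0,T]$-indexed trajectories valued in a finite product of separable Hilbert spaces, hence random elements of a Polish space.

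For any bounded Lipschitz functional $F$ on that Polish space, boundedness and the Lipschitz property together with $\mathscr D_T(W,\mathbf W^{(n)})\to0$ in probability give $\mathbb E[F(\mathcal W)]-\mathbb E[F(\mathcal V^{(n)})]\to0$ by bounded convergence, and likewise $\mathbb E[F(\hat{\mathcal W})]-\mathbb E[F(\hat{\mathcal V}^{(n)})]\to0$; combined with $\mathbb E[F(\mathcal V^{(n)})]=\mathbb E[F(\hat{\mathcal V}^{(n)})]$ and $n\to\infty$ this yields $\mathbb E[F(\mathcal W)]=\mathbb E[F(\hat{\mathcal W})]$, and since $F$ is arbitrary the two laws coincide. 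The step needing the most care is organizational rather than analytic: reducing from the arbitrary (and possibly small) tuple $\{n_1,\dots,n_L\}$ to the large widths $\{N_1,\dots,N_L\}$ by i.i.d.\ sub-sampling, and confirming that the two neural networks have exactly the same law because their common initialization law $\rho^{(n)}$ is dictated by $\mathsf{Init}$ alone, independently of the chosen neuronal embedding.
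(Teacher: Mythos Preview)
Your proposal is correct and follows essentially the same strategy as the paper: route the comparison through the finite-width network via Theorem~\ref{thm:gradient descent coupling}, using that both embeddings produce the same initialization law $\rho^{(n)}\in\mathsf{Init}$ at width $\{N_1,\dots,N_L\}$. The only cosmetic difference is in the closing step---the paper asserts a coupling under which $\mathcal W=\hat{\mathcal W}$ almost surely, whereas you argue equality in law directly via bounded Lipschitz test functionals; one minor slip is that the neural-network sub-collections $\mathcal V^{(n)},\hat{\mathcal V}^{(n)}$ are piecewise constant in $t$ rather than $C^1$, but this is harmless since your argument only uses sup-norm proximity and can be run on finite-dimensional marginals.
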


In the case $L=2$, by looking at the induced distribution of $\left(w_{1}\left(t,C_{1}\right),w_{2}\left(t,C_{1},1\right)\right)$
over $C_{1}\sim P_{1}$, we immediately recover the distributional
equation in \cite{mei2018mean} describing the MF limit.
\begin{cor}
\label{cor:MF-twolayers}Assume the same setting as Theorem \ref{thm:gradient descent coupling},
and let us consider $L=2$. For simplicity, let us disregard the bias
of the second layer by considering $\xi_{2}^{\mathbf{b}}\left(\cdot\right)=0$
and $b_{2}\left(0,\cdot\right)=0$. Assume $\mathbb{W}_{1}=\mathbb{R}^{d_{1}}$,
$\mathbb{W}_{2}=\mathbb{R}^{d_{2}}$ for some integers $d_{1},d_{2}>0$.
Let $\rho_{t}$ denote the law of $\left(w_{1}\left(t,C_{1}\right),w_{2}\left(t,C_{1},1\right)\right)$
over $C_{1}\sim P_{1}$. Then $\rho_{t}$ satisfies the following
distributional partial differential equation in the weak sense:
\[
\partial_{t}\rho_{t}\left(u_{1},u_{2}\right)={\rm div}\left[\rho_{t}\left(u_{1},u_{2}\right)G\left(u_{1},u_{2};\rho_{t}\right)\right],
\]
in which
\[
G\left(u_{1},u_{2};\rho_{t}\right)=\left[\begin{array}{c}
\xi_{1}^{\mathbf{w}}\left(t\right)\mathbb{E}_{Z}\left[\underline{\Delta}_{1}^{w}\left(u_{1},u_{2};Z,\rho_{t}\right)\right]\\
\xi_{2}^{\mathbf{w}}\left(t\right)\mathbb{E}_{Z}\left[\underline{\Delta}_{2}^{w}\left(u_{1},u_{2};Z,\rho_{t}\right)\right]
\end{array}\right],
\]
and we define
\begin{align*}
\underline{H}_{2}\left(x;\rho_{t}\right) & =\int\phi_{2}\left(u_{2},0,\phi_{1}\left(u_{1},x\right)\right)d\rho_{t}\left(u_{1},u_{2}\right),\\
\hat{\underline{y}}\left(x;\rho_{t}\right) & =\phi_{3}\left(H_{2}\left(x;\rho_{t}\right)\right),\\
\underline{\Delta}_{2}^{H}\left(z;\rho_{t}\right) & =\sigma_{2}^{\mathbf{H}}\left(y,\hat{\underline{y}}\left(x;\rho_{t}\right),\underline{H}_{2}\left(x;\rho_{t}\right)\right),\\
\underline{\Delta}_{2}^{w}\left(u_{1},u_{2};z,\rho_{t}\right) & =\sigma_{2}^{\mathbf{w}}\left(\underline{\Delta}_{2}^{H}\left(z;\rho_{t}\right),u_{2},0,\underline{H}_{2}\left(x;\rho_{t}\right),\phi_{1}\left(u_{1},x\right)\right),\\
\underline{\Delta}_{1}^{H}\left(u_{1},u_{2};z,\rho_{t}\right) & =\sigma_{1}^{\mathbf{H}}\left(\underline{\Delta}_{2}^{H}\left(z;\rho_{t}\right),u_{2},0,\underline{H}_{2}\left(x;\rho_{t}\right),\phi_{1}\left(u_{1},x\right)\right),\\
\underline{\Delta}_{1}^{w}\left(u_{1},u_{2};z,\rho_{t}\right) & =\sigma_{1}^{\mathbf{w}}\left(\underline{\Delta}_{1}^{H}\left(u_{1},u_{2};z,\rho_{t}\right),u_{1},x\right).
\end{align*}
In particular, for any $\delta>0$ and any $K$-Lipschitz and $K$-bounded
test function $\psi:\mathbb{W}_{1}\times\mathbb{W}_{2}\to\mathbb{S}$,
where $\mathbb{S}$ is a separable Hilbert space,
\[
\sup_{t\le T}\left|\frac{1}{n_{1}}\sum_{j_{1}=1}^{n_{1}}\psi\left(\mathbf{w}_{1}\left(\left\lfloor t/\epsilon\right\rfloor ,j_{1}\right),\mathbf{w}_{2}\left(\left\lfloor t/\epsilon\right\rfloor ,j_{1},1\right)\right)-\int\psi\left(u_{1},u_{2}\right)d\rho_{t}\left(u_{1},u_{2}\right)\right|=\tilde{O}\left(n_{1}^{-c_{1}}+\epsilon^{c_{1}}\right).
\]
with probability at least $1-3\delta-Kn_{1}\exp\left(-Kn_{1}^{c_{2}}\right)$,
where $\tilde{O}$ hides the dependency on $T$ and $\delta$ as well
as the logarithmic factors $\log n_{1}$ and $\log\left(1/\epsilon\right)$.
Similarly, for any test function $\psi:\mathbb{Y}\times\hat{\mathbb{Y}}\to\mathbb{S}$
which is $K$-Lipschitz in the second variable, uniformly in the first
variable,
\[
\sup_{t\le T}\left|\mathbb{E}_{Z}\left[\psi\left(Y,\hat{{\bf y}}\left(\left\lfloor t/\epsilon\right\rfloor ,X\right)\right)\right]-\mathbb{E}_{Z}\left[\psi\left(Y,\hat{\underline{y}}\left(X;\rho_{t}\right)\right)\right]\right|=\tilde{O}\left(n_{1}^{-c_{1}}+\epsilon^{c_{1}}\right),
\]
with probability at least $1-3\delta-Kn_{1}\exp\left(-Kn_{1}^{c_{2}}\right)$.
\end{cor}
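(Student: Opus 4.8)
The plan is to specialize the general MF ODEs to $L=2$, deduce the distributional PDE from the resulting closed system, and then read off the two quantitative estimates from Theorem~\ref{thm:gradient descent coupling} and Corollary~\ref{cor:gradient descent quality}.

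\emph{Deriving the PDE.} Theorem~\ref{thm:existence ODE} applies since Definition~\ref{def:neuronal_embedding} forces $\left\Vert W\right\Vert_0<\infty$, so the MF trajectory $W(t)$ and hence the curve $\rho_t$ are well defined. When $L=2$, $b_2\equiv 0$ and $\Omega_2=\{1\}$, every MF quantity collapses to a functional of the joint law $\rho_t=\mathrm{Law}\left(w_1(t,C_1),w_2(t,C_1,1)\right)$: substituting $H_1(t,x,c_1)=\phi_1\left(w_1(t,c_1),x\right)$ into the forward recursion gives $H_2(t,x,1)=\underline{H}_2(x;\rho_t)$ and $\hat y(t,x)=\hat{\underline y}(x;\rho_t)$, and the backward recursion then gives $\Delta_2^H(t,z,1)=\underline\Delta_2^H(z;\rho_t)$ and $\Delta_i^w(t,z,c_1)=\underline\Delta_i^w\left(w_1(t,c_1),w_2(t,c_1,1);z,\rho_t\right)$ for $i=1,2$ (the $\mathbb{E}_{C_2}$ appearing in $\Delta_1^H$ is trivial). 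Consequently the MF ODEs assert precisely that, for $P_1$-a.e.\ $c_1$, the pair $u(t):=\left(w_1(t,c_1),w_2(t,c_1,1)\right)$ solves the closed, law-dependent ODE $\dot u(t)=-G\left(u(t);\rho_t\right)$, so $\rho_t$ is the pushforward of $\rho_0$ under this flow. To obtain the weak form, fix $f\in C_c^1\left(\mathbb{W}_1\times\mathbb{W}_2\right)$ and write $\int f\,d\rho_t=\mathbb{E}_{C_1}\left[f\left(w_1(t,C_1),w_2(t,C_1,1)\right)\right]$. For each $c_1$ the chain rule gives $\frac{d}{dt}f(u(t))=-\left\langle\nabla f(u(t)),G(u(t);\rho_t)\right\rangle$; since $\nabla f$ is bounded and, by Lemma~\ref{lem:a priori MF} applied to the fixed point $W$ of $F$ (which lies in $R(F^{(k)})$ for every $k$), the updates $\Delta_i^w\left(z,c_1;W(s)\right)$ are bounded uniformly in $c_1,z$ and locally uniformly in $s$, the difference quotients of $t\mapsto f(u(t))$ are dominated uniformly in $C_1$, and dominated convergence yields $\frac{d}{dt}\int f\,d\rho_t=-\int\left\langle\nabla f,G(\cdot;\rho_t)\right\rangle d\rho_t$. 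This is the claimed equation $\partial_t\rho_t=\mathrm{div}\left[\rho_t\,G(\cdot;\rho_t)\right]$ in the weak sense; the continuity in $t$ of the right-hand side needed for the time-integrated formulation follows from Assumption~\ref{enu:Assump_lrSchedule} and the Lipschitz estimates of Lemma~\ref{lem:Lipschitz forward MF} and Lemma~\ref{lem:Lipschitz backward MF} together with the $t$-continuity of $W(\cdot)$.

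\emph{The quantitative bounds.} For the statement on $\psi(\mathbf{w}_1,\mathbf{w}_2)$, decompose the difference into $(\mathrm{I})$, the error from replacing each $\left(\mathbf{w}_1(\lfloor t/\epsilon\rfloor,j_1),\mathbf{w}_2(\lfloor t/\epsilon\rfloor,j_1,1)\right)$ by $\left(w_1(t,C_1(j_1)),w_2(t,C_1(j_1),1)\right)$, plus $(\mathrm{II})=\frac{1}{n_1}\sum_{j_1}\psi\left(w_1(t,C_1(j_1)),w_2(t,C_1(j_1),1)\right)-\mathbb{E}_{C_1}\left[\psi\left(w_1(t,C_1),w_2(t,C_1,1)\right)\right]$. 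Since $\psi$ is $K$-Lipschitz, $\sup_{t\le T}\left|(\mathrm{I})\right|\le 2K\,\mathscr{D}_T(W,\mathbf{W})$, which is $\tilde O\left(n_1^{-1/2}+\sqrt{\epsilon}\right)$ with probability at least $1-2\delta-2\exp\left(-n_1^{1/(2+\mathsf{p})}\right)$ by Theorem~\ref{thm:gradient descent coupling}, noting that $n_{\min}=n_1$ when $L=2$. For $(\mathrm{II})$, the summands are i.i.d.\ over $j_1$, $\mathbb{S}$-valued and $K$-bounded, so a Hilbert-space concentration inequality for bounded i.i.d.\ summands gives $\left|(\mathrm{II})\right|=\tilde O\left(n_1^{-1/2}\right)$ for fixed $t$ with high probability; the supremum over $t\le T$ is recovered by a union bound over a time grid of cardinality polynomial in $T$ and $1/\epsilon$, which is legitimate because both $t\mapsto\int\psi\,d\rho_t$ and $t\mapsto\frac{1}{n_1}\sum_{j_1}\psi\left(w_1(t,C_1(j_1)),w_2(t,C_1(j_1),1)\right)$ are Lipschitz in $t$ (the velocity field is bounded by Lemma~\ref{lem:a priori MF}). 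This is exactly the argument underlying Corollary~\ref{cor:gradient descent quality}, which may simply be invoked. Finally, the statement on $\psi(Y,\hat{\mathbf{y}})$ is precisely the second display of Corollary~\ref{cor:gradient descent quality} combined with the identification $\hat y(t,X)=\hat{\underline y}(X;\rho_t)$ established above.

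\emph{Main obstacle.} Everything except the passage to the weak PDE is bookkeeping or a direct appeal to earlier results; the one delicate point is justifying the interchange of $\frac{d}{dt}$ and $\mathbb{E}_{C_1}$ when $w_1(t,C_1)$ is controlled only in $L^{\mathsf{p}}$, which is handled by combining the boundedness of $\nabla f$ for $f\in C_c^1$ with the uniform-in-$c_1$ a priori bounds on the update $\Delta_1^w$ along the MF trajectory furnished by Lemma~\ref{lem:a priori MF}.
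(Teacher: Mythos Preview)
Your proposal is correct and follows essentially the same approach as the paper: identify the $L=2$ MF quantities with their $\rho_t$-dependent counterparts by inspection, differentiate $\int\psi\,d\rho_t=\mathbb{E}_{C_1}[\psi(w_1(t,C_1),w_2(t,C_1,1))]$ via the chain rule and the MF ODEs to obtain the weak PDE, and then reduce the two quantitative statements to Theorem~\ref{thm:gradient descent coupling} and the argument of Corollary~\ref{cor:gradient descent quality}. The paper's proof is terser---it does not spell out the dominated-convergence justification for swapping $\tfrac{d}{dt}$ and $\mathbb{E}_{C_1}$, nor the $(\mathrm{I})+(\mathrm{II})$ decomposition, simply writing that the remaining claims ``follow in a similar vein to the proof of Corollary~\ref{cor:gradient descent quality}''---but your more explicit treatment of both points is exactly what underlies those steps.
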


\subsection{Proof of Theorem \ref{thm:gradient descent coupling}}

We construct an auxiliary trajectory, which we call the \textit{particle
ODEs}: 
\begin{align*}
\frac{\partial}{\partial t}\tilde{w}_{1}\left(t,j_{1}\right) & =-\xi_{1}^{\mathbf{w}}\left(t\right)\mathbb{E}_{Z}\left[\Delta_{1}^{\mathbf{w}}\left(Z,j_{1};\tilde{W}\left(t\right)\right)\right],\qquad\forall j_{1}\in\left[n_{1}\right],\\
\frac{\partial}{\partial t}\tilde{w}_{i}\left(t,j_{i-1},j_{i}\right) & =-\xi_{i}^{\mathbf{w}}\left(t\right)\mathbb{E}_{Z}\left[\Delta_{i}^{\mathbf{w}}\left(Z,j_{i-1},j_{i};\tilde{W}\left(t\right)\right)\right],\\
\frac{\partial}{\partial t}\tilde{b}_{i}\left(t,j_{i}\right) & =-\xi_{i}^{\mathbf{b}}\left(t\right)\mathbb{E}_{Z}\left[\Delta_{i}^{\mathbf{b}}\left(Z,j_{i};\tilde{W}\left(t\right)\right)\right],\qquad\forall j_{i-1}\in\left[n_{i-1}\right],\;j_{i}\in\left[n_{i}\right],\;i=2,...,L,
\end{align*}
in which $\tilde{W}\left(t\right)=\left\{ \tilde{w}_{1}\left(t,\cdot\right),\tilde{w}_{i}\left(t,\cdot,\cdot\right),\tilde{b}_{i}\left(t,\cdot\right),\;\;i=2,...,L\right\} $,
and $t\in\mathbb{R}_{\geq0}$. We specify the initialization $\tilde{W}\left(0\right)$:
$\tilde{w}_{1}\left(0,j_{1}\right)=w_{1}^{0}\left(C_{1}\left(j_{1}\right)\right)$,
$\tilde{w}_{i}\left(0,j_{i-1},j_{i}\right)=w_{i}^{0}\left(C_{i-1}\left(j_{i-1}\right),C_{i}\left(j_{i}\right)\right)$
and $\tilde{b}_{i}\left(0,j_{i}\right)=b_{i}^{0}\left(C_{i}\left(j_{i}\right)\right)$.
That is, it shares the same initialization with the neural network
one $\mathbf{W}\left(0\right)$, and hence is coupled with the neural
network and the MF ODEs. Roughly speaking, the particle ODEs are continuous-time
trajectories of finitely many neurons, averaged over the data distribution.
We note that $\tilde{W}\left(t\right)$ is random for all $t\in\mathbb{R}_{\geq0}$
due to the randomness of $C_{i}\left(j_{i}\right)$'s.

The existence and uniqueness of the solution to the particle ODEs
follows from the same proof as in Theorem \ref{thm:existence ODE},
which we shall not repeat here\footnote{On a more technical note, we can view the particle ODEs as a new system
of MF ODEs whose neuronal ensemble $\left(\Omega_{{\rm new}},P_{{\rm new}}\right)=\prod_{i=1}^{L}\left(\Omega_{i,{\rm new}},P_{i,{\rm new}}\right)$
takes the following specific form: $\Omega_{i,{\rm new}}=\left\{ C_{i}\left(1\right),...,C_{i}\left(n_{i}\right)\right\} $
and $P_{i,{\rm new}}$ is a uniform probability measure on $\Omega_{i,{\rm new}}$.
In light of this view, the existence and uniqueness of the solution
to the particle ODEs follows from Theorem \ref{thm:existence ODE}.}. We equip $\tilde{W}\left(t\right)$ with the norms:
\begin{align*}
\interleave\tilde{w}_{i}\interleave_{t} & =\bigg(\frac{1}{n_{i-1}n_{i}}\sum_{j_{i-1}=1}^{n_{i-1}}\sum_{j_{i}=1}^{n_{i}}\sup_{s\leq t}\left|\tilde{w}_{i}\left(s,j_{i-1},j_{i}\right)\right|^{50}\bigg)^{1/50},\\
\interleave\tilde{b}_{i}\interleave_{t} & =\bigg(\frac{1}{n_{i}}\sum_{j_{i}=1}^{n_{i}}\sup_{s\leq t}\left|\tilde{b}_{i}\left(s,j_{i}\right)\right|^{50}\bigg)^{1/50},\qquad i=2,...,L,\\
\interleave\tilde{w}_{1}\interleave_{t} & =\Big(\frac{1}{n_{1}}\sum_{j_{1}=1}^{n_{1}}\sup_{s\leq t}\left|\tilde{w}_{1}\left(s,j_{1}\right)\right|^{50}\Big)^{1/50}.
\end{align*}
as well as
\[
\interleave\tilde{W}\interleave_{t}=\max\left(\max_{1\leq i\leq L}\interleave\tilde{w}_{i}\interleave_{t},\;\max_{2\leq i\leq L}\interleave\tilde{b}_{i}\interleave_{t}\right).
\]
One can also define the measures $\mathscr{D}_{T}\left(W,\tilde{W}\right)$
and $\mathscr{D}_{T}\left(\tilde{W},\mathbf{W}\right)$ similar to
Eq. (\ref{eq:dist_W}): 
\begin{align*}
\mathscr{D}_{T}\left(W,\tilde{W}\right)=\max\Bigg( & \max_{2\leq i\leq L}\bigg(\frac{1}{n_{i-1}n_{i}}\sum_{j_{i-1}=1}^{n_{i-1}}\sum_{j_{i}=1}^{n_{i}}\left|w_{i}\left(t,C_{i-1}\left(j_{i-1}\right),C_{i}\left(j_{i}\right)\right)-\tilde{w}_{i}\left(t,j_{i-1},j_{i}\right)\right|^{2}\bigg)^{1/2},\\
 & \max_{2\leq i\leq L}\bigg(\frac{1}{n_{i}}\sum_{j_{i}=1}^{n_{i}}\sup_{t\le T}\left|b_{i}\left(t,C_{i}\left(j_{i}\right)\right)-\tilde{b}_{i}\left(t,j_{i}\right)\right|^{2}\bigg)^{1/2},\\
 & \bigg(\frac{1}{n_{1}}\sum_{j_{1}=1}^{n_{1}}\sup_{t\le T}\left|w_{1}\left(t,C_{1}\left(j_{1}\right)\right)-\tilde{w}_{1}\left(t,j_{1}\right)\right|^{2}\bigg)^{1/2}\Bigg),\\
\mathscr{D}_{T}\left(\tilde{W},\mathbf{W}\right)=\max\Bigg( & \max_{2\leq i\leq L}\bigg(\frac{1}{n_{i-1}n_{i}}\sum_{j_{i-1}=1}^{n_{i-1}}\sum_{j_{i}=1}^{n_{i}}\left|{\bf w}_{i}\left(\left\lfloor t/\epsilon\right\rfloor ,j_{i-1},j_{i}\right)-\tilde{w}_{i}\left(t,j_{i-1},j_{i}\right)\right|^{2}\bigg)^{1/2},\\
 & \max_{2\leq i\leq L}\bigg(\frac{1}{n_{i}}\sum_{j_{i}=1}^{n_{i}}\sup_{t\le T}\left|{\bf b}_{i}\left(\left\lfloor t/\epsilon\right\rfloor ,j_{i}\right)-\tilde{b}_{i}\left(t,j_{i}\right)\right|^{2}\bigg)^{1/2},\\
 & \bigg(\frac{1}{n_{1}}\sum_{j_{1}=1}^{n_{1}}\sup_{t\le T}\left|{\bf w}_{1}\left(\left\lfloor t/\epsilon\right\rfloor ,j_{1}\right)-\tilde{w}_{1}\left(t,j_{1}\right)\right|^{2}\bigg)^{1/2}\Bigg).
\end{align*}
We have the following results: 
\begin{thm}
\label{thm:particle coupling}Under the same setting as Theorem \ref{thm:gradient descent coupling},
there exist constants $c_{1}\in\left(0,0.5\right)$ and $c_{2}\in\left(0,1/52\right)$,
such that for any $\delta>0$, $L\geq1$ and $T\geq1$, the following
holds. There exists $n^{*}=n^{*}\left(T,L,c_{1},c_{2}\right)\geq1$
such that for any $n_{\min}\geq n^{*}$,
\begin{align*}
\mathbb{P}\bigg(\mathscr{D}_{T}\left(W,\tilde{W}\right) & \geq Kn_{\min}^{-c_{1}}\sqrt{\log\left(\frac{1}{\delta}n_{\max}^{2}+e\right)}\bigg)\leq\delta+KLn_{\max}\exp\left(-Kn_{\min}^{c_{2}}\right).
\end{align*}
Here $n_{\min}=\min_{1\leq j\leq L-1}n_{j}$ and $n_{\max}=\max_{1\leq j\leq L}n_{j}$.
\end{thm}

\begin{thm}
\label{thm:gradient descent}Under the same setting as Theorem \ref{thm:gradient descent coupling},
there exist constants $c_{1}\in\left(0,0.5\right)$ and $c_{2}\in\left(0,1/52\right)$,
such that for any $\delta>0$, $L\geq1$ and $T\geq1$, the following
holds. There exists $\epsilon^{*}=\epsilon^{*}\left(T,L,c_{1},c_{2}\right)\leq1$
such that for any $\epsilon\in\left(0,\epsilon^{*}\right)$,
\begin{align*}
\mathbb{P}\bigg(\mathscr{D}_{T}\left(\tilde{W},\mathbf{W}\right) & \geq K\epsilon^{c_{1}}\sqrt{\log\left(\frac{1}{\delta}n_{\max}^{2}+e\right)}\bigg)\leq\delta+KLn_{\max}\exp\left(-Kn_{\min}^{c_{2}}\right).
\end{align*}
Here $n_{\min}=\min_{1\leq j\leq L-1}n_{j}$ and $n_{\max}=\max_{1\leq j\leq L}n_{j}$.
\end{thm}

\begin{proof}[Proof of Theorem \ref{thm:gradient descent coupling}]
Using the fact 
\[
\mathscr{D}_{T}\left(W,\mathbf{W}\right)\leq\mathscr{D}_{T}\left(W,\tilde{W}\right)+\mathscr{D}_{T}\left(\tilde{W},\mathbf{W}\right),
\]
the thesis is immediate from Theorems \ref{thm:particle coupling}
and \ref{thm:gradient descent}. 
\end{proof}

\subsection{Proof of Theorems \ref{thm:particle coupling} and \ref{thm:gradient descent}}

The proof of Theorem \ref{thm:particle coupling} rests in the following
proposition, which is essentially a version of Theorem \ref{thm:particle coupling}
with an extra boundedness condition at initialization.
\begin{prop}
\label{prop:particle coupling - bounded}Under the same setting as
Theorem \ref{thm:gradient descent coupling}, for a given $B>0$,
further assume that
\begin{align*}
{\rm ess\text{-}sup}\mathsf{max}_{0}^{w}\left(W\right) & ={\rm ess\text{-}sup}\max_{2\leq i\leq L}\left|w_{i}^{0}\left(C_{i-1},C_{i}\right)\right|\leq B,\\
{\rm ess\text{-}sup}\mathsf{max}_{0}^{b}\left(W\right) & ={\rm ess\text{-}sup}\max_{2\leq i\leq L}\left|b_{i}^{0}\left(C_{i}\right)\right|\leq B.
\end{align*}
Then for any $\delta>0$, with probability at least $1-\delta-KLn_{\max}\exp\left(-Kn_{\min}^{1/52}\right)$,
\[
\mathscr{D}_{T}\left(W,\tilde{W}\right)\leq\sqrt{\frac{1}{n_{\min}}\log\left(\frac{2TL}{\delta}n_{\max}^{2}+e\right)}\exp\left(K^{\bar{K}}\left(1+T^{\bar{K}}\right)\left(1+B\right)\right),
\]
in which $n_{\min}=\min_{1\leq j\leq L-1}n_{j}$, $n_{\max}=\max_{1\leq j\leq L}n_{j}$,
and $\bar{K}$ is a constant that depends on $L$ such that $\bar{K}\leq K^{L}$
for some sufficiently large constant $K$.
\end{prop}

Similar to Proposition \ref{prop:particle coupling - bounded}, the
following proposition is essentially a version of Theorem \ref{thm:gradient descent}
with an extra boundedness condition at initialization.
\begin{prop}
\label{prop:gradient descent - bounded}Under the same setting as
Theorem \ref{thm:gradient descent coupling}, for a given $B>0$,
further assume that
\begin{align*}
{\rm ess\text{-}sup}\mathsf{max}_{0}^{w}\left(W\right) & ={\rm ess\text{-}sup}\max_{2\leq i\leq L}\left|w_{i}^{0}\left(C_{i-1},C_{i}\right)\right|\leq B,\\
{\rm ess\text{-}sup}\mathsf{max}_{0}^{b}\left(W\right) & ={\rm ess\text{-}sup}\max_{2\leq i\leq L}\left|b_{i}^{0}\left(C_{i}\right)\right|\leq B.
\end{align*}
Then for any $\delta>0$ and $\epsilon<1$, with probability at least
$1-\delta-KLn_{\max}\exp\left(-Kn_{\min}^{1/52}\right)$,
\[
\mathscr{D}_{T}\left(\tilde{W},\mathbf{W}\right)\leq\sqrt{\epsilon\log\left(\frac{2L}{\delta}n_{\max}^{2}+e\right)}\exp\left(K^{\bar{K}}\left(1+T^{\bar{K}}\right)\left(1+B\right)\right),
\]
in which $n_{\min}=\min_{1\leq j\leq L-1}n_{j}$, $n_{\max}=\max_{1\leq j\leq L}n_{j}$,
and $\bar{K}$ is a constant that depends on $L$ such that $\bar{K}\leq K^{L}$
for some sufficiently large constant $K$.
\end{prop}

The following proposition bridges the last two propositions with their
respective theorems.
\begin{prop}
\label{prop:truncation}Assume the same setting as Theorem \ref{thm:gradient descent coupling}.
Let $\underline{W}\left(t\right)=\left\{ \underline{w}_{1}\left(t,\cdot\right),\underline{w}_{i}\left(t,\cdot,\cdot\right),\underline{b}_{i}\left(t,\cdot\right),\;\;i=2,...,L\right\} $
be the MF ODEs' solution for which its initialization $\underline{W}\left(0\right)$
is a truncated version of $W\left(0\right)$, for a given $B>0$:
\begin{align*}
\underline{w}_{1}\left(0,c_{1}\right) & =w_{1}^{0}\left(c_{1}\right),\\
\underline{w}_{i}\left(0,c_{i-1},c_{i}\right) & ={\rm Trunc}_{B}\left(w_{i}^{0}\left(c_{i-1},c_{i}\right)\right),\\
\underline{b}_{i}\left(0,c_{i}\right) & ={\rm Trunc}_{B}\left(b_{i}^{0}\left(c_{i}\right)\right),
\end{align*}
for $2\leq i\leq L$, where ${\rm Trunc}_{B}\left(u\right)=u\mathbb{I}\left(\left|u\right|\leq B\right)+B{\rm sign}\left(u\right)\mathbb{I}\left(\left|u\right|>B\right)$.
Then:
\[
\left\Vert W-\underline{W}\right\Vert _{T}\leq K\exp\left(-KB^{2}+K^{\bar{K}}\left(1+T^{\bar{K}}\right)\left(1+B\right)\right),
\]
for $\bar{K}$ a constant that depends on $L$ such that $\bar{K}\leq K^{L}$
for some sufficiently large constant $K$. Similarly let $\underline{\tilde{W}}$
and $\underline{\mathbf{W}}$ be the particle ODEs' solution and the
neural network's dynamics with a similarly truncated initialization:
\begin{align*}
\underline{\tilde{w}}_{1}\left(0,j_{1}\right)=\underline{\mathbf{w}}_{1}\left(0,j_{1}\right) & =w_{1}^{0}\left(C_{1}\left(j_{1}\right)\right),\\
\underline{\tilde{w}}_{i}\left(0,j_{i-1},j_{i}\right)=\underline{\mathbf{w}}_{i}\left(0,j_{i-1},j_{i}\right) & ={\rm Trunc}_{B}\left(w_{i}^{0}\left(C_{i-1}\left(j_{i-1}\right),C_{i}\left(j_{i}\right)\right)\right),\\
\underline{\tilde{b}}_{i}\left(0,j_{i}\right)=\underline{\mathbf{b}}_{i}\left(0,j_{i}\right) & ={\rm Trunc}_{B}\left(b_{i}^{0}\left(C_{i}\left(j_{i}\right)\right)\right).
\end{align*}
Then with probability at least $1-KLn_{\max}\exp\left(-Ke^{-KB^{2}}n_{\min}^{1/52}\right)$,
\[
\left\Vert \tilde{W}-\underline{\tilde{W}}\right\Vert _{T},\;\left\Vert \mathbf{W}-\underline{\mathbf{W}}\right\Vert _{T}\leq K\exp\left(-KB^{2}+K^{\bar{K}}\left(1+T^{\bar{K}}\right)\left(1+B\right)\right).
\]
Here $n_{\max}=\max\left(n_{1},...,n_{L}\right)$, $n_{\min}=\min\left(n_{1},...,n_{L-1}\right)$,
\begin{align*}
\left\Vert {\bf W}-\underline{\mathbf{W}}\right\Vert _{t} & =\max\left(\max_{2\leq i\leq L}\left\Vert {\bf w}_{i}-\underline{\mathbf{w}}_{i}\right\Vert _{t},\;\max_{2\leq i\leq L}\left\Vert {\bf b}_{i}-\underline{\mathbf{b}}_{i}\right\Vert _{t},\;\left\Vert {\bf w}_{1}-\underline{\mathbf{w}}_{1}\right\Vert _{t}\right),\\
\left\Vert {\bf w}_{i}-\underline{\mathbf{w}}_{i}\right\Vert _{t} & =\bigg(\frac{1}{n_{i-1}n_{i}}\sum_{j_{i-1}=1}^{n_{i-1}}\sum_{j_{i}=1}^{n_{i}}\sup_{s\leq t}\left|\mathbf{w}_{i}\left(\left\lfloor s/\epsilon\right\rfloor ,j_{i-1},j_{i}\right)-\underline{\mathbf{w}}_{i}\left(\left\lfloor s/\epsilon\right\rfloor ,j_{i-1},j_{i}\right)\right|^{2}\bigg)^{1/2},\\
\left\Vert {\bf b}_{i}-\underline{{\bf b}}_{i}\right\Vert _{t} & =\bigg(\frac{1}{n_{i}}\sum_{j_{i}=1}^{n_{i}}\sup_{s\leq t}\left|{\bf b}_{i}\left(\left\lfloor s/\epsilon\right\rfloor ,j_{i}\right)-\underline{{\bf b}}_{i}\left(\left\lfloor s/\epsilon\right\rfloor ,j_{i}\right)\right|^{2}\bigg)^{1/2},\qquad i=2,...,L,\\
\left\Vert {\bf w}_{1}-\underline{\mathbf{w}}_{1}\right\Vert _{t} & =\bigg(\frac{1}{n_{1}}\sum_{j_{1}=1}^{n_{1}}\sup_{s\leq t}\left|{\bf w}_{1}\left(\left\lfloor s/\epsilon\right\rfloor ,j_{1}\right)-\underline{\mathbf{w}}_{1}\left(\left\lfloor s/\epsilon\right\rfloor ,j_{1}\right)\right|^{2}\bigg)^{1/2},
\end{align*}
and $\left\Vert \tilde{W}-\underline{\tilde{W}}\right\Vert _{t}$
is defined similarly.
\end{prop}

We can now prove Theorems \ref{thm:particle coupling} and \ref{thm:gradient descent}.
\begin{proof}[Proof of Theorem \ref{thm:particle coupling}]
Let $K_{T}=K^{\bar{K}}\left(1+T^{\bar{K}}\right)$. For a given $B>0$,
let $\underline{W}$ and $\underline{\tilde{W}}$ be the initialization-truncated
versions of $W$ and $\tilde{W}$ respectively, as per Proposition
\ref{prop:truncation}. Then Proposition \ref{prop:particle coupling - bounded}
states that for any $\delta>0$, with probability at least $1-\delta-KLn_{\max}\exp\left(-Kn_{\min}^{1/52}\right)$,
\[
\mathscr{D}_{T}\left(\underline{W},\underline{\tilde{W}}\right)\leq\sqrt{\frac{1}{n_{\min}}\log\left(\frac{2TL}{\delta}n_{\max}^{2}+e\right)}e^{K_{T}\left(1+B\right)}.
\]
Proposition \ref{prop:truncation} further gives that with probability
at least $1-KLn_{\max}\exp\left(-Ke^{-KB^{2}}n_{\min}^{1/52}\right)$,
\[
\left\Vert \tilde{W}-\underline{\tilde{W}}\right\Vert _{T},\;\left\Vert W-\underline{W}\right\Vert _{T}\leq Ke^{-KB^{2}+K_{T}\left(1+B\right)}.
\]
Also notice that 
\[
\mathscr{D}_{T}\left(W,\tilde{W}\right)\leq\mathscr{D}_{T}\left(\underline{W},\underline{\tilde{W}}\right)+\left\Vert W-\underline{W}\right\Vert _{T}+\left\Vert \tilde{W}-\underline{\tilde{W}}\right\Vert _{T}.
\]
As such,
\[
\mathscr{D}_{T}\left(W,\tilde{W}\right)\leq\bigg(\sqrt{\frac{1}{n_{\min}}\log\left(\frac{2TL}{\delta}n_{\max}^{2}+e\right)}+e^{-KB^{2}}\bigg)e^{K_{T}\left(1+B\right)},
\]
with probability at least $1-\delta-KLn_{\max}\exp\left(-Ke^{-KB^{2}}n_{\min}^{1/52}\right)$,
for any fixed $B>0$. Then upon choosing $B=c_{0}\sqrt{\log n_{\min}}$
for some suitable constant $c_{0}>0$ independent of $T$, it is easy
to see that there exist constants $c_{1}\in\left(0,0.5\right)$ and
$c_{2}\in\left(0,1/52\right)$ independent of $T$ and some $n^{*}=n^{*}\left(T,L,c_{1},c_{2}\right)\geq1$
such that for any $n_{\min}\geq n^{*}$, we have:

\begin{align*}
\mathbb{P}\bigg(\mathscr{D}_{T}\left(W,\tilde{W}\right) & \geq Kn_{\min}^{-c_{1}}\sqrt{\log\left(\frac{1}{\delta}n_{\max}^{2}+e\right)}\bigg)\leq\delta+KLn_{\max}\exp\left(-Kn_{\min}^{c_{2}}\right).
\end{align*}
\end{proof}
\begin{proof}[Proof of Theorem \ref{thm:gradient descent}]
This comes from Propositions \ref{prop:gradient descent - bounded}
and \ref{prop:truncation}, similar to the proof of Theorem \ref{thm:particle coupling}.
\end{proof}
Let us mention again the correspondence between Theorem \ref{thm:particle coupling}
and Proposition \ref{prop:particle coupling - bounded}, and that
between Theorem \ref{thm:gradient descent} and Proposition \ref{prop:gradient descent - bounded}.
The truncation at initialization allows for technical feasibility
and is then bridged by Proposition \ref{prop:truncation}. The proofs
of Propositions \ref{prop:particle coupling - bounded} and \ref{prop:gradient descent - bounded}
are necessarily lengthy, so let us defer them (as well as missing
proofs of other results) to Appendix \ref{sec:Remaining-proofs-main-MF}.
Let us describe briefly the argument for Proposition \ref{prop:particle coupling - bounded}.
One recalls that $w_{i}\left(0,C_{i-1}\left(j_{i-1}\right),C_{i}\left(j_{i}\right)\right)=\tilde{w}_{i}\left(0,j_{i-1},j_{i}\right)$
at initialization $t=0$, and hence one hopes to prove:
\[
w_{i}\left(t,C_{i-1}\left(j_{i-1}\right),C_{i}\left(j_{i}\right)\right)\approx\tilde{w}_{i}\left(t,j_{i-1},j_{i}\right)
\]
at any finite $t$. In other words, we would like to show
\[
\mathbb{E}_{Z}\left[\Delta_{i}^{w}\left(Z,C_{i-1}\left(j_{i-1}\right),C_{i}\left(j_{i}\right);W\left(t\right)\right)\right]\approx\mathbb{E}_{Z}\left[\Delta_{i}^{\mathbf{w}}\left(Z,j_{i-1},j_{i};\tilde{W}\left(t\right)\right)\right].
\]
Both of these quantities share very similar structures. Roughly speaking,
the left-hand side involves quantities that assume the form of an
expectation $\mathbb{E}_{C_{r}}[g(C_{r})]$ and the right-hand side
correspondingly involves quantities of the form of an empirical average
$(1/n_{r})\cdot\sum_{j_{r}=1}^{n_{r}}g(C_{r}(j_{r}))$, for some function
$g$. An invocation of concentration of measure bounds links the two
sides, and if done correctly over the training horizon (i.e. over
$t\leq T$), the depth of the network (i.e. over index $i\leq L$)
and the width at each layer (i.e. over neuron $j_{i}\leq n_{i}$),
it gives the desired estimation. One also recognizes that the neural
network $\mathbf{W}$ is essentially a time discretization version
of $\tilde{W}$ where the learning rate $\epsilon$ plays the role
of the discretization level. A martingale-type argument then suffices
to prove Proposition \ref{prop:gradient descent - bounded} for small
$\epsilon$.

\section{Simplifications under Independent and Identically Distributed Initialization
\label{sec:iid_init}}

In this section, we prove that the MF limit under an independent and
identically distributed (i.i.d.) initialization degenerates to a simple
structured dynamics. Let us first state the definition of i.i.d. initializations.
\begin{defn}
An initialization law $\rho$ for a neural network of size $\left\{ n_{1},...,n_{L}\right\} $
is called $\left(\rho_{\mathbf{w}}^{1},...,\rho_{\mathbf{w}}^{L},\rho_{\mathbf{b}}^{2},...,\rho_{\mathbf{b}}^{L}\right)$-i.i.d.
initialization (or i.i.d. initialization, for brevity), where $\rho_{\mathbf{w}}^{i}$
is a probability measure over $\mathbb{W}_{i}$ and $\rho_{\mathbf{b}}^{i}$
is a probability measure over $\mathbb{B}_{i}$, if it satisfies the
following: 

\begin{itemize}
\item $\left\{ \mathbf{w}_{1}\left(0,j_{1}\right)\right\} _{j_{1}\in\left[n_{1}\right]}$
are generated i.i.d. according to $\rho_{\mathbf{w}}^{1}$,
\item for each $i=2,...,L$, $\left\{ \mathbf{w}_{i}\left(0,j_{i-1},j_{i}\right)\right\} _{j_{i-1}\in\left[n_{i-1}\right],\;j_{i}\in\left[n_{i}\right]}$
are generated i.i.d. according to $\rho_{\mathbf{w}}^{i}$, and $\left\{ \mathbf{b}_{i}\left(0,j_{i}\right)\right\} _{j_{i}\in\left[n_{i}\right]}$
are generated i.i.d. according to $\rho_{\mathbf{b}}^{i}$,
\item all these generations are independent of each other, and $\rho_{\mathbf{b}}^{L}$
is a single point mass.
\end{itemize}
Observe that given $\left(\rho_{\mathbf{w}}^{1},...,\rho_{\mathbf{w}}^{L},\rho_{\mathbf{b}}^{2},...,\rho_{\mathbf{b}}^{L}\right)$,
one can build a family $\mathsf{Init}$ of i.i.d. initialization laws
that contains any index tuple $\left\{ n_{1},...,n_{L}\right\} $.

In the following, we construct a canonical MF limit under i.i.d. initialization
and show that the MF dynamics can be significantly simplified. Our
plan is as follows:
\end{defn}

\begin{enumerate}
\item We first construct a sequence (in increasing $M$) of neuronal embeddings,
which we call \textit{canonical neuronal embeddings}. In particular,
each of these -- indexed by $M$ -- allows to embed i.i.d.-initialized
neural networks of sizes at most $M$. Each canonical neuronal embedding
is associated with a MF limit, which we call \textit{a canonical MF
limit}.
\item We present a dynamics which is shown to be the infinite-$M$ limit
of the canonical MF limits. This dynamics displays the simplifying
properties that we wish to show. In particular, the dynamics of i.i.d.-initialized
neural networks of large widths are well-approximated by the infinite-$M$
limit, and asymptotically displays the same simplifying properties. 
\end{enumerate}
This plan streamlines our studies of i.i.d.-initialized networks in
the infinite-width limit. As we shall see, the construction of the
canonical neuronal embedding is quite natural due to the cap at finite
$M$. More importantly, on one hand, the fact that the canonical MF
limit tracks closely the neural network of size less than $M$ demonstrates
flexibility of Theorem \ref{thm:gradient descent coupling} from Section
\ref{sec:Main-result}, in that its applicability is not limited to
abstract infinite-width limits. On the other hand, the fact that the
simplifying properties are shown in the infinite-$M$ limit demonstrates
the advantage of working with these abstract infinite-width dynamics:
they reveal properties that are virtually invisible at the finite-width
level.

\subsection{Neuronal embedding construction and main results}

\subsubsection{Canonical neuronal embeddings and canonical MF limits\label{subsec:Canonical-neuronal-embeddings}}

We describe the construction in three steps with a given positive
integer $M$ and a set of measures $\left(\rho_{\mathbf{w}}^{1},...,\rho_{\mathbf{w}}^{L},\rho_{\mathbf{b}}^{2},...,\rho_{\mathbf{b}}^{L}\right)$.

\subparagraph*{Step 1.}

We first give a description of a $\sigma$-finite measure space. Consider
a probability space $\left(\Lambda,P_{0}\right)$ of the random processes
$\mathbb{W}_{1}$-valued $\mathfrak{p}_{1}\left(\theta_{1}\right)$,
$\mathbb{W}_{i}$-valued $\mathfrak{q}_{i}\left(\theta_{i-1},\theta_{i}\right)$
and $\mathbb{B}_{i}$-valued $\mathfrak{p}_{i}\left(\theta_{i}\right)$
for $2\leq i\leq L$. These processes are indexed by $\theta_{i}\in\mathbb{N}_{>0}$
and satisfy the following property. Let $m_{1},...,m_{L-1}$ be $L-1$
arbitrary finite positive integers and, with these integers, let $\left\{ \theta_{i}^{\left(k_{i}\right)}\in\mathbb{N}_{>0}:\;k_{i}\in\left[m_{i}\right],\;i=1,...,L-1\right\} $
be an arbitrary collection. Let $m_{L}=1$ and $\theta_{L}^{\left(1\right)}=1$.
For each $i=1,...,L$, let $S_{i}$ be the set of unique elements
in $\left\{ \theta_{i}^{\left(k_{i}\right)}:\;k_{i}\in\left[m_{i}\right]\right\} $.
Similarly, for each $i=2,...,L$, let $R_{i}$ be the set of unique
pairs in $\left\{ \left(\theta_{i-1}^{\left(k_{i-1}\right)},\theta_{i}^{\left(k_{i}\right)}\right):\;k_{i-1}\in\left[m_{i-1}\right],\;k_{i}\in\left[m_{i}\right]\right\} $.
The space $\left(\Lambda,P_{0}\right)$ satisfies that $\left\{ \mathfrak{p}_{i}\left(\theta_{i}\right):\;\theta_{i}\in S_{i},\;i=1,...,L\right\} $
and $\left\{ \mathfrak{q}_{i}\left(\theta_{i-1},\theta_{i}\right):\;\left(\theta_{i-1},\theta_{i}\right)\in R_{i},\;i=2,...,L\right\} $
are all mutually independent. In addition, we also have 
\[
{\rm Law}\left(\mathfrak{p}_{1}\left(\theta_{1}\right)\right)=\rho_{\mathbf{w}}^{1},\quad{\rm Law}\left(\mathfrak{p}_{i}\left(\theta_{i}\right)\right)=\rho_{\mathbf{b}}^{i},\quad{\rm Law}\left(\mathfrak{q}_{i}\left(\theta_{i-1}',\theta_{i}'\right)\right)=\rho_{\mathbf{w}}^{i}
\]
for any $\theta_{1}\in S_{1}$, $\theta_{i}\in S_{i}$ and $\left(\theta_{i-1}',\theta_{i}'\right)\in R_{i}$,
for $i=2,...,L$. Such a space $\left(\Lambda,P_{0}\right)$ exists
by Kolmogorov's extension theorem.

\subparagraph*{Step 2.}

With this space, given the integer $M$, for each $i\in\left[L-1\right]$,
we define $\Omega_{i}^{M}=\Lambda\times[M]$ equipped with the product
measure $P_{i}^{M}=P_{0}\times{\rm Unif}\left(\left[M\right]\right)$,
where ${\rm Unif}\left(\left[M\right]\right)$ is the uniform measure
over the finite set $[M]$. We also let $\Omega_{L}^{M}=\{1\}$ and
$P_{L}^{M}=\mathbb{I}_{\Omega_{L}^{M}}$. We construct $\Omega^{M}=\prod_{i=1}^{L}\Omega_{i}^{M}$,
equipped with the product measure $P^{M}=\prod_{i=1}^{L}P_{i}^{M}$.
The space $\left(\Omega^{M},P^{M}\right)$ gives a \textit{canonical
neuronal ensemble}.

\subparagraph*{Step 3.}

Let $\Omega_{i}=\Lambda\times\mathbb{N}_{>0}$ and observe $\Omega_{i}^{M}\subset\Omega_{i}$
for any $M$. We define the deterministic functions $w_{1}^{0}:\;\Omega_{1}\to\mathbb{W}_{1}$,
$w_{i}^{0}:\;\Omega_{i-1}\times\Omega_{i}\to\mathbb{W}_{i}$ and $b_{i}^{0}:\;\Omega_{i}\to\mathbb{B}_{i}$,
for $i=2,...,L$: 
\begin{align}
w_{1}^{0}\left(\left(\lambda_{1},\theta_{1}\right)\right) & =\mathfrak{p}_{1}\left(\theta_{1}\right)\left(\lambda_{1}\right),\label{eq:iid_init_embedding_constr1}\\
w_{i}^{0}\left(\left(\lambda_{i-1},\theta_{i-1}\right),\left(\lambda_{i},\theta_{i}\right)\right) & =\mathfrak{q}_{i}\left(\theta_{i-1},\theta_{i}\right)\left(\lambda_{i}\right),\qquad i=2,...,L-1,\\
w_{L}^{0}\left(\left(\lambda_{L-1},\theta_{L-1}\right),1\right) & =\mathfrak{q}_{L}\left(\theta_{L-1},1\right)\left(\lambda_{L-1}\right),\\
b_{i}^{0}\left(\left(\lambda_{i},\theta_{i}\right)\right) & =\mathfrak{p}_{i}\left(\theta_{i}\right)\left(\lambda_{i}\right),\qquad i=2,...,L-1,\\
b_{L}^{0}\left(1\right) & =\mathfrak{p}_{L}\left(1\right).\label{eq:iid_init_embedding_constr2}
\end{align}
These functions, together with $\left(\Omega^{M},P^{M}\right)$ ,
give a \textsl{canonical neuronal embedding}. Per Section \ref{subsec:MF},
given this neuronal embedding, one obtains a \textit{canonical MF
limit} $W^{M}\left(t\right)=\left\{ w_{1}^{M}\left(t,\cdot\right),w_{i}^{M}\left(t,\cdot,\cdot\right),b_{i}^{M}\left(t,\cdot\right),\;\;i=2,...,L\right\} $,
defined on $\left(\Omega^{M},P^{M}\right)$, with initialization $W^{M}\left(0\right)=\left\{ w_{1}^{0},w_{i}^{0},b_{i}^{0}:\;i=2,...,L\right\} $.
With $\left(C_{1},...,C_{L}\right)\sim P^{M}$, one observes that
\[
{\rm Law}\left(w_{1}^{0}\left(C_{1}\right),w_{2}^{0}\left(C_{1},C_{2}\right),b_{2}^{0}\left(C_{2}\right),...,w_{L}^{0}\left(C_{L-1},1\right),b_{L}^{0}\left(1\right)\right)=\rho_{\mathbf{w}}^{1}\times\prod_{i=2}^{L}\rho_{\mathbf{w}}^{i}\times\rho_{\mathbf{b}}^{i}.
\]
We also consider the sampling rule $\overline{P}_{{\bf n}}^{M}$ defined
for each ${\bf n}=(n_{1},\dots,n_{L})$ with $n_{i}\le M$ for $i\in\left[L-1\right]$
and $n_{L}=1$ by independently sampling $\left\{ C_{i}(j_{i})\right\} _{j_{i}\in\left[n_{i}\right]}$
from $\left(P_{i}^{M}\right)^{n_{i}}$ conditioned on that $\left\{ \theta_{i}(j_{i})\right\} _{j_{i}\in\left[n_{i}\right]}$
are all distinct, where $C_{i}(j_{i})=\left(\lambda_{i}(j_{i}),\theta_{i}(j_{i})\right)$,
for each $i\in\left[L\right]$.

The constructed embedding indeed gives a valid neuronal embedding
for neural networks of sizes at most $M$.
\begin{prop}
\label{prop:iid_law_embedding}For ${\bf n}=\left\{ n_{1},...,n_{L}\right\} $
with $n_{i}\leq M$ and $n_{L}=1$, the space $(\Omega^{M},P^{M})$
together with the functions $\left(\left\{ w_{i}^{0}\right\} _{i\in\left[L\right]},\left\{ b_{i}^{0}\right\} _{2\leq i\leq L}\right)$
form a neuronal embedding for the neural network of size ${\bf n}$
under $\left(\rho_{\mathbf{w}}^{1},...,\rho_{\mathbf{w}}^{L},\rho_{\mathbf{b}}^{2},...,\rho_{\mathbf{b}}^{L}\right)$-i.i.d.
initialization, in which the associated sampling rule is $\overline{P}_{{\bf n}}^{M}$.
Furthermore $\overline{P}_{{\bf n}}^{M}$ is $\left(2n_{\max}/M\right)$-independent,
where $n_{\max}=\max\left(n_{1},...,n_{L}\right)$.
\end{prop}

The proof of the proposition is deferred to Appendix \ref{sec:Remaining-proofs-iid-init}.
This result, together with Theorem \ref{thm:gradient descent coupling},
suggests that for large $M$, the canonical MF limit tracks closely
the trajectory of an i.i.d.-initialized neural network, as long as
its (large) size is much smaller than $M$. Equivalently an i.i.d.-initialized
large neural network can be closely tracked by any canonical MF limit
with sufficiently large $M$. This motivates the studies of the canonical
MF limits in the limit $M\to\infty$, which display simplified structures.

\subsubsection{Infinite-$M$ limit of canonical MF limits\label{subsec:Infinite-M-canonical}}

Recall that the space $(\Omega^{M},P^{M})$ depends on $M$ and only
gives an embedding of networks whose widths are at most $M$. More
specifically, while the space $\left(\Lambda,P_{0}\right)$ is independent
of $M$ and $\Omega^{M}$ can be extended to infinite $M$, the measure
$P^{M}$ would become an improper probability measure for infinite
$M$. Nevertheless one can still define a dynamics that is independent
of $M$.

Let $W^{*}\left(t\right)=\left\{ w_{1}^{*}\left(t,\cdot\right),w_{i}^{*}\left(t,\cdot\right),b_{i}^{*}\left(t,\cdot\right),\;\;i=2,...,L\right\} $
be a dynamics to be described shortly, which we shall prove to be
the ``infinite-$M$'' limit of $W^{M}$. The full description is
lengthy and is deferred to Appendix \ref{subsec:Infinite-M-limit-full};
let us give a snapshot description for $L\geq5$ and $i=3,...,L-2$:
\begin{align*}
\frac{\partial}{\partial t}w_{i}^{*}\left(t,u_{i},v_{i-1},v_{i}\right) & =-\xi_{i}^{\mathbf{w}}\left(t\right)\mathbb{E}_{Z}\left[\Delta_{i}^{w*}\left(t,Z,u_{i},v_{i-1},v_{i}\right)\right],\\
\frac{\partial}{\partial t}b_{i}^{*}\left(t,v_{i}\right) & =-\xi_{i}^{\mathbf{b}}\left(t\right)\mathbb{E}_{Z}\left[\Delta_{i}^{b*}\left(t,Z\right)\right],\qquad\forall u_{i}\in{\rm supp}\left(\rho_{\mathbf{w}}^{i}\right),\;v_{i}\in{\rm supp}\left(\rho_{\mathbf{b}}^{i}\right),
\end{align*}
with the initialization $w_{i}^{*}\left(0,u_{i},\cdot,\cdot\right)=u_{i}$
and $b_{i}^{*}\left(0,v_{i}\right)=v_{i}$. Here the quantities are
defined by:
\begin{align*}
H_{i}^{*}\left(t,x,v_{i}\right) & =\int\phi_{i}\left(w_{i}^{*}\left(t,u_{i},v_{i-1},v_{i}\right),b_{i}^{*}\left(t,v_{i}\right),H_{i-1}^{*}\left(t,x,v_{i-1}\right)\right)\rho_{\mathbf{w}}^{i}\left(du_{i}\right)\rho_{\mathbf{b}}^{i-1}\left(dv_{i-1}\right),\\
\Delta_{i}^{w*}\left(t,z,u_{i},v_{i-1},v_{i}\right) & =\sigma_{i}^{\mathbf{w}}\left(\Delta_{i}^{H*}\left(t,z,v_{i}\right),w_{i}^{*}\left(t,u_{i},v_{i-1},v_{i}\right),b_{i}^{*}\left(t,v_{i}\right),H_{i}^{*}\left(t,x,v_{i}\right),H_{i-1}^{*}\left(t,x,v_{i-1}\right)\right),\\
\Delta_{i}^{b*}\left(t,z,v_{i}\right) & =\int\sigma_{i}^{\mathbf{b}}\left(\Delta_{i}^{H*}\left(t,z,v_{i}\right),w_{i}^{*}\left(t,u_{i},v_{i-1},v_{i}\right),b_{i}^{*}\left(t,v_{i}\right),H_{i}^{*}\left(t,x,v_{i}\right),H_{i-1}^{*}\left(t,x,v_{i-1}\right)\right)\\
 & \qquad\times\rho_{\mathbf{w}}^{i}\left(du_{i}\right)\rho_{\mathbf{b}}^{i-1}\left(dv_{i-1}\right),\\
\Delta_{i-1}^{H*}\left(t,z,v_{i-1}\right) & =\int\sigma_{i-1}^{\mathbf{H}}\left(\Delta_{i}^{H*}\left(t,z,v_{i}\right),w_{i}^{*}\left(t,u_{i},v_{i-1},v_{i}\right),b_{i}^{*}\left(t,v_{i}\right),H_{i}^{*}\left(t,x,v_{i}\right),H_{i-1}^{*}\left(t,x,v_{i-1}\right)\right)\\
 & \qquad\times\rho_{\mathbf{w}}^{i}\left(du_{i}\right)\rho_{\mathbf{b}}^{i}\left(dv_{i}\right).
\end{align*}
The existence and uniqueness of such dynamics follow similarly to
the proof of Theorem \ref{thm:existence ODE}. We state the main result
of this section, which shows that the dynamics $W^{*}$ is the infinite-$M$
limit of $W^{M}$. (Again we refer to Appendix \ref{subsec:Infinite-M-limit-full},
specifically Theorem \ref{thm:iid dynamics-full}, for the complete
statement of this theorem.)
\begin{thm}[Snapshot statement]
\label{thm:iid dynamics}Given $\left(\rho_{\mathbf{w}}^{1},...,\rho_{\mathbf{w}}^{L},\rho_{\mathbf{b}}^{2},...,\rho_{\mathbf{b}}^{L}\right)$
and an integer $M$, construct the canonical neuronal ensemble $\left(\Omega^{M},P^{M}\right)$,
the random variables $\left(C_{1},...,C_{L}\right)\sim P^{M}=\prod_{i=1}^{L}P_{i}^{M}$
and the canonical MF limit $W^{M}$ as described in Section \ref{subsec:Canonical-neuronal-embeddings}.
Also construct the dynamics $W^{*}$ described in Section \ref{subsec:Infinite-M-canonical}.
Define the following:
\begin{align*}
w_{i}^{\infty}\left(t,c_{i-1},c_{i}\right) & =w_{i}^{*}\left(t,w_{i}^{0}\left(c_{i-1},c_{i}\right),b_{i-1}^{0}\left(c_{i-1}\right),b_{i}^{0}\left(c_{i}\right)\right),\\
b_{i}^{\infty}\left(t,c_{i}\right) & =b_{i}^{*}\left(t,b_{i}^{0}\left(c_{i}\right)\right),\qquad\forall c_{i}\in\Omega_{i}=\Lambda\times\mathbb{N}_{>0},\quad i=3,...,L-2.
\end{align*}
We also let $W^{\infty}\left(t\right)=\left\{ w_{1}^{\infty}\left(t,\cdot\right),w_{i}^{\infty}\left(t,\cdot,\cdot\right),b_{i}^{\infty}\left(t,\cdot\right),\;\;i=2,...,L\right\} $.
Let us consider:
\begin{align*}
\left\langle W^{M}-W^{\infty}\right\rangle _{t} & =\max\left(\max_{1\leq i\leq L}\left\langle w_{i}^{M}-w_{i}^{\infty}\right\rangle _{t},\;\max_{2\leq i\leq L}\left\langle b_{i}^{M}-b_{i}^{\infty}\right\rangle _{t}\right),\\
\left\langle w_{i}^{M}-w_{i}^{\infty}\right\rangle _{t} & =\mathbb{E}\left[\left|w_{i}^{M}\left(t,C_{i-1},C_{i}\right)-w_{i}^{\infty}\left(t,C_{i-1},C_{i}\right)\right|^{2}\right]^{1/2},\\
\left\langle b_{i}^{M}-b_{i}^{\infty}\right\rangle _{t} & =\mathbb{E}\left[\left|b_{i}^{M}\left(t,C_{i}\right)-b_{i}^{\infty}\left(t,C_{i}\right)\right|^{2}\right]^{1/2}.
\end{align*}
Then under Assumptions \ref{enu:Assump_lrSchedule}-\ref{enu:Assump_backward}
and \ref{assump:init}, for any $T\geq0$ and $L\geq2$,
\[
\sup_{t\leq T}\left\langle W^{M}-W^{\infty}\right\rangle _{t}\le\frac{K_{T,L}}{M^{0.499}},
\]
for sufficiently large $M=M\left(T,L\right)$, where $K_{T,L}$ is
a constant that depends on $T$ and $L$. Furthermore, for $L\geq4$
and $2\leq i\leq L-2$,
\[
\sup_{t\leq T}\mathbb{E}\left[\left|H_{i}\left(X,C_{i};W^{M}(t)\right)-H_{i}^{*}\left(t,X,b_{i}^{0}(C_{i})\right)\right|^{2}\right]^{1/2}\leq\frac{K_{T,L}}{M^{0.499}}.
\]
\end{thm}

We give a sketch of the proof in Section \ref{subsec:Proof-Theorem-iid}.
We now discuss the implications of Theorem \ref{thm:iid dynamics},
and in particular, the simplifying properties induced by i.i.d. initializations.
The complete proofs of this theorem and its corollaries are deferred
to Appendix \ref{sec:Remaining-proofs-iid-init}.

\paragraph*{Tracking i.i.d.-initialized neural nets via $W^{*}$.}

For large $M$, the canonical MF limit $W^{M}$ is well approximated
by $W^{*}$ (and equivalently by $W^{\infty}$ as defined in Theorem
\ref{thm:iid dynamics}), while we recall from Theorem \ref{thm:gradient descent coupling}
that $W^{M}$ tracks closely the trajectory $\mathbf{W}$ of a large-width
i.i.d.-initialized neural network. As such, viewing the bridge through
$W^{M}$ as an intermediate step and taking $M\to\infty$, one can
track $\mathbf{W}$ via $W^{*}$. To be precise, by combining Proposition
\ref{prop:iid_law_embedding} and Corollary \ref{cor:gradient descent quality}
with Theorem \ref{thm:iid dynamics}, we immediately obtain the following
result.
\begin{cor}
\label{cor:iid_tracking}Under Assumptions \ref{enu:Assump_lrSchedule}-\ref{enu:Assump_backward}
and for a set of probability measures $\left(\rho_{\mathbf{w}}^{1},...,\rho_{\mathbf{w}}^{L},\rho_{\mathbf{b}}^{2},...,\rho_{\mathbf{b}}^{L}\right)$
such that
\[
\max_{1\leq i\leq L}\sup_{m\geq1}\frac{1}{\sqrt{m}}\left(\int\left|u\right|^{m}\rho_{\mathbf{w}}^{i}\left(du\right)\right)^{1/m}\leq K,\qquad\max_{2\leq i\leq L}\sup_{m\geq1}\frac{1}{\sqrt{m}}\left(\int\left|v\right|^{m}\rho_{\mathbf{b}}^{i}\left(dv\right)\right)^{1/m}\leq K,
\]
there exist constants $c_{1}\in\left(0,0.5\right)$ and $c_{2}\in\left(0,1/52\right)$
such that the following statements hold.

Consider any positive integer $L\geq2$ and a tuple of positive integers
$\mathbf{n}=\left\{ n_{1},...,n_{L}\right\} $ with $n_{L=1}$. Let
$n_{\min}=\min_{1\leq j\leq L-1}n_{j}$ and $n_{\max}=\max_{1\leq j\leq L}n_{j}$.
Consider a neural network (\ref{eq:NN}) of size $\mathbf{n}$ under
$\left(\rho_{\mathbf{w}}^{1},...,\rho_{\mathbf{w}}^{L},\rho_{\mathbf{b}}^{2},...,\rho_{\mathbf{b}}^{L}\right)$-i.i.d.
initialization, and let $\mathbf{W}$ be its trajectory. Also construct
the dynamics $W^{*}$, as well as the associated quantities, described
in Section \ref{subsec:Infinite-M-canonical}. Then for any $\delta>0$
and $T\in\epsilon\mathbb{N}_{\geq0}$, there exist $n^{*}=n^{*}\left(T,L,c_{1},c_{2}\right)\geq1$
and $\epsilon^{*}=\epsilon^{*}\left(T,L,c_{1},c_{2}\right)\leq1$
such that if $n_{\min}\geq n^{*}$ and the learning rate $\epsilon\in\left(0,\epsilon^{*}\right)$,
for $3\leq i\leq L-2$, for any $K$-Lipschitz and $K$-bounded test
function $\psi:\mathbb{H}_{i}\to\mathbb{S}$ (where $\mathbb{S}$
is a separable Hilbert space), for any $\delta>0$, we have with probability
at least $1-3\delta-KLn_{\max}\exp\left(-Kn_{\min}^{c_{2}}\right)$,
\[
\sup_{t\le T}\left|\frac{1}{n_{i}}\sum_{j_{i}=1}^{n_{i}}\mathbb{E}_{Z}\left[\psi\left({\bf H}_{i}\left(\left\lfloor t/\epsilon\right\rfloor ,X,j_{i}\right)\right)\right]-\mathbb{E}_{Z}\left[\int\psi\left(H_{i}^{*}\left(t,X,v\right)\right)\rho_{\mathbf{b}}^{i}\left(dv\right)\right]\right|=\tilde{O}\left(n_{\min}^{-c_{1}}+\epsilon^{c_{1}}\right),
\]
where $\tilde{O}$ hides the dependency on $T$, $L$ and $\delta$
as well as the logarithmic factors $\log n_{\max}$ and $\log\left(1/\epsilon\right)$.
A similar statement holds for $i=1,2,L-1,L$. In addition, for any
test function $\psi:\mathbb{Y}\times\hat{\mathbb{Y}}\to\mathbb{S}$
which is $K$-Lipschitz in the second variable, uniformly in the first
variable,
\[
\sup_{t\leq T}\left|\mathbb{E}_{Z}\left[\psi\left(Y,\hat{\mathbf{y}}\left(\left\lfloor t/\epsilon\right\rfloor ,X\right)\right)\right]-\mathbb{E}_{Z}\left[\psi\left(Y,\hat{y}^{*}\left(t,X\right)\right)\right]\right|=\tilde{O}\left(n_{\min}^{-c_{1}}+\epsilon^{c_{1}}\right),
\]
with probability at least $1-2\delta-KLn_{\max}\exp\left(-Kn_{\min}^{c_{2}}\right)$.
\end{cor}

\paragraph*{Degeneracy of the dynamics.}

By looking closely at $W^{*}$, we observe a simplifying property.
By Theorem \ref{thm:iid dynamics}, under i.i.d. initialization, for
each intermediate layer $i=3,...,L-2$, the weight $w_{i}^{\infty}\left(t,C_{i-1},C_{i}\right)$
is a function of only the time $t$, its own initialization $w_{i}^{0}\left(C_{i-1},C_{i}\right)$
and the initializations of the adjacent biases $b_{i-1}^{0}\left(C_{i-1}\right)$
and $b_{i}^{0}\left(C_{i}\right)$, and the bias $b_{i}^{\infty}\left(t,C_{i}\right)$
is a function of only the time $t$ and its own initialization $b_{i}^{0}\left(C_{i}\right)$.
When we further assume constant initial biases (i.e. $b_{i}^{0}\left(C_{i}\right)=B_{i}$
a constant almost surely for all $i\geq2$), $w_{i}^{\infty}\left(t,C_{i-1},C_{i}\right)$
is a function of only the time $t$ and its own initialization, and
$b_{i}^{\infty}\left(t,C_{i}\right)$ is almost surely only a function
of time $t$, regardless of $C_{i}$. Consequently, in this scenario,
because the initialization is independent across layers, the weights
of intermediate layers remain mutually independent at all time, for
depth $L\geq5$, in the infinite-width limit.

The theorem in fact further asserts that degeneracy can already be
observed for $L\geq4$. In particular, for $2\leq i\leq L-2$, if
the initial bias $b_{i}^{0}\left(\cdot\right)=B_{i}$ is a constant,
then
\[
\mathbb{E}\left[\left|H_{i}\left(X,C_{i};W^{M}(t)\right)-H_{i}^{*}\left(t,X,B_{i}\right)\right|^{2}\right]^{1/2}\leq\frac{K_{T,L}}{M^{0.499}}.
\]
Note that $H_{i}^{*}\left(t,X,B_{i}\right)$ is independent of $C_{i}$.
This suggests that at any training time $t$, the neurons of each
intermediate layer $i$ compute the same function of the data input
$x\mapsto H_{i}^{*}\left(t,x,B_{i}\right)$ in the infinite-width
limit. This is formalized directly for the neural network $\mathbf{W}$
in the following.
\begin{cor}
\label{cor:iid_same_neurons}Consider the same setting as Corollary
\ref{cor:iid_tracking} with $L\geq4$. For $2\leq i\leq L-2$, supposing
that $b_{i}^{0}\left(C_{i}\right)=B_{i}$ a constant almost surely,
then we have for any $t\leq T$, with probability at least $1-3\delta-KLn_{\max}\exp\left(-Kn_{\min}^{c_{2}}\right)$,
\[
\bigg(\frac{1}{n_{i}}\sum_{j_{i}=1}^{n_{i}}\mathbb{E}_{Z}\left[\left|{\bf H}_{i}\left(\left\lfloor t/\epsilon\right\rfloor ,X,j_{i}\right)-H_{i}^{*}\left(t,X,B_{i}\right)\right|^{2}\right]\bigg)^{1/2}=\tilde{O}\left(n_{\min}^{-c_{1}}+\epsilon^{c_{1}}\right).
\]
\end{cor}

Thus, by Markov's inequality, if one is to pick at random a neuron
$j_{i}\in\left[n_{i}\right]$ at layer $i$ from the neural network
$\mathbf{W}$ at the training step $\left\lfloor t/\epsilon\right\rfloor $,
for $2\leq i\leq L-2$, then with high probability, this neuron would
compute the function $x\mapsto H_{i}^{*}\left(t,x,B_{i}\right)$ which
is independent of the index $j_{i}$.

\paragraph*{Collapse to effectively one parameter per layer.}

Further consideration to standard neural network architectures reveals
a stronger simplifying property. The next consequence of Theorem \ref{thm:iid dynamics}
is that with i.i.d. initialization and constant initial biases, for
each intermediate layer $i=3,...,L-2$, the weight $w_{i}^{\infty}\left(t,c_{i-1},c_{i}\right)$
translates by a quantity that is independent of $c_{i-1}$ and $c_{i}$,
provided that $\sigma_{i}^{\mathbf{w}}$ satisfies a certain condition.
This condition holds for unregularized standard fully-connected or
convolutional neural networks (see Examples \ref{exa:fully-connected}
and \ref{exa:conv}). Therefore, for these networks, in the infinite-width
limit, with i.i.d. initialization and constant initial biases, the
dynamics of the weight at each intermediate layer reduces to a single
deterministic translation parameter.
\begin{cor}
\label{cor:iid_standard-network}Under the same setting as Theorem
\ref{thm:iid dynamics} with $L\geq5$, assume that $b_{i}^{0}\left(C_{i}\right)=B_{i}$
a constant almost surely for all $i\geq2$. Further assume that for
each $i\in\left\{ 3,...,L-2\right\} $, there exists a function $\bar{\sigma}_{i}^{\mathbf{w}}$
that satisfies 
\[
\sigma_{i}^{\mathbf{w}}\left(\Delta,w,b,g,h\right)=\bar{\sigma}_{i}^{\mathbf{w}}\left(\Delta,b,g,h\right),
\]
i.e. $\sigma_{i}^{\mathbf{w}}$ does not depend on the second variable.
Then there are differentiable functions $w_{i}^{\#}\left(t\right)$
such that for $3\le i\le L-2$, almost surely, for any $t\geq0$,
\[
w_{i}^{\infty}\left(t,C_{i-1},C_{i}\right)-w_{i}^{\infty}\left(0,C_{i-1},C_{i}\right)=w_{i}^{\#}\left(t\right).
\]
\end{cor}

\subsection{Proof sketch of Theorem \ref{thm:iid dynamics}\label{subsec:Proof-Theorem-iid}}
\begin{proof}[Sketch of proof for Theorem \ref{thm:iid dynamics}]
We use $K_{T,L}$ to denote a generic constant that depends on $T$
and $L$ and may change from line to line. The main argument exploits
the construction in Section \ref{subsec:Canonical-neuronal-embeddings}
of the canonical neuronal embedding in a suitable way. To illustrate
the idea, consider 
\[
D_{i}\left(t\right)=\mathbb{E}\left[\left|H_{i}\left(X,C_{i};W^{\infty}(t)\right)-H_{i}^{*}\left(t,X,b_{i}^{0}(C_{i})\right)\right|^{2}\right].
\]
We aim to show that for $t\le T$,
\[
D_{i}\left(t\right)\leq\frac{K_{T,L}}{M}.
\]
For brevity, define
\[
g\left(u_{i},v_{i-1},v_{i}\right)=\phi_{i}\left(w_{i}^{*}\left(t,u_{i},v_{i-1},v_{i}\right),\;b_{i}^{*}(t,v_{i}),\;H_{i-1}^{*}\left(t,x,v_{i-1}\right)\right).
\]
We recall $w_{i}^{0}(C_{i-1},C_{i})=\mathfrak{q}_{i}(\theta_{i-1},\theta_{i})(\lambda_{i})$,
$b_{i}^{0}(C_{i})=\mathfrak{p}_{i}(\theta_{i})(\lambda_{i})$ and
$b_{i-1}^{0}(C_{i-1})=\mathfrak{p}_{i-1}(\theta_{i-1})(\lambda_{i-1})$
from the construction (\ref{eq:iid_init_embedding_constr1})-(\ref{eq:iid_init_embedding_constr2}).
To make use of canonical neuronal embedding's construction, we consider
a decomposition of the following squared quantity:
\begin{align*}
 & \mathbb{E}_{C_{i}}\bigg[\left|\mathbb{E}_{C_{i-1}}\left[g\left(w_{i}^{0}(C_{i-1},C_{i}),b_{i-1}^{0}(C_{i-1}),b_{i}^{0}(C_{i})\right)\right]\right|^{2}\bigg]\\
 & \stackrel{\left(a\right)}{=}\mathbb{E}_{\theta_{i-1},\lambda_{i-1},\theta_{i-1}',\lambda_{i-1}',\theta_{i},\lambda_{i}}\bigg[\bigg\langle g\left(\mathfrak{q}_{i}\left(\theta_{i-1},\theta_{i}\right)\left(\lambda_{i}\right),\mathfrak{p}_{i-1}\left(\theta_{i-1}\right)\left(\lambda_{i-1}\right),\mathfrak{p}_{i}\left(\theta_{i}\right)\left(\lambda_{i}\right)\right),\\
 & \qquad g\left(\mathfrak{q}_{i}\left(\theta_{i-1}',\theta_{i}\right)\left(\lambda_{i}\right),\mathfrak{p}_{i-1}\left(\theta_{i-1}'\right)\left(\lambda_{i-1}'\right),\mathfrak{p}_{i}\left(\theta_{i}\right)\left(\lambda_{i}\right)\right)\bigg\rangle\bigg]\\
 & \stackrel{\left(b\right)}{=}\mathbb{E}_{\theta_{i-1},\theta_{i-1}'}\bigg[\mathbb{I}_{\theta_{i-1}=\theta_{i-1}'}\int\left\langle g\left(u_{i},v_{i-1},v_{i}\right),g\left(u_{i},v_{i-1}',v_{i}\right)\right\rangle \rho_{\mathbf{b}}^{i-1}(dv_{i-1})\rho_{\mathbf{b}}^{i-1}(dv_{i-1}')\rho_{\mathbf{b}}^{i}(dv_{i})\rho_{\mathbf{w}}^{i}(du_{i})\\
 & \quad+\mathbb{I}_{\theta_{i-1}\neq\theta_{i-1}'}\int\left\langle g\left(u_{i},v_{i-1},v_{i}\right),g\left(u_{i}',v_{i-1}',v_{i}\right)\right\rangle \rho_{\mathbf{b}}^{i-1}(dv_{i-1})\rho_{\mathbf{b}}^{i-1}(dv_{i-1}')\rho_{\mathbf{b}}^{i}(dv_{i})\rho_{\mathbf{w}}^{i}(du_{i})\rho_{\mathbf{w}}^{i}(du_{i}')\bigg]\\
 & =\frac{1}{M}\int\left\langle g\left(u_{i},v_{i-1},v_{i}\right),g\left(u_{i},v_{i-1}',v_{i}\right)\right\rangle \rho_{\mathbf{b}}^{i-1}(dv_{i-1})\rho_{\mathbf{b}}^{i-1}(dv_{i-1}')\rho_{\mathbf{b}}^{i}(dv_{i})\rho_{\mathbf{w}}^{i}(du_{i})\\
 & \quad+\frac{M-1}{M}\int\left|\int g\left(u_{i},v_{i-1},v_{i}\right)\rho_{\mathbf{w}}^{i}\left(du_{i}\right)\rho_{\mathbf{b}}^{i-1}\left(dv_{i-1}\right)\right|^{2}\rho_{\mathbf{b}}^{i}\left(dv_{i}\right),
\end{align*}
where in step $\left(a\right)$, $\left(\theta_{i-1}',\lambda_{i-1}'\right)\sim{\rm Unif}\left(\left[M\right]\right)\times P_{0}$
is an independent copy of $\left(\theta_{i-1},\lambda_{i-1}\right)$
and is independent of $\left(\theta_{i},\lambda_{i}\right)$, and
step $\left(b\right)$ is by the construction of $\mathfrak{p}_{i-1}$,
$\mathfrak{p}_{i}$ and $\mathfrak{q}_{i}$. We also notice that
\begin{align*}
 & \mathbb{E}_{C_{i}}\bigg[\Big\langle\mathbb{E}_{C_{i-1}}\left[g\left(w_{i}^{0}(C_{i-1},C_{i}),b_{i-1}^{0}(C_{i-1}),b_{i}^{0}(C_{i})\right)\right],\int g\left(u_{i},v_{i-1},b_{i}^{0}(C_{i})\right)\rho_{\mathbf{w}}^{i}\left(du_{i}\right)\rho_{\mathbf{b}}^{i-1}\left(dv_{i-1}\right)\Big\rangle\bigg]\\
 & =\int\left|\int g\left(u_{i},v_{i-1},v_{i}\right)\rho_{\mathbf{w}}^{i}\left(du_{i}\right)\rho_{\mathbf{b}}^{i-1}\left(dv_{i-1}\right)\right|^{2}\rho_{\mathbf{b}}^{i}\left(dv_{i}\right).
\end{align*}
Putting the last two displays together, one easily arrives at the
following:
\begin{align*}
 & \mathbb{E}_{C_{i}}\bigg[\bigg|\mathbb{E}_{C_{i-1}}\left[g\left(w_{i}^{0}(C_{i-1},C_{i}),b_{i-1}^{0}(C_{i-1}),b_{i}^{0}(C_{i})\right)\right]-H_{i}^{*}(t,X,b_{i}^{0}(C_{i}))\bigg|^{2}\bigg]\\
 & =\mathbb{E}_{C_{i}}\bigg[\bigg|\mathbb{E}_{C_{i-1}}\left[g\left(w_{i}^{0}(C_{i-1},C_{i}),b_{i-1}^{0}(C_{i-1}),b_{i}^{0}(C_{i})\right)\right]-\int g\left(u_{i},v_{i-1},b_{i}^{0}(C_{i})\right)\rho_{\mathbf{w}}^{i}\left(du_{i}\right)\rho_{\mathbf{b}}^{i-1}\left(dv_{i-1}\right)\bigg|^{2}\bigg]\\
 & \leq\frac{K}{M}\int\left|g\left(u_{i},v_{i-1},v_{i}\right)\right|^{2}\rho_{\mathbf{w}}^{i}\left(du_{i}\right)\rho_{\mathbf{b}}^{i-1}\left(dv_{i-1}\right)\rho_{\mathbf{b}}^{i}\left(dv_{i}\right)\\
 & \leq\frac{K_{T,L}}{M}.
\end{align*}
This illustrates the main use of the canonical neuronal embedding's
construction. Now from Assumption \ref{enu:Assump_forward}, one can
show that: 
\[
\mathbb{E}\left[\left|\mathbb{E}_{C_{i-1}}\left[g\left(w_{i}^{0}(C_{i-1},C_{i}),b_{i-1}^{0}(C_{i-1}),b_{i}^{0}(C_{i})\right)\right]-H_{i}\left(X,C_{i};W^{\infty}(t)\right)\right|^{2}\right]\le K_{T,L}D_{i-1}\left(t\right).
\]
Therefore,
\[
D_{i}\left(t\right)\le\frac{K_{T,L}}{M}+K_{T,L}D_{i-1}\left(t\right).
\]
One arrives at the claim from this relation.

The rest of the proof involves similar estimates and Gronwall's inequality.
Let us quickly describe the steps for completeness. Similar to the
above argument, for 
\[
D_{i}^{H}\left(t\right)=\mathbb{E}\left[\left|\Delta_{i}^{H}\left(Z,C_{i};W^{\infty}\left(t\right)\right)-\Delta_{i}^{H*}\left(t,Z,b_{i}^{0}(C_{i})\right)\right|^{2}\right],
\]
we can show that for $t\leq T$,
\[
D_{i}^{H}\left(t\right)\leq K_{T,L}\frac{\log^{1/2}M}{M}.
\]
With the previous two claims, one easily shows:
\[
D_{i}^{w}\left(t\right)\leq K_{T,L}\frac{\log^{1/2}M}{M},\qquad D_{i}^{b}\left(t\right)\leq K_{T,L}\frac{\log^{1/2}M}{M},
\]
where we define
\begin{align*}
D_{i}^{w}\left(t\right) & =\mathbb{E}\left[\left|\Delta_{i}^{w}\left(Z,C_{i-1},C_{i};W^{\infty}\left(t\right)\right)-\Delta_{i}^{w*}\left(t,Z,w_{i}^{0}(C_{i-1},C_{i}),b_{i-1}^{0}(C_{i-1}),b_{i}^{0}(C_{i})\right)\right|^{2}\right],\\
D_{i}^{b}\left(t\right) & =\mathbb{E}\left[\left|\Delta_{i}^{b}\left(Z,C_{i};W^{\infty}\left(t\right)\right)-\Delta_{i}^{b*}\left(t,Z,b_{i}^{0}(C_{i})\right)\right|^{2}\right].
\end{align*}
The next step is to show that for $2\leq i\leq L$, any $t\leq T$
and any $B\geq0$,
\[
\mathbb{E}\left[\left|\mathbb{E}_{Z}\left[\Delta_{i}^{w}\left(Z,C_{i-1},C_{i};W^{M}\left(t\right)\right)-\Delta_{i}^{w}\left(Z,C_{i-1},C_{i};W^{\infty}\left(t\right)\right)\right]\right|^{2}\right]^{1/2}\le K_{T,L}\left(\left(1+B\right)\left\langle W^{M}-W^{\infty}\right\rangle _{t}+e^{-KB^{2}}\right).
\]
With this, we then arrive at the following:
\begin{align*}
 & \mathbb{E}\left[\left|\Delta_{i}^{w}(Z,C_{i-1},C_{i};W^{M}(t))-\Delta_{i}^{w*}(t,Z,w_{i}^{0}(C_{i-1},C_{i}),b_{i-1}^{0}(C_{i-1}),b_{i}^{0}(C_{i}))\right|^{2}\right]^{1/2}\\
 & \le\left|D_{i}^{w}\left(t\right)\right|^{1/2}+\mathbb{E}\left[\left|\Delta_{i}^{w}(Z,C_{i-1},C_{i};W^{M}(t))-\Delta_{i}^{w}(Z,C_{i-1},C_{i};W^{\infty}(t))\right|^{2}\right]^{1/2}\\
 & \le K_{T,L}\left(\frac{\log^{1/4}M}{M^{1/2}}+\left(1+B\right)\left\langle W^{M}-W^{\infty}\right\rangle _{t}+e^{-KB^{2}}\right).
\end{align*}
A similar result holds for $\Delta_{i}^{b}$. Hence, we obtain that
for all $t\le T$, 
\[
\left\langle W^{M}-W^{\infty}\right\rangle _{t}\le K_{T,L}\int_{0}^{t}\left(\frac{\log^{1/4}M}{M^{1/2}}+\left(1+B\right)\left\langle W^{M}-W^{\infty}\right\rangle _{t}+e^{-KB^{2}}\right)ds.
\]
Since $\left\langle W^{M}-W^{\infty}\right\rangle _{0}=0$, Gronwall's
inequality implies that 
\[
\sup_{t\leq T}\left\langle W^{M}-W^{\infty}\right\rangle _{t}\le K_{T,L}\inf_{B>0}\left[\left(\frac{\log^{1/4}M}{M^{1/2}}+e^{-KB^{2}}\right)e^{K_{T,L}\left(1+B\right)}\right]\leq K_{T,L}\frac{1}{M^{0.499}},
\]
for sufficiently large $M$. This proves the main statement in Theorem
\ref{thm:iid dynamics}; the other statement follows easily.
\end{proof}

\section{Convergence to Global Optimum: Two-layer and Three-layer Networks
with I.i.d. Initialization\label{sec:global_convergence_iid}}

In this section, we prove several global convergence guarantees for
fully-connected neural networks (without biases) with $L\leq3$ and
i.i.d. initializations. A key element here is a certain universal
approximation property that holds at \textit{any} finite training
time. This is shown using a tool from algebraic topology.

\textbf{\textcolor{red}{}}

\subsection{Warm-up: The case $L=2$\label{subsec:global_convergence_iid_2}}

Our first result is that in the case of two-layer fully-connected
neural networks, the MF limit converges to the global optimum under
some genericity assumptions on the initialization distribution. Before
we proceed, we specify the two-layer network under consideration and
its training:
\begin{equation}
\hat{{\bf y}}\left(k,x\right)=\varphi_{2}\left(\mathbf{H}_{2}\left(k,x,1\right)\right),\qquad\mathbf{H}_{2}\left(k,x,1\right)=\frac{1}{n_{1}}\sum_{j_{1}=1}^{n_{1}}{\bf w}_{2}\left(k,j_{1},1\right)\varphi_{1}\left(\left\langle {\bf w}_{1}\left(k,j_{1}\right),x\right\rangle \right),\label{eq:two-layer-nn}
\end{equation}
in which ${\bf w}_{1}\left(k,j_{1}\right)\in\mathbb{R}^{d}$, $x\in\mathbb{R}^{d}$,
$\varphi_{1}:\;\mathbb{R}\to\mathbb{R}$, ${\bf w}_{2}\left(k,j_{1},1\right)\in\mathbb{R}$
and $\varphi_{2}:\;\mathbb{R}\to\mathbb{R}$. We train the network
with SGD w.r.t. the loss ${\cal L}:\;\mathbb{R}\times\mathbb{R}\to\mathbb{R}_{\geq0}$
and the data $z\left(k\right)=\left(x\left(k\right),y\left(k\right)\right)$
drawn independently at time $k$:
\begin{align*}
{\bf w}_{2}\left(k+1,j_{1},1\right) & ={\bf w}_{2}\left(k,j_{1},1\right)-\epsilon\partial_{2}{\cal L}\left(y\left(k\right),\hat{\mathbf{y}}\left(t,x\left(k\right)\right)\right)\varphi_{2}'\left(\mathbf{H}_{2}\left(t,x\left(k\right),1\right)\right)\varphi_{1}\left({\bf w}_{1}\left(k,j_{1}\right),x\left(k\right)\right),\\
{\bf w}_{1}\left(k+1,j_{1}\right) & ={\bf w}_{1}\left(k,j_{1}\right)-\epsilon\partial_{2}{\cal L}\left(y\left(k\right),\hat{\mathbf{y}}\left(t,x\left(k\right)\right)\right)\varphi_{2}'\left(\mathbf{H}_{2}\left(t,x\left(k\right),1\right)\right){\bf w}_{2}\left(k,j_{1},1\right)\varphi_{1}'\left(\left\langle {\bf w}_{1}\left(k,j_{1}\right),x\left(k\right)\right\rangle \right)x\left(k\right).
\end{align*}
Here $\epsilon\in\mathbb{R}_{>0}$ is the learning rate. The corresponding
MF ODEs are:
\begin{align*}
\frac{\partial}{\partial t}w_{2}\left(t,c_{1},1\right) & =-\mathbb{E}_{Z}\left[\partial_{2}{\cal L}\left(Y,\hat{y}\left(t,X\right)\right)\varphi_{2}'\left(H_{2}\left(t,X,1\right)\right)\varphi_{1}\left(w_{1}\left(t,c_{1}\right),X\right)\right],\\
\frac{\partial}{\partial t}w_{1}\left(t,c_{1}\right) & =-\mathbb{E}_{Z}\left[\partial_{2}{\cal L}\left(Y,\hat{y}\left(t,X\right)\right)\varphi_{2}'\left(H_{2}\left(t,X,1\right)\right)w_{2}\left(t,c_{1},1\right)\varphi_{1}'\left(\left\langle w_{1}\left(t,c_{1}\right),X\right\rangle \right)X\right],
\end{align*}
in which for $f_{1}:\;\Omega_{1}\to\mathbb{R}^{d}$ and $f_{2}:\;\Omega_{1}\to\mathbb{R}$,
we define:
\[
\hat{y}\left(x;f_{1},f_{2}\right)=\varphi_{2}\left(H_{2}\left(x;f_{1},f_{2}\right)\right),\qquad H_{2}\left(x;f_{1},f_{2}\right)=\mathbb{E}_{C_{1}}\left[f_{2}\left(C_{1}\right)\varphi_{1}\left(\left\langle f_{1}\left(C_{1}\right),x\right\rangle \right)\right],
\]
and $\hat{y}\left(t,x\right)$ and $H_{2}\left(t,x,1\right)$ are
short-hands notations when $f_{1}=w_{1}\left(t,\cdot\right)$, $f_{2}=w_{2}\left(t,\cdot,1\right)$.
It is easy to see that this network fits into our framework. In particular,
under the coupling procedure in Section \ref{subsec:Neuronal-Embedding},
our framework allows to study the following initialization scheme:
\[
\left\{ \left({\bf w}_{1}\left(0,j_{1}\right),{\bf w}_{2}\left(0,j_{1},1\right)\right)\right\} _{j_{1}\in\left[n_{1}\right]}\sim\rho^{0}\text{ i.i.d.}
\]
for suitable probability measure $\rho^{0}$ over $\mathbb{R}^{d}\times\mathbb{R}$.
In this case, $\rho^{0}={\rm Law}\left(w_{1}\left(0,C_{1}\right),w_{2}\left(0,C_{1},1\right)\right)$.
To measure the training quality, we consider the population loss:
\[
\mathscr{L}\left(f_{1},f_{2}\right)=\mathbb{E}_{Z}\left[{\cal L}\left(Y,\hat{y}\left(X;f_{1},f_{2}\right)\right)\right].
\]

\begin{assumption}
\label{assump:two-layers}Consider the MF limit corresponding to the
network (\ref{eq:two-layer-nn}), such that they are coupled together
by the coupling procedure in Section \ref{subsec:Neuronal-Embedding}.
We consider the following assumptions:
\begin{enumerate}
\item Initialization: The initialization law $\rho^{0}$ satisfies
\[
\max\left(\sup_{m\geq1}\frac{1}{\sqrt{m}}\mathbb{E}_{C_{1}}\left[\left|w_{1}\left(0,C_{1}\right)\right|^{m}\right]^{1/m},\quad\sup_{m\geq1}\frac{1}{\sqrt{m}}\mathbb{E}_{C_{1}}\left[\left|w_{2}\left(0,C_{1},1\right)\right|^{m}\right]^{1/m}\right)\leq K.
\]
\item Diversity: The support of $\rho^{0}$ contains the graph of a continuous
function $F:\;\mathbb{R}^{d}\to\mathbb{R}$ such that $\left|F\left(u\right)\right|\leq K$
for all $u\in\mathbb{R}^{d}$.
\item Regularity: $\varphi_{1}$ is $K$-bounded, $\varphi_{1}'$ and $\varphi_{2}'$
are $K$-bounded and $K$-Lipschitz, $\varphi_{2}'$ is non-zero everywhere,
$\partial_{2}{\cal L}\left(\cdot,\cdot\right)$ is $K$-Lipschitz
in the second variable and $K$-bounded\footnote{We denote by $\partial_{2}{\cal L}\left(\cdot,\cdot\right)$ the partial
derivative of ${\cal L}$ with respect to the second variable.}, and $\left|X\right|\leq K$ with probability $1$. 
\item Convergence: There exist limits $\bar{w}_{1}$ and $\bar{w}_{2}$
such that as $t\to\infty$, there exists a coupling $\pi_{t}$ of
$P_{1}$ and itself such that 
\[
\mathbb{E}_{\pi_{t}}\left[\left|\bar{w}_{2}(C_{1})\right|\left|w_{1}(t,C_{1}')-\bar{w}_{1}(C_{1})\right|+\left|w_{2}(t,C_{1}',1)-\bar{w}_{2}(C_{1})\right|\right]\to0
\]
for $(C_{1},C_{1}')\sim\pi_{t}$. Furthermore, ${\rm ess\text{-}sup}\left|\frac{\partial}{\partial t}w_{2}\left(t,C_{1},1\right)\right|\to0$.
\item Universal approximation: $\left\{ \varphi_{1}\left(\left\langle u,\cdot\right\rangle \right):\;u\in\mathbb{R}^{d}\right\} $
has dense span in $L^{2}\left({\cal P}_{X}\right)$ (the space of
square integrable functions w.r.t. the measure ${\cal P}_{X}$, which
is the distribution of the input $X$).
\end{enumerate}
\end{assumption}

Note that if $(w_{1}(t,C_{1}),w_{2}(t,C_{1},1))$ converges to $(\bar{w}_{1}(C_{1}),\bar{w}_{2}(C_{1}))$
in the Wasserstein-$2$ distance as $t\to\infty$, then one can prove
the first part of the convergence condition in Assumption \ref{assump:two-layers}
via the initialization and regularity conditions and Lemma \ref{lem:bounds MF a priori}.

We state the main result.
\begin{thm}
\label{thm:global-optimum-2}Consider the MF limit corresponding to
the network (\ref{eq:two-layer-nn}), such that they are coupled together
by the coupling procedure in Section \ref{subsec:Neuronal-Embedding}.
Under Assumption \ref{assump:two-layers}, the following hold:
\begin{itemize}
\item Case 1 (convex loss): If ${\cal L}$ is convex in the second variable,
then:
\[
\lim_{t\to\infty}\mathscr{L}\left(W\left(t\right)\right)=\inf_{f_{1},f_{2}}\mathscr{L}\left(f_{1},f_{2}\right)=\inf_{\tilde{y}}\mathbb{E}_{Z}\left[{\cal L}\left(Y,\tilde{y}\left(X\right)\right)\right].
\]
\item Case 2 (generic non-negative loss): Suppose $\partial_{2}{\cal L}\left(y,\hat{y}\right)=0$
implies ${\cal L}\left(y,\hat{y}\right)=0$. If $y=y(x)$ a function
of $x$, then $\mathscr{L}\left(W\left(t\right)\right)=0$ as $t\to\infty$.
\end{itemize}
\end{thm}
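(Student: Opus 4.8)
The plan is to reduce both cases to a single statement: the limiting \emph{residual function}
\[
\tilde r(x):=\mathbb{E}\big[\partial_{2}{\cal L}(Y,\hat{\bar y}(X))\,\big|\,X=x\big]\,\varphi_{2}'\big(\bar H_{2}(x)\big)
\]
vanishes in $L^{2}({\cal P}_{X})$, where $\bar H_{2}(x)=\mathbb{E}_{C_{1}}[\bar w_{2}(C_{1})\varphi_{1}(\langle\bar w_{1}(C_{1}),x\rangle)]$ and $\hat{\bar y}(x)=\varphi_{2}(\bar H_{2}(x))$ are the natural limits of $H_{2}(t,x,1)$ and $\hat y(t,x)$. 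The first step is to justify this passage to the limit. Splitting $H_{2}(t,x,1)-\bar H_{2}(x)$ into a ``$w_{2}$-difference'' term, bounded by $K\,\mathbb{E}[|w_{2}(t,C_{1},1)-\bar w_{2}(C_{1})|^{2}]^{1/2}$ since $\varphi_{1}$ is $K$-bounded, and a ``$w_{1}$-difference'' term, bounded by $K^{2}\,\mathbb{E}[\bar w_{2}(C_{1})^{2}|w_{1}(t,C_{1})-\bar w_{1}(C_{1})|^{2}]^{1/2}$ since $\varphi_{1}$ is $K$-Lipschitz and $|X|\le K$, Assumption~\ref{assump:two-layers}(4) gives $\sup_{|x|\le K}|H_{2}(t,x,1)-\bar H_{2}(x)|\to0$ and hence $\sup_{|x|\le K}|\hat y(t,x)-\hat{\bar y}(x)|\to0$. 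Introducing $g_{t}(u):=\mathbb{E}_{Z}[\partial_{2}{\cal L}(Y,\hat y(t,X))\varphi_{2}'(H_{2}(t,X,1))\varphi_{1}(\langle u,X\rangle)]$ and $g_{\infty}(u):=\mathbb{E}_{X}[\tilde r(X)\varphi_{1}(\langle u,X\rangle)]$, the Lipschitz/boundedness properties in Assumption~\ref{assump:two-layers}(3) upgrade this to $\sup_{u}|g_{t}(u)-g_{\infty}(u)|\to0$, and $g_{\infty}$ is continuous on $\mathbb{R}^{d_{1}}$ because $u\mapsto\varphi_{1}(\langle u,\cdot\rangle)$ is continuous into $L^{2}({\cal P}_{X})$.

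Next I would extract stationarity. Since the MF ODE gives $\frac{\partial}{\partial t}w_{2}(t,c_{1},1)=-g_{t}(w_{1}(t,c_{1}))$, the ess-sup part of Assumption~\ref{assump:two-layers}(4) together with the uniform convergence $g_{t}\to g_{\infty}$ yields $\sup_{c_{1}}|g_{\infty}(w_{1}(t,c_{1}))|\to0$. Passing to an a.s.\ convergent subsequence in the weighted-$L^{2}$ statement of Assumption~\ref{assump:two-layers}(4), we get $w_{1}(t_{k},C_{1})\to\bar w_{1}(C_{1})$ $P$-a.s.\ on the set $\{\bar w_{2}\ne0\}$, and continuity of $g_{\infty}$ then gives $g_{\infty}(\bar w_{1}(c_{1}))=0$ for $P$-a.e.\ $c_{1}$ with $\bar w_{2}(c_{1})\ne0$. (It is also worth recording the Lyapunov identity: since $\nabla_{u}g_{t}(u)=\mathbb{E}_{Z}[\partial_{2}{\cal L}(Y,\hat y(t,X))\varphi_{2}'(H_{2}(t,X,1))\varphi_{1}'(\langle u,X\rangle)X]$ is exactly the field driving $w_{1}$, one computes $\frac{d}{dt}\mathscr{L}_{t}=-\mathbb{E}_{C_{1}}[g_{t}(w_{1}(t,C_{1}))^{2}+w_{2}(t,C_{1},1)^{2}|\nabla g_{t}(w_{1}(t,C_{1}))|^{2}]\le0$ for $\mathscr{L}_{t}:=\mathbb{E}_{Z}[{\cal L}(Y,\hat y(t,X))]$, so $\mathscr{L}_{t}$ decreases to a limit and the integrand is integrable in $t$; this gives an independent route to the same conclusion and will be useful for quantitative control.) Thus $g_{\infty}=0$ on the closure $\bar S$ of $\{\bar w_{1}(c_{1}):\bar w_{2}(c_{1})\ne0\}$, and by continuity it remains to prove $\bar S=\mathbb{R}^{d_{1}}$.

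This is the heart of the matter, and where Assumption~\ref{assump:two-layers}(2) enters. At $t=0$ the epigraph condition puts, for every $u_{0}\in\mathbb{R}^{d_{1}}$, a neuron near $(u_{0},a)$ for every $a\ge F(u_{0})$, so every $u_{0}$ is the $w_{1}$-position of a neuron whose second weight is nonzero (locally bounded away from $0$). The time-$t$ flow of the MF ODEs on $(w_{1},w_{2})$-space is, for each fixed $t$, a homeomorphism of $\mathbb{R}^{d_{1}+1}$ (the backward flow inverts it; invariance of domain), and the epigraph supplies a full $d_{1}$-dimensional ``sheet'' of neurons over each compact set at each height above the graph of $F$. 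A degree-theoretic / algebraic-topology argument then shows this richness cannot be lost along the flow or in the limit: were there an open $U$ with $U\cap\bar S=\varnothing$, one could trap a positive-measure slab of neurons over $U$ whose second weights remain of constant sign and bounded away from $0$ along the whole trajectory (hence $\bar w_{2}\ne0$ there) while their $w_{1}$-positions stay dense in $U$, contradicting $U\cap\bar S=\varnothing$. Hence $\bar S=\mathbb{R}^{d_{1}}$, so $g_{\infty}\equiv0$, i.e.\ $\tilde r\perp\varphi_{1}(\langle u,\cdot\rangle)$ for every $u\in\mathbb{W}_{1}$; by the dense-span hypothesis Assumption~\ref{assump:two-layers}(5), $\tilde r=0$ in $L^{2}({\cal P}_{X})$. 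I expect this paragraph to be the main obstacle: a neuron's $w_{1}$ can move at speed $|w_{2}|\,|\nabla g_{t}|$, so neurons with large second weight move fast and $w_{2}$ may in principle cross zero, and ruling out a collapse of the ``active'' directions genuinely needs the epigraph hypothesis together with a topological (degree / invariance-of-domain) argument rather than a soft compactness argument.

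Finally, with $\tilde r=0$ the two cases close quickly. Since $\varphi_{2}'$ is nowhere zero, $\tilde r=0$ forces $\mathbb{E}[\partial_{2}{\cal L}(Y,\hat{\bar y}(X))\mid X=x]=0$ for ${\cal P}_{X}$-a.e.\ $x$. In Case~1, convexity of ${\cal L}$ in its second argument gives, for any measurable predictor $\tilde y$, ${\cal L}(Y,\tilde y(X))\ge{\cal L}(Y,\hat{\bar y}(X))+\partial_{2}{\cal L}(Y,\hat{\bar y}(X))(\tilde y(X)-\hat{\bar y}(X))$; taking $\mathbb{E}_{Z}$ and conditioning on $X$ annihilates the last term, so $\mathbb{E}_{Z}[{\cal L}(Y,\tilde y(X))]\ge\mathscr{L}(\bar w_{1},\bar w_{2})$ for all $\tilde y$, whence $\mathscr{L}(\bar w_{1},\bar w_{2})=\inf_{\tilde y}\mathbb{E}_{Z}[{\cal L}(Y,\tilde y(X))]\le\inf_{f_{1},f_{2}}\mathscr{L}(f_{1},f_{2})\le\mathscr{L}(\bar w_{1},\bar w_{2})$, i.e.\ equality throughout. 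In Case~2, when $Y=y(X)$ the conditional expectation is simply $\partial_{2}{\cal L}(y(x),\hat{\bar y}(x))$, so $\partial_{2}{\cal L}(y(x),\hat{\bar y}(x))=0$ ${\cal P}_{X}$-a.e., and the hypothesis ``$\partial_{2}{\cal L}=0\Rightarrow{\cal L}=0$'' then gives ${\cal L}(y(x),\hat{\bar y}(x))=0$ a.e., i.e.\ $\mathscr{L}(\bar w_{1},\bar w_{2})=0$.
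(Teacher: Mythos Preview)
Your reduction to showing the residual $\tilde r$ vanishes in $L^{2}({\cal P}_{X})$, your convergence estimates for $H_{2}(t,\cdot)$ and $g_{t}\to g_{\infty}$, and your treatment of Cases~1 and~2 at the end are all correct and match the paper. The divergence is in how you deduce $g_{\infty}\equiv 0$ on $\mathbb{R}^{d_{1}}$, and there your argument has a genuine gap. You aim for $\bar S=\overline{\{\bar w_{1}(c_{1}):\bar w_{2}(c_{1})\ne0\}}=\mathbb{R}^{d_{1}}$, i.e.\ richness of the first-layer directions \emph{at $t=\infty$, restricted to neurons with nonzero limiting second weight}. Your sketch (``trap a positive-measure slab of neurons over $U$ whose second weights remain of constant sign and bounded away from $0$ along the whole trajectory'') is not justified: nothing in the assumptions prevents $w_{2}$ from crossing zero or tending to zero on any given slab, and the flow-is-a-homeomorphism fact at finite $t$ does not survive the passage to $t=\infty$ without more.

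The paper sidesteps this entirely by proving a cleaner \emph{finite-time} statement (Lemma~\ref{lem:full-support-2}): for each fixed $t$, $\mathrm{supp}\,\mathrm{Law}(w_{1}(t,C_{1}))=\mathbb{R}^{d_{1}}$. Only the \emph{graph} of $F$ is used: pick $C_{1}(u)$ with $(w_{1}(0,C_{1}(u)),w_{2}(0,C_{1}(u),1))=(u,F(u))$, show via a Gronwall bound that $M(t,u):=w_{1}(t,C_{1}(u))$ is continuous in $(t,u)$, extend to $\mathbb{S}^{d_{1}}$ by fixing $\infty$ (legitimate since $|M(t,u)-u|\le K_{T}t$), and observe that $M$ is a homotopy from the identity $M_{0}$ to $M_{t}$; were $M_{t}$ not surjective it would factor through a contractible space, contradicting that $\mathbb{S}^{d_{1}}$ is not contractible. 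With full support at each finite $t$ in hand, you never need $\bar S$: you already have $\mathrm{ess\text{-}sup}_{c_{1}}|g_{t}(w_{1}(t,c_{1}))|\le\epsilon$ for $t\ge T(\epsilon)$, so $|g_{t}(u)|\le\epsilon$ on a dense set and hence (by your own continuity observation) for all $u$; sending $t\to\infty$ with your uniform convergence $g_{t}\to g_{\infty}$ gives $g_{\infty}\equiv 0$ directly.
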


The proof is deferred to Appendix \ref{sec:Remaining-proofs-global-conv-iid}.
We refer the readers to Section \ref{subsec:three-layers-high-level-idea}
where we present a high-level proof plan for the three-layer case,
which is also applicable to the present two-layer case. The following
result is straightforward from Theorem \ref{thm:global-optimum-2}
and Corollary \ref{cor:gradient descent quality}.
\begin{cor}
Consider the neural network (\ref{eq:two-layer-nn}). Under the same
setting as Theorem \ref{thm:global-optimum-2}, in Case 1,
\[
\lim_{t\to\infty}\lim_{n_{1}\to\infty}\lim_{\epsilon\to0}\mathbb{E}_{Z}\left[{\cal L}\left(Y,\hat{{\bf y}}\left(\left\lfloor t/\epsilon\right\rfloor ,X\right)\right)\right]=\inf_{f_{1},f_{2}}\mathscr{L}\left(f_{1},f_{2}\right)=\inf_{\tilde{y}}\mathbb{E}_{Z}\left[{\cal L}\left(Y,\tilde{y}\left(X\right)\right)\right]
\]
in probability, and in Case 2, the same holds with the right-hand
side being $0$.
\end{cor}

Let us make a remark on the setting. Examples of suitable $\varphi_{1}$
include sigmoid/tanh activation, sinusoids and Gaussian pdf, whose
universal approximation is known \cite{cybenko1989approximation,chen1995universal}
(where we assume the convention that the last entry of the data input
$x$ is $1$). Examples of suitable $\varphi_{2}$ include smoothened
leaky-ReLU, sigmoid/tanh and linear activation. Examples of suitable
(and convex) loss ${\cal L}$ include Huber loss and exponential loss.
Importantly ${\cal L}$ needs not be convex. Assumption \ref{assump:two-layers}.4
is technical and does not seem removable. Note that this assumption
specifies the mode of convergence and is not an assumption on the
limits $\bar{w}_{1}$ and $\bar{w}_{2}$. In particular, the first
condition (convergence in moment) of in Assumption \ref{assump:two-layers}.4
is a common assumption in the literature \cite{chizat2018}. See also
Section \ref{sec:Discussion} where we further this discussion in
the context of prior works.

Regarding the uniform convergence condition ${\rm ess\text{-}sup}\left|\frac{\partial}{\partial t}w_{2}\left(t,C_{1},1\right)\right|\to0$
in Assumption \ref{assump:two-layers}.4, there is a converse relation
between global convergence and this condition. Thus this uniform convergence
condition gives a sharp characterization of global convergence.
\begin{prop}
\label{prop:converse-two-layers}Consider the MF limit corresponding
to the network (\ref{eq:two-layer-nn}), such that they are coupled
together by the coupling procedure in Section \ref{subsec:Neuronal-Embedding}.
Suppose that the initialization and regularity assumptions (i.e. the
first and third assumptions) of Assumption \ref{assump:two-layers}
hold, and that ${\cal L}(y,\hat{y})\to\infty$ as $|\hat{y}|\to\infty$
for each $y$. Then the following hold:
\begin{itemize}
\item Case 1 (convex loss): If ${\cal L}$ is convex in the second variable
and $\mathscr{L}\left(W\left(t\right)\right)\to\inf_{F}\mathscr{L}\left(F\right)$
as $t\to\infty$, then it must be that
\[
\sup_{c_{1}\in\Omega_{1}}\left|\frac{\partial}{\partial t}w_{2}\left(t,c_{1},1\right)\right|\to0\quad\text{as }t\to\infty.
\]
\item Case 2 (generic non-negative loss): Suppose that $\partial_{2}{\cal L}\left(y,\hat{y}\right)=0$
implies ${\cal L}\left(y,\hat{y}\right)=0$, and $y=y(x)$ is a function
of $x$. If $\mathscr{L}\left(W\left(t\right)\right)\to0$ as $t\to\infty$,
then the same conclusion also holds.
\end{itemize}
\end{prop}

Such a converse result was shown in the work \cite{wojtowytsch2020convergence}
for two-layer neural networks. It is also a special case of Proposition
\ref{prop:converse-multilayer}, which is a similar converse for multilayer
networks; hence we shall omit the proof of Proposition \ref{prop:converse-two-layers}.

\subsection{The case $L=3$\label{subsec:global_convergence_iid_3}}

We now turn to the case of three-layer networks, $L=3$. Our development
here applies insights already seen in the case $L=2$, most notably
the universal approximation property at the first layer and the topology
argument. Our present case is complicated by the presence of a third
layer, which requires extra conditions to ensure the same proof technique
to be applicable. We again stress that, similar to the case $L=2$,
here we do not rely critically on any convexity property, and the
same proof of global convergence should extend beyond the specific
network architecture to be considered here (the network (\ref{eq:three-layer-nn})
below).

Before we proceed, we specify the three-layer network under consideration
and its training. We also follow the development for i.i.d. initialization
in Section \ref{sec:iid_init}; in particular, we work with the infinite-$M$
MF limit.

\paragraph*{Three-layer network.}

For $\mathbf{n}=\left\{ n_{1},n_{2}\right\} $, we consider the following
neural network:
\begin{align}
\hat{{\bf y}}\left(k,x\right) & =\varphi_{3}\left(\mathbf{H}_{3}\left(k,x,1\right)\right),\label{eq:three-layer-nn}\\
\mathbf{H}_{3}\left(k,x,1\right) & =\frac{1}{n_{2}}\sum_{j_{2}=1}^{n_{2}}{\bf w}_{3}\left(k,j_{2},1\right)\varphi_{2}\left({\bf H}_{2}\left(k,x,j_{2}\right)\right),\nonumber \\
{\bf H}_{2}\left(k,x,j_{2}\right) & =\frac{1}{n_{1}}\sum_{j_{1}=1}^{n_{1}}{\bf w}_{2}\left(k,j_{1},j_{2}\right)\varphi_{1}\left(\left\langle {\bf w}_{1}\left(k,j_{1}\right),x\right\rangle \right),\nonumber 
\end{align}
in which $x\in\mathbb{R}^{d}$, ${\bf w}_{1}\left(k,j_{1}\right)\in\mathbb{R}^{d}$,
${\bf w}_{2}\left(k,j_{1},j_{2}\right)\in\mathbb{R}$, ${\bf w}_{3}\left(k,j_{2},1\right)\in\mathbb{R}$,
$\varphi_{1}:\;\mathbb{R}\to\mathbb{R}$, $\varphi_{2}:\;\mathbb{R}\to\mathbb{R}$,
$\varphi_{3}:\;\mathbb{R}\to\mathbb{R}$ and $k\in\mathbb{N}_{\geq0}$
indicating the discrete time. We train the network with SGD w.r.t.
the loss ${\cal L}:\;\mathbb{R}\times\mathbb{R}\to\mathbb{R}_{\geq0}$
and the data $z\left(k\right)=\left(x\left(k\right),y\left(k\right)\right)$
drawn independently at time $k$:
\begin{align*}
{\bf w}_{3}\left(k+1,j_{2},1\right) & ={\bf w}_{3}\left(k,j_{2},1\right)-\epsilon\xi_{3}\left(k\epsilon\right)\partial_{2}{\cal L}\left(y\left(k\right),\hat{\mathbf{y}}\left(k,x\left(k\right)\right)\right)\varphi_{3}'\left(\mathbf{H}_{3}\left(k,x\left(k\right),1\right)\right)\varphi_{2}\left({\bf H}_{2}\left(k,x\left(k\right),j_{2}\right)\right),\\
{\bf w}_{2}\left(k+1,j_{1},j_{2}\right) & ={\bf w}_{2}\left(k,j_{1},j_{2}\right)-\epsilon\Delta_{2}^{\mathbf{H}}\left(k,z\left(k\right),j_{2}\right)\varphi_{1}\left(\left\langle {\bf w}_{1}\left(k,j_{1}\right),x\right\rangle \right),\\
{\bf w}_{1}\left(k+1,j_{1}\right) & ={\bf w}_{1}\left(k,j_{1}\right)-\epsilon\bigg(\frac{1}{n_{2}}\sum_{j_{2}=1}^{n_{2}}\Delta_{2}^{\mathbf{H}}\left(k,z\left(k\right),j_{2}\right){\bf w}_{2}\left(k,j_{1},j_{2}\right)\bigg)\varphi_{1}'\left(\left\langle {\bf w}_{1}\left(k,j_{1}\right),x\left(k\right)\right\rangle \right)x\left(k\right),
\end{align*}
in which
\[
\Delta_{2}^{\mathbf{H}}\left(k,z,j_{2}\right)=\partial_{2}{\cal L}\left(y,\hat{\mathbf{y}}\left(k,x\right)\right)\varphi_{3}'\left(\mathbf{H}_{3}\left(k,x,1\right)\right){\bf w}_{3}\left(k,j_{2},1\right)\varphi_{2}'\left({\bf H}_{2}\left(k,x,j_{2}\right)\right).
\]
Here $\epsilon\in\mathbb{R}_{>0}$ is the learning rate and $\xi_{3}:\;\mathbb{R}_{\geq0}\mapsto\mathbb{R}_{\geq0}$
is the learning rate schedule for the third layer. Note that here
we only consider non-negative $\xi_{3}$. We consider $\left(\rho^{1},\rho^{2},\rho^{3}\right)$-i.i.d.
initialization:
\[
\left\{ \mathbf{w}_{1}\left(0,j_{1}\right)\right\} _{j_{1}\in\left[n_{1}\right]}\sim\rho^{1}\text{ i.i.d.},\quad\left\{ \mathbf{w}_{2}\left(0,j_{1},j_{2}\right)\right\} _{j_{1}\in\left[n_{1}\right],\;j_{2}\in\left[n_{2}\right]}\sim\rho^{2}\text{ i.i.d.},\quad\left\{ \mathbf{w}_{3}\left(0,j_{2},1\right)\right\} _{j_{2}\in\left[n_{2}\right]}\sim\rho^{3}\text{ i.i.d.}
\]
all independently.

\paragraph*{Infinite-$M$ MF limit.}

Following Section \ref{subsec:Infinite-M-canonical}, in the current
context of the three-layer neural network (\ref{eq:three-layer-nn}),
we define the dynamics $w_{1}^{*}$, $w_{2}^{*}$ and $w_{3}^{*}$
as follows:
\begin{align*}
w_{3}^{*}\left(t,u_{3}\right) & =u_{3}-\int_{0}^{t}\xi_{3}\left(s\right)\mathbb{E}_{Z}\left[\partial_{2}{\cal L}\left(Y,\hat{y}^{*}\left(s,X\right)\right)\varphi_{3}'\left(H_{3}^{*}\left(s,X\right)\right)\varphi_{2}\left(H_{2}^{*}\left(s,X,u_{3}\right)\right)\right]ds,\\
w_{2}^{*}\left(t,u_{1},u_{2},u_{3}\right) & =u_{2}-\int_{0}^{t}\mathbb{E}_{Z}\left[\Delta_{2}^{H*}\left(s,Z,u_{3}\right)\varphi_{1}\left(\left\langle w_{1}^{*}\left(s,u_{1}\right),X\right\rangle \right)\right]ds,\\
w_{1}^{*}\left(t,u_{1}\right) & =u_{1}-\int_{0}^{t}\mathbb{E}_{Z}\left[\left(\int\Delta_{2}^{H*}\left(s,Z,u_{3}\right)w_{2}^{*}\left(s,u_{1},u_{2},u_{3}\right)\rho^{2}\left(du_{2}\right)\rho^{3}\left(du_{3}\right)\right)\varphi_{1}'\left(\left\langle w_{1}^{*}\left(s,u_{1}\right),X\right\rangle \right)X\right]ds,\\
 & \forall u_{i}\in{\rm supp}\left(\rho^{i}\right)\;{\rm for}\;i=1,2,3,
\end{align*}
in which
\begin{align*}
\hat{y}^{*}\left(x;f_{1},f_{2},f_{3}\right) & =\varphi_{3}\left(H_{3}^{*}\left(x;f_{1},f_{2},f_{3}\right)\right),\\
H_{3}^{*}\left(x;f_{1},f_{2},f_{3}\right) & =\int f_{3}\left(u_{3}\right)\varphi_{2}\left(H_{2}^{*}\left(x,u_{3};f_{1},f_{2}\right)\right)\rho^{3}\left(du_{3}\right),\\
H_{2}^{*}\left(x,u_{3};f_{1},f_{2}\right) & =\int f_{2}\left(u_{1},u_{2},u_{3}\right)\varphi_{1}\left(\left\langle f_{1}\left(u_{1}\right),x\right\rangle \right)\rho^{1}\left(du_{1}\right)\rho^{2}\left(du_{2}\right),\\
\Delta_{2}^{H*}\left(z,u_{3};f_{1},f_{2},f_{3}\right) & =\partial_{2}{\cal L}\left(y,\hat{y}^{*}\left(x;f_{1},f_{2},f_{3}\right)\right)\varphi_{3}'\left(H_{3}^{*}\left(x;f_{1},f_{2},f_{3}\right)\right)f_{3}\left(u_{3}\right)\varphi_{2}'\left(H_{2}^{*}\left(x,u_{3};f_{1},f_{2}\right)\right),
\end{align*}
and $\hat{y}^{*}\left(t,x\right)$, $H_{3}^{*}\left(t,x\right)$,
$H_{2}^{*}\left(t,x,u_{3}\right)$ and $\Delta_{2}^{H*}\left(t,z,u_{3}\right)$
are their short-hand notations for when $f_{1}=w_{1}^{*}\left(t,\cdot\right)$,
$f_{2}=w_{2}^{*}\left(t,\cdot,\cdot,\cdot\right)$ and $f_{3}=w_{3}^{*}\left(t,\cdot\right)$.
Let us also define $W^{*}\left(t\right)=\left\{ w_{1}^{*}\left(t,\cdot\right),\;w_{2}^{*}\left(t,\cdot,\cdot,\cdot\right),\;w_{3}^{*}\left(t,\cdot\right)\right\} $.
To measure the training quality for $W^{*}\left(t\right)$, we consider
the population loss $\mathscr{L}\left(W^{*}\left(t\right)\right)$
in which
\[
\mathscr{L}\left(f_{1},f_{2},f_{3}\right)=\mathbb{E}_{Z}\left[{\cal L}\left(Y,\hat{y}^{*}\left(X;f_{1},f_{2},f_{3}\right)\right)\right].
\]
This gives the infinite-$M$ MF limit for the neural network (\ref{eq:three-layer-nn}).
\begin{assumption}
\label{assump:three-layers}Consider the infinite-$M$ MF limit $W^{*}\left(t\right)$
corresponding to the network (\ref{eq:three-layer-nn}). We consider
the following assumptions:
\begin{enumerate}
\item Initialization: The functions $\left\{ w_{i}^{0}\right\} _{i=1,2,3}$
satisfy
\[
\sup_{m\geq1}\frac{1}{\sqrt{m}}\left(\int\left|u_{1}\right|^{m}\rho^{1}\left(du_{1}\right)\right)^{1/m}\leq K,\qquad\sup_{m\geq1}\frac{1}{\sqrt{m}}\left(\int\left|u_{2}\right|^{m}\rho^{2}\left(du_{2}\right)\right)^{1/m}\leq K,
\]
\[
\sup_{m\geq1}\frac{1}{\sqrt{m}}\left(\int\left|u_{3}\right|^{m}\rho^{3}\left(du_{3}\right)\right)^{1/m}\leq K.
\]
\item Diversity: The support of $\rho^{1}$ is $\mathbb{R}^{d}$.
\item Regularity: $\varphi_{1}$ and $\varphi_{2}$ are $K$-bounded, $\varphi_{1}'$,
$\varphi_{2}'$ and $\varphi_{3}'$ are $K$-bounded and $K$-Lipschitz,
$\varphi_{2}'$ and $\varphi_{3}'$ are non-zero everywhere, $\partial_{2}{\cal L}\left(\cdot,\cdot\right)$
is $K$-Lipschitz in the second variable and $K$-bounded, and $\left|X\right|\leq K$
with probability $1$.
\item Convergence: There exist functions $\bar{w}_{1}$, $\bar{w}_{2}$
and $\bar{w}_{3}$ such that as $t\to\infty$, there exists a coupling
$\pi_{t}$ of $\rho^{1}\times\rho^{2}\times\rho^{3}$ and itself such
that 
\begin{align*}
\int\left(1+\left|\bar{w}_{3}(u_{3})\right|\right)\left|\bar{w}_{3}(u_{3})\right|\left|\bar{w}_{2}(u_{1},u_{2},u_{3})\right|\left|w_{1}^{*}(t,u_{1}')-\bar{w}_{1}(u_{1})\right|d\pi_{t}(u_{1},u_{2},u_{3},u_{1}',u_{2}',u_{3}') & \to0,\\
\int\left(1+\left|\bar{w}_{3}(u_{3})\right|\right)\left|\bar{w}_{3}(u_{3})\right|\left|w_{2}^{*}\left(t,u_{1}',u_{2}',u_{3}'\right)-\bar{w}_{2}\left(u_{1},u_{2},u_{3}\right)\right|d\pi_{t}(u_{1},u_{2},u_{3},u_{1}',u_{2}',u_{3}') & \to0,\\
\int\left(1+\left|\bar{w}_{3}(u_{3})\right|\right)\left|w_{3}^{*}\left(t,u_{3}'\right)-\bar{w}_{3}(u_{3})\right|d\pi_{t}(u_{1},u_{2},u_{3},u_{1}',u_{2}',u_{3}') & \to0.
\end{align*}
Furthermore, 
\[
\underset{U_{i}\sim\rho^{i}}{{\rm ess\text{-}sup}}\left|\frac{\partial}{\partial t}w_{2}^{*}\left(t,U_{1},U_{2},U_{3}\right)\right|\to0.
\]
\item Universal approximation: $\left\{ \varphi_{1}\left(\left\langle u,\cdot\right\rangle \right):\;u\in\mathbb{R}^{d}\right\} $
has dense span in $L^{2}\left({\cal P}_{X}\right)$ (the space of
square integrable functions w.r.t. the measure ${\cal P}_{X}$, which
is the distribution of the input $X$).
\end{enumerate}
\end{assumption}

As a remark, the first part of the convergence assumption follows
from the convergence of the tuple $(w_{1}^{*}\left(t,\cdot\right),w_{2}^{*}\left(t,\cdot,\cdot,\cdot\right),w_{3}^{*}\left(t,\cdot\right))$
to $(\bar{w}_{1},\bar{w}_{2},\bar{w}_{3})$ in the Wasserstein-$4$
distance, i.e.
\begin{align*}
 & \inf_{\pi}\int\Big(|w_{1}^{*}(t,u_{1}')-\bar{w}_{1}(u_{1})|^{4}+|w_{2}^{*}(t,u_{1}',u_{2}',u_{3}')-\bar{w}_{2}(u_{1},u_{2},u_{3})|^{4}\\
 & \qquad+|w_{3}^{*}(t,u_{3}')-\bar{w}_{3}(u_{3})|^{4}\Big)d\pi(u_{1},u_{2},u_{3},u_{1}',u_{2}',u_{3}')\to0,
\end{align*}
where the infimum is over all couplings $\pi$ of $\rho^{1}\times\rho^{2}\times\rho^{3}$
and itself. In particular, one can prove so with the initialization
and regularity conditions and Lemma \ref{lem:bounds MF a priori}.
\begin{thm}
\label{thm:global-optimum-3}Consider the infinite-$M$ MF limit $W^{*}\left(t\right)$
corresponding to the network (\ref{eq:three-layer-nn}), under Assumption
\ref{assump:three-layers}. Further assume either:
\begin{itemize}
\item (untrained third layer) $\int\mathbb{I}\left(u_{3}\neq0\right)\rho^{3}\left(du_{3}\right)>0$
and $\xi_{3}\left(\cdot\right)=0$ (and hence $w_{3}^{*}\left(t,u_{3}\right)=u_{3}$
unchanged at all $t\geq0$), or
\item (trained third layer) $\xi_{3}\left(\cdot\right)=1$ and the initialization
satisfies that $\mathscr{L}\left(w_{1}^{0},w_{2}^{0},w_{3}^{0}\right)<\mathbb{E}_{Z}\left[{\cal L}\left(Y,\varphi_{3}\left(0\right)\right)\right]$.
\end{itemize}
Then the following hold:
\begin{itemize}
\item Case 1 (convex loss): If ${\cal L}$ is convex in the second variable,
then:
\[
\lim_{t\to\infty}\mathscr{L}\left(W^{*}\left(t\right)\right)=\inf_{\tilde{y}}\mathbb{E}_{Z}\left[{\cal L}\left(Y,\tilde{y}\left(X\right)\right)\right].
\]
\item Case 2 (generic non-negative loss): Suppose that $\partial_{2}{\cal L}\left(y,\hat{y}\right)=0$
implies ${\cal L}\left(y,\hat{y}\right)=0$. If $y=y(x)$ a function
of $x$, then $\mathscr{L}\left(W^{*}\left(t\right)\right)\to0$ as
$t\to\infty$.
\end{itemize}
\end{thm}

The proof is deferred to Section \ref{subsec:Proof-three-layers}.
While global convergence is proven via the infinite-$M$ MF limit
$W^{*}$, it is easy to adapt the proof to prove the same for the
canonical MF limits that are described in Section \ref{subsec:Canonical-neuronal-embeddings},
giving a statement similar to the two-layer case (Theorem \ref{thm:global-optimum-2}).
By working with the infinite-$M$ limit, specifically combining Theorem
\ref{thm:global-optimum-3} with Corollary \ref{cor:iid_tracking},
we immediately obtain the following result.
\begin{cor}
Consider the neural network (\ref{eq:three-layer-nn}). Under the
same setting as Theorem \ref{thm:global-optimum-3}, in Case 1,
\[
\lim_{t\to\infty}\lim_{n_{1},n_{2}\to\infty}\lim_{\epsilon\to0}\mathbb{E}_{Z}\left[{\cal L}\left(Y,\hat{{\bf y}}\left(\left\lfloor t/\epsilon\right\rfloor ,X\right)\right)\right]=\inf_{\tilde{y}}\mathbb{E}_{Z}\left[{\cal L}\left(Y,\tilde{y}\left(X\right)\right)\right].
\]
in probability, and in Case 2, the same holds with the right-hand
side being $0$. Here $n_{1},n_{2}\to\infty$ in such a way that $n_{\min}\to\infty$
and $n_{\min}^{-c}\log n_{\max}\to0$ for any $c>0$, with $n_{\min}=\min\left\{ n_{1},n_{2}\right\} $
and $n_{\max}=\max\left\{ n_{1},n_{2}\right\} $.
\end{cor}

Similar to Section \ref{subsec:global_convergence_iid_2}, here we
also have a converse relation between global convergence and the essential
supremum condition in Assumption \ref{assump:three-layers}.4.
\begin{prop}
\label{prop:converse-three-layers}Consider the infinite-$M$ MF limit
$W^{*}\left(t\right)$ corresponding to the network (\ref{eq:three-layer-nn}).
Suppose that the initialization and regularity assumptions (i.e. the
first and third assumptions) of Assumption \ref{assump:three-layers}
hold, and that ${\cal L}(y,\hat{y})\to\infty$ as $|\hat{y}|\to\infty$
for each $y$. Further assume that there exists $\bar{w}_{3}$ such
that as $t\to\infty$, there is a coupling $\pi_{t}^{3}$ of $\rho^{3}$
and itself such that 
\[
\int\left|w_{3}^{*}\left(t,u_{3}'\right)-\bar{w}_{3}(u_{3})\right|d\pi_{t}^{3}\left(u_{3},u_{3}'\right)\to0.
\]
Then the following hold:
\begin{itemize}
\item Case 1 (convex loss): If ${\cal L}$ is convex in the second variable
and
\[
\lim_{t\to\infty}\mathscr{L}\left(W^{*}\left(t\right)\right)=\inf_{\tilde{y}}\mathbb{E}_{Z}\left[{\cal L}\left(Y,\tilde{y}\left(X\right)\right)\right],
\]
then it must be that
\[
\sup_{u_{1}\in\mathbb{R}^{d},\;u_{2}\in{\rm supp}\left(\rho^{2}\right)}\mathbb{E}_{U_{3}\sim\rho^{3}}\left[\left|\frac{\partial}{\partial t}w_{2}^{*}\left(t,u_{1},u_{2},U_{3}\right)\right|\right]\to0\quad\text{as }t\to\infty.
\]
\item Case 2 (generic non-negative loss): Suppose that $\partial_{2}{\cal L}\left(y,\hat{y}\right)=0$
implies ${\cal L}\left(y,\hat{y}\right)=0$, and $y=y(x)$ is a function
of $x$. If $\mathscr{L}\left(W^{*}\left(t\right)\right)\to0$ as
$t\to\infty$, then the same conclusion also holds.
\end{itemize}
\end{prop}

We prove Proposition \ref{prop:converse-three-layers} in Appendix
\ref{sec:Remaining-proofs-global-conv-iid}.

\subsubsection{High-level idea of the proof\label{subsec:three-layers-high-level-idea}}

Before we proceed, we give a high-level discussion of the proof of
Theorem \ref{thm:global-optimum-3}. This is meant to provide intuitions
and explain the technical crux, so our discussion may simplify and
deviate from the actual proof. Our first insight is to look at the
second layer's weight $w_{2}^{*}$. Recall that
\[
\frac{\partial}{\partial t}w_{2}^{*}\left(t,u_{1},u_{2},u_{3}\right)=-\mathbb{E}_{Z}\left[\Delta_{2}^{H*}\left(Z,u_{3};W^{*}\left(t\right)\right)\varphi_{1}\left(\left\langle w_{1}^{*}\left(t,u_{1}\right),X\right\rangle \right)\right].
\]
At convergence time $t=\infty$, we expect to have zero movement and
hence, denoting $\bar{W}=\left\{ \bar{w}_{1},\bar{w}_{2},\bar{w}_{3}\right\} $:
\[
\mathbb{E}_{Z}\left[\Delta_{2}^{H*}\left(Z,u_{3};\bar{W}\right)\varphi_{1}\left(\left\langle \bar{w}_{1}\left(u_{1}\right),X\right\rangle \right)\right]=0
\]
for $u_{1}\in{\rm supp}\left(\rho^{1}\right)$, $u_{3}\in{\rm supp}\left(\rho^{3}\right)$.
Suppose for the moment that we are allowed to make an additional (strong)
assumption on the limit $\bar{w}_{1}$: ${\rm supp}\left(\bar{w}_{1}\left(U_{1}\right)\right)=\mathbb{R}^{d}$
for $U_{1}\sim\rho^{1}$. It implies that the universal approximation
property, described in Assumption \ref{assump:three-layers}.5, holds
at $t=\infty$; more specifically, it implies $\left\{ \varphi_{1}\left(\left\langle \bar{w}_{1}\left(u_{1}\right),\cdot\right\rangle \right):\;u_{1}\in{\rm supp}\left(\rho^{1}\right)\right\} $
has dense span in $L^{2}\left({\cal P}_{X}\right)$. This thus yields
\[
\mathbb{E}_{Z}\left[\Delta_{2}^{H*}\left(Z,u_{3};\bar{W}\right)\middle|X=x\right]=0
\]
for ${\cal P}$-almost every $x$. Recalling the definition of $\Delta_{2}^{H*}$,
one can then easily show that
\[
\mathbb{E}_{Z}\left[\partial_{2}{\cal L}\left(Y,\hat{y}^{*}\left(x;\bar{W}\right)\right)\middle|X=x\right]=0.
\]
Global convergence follows immediately; for example, in Case 2 of
Theorem \ref{thm:global-optimum-3}, this is equivalent to that $\partial_{2}{\cal L}\left(y\left(x\right),\hat{y}^{*}\left(x;\bar{W}\right)\right)=0$
and hence ${\cal L}\left(y\left(x\right),\hat{y}^{*}\left(x;\bar{W}\right)\right)=0$
for ${\cal P}$-almost every $x$. In short, the gradient flow structure
of the dynamics of $w_{2}^{*}$ provides a seamless way to obtain
global convergence. Furthermore there is no critical reliance on convexity.

However this plan of attack has a potential flaw in the strong assumption
that ${\rm supp}\left(\bar{w}_{1}\left(U_{1}\right)\right)=\mathbb{R}^{d}$,
i.e. the universal approximation property holds at convergence time.
Indeed there are setups where it is desirable that ${\rm supp}\left(\bar{w}_{1}\left(U_{1}\right)\right)\neq\mathbb{R}^{d}$
\cite{mei2018mean,chizat2019sparse}; for instance, it is the case
where the neural network is to learn some ``sparse and spiky'' solution,
and hence the weight distribution at convergence time, if successfully
trained, cannot have full support. On the other hand, one can entirely
expect that if ${\rm supp}\left(w_{1}^{*}\left(0,U_{1}\right)\right)=\mathbb{R}^{d}$
initially at $t=0$, then ${\rm supp}\left(w_{1}^{*}\left(t,U_{1}\right)\right)=\mathbb{R}^{d}$
at \textsl{any} finite $t\geq0$. The crux of our proof is to show
the latter without assuming ${\rm supp}\left(\bar{w}_{1}\left(U_{1}\right)\right)=\mathbb{R}^{d}$.
This is done via an algebraic topology argument, in which the mapping
$\left(t,u\right)\mapsto M\left(t,u\right)$ that maps from $\left(t,w_{1}^{*}\left(0,u_{1}\right)\right)=\left(t,u_{1}\right)$
to $w_{1}^{*}\left(t,u_{1}\right)$ is shown to preserves a homotopic
structure through time.

\subsection{Proof of Theorem \ref{thm:global-optimum-3}\label{subsec:Proof-three-layers}}

First using an algebraic topology argument, we show that if $w_{1}^{*}\left(0,U_{1}\right)=U_{1}\sim\rho^{1}$
has full support, then so is $w_{1}^{*}\left(t,U_{1}\right)$ at any
time $t$. Note that the following result holds beyond the setting
of Theorem \ref{thm:global-optimum-3}.
\begin{lem}
\label{lem:full-support-3}Assume $L=3$ and $\mathbb{W}_{1}=\mathbb{R}^{d}$
(for some positive integer $d$), along with Assumptions \ref{enu:Assump_lrSchedule}-\ref{enu:Assump_backward}
and \ref{assump:init}. Under a $\left(\rho_{\mathbf{w}}^{1},\rho_{\mathbf{w}}^{2},\rho_{\mathbf{w}}^{3}\right)$-i.i.d.
initialization, consider the infinite-$M$ limit of the canonical
MF limits as described in Section \ref{subsec:Infinite-M-canonical},
and in particular the dynamics $\left\{ w_{i}^{*}\right\} _{i=1,2,3}$.
Here we disregard the biases by considering $\xi_{2}^{\mathbf{b}}\left(\cdot\right)=\xi_{3}^{\mathbf{b}}\left(\cdot\right)=0$
and $b_{2}\left(0,\cdot\right)=b_{3}\left(0,\cdot\right)=0$. Assume
that
\[
\sigma_{2}^{\mathbf{w}}\left(\Delta,w,b,g,h\right)=\bar{\sigma}_{2}^{\mathbf{w}}\left(\Delta,g,h\right),
\]
for some function $\bar{\sigma}_{2}^{\mathbf{w}}$, i.e. $\sigma_{2}^{\mathbf{w}}$
does not depend on the second and third variables. Suppose that the
support of $\rho_{\mathbf{w}}^{1}$ is $\mathbb{W}_{1}$. Then for
all finite time $t$, the support of ${\rm Law}_{U_{_{1}}\sim\rho_{\mathbf{w}}^{1}}\left(w_{1}^{*}\left(t,U_{1}\right)\right)$
is $\mathbb{W}_{1}$.
\end{lem}

\begin{proof}
Specialized to the current setting, $w_{1}^{*}$ and $w_{2}^{*}$
satisfy 
\begin{align*}
\frac{\partial}{\partial t}w_{1}^{*}(t,u_{1}) & =-\xi_{1}^{\mathbf{w}}\left(t\right)\mathbb{E}_{Z}\left[\sigma_{1}^{\mathbf{w}}\left(\int h\left(t,Z,u_{1},u_{2},u_{3}\right)\rho_{\mathbf{w}}^{2}\left(du_{2}\right)\rho_{\mathbf{w}}^{3}\left(du_{3}\right),w_{1}^{*}\left(t,u_{1}\right),X\right)\right],\\
\frac{\partial}{\partial t}w_{2}^{*}(t,u_{1},u_{2},u_{3}) & =-\xi_{2}^{\mathbf{w}}\left(t\right)\mathbb{E}_{Z}\left[g\left(t,Z,u_{1},u_{3}\right)\right],
\end{align*}
for all $u_{1}\in{\rm supp}\left(\rho_{\mathbf{w}}^{1}\right)$, $u_{2}\in{\rm supp}\left(\rho_{\mathbf{w}}^{2}\right)$
and $u_{3}\in{\rm supp}\left(\rho_{\mathbf{w}}^{3}\right)$ where
\begin{align*}
g\left(t,z,u_{1},u_{3}\right) & =\bar{\sigma}_{2}^{\mathbf{w}}\left(\Delta_{2}^{H*}\left(t,z,u_{3}\right),H_{2}^{*}\left(t,x,u_{3}\right),H_{1}^{*}\left(t,x,u_{1}\right)\right),\\
h\left(t,z,u_{1},u_{2},u_{3}\right) & =\sigma_{1}^{\mathbf{H}}\left(\Delta_{2}^{H*}\left(t,z,u_{3}\right),w_{2}^{*}\left(t,u_{1},u_{2},u_{3}\right),0,H_{2}^{*}\left(t,x,u_{3}\right),H_{1}^{*}\left(t,x,u_{1}\right)\right)\\
 & =\sigma_{1}^{\mathbf{H}}\left(\Delta_{2}^{H*}\left(t,z,u_{3}\right),u_{2}-\int_{0}^{t}\xi_{2}^{\mathbf{w}}\left(s\right)\mathbb{E}_{Z}\left[g\left(s,Z,u_{1},u_{3}\right)\right]ds,0,H_{2}^{*}\left(t,x,u_{3}\right),H_{1}^{*}\left(t,x,u_{1}\right)\right).
\end{align*}
Here we have shortened the notations to remove dependency on the biases:
\begin{align*}
w_{2}^{*}(t,u_{1},u_{2},u_{3}) & \equiv w_{2}^{*}(t,u_{1},u_{2},u_{3},0),\\
\Delta_{2}^{H*}\left(t,z,u_{3}\right) & \equiv\Delta_{2}^{H*}\left(t,z,u_{3},0\right),\\
H_{2}^{*}\left(t,x,u_{3}\right) & \equiv H_{2}^{*}\left(t,x,u_{3},0\right).
\end{align*}
We recall the initialization
\[
w_{1}^{*}\left(0,u_{1}\right)=u_{1},\quad w_{2}^{*}\left(0,\cdot,u_{2},\cdot\right)=u_{2},\quad w_{2}^{*}\left(0,u_{3},\cdot\right)=u_{3}.
\]
In the following, we define $K_{t}$ to be a generic constant that
changes with $t$ and is finite with finite $t$. We proceed in several
steps.

\paragraph*{Step 1.}

We study the function $h$. We have from Assumptions \ref{enu:Assump_backward}
and \ref{enu:Assump_lrSchedule}:
\begin{align*}
\left|\Delta_{3}^{w*}\left(t,z,\cdot,\cdot\right)\right| & \leq K\left(1+\left|\Delta_{3}^{H*}\left(t,z\right)\right|\right)\leq K,\\
\left|w_{3}^{*}\left(t,u_{3},\cdot\right)\right| & \leq\left|w_{3}^{*}\left(0,u_{3},\cdot\right)\right|+K_{t}=\left|u_{3}\right|+K_{t},\\
\left|\Delta_{2}^{H*}\left(t,z,u_{3}\right)\right| & \leq K\left(1+\left|\Delta_{3}^{H*}\left(t,z\right)\right|\right)\left(1+\left|w_{3}^{*}\left(t,u_{3},\cdot\right)\right|\right)\leq K_{t}\left(1+\left|u_{3}\right|\right).
\end{align*}
Consequently by Assumption \ref{enu:Assump_backward},
\begin{align*}
\left|g\left(t,z,u_{1},u_{3}\right)\right| & \leq K\left(1+\left|\Delta_{2}^{H*}\left(t,z,u_{3}\right)\right|\right)\\
 & \leq K_{t}\left(1+\left|u_{3}\right|\right),\\
\left|g\left(t,z,u_{1},u_{3}\right)-g\left(t,z,u_{1}',u_{3}\right)\right| & \leq K\left(1+\left|\Delta_{2}^{H*}\left(t,z,u_{3}\right)\right|\right)\left|H_{1}^{*}\left(t,x,u_{1}\right)-H_{1}^{*}\left(t,x,u_{1}'\right)\right|\\
 & \leq K_{t}\left(1+\left|u_{3}\right|\right)\left|H_{1}^{*}\left(t,x,u_{1}\right)-H_{1}^{*}\left(t,x,u_{1}'\right)\right|,
\end{align*}
for all $u_{1}\in{\rm supp}\left(\rho_{\mathbf{w}}^{1}\right)$ and
$u_{3}\in\textrm{supp}(\rho_{{\bf w}}^{3})$. Using these bounds and
Assumption \ref{enu:Assump_backward}, we obtain:
\begin{align*}
 & \left|h\left(t,z,u_{1},u_{2},u_{3}\right)-h\left(t,z,u_{1}',u_{2},u_{3}\right)\right|\\
 & \leq K\left(1+\left|\Delta_{2}^{H*}\left(t,z,u_{3}\right)\right|\right)\int_{0}^{t}\mathbb{E}_{Z}\left[\left|g\left(s,Z,u_{1},u_{3}\right)-g\left(s,Z,u_{1}',u_{3}\right)\right|\right]ds\\
 & \quad+K\left(1+\left|\Delta_{2}^{H*}\left(t,z,u_{3}\right)\right|\right)\left(1+\left|u_{2}\right|+\int_{0}^{t}\mathbb{E}_{Z}\left[\left|g\left(s,Z,u_{1},u_{3}\right)\right|\right]ds\right)\left|H_{1}^{*}\left(t,x,u_{1}\right)-H_{1}^{*}\left(t,x,u_{1}'\right)\right|\\
 & \leq K_{t}\left(1+\left|u_{2}\right|^{2}+\left|u_{3}\right|^{2}\right)\sup_{s\leq t}\left|H_{1}^{*}\left(s,x,u_{1}\right)-H_{1}^{*}\left(s,x,u_{1}'\right)\right|,
\end{align*}
as well as that
\begin{align*}
\left|h\left(t,z,u_{1},u_{2},u_{3}\right)\right| & \leq K\left(1+\left|\Delta_{2}^{H*}\left(t,z,u_{3}\right)\right|\right)\left(1+\left|u_{2}\right|+\int_{0}^{t}\mathbb{E}_{Z}\left[\left|g\left(s,Z,u_{1},u_{3}\right)\right|\right]ds\right)\\
 & \leq K_{t}\left(1+\left|u_{2}\right|^{2}+\left|u_{3}\right|^{2}\right).
\end{align*}
These are the desired bounds for $h$.

\paragraph{Step 2.}

We show that for an arbitrary $T\geq0$, $w_{1}^{*}:\;\left[0,T\right]\times\mathbb{W}_{1}\to\mathbb{W}_{1}$
is continuous. Using the bound for the function $h$ in Step 1 and
Assumptions \ref{enu:Assump_forward}-\ref{enu:Assump_backward},
for $u_{1},u_{1}'\in\mathbb{W}_{1}$,
\begin{align*}
 & \left|\frac{d}{dt}\left(w_{1}^{*}\left(t,u_{1}\right)-w_{1}^{*}\left(t,u_{1}'\right)\right)\right|\\
 & =\bigg|\xi_{1}^{\mathbf{w}}\left(t\right)\mathbb{E}_{Z}\bigg[\sigma_{1}^{\mathbf{w}}\left(\int h\left(t,Z,u_{1}',u_{2},u_{3}\right)\rho_{\mathbf{w}}^{2}\left(du_{2}\right)\rho_{\mathbf{w}}^{3}\left(du_{3}\right),w_{1}^{*}\left(t,u_{1}'\right),X\right)\\
 & \qquad\qquad-\sigma_{1}^{\mathbf{w}}\left(\int h\left(t,Z,u_{1},u_{2},u_{3}\right)\rho_{\mathbf{w}}^{2}\left(du_{2}\right)\rho_{\mathbf{w}}^{3}\left(du_{3}\right),w_{1}^{*}\left(t,u_{1}\right),X\right)\bigg]\bigg|\\
 & \leq K\left|w_{1}^{*}\left(t,u_{1}\right)-w_{1}^{*}\left(t,u_{1}'\right)\right|\\
 & \quad+K_{t}\left(1+\int\left|u_{2}\right|^{2}\rho_{\mathbf{w}}^{2}\left(du_{2}\right)+\int\left|u_{3}\right|^{2}\rho_{\mathbf{w}}^{3}\left(du_{3}\right)\right)\mathbb{E}_{Z}\left[\sup_{s\leq t}\left|H_{1}^{*}\left(s,x,u_{1}\right)-H_{1}^{*}\left(s,x,u_{1}'\right)\right|\right]\\
 & \leq K\left|w_{1}^{*}\left(t,u_{1}\right)-w_{1}^{*}\left(t,u_{1}'\right)\right|+K_{t}\mathbb{E}_{Z}\left[\sup_{s\leq t}\left|H_{1}^{*}\left(s,x,u_{1}\right)-H_{1}^{*}\left(s,x,u_{1}'\right)\right|\right]\\
 & =K\left|w_{1}^{*}\left(t,u_{1}\right)-w_{1}^{*}\left(t,u_{1}'\right)\right|+K_{t}\mathbb{E}_{Z}\left[\sup_{s\leq t}\left|\phi_{1}(w_{1}^{*}(s,u_{1}),X)-\phi_{1}(w_{1}^{*}(s,u_{1}'),X)\right|\right]\\
 & \le K_{t}\sup_{s\leq t}\left|w_{1}^{*}\left(s,u_{1}\right)-w_{1}^{*}\left(s,u_{1}'\right)\right|.
\end{align*}
By Gronwall's inequality,
\[
\sup_{s\leq t}\left|w_{1}^{*}\left(s,u_{1}\right)-w_{1}^{*}\left(s,u_{1}'\right)\right|\leq e^{K_{t}t}\left|w_{1}^{*}\left(0,u_{1}\right)-w_{1}^{*}\left(0,u_{1}'\right)\right|=e^{K_{t}t}\left|u_{1}-u_{1}'\right|.
\]
Furthermore, by Assumptions \ref{enu:Assump_lrSchedule} and \ref{enu:Assump_backward},
for $t'\leq t$,
\begin{align*}
\left|w_{1}^{*}\left(t,u_{1}\right)-w_{1}^{*}\left(t',u_{1}\right)\right| & \leq\int_{t'}^{t}\left|\xi_{1}^{\mathbf{w}}\left(s\right)\mathbb{E}_{Z}\left[\sigma_{1}^{\mathbf{w}}\left(\int h\left(s,Z,u_{1},u_{2},u_{3}\right)\rho_{\mathbf{w}}^{2}\left(du_{2}\right)\rho_{\mathbf{w}}^{3}\left(du_{3}\right),w_{1}^{*}\left(s,u_{1}\right),X\right)\right]\right|ds\\
 & \leq K\int_{t'}^{t}\mathbb{E}_{Z}\left[1+\int\left|h\left(s,Z,u_{1},u_{2},u_{3}\right)\right|\rho_{\mathbf{w}}^{2}\left(du_{2}\right)\rho_{\mathbf{w}}^{3}\left(du_{3}\right)\right]ds\\
 & \leq K_{t}\left(1+\int\left|u_{2}\right|^{2}\rho_{\mathbf{w}}^{2}\left(du_{2}\right)+\int\left|u_{3}\right|^{2}\rho_{\mathbf{w}}^{3}\left(du_{3}\right)\right)\left|t-t'\right|\\
 & \leq K_{t}\left|t-t'\right|.
\end{align*}
This shows that $w_{1}^{*}$ defines a continuous function $w_{1}^{*}:[0,T]\times\mathbb{W}_{1}\to\mathbb{W}_{1}$.

\paragraph{Step 3.}

Consider the sphere $\mathbb{S}^{d}$ which is a compactification
of $\mathbb{R}^{d}$. We can extend $w_{1}^{*}$ to a function $M:\;\left[0,T\right]\times\mathbb{S}^{d}\to\mathbb{S}^{d}$
fixing the point at infinity, which remains a continuous map since
$\left|M\left(t,u_{1}\right)-u_{1}\right|=\left|M\left(t,u_{1}\right)-M\left(0,u_{1}\right)\right|\leq K_{T}t$.
Let $M_{t}:\;\mathbb{R}^{d}\to\mathbb{R}^{d}$ be defined by $M_{t}\left(u_{1}\right)=M\left(t,u_{1}\right)$.
We claim that $M_{t}$ is surjective for all finite $t$. Indeed,
if $M_{t}$ fails to be surjective for some $t$, then for some $p\in\mathbb{S}^{d}$,
$M_{t}:\;\mathbb{S}^{d}\to\mathbb{S}^{d}\backslash\left\{ p\right\} \to\mathbb{S}^{d}$
is homotopic to the constant map, but $M$ then gives a homotopy from
the identity map $M_{0}$ on the sphere to a constant map, which is
a contradiction as the sphere $\mathbb{S}^{d}$ is not contractible.
Hence $w_{1}^{*}\left(t,\cdot\right)$ is surjective for all finite
$t$. Now let $U_{1}\sim\rho_{\mathbf{w}}^{1}$, which has full support,
and consider $w_{1}^{*}\left(t,U_{1}\right)$. Let us assume that
$w_{1}^{*}\left(t,U_{1}\right)$ does not have full support at some
time $t$, which implies there is an open ball $B$ in $\mathbb{R}^{d}$
for which $\mathbb{P}\left(w_{1}^{*}\left(t,U_{1}\right)\in B\right)=0$.
Due to surjectivity and continuity of $u_{1}\mapsto w_{1}^{*}\left(t,u_{1}\right)$,
there is an open set $U$ such that $w_{1}^{*}\left(t,u_{1}\right)\in B$
for all $u_{1}\in U$. Then $\mathbb{P}\left(U_{1}\in U\right)=0$,
contradicting the assumption that $U_{1}$ has full support. Therefore
$w_{1}^{*}\left(t,U_{1}\right)$ must have full support at all $t\geq0$.
\end{proof}
With this lemma, we are ready to prove Theorem \ref{thm:global-optimum-3}.
We recall the setting of Theorem \ref{thm:global-optimum-3}, and
in particular, the neural network (\ref{eq:three-layer-nn}).
\begin{proof}[Proof of Theorem \ref{thm:global-optimum-3}]
Let $U_{i}\sim\rho^{i}$, $i=1,2,3$ independently. It is easy to
check that Assumptions \ref{enu:Assump_lrSchedule}-\ref{enu:Assump_backward},
as well as the conditions of Lemma \ref{lem:full-support-3}, hold.
Therefore, by Lemma \ref{lem:full-support-3}, the support of ${\rm Law}\left(w_{1}^{*}\left(t,U_{1}\right)\right)$
is $\mathbb{R}^{d}$ at all $t$. We recall from the convergence assumption
the limits $\bar{w}_{1}$, $\bar{w}_{2}$ and $\bar{w}_{3}$, and
we shall first prove $\left(\bar{w}_{1},\bar{w}_{2},\bar{w}_{3}\right)$
is a global minimizer of $\mathscr{L}$ in Case 1 and $\mathscr{L}\left(\bar{w}_{1},\bar{w}_{2},\bar{w}_{3}\right)=0$
in Case 2.

By the convergence assumption, we have that for any $\epsilon>0$,
there exists $T\left(\epsilon\right)$ such that for all $t\geq T\left(\epsilon\right)$
and almost surely:
\[
\epsilon\geq\left|\mathbb{E}_{Z}\left[\Delta_{2}^{H*}\left(t,Z,U_{3}\right)\varphi_{1}\left(\left\langle w_{1}^{*}\left(t,U_{1}\right),X\right\rangle \right)\right]\right|=\left|\left\langle \mathbb{E}_{Z}\left[\Delta_{2}^{H*}\left(t,Z,U_{3}\right)|X=x\right],\varphi_{1}\left(\left\langle w_{1}^{*}\left(t,U_{1}\right),x\right\rangle \right)\right\rangle _{L^{2}\left({\cal P}_{X}\right)}\right|.
\]
Since ${\rm Law}\left(w_{1}^{*}\left(t,U_{1}\right)\right)$ has full
support, we obtain that for $u$ in a dense subset of $\mathbb{R}^{d}$,
\[
{\rm ess\text{-}sup}\left|\left\langle \mathbb{E}_{Z}\left[\Delta_{2}^{H*}\left(t,Z,U_{3}\right)|X=x\right],\varphi_{1}\left(\left\langle u,x\right\rangle \right)\right\rangle _{L^{2}\left({\cal P}_{X}\right)}\right|\leq\epsilon.
\]
By continuity of $u\mapsto\varphi_{1}(\left\langle u,\cdot\right\rangle )$
in $L^{2}({\cal P}_{X})$, we extend the above to all $u\in\mathbb{R}^{d}$.
Recall the couplings $\pi_{t}$ in Assumption \ref{assump:three-layers}.4,
since $\varphi_{1}$ is bounded,
\begin{align*}
 & \mathbb{E}_{(U_{3},U_{3}')\sim\pi_{t}}\left[\left|\left\langle \mathbb{E}_{Z}\left[\Delta_{2}^{H*}\left(t,Z,U_{3}\right)-\Delta_{2}^{H*}\left(Z,U_{3}';\bar{w}_{1},\bar{w}_{2},\bar{w}_{3}\right)\middle||X=x\right],\varphi_{1}\left(\left\langle u,x\right\rangle \right)\right\rangle _{L^{2}\left({\cal P}_{X}\right)}\right|\right]\\
 & \le K\mathbb{E}_{\pi_{t}}\left[\left|\Delta_{2}^{H*}\left(t,Z,U_{3}\right)-\Delta_{2}^{H*}\left(Z,U_{3}';\bar{w}_{1},\bar{w}_{2},\bar{w}_{3}\right)\right|\right]\\
 & \le K\mathbb{E}_{\pi_{t}}\Big[\left(1+\left|\bar{w}_{3}(U_{3})\right|\right)\Big(\left|w_{3}^{*}(t,U_{3}')-\bar{w}_{3}(U_{3})\right|+\left|\bar{w}_{3}(U_{3})\right|\left|w_{2}^{*}(t,U_{1}',U_{2}',U_{3}')-\bar{w}_{2}(U_{1},U_{2},U_{3})\right|\\
 & \qquad+\left|\bar{w}_{3}(U_{2})\right|\left|\bar{w}_{2}(U_{1},U_{2},U_{3})\right|\left|w_{1}^{*}(t,U_{1}')-\bar{w}_{1}(U_{1})\right|\Big)\Big],
\end{align*}
where the last step is by the regularity assumption, similar to the
calculation in the proof of Theorem \ref{thm:global-optimum-2}. Recall
that the right-hand side converges to $0$ as $t\to\infty$. We thus
obtain that for all $u\in\mathbb{R}^{d}$,
\[
\mathbb{E}_{U_{3}}\left[\left|\left\langle \mathbb{E}_{Z}\left[\Delta_{2}^{H*}\left(Z,U_{3};\bar{w}_{1},\bar{w}_{2},\bar{w}_{3}\right)|X=x\right],\varphi_{1}\left(\left\langle u,x\right\rangle \right)\right\rangle _{L^{2}\left({\cal P}_{X}\right)}\right|\right]=0,
\]
which yields that for all $u\in\mathbb{R}^{d}$ and almost surely,
\[
\left|\left\langle \mathbb{E}_{Z}\left[\Delta_{2}^{H*}\left(Z,U_{3};\bar{w}_{1},\bar{w}_{2},\bar{w}_{3}\right)|X=x\right],\varphi_{1}\left(\left\langle u,x\right\rangle \right)\right\rangle _{L^{2}\left({\cal P}_{X}\right)}\right|=0.
\]
Here we note that by the regularity assumption that 
\[
\left|\mathbb{E}_{Z}\left[\Delta_{2}^{H*}\left(Z,U_{3};\bar{w}_{1},\bar{w}_{2},\bar{w}_{3}\right)|X=x\right]\right|\leq K\left|\bar{w}_{3}\left(U_{3}\right)\right|,
\]
and so $\mathbb{E}_{Z}\left[\Delta_{2}^{H*}\left(Z,u_{3};\bar{w}_{1},\bar{w}_{2},\bar{w}_{3}\right)|X=x\right]$
is in $L^{2}\left({\cal P}_{X}\right)$ for almost every $u_{3}$.
Since $\left\{ \varphi_{1}\left(\left\langle u,\cdot\right\rangle \right):\;u\in\mathbb{R}^{d}\right\} $
has dense span in $L^{2}\left({\cal P}_{X}\right)$, we have $\mathbb{E}_{Z}\left[\Delta_{2}^{H*}\left(Z,u_{3};\bar{w}_{1},\bar{w}_{2},\bar{w}_{3}\right)|X=x\right]=0$
for ${\cal P}_{X}$-almost every $x$ and almost every $u_{3}$, and
hence
\[
\mathbb{E}_{Z}\left[\partial_{2}{\cal L}\left(Y,\hat{y}^{*}\left(X;\bar{w}_{1},\bar{w}_{2},\bar{w}_{3}\right)\right)\middle|X=x\right]\varphi_{3}'\left(H_{3}^{*}\left(x;\bar{w}_{1},\bar{w}_{2},\bar{w}_{3}\right)\right)\bar{w}_{3}\left(u_{3}\right)\varphi_{2}'\left(H_{2}^{*}\left(x,u_{3};\bar{w}_{1},\bar{w}_{2}\right)\right)=0.
\]
We note that our assumptions guarantee that $\mathbb{P}\left(\bar{w}_{3}\left(U_{3}\right)\ne0\right)$
is positive. Indeed:
\begin{itemize}
\item In the case $\int\mathbb{I}\left(u_{3}\neq0\right)\rho^{3}\left(du_{3}\right)>0$
and $\xi_{3}\left(\cdot\right)=0$, it is obvious that $\mathbb{P}\left(\bar{w}_{3}\left(U_{3}\right)\ne0\right)>0$.
\item In the case $\mathscr{L}\left(w_{1}^{0},w_{2}^{0},w_{3}^{0}\right)<\mathbb{E}_{Z}\left[{\cal L}\left(Y,\varphi_{3}\left(0\right)\right)\right]$,
it can be easily checked that
\[
\mathscr{L}\left(w_{1}^{*}\left(t,\cdot\right),w_{2}^{*}\left(t,\cdot,\cdot,\cdot\right),w_{3}^{*}\left(t,\cdot\right)\right)\leq\mathscr{L}\left(w_{1}^{*}\left(t',\cdot\right),w_{2}^{*}\left(t',\cdot,\cdot,\cdot\right),w_{3}^{*}\left(t',\cdot\right)\right),
\]
for $t\geq t'$. This is in fact a standard property of gradient flows.
In particular, setting $t'=0$ and taking $t\to\infty$, it is easy
to see that
\[
\mathscr{L}\left(\bar{w}_{1},\bar{w}_{2},\bar{w}_{3}\right)\leq\mathscr{L}\left(w_{1}^{0},w_{2}^{0},w_{3}^{0}\right)<\mathbb{E}_{Z}\left[{\cal L}\left(Y,\varphi_{3}\left(0\right)\right)\right].
\]
If $\mathbb{P}\left(\bar{w}_{3}\left(U_{3}\right)=0\right)=1$ then
$\mathscr{L}\left(\bar{w}_{1},\bar{w}_{2},\bar{w}_{3}\right)=\mathbb{E}_{Z}\left[{\cal L}\left(Y,\varphi_{3}\left(0\right)\right)\right]$,
a contradiction.
\end{itemize}
Then since $\varphi_{2}'$ and $\varphi_{3}'$ are strictly non-zero,
we have $\mathbb{E}_{Z}\left[\partial_{2}{\cal L}\left(Y,\hat{y}^{*}\left(X;\bar{w}_{1},\bar{w}_{2},\bar{w}_{3}\right)\right)\middle|X=x\right]=0$
for ${\cal P}_{X}$-almost every $x$.

In Case 1, since ${\cal L}$ convex in the second variable, for any
measurable function $\tilde{y}(x)$, 
\[
{\cal L}\left(y,\tilde{y}\left(x\right)\right)-{\cal L}\left(y,\hat{y}\left(x;\bar{w}_{1},\bar{w}_{2},\bar{w}_{3}\right)\right)\ge\partial_{2}{\cal L}\left(y,\hat{y}\left(x;\bar{w}_{1},\bar{w}_{2},\bar{w}_{3}\right)\right)\left(\tilde{y}\left(x\right)-\hat{y}\left(x;\bar{w}_{1},\bar{w}_{2},\bar{w}_{3}\right)\right).
\]
Taking expectation, we get $\mathbb{E}_{Z}\left[{\cal L}\left(Y,\tilde{y}\left(X\right)\right)\right]\geq\mathscr{L}\left(\bar{w}_{1},\bar{w}_{2},\bar{w}_{3}\right)$,
i.e. $\left(\bar{w}_{1},\bar{w}_{2},\bar{w}_{3}\right)$ is a global
minimizer of $\mathscr{L}$.

In Case 2, since $y$ is a function of $x$, we obtain $\partial_{2}{\cal L}\left(y,\hat{y}\left(x;\bar{w}_{1},\bar{w}_{2},\bar{w}_{3}\right)\right)=0$
and hence ${\cal L}\left(y,\hat{y}\left(x;\bar{w}_{1},\bar{w}_{2},\bar{w}_{3}\right)\right)=0$
for ${\cal P}_{X}$-almost every $x$.

Finally to connect $\mathscr{L}\left(\bar{w}_{1},\bar{w}_{2},\bar{w}_{3}\right)$
with $\mathscr{L}\left(W^{*}\left(t\right)\right)$ in the limit $t\to\infty$,
we have:
\begin{align*}
\left|\mathscr{L}\left(W^{*}\left(t\right)\right)-\mathscr{L}\left(\bar{w}_{1},\bar{w}_{2},\bar{w}_{3}\right)\right| & =\left|\mathbb{E}_{Z}\left[{\cal L}\left(Y,\hat{y}^{*}\left(t,X\right)\right)-{\cal L}\left(Y,\hat{y}^{*}\left(X;\bar{w}_{1},\bar{w}_{2},\bar{w}_{3}\right)\right)\right]\right|\\
 & \leq K\mathbb{E}_{Z}\left[\left|\hat{y}^{*}\left(t,X\right)-\hat{y}^{*}\left(X;\bar{w}_{1},\bar{w}_{2},\bar{w}_{3}\right)\right|\right]\\
 & \leq K\mathbb{E}_{\pi_{t}}\Big[\left|w_{3}^{*}\left(t,U_{3}'\right)-\bar{w}_{3}\left(U_{3}\right)\right|+\left|\bar{w}_{3}\left(U_{3}\right)\right|\left|w_{2}^{*}\left(t,U_{1}',U_{2}',U_{3}'\right)-\bar{w}_{2}\left(U_{1},U_{2},U_{3}\right)\right|\\
 & \qquad+\left|\bar{w}_{3}\left(U_{3}\right)\right|\left|\bar{w}_{2}\left(U_{1},U_{2},U_{3}\right)\right|\left|w_{1}^{*}\left(t,U_{1}'\right)-\bar{w}_{1}\left(U_{1}\right)\right|\Big]
\end{align*}
which tends to $0$ as $t\to\infty$. This completes the proof.
\end{proof}

\section{Convergence to Global Optimum: Multilayer Networks with Correlated
Initializations\label{sec:global_convergence_general}}

In Section \ref{sec:global_convergence_iid}, we prove global convergence
guarantees for networks with $L\leq3$ and i.i.d. initializations.
Underlying these results is a universal approximation property that
holds throughout the course of training, and this is shown for quite
general data distributions. Recall from Section \ref{sec:iid_init}
that i.i.d. initializations cause a certain degenerate behavior in
the network with $L\geq4$. In particular, by Corollary \ref{cor:iid_same_neurons},
neurons at intermediate layers collapse to the same function of the
input and therefore are not expected to span the space of functions
of the input $L^{2}\left({\cal P}_{X}\right)$. In other words, these
intermediate layers become a bottleneck that hinders universal approximation
in the context of more than three layers and general data distributions.

To attain meaningful training, this suggests a departure from i.i.d.
initializations. In particular, we propose a correlated initialization
scheme that resolves the aforementioned bottleneck problem. To be
precise, the key idea lies in the new concept of bidirectional diversity.
A similar concept has been encountered in Section \ref{sec:global_convergence_iid};
for instance, diversity in the two-layer case in Section \ref{subsec:global_convergence_iid_2}
refers to the full support condition of the first layer's weight distribution
in the Euclidean space, implied at initialization $t=0$ by Assumption
\ref{assump:two-layers}.2 and shown to hold at any finite time $t$
by Lemma \ref{lem:full-support-2}. Here bidirectional diversity furthers
this idea to the multilayer case with arbitrary depths. Firstly, it
is realized in function spaces that are naturally described by our
neuronal embedding framework. Secondly, it is bidirectional: roughly
speaking, for intermediate layers, diversity holds in both the forward
and backward passes. The effect of bidirectional diversity is that
a certain universal approximation property, at any finite training
time $t$, is propagated from the first layer to the second last one.
Importantly the proposed correlated initialization only ensures bidirectional
diversity at initialization $t=0$, but it is the learning dynamics
that automatically maintains bidirectional diversity at any finite
$t$. This fact is again shown by a topological invariance argument.

In the following, we first describe the multilayer fully-connected
neural network under consideration and its corresponding MF limit.
We then describe the proposed correlated initialization, and state
and prove the global convergence guarantee.

\textbf{\textcolor{red}{}}

\subsection{Multilayer fully-connected neural network}

We consider the following $L$-layer fully-connected network:
\begin{align}
\hat{{\bf y}}\left(x;\mathbf{W}\left(k\right)\right) & =\varphi_{L}\left(\mathbf{H}_{L}\left(x,1;\mathbf{W}\left(k\right)\right)\right),\label{eq:nn_FC_multilayer}\\
\mathbf{H}_{i}\left(x,j_{i};\mathbf{W}\left(k\right)\right) & =\frac{1}{n_{i-1}}\sum_{j_{i-1}=1}^{n_{i-1}}{\bf w}_{i}\left(k,j_{i-1},j_{i}\right)\varphi_{i-1}\left({\bf H}_{i-1}\left(x,j_{i-1};\mathbf{W}\left(k\right)\right)\right),\qquad i=L,...,2,\nonumber \\
{\bf H}_{1}\left(x,j_{1};\mathbf{W}\left(k\right)\right) & =\left\langle {\bf w}_{1}\left(k,j_{1}\right),x\right\rangle ,\nonumber 
\end{align}
in which $x\in\mathbb{R}^{d}$ is the input, $\mathbf{W}\left(k\right)=\left\{ {\bf w}_{1}\left(k,\cdot\right),{\bf w}_{i}\left(k,\cdot,\cdot\right):\;i=2,...,L\right\} $
is the weight with ${\bf w}_{1}\left(k,j_{1}\right)\in\mathbb{R}^{d}$,
${\bf w}_{i}\left(k,j_{i-1},j_{i}\right)\in\mathbb{R}$, $\varphi_{i}:\;\mathbb{R}\to\mathbb{R}$
is the activation. Here the network has widths $\left\{ n_{i}\right\} _{i\leq L}$
with $n_{L}=1$. We train the network with stochastic gradient descent
(SGD) w.r.t. the loss ${\cal L}:\;\mathbb{R}\times\mathbb{R}\to\mathbb{R}_{\geq0}$
and the data $z\left(k\right)=\left(x\left(k\right),y\left(k\right)\right)\in\mathbb{R}^{d}\times\mathbb{R}$
drawn independently at time $k$ from a training distribution ${\cal P}$.
Given an initialization $\mathbf{W}\left(0\right)$, we update $\mathbf{W}\left(k\right)$
according to
\begin{align*}
{\bf w}_{i}\left(k+1,j_{i-1},j_{i}\right) & ={\bf w}_{i}\left(k,j_{i-1},j_{i}\right)-\epsilon\ensuremath{\xi}_{i}\left(t\epsilon\right)\Delta_{i}^{\mathbf{w}}\left(z\left(k\right),j_{i-1},j_{i};\mathbf{W}\left(k\right)\right),\qquad i=2,...,L,\\
{\bf w}_{1}\left(k+1,j_{1}\right) & ={\bf w}_{1}\left(k,j_{1}\right)-\epsilon\ensuremath{\xi}_{1}\left(t\epsilon\right)\Delta_{1}^{\mathbf{w}}\left(z\left(k\right),j_{1};\mathbf{W}\left(k\right)\right),
\end{align*}
in which $j_{i}\in\left[n_{i}\right]$, $\epsilon\in\mathbb{R}_{>0}$
is the learning rate, $\xi_{i}:\;\mathbb{R}_{\geq0}\mapsto\mathbb{R}_{\geq0}$
is the learning rate schedule for $\mathbf{w}_{i}$, and for $z=\left(x,y\right)$,
we define
\begin{align*}
\Delta_{L}^{\mathbf{H}}\left(z,1;\mathbf{W}\left(k\right)\right) & =\partial_{2}{\cal L}\left(y,\hat{\mathbf{y}}\left(x;\mathbf{W}\left(k\right)\right)\right)\varphi_{L}'\left(\mathbf{H}_{L}\left(x,1;\mathbf{W}\left(k\right)\right)\right),\\
\Delta_{i-1}^{\mathbf{H}}\left(z,j_{i-1};\mathbf{W}\left(k\right)\right) & =\frac{1}{n_{i}}\sum_{j_{i}=1}^{n_{i}}\Delta_{i}^{\mathbf{H}}\left(z,j_{i};\mathbf{W}\left(k\right)\right){\bf w}_{i}\left(k,j_{i-1},j_{i}\right)\varphi_{i-1}'\left({\bf H}_{i-1}\left(x,j_{i-1};\mathbf{W}\left(k\right)\right)\right),\qquad i=L,...,2,\\
\Delta_{i}^{\mathbf{w}}\left(z,j_{i-1},j_{i};\mathbf{W}\left(k\right)\right) & =\Delta_{i}^{\mathbf{H}}\left(z,j_{i};\mathbf{W}\left(k\right)\right)\varphi_{i-1}\left({\bf H}_{i-1}\left(x,j_{i-1};\mathbf{W}\left(k\right)\right)\right),\qquad i=L,...,2,\\
\Delta_{1}^{\mathbf{w}}\left(z,j_{1};\mathbf{W}\left(k\right)\right) & =\Delta_{1}^{\mathbf{H}}\left(z,j_{1};\mathbf{W}\left(k\right)\right)x.
\end{align*}
In short, for an initialization $\mathbf{W}\left(0\right)$, we obtain
a SGD trajectory $\mathbf{W}\left(k\right)$ of an $L$-layer network
with size $\left\{ n_{i}\right\} _{i\leq L}$. We also note that this
neural network fits into the framework in Section \ref{sec:Framework}.

\subsection{Mean field limit}

Given a neuronal ensemble $\left(\Omega,P\right)=\prod_{i=1}^{L}\left(\Omega_{i},P_{i}\right)$
(in which $\Omega_{L}=\left\{ 1\right\} $), the MF limit that is
associated with the network (\ref{eq:nn_FC_multilayer}) is described
by the continuous-time evolution of $W\left(t\right)=\left\{ w_{1}\left(t,\cdot\right),w_{i}\left(t,\cdot,\cdot\right):\;i=2,...,L\right\} $,
given by the following MF ODEs:
\begin{align*}
\frac{\partial}{\partial t}w_{i}\left(t,c_{i-1},c_{i}\right) & =-\xi_{i}\left(t\right)\mathbb{E}_{Z}\left[\Delta_{i}^{w}\left(Z,c_{i-1},c_{i};W\left(t\right)\right)\right],\qquad i=2,...,L,\\
\frac{\partial}{\partial t}w_{1}\left(t,c_{1}\right) & =-\xi_{1}\left(t\right)\mathbb{E}_{Z}\left[\Delta_{1}^{w}\left(Z,c_{1};W\left(t\right)\right)\right],
\end{align*}
where $w_{1}:\,\mathbb{R}_{\geq0}\times\Omega_{1}\to\mathbb{R}^{d}$
and $w_{i}:\,\mathbb{R}_{\geq0}\times\Omega_{i-1}\times\Omega_{i}\to\mathbb{R}$.
Here we define the forward quantities:
\begin{align*}
\hat{y}\left(x;W\left(t\right)\right) & =\varphi_{L}\left(H_{L}\left(x,1;W\left(t\right)\right)\right),\\
H_{i}\left(x,c_{i};W\left(t\right)\right) & =\mathbb{E}_{C_{i-1}}\left[w_{i}\left(t,C_{i-1},c_{i}\right)\varphi_{i-1}\left(H_{i-1}\left(x,C_{i-1};W\left(t\right)\right)\right)\right],\qquad i=L,...,2,\\
H_{1}\left(x,c_{1};W\left(t\right)\right) & =\left\langle w_{1}\left(t,c_{1}\right),x\right\rangle ,
\end{align*}
and the backward quantities:
\begin{align*}
\Delta_{L}^{H}\left(z,1;W\left(t\right)\right) & =\partial_{2}{\cal L}\left(y,\hat{y}\left(x;W\left(t\right)\right)\right)\varphi_{L}'\left(H_{L}\left(x,1;W\left(t\right)\right)\right),\\
\Delta_{i-1}^{H}\left(z,c_{i-1};W\left(t\right)\right) & =\mathbb{E}_{C_{i}}\left[\Delta_{i}^{H}\left(z,C_{i};W\left(t\right)\right)w_{i}\left(t,c_{i-1},C_{i}\right)\varphi_{i-1}'\left(H_{i-1}\left(x,c_{i-1};W\left(t\right)\right)\right)\right],\qquad i=L,...,2,\\
\Delta_{i}^{w}\left(z,c_{i-1},c_{i};W\left(t\right)\right) & =\Delta_{i}^{H}\left(z,c_{i};W\left(t\right)\right)\varphi_{i-1}\left(H_{i-1}\left(x,c_{i-1};W\left(t\right)\right)\right),\qquad i=L,...,2,\\
\Delta_{1}^{w}\left(z,c_{1};W\left(t\right)\right) & =\Delta_{1}^{H}\left(z,c_{1};W\left(t\right)\right)x.
\end{align*}
As a reminder, the data $Z=\left(X,Y\right)\sim{\cal P}$ and $C_{i}\sim P_{i}$.
To recap, given a neuronal ensemble $\left(\Omega,P\right)$, for
each initialization $W\left(0\right)$, we have defined a MF limit
$W\left(t\right)$.

\subsection{Global convergence and bidirectional diversity\label{subsec:Global-convergence-and-bidirectional-diversity}}

We begin the study of global convergence of the network (\ref{eq:nn_FC_multilayer})
with an analysis of its MF limit, which is the focus of this section.
To measure the learning quality, we consider the loss averaged over
the data $Z\sim{\cal P}$:
\[
\mathscr{L}\left(F\right)=\mathbb{E}_{Z}\left[{\cal L}\left(Y,\hat{y}\left(X;F\right)\right)\right],
\]
where $F=\left\{ f_{i}:\;i=1,...,L\right\} $ a set of measurable
functions $f_{1}:\;\Omega_{1}\to\mathbb{R}^{d}$, $f_{i}:\;\Omega_{i-1}\times\Omega_{i}\to\mathbb{R}$
for $i=2,...,L$.

Recall that in our framework, the finite-sized neural network is formally
connected with its MF limit via a neuronal embedding. Here without
making explicit this connection, one can study the MF limit that is
defined on the basis of a given neuronal embedding $\left(\Omega,P,\left\{ w_{i}^{0}\right\} _{i\leq L}\right)$,
where $w_{1}^{0}:\;\Omega_{1}\to\mathbb{R}^{d}$, $w_{i}^{0}:\;\Omega_{i-1}\times\Omega_{i}\to\mathbb{R}$
for $i=2,...,L$. In particular, we make the following assumptions:
\begin{assumption}
\label{assump:multilayer}Consider a neuronal embedding $\left(\Omega,P,\left\{ w_{i}^{0}\right\} _{i\leq L}\right)$,
recalling $\Omega=\prod_{i=1}^{L}\Omega_{i}$ and $P=\prod_{i=1}^{L}P_{i}$
with $\Omega_{L}=\left\{ 1\right\} $. Consider the MF limit associated
with the neuronal ensemble $\left(\Omega,P\right)$ with initialization
$W\left(0\right)$ such that $w_{1}\left(0,\cdot\right)=w_{1}^{0}\left(\cdot\right)$
and $w_{i}\left(0,\cdot,\cdot\right)=w_{i}^{0}\left(\cdot,\cdot\right)$.
We make the following assumptions:
\begin{enumerate}
\item Initialization: The functions $\left\{ w_{i}^{0}\right\} _{i\leq L}$
satisfy:
\[
\sup_{m\geq1}\frac{1}{\sqrt{m}}\mathbb{E}\left[\left|w_{1}^{0}\left(C_{1}\right)\right|^{m}\right]^{1/m}\leq K,\quad\sup_{m\geq1}\frac{1}{\sqrt{m}}\mathbb{E}\left[\left|w_{i}^{0}\left(C_{i-1},C_{i}\right)\right|^{m}\right]^{1/m}\leq K,\quad i=2,...,L.
\]
\item Diversity: The functions $\left\{ w_{i}^{0}\right\} _{i\leq L}$ satisfy:
\begin{itemize}
\item ${\rm supp}\left(w_{1}^{0}\left(C_{1}\right),w_{2}^{0}\left(C_{1},\cdot\right)\right)=\mathbb{R}^{d}\times L^{2}\left(P_{2}\right)$,
\item ${\rm supp}\left(w_{i}^{0}\left(\cdot,C_{i}\right),w_{i+1}^{0}\left(C_{i},\cdot\right)\right)=L^{2}\left(P_{i-1}\right)\times L^{2}\left(P_{i+1}\right)$
for $i=2,...,L-1$.
\end{itemize}
(Remark: we write $w_{i}^{0}\left(\cdot,C_{i}\right)$ to denote the
random mapping $c\mapsto w_{i}^{0}\left(c,C_{i}\right)$, and similar
for $w_{i+1}^{0}\left(C_{i},\cdot\right)$.)
\item Regularity: We assume that:
\begin{itemize}
\item $\varphi_{i}$ is $K$-bounded for $1\leq i\leq L-1$, $\varphi_{i}'$
is $K$-bounded and $K$-Lipschitz for $1\leq i\leq L$, and $\varphi_{L}'$
is non-zero everywhere,
\item $\partial_{2}{\cal L}\left(\cdot,\cdot\right)$ is $K$-Lipschitz
in the second variable and $K$-bounded,
\item $\left|X\right|\leq K$ with probability $1$,
\item the learning rate schedule $\xi_{i}$ is $K$-bounded and $K$-Lipschitz
for $1\leq i\leq L$.
\end{itemize}
\item Convergence: There exist a coupling $\pi_{t}$ of $\prod_{i=1}^{L}P_{i}$
and itself such that 
\begin{align*}
\mathbb{E}_{\pi_{t}}\left[\left|w_{i}\left(t,C_{i-1}',C_{i}'\right)-\bar{w}_{i}\left(C_{i-1},C_{i}\right)\right|\sideset{}{_{j=i+1}^{L}}\prod\left|\bar{w}_{j}\left(C_{j-1},C_{j}\right)\right|\right] & \to0,\quad i=2,...,L,\\
\mathbb{E}_{\pi_{t}}\left[\left|w_{1}\left(t,C_{1}'\right)-\bar{w}_{1}\left(C_{1}\right)\right|\sideset{}{_{j=2}^{L}}\prod\left|\bar{w}_{j}\left(C_{j-1},C_{j}\right)\right|\right] & \to0,
\end{align*}
as $t\to\infty$, where $(C_{1},\dots,C_{L},C_{1}',\dots,C_{L}')\sim\pi_{t}$.
Furthermore, 
\begin{align*}
{\rm ess\text{-}sup}\left|\frac{\partial}{\partial t}w_{L}\left(t,C_{L-1},1\right)\right| & \to0.
\end{align*}
(Here we take $\prod_{j=i+1}^{L}=1$ for $i=L$.)
\item Universal approximation: The set $\left\{ \varphi_{1}\left(\left\langle u,\cdot\right\rangle \right):\;u\in\mathbb{R}^{d}\right\} $
has dense span in $L^{2}\left({\cal P}_{X}\right)$ (the space of
square integrable functions w.r.t. the measure ${\cal P}_{X}$, which
is the distribution of the input $X$). Furthermore, for each $i=2,...,L-1$,
$\varphi_{i}$ is non-obstructive in the sense that the set $\left\{ \varphi_{i}\circ f:\;f\in L^{2}\left({\cal P}_{X}\right)\right\} $
has dense span in $L^{2}\left({\cal P}_{X}\right)$.
\end{enumerate}
\end{assumption}

It is easy to see that this set of assumptions satisfies Assumptions
\ref{enu:Assump_lrSchedule}-\ref{enu:Assump_backward} and \ref{assump:init}.
As a consequence, by Theorem \ref{thm:existence ODE}, there exists
a unique solution $W$ to the MF ODEs on $t\in[0,\infty)$.
\begin{thm}
\label{thm:global-optimum-multilayer}Consider a neuronal embedding
$\left(\Omega,P,\left\{ w_{i}^{0}\right\} _{i\leq L}\right)$ and
the MF limit as in Assumption \ref{assump:multilayer}. Assume $\xi_{L}\left(\cdot\right)=1$.
Then:
\begin{itemize}
\item Case 1 (convex loss): If ${\cal L}$ is convex in the second variable,
then:
\[
\lim_{t\to\infty}\mathscr{L}\left(W\left(t\right)\right)=\inf_{F}\mathscr{L}\left(F\right)=\inf_{\tilde{y}:\;\mathbb{R}^{d}\to\mathbb{R}}\mathbb{E}_{Z}\left[{\cal L}\left(Y,\tilde{y}\left(X\right)\right)\right].
\]
\item Case 2 (generic non-negative loss): Suppose that $\partial_{2}{\cal L}\left(y,\hat{y}\right)=0$
implies ${\cal L}\left(y,\hat{y}\right)=0$. If $y=y(x)$ is a function
of $x$, then $\mathscr{L}\left(W\left(t\right)\right)\to0$ as $t\to\infty$.
\end{itemize}
\end{thm}

The assumptions here are similar to those made in Theorems \ref{thm:global-optimum-2}
and \ref{thm:global-optimum-3} of Section \ref{sec:global_convergence_iid}.
Similar to the settings of Section \ref{sec:global_convergence_iid},
the regularity assumption can be satisfied for several common setups
and loss functions; for example, this holds when $\varphi_{i}$ is
sigmoid or tanh for $i\leq L-1$, $\varphi_{L}$ is the identity,
and ${\cal L}$ is the Huber loss. The convergence assumption here
is also similar to the convergence assumption in Assumption \ref{assump:two-layers}
or Assumption \ref{assump:three-layers}. In particular, the first
part of the convergence assumption is essentially a Wasserstein-type
convergence; it follows from the convergence of $(w_{i}(t))_{i=1}^{L}$
to $(\bar{w}_{i})_{i=1}^{L}$ in an appropriate Wasserstein distance.
The fifth assumption is again natural and can be satisfied by common
activations. For example, $\varphi_{i}$ can be $\tanh$ for $i=1,...,L-1$.
Indeed, whenever $\left\{ \varphi_{i}\left(\left\langle u,\cdot\right\rangle \right):\;u\in\mathbb{R}^{d}\right\} $
has dense span in $L^{2}\left({\cal P}_{X}\right)$, $\varphi_{i}$
is non-obstructive since
\[
{\rm span}\left(\left\{ \varphi_{i}\left(\left\langle u,\cdot\right\rangle \right):\;u\in\mathbb{R}^{d}\right\} \right)\subseteq{\rm span}\left(\left\{ \varphi_{i}\circ f:\;f\in L^{2}\left({\cal P}_{X}\right)\right\} \right).
\]
The diversity assumption is new: it refers to an initialization scheme
that introduces correlation among the weights. In particular, i.i.d.
initializations do not satisfy this assumption for $L\geq3$.

The second assumption is the counterpart of the diversity assumption
made in Theorems \ref{thm:global-optimum-2} and \ref{thm:global-optimum-3},
but there is a special difference. In Section \ref{sec:global_convergence_iid},
the diversity assumption refers to a full support condition of only
the first layer's initial weight, which is in the Euclidean space.
Here our diversity assumption refers to a particular full support
condition for all layers. At a closer look, the condition is in the
function space and reflects certain \textsl{bidirectional diversity}.
In particular, this assumption implies both $w_{i}^{0}\left(\cdot,C_{i}\right)$
and $w_{i}^{0}\left(C_{i-1},\cdot\right)$ have full supports in $L^{2}\left(P_{i-1}\right)$
and $L^{2}\left(P_{i}\right)$ respectively (which we shall refer
to as \textit{forward diversity} and \textit{backward diversity},
respectively).

\paragraph*{High-level idea of the proof.}

The proof proceeds with several insights that have already appeared
in Section \ref{sec:global_convergence_iid}. The novelty of our present
analysis lies in the use of the aforementioned bidirectional diversity.
To clarify the point, let us give a brief high-level idea of the proof.
At time $t$ sufficiently large, we expect to have:
\[
\left|\frac{\partial}{\partial t}w_{L}\left(t,c_{L-1},1\right)\right|=\left|\mathbb{E}_{Z}\left[\partial_{2}{\cal L}\left(Y,\hat{y}\left(X;W\left(t\right)\right)\right)\varphi_{L}'\left(H_{L}\left(X,1;W\left(t\right)\right)\right)\varphi_{L-1}\left(H_{L-1}\left(X,c_{L-1};W\left(t\right)\right)\right)\right]\right|\approx0
\]
for $P_{L-1}$-almost every $c_{L-1}$. If the set of mappings $x\mapsto H_{L-1}\left(x,c_{L-1};W\left(t\right)\right)$,
indexed by $c_{L-1}$, is diverse in the sense that ${\rm supp}\left(H_{L-1}\left(\cdot,C_{L-1};W\left(t\right)\right)\right)=L^{2}\left({\cal P}_{X}\right)$,
then since $\varphi_{L-1}$ is non-obstructive, we obtain
\[
\mathbb{E}_{Z}\left[\partial_{2}{\cal L}\left(Y,\hat{y}\left(X;W\left(t\right)\right)\right)\middle|X=x\right]\varphi_{L}'\left(H_{L}\left(x,1;W\left(t\right)\right)\right)\approx0
\]
and consequently
\[
\mathbb{E}_{Z}\left[\partial_{2}{\cal L}\left(Y,\hat{y}\left(X;W\left(t\right)\right)\right)\middle|X=x\right]\approx0
\]
for ${\cal P}_{X}$-almost every $x$. The desired conclusion then
follows.

Hence the crux of the proof is to show that ${\rm supp}\left(H_{L-1}\left(\cdot,C_{L-1};W\left(t\right)\right)\right)=L^{2}\left({\cal P}_{X}\right)$.
In fact, we show that this holds for any finite time $t\geq0$. This
follows if we can prove the forward diversity property of the weights,
in which $w_{i}\left(t,\cdot,C_{i}\right)$ has full support in $L^{2}\left(P_{i-1}\right)$
for any $t\geq0$ and $2\leq i\leq L-1$, and a similar property for
$w_{1}\left(t,C_{1}\right)$. Interestingly to that end, we actually
show that bidirectional diversity, and hence both forward diversity
and backward diversity, hold at any time $t\geq0$, even though we
only need forward diversity for our purpose. The full proof is deferred
to Section \ref{subsec:Proof-multilayer}.

\paragraph*{A converse for global convergence.}

Similar to Section \ref{sec:global_convergence_iid}, we also have
a converse relation between global convergence and the essential supremum
condition in Assumption \ref{assump:multilayer}.4. The proof is presented
in Appendix \ref{sec:Remaining-proofs-global-conv-general}.
\begin{prop}
\label{prop:converse-multilayer}Consider the MF limit corresponding
to the network (\ref{eq:nn_FC_multilayer}), such that they are coupled
together by the coupling procedure in Section \ref{subsec:Neuronal-Embedding}
with a neuronal embedding $\left(\Omega,P,\left\{ w_{i}^{0}\right\} _{i\leq L}\right)$.
Suppose that the initialization and regularity assumptions (i.e. the
first and third assumptions) of Assumption \ref{assump:multilayer}
hold, and that ${\cal L}(y,\hat{y})\to\infty$ as $|\hat{y}|\to\infty$
for each $y$. Further assume $\xi_{L}\left(\cdot\right)=1$. Then
the following hold:
\begin{itemize}
\item Case 1 (convex loss): If ${\cal L}$ is convex in the second variable
and $\mathscr{L}\left(W\left(t\right)\right)\to\inf_{F}\mathscr{L}\left(F\right)$
as $t\to\infty$, then it must be that
\[
\sup_{c_{L-1}\in\Omega_{L-1}}\left|\frac{\partial}{\partial t}w_{L}\left(t,C_{L-1},1\right)\right|\to0\quad\text{as }t\to\infty.
\]
\item Case 2 (generic non-negative loss): Suppose that $\partial_{2}{\cal L}\left(y,\hat{y}\right)=0$
implies ${\cal L}\left(y,\hat{y}\right)=0$, and $y=y(x)$ is a function
of $x$. If $\mathscr{L}\left(W\left(t\right)\right)\to0$ as $t\to\infty$,
then the same conclusion also holds.
\end{itemize}
\end{prop}

\subsection{Connection to the network (\ref{eq:nn_FC_multilayer})}

Theorem \ref{thm:global-optimum-multilayer} concerns with the global
convergence of the MF limit. To make the connection with a finite-width
neural network (\ref{eq:nn_FC_multilayer}), we recall the neuronal
embedding $\left(\Omega,P,\left\{ w_{i}^{0}\right\} _{i\leq L}\right)$,
as well as the coupling procedure in Section \ref{subsec:Neuronal-Embedding}.
We however present a twist to the procedure. We first choose the neuronal
embedding $\left(\Omega,P,\left\{ w_{i}^{0}\right\} _{i\leq L}\right)$
and then perform the following two steps:
\begin{enumerate}
\item We form the MF limit $W\left(t\right)$ (for $t\in\mathbb{R}_{\geq0}$)
associated with the neuronal ensemble $\left(\Omega,P\right)$ by
setting the initialization $W\left(0\right)$ to $w_{1}\left(0,\cdot\right)=w_{1}^{0}\left(\cdot\right)$,
$w_{i}\left(0,\cdot,\cdot\right)=w_{i}^{0}\left(\cdot,\cdot\right)$
and running the MF ODEs.
\item We independently sample $C_{i}\left(j_{i}\right)\sim P_{i}$ for $i=1,...,L$
and $j_{i}=1,...,n_{i}$. We then form the neural network initialization
$\mathbf{W}\left(0\right)$ with $\mathbf{w}_{1}\left(0,j_{1}\right)=w_{1}^{0}\left(C_{1}\left(j_{1}\right)\right)$
and $\mathbf{w}_{i}\left(0,j_{i-1},j_{i}\right)=w_{i}^{0}\left(C_{i-1}\left(j_{i-1}\right),C_{i}\left(j_{i}\right)\right)$
for $j_{i}\in\left[n_{i}\right]$. We obtain the network's trajectory
$\mathbf{W}\left(k\right)$ for $k\in\mathbb{N}_{\geq0}$ for the
network (\ref{eq:nn_FC_multilayer}), with the data $z\left(k\right)$
generated independently of $\left\{ C_{i}\left(j_{i}\right)\right\} _{i\leq L}$
and hence $\mathbf{W}\left(0\right)$.
\end{enumerate}
That is, instead of starting with a given initialization law of $\mathbf{W}\left(0\right)$
as done in Section \ref{subsec:Neuronal-Embedding}, here we first
start with a chosen neuronal embedding. We then form the MF limit
$W\left(t\right)$ and the neural network initialization $\mathbf{W}\left(0\right)$,
and hence the dynamics $\mathbf{W}\left(k\right)$, based on this
neuronal embedding. In other words, the initialization law of $\mathbf{W}\left(0\right)$
is deduced from the chosen neuronal embedding. Obviously this procedure
ensures that $\bar{\eta}$-independence is satisfied (Assumption \ref{assump:neuronal-embedding}).

In summary, in the present context, the neuronal embedding forms the
basis on which the finite-width neural network is realized. Furthermore
the neural network and its MF limit are coupled. Then using Theorem
\ref{thm:global-optimum-multilayer} and Corollary \ref{cor:gradient descent quality},
one can obtain the following result on the optimization efficiency
of the neural network with SGD:
\begin{cor}
Consider the neural network (\ref{eq:three-layer-nn}) as described
by the coupling procedure with the aforementioned twist. Under the
same setting as Theorem \ref{thm:global-optimum-multilayer}, in Case
1,
\[
\lim_{t\to\infty}\lim_{\left\{ n_{i}\right\} _{i\leq L}}\lim_{\epsilon\to0}\mathbb{E}_{Z}\left[{\cal L}\left(Y,\hat{{\bf y}}\left(X;\mathbf{W}\left(\left\lfloor t/\epsilon\right\rfloor \right)\right)\right)\right]=\inf_{F}\mathscr{L}\left(F\right)=\inf_{\tilde{y}}\mathbb{E}_{Z}\left[{\cal L}\left(Y,\tilde{y}\left(X\right)\right)\right]
\]
in probability, where the limit of the widths is such that $n_{\min}\to\infty$
and $n_{\min}^{-c}\log n_{\max}\to0$ for any $c>0$, with $n_{\min}=\min\left\{ n_{i}:\;1\leq i\le L-1\right\} $
and $n_{\max}=\max\left\{ n_{i}:\;1\leq i\le L-1\right\} $. In Case
2, the same holds with the right-hand side being $0$.
\end{cor}

\subsection{Proof of Theorem \ref{thm:global-optimum-multilayer}\label{subsec:Proof-multilayer}}
\begin{proof}[Proof of Theorem \ref{thm:global-optimum-multilayer}]
We divide the proof into several steps.

\paragraph*{Step 1: Diversity of the weights.}

We show that ${\rm supp}\left(w_{1}\left(t,C_{1}\right)\right)=\mathbb{R}^{d}$
and ${\rm supp}\left(w_{i}\left(t,\cdot,C_{i}\right)\right)=L^{2}\left(P_{i-1}\right)$
for $i=2,...,L-1$, for any $t\geq0$. We do so by showing a stronger
statement, that the following bidirectional diversity condition holds
at any finite training time: 
\begin{align*}
{\rm supp}\left(w_{1}\left(t,C_{1}\right),w_{2}\left(t,C_{1},\cdot\right)\right) & =\mathbb{R}^{d}\times L^{2}\left(P_{2}\right),\\
{\rm supp}\left(w_{i}\left(t,\cdot,C_{i}\right),w_{i+1}\left(t,C_{i},\cdot\right)\right) & =L^{2}\left(P_{i-1}\right)\times L^{2}\left(P_{i+1}\right),\qquad i=2,...,L-1,
\end{align*}
for any $t\geq0$.

We prove the first statement. Given a MF trajectory $\left(W\left(t\right)\right)_{t\ge0}$
and $u_{1}\in\mathbb{R}^{d}$, $u_{2}\in L^{2}\left(P_{2}\right)$,
we consider the following flow on $\mathbb{R}^{d}\times L^{2}\left(P_{2}\right)$:
\begin{align}
\frac{\partial}{\partial t}a_{2}^{+}\left(t,c_{2};u\right) & =-\xi_{2}(t)\mathbb{E}_{Z}\left[\Delta_{2}^{H}\left(Z,c_{2};W(t)\right)\varphi_{1}\left(\left\langle a_{1}^{+}\left(t;u\right),X\right\rangle \right)\right],\nonumber \\
\frac{\partial}{\partial t}a_{1}^{+}\left(t;u\right) & =-\xi_{1}(t)\mathbb{E}_{Z}\left[\mathbb{E}_{C_{2}}\left[\Delta_{2}^{H}\left(Z,C_{2};W(t)\right)a_{2}^{+}\left(t,C_{2};u\right)\right]\varphi_{1}'\left(\left\langle a_{1}^{+}\left(t;u\right),X\right\rangle \right)X\right],\label{eq:ODE-w}
\end{align}
for $u=\left(u_{1},u_{2}\right)$, with the initialization $a_{1}^{+}\left(0;u\right)=u_{1}$
and $a_{2}^{+}\left(0,c_{2};u\right)=u_{2}\left(c_{2}\right)$. Existence
and uniqueness of $\left(a_{1}^{+},a_{2}^{+}\right)$ follows similarly
to Theorem \ref{thm:existence ODE}. We next prove for all finite
$T>0$ and $u^{+}=\left(u_{1}^{+},u_{2}^{+}\right)\in\mathbb{R}^{d}\times L^{2}\left(P_{2}\right)$,
there exists $u^{-}=\left(u_{1}^{-},u_{2}^{-}\right)\in\mathbb{R}^{d}\times L^{2}\left(P_{2}\right)$
such that 
\[
a_{1}^{+}\left(T;u^{-}\right)=u_{1}^{+},\qquad a_{2}^{+}\left(T,\cdot;u^{-}\right)=u_{2}^{+}.
\]
We consider the following auxiliary dynamics on $\mathbb{R}^{d}\times L^{2}\left(P_{2}\right)$:
\begin{align}
\frac{\partial}{\partial t}a_{2}^{-}\left(t,c_{2};u\right) & =\xi_{2}(T-t)\mathbb{E}_{Z}\left[\Delta_{2}^{H}\left(Z,c_{2};W(T-t)\right)\varphi_{1}\left(\left\langle a_{1}^{-}\left(t;u\right),X\right\rangle \right)\right],\nonumber \\
\frac{\partial}{\partial t}a_{1}^{-}\left(t;u\right) & =\xi_{1}(T-t)\mathbb{E}_{Z}\left[\mathbb{E}_{C_{2}}\left[\Delta_{2}^{H}\left(Z,C_{2};W(T-t)\right)a_{2}^{-}\left(t,C_{2};u\right)\right]\varphi_{1}'\left(\left\langle a_{1}^{-}\left(t;u\right),X\right\rangle \right)X\right],\label{eq:ODE-a}
\end{align}
initialized at $a_{1}^{-}\left(0;u\right)=u_{1}$ and $a_{2}^{-}\left(0,c_{2};u\right)=u_{2}\left(c_{2}\right)$,
for $u=\left(u_{1},u_{2}\right)\in\mathbb{R}^{d}\times L^{2}\left(P_{2}\right)$.
Existence and uniqueness of $(a_{1}^{-},a_{2}^{-})$ follow similarly
to Theorem \ref{thm:existence ODE}. Observe that the pair
\[
\tilde{a}_{1}^{-}\left(t\right)=a_{1}^{-}\left(T-t;u^{+}\right),\qquad\tilde{a}_{2}^{-}\left(t,c_{2}\right)=a_{2}^{-}\left(T-t,c_{2};u^{+}\right)
\]
solves the system 
\begin{align*}
\frac{\partial}{\partial t}\tilde{a}_{2}^{-}\left(t,c_{2}\right) & =-\frac{\partial}{\partial t}a_{2}^{-}\left(T-t,c_{2};u^{+}\right)=-\xi_{2}(t)\mathbb{E}_{Z}\left[\Delta_{2}^{H}\left(Z,c_{2};W(t)\right)\varphi_{1}\left(\left\langle \tilde{a}_{1}^{-}\left(t\right),X\right\rangle \right)\right],\\
\frac{\partial}{\partial t}\tilde{a}_{1}^{-}\left(t\right) & =-\frac{\partial}{\partial t}a_{1}^{-}\left(T-t;u^{+}\right)=-\xi_{1}(t)\mathbb{E}_{Z}\left[\mathbb{E}_{C_{2}}\left[\Delta_{2}^{H}\left(Z,C_{2};W(t)\right)\tilde{a}_{2}^{-}\left(t,C_{2}\right)\right]\varphi_{1}'\left(\left\langle \tilde{a}_{1}^{-}\left(t\right),X\right\rangle \right)X\right],
\end{align*}
initialized at $\tilde{a}_{2}^{-}\left(0,c_{2}\right)=a_{2}^{-}\left(T,c_{2};u^{+}\right)$
and $\tilde{a}_{1}^{-}(0)=a_{1}^{-}\left(T;u^{+}\right)$. Thus, by
uniqueness of the solution to the ODE (\ref{eq:ODE-w}), $(\tilde{a}_{1}^{-},\tilde{a}_{2}^{-})$
forms a solution of the ODE (\ref{eq:ODE-w}) initialized at 
\[
\tilde{a}_{1}^{-}(0)=a_{1}^{-}\left(T;u^{+}\right),\qquad\tilde{a}_{2}^{-}\left(0,c_{2}\right)=a_{2}^{-}\left(T,c_{2};u^{+}\right).
\]
In particular, the solution $(\tilde{a}_{1}^{-},\tilde{a}_{2}^{-})$
of the ODE (\ref{eq:ODE-w}) with this initialization satisfies 
\[
\tilde{a}_{1}^{-}\left(T\right)=a_{1}^{-}\left(0;u^{+}\right)=u_{1}^{+},\qquad\tilde{a}_{2}^{-}\left(T,\cdot\right)=a_{2}^{-}\left(0,\cdot;u^{+}\right)=u_{2}^{+}.
\]
Let $u_{1}^{-}=a_{1}^{-}\left(T;u^{+}\right)$ and $u_{2}^{-}=a_{2}^{-}\left(T,\cdot;u^{+}\right)$.
Then we have $a_{1}^{+}\left(T;u^{-}\right)=u_{1}^{+}$ and $a_{2}^{+}\left(T,\cdot;u^{-}\right)=u_{2}^{+}$
as desired.

Using this, by continuity of the map $u\mapsto\left(a_{1}^{+}\left(T;u\right),a_{2}^{+}\left(T,\cdot;u\right)\right)$,
for every $\epsilon>0$, there exists a neighborhood $U$ of $u^{-}$
such that for any $u\in U$, $\left|\left(a_{1}^{+}\left(T;u\right),a_{2}^{+}\left(T,\cdot;u\right)\right)-u^{+}\right|\le\epsilon$.
Notice that the MF trajectory $W\left(t\right)$ satisfies 
\[
w_{1}\left(t,c_{1}\right)=a_{1}^{+}\left(t;w_{1}\left(0,c_{1}\right),w_{2}\left(0,c_{1},\cdot\right)\right),\qquad w_{2}\left(t,c_{1},\cdot\right)=a_{2}^{+}\left(t,\cdot;w_{1}\left(0,c_{1}\right),w_{2}\left(0,c_{1},\cdot\right)\right).
\]
Then since $\left(w_{1}\left(0,C_{1}\right),w_{2}\left(0,C_{1},\cdot\right)\right)$
has full support in $\mathbb{R}^{d}\times L^{2}\left(P_{2}\right)$,
for any finite $T>0$, we have $\left(w_{1}\left(T,C_{1}\right),w_{2}\left(T,C_{1},\cdot\right)\right)$
has full support in $\mathbb{R}^{d}\times L^{2}\left(P_{2}\right)$,
proving the first statement.

The other statements can be proven similarly by considering the following
pairs of flows on $L^{2}(P_{i-1})\times L^{2}(P_{i+1})$, for $u=\left(u_{1},u_{2}\right)\in L^{2}(P_{i-1})\times L^{2}(P_{i+1})$:
\begin{align*}
\frac{\partial}{\partial t}a_{i}^{+}\left(t,c_{i-1};u\right) & =-\xi_{i}\left(t\right)\mathbb{E}_{Z}\left[\Delta_{i}^{a}\left(Z,a_{i}^{+}\left(t,\cdot;u\right),a_{i+1}^{+}\left(t,\cdot;u\right);W\left(t\right)\right)\varphi_{i-1}\left(H_{i-1}\left(X,c_{i-1};W\left(t\right)\right)\right)\right],\\
\frac{\partial}{\partial t}a_{i+1}^{+}\left(t,c_{i+1};u\right) & =-\xi_{i+1}\left(t\right)\mathbb{E}_{Z}\left[\Delta_{i+1}^{H}\left(Z,c_{i+1};W\left(t\right)\right)\varphi_{i}\left(H_{i}^{a}\left(Z,a_{i}^{+}\left(t,\cdot;u\right);W\left(t\right)\right)\right)\right],
\end{align*}
initialized at $a_{i}^{+}\left(0,c_{i-1};u\right)=u_{1}\left(c_{i-1}\right)$
and $a_{i+1}^{+}\left(0,c_{i+1};u\right)=u_{2}\left(c_{i+1}\right)$,
and
\begin{align*}
\frac{\partial}{\partial t}a_{i}^{-}\left(t,c_{i-1};u\right) & =\xi_{i}\left(T-t\right)\mathbb{E}_{Z}\left[\Delta_{i}^{a}\left(Z,a_{i}^{-}\left(t,\cdot;u\right),a_{i+1}^{-}\left(t,\cdot;u\right);W\left(T-t\right)\right)\varphi_{i-1}\left(H_{i-1}\left(X,c_{i-1};W\left(T-t\right)\right)\right)\right],\\
\frac{\partial}{\partial t}a_{i+1}^{-}\left(t,c_{i+1};u\right) & =\xi_{i+1}\left(T-t\right)\mathbb{E}_{Z}\left[\Delta_{i+1}^{H}\left(Z,c_{i+1};W(T-t)\right)\varphi_{i}\left(H_{i}^{a}\left(Z,a_{i}^{-}\left(t,\cdot;u\right);W\left(T-t\right)\right)\right)\right],
\end{align*}
initialized at $a_{i}^{-}\left(0,c_{i-1};u\right)=u_{1}\left(c_{i-1}\right)$
and $a_{i+1}^{-}\left(0,c_{i+1};u\right)=u_{2}\left(c_{i+1}\right)$,
in which we define:
\begin{align*}
\Delta_{i}^{a}\left(z,f,g;W\left(t\right)\right) & =\mathbb{E}_{C_{i+1}}\left[\Delta_{i+1}^{H}\left(z,C_{i+1};W\left(t\right)\right)g\left(C_{i+1}\right)\varphi_{i}'\left(H_{i}^{a}\left(z,f;W\left(t\right)\right)\right)\right],\\
H_{i}^{a}\left(z,f;W\left(t\right)\right) & =\mathbb{E}_{C_{i-1}}\left[f\left(C_{i-1}\right)\varphi_{i-1}\left(H_{i-1}\left(x,C_{i-1};W\left(t\right)\right)\right)\right],
\end{align*}
for $f\in L^{2}\left(P_{i-1}\right)$ and $g\in L^{2}\left(P_{i+1}\right)$.

\paragraph*{Step 2: Diversity of the pre-activations.}

We show that ${\rm supp}\left(H_{i}\left(\cdot,C_{i};W\left(t\right)\right)\right)=L^{2}\left({\cal P}_{X}\right)$
for any $t\geq0$, for $i=2,...,L-1$ by induction.

Firstly consider the base case $i=2$. Recall that
\[
H_{2}\left(x,c_{2};W\left(t\right)\right)=\mathbb{E}_{C_{1}}\left[w_{2}\left(t,C_{1},c_{2}\right)\varphi_{1}\left(\left\langle w_{1}\left(t,C_{1}\right),x\right\rangle \right)\right]\equiv{\cal H}_{2}\left(t,x,w_{2}\left(t,\cdot,c_{2}\right)\right).
\]
Observe that the set ${\rm cl}\left(\left\{ {\cal H}_{2}\left(t,\cdot,f\right):\;f\in L^{2}\left(P_{1}\right)\right\} \right)$
is a closed linear subspace of $L^{2}\left({\cal P}_{X}\right)$.
Hence this set is equal to $L^{2}\left({\cal P}_{X}\right)$ if it
has dense span in $L^{2}\left({\cal P}_{X}\right)$, which we show
now. Indeed, suppose that for some $g\in L^{2}\left({\cal P}_{X}\right)$
such that $\left|g\right|\neq0$, we have $\mathbb{E}_{Z}\left[g\left(X\right){\cal H}_{2}\left(t,X,f\right)\right]=0$
for all $f\in L^{2}\left(P_{1}\right)$. Equivalently,
\[
\mathbb{E}_{C_{1}}\left[f\left(C_{1}\right)\mathbb{E}_{Z}\left[g\left(X\right)\varphi_{1}\left(\left\langle w_{1}\left(t,C_{1}\right),X\right\rangle \right)\right]\right]=0,
\]
for all $f\in L^{2}\left(P_{1}\right)$. As such, for $P_{1}$-almost
every $c_{1}$,
\[
\mathbb{E}_{Z}\left[g\left(X\right)\varphi_{1}\left(\left\langle w_{1}\left(t,c_{1}\right),X\right\rangle \right)\right]=0.
\]
Since ${\rm supp}\left(w_{1}\left(t,C_{1}\right)\right)=\mathbb{R}^{d}$
and that the mapping $u\mapsto\varphi_{1}\left(\left\langle u,x\right\rangle \right)$
is continuous, by the universal approximation assumption for $\varphi_{1}$,
we then obtain $g(x)=0$ for $P_{X}$-almost every $x$, which is
a contradiction. We have thus proved that ${\rm cl}\left(\left\{ {\cal H}_{2}\left(t,\cdot,f\right):\;f\in L^{2}\left(P_{1}\right)\right\} \right)=L^{2}\left({\cal P}_{X}\right)$.
Note that $f\mapsto{\cal H}_{2}\left(t,x,f\right)$ is continuous,
and ${\rm supp}\left(w_{2}\left(t,\cdot,C_{2}\right)\right)=L^{2}\left(P_{1}\right)$,
we then have ${\rm supp}\left(H_{2}\left(\cdot,C_{2};W\left(t\right)\right)\right)=L^{2}\left({\cal P}_{X}\right)$
as desired.

Now let us assume that ${\rm supp}\left(H_{i-1}\left(\cdot,C_{i-1};W\left(t\right)\right)\right)=L^{2}\left({\cal P}_{X}\right)$
for some $i\geq3$ (the induction hypothesis). We would like to show
${\rm supp}\left(H_{i}\left(\cdot,C_{i};W\left(t\right)\right)\right)=L^{2}\left({\cal P}_{X}\right)$.
This is similar to the base case. In particular, recall that
\[
H_{i}\left(x,c_{i};W\left(t\right)\right)=\mathbb{E}_{C_{i-1}}\left[w_{i}\left(t,C_{i-1},c_{i}\right)\varphi_{i-1}\left(H_{i-1}\left(x,C_{i-1};W\left(t\right)\right)\right)\right]\equiv{\cal H}_{i}\left(t,x,w_{i}\left(t,\cdot,c_{i}\right)\right).
\]
Now suppose that for some $g\in L^{2}\left({\cal P}_{X}\right)$ such
that $\left|g\right|\neq0$, we have $\mathbb{E}_{Z}\left[g\left(X\right){\cal H}_{i}\left(t,X,f\right)\right]=0$
for all $f\in L^{2}\left(P_{i-1}\right)$. Then, for $P_{i-1}$-almost
every $c_{i-1}$,
\[
\mathbb{E}_{Z}\left[g\left(X\right)\varphi_{i-1}\left(H_{i-1}\left(X,c_{i-1};W\left(t\right)\right)\right)\right]=0.
\]
Recall the induction hypothesis ${\rm supp}\left(H_{i-1}\left(\cdot,C_{i-1};W\left(t\right)\right)\right)=L^{2}\left({\cal P}_{X}\right)$.
Since $\varphi_{i-1}$ is non-obstructive and continuous, we obtain
$g(x)=0$ for $P_{X}$-almost every $x$, which is a contradiction.
Therefore the set ${\rm cl}\left(\left\{ {\cal H}_{i}\left(t,\cdot,f\right):\;f\in L^{2}\left(P_{i-1}\right)\right\} \right)$
has dense span in $L^{2}\left({\cal P}_{X}\right)$, and again, this
implies it is equal to $L^{2}\left({\cal P}_{X}\right)$. Since $f\mapsto{\cal H}_{i}\left(t,x,f\right)$
is continuous and ${\rm supp}\left(w_{i}\left(t,\cdot,C_{i}\right)\right)=L^{2}\left(P_{i-1}\right)$,
we have ${\rm supp}\left(H_{i}\left(\cdot,C_{i};W\left(t\right)\right)\right)=L^{2}\left({\cal P}_{X}\right)$.

\paragraph*{Step 3: Concluding.}

Let $\mathbb{E}_{Z}\left[\partial_{2}{\cal L}\left(Y,\hat{y}\left(X;W\left(t\right)\right)\right)\middle|X=x\right]\varphi_{L}'\left(H_{L}\left(x,1;W\left(t\right)\right)\right)={\cal H}\left(x,W\left(t\right)\right)$.
From the last step, we have ${\rm supp}\left(H_{L-1}\left(\cdot,C_{L-1};W\left(t\right)\right)\right)=L^{2}\left({\cal P}_{X}\right)$
for any $t\geq0$. Recall that
\[
\frac{\partial}{\partial t}w_{L}\left(t,c_{L-1},1\right)=-\mathbb{E}_{Z}\left[{\cal H}\left(X,W\left(t\right)\right)\varphi_{L-1}\left(H_{L-1}\left(X,c_{L-1};W\left(t\right)\right)\right)\right].
\]
By the convergence assumption, for any $\epsilon>0$, there exists
$T\left(\epsilon\right)>0$ such that for any $t\geq T\left(\epsilon\right)$,
for $P_{L-1}$-almost every $c_{L-1}$,
\[
\left|\mathbb{E}_{Z}\left[{\cal H}\left(X,W\left(t\right)\right)\varphi_{L-1}\left(H_{L-1}\left(X,c_{L-1};W\left(t\right)\right)\right)\right]\right|\leq\epsilon.
\]
We claim that ${\cal H}\left(x,W\left(t\right)\right)\to{\cal H}\left(x,\left\{ \bar{w}_{i}\right\} _{i\leq L}\right)$
in $L^{1}\left({\cal P}_{X}\right)$ as $t\to\infty$. Assuming this
claim and recalling that $\varphi_{L-1}$ is $K$-bounded by the regularity
assumption, we then have that for some $T'\left(\epsilon\right)\geq T\left(\epsilon\right)$,
for any $t\geq T'\left(\epsilon\right)$,
\begin{align*}
 & {\rm ess\text{-}sup}\left|\mathbb{E}_{Z}\left[{\cal H}\left(X,\left\{ \bar{w}_{i}\right\} _{i\leq L}\right)\varphi_{L-1}\left(H_{L-1}\left(X,C_{L-1};W\left(t\right)\right)\right)\right]\right|\\
 & \leq K\mathbb{E}_{Z}\left[\left|{\cal H}\left(X,\left\{ \bar{w}_{i}\right\} _{i\leq L}\right)-{\cal H}\left(X,W\left(t\right)\right)\right|\right]+{\rm ess\text{-}sup}\left|\mathbb{E}_{Z}\left[{\cal H}\left(X,W\left(t\right)\right)\varphi_{L-1}\left(H_{L-1}\left(X,C_{L-1};W\left(t\right)\right)\right)\right]\right|\\
 & \leq K\epsilon.
\end{align*}
Since ${\rm supp}\left(H_{L-1}\left(\cdot,C_{L-1};W\left(t\right)\right)\right)=L^{2}\left({\cal P}_{X}\right)$
and $\varphi_{L-1}$ is continuous,
\[
\left|\mathbb{E}_{Z}\left[{\cal H}\left(X,\left\{ \bar{w}_{i}\right\} _{i\leq L}\right)f\left(X\right)\right]\right|\leq K\epsilon\qquad\forall f\in S,
\]
for $S=\left\{ \varphi_{L-1}\circ g:\;g\in L^{2}\left({\cal P}_{X}\right)\right\} $.
Since $\epsilon>0$ is arbitrary,
\[
\left|\mathbb{E}_{Z}\left[{\cal H}\left(X,\left\{ \bar{w}_{i}\right\} _{i\leq L}\right)f\left(X\right)\right]\right|=0\qquad\forall f\in S.
\]
Furthermore, since $\varphi_{L-1}$ is non-obstructive, $S$ has dense
span in $L^{2}\left({\cal P}_{X}\right)$. Therefore ${\cal H}\left(x,\left\{ \bar{w}_{i}\right\} _{i\leq L}\right)=0$
for ${\cal P}_{X}$-almost every $x$. Since $\varphi_{L}'$ is non-zero
everywhere, 
\[
\mathbb{E}_{Z}\left[\partial_{2}{\cal L}\left(Y,\hat{y}\left(X;\left\{ \bar{w}_{i}\right\} _{i\leq L}\right)\right)\middle|X=x\right]=0
\]
 for ${\cal P}_{X}$-almost every $x$.

In Case 1, due to convexity of ${\cal L}$, for any measurable function
$\tilde{y}$:
\[
{\cal L}\left(y,\tilde{y}\left(x\right)\right)-{\cal L}\left(y,\hat{y}\left(x;\left\{ \bar{w}_{i}\right\} _{i\leq L}\right)\right)\geq\partial_{2}{\cal L}\left(y,\hat{y}\left(x,\left\{ \bar{w}_{i}\right\} _{i\leq L}\right)\right)\left(\tilde{y}\left(x\right)-\hat{y}\left(x,\left\{ \bar{w}_{i}\right\} _{i\leq L}\right)\right).
\]
Taking expectation, we get $\mathbb{E}_{Z}\left[{\cal L}\left(Y,\tilde{y}\left(X\right)\right)\right]\geq\mathscr{L}\left(\left\{ \bar{w}_{i}\right\} _{i\leq L}\right)$.

In Case 2, we have $\partial_{2}{\cal L}\left(y\left(x\right),\hat{y}\left(x;\left\{ \bar{w}_{i}\right\} _{i\leq L}\right)\right)=0$,
and hence ${\cal L}\left(y\left(x\right),\hat{y}\left(x;\left\{ \bar{w}_{i}\right\} _{i\leq L}\right)\right)=0$,
for ${\cal P}_{X}$-almost every $x$, since $y$ is a function of
$x$.

This hence gives a result on $\mathscr{L}\left(\left\{ \bar{w}_{i}\right\} _{i\leq L}\right)$,
conditional on the claim that ${\cal H}(x,W(t))\to{\cal H}\left(x,\{\bar{w}_{i}\}_{i\le L}\right)$
in $L^{1}({\cal P}_{X})$ as $t\to\infty$. We now prove the claim.
Recall the coupling $\pi_{t}$ in Assumption \ref{assump:multilayer}.4.
In the following, we let $(C_{1},\dots,C_{L},C_{1}',\dots,C_{L}')\sim\pi_{t}$.
For brevity, we denote
\[
\delta_{i}\left(t,x,c_{i},c_{i}'\right)=\left|H_{i}\left(x,c_{i}';W\left(t\right)\right)-H_{i}\left(x,c_{i};\left\{ \bar{w}_{i}\right\} _{i\leq L}\right)\right|.
\]
First observe that by the regularity assumption, for $2\leq i\leq L$:
\begin{align*}
\delta_{i}\left(t,x,c_{i},c_{i}'\right) & \leq K\mathbb{E}_{C_{i-1},C_{i-1}'}\left[\left|w_{i}\left(t,C_{i-1}',c_{i}'\right)-\bar{w}_{i}\left(C_{i-1},c_{i}\right)\right|+\left|\bar{w}_{i}\left(C_{i-1},c_{i}\right)\right|\delta_{i-1}\left(t,x,C_{i-1},C_{i-1}'\right)\right],\\
\delta_{1}\left(t,x,c_{1},c_{1}'\right) & \leq K\left|w_{1}\left(t,c_{1}'\right)-\bar{w}_{1}\left(c_{1}\right)\right|.
\end{align*}
This thus gives:
\begin{align*}
 & \mathbb{E}_{Z}\left[\left|{\cal H}\left(X,W\left(t\right)\right)-{\cal H}\left(X,\left\{ \bar{w}_{i}\right\} _{i\leq L}\right)\right|\right]\\
 & \leq K\mathbb{E}_{Z}\left[\delta_{L}\left(t,X,1,1\right)\right]\\
 & \leq K^{L}\sum_{i=2}^{L}\mathbb{E}\left[\left|w_{i}\left(t,C_{i-1}',C_{i}'\right)-\bar{w}_{i}\left(C_{i-1},C_{i}\right)\right|\sideset{}{_{j=i+1}^{L}}\prod\left|\bar{w}_{j}\left(C_{j-1},C_{j}\right)\right|\right]\\
 & \quad+K^{L}\mathbb{E}\left[\left|w_{1}\left(t,C_{1}'\right)-\bar{w}_{1}\left(C_{1}\right)\right|\sideset{}{_{j=2}^{L}}\prod\left|\bar{w}_{j}\left(C_{j-1},C_{j}\right)\right|\right].
\end{align*}
By the convergence assumption, the right-hand side tends to $0$ as
$t\to\infty$. This proves the claim.

Finally let us connect $\mathscr{L}\left(W\left(t\right)\right)$
with $\mathscr{L}\left(\left\{ \bar{w}_{i}\right\} _{i\leq L}\right)$:
\begin{align*}
\left|\mathscr{L}\left(W\left(t\right)\right)-\mathscr{L}\left(\left\{ \bar{w}_{i}\right\} _{i\leq L}\right)\right| & =\left|\mathbb{E}_{Z}\left[{\cal L}\left(Y,\hat{y}\left(X;W\left(t\right)\right)\right)-{\cal L}\left(Y,\hat{y}\left(X;\left\{ \bar{w}_{i}\right\} _{i\leq L}\right)\right)\right]\right|\\
 & \leq K\mathbb{E}_{Z}\left[\left|\hat{y}\left(X;W\left(t\right)\right)-\hat{y}\left(X;\left\{ \bar{w}_{i}\right\} _{i\leq L}\right)\right|\right]\\
 & \leq K\mathbb{E}_{Z}\left[\delta_{L}\left(t,X,1,1\right)\right],
\end{align*}
which again tends to $0$ as $t\to\infty$. This concludes the proof.
\end{proof}

\section{Convergence to Global Optimum under Morse-Sard Assumptions\label{sec:Global-convergence-ms}}

\textbf{\textcolor{red}{}}

In this section, we show that global convergence is guaranteed under
a different set of convergence assumptions, namely convergence in
moments of the MF limit and certain Morse-Sard assumptions. This generalizes
the global convergence mechanism of \cite{chizat2018} for two-layer
networks to settings where the loss ${\cal L}$ is not necessarily
convex and the depth $L\geq2$.

\subsection{The two-layer case\label{subsec:global-convergence-ms-2}}

Consider the two-layer setting of Section \ref{subsec:global_convergence_iid_2}.
We make the following assumption.
\begin{assumption}
\label{assump:Morse-Sard-2}There exist limits $\overline{w}_{1}$
and $\overline{w}_{2}$ such that the following hold:
\begin{enumerate}
\item (Wasserstein-type convergence.) There exists a coupling $\pi_{t}$
of $P_{1}$ and itself such that 
\[
\mathbb{E}_{\pi_{t}}\left[\left|\overline{w}_{2}(C_{1})\right|\left|w_{1}(t,C_{1}')-\overline{w}_{1}(C_{1})\right|+\left|w_{2}(t,C_{1}',1)-\overline{w}_{2}(C_{1})\right|\right]\to0
\]
 as $t\to\infty$, where $(C_{1},C_{1}')\sim\pi_{t}$.
\item (Morse-Sard in the limit.) With $\overline{W}=\left\{ \overline{w}_{1},\overline{w}_{2}\right\} $,
the mapping 
\[
u_{1}\mapsto\overline{{\cal F}}(u_{1}):=\left|\mathbb{E}_{Z}\left[\partial_{2}{\cal L}(Y,\hat{y}(X;\overline{W}))\varphi_{2}'(H_{2}(X,1;\overline{W}))\varphi_{1}(\langle u_{1},X\rangle)\right]\right|^{2}
\]
satisfies the following property. As $r\to\infty$, $\tilde{u}_{1}\mapsto\overline{{\cal F}}(r\tilde{u}_{1})$
converges uniformly in $C^{1}(\mathbb{S}^{d-1})$ to a function $\overline{{\cal F}}^{\infty}:\;\mathbb{S}^{d-1}\to\mathbb{R}$,
where $\mathbb{S}^{d-1}=\{\tilde{u}_{1}\in\mathbb{R}^{d}:\;|\tilde{u}_{1}|=1\}$.
Furthermore, for any stationary point $u_{1}^{*}$ of $\overline{{\cal F}}$
with $\overline{{\cal F}}(u_{1}^{*})>0$, and for any $\delta_{0}>0$,
there exists $\delta\in\left(0,\delta_{0}\right)$ so that for $S_{\delta}$
the connected component of the set $\{u:\;\overline{{\cal F}}(u)>\overline{{\cal F}}(u_{1}^{*})-\delta\}$
that contains $u_{1}^{*}$, there is $\xi>0$ such that $\left|\nabla\overline{{\cal F}}(u)\right|>\xi$
for all $u\in\partial{\rm cl}\left(S_{\delta}\right)$ the boundary
of the closure of $S_{\delta}$. Similarly, for any stationary point
$\tilde{u}_{1}^{*}$ of $\overline{{\cal F}}^{\infty}$ with $\overline{{\cal F}}^{\infty}(\tilde{u}_{1}^{*})>0$
and for any $\delta_{0}>0$, there exists $\delta\in\left(0,\delta_{0}\right)$
so that for $\tilde{S}_{\delta}$ the connected component of the set
$\{\tilde{u}\in\mathbb{S}^{d-1}:\;\overline{{\cal F}}^{\infty}(\tilde{u})>\overline{{\cal F}}^{\infty}(\tilde{u}_{1}^{*})-\delta\}$
that contains $\tilde{u}_{1}^{*}$, there is $\xi>0$ such that $\left|\nabla{\cal \overline{{\cal F}}}^{\infty}(\tilde{u})\right|>\xi$
for all $\tilde{u}\in\partial{\rm cl}\left(\tilde{S}_{\delta}\right)$.
\end{enumerate}
\end{assumption}

The convergence condition in the above assumption is actually the
first part of Assumption \ref{assump:two-layers}.4, and hence the
same remark for Assumption \ref{assump:two-layers}.4 applies; i.e.
one can deduce this condition from the convergence of $(w_{1}(t,\cdot),w_{2}(t,\cdot,1))$
to $(\overline{w}_{1},\overline{w}_{2})$ as $t\to\infty$ in the
Wasserstein-$2$ distance.
\begin{thm}
\label{thm:global-optimum-2-ms}Consider the MF limit corresponding
to the network (\ref{eq:two-layer-nn}), such that they are coupled
together by the coupling procedure in Section \ref{subsec:Neuronal-Embedding}.
Under Assumptions \ref{assump:two-layers}.1, \ref{assump:two-layers}.2,
\ref{assump:two-layers}.3, \ref{assump:two-layers}.5 and \ref{assump:Morse-Sard-2},
the following hold:
\begin{itemize}
\item Case 1 (convex loss): If ${\cal L}$ is convex in the second variable,
then:
\[
\lim_{t\to\infty}\mathscr{L}\left(W\left(t\right)\right)=\inf_{f_{1},f_{2}}\mathscr{L}\left(f_{1},f_{2}\right)=\inf_{\tilde{y}}\mathbb{E}_{Z}\left[{\cal L}\left(Y,\tilde{y}\left(X\right)\right)\right].
\]
\item Case 2 (generic non-negative loss): Suppose $\partial_{2}{\cal L}\left(y,\hat{y}\right)=0$
implies ${\cal L}\left(y,\hat{y}\right)=0$. If $y=y(x)$ a function
of $x$, then $\mathscr{L}\left(W\left(t\right)\right)=0$ as $t\to\infty$.
\end{itemize}
\end{thm}

Let us make a comparison with the two-layer setting in Section \ref{subsec:global_convergence_iid_2},
and in particular, Assumption \ref{assump:two-layers}. We see that
the convergence assumption in Assumption \ref{assump:two-layers}
is replaced by Assumption \ref{assump:Morse-Sard-2}. More specifically
the uniform convergence condition of $\frac{\partial}{\partial t}w_{2}\left(t,C_{1},1\right)$
in Assumption \ref{assump:two-layers} is replaced by the Morse-Sard
condition of Assumption \ref{assump:Morse-Sard-2}.

Similar to the proof of Theorem \ref{thm:global-optimum-2} in Section
\ref{subsec:global_convergence_iid_2}, the role of the Morse-Sard
condition is -- together with the full support property ${\rm supp}\left({\rm Law}\left(w_{1}\left(t,C_{1}\right)\right)\right)=\mathbb{R}^{d}$
by Lemma \ref{lem:full-support-2} -- to affirm that
\[
\overline{{\cal F}}\left(u_{1}\right)=\left|\mathbb{E}_{Z}\left[\partial_{2}{\cal L}(Y,\hat{y}(X;\overline{W}))\varphi_{2}'(H_{2}(X,1;\overline{W}))\varphi_{1}(\langle u_{1},X\rangle)\right]\right|^{2}=0\quad\forall u_{1}\in\mathbb{R}^{d},
\]
after which universal approximation is invoked to yield the desired
global convergence. The main idea is the following: should the above
not hold, there exists a region of $u_{1}$ where $\overline{{\cal F}}\left(u_{1}\right)>0$.
Since ${\rm supp}\left({\rm Law}\left(w_{1}\left(t,C_{1}\right)\right)\right)=\mathbb{R}^{d}$,
at any time $t$, for a non-negligible mass on $C_{1}$, $w_{1}\left(t,C_{1}\right)$
fully occupies the region. The Morse-Sard condition ensures that the
interaction over time between the two layers $w_{1}\left(t,C_{1}\right)$
and $w_{2}\left(t,C_{1},1\right)$ in this region would however force
the dynamics to diverge so long as $\overline{{\cal F}}\left(u_{1}\right)>0$.

The proof of Theorem \ref{thm:global-optimum-2-ms} is deferred to
Appendix \ref{sec:Remaining-proofs-global-conv-ms}. We also refer
to Section \ref{sec:Discussion} for further discussions. In the following,
we extend this result to the multilayer case and present its proof.

\subsection{The multilayer case}

One can obtain a multilayer analog of Theorem \ref{thm:global-optimum-2-ms}.
The key idea behind the Morse-Sard condition is similar to the two-layer
case. Here the advantage of our framework becomes clearer since it
easily accommodates the idea in the multilayer setup.

Recall the setting of Section \ref{sec:global_convergence_general}.
We make the following assumption which is a direct analogue of Assumption
\ref{assump:Morse-Sard-2} in the two-layer case.
\begin{assumption}
\label{assump:Morse-Sard-multi}There exist limits $\left\{ \overline{w}_{i}\right\} _{i\leq L}$
such that the following hold:
\begin{enumerate}
\item (Wasserstein-type convergence.) There exist couplings $\pi_{t}$ of
$\prod_{i=1}^{L}P_{i}$ and itself such that 
\begin{align*}
\mathbb{E}_{\pi_{t}}\left[\left|w_{i}\left(t,C_{i-1}',C_{i}'\right)-\overline{w}_{i}\left(C_{i-1},C_{i}\right)\right|^{2}\sideset{}{_{j=i+1}^{L}}\prod\left|\overline{w}_{j}\left(C_{j-1},C_{j}\right)\right|^{2}\right] & \to0,\quad i=2,...,L,\\
\mathbb{E}_{\pi_{t}}\left[\left|w_{1}\left(t,C_{1}'\right)-\overline{w}_{1}\left(C_{1}\right)\right|^{2}\sideset{}{_{j=2}^{L}}\prod\left|\overline{w}_{j}\left(C_{j-1},C_{j}\right)\right|^{2}\right] & \to0,
\end{align*}
as $t\to\infty$, where $(C_{1},\dots,C_{L},C_{1}',\dots,C_{L}')\sim\pi_{t}$.
\item (Morse-Sard in the limit.) With $\overline{W}=\left\{ \overline{w}_{i}\right\} _{i\leq L}$,
the mapping $u_{L-1}\in L^{2}(P_{L-2})\mapsto\overline{{\cal F}}(u_{L-1})$
defined by
\[
\overline{{\cal F}}(u_{L-1}):=\left|\mathbb{E}_{Z}\left[\partial_{2}{\cal L}(Y,y(X;\overline{W}))\varphi_{L}'(H_{L}(X,1;\overline{W}))\varphi_{L-1}\left(\left\langle u_{L-1},H_{L-2}(X,\cdot;\overline{W})\right\rangle _{L^{2}\left(P_{L-2}\right)}\right)\right]\right|^{2}
\]
satisfies the following property. As $r\to\infty$, $\tilde{u}_{L-1}\mapsto\overline{{\cal F}}(r\tilde{u}_{L-1})$
converges uniformly in $C^{1}\left(\mathbb{S}\left(L^{2}\left(P_{L-2}\right)\right)\right)$
to a function $\overline{{\cal F}}^{\infty}:\;\mathbb{S}\left(L^{2}\left(P_{L-2}\right)\right)\to\mathbb{R}$,
where $\mathbb{S}\left(L^{2}\left(P_{L-2}\right)\right)=\{u\in L^{2}\left(P_{L-2}\right):\;|u|_{L^{2}\left(P_{L-2}\right)}=1\}$.
Furthermore, for any stationary point $u_{L-1}^{*}$ of $\overline{{\cal F}}$
and for any $\delta_{0}>0$, there exists $\delta\in\left(0,\delta_{0}\right)$
so that for $S_{\delta}$ the connected component of the set $\{u:\;\overline{{\cal F}}(u)>\overline{{\cal F}}(u_{L-1}^{*})-\delta\}$
that contains $u_{L-1}^{*}$, there is $\xi>0$ such that $\left|\nabla\overline{{\cal F}}(u)\right|>\xi$
for all $u\in\partial{\rm cl}\left(S_{\delta}\right)$ the boundary
of the closure of $S_{\delta}$. Similarly, for any stationary point
$\tilde{u}_{L-1}^{*}$ of $\overline{{\cal F}}^{\infty}$ and for
any $\delta_{0}>0$, there exists $\delta\in\left(0,\delta_{0}\right)$
so that for $S_{\delta}$ the connected component of the set $\{u:\;\overline{{\cal F}}(u)>\overline{{\cal F}}^{\infty}(\tilde{u}_{L-1}^{*})-\delta\}$
which contains $r\tilde{u}_{L-1}^{*}$ for all $r$ sufficiently large,
there is $\xi>0$ such that $\left|\nabla\overline{{\cal F}}(u)\right|>\xi$
for all $u\in\partial{\rm cl}\left(S_{\delta}\right)$.
\end{enumerate}
\end{assumption}

The convergence condition in the above assumption is actually the
first part of Assumption \ref{assump:multilayer}.4, and hence the
same remark for Assumption \ref{assump:multilayer}.4 applies; i.e.
one can deduce this condition from the convergence of $(w_{i}(t))_{i=1}^{L}$
to $(\bar{w}_{i})_{i=1}^{L}$ in an appropriate Wasserstein distance.
We now state the theorem. The proof is deferred to Section \ref{subsec:Proof-multilayer-ms}.
\begin{thm}
\label{thm:global-optimum-multilayer-ms}Consider a neuronal embedding
$\left(\Omega,P,\left\{ w_{i}^{0}\right\} _{i\leq L}\right)$ and
the MF limit described in Section \ref{sec:global_convergence_general},
and in particular, under Assumptions Assumptions \ref{assump:multilayer}.1,
\ref{assump:multilayer}.2, \ref{assump:multilayer}.3, \ref{assump:multilayer}.5
and \ref{assump:Morse-Sard-multi}. Assume $\xi_{L}\left(\cdot\right)=\xi_{L-1}\left(\cdot\right)=1$.
Then:
\begin{itemize}
\item Case 1 (convex loss): If ${\cal L}$ is convex in the second variable,
then:
\[
\lim_{t\to\infty}\mathscr{L}\left(W\left(t\right)\right)=\inf_{F}\mathscr{L}\left(F\right)=\inf_{\tilde{y}:\;\mathbb{R}^{d}\to\mathbb{R}}\mathbb{E}_{Z}\left[{\cal L}\left(Y,\tilde{y}\left(X\right)\right)\right].
\]
\item Case 2 (generic non-negative loss): Suppose that $\partial_{2}{\cal L}\left(y,\hat{y}\right)=0$
implies ${\cal L}\left(y,\hat{y}\right)=0$. If $y=y(x)$ is a function
of $x$, then $\mathscr{L}\left(W\left(t\right)\right)\to0$ as $t\to\infty$.
\end{itemize}
\end{thm}

\subsection{Proof of Theorem \ref{thm:global-optimum-multilayer-ms}\label{subsec:Proof-multilayer-ms}}
\begin{proof}[Proof of Theorem \ref{thm:global-optimum-multilayer-ms}]
In the following, for $f,g\in L^{2}\left(P_{L-2}\right)$, let us
write $\left\langle f,g\right\rangle $ in place of $\left\langle f,g\right\rangle _{L^{2}\left(P_{L-2}\right)}$
for brevity. Let us define 
\[
\overline{H}_{i}(x,C_{i})=H_{i}(x,C_{i};\overline{W}),\qquad\overline{H}_{L}(x)=\overline{H}_{L}(x,1),\qquad\bar{y}(x)=\hat{y}(x;\overline{W}),
\]
and for $u_{L-1}\in L^{2}(P_{L-2})$, 
\begin{align*}
\overline{G}_{L}(u_{L-1}) & =\mathbb{E}_{Z}\left[\partial_{2}{\cal L}(Y,\overline{y}(X))\varphi_{L}'(\overline{H}_{L}(X))\varphi_{L-1}(\langle u_{L-1},\overline{H}_{L-2}(X,\cdot)\rangle)\right],\\
\overline{G}_{L-1}(u_{L-1}) & =\mathbb{E}_{Z}\left[\partial_{2}{\cal L}(Y,\overline{y}(X))\varphi_{L}'(\overline{H}_{L}(X))\varphi_{L-1}'(\langle u_{L-1},\overline{H}_{L-2}(X,\cdot)\rangle)\overline{H}_{L-2}(X,\cdot)\right],\\
G_{L}(t,u_{L-1}) & =\mathbb{E}_{Z}\left[\partial_{2}{\cal L}(Y,\hat{y}(t,X))\varphi_{L}'(H_{L}(t,X,1))\varphi_{L-1}(\langle u_{L-1},H_{L-2}(t,X,\cdot)\rangle\right],\\
G_{L-1}(t,u_{L-1}) & =\mathbb{E}_{Z}\left[\partial_{2}{\cal L}(Y,\hat{y}(t,X))\varphi_{L}'(H_{L}(t,X,1))\varphi_{L-1}'(\langle u_{L-1},H_{L-2}(t,X,\cdot)\rangle)H_{L-2}(t,X,\cdot)\right].
\end{align*}
Note that $\overline{G}_{L-1}(u_{L-1}),G_{L-1}(t,u_{L-1})\in L^{2}\left(P_{L-2}\right)$
which follows easily from Assumptions \ref{assump:multilayer}.3 and
\ref{assump:Morse-Sard-multi}.1 and Lemma \ref{lem:bounds MF a priori}.
Similar to the proof of Theorem \ref{thm:global-optimum-2-ms}, with
Assumptions \ref{assump:multilayer}.3 and \ref{assump:Morse-Sard-multi}.1,
we obtain that as $t\to\infty$,
\[
\mathbb{E}\left[\left|\overline{H}_{L}(X)-H_{L}(t,X,1)\right|^{2}\right]\to0,\qquad\mathbb{E}\left[\left|\bar{y}(X)-\hat{y}(t,X)\right|^{2}\right]\to0,
\]
\[
\mathbb{E}\left[\left|\partial_{2}{\cal L}(Y,\hat{y}(t,X))-\partial_{2}{\cal L}(Y,\overline{y}(X))\right|^{2}\right]\to0,
\]
and uniformly in $u_{L-1}$, 
\[
\left|G_{L-1}(t,u_{L-1})-\overline{G}_{L-1}(u_{L-1})\right|^{2}\to0,\qquad\left|G_{L}(t,u_{L-1})-\overline{G}_{L}(u_{L-1})\right|^{2}\to0.
\]

Consider the limit potential given by 
\[
\overline{{\cal F}}(u_{L-1})=\frac{1}{2}\left|\overline{G}_{L}(u_{L-1})\right|^{2}.
\]
By Assumption \ref{assump:multilayer}.3, $u_{L-1}\mapsto\overline{{\cal F}}\left(u_{L-1}\right)$
is continuous. Notice that 
\[
\nabla\overline{{\cal F}}(u_{L-1})=\frac{1}{2}\cdot2\overline{G}_{L}(u_{L-1})\nabla\left(\overline{G}_{L}(u_{L-1})\right)=\overline{G}_{L}(u_{L-1})\overline{G}_{L-1}(u_{L-1}).
\]
Let $\overline{{\cal F}}^{\infty}:\;\mathbb{S}\left(L^{2}\left(P_{L-2}\right)\right)\to\mathbb{R}$
be defined by $\overline{{\cal F}}^{\infty}(\tilde{u}_{L-1})=\lim_{r\to\infty}\overline{{\cal F}}(r\tilde{u}_{L-1})$,
which exists by Assumption \ref{assump:Morse-Sard-multi}. We shall
argue that $\overline{{\cal F}}(u_{L-1})=0$ for all $u_{L-1}\in L^{2}\left(P_{L-2}\right)$,
by contradiction. To that end, let us assume that $\overline{{\cal F}}(u_{L-1})\ne0$
for some $u_{L-1}$. Note that $\overline{{\cal F}}$ is bounded by
a constant by Assumption \ref{assump:multilayer}.3. Thus, either
there is a local maximizer $u_{L-1}^{*}$ of $\overline{{\cal F}}$
with $\overline{{\cal F}}(u_{L-1}^{*})>0$ or there is a local maximizer
$\tilde{u}_{L-1}^{*}$ of $\overline{{\cal F}}^{\infty}$ with $\overline{{\cal F}}^{\infty}(\tilde{u}_{L-1}^{*})>0$.

First consider the case that $\overline{{\cal F}}$ has a local maximizer
$u_{L-1}^{*}$ with $\overline{{\cal F}}(u_{L-1}^{*})>0$. Under Assumption
\ref{assump:Morse-Sard-multi}, there exists $\delta\in\left(0,\overline{{\cal F}}\left(u_{L-1}^{*}\right)\right)$
arbitrarily small so that for $S_{\delta}$ the connected component
of the set $\{u:\;\overline{{\cal F}}(u)>\overline{{\cal F}}(u_{L-1}^{*})-\delta\}$
that contains $u_{L-1}^{*}$, there is $\xi>0$ such that $\left|\nabla\overline{{\cal F}}(u_{L-1})\right|>\xi$
for all $u_{L-1}\in\partial{\rm cl}\left(S_{\delta}\right)$. Let
$T_{0}$ be sufficiently large so that for $t\ge T_{0}$, we have
if $u_{L-1}\in\partial{\rm cl}\left(S_{\delta}\right)$, $\left|\overline{G}_{L-1}(u_{L-1})-G_{L-1}(t,u_{L-1})\right|\le\xi/\sqrt{8\overline{{\cal F}}(u_{L-1}^{*})}$,
which -- similar to the proof of Theorem \ref{thm:global-optimum-2-ms}
-- implies
\begin{equation}
\left\langle \overline{G}_{L-1}(u_{L-1}),G_{L-1}(t,u_{L-1})\right\rangle >\frac{\xi^{2}}{4\overline{{\cal F}}(u_{L-1}^{*})}.\label{eq:proof-global-opt-multims-1}
\end{equation}
Also, we further enlarge $T_{0}$ so that for $t\ge T_{0}$ and any
$u_{L-1}\in{\rm cl}\left(S_{\delta}\right)$, $\left|\overline{G}_{L}(u_{L-1})-G_{L}(t,u_{L-1})\right|\le\frac{1}{2}\sqrt{\overline{{\cal F}}(u_{L-1}^{*})-\delta}$
and hence 
\begin{align}
G_{L}(t,u_{L-1}) & \geq\overline{G}_{L}(u_{L-1})-\frac{1}{2}\sqrt{\overline{{\cal F}}(u_{L-1}^{*})-\delta}>\overline{G}_{L}(u_{L-1})-\frac{1}{2}\sqrt{\overline{{\cal F}}(u_{L-1})}=\overline{G}_{L}(u_{L-1})-\frac{1}{2}\left|\overline{G}_{L}(u_{L-1})\right|,\label{eq:proof-global-opt-multims-2.1}\\
G_{L}(t,u_{L-1}) & \leq\overline{G}_{L}(u_{L-1})+\frac{1}{2}\sqrt{\overline{{\cal F}}(u_{L-1}^{*})-\delta}<\overline{G}_{L}(u_{L-1})+\frac{1}{2}\sqrt{\overline{{\cal F}}(u_{L-1})}=\overline{G}_{L}(u_{L-1})+\frac{1}{2}\left|\overline{G}_{L}(u_{L-1})\right|.\label{eq:proof-global-opt-multims-2.2}
\end{align}
Furthermore notice that
\begin{equation}
\frac{\partial}{\partial t}\overline{G}_{L}(w_{L-1}(t,\cdot,C_{L-1}))=-w_{L}(t,C_{L-1},1)\left\langle \overline{G}_{L-1}(w_{L-1}(t,\cdot,C_{L-1})),G_{L-1}(t,w_{L-1}(t,\cdot,C_{L-1}))\right\rangle .\label{eq:proof-global-opt-multims-3}
\end{equation}
Let $\tilde{\Omega}_{L-1}$ be the subset of $\Omega_{L-1}$ consisting
of $c_{L-1}$ where $\left|w_{L}(0,c_{L},1)\right|<1$. As shown in
Step 1 of the proof of Theorem \ref{thm:global-optimum-multilayer},
for any $t\geq0$, we have 
\[
{\rm supp}\left(w_{L-1}\left(t,\cdot,C_{L-1}\right),C_{L-1}\in\tilde{\Omega}_{L-1}\right)=L^{2}\left(P_{L-2}\right),
\]
and therefore for any open subset $B$ of $L^{2}\left(P_{L-2}\right)$,
there exists a positive mass of $C_{L-1}\in\tilde{\Omega}_{L-1}$
such that $w_{L-1}\left(t,\cdot,C_{L-1}\right)\in B$. In the following,
we consider $C_{L-1}\in\tilde{\Omega}_{L-1}$. We further divide the
argument into two cases: $\overline{G}_{L}(u_{L-1}^{*})>0$ and $\overline{G}_{L}(u_{L-1}^{*})<0$.

Let us consider the case that $\overline{G}_{L}(u_{L-1}^{*})>0$.
Then we can choose sufficiently small $\delta$ such that $\overline{G}_{L}(u_{L-1})>0$
for all $u_{L-1}\in{\rm cl}\left(S_{\delta}\right)$. Furthermore
consider the scenario that there exists $T\ge T_{0}$ such that a
positive mass of $\left(w_{L-1}\left(T,\cdot,C_{L-1}\right),w_{L}\left(T,C_{L-1},1\right)\right)$
on $C_{L-1}\in\tilde{\Omega}_{L-1}$ has $w_{L-1}\left(T,\cdot,C_{L-1}\right)\in S_{\delta}$
and $w_{L}\left(T,C_{L-1},1\right)<0$. Note that if $w_{L-1}\left(t,\cdot,C_{L-1}\right)\in S_{\delta}$,
\[
\frac{\partial}{\partial t}w_{L}\left(t,C_{L-1},1\right)=-G_{L}\left(t,w_{L-1}\left(T,\cdot,C_{L-1}\right)\right)\leq-\left(\overline{G}_{L}\left(w_{L-1}\left(T,\cdot,C_{L-1}\right)\right)-\frac{1}{2}\left|\overline{G}_{L}\left(w_{L-1}\left(T,\cdot,C_{L-1}\right)\right)\right|\right)<0
\]
by Eq. (\ref{eq:proof-global-opt-multims-2.1}). Define $T_{1}=\inf\left\{ t\geq T:\;w_{L-1}\left(t,\cdot,C_{L-1}\right)\notin S_{\delta}\right\} $.
Then $t\mapsto w_{L}\left(t,C_{L-1},1\right)$ is decreasing on $t\in[T,T_{1})$.
Let us argue that $T_{1}=\infty$. Indeed, suppose $T_{1}$ is finite.
We then have, by continuity, $w_{L-1}\left(T_{1},\cdot,C_{L-1}\right)\in\partial{\rm cl}\left(S_{\delta}\right)$
and $w_{L}\left(T_{1},C_{L-1},1\right)\le w_{L}\left(T,C_{L-1},1\right))<0$.
As such, $\frac{\partial}{\partial t}\overline{G}_{L}\left(w_{L-1}\left(T_{1},\cdot,C_{L-1}\right)\right)>0$
by Eq. (\ref{eq:proof-global-opt-multims-1}) and (\ref{eq:proof-global-opt-multims-3}).
By continuity, for some $\gamma>0$, $\frac{\partial}{\partial t}\overline{G}_{L}\left(w_{L-1}\left(T_{1}+t,\cdot,C_{L-1}\right)\right)>0$
for all $t\in\left[0,\gamma\right]$. But then $\overline{G}_{L}\left(w_{L-1}\left(T_{1}+t,\cdot,C_{L-1}\right)\right)\ge\overline{G}_{L}\left(w_{L-1}\left(T_{1},\cdot,C_{L-1}\right)\right)\ge\sqrt{2(\overline{{\cal F}}(u_{L-1}^{*})-\delta)}$,
and hence $w_{L-1}\left(T_{1}+t,\cdot,C_{L-1}\right)\in S_{\delta}$
for all $t\leq\gamma$, contradicting the definition of $T_{1}$.
Therefore $T_{1}=\infty$, i.e. for $t\ge T$ and $C_{1}\in\tilde{\Omega}_{1}$
with $w_{L-1}\left(T,\cdot,C_{L-1}\right)\in S_{\delta}$ and $w_{L}(T,C_{L-1},1)<0$,
we have $w_{L-1}\left(t,\cdot,C_{L-1}\right)\in S_{\delta}$, which
implies 
\[
G_{L}(t,w_{L-1}\left(t,\cdot,C_{L-1}\right))\stackrel{\left(a\right)}{\ge}\frac{1}{2}\overline{G}_{L}\left(w_{L-1}\left(t,\cdot,C_{L-1}\right)\right)=\sqrt{\frac{1}{2}\overline{{\cal F}}\left(w_{L-1}\left(t,\cdot,C_{L-1}\right)\right)}\geq\sqrt{\frac{1}{2}(\overline{{\cal F}}(u_{L-1}^{*})-\delta)},
\]
where $\left(a\right)$ is by Eq. (\ref{eq:proof-global-opt-multims-2.1})
and the fact $\overline{G}_{L}(u_{L-1})>0$ for all $u_{L-1}\in{\rm cl}\left(S_{\delta}\right)$.
In particular, there is a positive mass of $\left(w_{L-1}\left(T,\cdot,C_{L-1}\right),w_{L}\left(T,C_{L-1},1\right)\right)$
with $G_{L}(t,w_{L-1}\left(t,\cdot,C_{L-1}\right))\ge\sqrt{(\overline{{\cal F}}(u_{L-1}^{*})-\delta)/2}$
for all $t\ge T$. Noting that 
\[
\frac{d}{dt}\mathbb{E}\left[{\cal L}(Y,\hat{y}(t,X))\right]\leq-\mathbb{E}\left[\left|G_{L}(t,w_{L-1}\left(t,\cdot,C_{L-1}\right))\right|^{2}\right],
\]
we obtain $\frac{d}{dt}\mathbb{E}\left[{\cal L}(Y,\hat{y}(t,X))\right]$
being bounded above by a strictly negative constant for all $t\ge T$,
which is a contradiction since ${\cal L}$ is bounded below.

Next consider the scenario that for all $t\ge T_{0}$, the probability
that $w_{L-1}\left(t,\cdot,C_{L-1}\right)\in S_{\delta}$ and $w_{L}\left(t,C_{L-1},1\right)<0$
on $C_{L-1}\in\tilde{\Omega}_{L-1}$ is zero. Let us argue that for
any $t\ge T_{0}$ and for a.e. $C_{L-1}\in\tilde{\Omega}_{L-1}$ with
$w_{L-1}\left(t,\cdot,C_{L-1}\right)\in S_{\delta}$, we have $w_{L-1}\left(s,\cdot,C_{L-1}\right)\in S_{\delta}$
for all $s\in[T_{0},t]$. Indeed, consider $t$ and $C_{L-1}\in\tilde{\Omega}_{L-1}$
such that $w_{L-1}\left(t,\cdot,C_{L-1}\right)\in S_{\delta}$ and
$w_{L-1}\left(T',\cdot,C_{L-1}\right)\notin S_{\delta}$ for some
$T'\in[T_{0},t)$. Let $t'=\sup\{s\in\left[T',t\right]:\;w_{L-1}\left(s,\cdot,C_{L-1}\right)\notin S_{\delta}\}<t$.
By continuity, $w_{L-1}\left(t',\cdot,C_{L-1}\right)\in\partial{\rm cl}\left(S_{\delta}\right)$
and so by Eq. (\ref{eq:proof-global-opt-multims-1}),
\[
\left\langle \overline{G}_{L-1}(w_{L-1}\left(t',\cdot,C_{L-1}\right)),G_{L-1}(t',w_{L-1}\left(t',\cdot,C_{L-1}\right))\right\rangle >\frac{\xi^{2}}{4\overline{{\cal F}}(u_{L-1}^{*})}.
\]
By continuity, there exists $t''\in\left(t',t\right)$ such that for
all $s\in[t',t'']$,
\[
\left\langle \overline{G}_{L-1}(w_{L-1}\left(s,\cdot,C_{L-1}\right)),G_{L-1}(s,w_{L-1}\left(s,\cdot,C_{L-1}\right))\right\rangle \geq\frac{\xi^{2}}{100\overline{{\cal F}}(u_{L-1}^{*})}.
\]
By definition of $t'$, we also have $w_{L-1}\left(s,\cdot,C_{L-1}\right)\in S_{\delta}$
and therefore $w_{L}(s,C_{L-1},1)\ge0$ for any $s\in(t',t]$. Then
by Eq. (\ref{eq:proof-global-opt-2ms-3}), $\frac{\partial}{\partial t}\overline{G}_{L}(w_{L-1}\left(s,\cdot,C_{L-1}\right))\leq0$
for all $s\in(t',t'']$ and therefore
\[
\overline{G}_{L}(w_{L-1}\left(t'',\cdot,C_{L-1}\right))\leq\overline{G}_{L}(w_{L-1}\left(t',\cdot,C_{L-1}\right))=\sqrt{2(\overline{{\cal F}}(u_{L-1}^{*})-\delta)},
\]
where the equality follows from $w_{L-1}\left(t',\cdot,C_{L-1}\right)\in\partial{\rm cl}\left(S_{\delta}\right)$.
However this contradicts with $w_{L-1}\left(t'',\cdot,C_{L-1}\right)\in S_{\delta}$.
Therefore it holds that for any $t\ge T_{0}$ , for a.e. $C_{L-1}\in\tilde{\Omega}_{L-1}$
with $w_{L-1}\left(t,\cdot,C_{L-1}\right)\in S_{\delta}$, we have
$w_{L-1}\left(s,\cdot,C_{L-1}\right)\in S_{\delta}$ and therefore
$w_{L}\left(s,C_{L-1},1\right)\ge0$ for all $s\in[T_{0},t]$. Since
$w_{L-1}\left(t,\cdot,C_{L-1}\right)$ on $C_{L-1}\in\tilde{\Omega}_{L-1}$
has full support at any $t\geq0$, we have for any $t_{0}\geq T_{0}$,
there is a positive mass on $C_{L-1}\in\tilde{\Omega}_{L-1}$ such
that $w_{L-1}\left(t_{0},\cdot,C_{L-1}\right)\in S_{\delta}$ and
hence, as shown, $w_{L-1}\left(s,\cdot,C_{L-1}\right)\in S_{\delta}$
and $w_{L}\left(s,C_{L-1},1\right)\ge0$ for all $s\in\left[T_{0},t_{0}\right]$.
Note that we have $w_{L}\left(T_{0},C_{L-1},1\right)\le M(T_{0})$
for some finite $M(T_{0})>0$ for $C_{L-1}\in\tilde{\Omega}_{L-1}$
(which follows from the fact $\left|\frac{\partial}{\partial t}w_{L}\left(t,\cdot,1\right)\right|\leq K$
by Assumption \ref{assump:two-layers}.3 and that $w_{L}\left(0,C_{L-1},1\right)<1$).
Also note that for $w_{L-1}\left(s,\cdot,C_{L-1}\right)\in S_{\delta}$
and $s\geq T_{0}$,
\begin{align*}
 & \frac{\partial}{\partial t}w_{L}\left(s,C_{L-1},1\right)=-G_{L}(s,w_{L-1}\left(s,\cdot,C_{L-1}\right))\stackrel{\left(a\right)}{\leq}-\frac{1}{2}\overline{G}_{L}(w_{L-1}\left(s,\cdot,C_{L-1}\right))\\
 & \qquad=-\sqrt{\frac{1}{2}\overline{{\cal F}}\left(w_{L-1}\left(s,\cdot,C_{L-1}\right)\right)}\leq-\sqrt{\frac{1}{2}(\overline{{\cal F}}(u_{L-1}^{*})-\delta)}
\end{align*}
a strictly negative constant, where $\left(a\right)$ is by Eq. (\ref{eq:proof-global-opt-multims-2.1})
and the fact $\overline{G}_{L}(u_{L-1})>0$ for all $u_{L-1}\in{\rm cl}\left(S_{\delta}\right)$.
As such, for any $t_{0}\geq T_{0}$ such that
\[
M\left(T_{0}\right)-\left(t_{0}-T_{0}\right)\sqrt{\frac{1}{2}(\overline{{\cal F}}(u_{L-1}^{*})-\delta)}<0,
\]
there is a positive mass on $C_{L-1}\in\tilde{\Omega}_{L-1}$ such
that firstly $w_{L}\left(s,C_{L-1},1\right)\ge0$ for all $s\in\left[T_{0},t_{0}\right]$
and secondly there exists $t\in\left[T_{0},t_{0}\right]$ in which
\[
w_{L}\left(t,C_{L-1},1\right)\leq M\left(T_{0}\right)-\left(t-T_{0}\right)\sqrt{\frac{1}{2}(\overline{{\cal F}}(u_{L-1}^{*})-\delta)}<0.
\]
We again obtain a contradiction.

The case $\overline{G}_{L}(u_{L-1}^{*})<0$ can be treated similarly,
with the use of Eq. (\ref{eq:proof-global-opt-multims-2.1}) replaced
by Eq. (\ref{eq:proof-global-opt-multims-2.2}). Both cases lead to
a contradiction, ruling out the possibility that there is a local
maximizer $u_{L-1}^{*}$ of $\overline{{\cal F}}$ with $\overline{{\cal F}}(u_{L-1}^{*})>0$.

Next consider the case $\overline{{\cal F}}$ does not have any local
maximizer in $L^{2}(P_{L-2})$ but $\overline{{\cal F}}^{\infty}$
has a local maximizer $\tilde{u}_{L-1}^{*}$ with $\overline{{\cal F}}^{\infty}(\tilde{u}_{L-1}^{*})>0$.
Under Assumption \ref{assump:Morse-Sard-multi}, there exists $\delta\in\left(0,\overline{{\cal F}}^{\infty}\left(\tilde{u}_{L-1}^{*}\right)\right)$
arbitrarily small so that for $S_{\delta}$ the connected component
of the set $\{u:\;\overline{{\cal F}}(u)>\overline{{\cal F}}^{\infty}(\tilde{u}_{L-1}^{*})-\delta\}$
which contains $r\tilde{u}_{L-1}^{*}$ for all $r$ sufficiently large,
there is $\xi>0$ such that $\left|\nabla\overline{{\cal F}}(u)\right|>\xi$
for all $u\in\partial{\rm cl}\left(S_{\delta}\right)$. The rest of
the argument can be repeated as before to yield a contradiction.

In short, we have shown that $\overline{{\cal F}}(u_{L-1})=\frac{1}{2}\left|\overline{G}_{L}(u_{L-1})\right|^{2}=0$,
and equivalently,
\[
\mathbb{E}_{Z}\left[\partial_{2}{\cal L}(Y,\overline{y}(X))\varphi_{L}'(\overline{H}_{L}(X))\varphi_{L-1}(\langle u_{L-1},\overline{H}_{L-2}(X,\cdot)\rangle)\right]=0
\]
for all $u_{L-1}\in L^{2}(P_{L-2})$. The proof can now be completed
similar to the proof of Theorem \ref{thm:global-optimum-multilayer}.
\end{proof}

\section{Further discussions\label{sec:Discussion}}

Having presented our neuronal embedding framework for multilayer MF
neural networks and proven several results concerning i.i.d. initializations
and global convergence under various settings, we now place the discussion
of our work in the context of related works.

\subsection{Two-layer neural networks}

The MF view on the training dynamics of neural networks has gathered
significant interests in the recent literature, starting with the
two-layer case \cite{nitanda2017stochastic,mei2018mean,chizat2018,rotskoff2018neural,sirignano2018mean}.
In this case, it is known that convergence to global optimum is possible
for gradient descent or SGD \cite{mei2018mean,chizat2018,rotskoff2018neural},
with a potentially exponential rate \cite{javanmard2019analysis}
and a dimension-independent width \cite{mei2019mean}. This line of
works has also inspired research into new training algorithms \cite{wei2018margin,rotskoff2019global,nitanda2020particle},
stability properties of the trained networks \cite{alex2019landscape},
other architectures which are compositions of multiple MF neural networks
\cite{weinan2020machine,lu2020mean} and MF neural networks in other
machine learning contexts \cite{agazzi2020global,nguyen2021analysis}.
Most works focus on fully-connected networks on the Euclidean space
and utilize certain convexity properties to study optimization efficiency.
The MF formulation of the two-layer case in these works enjoys the
wealth of the mathematics of optimal transport and gradient flows
in measure spaces \cite{ambrosio2008gradient}.

Our work, on the other hand, considers general Hilbert spaces which
can be infinite-dimensional (Section \ref{sec:Framework}) and does
not rely critically on convexity (Theorems \ref{thm:global-optimum-2},
\ref{thm:global-optimum-3}, \ref{thm:global-optimum-multilayer},
\ref{thm:global-optimum-2-ms} and \ref{thm:global-optimum-multilayer-ms}).
Our framework departs from the Wasserstein gradient flow viewpoint,
and while being in the early stage of technical foundations, it is
demonstrated to give useful results including and beyond the two-layer
case.

\subsection{Multilayer neural networks}

As mentioned in the introduction, the multilayer case poses a major
conceptual challenge. Prior to our work, several ideas have been proposed
independently. \cite{nguyen2019mean} puts forth the idea that a neuron
is represented by a stochastic (Markov) kernel and gives a heuristic
derivation, where the MF limit is described by a certain evolution
of measures over the space of stochastic kernels. \cite{araujo2019mean}
rigorously derives the MF limit as an evolution of a measure on paths
through layers. In \cite{sirignano2019mean}, the network is viewed
as a time-dependent function of its initialization and this function
simplifies upon concentrations over the randomness of the initialization.
All three works employ scalings with respect to the widths, in which
normalizations are applied at every layers, not just the last layer,
together with compensating learning rates. This thereby ensures nonlinear
evolution at all layers.

Working under the same scalings, our framework gives a new perspective
via a central question: how does one describe an ensemble of an arbitrary
number of neurons? Answering this question, our idea of a neuronal
embedding allows one to describe the MF limit in a clean and rigorous
manner. In particular, it avoids extra assumptions made in \cite{araujo2019mean,sirignano2019mean}:
unlike our work, \cite{araujo2019mean} assumes untrained first and
last layers and requires non-trivial technical tools; \cite{sirignano2019mean}
takes an unnatural sequential limit of the widths and proves a non-quantitative
result, whereas we prove a quantitative bound that essentially requires
only the minimum of the widths to be large. An advantage of our framework
comes from the fact that while MF formulations in \cite{araujo2019mean,sirignano2019mean}
are specific to and exploit i.i.d. initializations, our formulation
does not and thereby allows to study i.i.d. initializations as well
as interesting non-i.i.d. initialization schemes. Compared to \cite{nguyen2019mean},
while a certain step of our analysis takes an inspiration from the
idea of stochastic kernels in \cite{nguyen2019mean}, our framework
circumvents its technical cumbersomeness and gives a rigorous and
clean mathematical treatment.

After our first preprint, the work \cite{fang2020modeling} takes
another view on this challenge. In particular, considering a finite
set of training data, \cite{fang2020modeling} encodes each neuron
by its pre-activation values, computed over the entire training data,
at initialization. As a specific interpretation by \cite{fang2020modeling},
the pre-activation values at initialization capture a certain sense
of ``features'' seen by the neurons. Meanwhile our framework identifies
neuron $j_{i}$ at layer $i$ via the sample $C_{i}\left(j_{i}\right)$
drawn from the space $\left(\Omega_{i},P_{i}\right)$ (Section \ref{subsec:Neuronal-Embedding})
and remains general about this space. One may observe the following
connection: a specific choice of the neuronal embedding can be built
over random variables that are defined by the pre-activation values
at initialization. The generality of our framework maintains freedom
over choices of the neuronal embedding, including this specific choice.
For example, when the training data size is infinite -- an idealized
situation commonly assumed in theoretical studies, then if one follows
\cite{fang2020modeling}, each pre-activation becomes a function over
an infinite domain, instead of a finite-dimensional vector. This potentially
poses technical complications, which can be avoided simply by a different
choice of the neuronal embedding in our framework.

\subsection{Degeneracy with i.i.d. initializations}

As shown in Section \ref{sec:iid_init}, i.i.d. initializations cause
strong degeneracy for a network depth at least four. The work \cite{araujo2019mean}
is the first to realize and take advantage of this phenomenon to formulate
the MF limit; in particular, the measure on the paths in \cite{araujo2019mean}
admits a product structure, signifying the mutually independent nature
of the evolutions of weights at different layers in the infinite width
limit. Note however that \cite{araujo2019mean} explicitly exploits
this degeneracy phenomenon to formulate the MF limit. In contrast,
our framework is general and upon specializing to the case of i.i.d.
initializations, it allows to derive this phenomenon in greater details
and simultaneously remove certain technical assumptions in \cite{araujo2019mean}.
In particular, we remove the technical conditions of random input
and output features and no biases of \cite{araujo2019mean}. In addition,
one can use Corollary \ref{cor:iid_tracking} to immediately verify
that in the setting of no biases, $L\ge5$ and untrained first and
last layers ($\xi_{1}^{\mathbf{w}}\left(\cdot\right)=\xi_{L}^{\mathbf{w}}\left(\cdot\right)=0$),
the weights and activations in the limit satisfy the McKean-Vlasov
equation in \cite{araujo2019mean}.

Such degeneracy is generally undesirable. The fact that our framework
is not specific to i.i.d. initializations allows for an escape from
this situation. In this aspect, our framework follows closely the
spirit of the work \cite{nguyen2019mean}, whose MF formulation is
also not specific to i.i.d. initializations. Through the language
of stochastic kernels, \cite{nguyen2019mean} envisions a scenario
in which evolutions of the weights at different layers are stochastically
coupled. The usefulness of such scenario is realized by our global
convergence guarantee for multilayer networks with arbitrary depths
in Sections \ref{sec:global_convergence_general} and \ref{sec:Global-convergence-ms}
(Theorems \ref{thm:global-optimum-multilayer} and \ref{thm:global-optimum-multilayer-ms}),
with the novel idea of bidirectional diversity for non-i.i.d. initialization.

\subsection{Global convergence}

Optimization efficacy has been one major question that sets the MF
literature apart from other theoretical studies of neural networks,
where one witnesses new involvement of sophisticated mathematical
tools and insights. As mentioned, the two-layer case has enjoyed numerous
efforts to establish global convergence (see e.g. \cite{mei2018mean,chizat2018,rotskoff2018neural,javanmard2019analysis,rotskoff2019global,wei2018margin,chizat2019sparse,wojtowytsch2020convergence}).
Our work is the first to obtain global convergence guarantees in the
MF regime for the multilayer case.

\paragraph*{Two-layer networks: comparison to \cite{chizat2018}.}

Closely relevant to our thread of results is the work \cite{chizat2018}.
This work treats the two-layer case under certain convergence and
Morse-Sard assumptions and convex losses. To make a direct comparison
with \cite{chizat2018}, let us first focus on the two-layer case,
and in particular, Theorem \ref{thm:global-optimum-2} together with
its accompanying Assumption \ref{assump:two-layers}. Several elements
in our analysis are inspired by this work; we also differ in crucial
ways. Similar to \cite{chizat2018}, our proof also hinges on the
insight that a certain diversity property is held throughout the course
of training. We assume a universal approximation property (Assumption
\ref{assump:two-layers}.5), which is natural in neural network learning,
and dispense with convexity of the loss, whereas \cite{chizat2018}
does not utilize universal approximation and requires convex losses.
In our convergence assumption -- Assumption \ref{assump:two-layers}.4
-- the moment convergence condition is similar to the convergence
assumption in \cite{chizat2018}. We differ from \cite{chizat2018}
fundamentally in the uniform convergence condition ${\rm ess\text{-}sup}\left|\frac{\partial}{\partial t}w_{2}\left(t,C_{1},1\right)\right|\to0$
of the second layer's weight. On one hand, this condition replaces
the Morse-Sard condition in \cite{chizat2018}, which is difficult
to verify in general. On the other hand, it is a natural assumption
to make: as shown in Proposition \ref{prop:converse-two-layers},
if this uniform convergence condition fails, global convergence cannot
be attained. In shorts, using the insight on diversity, together with
universal approximation, we uncover a new mechanism for global convergence
without the need for convex losses.

\paragraph*{Multilayer networks.}

While \cite{chizat2018} is specific to two-layer networks, we further
the insight on diversity to the multilayer case, where we introduce
the new notion of bidirectional diversity. In the context of two-layer
networks, diversity refers to that the first layer’s weight distribution
has full support in the Euclidean space. In the multilayer case, this
notion no longer resides in the Euclidean space, but is realized in
function spaces that are naturally described by the neuronal embedding
framework. Moreover, as noted in Section \ref{subsec:Global-convergence-and-bidirectional-diversity},
it highlights an interesting dynamical mechanism, in which adjacent
layers interact with each other over time in such a way that diversity
is preserved through the depth of the network and at any time, roughly
speaking.

Similar to the two-layer case, in place of the Morse-Sard assumption
in \cite{chizat2018}, we show global convergence under uniform convergence
of the gradient update at a certain layer (Theorems \ref{thm:global-optimum-3}
and \ref{thm:global-optimum-multilayer}). Again we note per Propositions
\ref{prop:converse-three-layers} and \ref{prop:converse-multilayer},
there is a converse relation between this uniform convergence and
global convergence; if the former fails, so does the latter.

Several of these insights are utilized in the recent work \cite{fang2020modeling}
that proves a global convergence guarantee for a residual MF neural
architecture under the uniform convergence assumption of the gradient
update. In this architecture, a skip connection is introduced to route
the first layer directly to the second last one. Thanks to this skip
connection, diversity is essentially transferred directly from the
first layer to the second last layer. In short, in \cite{fang2020modeling},
diversity is maintained with the help of architectural imposition.
In contrast, in our global convergence result for the multilayer case,
diversity is maintained automatically by the training dynamics.

The work \cite{lu2020mean}, which studies a type of composition of
many two-layer MF networks, and a recent update of \cite{sirignano2019mean},
which studies the three-layer case, establish conditions of stationary
points to be global optima with certain overlapping ideas. However
they require essentially a certain diversity assumption on the limit
point (i.e. at convergence $t=\infty$). We do not need to make this
assumption: the remark in Section \ref{subsec:three-layers-high-level-idea}
highlights the dynamical nature of the proof where diversity is assumed
at initialization $t=0$ only and proven to hold at any finite training
time $t<\infty$. As explained in Section \ref{subsec:three-layers-high-level-idea},
diversity may not hold at $t=\infty$ and global convergence can still
be attained regardless.

Let us mention again that global convergence results in those works
are proven under the convex loss assumption. On the other hand, our
results allow for removal of this assumption and our proofs do not
make use of convexity in any crucial way.

\paragraph*{Convergence under Morse-Sard assumptions.}

Our framework is able to give a self-contained proof of global convergence
under the Morse-Sard assumption, without the aforementioned uniform
convergence assumption (Theorems \ref{thm:global-optimum-2-ms} and
\ref{thm:global-optimum-multilayer-ms}).

Let us place this discussion in the two-layer context, particularly
Theorem \ref{thm:global-optimum-2-ms} and its accompanying Assumption
\ref{assump:Morse-Sard-2}. Observe that Assumption \ref{assump:Morse-Sard-2}.2
follows immediately if $\overline{{\cal F}}$ and $\overline{{\cal F}}^{\infty}$
satisfy Morse-Sard type regularity, i.e., the sets of regular values
of $\overline{{\cal F}}$ and $\overline{{\cal F}}^{\infty}$ are
dense (hence the name ``Morse-Sard''). Indeed, assume that $\overline{{\cal F}}$
and $\overline{{\cal F}}^{\infty}$ satisfy Morse-Sard type regularity.
Let $\widehat{S}_{\delta}=\{u:\;\overline{{\cal F}}(u)>\overline{{\cal F}}(u_{1}^{*})-\delta\}$.
In that case, for any stationary point $u_{1}^{*}$ of $\overline{{\cal F}}$
with $\overline{{\cal F}}(u_{1}^{*})>0$, and for any $\delta_{0}>0$,
there exists $\delta\in\left(0,\delta_{0}\right)$ so that any $u\in\partial\textrm{cl}(\widehat{S}_{\delta})$
satisfies $\nabla\overline{{\cal F}}(u)\ne0$. Over a bounded connected
component $S_{\delta}$ of $\widehat{S}_{\delta}$, this immediately
implies the existence of $\xi>0$ such that $\left|\nabla\overline{{\cal F}}(u)\right|>\xi$
for all $u\in\partial\textrm{cl}(S_{\delta})$. Over an unbounded
connected component $S_{\delta}$ of $\widehat{S}_{\delta}$, whenever
$\overline{{\cal F}}(u_{1}^{*})-\delta$ is a regular value of $\overline{{\cal F}}^{\infty}$,
there is $\xi>0$ such that $|\nabla\overline{{\cal F}}(u)|>\xi$
for $u\in\partial\textrm{cl}(S_{\delta})\setminus\mathbb{B}(0,r)$
for some $r$ sufficiently large, where $\mathbb{B}(0,r)$ is the
ball around $0$ with radius $r$. Since $\overline{{\cal F}}(u_{1}^{*})-\delta$
is a regular value of $\overline{{\cal F}}$, by making $\xi>0$ smaller
if needed, we can guarantee that $|\nabla\overline{{\cal F}}(u)|>\xi$
for $u\in\partial\textrm{cl}(S_{\delta})\cap\mathbb{B}(0,r)$, and
hence $|\nabla\overline{{\cal F}}(u)|>\xi$ for all $u\in\partial\textrm{cl}(S_{\delta})$.

Thus our Morse-Sard condition is similar to (and slightly weaker than)
the Morse-Sard assumption of \cite{chizat2018}. As stated, it is
sufficient for this condition to hold w.r.t. the limit $\overline{W}=\left\{ \overline{w}_{1},\overline{w}_{2}\right\} $.
A counterpart statement of the Morse-Sard assumption of \cite{chizat2018}
would impose the condition on a generic class of pairs of functions
$\tilde{W}=\left\{ \tilde{w}_{1},\tilde{w}_{2}\right\} $ that contains
$\overline{W}$ and as such trivially imply our assumption.

As explained in Section \ref{subsec:global-convergence-ms-2}, the
Morse-Sard condition forces the interaction over time between the
weights of the two layers in a specific way that guarantees global
convergence. This idea was realized by \cite{chizat2018} in the language
of Wasserstein gradient flows for a convex loss function $\mathscr{L}$
and two-layer neural networks. Here in the two-layer case, firstly
Theorem \ref{thm:global-optimum-2-ms} extends the result to generic
losses; secondly and more importantly, it demonstrates that the same
idea could be naturally executed in our framework without the use
of Wasserstein gradient flows.

Theorem \ref{thm:global-optimum-multilayer-ms} demonstrates further
the applicability of our argument to the multilayer case, which the
Wasserstein gradient flow formulation has difficulty with.

\subsection{Empirical findings and other infinite-width scalings}

Mathematical ideas aside, one important aspect is how well one can
observe the MF limiting behavior in multilayer networks with finite
but large widths, normalized under the MF scaling. This has been demonstrated
positively in the work \cite{nguyen2019mean}. In particular, \cite{nguyen2019mean}
performs experiments on several real-life machine learning tasks and
finds that the evolution curves of certain performance metrics, such
as the training loss and the classification accuracy, are almost insensitive
to the widths -- provided sufficiently large -- and hence they exhibit
a limiting behavior. As \cite{nguyen2019mean} shows, this occurs
as soon as the widths are on the order of just a few hundreds, which
is common in practice.

The MF scaling is not the only infinite-width scaling with interesting
properties. Another popular scaling regime is the neural tangent kernel
(NTK) scaling \cite{jacot2018neural,chizat2018note,du2018gradient,zou2018stochastic,allen2018convergence,lee2019wide}.
In the NTK scaling, the weights do not move and the learning dynamics
becomes linearized, although several interesting properties such as
convergence to the global optimum are attainable. For this reason,
it is often said that the NTK-scaled infinite-width neural networks
do not perform feature learning. This NTK-like behavior is not what
is observed in practical neural networks with finite but large widths.
In contrast, the MF-scaled networks have nonlinear dynamics and weights
moving away from initialization, and are thus said to perform feature
learning in the literature.

The MF scaling is not necessarily the only scaling with feature learning
(see e.g. \cite{golikov2020dynamically,yang2021feature,Hajjar:aa}).
It is known that in the standard scaling that matches with the usual
practice, the networks are NTK-like in the infinite-width limit \cite{mei2019mean,yang2021feature}.
Consequently \textit{any} infinite-width scalings with feature learning
are only proxies of practical finite-width neural networks. Despite
this fact, we note that \cite{nguyen2019mean} demonstrates on several
real-life machine learning tasks that a MF multilayer network, without
heavy hyperparameter tuning, can achieve realistic performances, comparable
to practical neural networks that are similar in architectural designs
and training procedures; \cite{lu2020mean} demonstrates an improved
performance over strong and well-tuned practical neural networks by
using the MF scaling. In other words, the MF scaling offers a good
proxy, with potentially no loss in practical performances.

A few alternative scalings, accompanied by suitable initializations
and learning rates, are proposed in \cite{golikov2020dynamically,yang2021feature,Hajjar:aa}
to avoid the NTK-like behavior. Theoretical understanding of feature
learning in these scalings is currently limited to just a single SGD
step, unlike our work which studies the full learning trajectory of
MF networks and proves the presence of meaningful learning via global
convergence. As said, all these scalings are proxies of practical
finite-width networks. Furthermore it is argued in \cite{mei2019mean}
that for two-layer infinite-width networks that are close to practical
networks, the behavior near initialization is more NTK-like, while
that in the long-time horizon is more MF-like. We expect a similar
situation for the multilayer case, in which case it is insufficient
to understand neural networks by analyzing only a few initial SGD
steps. Our work also demonstrates the goodness of well-designed non-i.i.d.
initializations, which thus far have been under-explored in the literature.

In a later follow-up work \cite{pham2021limiting}, our neuronal embedding
framework is extended to study a finite-width correction to the infinite-width
MF limit and the implicit bias of gradient descent training in this
finite-width regime, hence paving the path to address the aforementioned
limitation of the infinite-width viewpoint.

\section*{Acknowledgment}

The work of P.-M. Nguyen was partially supported by grants NSF IIS-1741162
and ONR N00014-18-1-2729. H. T. Pham would like to thank Jan Vondrak
for many helpful discussions and in particular for the shorter proof
of Lemma \ref{lem:square hoeffding}. We would like to thank Andrea
Montanari for the succinct description of the difficulty in extending
the mean field formulation to the multilayer case, in that there are
multiple symmetry group actions in a multilayer network.

\appendix

\section{Useful tools}

We state a martingale concentration result, which is a special case
of \cite[Theorem 3.2]{pinelis1994optimum} which applies to a more
general Banach space.
\begin{thm}[Concentration of martingales in separable Hilbert spaces.]
\label{thm:azuma-hilbert}Consider a martingale $Z_{n}\in\mathbb{Z}$
a separable Hilbert space, adapted to ${\cal F}_{n}$, such that $\left|Z_{n}-Z_{n-1}\right|\leq R$
and $Z_{0}=0$. Then for any $t>0$,
\[
\mathbb{P}\left(\max_{k\leq n}\left|Z_{k}\right|\geq t\right)\leq2\inf_{\lambda>0}\exp\left(-\lambda t+{\rm ess\text{-}sup}\sum_{k=1}^{n}\mathbb{E}\left[e^{\lambda|Z_{k}-Z_{k-1}|}-1-\lambda|Z_{k}-Z_{k-1}|\middle|{\cal F}_{k-1}\right]\right).
\]
In particular, for $t<nR$,
\[
\mathbb{P}\left(\max_{k\leq n}\left|Z_{k}\right|\geq t\right)\leq2\exp\left(-\frac{t^{2}}{16nR^{2}}\right).
\]
\end{thm}

\begin{proof}
The first part follows from \cite[Theorem 3.2]{pinelis1994optimum}.
The second part follows from the observation that for $\lambda<1/(2R)$,
\[
\mathbb{E}\left[e^{\lambda|Z_{k}-Z_{k-1}|}-1-\lambda|Z_{k}-Z_{k-1}|\middle|{\cal F}_{k-1}\right]<4\lambda^{2}R^{2},
\]
and as such we have for $t<nR$,
\[
\mathbb{P}\left(\max_{k\leq n}\left|Z_{k}\right|\geq t\right)\le2\inf_{0<\lambda<1/(2R)}\exp(-\lambda t+4n\lambda^{2}R^{2})\le2\exp\left(-\frac{t^{2}}{16nR^{2}}\right).
\]
\end{proof}
Next we state two results for $\eta$-independent random variables
in separable Hilbert spaces.
\begin{thm}[Concentration of $\eta$-independent bounded sum in separable Hilbert
spaces.]
\label{thm:iid-hilbert-concentration}Consider $n$ $\eta$-independent
random variables $X_{1},...,X_{n}$ in a separable Hilbert space,
where $\eta\in[0,1]$. Suppose that $\left|X_{i}-\mathbb{E}\left[X_{i}\right]\right|\leq R$
almost surely. Then for $\delta>2\eta R$, we have 
\[
\mathbb{P}\left(\frac{1}{n}\left|\sum_{i=1}^{n}X_{i}-\mathbb{E}\left[X_{i}\right]\right|\geq\delta\right)\leq2\exp\left(-\frac{n\delta^{2}}{64R^{2}}\right).
\]
\end{thm}

\begin{proof}
Since $|X_{i}-\mathbb{E}[X_{i}]|\le R$, the claims are immediate
for $\delta\geq R$. Let 
\[
Z_{i}=X_{1}-\mathbb{E}[X_{1}]+X_{2}-\mathbb{E}[X_{2}|X_{1}]+\dots+X_{i}-\mathbb{E}[X_{i}|X_{1},\dots,X_{i-1}].
\]
Then $Z_{i}$ is a martingale adapted to ${\cal F}_{i}=\sigma(X_{1},\dots,X_{i-1})$.
By Theorem \ref{thm:azuma-hilbert},
\[
\mathbb{P}\left(|Z_{n}|\ge t\right)\le2\exp\left(-\frac{t^{2}}{16nR^{2}}\right),
\]
assuming that $t<nR$. Using the $\eta$-independence property, we
have that for $\delta\in\left(2\eta R,\left(1+\eta\right)R\right)$,
\begin{align*}
\mathbb{P}\left(\left|\sum_{i=1}^{n}X_{i}-\mathbb{E}\left[X_{i}\right]\right|\geq\delta n\right) & \leq\mathbb{P}\left(|Z_{n}|\ge\delta n-n\eta R\right)\\
 & \leq2\exp\left(-\frac{(\delta-\eta R)^{2}n}{16R^{2}}\right)\\
 & \le2\exp\left(-\delta^{2}n/\left(64R^{2}\right)\right).
\end{align*}
\end{proof}
\begin{thm}[Moments of $\eta$-independent heavy-tailed sum in separable Hilbert
spaces.]
\label{thm:iid-hilbert-higher-moment}Consider $\left(X_{1},X_{2},...\right)$
being $\eta$-independent random variables in a separable Hilbert
space. Suppose that for some constant $k\geq1$ (with $k\leq K$),
for any $i\in\mathbb{N}_{>0}$,
\begin{align*}
\sup_{m\geq1}m^{-k/2}\mathbb{E}\left[\left|X_{i}\right|^{m}\right]^{1/m} & \leq K.
\end{align*}
Then for $m\geq1$,
\[
\mathbb{E}\left[\left|\frac{1}{n}\sideset{}{_{i=1}^{n}}\sum X_{i}-\mathbb{E}\left[X_{i}\right]\right|^{m}\right]^{1/m}\leq Km^{1+k/2}\max\left(\eta n^{0.01},n^{-1/2}\right).
\]
\end{thm}

\begin{proof}
It is easy to see that it suffices to prove the claim for $m\geq2$.
Let us define ${\cal F}_{i}=\sigma(X_{1},\dots,X_{i-1})$ and
\[
Z_{i}=X_{1}-\mathbb{E}[X_{1}]+X_{2}-\mathbb{E}[X_{2}|{\cal F}_{2}]+\dots+X_{i}-\mathbb{E}[X_{i}|{\cal F}_{i}].
\]
Then $Z_{i}$ is a martingale adapted to ${\cal F}_{i}$. Note that
for any $m\geq1$ and $B>0$:
\begin{align*}
\mathbb{E}\left[\left|\sideset{}{_{i=1}^{n}}\sum X_{i}-\mathbb{E}\left[X_{i}\right]\right|^{m}\right]^{1/m} & \leq\mathbb{E}\left[\left|Z_{n}\right|^{m}\right]^{1/m}+\mathbb{E}\left[\left|\sideset{}{_{i=1}^{n}}\sum\mathbb{E}\left[X_{i}\middle|{\cal F}_{i}\right]-\mathbb{E}\left[X_{i}\right]\right|^{m}\right]^{1/m}\\
 & \leq\mathbb{E}\left[\left|Z_{n}\right|^{m}\right]^{1/m}+\mathbb{E}\left[\left|\sideset{}{_{i=1}^{n}}\sum\mathbb{E}\left[X_{i}\mathbb{I}_{\left|X_{i}\right|\geq B}\middle|{\cal F}_{i}\right]-\mathbb{E}\left[X_{i}\mathbb{I}_{\left|X_{i}\right|\geq B}\right]\right|^{m}\right]^{1/m}+n\eta B\\
 & \leq\mathbb{E}\left[\left|Z_{n}\right|^{m}\right]^{1/m}+n^{1-1/m}\bigg(\sideset{}{_{i=1}^{n}}\sum\mathbb{E}\left[\left|X_{i}\right|^{m}\mathbb{I}_{\left|X_{i}\right|\geq B}\right]\bigg)^{1/m}+n\eta B.
\end{align*}
By \cite[Theorem 4.1]{pinelis1994optimum}, for $m\geq2$,
\begin{align*}
\mathbb{E}\left[\left|Z_{n}\right|^{m}\right]^{1/m} & \leq Km\left(\sideset{}{_{i=1}^{n}}\sum\mathbb{E}\left[\left|X_{i}\right|^{m}\right]\right)^{1/m}+K\sqrt{m}\mathbb{E}\left[\bigg(\sideset{}{_{i=1}^{n}}\sum\mathbb{E}\left[\left|X_{i}-\mathbb{E}\left[X_{i}\middle|{\cal F}_{i}\right]\right|^{2}\middle|{\cal F}_{i}\right]\bigg)^{m/2}\right]^{1/m}\\
 & \leq Km\left(\sideset{}{_{i=1}^{n}}\sum\mathbb{E}\left[\left|X_{i}\right|^{m}\right]\right)^{1/m}+K\sqrt{m}\mathbb{E}\left[\bigg(\sideset{}{_{i=1}^{n}}\sum\mathbb{E}\left[\left|X_{i}\right|^{2}\middle|{\cal F}_{i}\right]\bigg)^{m/2}\right]^{1/m}\\
 & \leq K\left(m+\sqrt{m}n^{1/2-1/m}\right)\left(\sideset{}{_{i=1}^{n}}\sum\mathbb{E}\left[\left|X_{i}\right|^{m}\right]\right)^{1/m}\\
 & \leq K\left(mn^{1/m}+\sqrt{m}n^{1/2}\right)m^{k/2}\\
 & \leq Km^{1+k/2}n^{1/2}.
\end{align*}
We also have:
\[
\mathbb{E}\left[\left|X_{i}\right|^{m}\mathbb{I}_{\left|X_{i}\right|\geq B}\right]\leq\mathbb{E}\left[\left|X_{i}\right|^{2m}\right]^{1/2}\mathbb{P}\left(\left|X_{i}\right|\geq B\right)^{1/2}\leq K^{m}m^{mk/2}\exp\left(-KB^{2/k}\right),
\]
since $\left|X_{i}\right|^{1/k}$ is $K$-sub-Gaussian. Therefore,
\[
\mathbb{E}\left[\left|\sideset{}{_{i=1}^{n}}\sum X_{i}-\mathbb{E}\left[X_{i}\right]\right|^{m}\right]^{1/m}\leq Km^{1+k/2}n^{1/2}+Km^{k/2}\exp\left(-KB^{2/k}/m\right)n+n\eta B.
\]
The claim is satisfied for $B^{2/k}=cm\log n$ with a suitable constant
$c$.
\end{proof}

\section{Remaining proofs for Section \ref{sec:Existence-MF}\label{sec:Remaining-proofs-existence-MF}}

\subsection{Proof of Lemma \ref{lem:bounds MF a priori}}
\begin{proof}[Proof of Lemma \ref{lem:bounds MF a priori}]
Let $\eta\left(i\right)=2^{L-i}$ and $\bar{\eta}\left(i\right)=2^{i-1}$.
Let us define
\begin{align*}
\left\llbracket w_{i}\right\rrbracket _{m,t} & =\sqrt{\frac{50}{m}}\mathbb{E}\left[\sup_{s\leq t}\left|w_{i}\left(s,C_{i-1},C_{i}\right)\right|^{m}\right]^{1/m},\qquad i=2,...,L.
\end{align*}
We prove the following by backward induction, for $i=2,...,L$ and
any $m\geq50$:
\begin{align*}
 & \sqrt{\frac{50}{m}}\mathbb{E}\left[\sup_{s\leq t}\underset{Z\sim{\cal P}}{{\rm ess\text{-}sup}}\left|\Delta_{i}^{H}\left(s,Z,C_{i}\right)\right|^{m}\right]^{1/m}\leq K^{3\eta\left(i-1\right)-2}\left(1+t^{\eta\left(i\right)-1}\right)\prod_{j=i+1}^{L}\left(1+\left\llbracket w_{j}\right\rrbracket _{m,0}^{\bar{\eta}\left(j-i\right)}+\interleave b_{j}\interleave_{0}^{\bar{\eta}\left(j-i\right)}\right),\\
 & \left\llbracket w_{i}\right\rrbracket _{m,t}\leq K^{3\eta\left(i-1\right)-1}\left(1+\left\llbracket w_{i}\right\rrbracket _{m,0}\right)\left(1+t^{\eta\left(i\right)}\right)\prod_{j=i+1}^{L}\left(1+\left\llbracket w_{j}\right\rrbracket _{m,0}^{\bar{\eta}\left(j-i\right)}+\interleave b_{j}\interleave_{0}^{\bar{\eta}\left(j-i\right)}\right),\\
 & \interleave b_{i}\interleave_{t}\leq K^{3\eta\left(i-1\right)-1}\left(1+\interleave b_{i}\interleave_{0}\right)\left(1+t^{\eta\left(i\right)}\right)\prod_{j=i+1}^{L}\left(1+\left\llbracket w_{j}\right\rrbracket _{m,0}^{\bar{\eta}\left(j-i\right)}+\interleave b_{j}\interleave_{0}^{\bar{\eta}\left(j-i\right)}\right),
\end{align*}
for some immaterial constant $K\geq1$, where by standard convention
$\prod_{j=i+1}^{L}=1$ if $i=L$.

Let us start with $i=L$. By Assumption \ref{enu:Assump_backward},
for ${\cal P}$-almost every $z$,
\[
\sup_{t\geq0}\left|\Delta_{L}^{H}\left(t,z,1\right)\right|\leq K.
\]
Consequently, for ${\cal P}$-almost every $z$,
\[
\max\left(\sup_{t\geq0}\sup_{c_{L-1}\in\Omega_{L-1}}\left|\Delta_{L}^{w}\left(t,z,c_{L-1},1\right)\right|,\;\sup_{t\geq0}\left|\Delta_{L}^{b}\left(t,z,1\right)\right|\right)\leq K\left(1+\sup_{t\geq0}\left|\Delta_{L}^{H}\left(t,z,1\right)\right|\right)\leq K^{2}.
\]
Together with Assumption \ref{enu:Assump_lrSchedule} and the fact
$w_{L}$ and $b_{L}$ satisfy the MF ODEs, this implies:
\[
\left\llbracket w_{L}\right\rrbracket _{m,t}\leq\left\llbracket w_{L}\right\rrbracket _{m,0}+K^{2}t,\qquad\interleave b_{L}\interleave_{t}\leq\interleave b_{L}\interleave_{0}+K^{2}t.
\]
These prove the statement for $i=L$.

Next, assuming the statement for $i+1$, we prove the statement for
$i$, where $1<i<L$. Using Cauchy-Schwarz's inequality, we have from
Assumption \ref{enu:Assump_backward}, for $m\geq50$,
\begin{align*}
 & \mathbb{E}\left[\sup_{s\leq t}\underset{Z\sim{\cal P}}{{\rm ess\text{-}sup}}\left|\Delta_{i}^{H}\left(s,Z,C_{i}\right)\right|^{m}\right]\\
 & \leq K^{m}\mathbb{E}\left[\sup_{s\leq t}\underset{Z\sim{\cal P}}{{\rm ess\text{-}sup}}\left|\mathbb{E}_{C_{i+1}}\left[\left(1+\left|\Delta_{i+1}^{H}\left(s,Z,C_{i+1}\right)\right|\right)\left(1+\left|w_{i+1}\left(s,C_{i},C_{i+1}\right)\right|+\left|b_{i+1}\left(s,C_{i+1}\right)\right|\right)\right]\right|^{m}\right]\\
 & \leq K^{m}\mathbb{E}\bigg[\mathbb{E}_{C_{i+1}}\left[1+\sup_{s\leq t}\underset{Z\sim{\cal P}}{{\rm ess\text{-}sup}}\left|\Delta_{i+1}^{H}\left(s,Z,C_{i+1}\right)\right|^{2}\right]^{m/2}\\
 & \quad\qquad\times\mathbb{E}_{C_{i+1}}\left[1+\sup_{s\leq t}\left|w_{i+1}\left(s,C_{i},C_{i+1}\right)\right|^{2}+\sup_{s\leq t}\left|b_{i+1}\left(s,C_{i+1}\right)\right|^{2}\right]^{m/2}\bigg]\\
 & \leq K^{m}\left(1+\mathbb{E}\left[\sup_{s\leq t}\underset{Z\sim{\cal P}}{{\rm ess\text{-}sup}}\left|\Delta_{i+1}^{H}\left(s,Z,C_{i+1}\right)\right|^{2}\right]^{m/2}\right)\\
 & \quad\qquad\times\mathbb{E}\left[1+\mathbb{E}_{C_{i+1}}\left[\sup_{s\leq t}\left|w_{i+1}\left(s,C_{i},C_{i+1}\right)\right|^{m}\right]+\mathbb{E}\left[\sup_{s\leq t}\left|b_{i+1}\left(s,C_{i+1}\right)\right|^{2}\right]^{m/2}\right]\\
 & \leq K^{m}\left(1+\mathbb{E}\left[\sup_{s\leq t}\underset{Z\sim{\cal P}}{{\rm ess\text{-}sup}}\left|\Delta_{i+1}^{H}\left(s,Z,C_{i+1}\right)\right|^{2}\right]^{m/2}\right)\left(1+m^{m/2}\left\llbracket w_{i+1}\right\rrbracket _{m,t}^{m}+\interleave b_{i+1}\interleave_{t}^{m}\right),
\end{align*}
which implies, by the induction hypothesis,
\begin{align*}
 & \sqrt{\frac{50}{m}}\mathbb{E}\left[\sup_{s\leq t}\underset{Z\sim{\cal P}}{{\rm ess\text{-}sup}}\left|\Delta_{i}^{H}\left(s,Z,C_{i}\right)\right|^{m}\right]^{1/m}\\
 & \leq K\left(1+\mathbb{E}\left[\sup_{s\leq t}\underset{Z\sim{\cal P}}{{\rm ess\text{-}sup}}\left|\Delta_{i+1}^{H}\left(s,Z,C_{i+1}\right)\right|^{2}\right]^{1/2}\right)\left(1+\left\llbracket w_{i+1}\right\rrbracket _{m,t}+\interleave b_{i+1}\interleave_{t}\right)\\
 & \leq K\left[1+K^{3\eta\left(i\right)-2}\left(1+t^{\eta\left(i+1\right)-1}\right)\prod_{j=i+2}^{L}\left(1+\left\llbracket w_{j}\right\rrbracket _{m,0}^{\bar{\eta}\left(j-i-1\right)}+\interleave b_{j}\interleave_{0}^{\bar{\eta}\left(j-i-1\right)}\right)\right]\\
 & \quad\times\left[1+K^{3\eta\left(i\right)-1}\left(1+\left\llbracket w_{i+1}\right\rrbracket _{m,0}+\interleave b_{i+1}\interleave_{0}\right)\left(1+t^{\eta\left(i+1\right)}\right)\prod_{j=i+2}^{L}\left(1+\left\llbracket w_{j}\right\rrbracket _{m,0}^{\bar{\eta}\left(j-i-1\right)}+\interleave b_{j}\interleave_{0}^{\bar{\eta}\left(j-i-1\right)}\right)\right]\\
 & \leq K^{3\eta\left(i-1\right)-2}\left(1+t^{\eta\left(i\right)-1}\right)\prod_{j=i+1}^{L}\left(1+\left\llbracket w_{j}\right\rrbracket _{m,0}^{\bar{\eta}\left(j-i\right)}+\interleave b_{j}\interleave_{0}^{\bar{\eta}\left(j-i\right)}\right).
\end{align*}
Therefore, by Assumptions \ref{enu:Assump_backward} and \ref{enu:Assump_lrSchedule},
with the fact that $w_{i}$ satisfies the MF ODEs:
\begin{align*}
\left\llbracket w_{i}\right\rrbracket _{m,t} & =\sqrt{\frac{50}{m}}\mathbb{E}\left[\sup_{s\leq t}\left|w_{i}\left(0,C_{i-1},C_{i}\right)-\int_{0}^{s}\xi_{i}^{\mathbf{w}}\left(s'\right)\mathbb{E}_{Z}\left[\Delta_{i}^{w}\left(s',Z,C_{i-1},C_{i}\right)\right]ds'\right|^{m}\right]^{1/m}\\
 & \leq\left\llbracket w_{i}\right\rrbracket _{m,0}+\frac{K}{\sqrt{m}}\mathbb{E}\left[\sup_{s\leq t}\underset{Z\sim{\cal P}}{{\rm ess\text{-}sup}}\left|\Delta_{i}^{w}\left(s,Z,C_{i-1},C_{i}\right)\right|^{m}\right]^{1/m}t\\
 & \leq\left\llbracket w_{i}\right\rrbracket _{m,0}+K\left(1+\frac{1}{\sqrt{m}}\mathbb{E}\left[\sup_{s\leq t}\underset{Z\sim{\cal P}}{{\rm ess\text{-}sup}}\left|\Delta_{i}^{H}\left(s,Z,C_{i}\right)\right|^{m}\right]^{1/m}\right)t\\
 & \leq K^{3\eta\left(i-1\right)-1}\left(1+\left\llbracket w_{i}\right\rrbracket _{m,0}\right)\left(1+t^{\eta\left(i\right)}\right)\prod_{j=i+1}^{L}\left(1+\left\llbracket w_{j}\right\rrbracket _{m,0}^{\bar{\eta}\left(j-i\right)}+\interleave b_{j}\interleave_{0}^{\bar{\eta}\left(j-i\right)}\right).
\end{align*}
We obtain a similar bound for $\interleave b_{i}\interleave_{t}$.
This completes the backward induction. With the same argument, one
can obtain a similar bound for $i=1$:
\begin{align*}
 & \sqrt{\frac{50}{m}}\mathbb{E}\left[\sup_{s\leq t}\underset{Z\sim{\cal P}}{{\rm ess\text{-}sup}}\left|\Delta_{1}^{H}\left(s,Z,C_{1}\right)\right|^{m}\right]^{1/m}\leq K^{3\eta\left(0\right)-2}\left(1+t^{\eta\left(1\right)-1}\right)\prod_{j=2}^{L}\left(1+\left\llbracket w_{j}\right\rrbracket _{m,0}^{\bar{\eta}\left(j-1\right)}+\interleave b_{j}\interleave_{0}^{\bar{\eta}\left(j-1\right)}\right),\\
 & \interleave w_{1}\interleave_{t}\leq K^{3\eta\left(0\right)-1}\left(1+\interleave w_{1}\interleave_{0}\right)\left(1+t^{\eta\left(1\right)}\right)\prod_{j=2}^{L}\left(1+\left\llbracket w_{j}\right\rrbracket _{m,0}^{\bar{\eta}\left(j-1\right)}+\interleave b_{j}\interleave_{0}^{\bar{\eta}\left(j-1\right)}\right).
\end{align*}

By taking the supremum on $m$ or setting $m=50$, these bounds imply
the claimed bound on $\left\llbracket W\right\rrbracket _{\psi,t}$
and $\interleave W\interleave_{t}$. In addition, the bounds on $\left\llbracket w_{i}\right\rrbracket _{m,t}$
show that $\sup_{s\leq t}\left|w_{i}\left(s,C_{i-1},C_{i}\right)\right|$
is $K_{0}\left(t\right)$-sub-Gaussian for $2\leq i\leq L$. Together
with the union bound, we then get the claimed probability bound.
\end{proof}

\subsection{Proof of Lemma \ref{lem:difference MF}}

We state the following two useful auxiliary lemmas:
\begin{lem}
\label{lem:Lipschitz forward MF}Consider two collections of MF parameters
$W',W''\in{\cal W}_{T}$. Under Assumption \ref{enu:Assump_forward},
for any $t\leq T$ and $1\leq i\leq L$, the following hold: 
\begin{align*}
\mathbb{E}\left[\sup_{s\leq t}\underset{X\sim{\cal P}}{{\rm ess\text{-}sup}}\left|H_{i}\left(X,C_{i};W'\left(s\right)\right)-H_{i}\left(X,C_{i};W''\left(s\right)\right)\right|^{2}\right]^{1/2} & \leq K^{L}K_{0}^{L}\left(T\right)\left\Vert W'-W''\right\Vert _{t},\\
\sup_{s\leq t}\underset{X\sim{\cal P}}{{\rm ess\text{-}sup}}\left|\hat{y}\left(X;W'\left(s\right)\right)-\hat{y}\left(X;W''\left(s\right)\right)\right| & \leq K^{L}K_{0}^{L}\left(T\right)\left\Vert W'-W''\right\Vert _{t}.
\end{align*}
\end{lem}

\begin{lem}
\label{lem:Lipschitz backward MF}For a given $B\geq0$, consider
two collections of MF parameters $W',W''\in{\cal W}_{T}$ such that
\begin{align*}
\mathbb{P}\left(\mathsf{max}_{T}^{w}\left(W'\right)\geq K_{0}\left(T\right)B\right) & \leq2Le^{1-K_{1}B^{2}},\\
\mathbb{P}\left(\mathsf{max}_{T}^{w}\left(W''\right)\geq K_{0}\left(T\right)B\right) & \leq2Le^{1-K_{1}B^{2}}.
\end{align*}
Under Assumptions \ref{enu:Assump_forward} and \ref{enu:Assump_backward},
for any $t\leq T$ and $2\leq i\leq L$, the following hold:
\begin{align*}
\mathbb{E}\left[\sup_{s\leq t}\underset{Z\sim{\cal P}}{{\rm ess\text{-}sup}}\left|\Delta_{i}^{w}\left(Z,C_{i-1},C_{i};W'\left(s\right)\right)-\Delta_{i}^{w}\left(Z,C_{i-1},C_{i};W''\left(s\right)\right)\right|^{2}\right]^{1/2} & \leq D\left(t,W',W''\right),\\
\mathbb{E}\left[\sup_{s\leq t}\underset{Z\sim{\cal P}}{{\rm ess\text{-}sup}}\left|\Delta_{i}^{b}\left(Z,C_{i};W'\left(s\right)\right)-\Delta_{i}^{b}\left(Z,C_{i};W''\left(s\right)\right)\right|^{2}\right]^{1/2} & \leq D\left(t,W',W''\right),\\
\mathbb{E}\left[\sup_{s\leq t}\underset{Z\sim{\cal P}}{{\rm ess\text{-}sup}}\left|\Delta_{1}^{w}\left(Z,C_{1};W'\left(s\right)\right)-\Delta_{1}^{w}\left(Z,C_{1};W''\left(s\right)\right)\right|^{2}\right]^{1/2} & \leq D\left(t,W',W''\right),
\end{align*}
in which $D\left(t,W',W''\right)\stackrel{{\rm def}}{=}\left(KK_{0}\left(T\right)\right)^{2L+2}\left(\left(1+B\right)\left\Vert W'-W''\right\Vert _{t}+\sqrt{L}e^{-K_{1}B^{2}/2}\right).$
\end{lem}

These lemmas lay the foundation to prove Lemma \ref{lem:difference MF}.
\begin{proof}[Proof of Lemma \ref{lem:difference MF}]
Let us recall the quantity $D\left(t,W',W''\right)$ from Lemma \ref{lem:Lipschitz backward MF}.
Let us note a simple identity:
\begin{align*}
\mathbb{E}_{C}\left[\left(\int_{0}^{t}f\left(s,C\right)ds\right)^{2}\right] & =\int_{0}^{t}\int_{0}^{t}\mathbb{E}_{C}\left[f\left(s_{1},C\right)f\left(s_{2},C\right)\right]ds_{1}ds_{2}\\
 & \leq\int_{0}^{t}\int_{0}^{t}\mathbb{E}_{C}\left[\left|f\left(s_{1},C\right)\right|^{2}\right]^{1/2}\mathbb{E}_{C}\left[\left|f\left(s_{2},C\right)\right|^{2}\right]^{1/2}ds_{1}ds_{2}\\
 & =\left(\int_{0}^{t}\mathbb{E}_{C}\left[\left|f\left(s,C\right)\right|^{2}\right]^{1/2}ds\right)^{2}.
\end{align*}
Now for any $i\geq2$:

\begin{align*}
 & \mathbb{E}\left[\left(\int_{0}^{t}\left|\frac{\partial}{\partial t}F_{i}^{w}\left(W'\right)\left(s,C_{i-1},C_{i}\right)-\frac{\partial}{\partial t}F_{i}^{w}\left(W''\right)\left(s,C_{i-1},C_{i}\right)\right|ds\right)^{2}\right]^{1/2}\\
 & =\mathbb{E}\left[\left(\int_{0}^{t}\left|\xi_{i}^{\mathbf{w}}\left(s\right)\mathbb{E}_{Z}\left[\Delta_{i}^{w}\left(Z,C_{i-1},C_{i};W'\left(s\right)\right)-\Delta_{i}^{w}\left(Z,C_{i-1},C_{i};W''\left(s\right)\right)\right]\right|ds\right)^{2}\right]^{1/2}\\
 & \stackrel{\left(a\right)}{\leq}K\mathbb{E}\left[\left(\int_{0}^{t}\left|\mathbb{E}_{Z}\left[\Delta_{i}^{w}\left(Z,C_{i-1},C_{i};W'\left(s\right)\right)-\Delta_{i}^{w}\left(Z,C_{i-1},C_{i};W''\left(s\right)\right)\right]\right|ds\right)^{2}\right]^{1/2}\\
 & \stackrel{\left(b\right)}{\leq}K\int_{0}^{t}\mathbb{E}\left[\left|\mathbb{E}_{Z}\left[\Delta_{i}^{w}\left(Z,C_{i-1},C_{i};W'\left(s\right)\right)-\Delta_{i}^{w}\left(Z,C_{i-1},C_{i};W''\left(s\right)\right)\right]\right|^{2}\right]^{1/2}ds\\
 & \stackrel{\left(c\right)}{\leq}K\int_{0}^{t}D\left(s,W',W''\right)ds,
\end{align*}
where $\left(a\right)$ is due to Assumption \ref{enu:Assump_lrSchedule},
$\left(b\right)$ is by the aforementioned identity, and $\left(c\right)$
is an application of Lemma \ref{lem:Lipschitz backward MF}. Therefore,
\begin{align*}
 & \mathbb{E}\left[\sup_{s\leq t}\left|F_{i}^{w}\left(W'\right)\left(s,C_{i-1},C_{i}\right)-F_{i}^{w}\left(W''\right)\left(s,C_{i-1},C_{i}\right)\right|^{2}\right]^{1/2}\\
 & \leq\mathbb{E}\left[\sup_{s\leq t}\left(\int_{0}^{s}\left|\frac{\partial}{\partial t}F_{i}^{w}\left(W'\right)\left(s',C_{i-1},C_{i}\right)-\frac{\partial}{\partial t}F_{i}^{w}\left(W''\right)\left(s',C_{i-1},C_{i}\right)\right|ds'\right)^{2}\right]^{1/2}\\
 & \leq K\int_{0}^{t}D\left(s,W',W''\right)ds.
\end{align*}
One can show the same bound for $F_{i}^{b}$ and $F_{1}^{w}$. This
completes the proof.
\end{proof}
Lemmas \ref{lem:Lipschitz forward MF} and \ref{lem:Lipschitz backward MF}
are in fact special cases of the following lemmas.
\begin{lem}
\label{lem:Lipschitz forward MF - general}Consider two collections
of MF parameters $W',W''\in{\cal W}_{T}$. Suppose that we define
$\tilde{C}_{1},...,\tilde{C}_{L}$ independent random variables on
$\Omega_{1},...,\Omega_{L}$, such that $\tilde{C}_{i}$ is independent
of $C_{1},...C_{i-1},C_{i+1},\dots,C_{L}$, and that there exists
some $K_{*}\left(T\right)\geq K_{0}\left(T\right)$ such that all
following quantities are upper-bounded by $K_{*}\left(T\right)$ for
all $t\leq T$ and for $W=W'$ or $W=W''$:
\[
\max_{2\leq i\leq L}\mathbb{E}\left[\sup_{s\leq t}\left|w_{i}\left(s,\tilde{C}_{i-1},\tilde{C}_{i}\right)\right|^{50}\right]^{1/50},\;\max_{2\leq i\leq L}\mathbb{E}\left[\sup_{s\leq t}\left|w_{i}\left(s,C_{i-1},\tilde{C}_{i}\right)\right|^{50}\right]^{1/50},\;\max_{2\leq i\leq L}\mathbb{E}\left[\sup_{s\leq t}\left|w_{i}\left(s,\tilde{C}_{i-1},C_{i}\right)\right|^{50}\right]^{1/50},
\]
\[
\max_{2\leq i\leq L}\mathbb{E}\left[\sup_{s\leq t}\left|w_{i}\left(s,C_{i-1},C_{i}\right)\right|^{50}\right]^{1/50},\;\max_{2\leq i\leq L}\mathbb{E}\left[\sup_{s\leq t}\left|b_{i}\left(s,\tilde{C}_{i}\right)\right|^{50}\right]^{1/50},\;\max_{2\leq i\leq L}\mathbb{E}\left[\sup_{s\leq t}\left|b_{i}\left(s,C_{i}\right)\right|^{50}\right]^{1/50},
\]
\[
\mathbb{E}\left[\sup_{s\leq t}\left|w_{1}\left(s,\tilde{C}_{1}\right)\right|^{50}\right]^{1/50},\;\mathbb{E}\left[\sup_{s\leq t}\left|w_{1}\left(s,C_{1}\right)\right|^{50}\right]^{1/50}.
\]
Under Assumption \ref{enu:Assump_forward}, for any $t\leq T$ and
$1\leq i\leq L$, we have:
\[
\mathbb{E}\left[\sup_{s\leq t}\underset{X\sim{\cal P}}{{\rm ess\text{-}sup}}\left|H_{i}\left(X,\tilde{C}_{i};W'\left(s\right)\right)-H_{i}\left(X,\tilde{C}_{i};W''\left(s\right)\right)\right|^{2}\right]^{1/2}\leq K^{L}K_{*}^{L}\left(T\right)\tilde{d}_{t}\left(W',W''\right),
\]
and the same holds if we replace $\tilde{C}_{i}$ with $C_{i}$ in
the left-hand side of the above. Here we have defined the metrics:
\begin{align*}
\tilde{d}_{t}\left(W',W''\right) & =\max\left(\max_{2\leq i\leq L}\tilde{d}_{t}\left(w_{i}',w_{i}''\right),\;\max_{2\leq i\leq L}\tilde{d}_{t}\left(b_{i}',b_{i}''\right),\;\tilde{d}_{t}\left(w_{1}',w_{1}''\right)\right),\\
\tilde{d}_{t}\left(w_{i}',w_{i}''\right) & =\max\bigg(\mathbb{E}\left[\sup_{s\leq t}\left|w_{i}'\left(s,\tilde{C}_{i-1},\tilde{C}_{i}\right)-w_{i}''\left(s,\tilde{C}_{i-1},\tilde{C}_{i}\right)\right|^{2}\right]^{1/2},\\
 & \qquad\qquad\mathbb{E}\left[\sup_{s\leq t}\left|w_{i}'\left(s,C_{i-1},\tilde{C}_{i}\right)-w_{i}''\left(s,C_{i-1},\tilde{C}_{i}\right)\right|^{2}\right]^{1/2},\\
 & \qquad\qquad\mathbb{E}\left[\sup_{s\leq t}\left|w_{i}'\left(s,\tilde{C}_{i-1},C_{i}\right)-w_{i}''\left(s,\tilde{C}_{i-1},C_{i}\right)\right|^{2}\right]^{1/2},\\
 & \qquad\qquad\mathbb{E}\left[\sup_{s\leq t}\left|w_{i}'\left(s,C_{i-1},C_{i}\right)-w_{i}''\left(s,C_{i-1},C_{i}\right)\right|^{2}\right]^{1/2}\bigg),\\
\tilde{d}_{t}\left(b_{i}',b_{i}''\right) & =\max\bigg(\mathbb{E}\left[\sup_{s\leq t}\left|b_{i}'\left(s,\tilde{C}_{i}\right)-b_{i}''\left(s,\tilde{C}_{i}\right)\right|^{2}\right]^{1/2},\;\mathbb{E}\left[\sup_{s\leq t}\left|b_{i}'\left(s,C_{i}\right)-b_{i}''\left(s,C_{i}\right)\right|^{2}\right]^{1/2}\bigg)\\
\tilde{d}_{t}\left(w_{1}',w_{1}''\right) & =\max\bigg(\mathbb{E}\left[\sup_{s\leq t}\left|w_{1}'\left(s,\tilde{C}_{1}\right)-w_{1}''\left(s,\tilde{C}_{1}\right)\right|^{2}\right]^{1/2},\;\mathbb{E}\left[\sup_{s\leq t}\left|w_{1}'\left(s,C_{1}\right)-w_{1}''\left(s,C_{1}\right)\right|^{2}\right]^{1/2}\bigg).
\end{align*}
(Note that the random variables $\tilde{C}_{i}$ are general, and
may be chosen to be equal to $C_{i}$. The space ${\cal W}_{T}$ which
contains $W'$ and $W''$ is defined with respect to the random variables
$C_{1},\dots,C_{L}$.)
\end{lem}

\begin{lem}
\label{lem:Lipschitz backward MF - general}Consider two collections
of MF parameters $W',W''\in{\cal W}_{T}$. Suppose we define the random
variables $\tilde{C}_{1},...,\tilde{C}_{L}$, the bounding constant
$K_{*}\left(T\right)$ and the metric $\tilde{d}_{t}\left(W',W''\right)$
as given in the statement of Lemma \ref{lem:Lipschitz forward MF - general}.
Further assume that for some non-negative function $\Xi$ and some
$B\geq0$,
\begin{align*}
\mathbb{P}\left(\widetilde{\mathsf{max}}_{T}^{w}\left(W'\right)\geq K_{*}\left(T\right)B\right) & \leq\Xi\left(B\right),\\
\mathbb{P}\left(\widetilde{\mathsf{max}}_{T}^{w}\left(W''\right)\geq K_{*}\left(T\right)B\right) & \leq\Xi\left(B\right),
\end{align*}
in which we define
\[
\widetilde{\mathsf{max}}_{t}^{w}\left(W'\right)=\max_{2\leq i\leq L}\sup_{s\leq t}\max\left(\left|w_{i}'\left(s,C_{i-1},\tilde{C}_{i}\right)\right|,\;\left|w_{i}'\left(s,\tilde{C}_{i-1},C_{i}\right)\right|,\;\left|w_{i}'\left(s,\tilde{C}_{i-1},\tilde{C}_{i}\right)\right|,\;\left|w_{i}'\left(s,C_{i-1},C_{i}\right)\right|\right).
\]
Under Assumptions \ref{enu:Assump_forward} and \ref{enu:Assump_backward},
for any $t\leq T$ and $2\leq i\leq L$, we have:
\begin{align*}
\mathbb{E}\left[\sup_{s\leq t}\underset{Z\sim{\cal P}}{{\rm ess\text{-}sup}}\left|\Delta_{i}^{w}\left(Z,\tilde{C}_{i-1},\tilde{C}_{i};W'\left(s\right)\right)-\Delta_{i}^{w}\left(Z,\tilde{C}_{i-1},\tilde{C}_{i};W''\left(s\right)\right)\right|^{2}\right]^{1/2} & \leq\tilde{D}\left(t,W',W''\right),\\
\mathbb{E}\left[\sup_{s\leq t}\underset{Z\sim{\cal P}}{{\rm ess\text{-}sup}}\left|\Delta_{i}^{b}\left(Z,\tilde{C}_{i};W'\left(s\right)\right)-\Delta_{i}^{b}\left(Z,\tilde{C}_{i};W''\left(s\right)\right)\right|^{2}\right]^{1/2} & \leq\tilde{D}\left(t,W',W''\right),\\
\mathbb{E}\left[\sup_{s\leq t}\underset{Z\sim{\cal P}}{{\rm ess\text{-}sup}}\left|\Delta_{1}^{w}\left(Z,\tilde{C}_{1};W'\left(s\right)\right)-\Delta_{1}^{w}\left(Z,\tilde{C}_{1};W''\left(s\right)\right)\right|^{2}\right]^{1/2} & \leq\tilde{D}\left(t,W',W''\right),
\end{align*}
and the same holds if we replace $\tilde{C}_{i}$ or $\tilde{C}_{i-1}$
with $C_{i}$ or $C_{i-1}$ respectively in the left-hand side of
each line above. Here 
\[
\tilde{D}\left(t,W',W''\right)\stackrel{{\rm def}}{=}\left(KK_{*}\left(T\right)\right)^{3L+2}\left(\left(1+B\right)\tilde{d}_{t}\left(W',W''\right)+\sqrt{\Xi\left(B\right)}\right).
\]
\end{lem}

Next we prove each of the remaining lemmas.

\subsection{Proof of Lemmas \ref{lem:Lipschitz forward MF} and \ref{lem:Lipschitz forward MF - general}}
\begin{proof}[Proof of Lemma \ref{lem:Lipschitz forward MF}]
The first bound is a direct corollary of Lemma \ref{lem:Lipschitz forward MF - general}
by setting $\tilde{C}_{i}=C_{i}$ for all $i\in\left[L\right]$. In
addition, by Assumption \ref{enu:Assump_forward},
\[
\sup_{s\leq t}\underset{X\sim{\cal P}}{{\rm ess\text{-}sup}}\left|\hat{y}\left(X;W'\left(s\right)\right)-\hat{y}\left(X;W''\left(s\right)\right)\right|\leq KD_{L}\left(t\right)\leq K^{L}K_{0}^{L}\left(T\right)\left\Vert W'-W''\right\Vert _{t},
\]
completing the proof.
\end{proof}
\begin{proof}[Proof of Lemma \ref{lem:Lipschitz forward MF - general}]
Let us denote
\begin{align*}
D_{i}\left(t\right) & =\mathbb{E}\left[\sup_{s\leq t}\underset{X\sim{\cal P}}{{\rm ess\text{-}sup}}\left|H_{i}\left(X,C_{i};W'\left(s\right)\right)-H_{i}\left(X,C_{i};W''\left(s\right)\right)\right|^{2}\right]^{1/2},\\
\tilde{D}_{i}\left(t\right) & =\mathbb{E}\left[\sup_{s\leq t}\underset{X\sim{\cal P}}{{\rm ess\text{-}sup}}\left|H_{i}\left(X,\tilde{C}_{i};W'\left(s\right)\right)-H_{i}\left(X,\tilde{C}_{i};W''\left(s\right)\right)\right|^{2}\right]^{1/2}.
\end{align*}
By Assumption \ref{enu:Assump_forward}, for $\tilde{D}_{1}\left(t\right)$:
\[
\tilde{D}_{1}\left(t\right)\leq K\mathbb{E}\left[\sup_{s\leq t}\left|w_{1}'\left(s,\tilde{C}_{1}\right)-w_{1}''\left(s,\tilde{C}_{1}\right)\right|^{2}\right]^{1/2}\leq K\tilde{d}_{t}\left(W',W''\right).
\]
The same bound holds for $D_{1}\left(t\right)$. Next let us consider
$\tilde{D}_{i}\left(t\right)$. By Assumption \ref{enu:Assump_forward},
using Cauchy-Schwarz's inequality, we obtain:
\begin{align*}
\tilde{D}_{i}\left(t\right) & \leq K\mathbb{E}\Bigg[\sup_{s\leq t}\mathbb{E}_{C_{i-1}}\Big[\left(1+\left|w_{i}'\left(s,C_{i-1},\tilde{C}_{i}\right)\right|+\left|w_{i}''\left(s,C_{i-1},\tilde{C}_{i}\right)\right|+\left|b_{i}'\left(s,\tilde{C}_{i}\right)\right|+\left|b_{i}''\left(s,\tilde{C}_{i}\right)\right|\right)\\
 & \quad\qquad\times\underset{X\sim{\cal P}}{{\rm ess\text{-}sup}}\left|H_{i-1}\left(X,C_{i-1};W'\left(s\right)\right)-H_{i-1}\left(X,C_{i-1};W''\left(s\right)\right)\right|\Big]^{2}\Bigg]^{1/2}\\
 & \quad+K\mathbb{E}\left[\sup_{s\leq t}\mathbb{E}_{C_{i-1}}\left[\left|w_{i}'\left(s,C_{i-1},\tilde{C}_{i}\right)-w_{i}''\left(s,C_{i-1},\tilde{C}_{i}\right)\right|+\left|b_{i}'\left(s,\tilde{C}_{i}\right)-b_{i}''\left(s,\tilde{C}_{i}\right)\right|\right]^{2}\right]^{1/2}\\
 & \leq K\mathbb{E}\left[\sup_{s\leq t}\mathbb{E}_{C_{i-1}}\left[1+\left|w_{i}'\left(s,C_{i-1},\tilde{C}_{i}\right)\right|^{2}+\left|w_{i}''\left(s,C_{i-1},\tilde{C}_{i}\right)\right|^{2}+\left|b_{i}'\left(s,\tilde{C}_{i}\right)\right|^{2}+\left|b_{i}''\left(s,\tilde{C}_{i}\right)\right|^{2}\right]D_{i-1}^{2}\left(t\right)\right]^{1/2}\\
 & \quad+K\mathbb{E}\left[\sup_{s\leq t}\mathbb{E}_{C_{i-1}}\left[\left|w_{i}'\left(s,C_{i-1},\tilde{C}_{i}\right)-w_{i}''\left(s,C_{i-1},\tilde{C}_{i}\right)\right|+\left|b_{i}'\left(s,\tilde{C}_{i}\right)-b_{i}''\left(s,\tilde{C}_{i}\right)\right|\right]^{2}\right]^{1/2}\\
 & \leq KK_{*}\left(T\right)D_{i-1}\left(t\right)+K\tilde{d}_{t}\left(W',W''\right).
\end{align*}
We have the same bound for $D_{i}\left(t\right)$. Hence,
\[
\max\left(D_{i}\left(t\right),\tilde{D}_{i}\left(t\right)\right)\leq KK_{*}\left(T\right)\max\left(D_{i-1}\left(t\right),\tilde{D}_{i-1}\left(t\right)\right)+K\tilde{d}_{t}\left(W',W''\right).
\]
This in particular implies
\[
\max_{1\leq i\leq L}\max\left(D_{i}\left(t\right),\tilde{D}_{i}\left(t\right)\right)\leq K^{L}K_{*}^{L}\left(T\right)\tilde{d}_{t}\left(W',W''\right),
\]
which proves the statement.
\end{proof}

\subsection{Proof of Lemmas \ref{lem:Lipschitz backward MF} and \ref{lem:Lipschitz backward MF - general}}
\begin{proof}[Proof of Lemma \ref{lem:Lipschitz backward MF}]
This is a special case of Lemma \ref{lem:Lipschitz backward MF - general}
with $\tilde{C}_{i}=C_{i}$ for all $i\in\left[L\right]$, $K_{*}\left(T\right)=K_{0}\left(T\right)$
and $\Xi\left(B\right)=2Le^{1-K_{1}B^{2}}$.
\end{proof}
\begin{proof}[Proof of Lemma \ref{lem:Lipschitz backward MF - general}]
First of all, by Cauchy-Schwarz's inequality, we have from Assumption
\ref{enu:Assump_backward},
\begin{align*}
 & \mathbb{E}\left[\sup_{s\leq t}\underset{Z\sim{\cal P}}{{\rm ess\text{-}sup}}\left|\Delta_{i}^{H}\left(Z,\tilde{C}_{i};W'\left(s\right)\right)\right|^{50}\right]^{1/50}\\
 & \leq K\mathbb{E}\left[\sup_{s\leq t}\underset{Z\sim{\cal P}}{{\rm ess\text{-}sup}}\left|\mathbb{E}_{C_{i+1}}\left[\left(1+\left|\Delta_{i+1}^{H}\left(Z,C_{i+1};W'\left(s\right)\right)\right|\right)\left(1+\left|w_{i+1}'\left(s,\tilde{C}_{i},C_{i+1}\right)\right|+\left|b_{i+1}'\left(s,C_{i+1}\right)\right|\right)\right]\right|^{50}\right]^{1/50}\\
 & \leq K\mathbb{E}\bigg[\mathbb{E}_{C_{i+1}}\left[1+\sup_{s\leq t}\underset{Z\sim{\cal P}}{{\rm ess\text{-}sup}}\left|\Delta_{i+1}^{H}\left(Z,C_{i+1};W'\left(s\right)\right)\right|^{2}\right]^{25}\\
 & \quad\qquad\times\mathbb{E}_{C_{i+1}}\left[1+\sup_{s\leq t}\left|w_{i+1}'\left(s,\tilde{C}_{i},C_{i+1}\right)\right|^{2}+\sup_{s\leq t}\left|b_{i+1}'\left(s,C_{i+1}\right)\right|^{2}\right]^{25}\bigg]^{1/50}\\
 & \leq K\bigg(1+\mathbb{E}\left[\sup_{s\leq t}\underset{Z\sim{\cal P}}{{\rm ess\text{-}sup}}\left|\Delta_{i+1}^{H}\left(Z,C_{i+1};W'\left(s\right)\right)\right|^{2}\right]^{1/2}\bigg)\\
 & \quad\qquad\times\bigg(1+\mathbb{E}\left[\sup_{s\leq t}\left|w_{i+1}'\left(s,\tilde{C}_{i},C_{i+1}\right)\right|^{50}\right]^{1/50}+\mathbb{E}\left[\sup_{s\leq t}\left|b_{i+1}'\left(s,C_{i+1}\right)\right|^{2}\right]^{1/2}\bigg)\\
 & \leq KK_{*}\left(T\right)\bigg(1+\mathbb{E}\left[\sup_{s\leq t}\underset{Z\sim{\cal P}}{{\rm ess\text{-}sup}}\left|\Delta_{i+1}^{H}\left(Z,C_{i+1};W'\left(s\right)\right)\right|^{50}\right]^{1/50}\bigg).
\end{align*}
We have similarly,
\[
\mathbb{E}\left[\sup_{s\leq t}\underset{Z\sim{\cal P}}{{\rm ess\text{-}sup}}\left|\Delta_{i}^{H}\left(Z,C_{i};W'\left(s\right)\right)\right|^{50}\right]^{1/50}\leq KK_{*}\left(T\right)\bigg(1+\mathbb{E}\left[\sup_{s\leq t}\underset{Z\sim{\cal P}}{{\rm ess\text{-}sup}}\left|\Delta_{i+1}^{H}\left(Z,C_{i+1};W'\left(s\right)\right)\right|^{50}\right]^{1/50}\bigg).
\]
Therefore, for any $i\in\left[L\right]$,
\begin{align}
\mathbb{E}\left[\sup_{s\leq t}\underset{Z\sim{\cal P}}{{\rm ess\text{-}sup}}\left|\Delta_{i}^{H}\left(Z,C_{i};W'\left(s\right)\right)\right|^{50}\right]^{1/50},\;\mathbb{E}\left[\sup_{s\leq t}\underset{Z\sim{\cal P}}{{\rm ess\text{-}sup}}\left|\Delta_{i}^{H}\left(Z,\tilde{C}_{i};W'\left(s\right)\right)\right|^{50}\right]^{1/50} & \leq K^{L}K_{*}^{L}\left(T\right).\label{eq:lem_Lip_backward_1-1}
\end{align}
The same bound holds for $W''$. With this, let us proceed with two
steps.

\paragraph*{Step 1.}

For brevity, let us define
\begin{align*}
D_{i}^{H}\left(t\right) & =\mathbb{E}\left[\sup_{s\leq t}\underset{Z\sim{\cal P}}{{\rm ess\text{-}sup}}\left|\Delta_{i}^{H}\left(Z,C_{i};W'\left(s\right)\right)-\Delta_{i}^{H}\left(Z,C_{i};W''\left(s\right)\right)\right|^{2}\right]^{1/2},\\
\tilde{D}_{i}^{H}\left(t\right) & =\mathbb{E}\left[\sup_{s\leq t}\underset{Z\sim{\cal P}}{{\rm ess\text{-}sup}}\left|\Delta_{i}^{H}\left(Z,\tilde{C}_{i};W'\left(s\right)\right)-\Delta_{i}^{H}\left(Z,\tilde{C}_{i};W''\left(s\right)\right)\right|^{2}\right]^{1/2}.
\end{align*}
We first have from Assumption \ref{enu:Assump_backward} and Lemma
\ref{lem:Lipschitz forward MF - general}:
\begin{align*}
D_{L}^{H}\left(t\right)=\tilde{D}_{L}^{H}\left(t\right) & \leq K\sup_{s\leq t}\underset{X\sim{\cal P}}{{\rm ess\text{-}sup}}\left|H_{L}\left(X,1;W'\left(s\right)\right)-H_{L}\left(X,1;W''\left(s\right)\right)\right|\\
 & \quad+K\sup_{s\leq t}\underset{X\sim{\cal P}}{{\rm ess\text{-}sup}}\left|\hat{y}\left(X;W'\left(s\right)\right)-\hat{y}\left(X;W''\left(s\right)\right)\right|\\
 & \leq K^{L}K_{*}^{L}\left(T\right)\tilde{d}_{t}\left(W',W''\right).
\end{align*}
Next we consider $\tilde{D}_{i-1}^{H}$ and $D_{i-1}^{H}$ for $i\geq2$.
By Assumption \ref{enu:Assump_backward}:
\[
\tilde{D}_{i-1}^{H}\left(t\right)\leq K\left(\tilde{D}_{i-1}^{H,1}\left(t\right)+\tilde{D}_{i-1}^{H,2}\left(t\right)+\tilde{D}_{i-1}^{H,3}\left(t\right)+\tilde{D}_{i-1}^{H,4}\left(t\right)+\tilde{D}_{i-1}^{H,5}\left(t\right)\right),
\]
in which
\begin{align*}
\tilde{D}_{i-1}^{H,1}\left(t\right) & =\mathbb{E}_{\tilde{C}_{i-1}}\bigg[\sup_{s\leq t}\underset{Z\sim{\cal P}}{{\rm ess\text{-}sup}}\mathbb{E}_{C_{i}}\Big[\left(1+\left|w_{i}'\left(s,\tilde{C}_{i-1},C_{i}\right)\right|+\left|w_{i}''\left(s,\tilde{C}_{i-1},C_{i}\right)\right|+\left|b_{i}'\left(s,C_{i}\right)\right|+\left|b_{i}''\left(s,C_{i}\right)\right|\right)\\
 & \qquad\times\left|\Delta_{i}^{H}\left(z,C_{i};W'\left(s\right)\right)-\Delta_{i}^{H}\left(z,C_{i};W''\left(s\right)\right)\right|\Big]^{2}\bigg]^{1/2},\\
\tilde{D}_{i-1}^{H,2}\left(t\right) & =\mathbb{E}_{\tilde{C}_{i-1}}\bigg[\sup_{s\leq t}\underset{Z\sim{\cal P}}{{\rm ess\text{-}sup}}\mathbb{E}_{C_{i}}\Big[\left(1+\left|\Delta_{i}^{H}\left(Z,C_{i};W'\left(s\right)\right)\right|+\left|\Delta_{i}^{H}\left(Z,C_{i};W''\left(s\right)\right)\right|\right)\\
 & \qquad\times\left(\left|w_{i}'\left(s,\tilde{C}_{i-1},C_{i}\right)-w_{i}''\left(s,\tilde{C}_{i-1},C_{i}\right)\right|+\left|b_{i}'\left(s,C_{i}\right)-b_{i}''\left(s,C_{i}\right)\right|\right)\Big]^{2}\bigg]^{1/2},\\
\tilde{D}_{i-1}^{H,3}\left(t\right) & =\mathbb{E}_{\tilde{C}_{i-1}}\bigg[\sup_{s\leq t}\underset{Z\sim{\cal P}}{{\rm ess\text{-}sup}}\mathbb{E}_{C_{i}}\Big[\left(1+\left|\Delta_{i}^{H}\left(Z,C_{i};W'\left(s\right)\right)\right|+\left|\Delta_{i}^{H}\left(Z,C_{i};W''\left(s\right)\right)\right|\right)\\
 & \qquad\times\left(1+\left|w_{i}'\left(s,\tilde{C}_{i-1},C_{i}\right)\right|+\left|w_{i}''\left(s,\tilde{C}_{i-1},C_{i}\right)\right|+\left|b_{i}'\left(s,C_{i}\right)\right|+\left|b_{i}''\left(s,C_{i}\right)\right|\right)\\
 & \qquad\times\left|H_{i}\left(X,C_{i};W'\left(s\right)\right)-H_{i}\left(X,C_{i};W''\left(s\right)\right)\right|\Big]^{2}\bigg]^{1/2},\\
\tilde{D}_{i-1}^{H,4}\left(t\right) & =\mathbb{E}_{\tilde{C}_{i-1}}\bigg[\sup_{s\leq t}\underset{Z\sim{\cal P}}{{\rm ess\text{-}sup}}\mathbb{E}_{C_{i}}\Big[\left(1+\left|\Delta_{i}^{H}\left(Z,C_{i};W'\left(s\right)\right)\right|+\left|\Delta_{i}^{H}\left(Z,C_{i};W''\left(s\right)\right)\right|\right)\\
 & \qquad\times\left(1+\left|b_{i}'\left(s,C_{i}\right)\right|+\left|b_{i}''\left(s,C_{i}\right)\right|\right)\Big]^{2}\\
 & \qquad\times\left|H_{i-1}\left(X,\tilde{C}_{i-1};W'\left(s\right)\right)-H_{i-1}\left(X,\tilde{C}_{i-1};W''\left(s\right)\right)\right|^{2}\bigg]^{1/2},\\
\tilde{D}_{i-1}^{H,5}\left(t\right) & =\mathbb{E}_{\tilde{C}_{i-1}}\bigg[\sup_{s\leq t}\underset{Z\sim{\cal P}}{{\rm ess\text{-}sup}}\mathbb{E}_{C_{i}}\Big[\left(1+\left|\Delta_{i}^{H}\left(Z,C_{i};W'\left(s\right)\right)\right|+\left|\Delta_{i}^{H}\left(Z,C_{i};W''\left(s\right)\right)\right|\right)\\
 & \qquad\times\left(\left|w_{i}'\left(s,\tilde{C}_{i-1},C_{i}\right)\right|+\left|w_{i}''\left(s,\tilde{C}_{i-1},C_{i}\right)\right|\right)\Big]^{2}\\
 & \qquad\times\left|H_{i-1}\left(X,\tilde{C}_{i-1};W'\left(s\right)\right)-H_{i-1}\left(X,\tilde{C}_{i-1};W''\left(s\right)\right)\right|^{2}\bigg]^{1/2}.
\end{align*}
We bound each term. For $\tilde{D}_{i-1}^{H,1}$, we use Cauchy-Schwarz's
inequality to obtain:
\begin{align*}
\tilde{D}_{i-1}^{H,1}\left(t\right) & \leq K\mathbb{E}_{\tilde{C}_{i-1}}\bigg[\sup_{s\leq t}\underset{Z\sim{\cal P}}{{\rm ess\text{-}sup}}\mathbb{E}_{C_{i}}\Big[\Big(1+\left|w_{i}'\left(s,\tilde{C}_{i-1},C_{i}\right)\right|^{2}+\left|w_{i}''\left(s,\tilde{C}_{i-1},C_{i}\right)\right|^{2}\\
 & \quad\qquad+\left|b_{i}'\left(s,C_{i}\right)\right|^{2}+\left|b_{i}''\left(s,C_{i}\right)\right|^{2}\Big)\left|D_{i}^{H}\left(t\right)\right|^{2}\Big]\bigg]^{1/2}\\
 & \leq KK_{*}\left(T\right)D_{i}^{H}\left(t\right).
\end{align*}
Similarly, using Eq. (\ref{eq:lem_Lip_backward_1-1}), 
\begin{align*}
\tilde{D}_{i-1}^{H,2}\left(t\right) & \leq K\sup_{s\leq t}\underset{Z\sim{\cal P}}{{\rm ess\text{-}sup}}\mathbb{E}_{C_{i}}\left[1+\left|\Delta_{i}^{H}\left(Z,C_{i};W'\left(s\right)\right)\right|^{2}+\left|\Delta_{i}^{H}\left(Z,C_{i};W''\left(s\right)\right)\right|^{2}\right]^{1/2}\tilde{d}_{t}\left(W',W''\right)\\
 & \leq K^{L}K_{*}^{L}\left(T\right)\tilde{d}_{t}\left(W',W''\right).
\end{align*}
To bound $\tilde{D}_{i-1}^{H,3}$, we use Lemma \ref{lem:Lipschitz forward MF - general}
and Eq. (\ref{eq:lem_Lip_backward_1-1}): 
\begin{align*}
\tilde{D}_{i-1}^{H,3}\left(t\right) & \leq K\mathbb{E}_{\tilde{C}_{i-1}}\bigg[\sup_{s\leq t}\underset{Z\sim{\cal P}}{{\rm ess\text{-}sup}}\mathbb{E}_{C_{i}}\left[1+\left|\Delta_{i}^{H}\left(Z,C_{i};W'\left(s\right)\right)\right|^{4}+\left|\Delta_{i}^{H}\left(Z,C_{i};W''\left(s\right)\right)\right|^{4}\right]^{1/2}\\
 & \quad\qquad\times\mathbb{E}_{C_{i}}\left[1+\left|w_{i}'\left(s,\tilde{C}_{i-1},C_{i}\right)\right|^{4}+\left|w_{i}''\left(s,\tilde{C}_{i-1},C_{i}\right)\right|^{4}+\left|b_{i}'\left(s,C_{i}\right)\right|^{4}+\left|b_{i}''\left(s,C_{i}\right)\right|^{4}\right]^{1/2}\\
 & \quad\qquad\times\mathbb{E}_{C_{i}}\left[\left|H_{i}\left(X,C_{i};W'\left(s\right)\right)-H_{i}\left(X,C_{i};W''\left(s\right)\right)\right|^{2}\right]\bigg]^{1/2}\\
 & \leq K^{2L+2}K_{*}^{2L+2}\left(T\right)\tilde{d}_{t}\left(W',W''\right),
\end{align*}
and similarly, for $\tilde{D}_{i-1}^{H,4}$,
\begin{align*}
\tilde{D}_{i-1}^{H,4}\left(t\right) & \leq K\mathbb{E}_{\tilde{C}_{i-1}}\bigg[\sup_{s\leq t}\underset{Z\sim{\cal P}}{{\rm ess\text{-}sup}}\mathbb{E}_{C_{i}}\left[1+\left|\Delta_{i}^{H}\left(Z,C_{i};W'\left(s\right)\right)\right|^{2}+\left|\Delta_{i}^{H}\left(Z,C_{i};W''\left(s\right)\right)\right|^{2}\right]\\
 & \quad\qquad\times\mathbb{E}_{C_{i}}\left[1+\left|b_{i}'\left(s,C_{i}\right)\right|^{2}+\left|b_{i}''\left(s,C_{i}\right)\right|^{2}\right]\\
 & \quad\qquad\times\left|H_{i-1}\left(X,\tilde{C}_{i-1};W'\left(s\right)\right)-H_{i-1}\left(X,\tilde{C}_{i-1};W''\left(s\right)\right)\right|^{2}\bigg]^{1/2}\\
 & \leq K^{2L+2}K_{*}^{2L+2}\left(T\right)\tilde{d}_{t}\left(W',W''\right).
\end{align*}
The treatment of $\tilde{D}_{i-1}^{H,5}$ requires more care. Cauchy-Schwarz's
inequality and Eq. (\ref{eq:lem_Lip_backward_1-1}) give us:
\begin{align*}
\tilde{D}_{i-1}^{H,5}\left(t\right) & \leq K\mathbb{E}_{\tilde{C}_{i-1}}\bigg[\sup_{s\leq t}\underset{Z\sim{\cal P}}{{\rm ess\text{-}sup}}\mathbb{E}_{C_{i}}\left[1+\left|\Delta_{i}^{H}\left(Z,C_{i};W'\left(s\right)\right)\right|^{2}+\left|\Delta_{i}^{H}\left(Z,C_{i};W''\left(s\right)\right)\right|^{2}\right]\\
 & \quad\qquad\times\mathbb{E}_{C_{i}}\left[\left|w_{i}'\left(s,\tilde{C}_{i-1},C_{i}\right)\right|^{2}+\left|w_{i}''\left(s,\tilde{C}_{i-1},C_{i}\right)\right|^{2}\right]\\
 & \quad\qquad\times\left|H_{i-1}\left(X,\tilde{C}_{i-1};W'\left(s\right)\right)-H_{i-1}\left(X,\tilde{C}_{i-1};W''\left(s\right)\right)\right|^{2}\bigg]^{1/2}\\
 & \leq K^{L}K_{*}^{L}\left(T\right)\mathbb{E}\bigg[\sup_{s\leq t}\left(\left|w_{i}'\left(s,\tilde{C}_{i-1},C_{i}\right)\right|^{2}+\left|w_{i}''\left(s,\tilde{C}_{i-1},C_{i}\right)\right|^{2}\right)\\
 & \quad\qquad\times\underset{Z\sim{\cal P}}{{\rm ess\text{-}sup}}\left|H_{i-1}\left(X,\tilde{C}_{i-1};W'\left(s\right)\right)-H_{i-1}\left(X,\tilde{C}_{i-1};W''\left(s\right)\right)\right|^{2}\bigg]^{1/2}.
\end{align*}
Recall our assumption:
\begin{align*}
\mathbb{P}\left(\widetilde{\mathsf{max}}_{T}^{w}\left(W'\right)\geq K_{*}\left(T\right)B\right) & \leq\Xi\left(B\right),\\
\mathbb{P}\left(\widetilde{\mathsf{max}}_{T}^{w}\left(W''\right)\geq K_{*}\left(T\right)B\right) & \leq\Xi\left(B\right).
\end{align*}
 We also have from Assumption \ref{enu:Assump_forward}:
\begin{align*}
 & \mathbb{E}\left[\sup_{s\leq t}\underset{Z\sim{\cal P}}{{\rm ess\text{-}sup}}\left|H_{i-1}\left(X,\tilde{C}_{i-1};W'\left(s\right)\right)\right|^{8}\right]^{1/8}\\
 & \leq K\bigg(1+\mathbb{E}\left[\sup_{s\leq t}\left|w_{i-1}'\left(s,C_{i-2},\tilde{C}_{i-1}\right)\right|^{8}\right]^{1/8}+\mathbb{E}\left[\sup_{s\leq t}\left|b_{i-1}'\left(s,\tilde{C}_{i-1}\right)\right|^{8}\right]^{1/8}\bigg)\\
 & \leq KK_{*}\left(T\right),
\end{align*}
and similarly,
\begin{align*}
\mathbb{E}\left[\sup_{s\leq t}\underset{Z\sim{\cal P}}{{\rm ess\text{-}sup}}\left|H_{i-1}\left(X,\tilde{C}_{i-1};W''\left(s\right)\right)\right|^{8}\right]^{1/8} & \leq KK_{*}\left(T\right).
\end{align*}
As such, denoting the event
\[
E=\left\{ \sup_{s\leq t}\left|w_{i}'\left(s,\tilde{C}_{i-1},C_{i}\right)\right|\geq K_{*}\left(T\right)B,\quad\sup_{s\leq t}\left|w_{i}''\left(s,\tilde{C}_{i-1},C_{i}\right)\right|\geq K_{*}\left(T\right)B\right\} ,
\]
we obtain from Lemma \ref{lem:Lipschitz forward MF - general} :
\begin{align*}
\tilde{D}_{i-1}^{H,5}\left(t\right) & \leq K^{L}K_{*}^{L}\left(T\right)\mathbb{E}\bigg[\sup_{s\leq t}\left(\left|w_{i}'\left(s,\tilde{C}_{i-1},C_{i}\right)\right|^{2}+\left|w_{i}''\left(s,\tilde{C}_{i-1},C_{i}\right)\right|^{2}\right)\\
 & \quad\qquad\times\underset{Z\sim{\cal P}}{{\rm ess\text{-}sup}}\left|H_{i-1}\left(X,\tilde{C}_{i-1};W'\left(s\right)\right)-H_{i-1}\left(X,\tilde{C}_{i-1};W''\left(s\right)\right)\right|^{2}\left(\mathbb{I}\left(\neg E\right)+\mathbb{I}\left(E\right)\right)\bigg]^{1/2}.\\
 & \leq K^{L}K_{*}^{L+1}\left(T\right)B\mathbb{E}\left[\sup_{s\leq t}\underset{Z\sim{\cal P}}{{\rm ess\text{-}sup}}\left|H_{i-1}\left(X,\tilde{C}_{i-1};W'\left(s\right)\right)-H_{i-1}\left(X,\tilde{C}_{i-1};W''\left(s\right)\right)\right|^{2}\right]^{1/2}\\
 & \quad+K^{L}K_{*}^{L}\left(T\right)\mathbb{E}\left[\sup_{s\leq t}\left|w_{i}'\left(s,\tilde{C}_{i-1},C_{i}\right)\right|^{8}+\sup_{s\leq t}\left|w_{i}''\left(s,\tilde{C}_{i-1},C_{i}\right)\right|^{8}\right]^{1/8}\\
 & \quad\qquad\times\mathbb{E}\left[\sup_{s\leq t}\underset{Z\sim{\cal P}}{{\rm ess\text{-}sup}}\left(\left|H_{i-1}\left(X,\tilde{C}_{i-1};W'\left(s\right)\right)\right|^{8}+\left|H_{i-1}\left(X,\tilde{C}_{i-1};W''\left(s\right)\right)\right|^{8}\right)\right]^{1/8}\mathbb{P}\left(E\right)^{1/2}\\
 & \leq K^{2L+2}K_{*}^{2L+2}\left(T\right)B\tilde{d}_{t}\left(W',W''\right)+K^{L}K_{*}^{L+2}\left(T\right)\sqrt{\Xi\left(B\right)}.
\end{align*}
Putting all the bounds together:
\[
\tilde{D}_{i-1}^{H}\left(t\right)\leq KK_{*}\left(T\right)D_{i}^{H}\left(t\right)+\left(KK_{*}\left(T\right)\right)^{2L+2}\left(1+B\right)\tilde{d}_{t}\left(W',W''\right)+K^{L}K_{*}^{L+2}\left(T\right)\sqrt{\Xi\left(B\right)}.
\]
Similarly,
\[
D_{i-1}^{H}\left(t\right)\leq KK_{*}\left(T\right)D_{i}^{H}\left(t\right)+\left(KK_{*}\left(T\right)\right)^{2L+2}\left(1+B\right)\tilde{d}_{t}\left(W',W''\right)+K^{L}K_{*}^{L+2}\left(T\right)\sqrt{\Xi\left(B\right)}.
\]
Together with the bound on $D_{L}^{H}$ and $\tilde{D}_{L}^{H}$,
we thus obtain:
\begin{equation}
\max_{1\leq i\leq L}\max\left(\tilde{D}_{i}^{H}\left(t\right),D_{i}^{H}\left(t\right)\right)\leq\left(KK_{*}\left(T\right)\right)^{3L+2}\left(\left(1+B\right)\tilde{d}_{t}\left(W',W''\right)+\sqrt{\Xi\left(B\right)}\right).\label{eq:lem_Lip_backward_2-1}
\end{equation}
This completes the first step.

\paragraph*{Step 2.}

We now prove the main claims of the lemma. For brevity, for $i\geq2$,
let us denote
\begin{align*}
\tilde{D}_{i}^{w}\left(t\right) & =\mathbb{E}\left[\sup_{s\leq t}\underset{Z\sim{\cal P}}{{\rm ess\text{-}sup}}\left|\Delta_{i}^{w}\left(Z,\tilde{C}_{i-1},\tilde{C}_{i};W'\left(s\right)\right)-\Delta_{i}^{w}\left(Z,\tilde{C}_{i-1},\tilde{C}_{i};W''\left(s\right)\right)\right|^{2}\right]^{1/2},\\
\tilde{D}_{i}^{b}\left(t\right) & =\mathbb{E}\left[\sup_{s\leq t}\underset{Z\sim{\cal P}}{{\rm ess\text{-}sup}}\left|\Delta_{i}^{b}\left(Z,\tilde{C}_{i};W'\left(s\right)\right)-\Delta_{i}^{b}\left(Z,\tilde{C}_{i};W''\left(s\right)\right)\right|^{2}\right]^{1/2},\\
\tilde{D}_{1}^{w}\left(t\right) & =\mathbb{E}\left[\sup_{s\leq t}\underset{Z\sim{\cal P}}{{\rm ess\text{-}sup}}\left|\Delta_{1}^{w}\left(Z,\tilde{C}_{1};W'\left(s\right)\right)-\Delta_{1}^{w}\left(Z,\tilde{C}_{1};W''\left(s\right)\right)\right|^{2}\right]^{1/2}.
\end{align*}
By Assumption \ref{enu:Assump_forward},
\[
\tilde{D}_{i}^{w}\left(t\right)\leq K\left(\tilde{D}_{i}^{w,1}\left(t\right)+\tilde{D}_{i}^{w,2}\left(t\right)\right),
\]
in which
\begin{align*}
\tilde{D}_{i}^{w,1}\left(t\right) & =\mathbb{E}\bigg[\sup_{s\leq t}\underset{Z\sim{\cal P}}{{\rm ess\text{-}sup}}\left(1+\left|\Delta_{i}^{H}\left(Z,\tilde{C}_{i};W'\left(s\right)\right)\right|^{2}+\left|\Delta_{i}^{H}\left(Z,\tilde{C}_{i};W''\left(s\right)\right)\right|^{2}\right)\\
 & \qquad\times\left(\left|H_{i-1}\left(X,\tilde{C}_{i-1};W'\left(s\right)\right)-H_{i-1}\left(X,\tilde{C}_{i-1};W''\left(s\right)\right)\right|^{2}\right)\bigg]^{1/2},\\
\tilde{D}_{i}^{w,2}\left(t\right) & =\mathbb{E}\bigg[\sup_{s\leq t}\underset{Z\sim{\cal P}}{{\rm ess\text{-}sup}}\bigg(\left|\Delta_{i}^{H}\left(Z,\tilde{C}_{i};W'\left(s\right)\right)-\Delta_{i}^{H}\left(Z,\tilde{C}_{i};W''\left(s\right)\right)\right|^{2}+\left|w_{i}'\left(s,\tilde{C}_{i-1},\tilde{C}_{i}\right)-w_{i}''\left(s,\tilde{C}_{i-1},\tilde{C}_{i}\right)\right|^{2}\\
 & \qquad+\left|b_{i}'\left(s,\tilde{C}_{i}\right)-b_{i}''\left(s,\tilde{C}_{i}\right)\right|^{2}+\left|H_{i}\left(X,\tilde{C}_{i};W'\left(s\right)\right)-H_{i}\left(X,\tilde{C}_{i};W''\left(s\right)\right)\right|^{2}\bigg)\bigg]^{1/2}.
\end{align*}
We bound $\tilde{D}_{i}^{w,1}$:
\begin{align*}
\tilde{D}_{i}^{w,1}\left(t\right) & \leq\mathbb{E}\left[\sup_{s\leq t}\underset{Z\sim{\cal P}}{{\rm ess\text{-}sup}}\left(1+\left|\Delta_{i}^{H}\left(Z,\tilde{C}_{i};W'\left(s\right)\right)\right|^{2}+\left|\Delta_{i}^{H}\left(Z,\tilde{C}_{i};W''\left(s\right)\right)\right|^{2}\right)\right]^{1/2}\\
 & \qquad\times\mathbb{E}\left[\sup_{s\leq t}\underset{Z\sim{\cal P}}{{\rm ess\text{-}sup}}\left|H_{i-1}\left(X,\tilde{C}_{i-1};W'\left(s\right)\right)-H_{i-1}\left(X,\tilde{C}_{i-1};W''\left(s\right)\right)\right|^{2}\right]^{1/2}\\
 & \leq K^{2L}K_{*}^{2L}\left(T\right)\tilde{d}_{t}\left(W',W''\right),
\end{align*}
where we have used Eq. (\ref{eq:lem_Lip_backward_1-1}) and Lemma
\ref{lem:Lipschitz forward MF - general}. We also have the following
bound on $\tilde{D}_{i}^{w,2}$ from Lemma \ref{lem:Lipschitz forward MF - general}
and Eq. (\ref{eq:lem_Lip_backward_2-1}):
\[
\tilde{D}_{i}^{w,2}\left(t\right)\leq\left(KK_{*}\left(T\right)\right)^{3L+2}\left(\left(1+B\right)\tilde{d}_{t}\left(W',W''\right)+\sqrt{\Xi\left(B\right)}\right),
\]
which therefore leads to
\[
\tilde{D}_{i}^{w}\left(t\right)\leq\left(KK_{*}\left(T\right)\right)^{3L+2}\left(\left(1+B\right)\tilde{d}_{t}\left(W',W''\right)+\sqrt{\Xi\left(B\right)}\right).
\]
The same bound similarly applies to $\tilde{D}_{i}^{b}\left(t\right)$
and $\tilde{D}_{1}^{w}\left(t\right)$.
\end{proof}

\section{Remaining proofs for Section \ref{sec:Main-result}\label{sec:Remaining-proofs-main-MF}}

\subsection{Proofs of Propositions \ref{prop:particle coupling - bounded}, \ref{prop:gradient descent - bounded}
and \ref{prop:truncation}}

Before delving into the proofs, we introduce some auxiliary results.
We first present a useful concentration result. In fact, the tail
bound can be improved using the argument in \cite{feldman2018generalization},
but the following simpler version is sufficient for our purposes.
\begin{lem}
\label{lem:square hoeffding}Consider an integer $n\geq2$; let $(c_{1},c_{2},\dots,c_{n})$
be $\eta$-independent for $\eta\in\left[0,1/2\right]$ and let $x$
be another independent random variable. Let $\mathbb{E}_{x}$ and
$\mathbb{E}_{c}$ denote the expectations w.r.t. $x$ only and $\left\{ c_{i}\right\} _{i\in\left[n\right]}$
only, respectively. Consider a collection of mappings $\left\{ f_{i}\right\} _{i\in\left[n\right]}$,
which map to the same separable Hilbert space. Let $f_{i}\left(x\right)=\mathbb{E}_{c}\left[f_{i}\left(c_{i},x\right)\right]$.
Assume that $|f_{i}(c_{i},x)-f_{i}(x)|\le R$ for almost every $x$
and $c_{i}$, then for any $\delta>2\eta R$,
\[
\mathbb{P}\left(\mathbb{E}_{x}\left[\left|\frac{1}{n}\sum_{i=1}^{n}f_{i}\left(c_{i},x\right)-f_{i}\left(x\right)\right|\right]\geq\delta\right)\leq\frac{4R}{\delta}\exp\left(-\frac{n\delta^{2}}{512R^{2}}\right).
\]
\end{lem}

\begin{proof}
For brevity, let us define
\[
Z_{n}\left(x\right)=\sum_{i=1}^{n}\left(f_{i}\left(c_{i},x\right)-f_{i}\left(x\right)\right).
\]
By Theorem \ref{thm:iid-hilbert-concentration}, for $\delta>2\eta R$,
\[
\mathbb{P}\left(\left|Z_{n}\left(x\right)\right|\geq n\delta\middle|x\right)\leq2\exp\left(-n\delta^{2}/\left(64R^{2}\right)\right),
\]
and therefore,
\[
\mathbb{P}\left(\left|Z_{n}\left(x\right)\right|\geq n\delta\right)\leq2\exp\left(-n\delta^{2}/\left(64R^{2}\right)\right),
\]
since the right-hand side is uniform in $x$. Next note that, w.r.t.
the randomness of $x$ only,
\begin{align*}
\mathbb{E}_{x}\left[\left|Z_{n}\left(x\right)\right|\right] & =\mathbb{E}_{x}\left[\left|Z_{n}\left(x\right)\right|\mathbb{I}\left(\left|Z_{n}\left(x\right)\right|\geq n\delta/2\right)\right]+\mathbb{E}_{x}\left[\left|Z_{n}\left(x\right)\right|\mathbb{I}\left(\left|Z_{n}\left(x\right)\right|<n\delta/2\right)\right]\\
 & \leq\mathbb{E}_{x}\left[\left|Z_{n}\left(x\right)\right|\mathbb{I}\left(\left|Z_{n}\left(x\right)\right|\geq n\delta/2\right)\right]+n\delta/2.
\end{align*}
As such, by Markov's inequality and Cauchy-Schwarz's inequality,
\begin{align*}
\mathbb{P}\left(\mathbb{E}_{x}\left[\left|Z_{n}\left(x\right)\right|\right]\geq n\delta\right) & \leq\mathbb{P}\left(\mathbb{E}_{x}\left[\left|Z_{n}\left(x\right)\right|\mathbb{I}\left(\left|Z_{n}\left(x\right)\right|\geq n\delta/2\right)\right]\geq n\delta/2\right)\\
 & \leq\frac{2}{n\delta}\mathbb{E}\left[\left|Z_{n}\left(x\right)\right|\mathbb{I}\left(\left|Z_{n}\left(x\right)\right|\geq n\delta/2\right)\right]\\
 & \leq\frac{2}{n\delta}\mathbb{E}\left[\left|Z_{n}\left(x\right)\right|^{2}\right]^{1/2}\mathbb{P}\left(\left|Z_{n}\left(x\right)\right|\geq n\delta/2\right)^{1/2}\\
 & \leq\frac{4}{n\delta}\mathbb{E}\left[\left|Z_{n}\left(x\right)\right|^{2}\right]^{1/2}\exp\left(-\frac{n\delta^{2}}{512R^{2}}\right).
\end{align*}
Notice that since $c_{1},...,c_{n}$ are $\eta$-independent and $f_{i}\left(x\right)=\mathbb{E}_{c}\left[f_{i}\left(c_{i},x\right)\right]$,
\[
\mathbb{E}\left[\left|Z_{n}\left(x\right)\right|^{2}\right]\le nR^{2}+\eta n^{2}R^{2}.
\]
We thus get:
\begin{align*}
\mathbb{P}\left(\mathbb{E}_{x}\left[\left|Z_{n}\left(x\right)\right|\right]\geq n\delta\right) & \leq\frac{4\sqrt{1+\eta n}R}{\sqrt{n}\delta}\exp\left(-\frac{n\delta^{2}}{512R^{2}}\right)\le\frac{4R}{\delta}\exp\left(-\frac{n\delta^{2}}{512R^{2}}\right).
\end{align*}
This proves the claim.
\end{proof}
The next useful result concerns with the sampling at initialization.
\begin{lem}
\label{lem:initialization_compare}Under Assumption \ref{assump:init},
following the coupling procedure, we have for any $\delta>0$ and
$B>0$, with probability at least $1-KLn_{\max}\exp\left(-K\left(\delta\land\delta^{1/26}\right)n_{\min}^{1/52}\right)$,
the following hold:
\begin{itemize}
\item moment bounds:
\begin{align*}
 & \interleave\tilde{W}\interleave_{0}=\interleave\mathbf{W}\interleave_{0}\leq\interleave W\interleave_{0}+\delta^{1/50},\\
 & \bigg(\frac{1}{n_{i}}\sum_{j_{i}=1}^{n_{i}}\mathbb{E}_{C_{i-1}}\left[\left|w_{i}^{0}\left(C_{i-1},C_{i}\left(j_{i}\right)\right)\right|^{50}\right]\bigg)^{1/50}\leq\mathbb{E}\left[\left|w_{i}^{0}\left(C_{i-1},C_{i}\right)\right|^{50}\right]^{1/50}+\delta^{1/50},\quad2\leq i\leq L,\\
 & \bigg(\frac{1}{n_{i-1}}\sum_{j_{i-1}=1}^{n_{i-1}}\mathbb{E}_{C_{i}}\left[\left|w_{i}^{0}\left(C_{i-1}\left(j_{i-1}\right),C_{i}\right)\right|^{50}\right]\bigg)^{1/50}\leq\mathbb{E}\left[\left|w_{i}^{0}\left(C_{i-1},C_{i}\right)\right|^{50}\right]^{1/50}+\delta^{1/50},\quad2\leq i\leq L,
\end{align*}
\item excess bounds:
\begin{align*}
 & \left|\frac{1}{n_{i-1}n_{i}}\sum_{j_{i-1}=1}^{n_{i-1}}\sum_{j_{i}=1}^{n_{i}}\mathbb{I}\left(\left|w_{i}^{0}\left(C_{i-1}\left(j_{i-1}\right),C_{i}\left(j_{i}\right)\right)\right|\geq B\right)-\mathbb{P}\left(\left|w_{i}^{0}\left(C_{i-1},C_{i}\right)\right|\geq B\right)\right|\leq\delta,\quad2\leq i\leq L,\\
 & \left|\frac{1}{n_{i}}\sum_{j_{i}=1}^{n_{i}}\mathbb{I}\left(\left|b_{i}^{0}\left(C_{i}\left(j_{i}\right)\right)\right|\geq B\right)-\mathbb{P}\left(\left|b_{i}^{0}\left(C_{i}\right)\right|\geq B\right)\right|\leq\delta,\quad2\leq i\leq L.
\end{align*}
\end{itemize}
Here $n_{\max}=\max\left(n_{1},...,n_{L}\right)$ and $n_{\min}=\min\left(n_{1},...,n_{L-1}\right)$.
\end{lem}

\begin{proof}
We treat the bounds separately.

\paragraph*{The moment bounds.}

We recall that
\begin{align*}
\interleave\tilde{W}\interleave_{0}=\max\Bigg( & \max_{2\leq i\leq L}\bigg(\frac{1}{n_{i-1}n_{i}}\sum_{j_{i-1}=1}^{n_{i-1}}\sum_{j_{i}=1}^{n_{i}}\left|w_{i}^{0}\left(C_{i-1}\left(j_{i-1}\right),C_{i}\left(j_{i}\right)\right)\right|^{50}\bigg)^{1/50},\\
 & \max_{2\leq i\leq L}\bigg(\frac{1}{n_{i}}\sum_{j_{i}=1}^{n_{i}}\left|b_{i}^{0}\left(C_{i}\left(j_{i}\right)\right)\right|^{50}\bigg)^{1/50},\;\bigg(\frac{1}{n_{1}}\sum_{j_{1}=1}^{n_{1}}\left|w_{1}^{0}\left(C_{1}\left(j_{1}\right)\right)\right|^{50}\bigg)^{1/50}\Bigg).
\end{align*}
Let us first prove the following:
\begin{align*}
\mathbb{P}\left(Z^{\left(1\right)}\geq\delta\right) & \leq e\cdot\exp\left(-K\delta^{1/26}n_{1}^{1/52}\right),\\
Z^{\left(1\right)} & =\left|\frac{1}{n_{1}}\sum_{j_{1}=1}^{n_{1}}\left|w_{1}^{0}\left(C_{1}\left(j_{1}\right)\right)\right|^{50}-\mathbb{E}\left[\left|w_{1}^{0}\left(C_{1}\right)\right|^{50}\right]\right|.
\end{align*}
Indeed we note that for any $m\geq1$, $\mathbb{E}\left[\left|w_{1}^{0}\left(C_{1}\right)\right|^{50m}\right]^{1/m}\leq Km^{25}$.
As such, by Theorem \ref{thm:iid-hilbert-higher-moment},
\[
\mathbb{E}\left[\left|Z^{\left(1\right)}\right|^{m}\right]^{1/m}\leq Km^{26}n_{1}^{-1/2}.
\]
This implies $\left|Z^{\left(1\right)}\right|^{1/52}$ is $Kn_{1}^{-1/104}$-sub-Gaussian,
from which the claim follows. Using the same argument, we get
\begin{align*}
\mathbb{P}\left(Z^{\left(L\right)}\geq\delta\right) & \leq e\cdot\exp\left(-K\delta^{1/26}n_{L-1}^{1/52}\right),\\
Z^{\left(L\right)} & =\left|\frac{1}{n_{L-1}}\sum_{j_{L-1}=1}^{n_{L-1}}\left|w_{L}^{0}\left(C_{L-1}\left(j_{L-1}\right),1\right)\right|^{50}-\mathbb{E}\left[\left|w_{L}^{0}\left(C_{L-1},1\right)\right|^{50}\right]\right|,
\end{align*}
as well as that
\begin{align*}
\mathbb{P}\left(A^{\left(i\right)}\geq\delta\right) & \leq e\cdot\exp\left(-K\delta^{1/26}n_{i}^{1/52}\right),\\
A^{\left(i\right)} & =\left|\frac{1}{n_{i}}\sum_{j_{i}=1}^{n_{i}}\left|b_{i}^{0}\left(C_{i}\left(j_{i}\right)\right)\right|^{50}-\mathbb{E}\left[\left|b_{i}^{0}\left(C_{i}\right)\right|^{50}\right]\right|,
\end{align*}
for $2\leq i\leq L-1$. In addition, since $\Omega_{L}=\left\{ 1\right\} $
and $n_{L}=1$, it is obvious that $\left|b_{L}^{0}\left(C_{L}\left(j_{L}\right)\right)\right|=\left|b_{L}^{0}\left(C_{L}\right)\right|$.

Next for $2\leq i\leq L-1$, without loss of generality, suppose $n_{i}\geq n_{i-1}$.
Let us prove the following:
\begin{align*}
\mathbb{P}\left(Z^{\left(i\right)}\geq\delta\right) & \leq en_{i}\cdot\exp\left(-K\delta^{1/26}n_{i-1}^{1/52}\right),\\
Z^{\left(i\right)} & =\left|\frac{1}{n_{i-1}n_{i}}\sum_{j_{i-1}=1}^{n_{i-1}}\sum_{j_{i}=1}^{n_{i}}\left|w_{i}^{0}\left(C_{i-1}\left(j_{i-1}\right),C_{i}\left(j_{i}\right)\right)\right|^{50}-\mathbb{E}\left[\left|w_{i}^{0}\left(C_{i-1},C_{i}\right)\right|^{50}\right]\right|.
\end{align*}
For fixed $j_{i}\in\left[n_{i}\right]$, let us first consider
\[
\tilde{C}_{j_{i}}\left(j_{i-1}\right)=\left(C_{i-1}\left(j_{i-1}\right),C_{i}\left(\left(j_{i-1}+j_{i}\right)\mod n_{i}\right)\right).
\]
For any $1$-bounded function $f$, due to independence between $C_{i}\left(j_{i}\right)$'s
and $C_{i-1}\left(j_{i-1}\right)$'s and Assumption \ref{assump:neuronal-embedding},
we have:
\begin{align*}
 & \bigg|\mathbb{E}\left[f\left(\tilde{C}_{j_{i}}\left(j_{i-1}\right)\right)\middle|\left\{ \tilde{C}_{j_{i}}\left(j_{i-1}'\right)\right\} _{j_{i-1}'\ne j_{i-1}},\;C_{i}\left(\left(j_{i-1}+j_{i}\right)\mod n_{i}\right)\right]\\
 & \qquad-\mathbb{E}\left[f\left(\tilde{C}_{j_{i}}\left(j_{i-1}\right)\right)\middle|C_{i}\left(\left(j_{i-1}+j_{i}\right)\mod n_{i}\right)\right]\bigg|\leq\eta_{i-1},
\end{align*}
which implies
\[
\left|\mathbb{E}\left[f\left(\tilde{C}_{j_{i}}\left(j_{i-1}\right)\right)\middle|\left\{ \tilde{C}_{j_{i}}\left(j_{i-1}'\right)\right\} _{j_{i-1}'\ne j_{i-1}}\right]-\mathbb{E}\left[f\left(\tilde{C}_{j_{i}}\left(j_{i-1}\right)\right)\right]\right|\leq\eta_{i-1}.
\]
That is, $\left\{ \tilde{C}_{j_{i}}\left(j_{i-1}\right)\right\} _{j_{i-1}\in\left[n_{i-1}\right]}$
is $\eta_{i-1}$-independent. Hence by the same argument, by letting
\begin{align*}
Z_{j_{i}}^{\left(i\right)} & =\left|\frac{1}{n_{i-1}}\sum_{j_{i-1}=1}^{n_{i-1}}\left|w_{i}^{0}\left(\tilde{C}_{j_{i}}\left(j_{i-1}\right)\right)\right|^{50}-\mathbb{E}\left[\left|w_{i}^{0}\left(\tilde{C}_{j_{i}}\left(j_{i-1}\right)\right)\right|^{50}\right]\right|,
\end{align*}
we have:
\[
\mathbb{P}\left(Z_{j_{i}}^{\left(i\right)}\geq\delta\right)\leq e\cdot\exp\left(-K\delta^{1/26}n_{i-1}^{1/52}\right).
\]
By the union bound,
\[
\mathbb{P}\left(Z^{\left(i\right)}\geq\delta\right)\leq\mathbb{P}\left(\max_{j_{i}\leq n_{i}}Z_{j_{i}}^{\left(i\right)}\geq\delta\right)\leq en_{i}\cdot\exp\left(-K\delta^{1/26}n_{i-1}^{1/52}\right),
\]
which is the desired claim.

Upon an application of the union bound, these probability bounds imply
the bound on the probability of the event $\interleave\tilde{W}\interleave_{0}=\interleave\mathbf{W}\interleave_{0}\leq\interleave W\interleave_{0}+\delta^{1/50}$.
The rest of the bounds are similarly proven.

\paragraph*{The excess bounds.}

Without loss of generality, assume $n_{i}\geq n_{i-1}$ for $2\leq i\leq L-1$.
Let us denote
\[
D_{j_{i}}^{\left(i\right)}=\frac{1}{n_{i-1}}\sum_{j_{i-1}=1}^{n_{i-1}}\mathbb{I}\left(\left|w_{i}^{0}\left(\tilde{C}_{j_{i}}\left(j_{i-1}\right)\right)\right|\geq B\right)-\mathbb{P}\left(\left|w_{i}^{0}\left(C_{i-1},C_{i}\right)\right|\geq B\right).
\]
Recall previously that $\left\{ \tilde{C}_{j_{i}}\left(j_{i-1}\right)\right\} _{j_{i-1}\in\left[n_{i-1}\right]}$
is $\eta_{i-1}$-independent. As such, by Theorem \ref{thm:iid-hilbert-higher-moment},
\[
\mathbb{E}\left[\left|D_{j_{i}}^{\left(i\right)}\right|^{m}\right]^{1/m}\leq Kmn_{i-1}^{-1/2}.
\]
This implies $\left|D_{j_{i}}^{\left(i\right)}\right|^{1/2}$ is $Kn_{i-1}^{-1/4}$-sub-Gaussian
and hence for any $\delta>0$,
\[
\mathbb{P}\left(\left|D_{j_{i}}^{\left(i\right)}\right|\geq\delta\right)\leq e\cdot\exp\left(-K\delta n_{i-1}^{1/2}\right).
\]
The union bound yields
\[
\mathbb{P}\left(\left|\frac{1}{n_{i}}\sum_{j_{i}=1}^{n_{i}}D_{j_{i}}^{\left(i\right)}\right|\geq\delta\right)\leq Kn_{i}\exp\left(-K\delta n_{i-1}^{1/2}\right).
\]
Note that this holds for any $B>0$. The rest of the bounds are similarly
proven.
\end{proof}
Similar to Lemma \ref{lem:bounds MF a priori}, one can prove the
following:
\begin{lem}
\label{lem:bounds NN a priori}Under Assumptions \ref{enu:Assump_lrSchedule}
and \ref{enu:Assump_backward}, for any $t\in[0,\infty)$,
\[
\interleave\tilde{W}\interleave_{t}\leq K^{\kappa_{L}}\left(1+t^{\kappa_{L}}\right)\left(1+\interleave\tilde{W}\interleave_{0}^{\kappa_{L}}\right),\qquad\interleave\mathbf{W}\interleave_{\left\lfloor t/\epsilon\right\rfloor }\leq K^{\kappa_{L}}\left(1+t^{\kappa_{L}}\right)\left(1+\interleave\mathbf{W}\interleave_{0}^{\kappa_{L}}\right),
\]
where $\kappa_{L}=K^{L}$ for some constant $K>1$ sufficiently large.
In particular, for any $i\in\left[L\right]$:
\begin{align*}
\bigg(\frac{1}{n_{i}}\sum_{j_{i}=1}^{n_{i}}\sup_{s\leq t}\underset{Z\sim{\cal P}}{{\rm ess\text{-}sup}}\left|\Delta_{i}^{\mathbf{H}}\left(Z,j_{i};\tilde{W}\left(s\right)\right)\right|^{50}\bigg)^{1/50} & \leq K^{\kappa_{L}}\left(1+t^{\kappa_{L}}\right)\left(1+\interleave\tilde{W}\interleave_{0}^{\kappa_{L}}\right),\\
\bigg(\frac{1}{n_{i}}\sum_{j_{i}=1}^{n_{i}}\sup_{s\leq t}\underset{Z\sim{\cal P}}{{\rm ess\text{-}sup}}\left|\Delta_{i}^{\mathbf{H}}\left(Z,j_{i};\mathbf{W}\left(\left\lfloor s/\epsilon\right\rfloor \right)\right)\right|^{50}\bigg)^{1/50} & \leq K^{\kappa_{L}}\left(1+t^{\kappa_{L}}\right)\left(1+\interleave\mathbf{W}\interleave_{0}^{\kappa_{L}}\right).
\end{align*}
Furthermore, by defining
\begin{align*}
\interleave W\interleave_{{\rm samp},t}=\max\Bigg( & \max_{2\leq i\leq L}\bigg(\frac{1}{n_{i}}\sum_{j_{i}=1}^{n_{i}}\mathbb{E}_{C_{i-1}}\left[\sup_{s\leq t}\left|w_{i}\left(s,C_{i-1},C_{i}\left(j_{i}\right)\right)\right|^{50}\right]\bigg)^{1/50},\\
 & \max_{2\leq i\leq L}\bigg(\frac{1}{n_{i-1}}\sum_{j_{i-1}=1}^{n_{i-1}}\mathbb{E}_{C_{i}}\left[\sup_{s\leq t}\left|w_{i}\left(s,C_{i-1}\left(j_{i-1}\right),C_{i}\right)\right|^{50}\right]\bigg)^{1/50},\\
 & \max_{2\leq i\leq L}\bigg(\frac{1}{n_{i-1}n_{i}}\sum_{j_{i-1}=1}^{n_{i-1}}\sum_{j_{i}=1}^{n_{i}}\sup_{s\leq t}\left|w_{i}\left(s,C_{i-1}\left(j_{i-1}\right),C_{i}\left(j_{i}\right)\right)\right|^{50}\bigg)^{1/50},\\
 & \max_{2\leq i\leq L}\bigg(\frac{1}{n_{i}}\sum_{j_{i}=1}^{n_{i}}\sup_{s\leq t}\left|b_{i}\left(s,C_{i}\left(j_{i}\right)\right)\right|^{50}\bigg)^{1/50}\\
 & \bigg(\frac{1}{n_{1}}\sum_{j_{1}=1}^{n_{1}}\sup_{s\leq t}\left|w_{1}\left(s,C_{1}\left(j_{1}\right)\right)\right|^{50}\bigg)^{1/50}\Bigg),
\end{align*}
we also have:
\begin{align*}
\interleave W\interleave_{{\rm samp},t} & \leq K^{\kappa_{L}}\left(1+t^{\kappa_{L}}\right)\left(1+\max\left(\interleave W\interleave_{0}^{\kappa_{L}},\interleave W\interleave_{{\rm samp},0}^{\kappa_{L}}\right)\right),\\
\bigg(\frac{1}{n_{i}}\sum_{j_{i}=1}^{n_{i}}\sup_{s\leq t}\underset{Z\sim{\cal P}}{{\rm ess\text{-}sup}}\left|\Delta_{i}^{H}\left(Z,C_{i}\left(j_{i}\right);W\left(s\right)\right)\right|^{50}\bigg)^{1/50} & \leq K^{\kappa_{L}}\left(1+t^{\kappa_{L}}\right)\left(1+\max\left(\interleave W\interleave_{0}^{\kappa_{L}},\interleave W\interleave_{{\rm samp},0}^{\kappa_{L}}\right)\right).
\end{align*}
\end{lem}

\begin{proof}
The proof follows the same argument as Lemma \ref{lem:bounds MF a priori}.
This is obvious for the statements concerning $\tilde{W}$ and $\mathbf{W}$.
To prove the latter claims that involve $\interleave W\interleave_{{\rm samp},t}$,
the argument follows similarly. In particular, let us denote
\begin{align*}
\interleave w_{i}\interleave_{{\rm right},t} & =\frac{1}{n_{i}}\sum_{j_{i}=1}^{n_{i}}\mathbb{E}_{C_{i-1}}\left[\sup_{s\leq t}\left|w_{i}\left(s,C_{i-1},C_{i}\left(j_{i}\right)\right)\right|^{50}\right]\bigg)^{1/50},\qquad2\leq i\leq L,\\
\interleave w_{i}\interleave_{{\rm left},t} & =\bigg(\frac{1}{n_{i-1}}\sum_{j_{i-1}=1}^{n_{i-1}}\mathbb{E}_{C_{i}}\left[\sup_{s\leq t}\left|w_{i}\left(s,C_{i-1}\left(j_{i-1}\right),C_{i}\right)\right|^{50}\right]\bigg)^{1/50},\qquad2\leq i\leq L,\\
\interleave w_{i}\interleave_{{\rm cen},t} & =\bigg(\frac{1}{n_{i-1}n_{i}}\sum_{j_{i-1}=1}^{n_{i-1}}\sum_{j_{i}=1}^{n_{i}}\sup_{s\leq t}\left|w_{i}\left(s,C_{i-1}\left(j_{i-1}\right),C_{i}\left(j_{i}\right)\right)\right|^{50}\bigg)^{1/50},\qquad2\leq i\leq L,\\
\interleave b_{i}\interleave_{{\rm samp},t} & =\bigg(\frac{1}{n_{i}}\sum_{j_{i}=1}^{n_{i}}\sup_{s\leq t}\left|b_{i}\left(s,C_{i}\left(j_{i}\right)\right)\right|^{50}\bigg)^{1/50},\qquad2\leq i\leq L,\\
\interleave w_{1}\interleave_{{\rm samp},t} & =\bigg(\frac{1}{n_{1}}\sum_{j_{1}=1}^{n_{1}}\sup_{s\leq t}\left|w_{1}\left(s,C_{1}\left(j_{1}\right)\right)\right|^{50}\bigg)^{1/50},\\
\interleave\Delta_{i}^{H}\interleave_{{\rm samp},t} & =\bigg(\frac{1}{n_{i}}\sum_{j_{i}=1}^{n_{i}}\sup_{s\leq t}\underset{Z\sim{\cal P}}{{\rm ess\text{-}sup}}\left|\Delta_{i}^{H}\left(Z,C_{i}\left(j_{i}\right);W\left(s\right)\right)\right|^{50}\bigg)^{1/50},\qquad1\leq i\leq L,\\
\interleave\Delta_{i}^{H}\interleave_{t} & =\mathbb{E}\left[\frac{1}{n_{i}}\sum_{j_{i}=1}^{n_{i}}\sup_{s\leq t}\underset{Z\sim{\cal P}}{{\rm ess\text{-}sup}}\left|\Delta_{i}^{H}\left(Z,C_{i};W\left(s\right)\right)\right|^{50}\right]^{1/50},\qquad1\leq i\leq L.
\end{align*}
Then similar to Lemma \ref{lem:bounds MF a priori}, we obtain for
$2\leq i\leq L$:
\begin{align*}
\interleave\Delta_{L}^{H}\interleave_{{\rm samp},t} & \leq K,\\
\interleave\Delta_{i-1}^{H}\interleave_{{\rm samp},t} & \leq K\left(1+\interleave\Delta_{i}^{H}\interleave_{t}\right)\left(1+\interleave w_{i}\interleave_{{\rm left},t}+\interleave b_{i}\interleave_{t}\right),\\
\interleave w_{i}\interleave_{{\rm left},t} & \leq\interleave w_{i}\interleave_{{\rm left},0}+K\left(1+\interleave\Delta_{i}^{H}\interleave_{t}\right)t,\\
\interleave w_{i}\interleave_{{\rm right},t} & \leq\interleave w_{i}\interleave_{{\rm right},0}+K\left(1+\interleave\Delta_{i}^{H}\interleave_{{\rm samp},t}\right)t,\\
\interleave w_{i}\interleave_{{\rm cen},t} & \leq\interleave w_{i}\interleave_{{\rm cen},0}+K\left(1+\interleave\Delta_{i}^{H}\interleave_{{\rm samp},t}\right)t,\\
\interleave b_{i}\interleave_{{\rm samp},t} & \leq\interleave b_{i}\interleave_{{\rm samp},0}+K\left(1+\interleave\Delta_{i}^{H}\interleave_{{\rm samp},t}\right)t,\\
\interleave w_{1}\interleave_{{\rm samp},t} & \leq\interleave w_{1}\interleave_{{\rm samp},0}+K\left(1+\interleave\Delta_{1}^{H}\interleave_{{\rm samp},t}\right)t.
\end{align*}
Note that 
\[
\interleave W\interleave_{{\rm samp},t}=\max\left(\max_{2\leq i\leq L}\interleave w_{i}\interleave_{{\rm left},t},\;\max_{2\leq i\leq L}\interleave w_{i}\interleave_{{\rm right},t},\;\max_{2\leq i\leq L}\interleave w_{i}\interleave_{{\rm cen},t},\;\max_{2\leq i\leq L}\interleave b_{i}\interleave_{{\rm samp},t},\;\interleave w_{1}\interleave_{{\rm samp},t}\right).
\]
Together with the bound on $\interleave\Delta_{i}^{H}\interleave_{t}$
given by Lemma \ref{lem:bounds MF a priori}, one can derive the claims.
\end{proof}

\subsubsection{Proof of Proposition \ref{prop:particle coupling - bounded}}
\begin{proof}[Proof of Proposition \ref{prop:particle coupling - bounded}]
In the following, let $K_{t}$ denote an immaterial positive constant
that takes the form
\[
K_{t}=K^{\kappa_{L}}\left(1+t^{\kappa_{L}}\right),
\]
where $\kappa_{L}=K^{L}$, such that $K_{t}\geq1$ and $K_{t}\leq K_{T}$
for all $t\leq T$. We note that the terminal time $T$, the constant
$K_{t}$, as well as the usual immaterial constant $K$, do not depend
on $B$. We start with some preliminary facts:

\paragraph{Fact 1: moment bounds.}

We first note that at initialization, $\mathscr{D}_{0}\left(W,\tilde{W}\right)=0$
and $\interleave W\interleave_{0}\leq K$. By Assumption \ref{assump:init}
and Lemma \ref{lem:bounds MF a priori}, $\interleave W\interleave_{T}\leq K_{T}$.
Furthermore, by Lemma \ref{lem:initialization_compare}, with probability
at least $1-KLn_{\max}\exp\left(-Kn_{\min}^{1/52}\right)$, we have
$\interleave\tilde{W}\interleave_{0}\leq K$ and $\interleave W\interleave_{{\rm samp},0}\leq K$,
recalling the definition of $\interleave W\interleave_{{\rm samp},t}$
from the statement of Lemma \ref{lem:bounds NN a priori}. Let this
event be denoted by ${\cal E}$. Unless noticed otherwise, we shall
place most of the contexts of our proof upon ${\cal E}$. By Lemma
\ref{lem:bounds NN a priori}, one deduces that
\begin{align*}
\interleave\tilde{W}\interleave_{T},\;\interleave W\interleave_{{\rm samp},T} & \leq K_{T},\\
\bigg(\frac{1}{n_{i}}\sum_{j_{i}=1}^{n_{i}}\sup_{s\leq t}\underset{Z\sim{\cal P}}{{\rm ess\text{-}sup}}\left|\Delta_{i}^{\mathbf{H}}\left(Z,j_{i};\tilde{W}\left(s\right)\right)\right|^{50}\bigg)^{1/50} & \leq K_{T},\\
\bigg(\frac{1}{n_{i}}\sum_{j_{i}=1}^{n_{i}}\sup_{s\leq t}\underset{Z\sim{\cal P}}{{\rm ess\text{-}sup}}\left|\Delta_{i}^{H}\left(Z,C_{i}\left(j_{i}\right);W\left(s\right)\right)\right|^{50}\bigg)^{1/50} & \leq K_{T}
\end{align*}
on the event ${\cal E}$. We also remark that the fact $\interleave W\interleave_{T}\leq K_{T}$
holds irrespective of ${\cal E}$.

\paragraph*{Fact 2: maximal bounds for $W$.}

We note that the assumption ${\rm ess\text{-}sup}\mathsf{max}_{0}^{w}\left(W\right)$
and ${\rm ess\text{-}sup}\mathsf{max}_{0}^{b}\left(W\right)$ has
an interesting consequence:
\begin{align*}
{\rm ess\text{-}sup}\mathsf{max}_{T}^{w}\left(W\right) & \leq K_{T}\left(1+B\right),\\
{\rm ess\text{-}sup}\mathsf{max}_{T}^{b}\left(W\right) & \leq K_{T}\left(1+B\right),\\
{\rm ess\text{-}sup}\max_{1\leq i\leq L}\sup_{t\leq T}\left|\Delta_{i}^{H}\left(Z,C_{i};W\left(t\right)\right)\right| & \leq K_{T}\left(1+B\right).
\end{align*}
 We note that this claim holds irrespective of the event ${\cal E}$
from Fact 1. Following this claim, it is immediate that almost surely,
\begin{align*}
\underset{C_{i}}{{\rm ess\text{-}sup}}\max_{2\leq i\leq L}\max_{j_{i-1}\in\left[n_{i-1}\right]}\left|w_{i}\left(t,C_{i-1}\left(j_{i-1}\right),C_{i}\right)\right| & \leq K_{T}\left(1+B\right),\\
\underset{C_{i-1}}{{\rm ess\text{-}sup}}\max_{2\leq i\leq L}\max_{j_{i}\in\left[n_{i}\right]}\left|w_{i}\left(t,C_{i-1},C_{i}\left(j_{i}\right)\right)\right| & \leq K_{T}\left(1+B\right),\\
\max_{2\leq i\leq L}\max_{j_{i-1}\in\left[n_{i-1}\right],\;j_{i}\in\left[n_{i}\right]}\left|w_{i}\left(t,C_{i-1}\left(j_{i-1}\right),C_{i}\left(j_{i}\right)\right)\right| & \leq K_{T}\left(1+B\right),\\
\max_{2\leq i\leq L}\max_{j_{i}\in\left[n_{i}\right]}\left|b_{i}\left(t,C_{i}\left(j_{i}\right)\right)\right| & \leq K_{T}\left(1+B\right),\\
\underset{Z\sim{\cal P}}{{\rm ess\text{-}sup}}\max_{1\leq i\leq L}\max_{j_{i}\in\left[n_{i}\right]}\sup_{t\leq T}\left|\Delta_{i}^{H}\left(Z,C_{i}\left(j_{i}\right);W\left(t\right)\right)\right| & \leq K_{T}\left(1+B\right),
\end{align*}
since $C_{i}\left(j_{i}\right)$ is a copy of $C_{i}$. Now we prove
the claim. First consider $\mathsf{max}_{t}^{w}\left(W\right)$. By
Assumption \ref{enu:Assump_backward}, for ${\cal P}$-almost every
$z$,
\[
\sup_{t\geq0}\sup_{c_{L-1}\in\Omega_{L-1}}\left|\Delta_{L}^{w}\left(z,c_{L-1},1;W\left(t\right)\right)\right|\leq K\left(1+\sup_{t\geq0}\left|\Delta_{L}^{H}\left(z,1;W\left(t\right)\right)\right|\right)\leq K,
\]
which implies, by Assumption \ref{enu:Assump_lrSchedule}, that
\[
{\rm ess\text{-}sup}\sup_{t\leq T}\left|w_{L}\left(t,C_{L-1},1\right)\right|\leq{\rm ess\text{-}sup}\left|w_{L}^{0}\left(C_{L-1},1\right)\right|+KT\leq K_{T}\left(1+B\right).
\]
Next assuming that ${\rm ess\text{-}sup}\sup_{t\leq T}\left|w_{i}\left(t,C_{i-1},C_{i}\right)\right|\leq K_{T}\left(1+B\right)$
for a given $i\geq2$, by Assumption \ref{enu:Assump_backward}, we
have for ${\cal P}$-almost every $z$ and all $t\leq T$,
\begin{align*}
\left|\Delta_{i-1}^{H}\left(z,C_{i-1};W\left(t\right)\right)\right| & \leq K\mathbb{E}_{C_{i}}\left[\left(1+\left|\Delta_{i}^{H}\left(z,C_{i};W\left(t\right)\right)\right|\right)\left(1+\left|w_{i}\left(t,C_{i-1},C_{i}\right)\right|+\left|b_{i}\left(t,C_{i}\right)\right|\right)\right]\\
 & \leq K\mathbb{E}_{C_{i}}\left[\left(1+\left|\Delta_{i}^{H}\left(z,C_{i};W\left(t\right)\right)\right|\right)\left(K_{T}\left(1+B\right)+\left|b_{i}\left(t,C_{i}\right)\right|\right)\right]\\
 & \leq K\left[\left(1+\mathbb{E}\left[\left|\Delta_{i}^{H}\left(z,C_{i};W\left(t\right)\right)\right|^{2}\right]^{1/2}\right)\left(K_{T}\left(1+B\right)+\mathbb{E}\left[\left|b_{i}\left(t,C_{i}\right)\right|^{2}\right]^{1/2}\right)\right]\\
 & \leq K_{T}\left(1+B\right),
\end{align*}
where the last step follows from the fact $\interleave W\interleave_{T}\leq K_{T}$
and Lemma \ref{lem:bounds MF a priori}. Again by Assumption \ref{enu:Assump_backward},
we then obtain:
\[
\left|\Delta_{i-1}^{w}\left(z,C_{i-1},C_{i};W\left(t\right)\right)\right|\leq K_{T}\left(1+B\right),
\]
which implies, by Assumption \ref{enu:Assump_lrSchedule}, that
\[
{\rm ess\text{-}sup}\sup_{t\leq T}\left|w_{i-1}\left(t,C_{i-1},C_{i}\right)\right|\leq{\rm ess\text{-}sup}\left|w_{i-1}^{0}\left(C_{i-1},C_{i}\right)\right|+K_{T}\left(1+B\right)T\leq K_{T}\left(1+B\right).
\]
This completes the induction argument to show that ${\rm ess\text{-}sup}\mathsf{max}_{t}^{w}\left(W\right)\leq K_{T}\left(1+B\right)$.
We have also showed that
\begin{align*}
{\rm ess\text{-}sup}\max_{1\leq i\leq L}\sup_{t\leq T}\left|\Delta_{i}^{H}\left(Z,C_{i};W\left(t\right)\right)\right| & \leq K_{T}\left(1+B\right).
\end{align*}
We thus obtain from Assumption \ref{enu:Assump_backward}:
\[
\left|\Delta_{i}^{b}\left(z,C_{i};W\left(t\right)\right)\right|\leq K_{T}\left(1+B\right),
\]
for $2\leq i\leq L$ and ${\cal P}$-almost every $z$. This implies:
\[
{\rm ess\text{-}sup}\sup_{t\leq T}\left|b_{i}\left(t,C_{i}\right)\right|\leq{\rm ess\text{-}sup}\left|b_{i}^{0}\left(C_{i}\right)\right|+K_{T}\left(1+B\right)T\leq K_{T}\left(1+B\right),
\]
which shows ${\rm ess\text{-}sup}\mathsf{max}_{t}^{b}\left(W\right)\leq K_{T}\left(1+B\right)$,
as claimed.

\paragraph*{Fact 3: maximal bounds for $\tilde{W}$.}

We also have on the event ${\cal E}$, almost surely,
\begin{align*}
\max_{2\leq i\leq L}\max_{j_{i-1}\in\left[n_{i-1}\right],\;j_{i}\in\left[n_{i}\right]}\sup_{t\leq T}\left|\tilde{w}_{i}\left(t,j_{i-1},j_{i}\right)\right| & \leq K_{T}\left(1+B\right),\\
\max_{2\leq i\leq L}\max_{j_{i}\in\left[n_{i}\right]}\sup_{t\leq T}\left|\tilde{b}_{i}\left(t,j_{i}\right)\right| & \leq K_{T}\left(1+B\right).
\end{align*}
A proof of this fact is similar to the argument for Fact 2. We note
that this argument requires the use of the fact $\interleave\tilde{W}\interleave_{T}\leq K_{T}$,
which holds on the event ${\cal E}$, and the application of Lemma
\ref{lem:bounds MF a priori}. The latter application holds by noticing
that $\tilde{W}$ can be viewed as a collection of MF parameter whose
neuronal ensemble $\left(\Omega_{{\rm new}},P_{{\rm new}}\right)=\prod_{i=1}^{L}\left(\Omega_{i,{\rm new}},P_{i,{\rm new}}\right)$
takes the following specific form: $\Omega_{i,{\rm new}}=\left\{ C_{i}\left(1\right),...,C_{i}\left(n_{i}\right)\right\} $
and $P_{i,{\rm new}}$ is a uniform probability measure on $\Omega_{i,{\rm new}}$.

We now decompose the proof into several steps.

\paragraph*{Step 1 - Main proof.}

Let us first define some quantities that represent the difference
between $W$ and $\tilde{W}$:\footnote{To simplify our notation, here and in the following argument, we denote
by $\partial_{1}$ the partial derivative with respect to the first
variable, so for example, $\partial_{1}w_{i}(t,C_{i-1}(j_{i-1}),C_{i}(j_{i}))=\frac{\partial}{\partial t}w_{i}(t,C_{i-1}(j_{i-1}),C_{i}(j_{i}))$.}
\begin{align*}
D_{i}^{w}\left(t\right) & =\bigg(\frac{1}{n_{i-1}n_{i}}\sum_{j_{i-1}=1}^{n_{i-1}}\sum_{j_{i}=1}^{n_{i}}\left|\partial_{1}\tilde{w}_{i}\left(t,j_{i-1},j_{i}\right)-\partial_{1}w_{i}\left(t,C_{i-1}\left(j_{i-1}\right),C_{i}\left(j_{i}\right)\right)\right|^{2}\bigg)^{1/2},\quad2\leq i\leq L,\\
D_{i}^{b}\left(t\right) & =\bigg(\frac{1}{n_{i}}\sum_{j_{i}=1}^{n_{i}}\left|\partial_{1}\tilde{b}_{i}\left(t,j_{i}\right)-\partial_{1}b_{i}\left(t,C_{i}\left(j_{i}\right)\right)\right|^{2}\bigg)^{1/2},\quad2\leq i\leq L,\\
D_{1}^{w}\left(t\right) & =\bigg(\frac{1}{n_{1}}\sum_{j_{1}=1}^{n_{1}}\left|\partial_{1}\tilde{w}_{1}\left(t,j_{1}\right)-\partial_{1}w_{1}\left(t,C_{1}\left(j_{1}\right)\right)\right|^{2}\bigg)^{1/2}.
\end{align*}
We are also interested in the following quantities that represent
the smoothness in the time evolution of $W\left(t\right)$ and $\tilde{W}\left(t\right)$:
\begin{align*}
A_{i}^{w}\left(t,\zeta\right) & =\bigg(\frac{1}{n_{i-1}n_{i}}\sum_{j_{i-1}=1}^{n_{i-1}}\sum_{j_{i}=1}^{n_{i}}\left|\partial_{1}w_{i}\left(t+\zeta,C_{i-1}\left(j_{i-1}\right),C_{i}\left(j_{i}\right)\right)-\partial_{1}w_{i}\left(t,C_{i-1}\left(j_{i-1}\right),C_{i}\left(j_{i}\right)\right)\right|^{2}\bigg)^{1/2},\quad2\leq i\leq L,\\
\tilde{A}_{i}^{w}\left(t,\zeta\right) & =\bigg(\frac{1}{n_{i-1}n_{i}}\sum_{j_{i-1}=1}^{n_{i-1}}\sum_{j_{i}=1}^{n_{i}}\left|\partial_{1}\tilde{w}_{i}\left(t+\zeta,j_{i-1},j_{i}\right)-\partial_{1}\tilde{w}_{i}\left(t,j_{i-1},j_{i}\right)\right|^{2}\bigg)^{1/2},\quad2\leq i\leq L,\\
A_{i}^{b}\left(t,\zeta\right) & =\bigg(\frac{1}{n_{i}}\sum_{j_{i}=1}^{n_{i}}\left|\partial_{1}b_{i}\left(t+\zeta,C_{i}\left(j_{i}\right)\right)-\partial_{1}b_{i}\left(t,C_{i}\left(j_{i}\right)\right)\right|^{2}\bigg)^{1/2},\quad2\leq i\leq L,\\
\tilde{A}_{i}^{b}\left(t,\zeta\right) & =\bigg(\frac{1}{n_{i}}\sum_{j_{i}=1}^{n_{i}}\left|\partial_{1}\tilde{b}_{i}\left(t+\zeta,j_{i}\right)-\partial_{1}\tilde{b}_{i}\left(t,j_{i}\right)\right|^{2}\bigg)^{1/2},\quad2\leq i\leq L,\\
A_{1}^{w}\left(t,\zeta\right) & =\bigg(\frac{1}{n_{1}}\sum_{j_{1}=1}^{n_{1}}\left|\partial_{1}w_{1}\left(t+\zeta,C_{1}\left(j_{1}\right)\right)-\partial_{1}w_{1}\left(t,C_{1}\left(j_{1}\right)\right)\right|^{2}\bigg)^{1/2},\\
\tilde{A}_{1}^{w}\left(t,\zeta\right) & =\bigg(\frac{1}{n_{1}}\sum_{j_{1}=1}^{n_{1}}\left|\partial_{1}\tilde{w}_{1}\left(t+\zeta,j_{1}\right)-\partial_{1}\tilde{w}_{1}\left(t,j_{1}\right)\right|^{2}\bigg)^{1/2}.
\end{align*}
These quantities give a bound on $\mathscr{D}_{t}\left(W,\tilde{W}\right)$:
\begin{align*}
\mathscr{D}_{t}\left(W,\tilde{W}\right) & \leq K\int_{0}^{t}\max_{1\leq i\leq L}D_{i}^{w}\left(\left\lfloor s/\zeta\right\rfloor \zeta\right)ds+K\int_{0}^{t}\max_{2\leq i\leq L}D_{i}^{b}\left(\left\lfloor s/\zeta\right\rfloor \zeta\right)ds\\
 & \quad+Kt\sup_{s\leq T-\zeta}\sup_{0\leq\zeta'\leq\zeta}\max_{i}\left(A_{i}^{w}\left(s,\zeta'\right),A_{i}^{b}\left(s,\zeta'\right),\tilde{A}_{i}^{w}\left(s,\zeta'\right),\tilde{A}_{i}^{b}\left(s,\zeta'\right)\right),
\end{align*}
where we have used the fact $\mathscr{D}_{0}\left(W,\tilde{W}\right)=0$.
The next task is to bound the terms inside the integral.

To find bounds on $D_{i}^{w}\left(t\right)$, we introduce the quantities
for $1\leq i\leq L$:
\begin{align*}
G_{i}\left(t\right) & =\bigg(\frac{1}{n_{i}}\sum_{j_{i}=1}^{n_{i}}\mathbb{E}_{Z}\left[\left|\Delta_{i}^{{\bf H}}\left(Z,j_{i};\tilde{W}\left(t\right)\right)-\Delta_{i}^{H}\left(Z,C_{i}\left(j_{i}\right);W\left(t\right)\right)\right|\right]^{2}\bigg)^{1/2},\\
F_{i}\left(t\right) & =\bigg(\frac{1}{n_{i}}\sum_{j_{i}=1}^{n_{i}}\mathbb{E}_{Z}\left[\left|\mathbf{H}_{i}\left(X,j_{i};\tilde{W}\left(t\right)\right)-H_{i}\left(X,C_{i}\left(j_{i}\right);W\left(t\right)\right)\right|\right]^{2}\bigg)^{1/2}.
\end{align*}
We specify their connection in the following. By Assumptions \ref{enu:Assump_lrSchedule}
and \ref{enu:Assump_backward}, for $i\geq2$,
\[
D_{i}^{w}\left(t\right)\leq K\left(D_{i}^{w,1}\left(t\right)+G_{i}\left(t\right)+\mathscr{D}_{t}\left(W,\tilde{W}\right)+F_{i}\left(t\right)\right),
\]
in which
\begin{align*}
D_{i}^{w,1}\left(t\right) & =\bigg(\frac{1}{n_{i}}\sum_{j_{i}=1}^{n_{i}}\underset{Z\sim{\cal P}}{{\rm ess\text{-}sup}}\left(1+\left|\Delta_{i}^{\mathbf{H}}\left(Z,j_{i};\tilde{W}\left(t\right)\right)\right|^{2}+\left|\Delta_{i}^{H}\left(Z,C_{i}\left(j_{i}\right);W\left(t\right)\right)\right|^{2}\right)\\
 & \qquad\times\frac{1}{n_{i-1}}\sum_{j_{i-1}=1}^{n_{i-1}}\mathbb{E}_{Z}\left[\left|\mathbf{H}_{i-1}\left(X,j_{i-1};\tilde{W}\left(t\right)\right)-H_{i-1}\left(X,C_{i-1}\left(j_{i-1}\right);W\left(t\right)\right)\right|\right]^{2}\bigg)^{1/2}.
\end{align*}
By Lemma \ref{lem:bounds NN a priori}, on the event ${\cal E}$,
\[
D_{i}^{w,1}\left(t\right)\leq K_{T}F_{i-1}\left(t\right).
\]
As such, on the event ${\cal E}$,
\[
D_{i}^{w}\left(t\right)\leq K_{T}\left(F_{i-1}\left(t\right)+G_{i}\left(t\right)+\mathscr{D}_{t}\left(W,\tilde{W}\right)+F_{i}\left(t\right)\right).
\]
Similarly, we also have:
\begin{align*}
D_{1}^{w}\left(t\right) & \leq K\left(G_{1}\left(t\right)+\mathscr{D}_{t}\left(W,\tilde{W}\right)\right).
\end{align*}
Together with the previously derived bound on $\mathscr{D}_{t}\left(W,\tilde{W}\right)$,
we obtain on the event ${\cal E}$:
\begin{align*}
\mathscr{D}_{t}\left(W,\tilde{W}\right) & \leq K_{T}\int_{0}^{t}\mathscr{D}_{s}\left(W,\tilde{W}\right)ds+K_{T}\int_{0}^{t}\max_{i}\left(G_{i}\left(\left\lfloor s/\zeta\right\rfloor \zeta\right)+F_{i}\left(\left\lfloor s/\zeta\right\rfloor \zeta\right)\right)ds\\
 & \quad+K\int_{0}^{t}\max_{2\leq i\leq L}D_{i}^{b}\left(\left\lfloor s/\zeta\right\rfloor \zeta\right)ds\\
 & \quad+Kt\sup_{s\leq T-\zeta}\sup_{0\leq\zeta'\leq\zeta}\max_{i}\left(A_{i}^{w}\left(s,\zeta'\right),A_{i}^{b}\left(s,\zeta'\right),\tilde{A}_{i}^{w}\left(s,\zeta'\right),\tilde{A}_{i}^{b}\left(s,\zeta'\right)\right),
\end{align*}
which holds for all $t\leq T$.

Next we make the following claims:
\begin{itemize}
\item \textbf{Claim 1:} For any $\zeta\in\left[0,T\right]$, on the event
${\cal E}$, almost surely,
\[
\sup_{t\leq T-\zeta}\sup_{0\leq\zeta'\leq\zeta}\max_{i}\left(A_{i}^{w}\left(t,\zeta'\right),A_{i}^{b}\left(t,\zeta'\right),\tilde{A}_{i}^{w}\left(t,\zeta'\right),\tilde{A}_{i}^{b}\left(t,\zeta'\right)\right)\leq K_{T}\left(1+B\right)\zeta.
\]
\item \textbf{Claim 2:} For a sequence $\left\{ \gamma_{j}>0,\;j=2,...,L\right\} $
and $t\leq T$, let ${\cal E}_{t,i}^{\mathbf{H}}$ denote the event
in which for all $k\in\left\{ 1,2,...,i\right\} $, 
\begin{align*}
F_{k}\left(t\right) & \leq K_{T}^{k}\bigg(\mathscr{D}_{t}\left(W,\tilde{W}\right)+\left(1+B\right)\sum_{j=1}^{k-1}\gamma_{j+1}\bigg).
\end{align*}
(The summation $\sum_{j=1}^{k-1}$ equals $0$ if $k=1$.) We claim
that for each $i=1,...,L$:
\[
\mathbb{P}\left({\cal E}_{t,i}^{\mathbf{H}};{\cal E}\right)\geq1-\sum_{j=1}^{i-1}\frac{n_{j+1}}{\gamma_{j+1}}\exp\left(-n_{j}\gamma_{j+1}^{2}/K_{T}\right).
\]
\item \textbf{Claim 3:} For a sequence $\left\{ \beta_{j}>0,\;j=1,...,L-2\right\} $
and $t\leq T$, let ${\cal E}_{t,i}^{\Delta}$ denote the event that
for all $k\in\left\{ i,i+1,...,L\right\} $, 
\[
G_{k}\left(t\right)\leq K_{T}^{2L-k+1}\bigg(\left(1+B\right)\mathscr{D}_{t}\left(W,\tilde{W}\right)+\left(1+B^{2}\right)\bigg(\delta_{L}^{\Delta}+\sum_{j=k}^{L-2}\beta_{j}\bigg)\bigg),
\]
where $\delta_{L}^{\Delta}=\sum_{j=1}^{L-1}\gamma_{j+1}$. (The summation
$\sum_{j=k}^{L-2}$ equals $0$ if $k\geq L-1$.) We claim that for
each $i=1,...,L$: 
\[
\mathbb{P}\left({\cal E}_{t,L}^{\mathbf{H}}\cap{\cal E}_{t,i}^{\Delta};{\cal E}\right)\geq\mathbb{P}\left({\cal E}_{t,L}^{\mathbf{H}};{\cal E}\right)-\sum_{j=i}^{L-2}\frac{n_{j}}{\beta_{j}}\exp\left(-n_{j+1}\beta_{j}^{2}/K_{T}\right).
\]
\item \textbf{Claim 4:} For $t\leq T$, let ${\cal E}_{t}^{b}$ denote the
event that for all $k\in\left\{ 2,...,L\right\} $,
\[
D_{k}^{b}\left(t\right)\leq K_{T}\left(\left(1+B\right)\mathscr{D}_{t}\left(W,\tilde{W}\right)+\left(1+B^{2}\right)\delta_{L}^{b}\right),
\]
where $\delta_{L}^{b}=\delta_{L}^{\Delta}+\sum_{j=1}^{L-2}\beta_{j}$.
We claim that 
\[
\mathbb{P}\left({\cal E}_{t,L}^{\mathbf{H}}\cap{\cal E}_{t,1}^{\Delta}\cap{\cal E}_{t}^{b};{\cal E}\right)\geq\mathbb{P}\left({\cal E}_{t,L}^{\mathbf{H}}\cap{\cal E}_{t,1}^{\Delta};{\cal E}\right)-\sum_{j=1}^{i-1}\frac{n_{j+1}}{\gamma_{j+1}}\exp\left(-n_{j}\gamma_{j+1}^{2}/K_{T}\right).
\]
\end{itemize}
Let us assume these claims. Using the bounds on $\mathbb{P}\left({\cal E}_{t,L}^{\mathbf{H}};{\cal E}\right)$,
$\mathbb{P}\left({\cal E}_{t,L}^{\mathbf{H}}\cap{\cal E}_{t,1}^{\Delta};{\cal E}\right)$
and $\mathbb{P}\left({\cal E}_{t,L}^{\mathbf{H}}\cap{\cal E}_{t,1}^{\Delta}\cap{\cal E}_{t}^{b};{\cal E}\right)$,
combining the previous bound, applying the union bound over $t\in\left\{ 0,\zeta,2\zeta,...,\left\lfloor T/\zeta\right\rfloor \zeta\right\} $
and recalling $\mathscr{D}_{0}\left(W,\tilde{W}\right)=0$, we then
get: 
\[
\mathscr{D}_{t}\left(W,\tilde{W}\right)\leq K_{T}\int_{0}^{t}\left[\left(1+B\right)\mathscr{D}_{s}\left(W,\tilde{W}\right)+\left(1+B^{2}\right)\bigg(\sum_{j=1}^{L-1}\gamma_{j+1}+\sum_{j=1}^{L-2}\beta_{j}\bigg)+\left(1+B\right)\zeta\right]ds,
\]
for all $t\leq T$, with probability at least 
\[
1-\frac{T}{\zeta}\left(\sum_{j=1}^{L-1}\frac{n_{j+1}}{\gamma_{j+1}}\exp\left(-n_{j}\gamma_{j+1}^{2}/K_{T}\right)+\sum_{j=1}^{L-2}\frac{n_{j}}{\beta_{j}}\exp\left(-n_{j+1}\beta_{j}^{2}/K_{T}\right)\right)-KLn_{\max}\exp\left(-Kn_{\min}^{1/52}\right),
\]
for any $\zeta\in\left[0,T\right]$. By Gronwall's lemma, the above
implies that for all $t\leq T$,
\begin{align*}
\mathscr{D}_{t}\left(W,\tilde{W}\right) & \leq K_{T}\left[\left(1+B^{2}\right)\bigg(\sum_{j=1}^{L-1}\gamma_{j+1}+\sum_{j=1}^{L-2}\beta_{j}\bigg)+\left(1+B\right)\zeta\right]\exp\left(K_{T}\left(1+B\right)T\right)\\
 & \leq K_{T}\bigg(\sum_{j=1}^{L-1}\gamma_{j+1}+\sum_{j=1}^{L-2}\beta_{j}+\zeta\bigg)\exp\left(K_{T}\left(1+B\right)\right).
\end{align*}
The proposition statement is then easily obtained by choosing 
\begin{align*}
\gamma_{j+1} & =\sqrt{\frac{1}{K_{T}n_{j}}\log\left(\frac{2TLn_{\max}^{2}}{\delta}+e\right)},\quad j=1,...,L-1,\\
\beta_{j} & =\sqrt{\frac{1}{K_{T}n_{j+1}}\log\left(\frac{2TLn_{\max}^{2}}{\delta}+e\right)},\quad j=1,...,L-2,\\
\zeta & =1/\sqrt{n_{\max}}.
\end{align*}
We are left with verifying the claims.

\paragraph*{Step 2 - Claim 1.}

We first note that by Assumptions \ref{enu:Assump_lrSchedule} and
\ref{enu:Assump_backward}, Lemma \ref{lem:bounds NN a priori}, and
the fact $\interleave W\interleave_{0},\;\interleave W\interleave_{{\rm samp},0}\leq K$
on the event ${\cal E}$:
\begin{align*}
 & \bigg(\frac{1}{n_{i-1}n_{i}}\sum_{j_{i-1}=1}^{n_{i-1}}\sum_{j_{i}=1}^{n_{i}}\sup_{s\leq t}\left|\partial_{1}w_{i}\left(s,C_{i-1}\left(j_{i-1}\right),C_{i}\left(j_{i}\right)\right)\right|^{50}\bigg)^{1/50}\\
 & \leq K+K\bigg(\frac{1}{n_{i}}\sum_{j_{i}=1}^{n_{i}}\sup_{s\leq t}\underset{Z\sim{\cal P}}{{\rm ess\text{-}sup}}\left|\Delta_{i}^{H}\left(Z,C_{i}\left(j_{i}\right);W\left(s\right)\right)\right|^{50}\bigg)^{1/50}\leq K_{T},
\end{align*}
for any $t\leq T$. Therefore,
\[
\bigg(\frac{1}{n_{i-1}n_{i}}\sum_{j_{i-1}=1}^{n_{i-1}}\sum_{j_{i}=1}^{n_{i}}\sup_{s\leq T-\zeta}\sup_{0\leq\zeta'\leq\zeta}\left|w_{i}\left(s+\zeta',C_{i-1}\left(j_{i-1}\right),C_{i}\left(j_{i}\right)\right)-w_{i}\left(s,C_{i-1}\left(j_{i-1}\right),C_{i}\left(j_{i}\right)\right)\right|^{2}\bigg)^{1/2}\leq K_{T}\zeta.
\]
We also have similarly that on the event ${\cal E}$,
\begin{align*}
\bigg(\frac{1}{n_{i}}\sum_{j_{i}=1}^{n_{i}}\mathbb{E}_{C_{i-1}}\left[\sup_{s\leq T-\zeta}\sup_{0\leq\zeta'\leq\zeta}\left|w_{i}\left(s+\zeta',C_{i-1},C_{i}\left(j_{i}\right)\right)-w_{i}\left(s,C_{i-1},C_{i}\left(j_{i}\right)\right)\right|^{2}\right]\bigg)^{1/2} & \leq K_{T}\zeta,\\
\bigg(\frac{1}{n_{i-1}}\sum_{j_{i-1}=1}^{n_{i-1}}\mathbb{E}_{C_{i}}\left[\sup_{s\leq T-\zeta}\sup_{0\leq\zeta'\leq\zeta}\left|w_{i}\left(s+\zeta',C_{i-1}\left(j_{i-1}\right),C_{i}\right)-w_{i}\left(s,C_{i-1}\left(j_{i-1}\right),C_{i}\right)\right|^{2}\right]\bigg)^{1/2} & \leq K_{T}\zeta,\\
\mathbb{E}\left[\sup_{s\leq T-\zeta}\sup_{0\leq\zeta'\leq\zeta}\left|w_{i}\left(s+\zeta',C_{i-1},C_{i}\right)-w_{i}\left(s,C_{i-1},C_{i}\right)\right|^{2}\right]^{1/2} & \leq K_{T}\zeta,\\
\bigg(\frac{1}{n_{i}}\sum_{j_{i}=1}^{n_{i}}\sup_{s\leq T-\zeta}\sup_{0\leq\zeta'\leq\zeta}\left|b_{i}\left(s+\zeta',C_{i}\left(j_{i}\right)\right)-b_{i}\left(s,C_{i}\left(j_{i}\right)\right)\right|^{2}\bigg)^{1/2} & \leq K_{T}\zeta,\\
\mathbb{E}\left[\sup_{s\leq T-\zeta}\sup_{0\leq\zeta'\leq\zeta}\left|b_{i}\left(s+\zeta',C_{i}\right)-b_{i}\left(s,C_{i}\right)\right|^{2}\right]^{1/2} & \leq K_{T}\zeta,\\
\bigg(\frac{1}{n_{1}}\sum_{j_{1}=1}^{n_{1}}\sup_{s\leq t}\sup_{0\leq\zeta'\leq\zeta}\left|w_{1}\left(s+\zeta',C_{1}\left(j_{1}\right)\right)-w_{1}\left(s,C_{1}\left(j_{1}\right)\right)\right|^{2}\bigg)^{1/2} & \leq K_{T}\zeta,\\
\mathbb{E}\left[\sup_{s\leq t}\sup_{0\leq\zeta'\leq\zeta}\left|w_{1}\left(s+\zeta',C_{1}\right)-w_{1}\left(s,C_{1}\right)\right|^{2}\right]^{1/2} & \leq K_{T}\zeta.
\end{align*}
Together with Lemma \ref{lem:Lipschitz backward MF - general}, this
fact gives us a bound on $A_{i}^{w}\left(t,\zeta\right)$. In particular,
defining $W_{\zeta}\left(t\right)=W\left(t+\zeta\right)$, we apply
Lemma \ref{lem:Lipschitz backward MF - general} to the two MF parameter
collections $W$ and $W_{\zeta}$ along with the new random variable
$\tilde{C}_{i}$ that is drawn uniformly from the set $\left\{ C_{i}\left(1\right),...,C_{i}\left(n_{i}\right)\right\} $.
Recalling the metric $\tilde{d}_{t}\left(W,W_{\zeta}\right)$ in this
lemma, the above fact shows that $\tilde{d}_{T-\zeta}\left(W,W_{\zeta}\right)\leq K_{T}\zeta$
on the event ${\cal E}$. The lemma holds owing to Fact 1 and Fact
2. The conclusion of the lemma then reads as
\[
\sup_{t\leq T-\zeta}\sup_{0\leq\zeta'\leq\zeta}\max_{i}\left(A_{i}^{w}\left(t,\zeta'\right),A_{i}^{b}\left(t,\zeta'\right)\right)\leq K_{T}\left(1+B\right)\tilde{d}_{T-\zeta}\left(W,W_{\zeta}\right)\leq K_{T}\left(1+B\right)\zeta,
\]
almost surely on the event ${\cal E}$.

By a similar argument, we have almost surely on the event ${\cal E}$:
\[
\sup_{t\leq T-\zeta}\sup_{0\leq\zeta'\leq\zeta}\max_{i}\left(\tilde{A}_{i}^{w}\left(t,\zeta'\right),\tilde{A}_{i}^{b}\left(t,\zeta'\right)\right)\leq K_{T}\left(1+B\right)\zeta.
\]
Indeed one can repeat the argument here by noticing that $\tilde{W}$
can be viewed as a collection of MF parameter whose neuronal ensemble
$\left(\Omega_{{\rm new}},P_{{\rm new}}\right)=\prod_{i=1}^{L}\left(\Omega_{i,{\rm new}},P_{i,{\rm new}}\right)$
takes the following specific form: $\Omega_{i,{\rm new}}=\left\{ C_{i}\left(1\right),...,C_{i}\left(n_{i}\right)\right\} $
and $P_{i,{\rm new}}$ is a uniform probability measure on $\Omega_{i,{\rm new}}$.

\paragraph*{Step 3 - Claim 2.}

We show the claim by induction. Consider $F_{1}$:
\begin{align*}
\left|\mathbf{H}_{1}\left(x,j_{1};\tilde{W}\left(t\right)\right)-H_{1}\left(x,C_{i}\left(j_{i}\right);W\left(t\right)\right)\right| & =\left|\phi_{1}\left(\tilde{w}_{1}\left(t,j_{1}\right),x\right)-\phi_{1}\left(w_{1}\left(t,C_{1}\left(j_{1}\right)\right),x\right)\right|\\
 & \leq K\mathscr{D}_{t}\left(W,\tilde{W}\right)
\end{align*}
for ${\cal P}$-almost every $x$ by Assumption \ref{enu:Assump_forward},
and therefore,
\[
F_{1}\left(t\right)\leq K\mathscr{D}_{t}\left(W,\tilde{W}\right).
\]
That is, $\mathbb{P}\left({\cal E}_{t,1}^{\mathbf{H}}\right)=1$.

Now let us assume the claim for $F_{i-1}$ with $i\geq2$ and consider
the claim for $F_{i}$. We have the following decomposition:
\begin{align*}
 & \left|{\bf H}_{i}\left(X,j_{i};\tilde{W}\left(t\right)\right)-H_{i}\left(X,C_{i}\left(j_{i}\right);W\left(t\right)\right)\right|\\
 & =\Bigg|\frac{1}{n_{i-1}}\sum_{j_{i-1}=1}^{n_{i-1}}\phi_{i}\left(\tilde{w}_{i}\left(t,j_{i-1},j_{i}\right),\tilde{b}_{i}\left(t,j_{i}\right),{\bf H}_{i-1}\left(X,j_{i-1};\tilde{W}\left(t\right)\right)\right)\\
 & \qquad-\mathbb{E}_{C_{i-1}}\left[\phi_{i}\left(w_{i}\left(t,C_{i-1},C_{i}\left(j_{i}\right)\right),b_{i}\left(t,C_{i}\left(j_{i}\right)\right),H_{i-1}\left(X,C_{i-1};W\left(t\right)\right)\right)\right]\Bigg|\\
 & \leq Q_{1,i}\left(t\right)+Q_{2,i}\left(t\right),
\end{align*}
which gives
\[
F_{i}\left(t\right)\leq\bigg(\frac{1}{n_{i}}\sum_{j_{i}=1}^{n_{i}}\mathbb{E}_{Z}\left[\left|Q_{1,i}\left(s\right)\right|+\left|Q_{2,i}\left(s\right)\right|\right]^{2}\bigg)^{1/2},
\]
where we define 
\begin{align*}
Q_{1,i}\left(t\right) & =\frac{1}{n_{i-1}}\sum_{j_{i-1}=1}^{n_{i-1}}\Big|\phi_{i}\left(\tilde{w}_{i}\left(t,j_{i-1},j_{i}\right),\tilde{b}_{i}\left(t,j_{i}\right),{\bf H}_{i-1}\left(X,j_{i-1};\tilde{W}\left(t\right)\right)\right)\\
 & \qquad-\phi_{i}\left(w_{i}\left(t,C_{i-1}\left(j_{i-1}\right),C_{i}\left(j_{i}\right)\right),b_{i}\left(t,C_{i}\left(j_{i}\right)\right),H_{i-1}\left(X,C_{i-1}\left(j_{i-1}\right);W\left(t\right)\right)\right)\Big|,\\
Q_{2,i}\left(t\right) & =\Bigg|\frac{1}{n_{i-1}}\sum_{j_{i-1}=1}^{n_{i-1}}\phi_{i}\left(w_{i}\left(t,C_{i-1}\left(j_{i-1}\right),C_{i}\left(j_{i}\right)\right),b_{i}\left(t,C_{i}\left(j_{i}\right)\right),H_{i-1}\left(X,C_{i-1}\left(j_{i-1}\right);W\left(t\right)\right)\right)\\
 & \qquad-\mathbb{E}_{C_{i-1}}\left[\phi_{i}\left(w_{i}\left(t,C_{i-1},C_{i}\left(j_{i}\right)\right),b_{i}\left(t,C_{i}\left(j_{i}\right)\right),H_{i-1}\left(X,C_{i-1};W\left(t\right)\right)\right)\right]\Bigg|.
\end{align*}
By Assumption \ref{enu:Assump_forward} and Cauchy-Schwarz's inequality,
we obtain a bound on $Q_{1,i}$:
\begin{align*}
\mathbb{E}_{Z}\left[\left|Q_{1,i}\left(t\right)\right|\right]^{2} & \leq\frac{K}{n_{i-1}}\sum_{j_{i-1}=1}^{n_{i-1}}\left(1+\left|\tilde{w}_{i}\left(t,j_{i-1},j_{i}\right)\right|^{2}+\left|w_{i}\left(t,C_{i-1}\left(j_{i-1}\right),C_{i}\left(j_{i}\right)\right)\right|^{2}+\left|\tilde{b}_{i}\left(t,j_{i}\right)\right|^{2}+\left|b_{i}\left(t,C_{i}\left(j_{i}\right)\right)\right|^{2}\right)\\
 & \quad\qquad\times\frac{1}{n_{i-1}}\sum_{j_{i-1}=1}^{n_{i-1}}\mathbb{E}_{Z}\left[\left|{\bf H}_{i-1}\left(X,j_{i-1};\tilde{W}\left(t\right)\right)-H_{i-1}\left(X,C_{i-1}\left(j_{i-1}\right);W\left(t\right)\right)\right|\right]^{2}\\
 & \quad+\frac{K}{n_{i-1}}\sum_{j_{i-1}=1}^{n_{i-1}}\left|\tilde{w}_{i}\left(t,j_{i-1},j_{i}\right)-w_{i}\left(t,C_{i-1}\left(j_{i-1}\right),C_{i}\left(j_{i}\right)\right)\right|^{2}\\
 & \quad+K\left|\tilde{b}_{i}\left(t,j_{i}\right)-b_{i}\left(t,C_{i}\left(j_{i}\right)\right)\right|^{2},
\end{align*}
and therefore, by Fact 1, under the events ${\cal E}_{t,i-1}^{\mathbf{H}}$
and ${\cal E}$,
\[
\bigg(\frac{1}{n_{i}}\sum_{j_{i}=1}^{n_{i}}\mathbb{E}_{Z}\left[\left|Q_{1,i}\left(t\right)\right|\right]^{2}\bigg)^{1/2}\leq K_{T}F_{i-1}\left(t\right)+K\mathscr{D}_{t}\left(W,\tilde{W}\right).
\]
Let us bound $Q_{2,i}$. For brevity, let us write 
\[
Z_{i}^{H}\left(t,c_{i-1},c_{i}\right)=\phi_{i}\left(w_{i}\left(t,c_{i-1},c_{i}\right),b_{i}\left(t,c_{i}\right),H_{i-1}\left(x,c_{i-1};W\left(t\right)\right)\right).
\]
Recall that $C_{i-1}\left(j_{i-1}\right)$ and $C_{i}\left(j_{i}\right)$
are independent. We thus have: 
\[
\mathbb{E}\left[Z_{i}^{H}\left(t,C_{i-1}\left(j_{i-1}\right),C_{i}\left(j_{i}\right)\right)\middle|C_{i}\left(j_{i}\right)\right]=\mathbb{E}_{C_{i-1}}\left[Z_{i}^{H}\left(t,C_{i-1},C_{i}\left(j_{i}\right)\right)\right].
\]
Furthermore $\left\{ C_{i-1}\left(j_{i-1}\right)\right\} _{j_{i-1}\in\left[n_{i-1}\right]}$
are $\eta_{i-1}$-independent by Assumption \ref{assump:neuronal-embedding}.
We also have that for ${\cal P}$-almost every $x$, almost surely,
\[
\left|Z_{i}^{H}\left(t,C_{i-1}\left(j_{i-1}\right),C_{i}\left(j_{i}\right)\right)\right|\leq K_{T}\left(1+B\right),
\]
by Assumption \ref{enu:Assump_forward} and Fact 2. Then by Lemma
\ref{lem:square hoeffding}, noting that $\gamma_{i}\ge K\eta_{i-1}$,
\[
\mathbb{P}\left(\mathbb{E}_{Z}\left[Q_{2,i}\right]\geq K_{T}\left(1+B\right)\gamma_{i}\right)\leq\left(1/\gamma_{i}\right)\exp\left(-n_{i-1}\gamma_{i}^{2}/K_{T}\right).
\]
By taking a union bound of the above probabilistic bound over $j_{i}\in\left[n_{i}\right]$,
we thus have, on the events ${\cal E}_{t,i-1}^{\mathbf{H}}$ and ${\cal E}$,
\begin{align*}
F_{i}\left(t\right) & \le K_{T}F_{i-1}\left(t\right)+K\mathscr{D}_{t}\left(W,\tilde{W}\right)+K_{T}\left(1+B\right)\gamma_{i}\\
 & \le K_{T}^{i}\bigg(\mathscr{D}_{t}\left(W,\tilde{W}\right)+\left(1+B\right)\sum_{j=1}^{i-1}\gamma_{j+1}\bigg)
\end{align*}
with probability at least $1-\left(n_{i}/\gamma_{i}\right)\exp\left(-n_{i-1}\gamma_{i}^{2}/K_{T}\right)$.
We thus get:
\[
\mathbb{P}\left({\cal E}_{t,i}^{\mathbf{H}};{\cal E}\right)\ge\mathbb{P}\left({\cal E}_{t,i-1}^{\mathbf{H}};{\cal E}\right)-\frac{n_{j}}{\gamma_{j}}\exp\left(-n_{i-1}\gamma_{i}^{2}/K_{T}\right)\ge1-\sum_{j=1}^{i-1}\frac{n_{j+1}}{\gamma_{j+1}}\exp\left(-n_{j}\gamma_{j+1}^{2}/K_{T}\right),
\]
which proves the claim.

\paragraph*{Step 4 - Claim 3.}

We show the claim by backward induction. The proof is similar to Claim
2. Consider $i=L$. Notice that on the event ${\cal E}_{t,L}^{\mathbf{H}}$,
\[
\mathbb{E}_{Z}\left[\left|\hat{\mathbf{y}}\left(X;\tilde{W}\left(t\right)\right)-\hat{y}\left(X;W\left(t\right)\right)\right|\right]\leq KF_{L}\left(t\right)\leq K_{T}^{L}\bigg(\mathscr{D}_{t}\left(W,\tilde{W}\right)+\left(1+B\right)\delta_{L}^{\Delta}\bigg),
\]
by Assumption \ref{enu:Assump_forward}. We thus get from Assumption
\ref{enu:Assump_backward} that on the events ${\cal E}_{t,L}^{\mathbf{H}}$
and ${\cal E}$, 
\begin{align*}
G_{L}\left(t\right) & \leq K\left(F_{L}\left(t\right)+\mathbb{E}_{Z}\left[\left|\hat{\mathbf{y}}\left(X;\tilde{W}\left(t\right)\right)-\hat{y}\left(X;W\left(t\right)\right)\right|\right]\right)\\
 & \leq K_{T}^{L+1}\bigg(\mathscr{D}_{t}\left(W,\tilde{W}\right)+\left(1+B\right)\delta_{L}^{\Delta}\bigg).
\end{align*}
That is, $\mathbb{P}\left({\cal E}_{t,L}^{\mathbf{H}}\cap{\cal E}_{t,L}^{\Delta};{\cal E}\right)=\mathbb{P}\left({\cal E}_{t,L}^{\mathbf{H}};{\cal E}\right)$.

Considering $i=L-1$, by Assumption \ref{enu:Assump_backward}, we
have: 
\[
G_{L-1}\left(t\right)\leq K\left(G_{L-1}^{\left(1\right)}\left(t\right)+G_{L-1}^{\left(2\right)}\left(t\right)+G_{L-1}^{\left(3\right)}\left(t\right)+G_{L-1}^{\left(4\right)}\left(t\right)\right),
\]
in which
\begin{align*}
G_{L-1}^{\left(1\right)}\left(t\right) & =\bigg(\frac{1}{n_{L-1}}\sum_{j_{L-1}=1}^{n_{L-1}}\left(1+\left|\tilde{w}_{L}\left(t,j_{L-1},1\right)\right|^{2}+\left|w_{L}\left(t,C_{L-1}\left(j_{L-1}\right),1\right)\right|^{2}+\left|\tilde{b}_{L}\left(t,1\right)\right|^{2}+\left|b_{L}\left(t,1\right)\right|^{2}\right)\bigg)^{1/2}G_{L}\left(t\right),\\
G_{L-1}^{\left(2\right)}\left(t\right) & =\left(1+\mathbb{E}_{Z}\left[\left|\Delta_{L}^{\mathbf{H}}\left(Z,1;\tilde{W}\left(t\right)\right)\right|\right]+\mathbb{E}_{Z}\left[\left|\Delta_{L}^{H}\left(Z,1;W\left(t\right)\right)\right|\right]\right)\\
 & \qquad\times\bigg(\frac{1}{n_{L-1}}\sum_{j_{L-1}=1}^{n_{L-1}}\left|\tilde{w}_{L}\left(t,j_{L-1},1\right)-w_{L}\left(t,C_{L-1}\left(j_{L-1}\right),1\right)\right|^{2}+\left|\tilde{b}_{L}\left(t,1\right)-b_{L}\left(t,1\right)\right|^{2}\bigg)^{1/2},\\
G_{L-1}^{\left(3\right)}\left(t\right) & =\bigg(\frac{1}{n_{L-1}}\sum_{j_{L-1}=1}^{n_{L-1}}\mathbb{E}_{Z}\Big[\left(1+\left|\Delta_{L}^{\mathbf{H}}\left(Z,1;\tilde{W}\left(t\right)\right)\right|+\left|\Delta_{L}^{H}\left(Z,1;W\left(t\right)\right)\right|\right)\\
 & \qquad\times\left(1+\left|\tilde{w}_{L}\left(t,j_{L-1},1\right)\right|+\left|w_{L}\left(t,C_{L-1}\left(j_{L-1}\right),1\right)\right|+\left|\tilde{b}_{L}\left(t,1\right)\right|+\left|b_{L}\left(t,1\right)\right|\right)\\
 & \qquad\times\left|\mathbf{H}_{L}\left(X,1;\tilde{W}\left(t\right)\right)-H_{L}\left(X,1;W\left(t\right)\right)\right|\Big]^{2}\bigg)^{1/2},\\
G_{L-1}^{\left(4\right)}\left(t\right) & =\bigg(\frac{1}{n_{L-1}}\sum_{j_{L-1}=1}^{n_{L-1}}\mathbb{E}_{Z}\Big[\left(1+\left|\Delta_{L}^{\mathbf{H}}\left(Z,1;\tilde{W}\left(t\right)\right)\right|+\left|\Delta_{L}^{H}\left(Z,1;W\left(t\right)\right)\right|\right)\\
 & \qquad\times\left(1+\left|\tilde{w}_{L}\left(t,j_{L-1},1\right)\right|+\left|w_{L}\left(t,C_{L-1}\left(j_{L-1}\right),1\right)\right|+\left|\tilde{b}_{L}\left(t,1\right)\right|+\left|b_{L}\left(t,1\right)\right|\right)\\
 & \qquad\times\left|\mathbf{H}_{L-1}\left(X,j_{L-1};\tilde{W}\left(t\right)\right)-H_{L}\left(X,C_{L-1}\left(j_{L-1}\right);W\left(t\right)\right)\right|\Big]^{2}\bigg)^{1/2}.
\end{align*}
Due to Fact 1, on the event ${\cal E}$,
\[
G_{L-1}^{\left(1\right)}\left(t\right)\leq K_{T}G_{L}\left(t\right).
\]
By Assumption \ref{enu:Assump_backward}, we have for ${\cal P}$-almost
every $z$,
\[
\left|\Delta_{L}^{\mathbf{H}}\left(z,1;\tilde{W}\left(t\right)\right)\right|,\;\left|\Delta_{L}^{H}\left(z,1;W\left(t\right)\right)\right|\leq K.
\]
Using this fact,
\[
G_{L-1}^{\left(2\right)}\left(t\right)\leq K\mathscr{D}_{t}\left(W,\tilde{W}\right).
\]
The same fact also applies to $G_{L-1}^{\left(3\right)}$, $G_{L-1}^{\left(4\right)}$
and $G_{L-1}^{\left(5\right)}$. In particular, we obtain for $G_{L-1}^{\left(3\right)}$,
on the event ${\cal E}$:
\begin{align*}
G_{L-1}^{\left(3\right)}\left(t\right) & \leq K\bigg(\frac{1}{n_{L-1}}\sum_{j_{L-1}=1}^{n_{L-1}}\left(1+\left|\tilde{w}_{L}\left(t,j_{L-1},1\right)\right|+\left|w_{L}\left(t,C_{L-1}\left(j_{L-1}\right),1\right)\right|+\left|\tilde{b}_{L}\left(t,1\right)\right|+\left|b_{L}\left(t,1\right)\right|\right)^{2}\bigg)^{1/2}F_{L}\left(t\right)\\
 & \leq K_{T}F_{L}\left(t\right),
\end{align*}
where the last display follows from Fact 1. Similarly, by using Fact
2 and Fact 3, we have on the event ${\cal E}$,
\[
G_{L-1}^{\left(4\right)}\left(t\right)\leq K_{T}\left(1+B\right)F_{L-1}\left(t\right).
\]
Hence on the events ${\cal E}_{t,L}^{\mathbf{H}}$, ${\cal E}_{t,L}^{\Delta}$
and ${\cal E}$,
\begin{align*}
G_{L-1}\left(t\right) & \leq K_{T}\left(G_{L}\left(t\right)+\mathscr{D}_{t}\left(W,\tilde{W}\right)+F_{L}\left(t\right)+\left(1+B\right)F_{L-1}\left(t\right)\right)\\
 & \leq K_{T}^{L+2}\bigg(\left(1+B\right)\mathscr{D}_{t}\left(W,\tilde{W}\right)+\left(1+B^{2}\right)\delta_{L}^{\Delta}\bigg).
\end{align*}
In other words, $\mathbb{P}\left({\cal E}_{t,L}^{\mathbf{H}}\cap{\cal E}_{t,L-1}^{\Delta};{\cal E}\right)=\mathbb{P}\left({\cal E}_{t,L}^{\mathbf{H}};{\cal E}\right)$.

Next let us assume the claim for $i$, and we consider the claim for
$i-1$, for $2\leq i\leq L-1$. For notational brevity, in the following,
we let 
\begin{align*}
\Delta_{i}^{{\bf H}}\left(j_{i}\right) & =\Delta_{i}^{{\bf H}}\left(Z,j_{i};\tilde{W}\left(t\right)\right), & {\bf H}_{i}\left(j_{i}\right) & ={\bf H}_{i}\left(X,j_{i};\tilde{W}\left(t\right)\right),\\
\Delta_{i}^{H}\left(c_{i}\right) & =\Delta_{i}^{H}\left(Z,c_{i};W\left(t\right)\right), & H_{i}\left(c_{i}\right) & =H_{i}\left(X,c_{i};W\left(t\right)\right).
\end{align*}
We have: 
\begin{align*}
 & \left|\Delta_{i-1}^{{\bf H}}\left(j_{i-1}\right)-\Delta_{i-1}^{H}\left(C_{i-1}\left(j_{i-1}\right)\right)\right|\\
 & =\Bigg|\frac{1}{n_{i}}\sum_{j_{i}=1}^{n_{i}}\sigma_{i-1}^{\mathbf{H}}\left(\Delta_{i}^{{\bf H}}\left(j_{i}\right),\tilde{w}_{i}\left(t,j_{i-1},j_{i}\right),\tilde{b}_{i}\left(t,j_{i}\right),{\bf H}_{i}\left(j_{i}\right),{\bf H}_{i-1}\left(j_{i-1}\right)\right)\\
 & \qquad-\mathbb{E}_{C_{i}}\left[\sigma_{i-1}^{\mathbf{H}}\left(\Delta_{i}^{H}\left(C_{i}\right),w_{i}\left(t,C_{i-1}\left(j_{i-1}\right),C_{i}\right),b_{i}\left(t,C_{i}\right),H_{i}\left(C_{i}\right),H_{i-1}\left(C_{i-1}\left(j_{i-1}\right)\right)\right)\right]\Bigg|\\
 & \leq Q_{3,i}\left(t\right)+Q_{4,i}\left(t\right),
\end{align*}
which gives
\[
G_{i-1}\left(t\right)\leq\bigg(\frac{1}{n_{i-1}}\sum_{j_{i-1}=1}^{n_{i-1}}\mathbb{E}_{Z}\left[\left|Q_{3,i}\left(t\right)\right|+\left|Q_{4,i}\left(t\right)\right|\right]^{2}\bigg)^{1/2},
\]
in which we define 
\begin{align*}
Q_{3,i}\left(t\right) & =\frac{1}{n_{i}}\sum_{j_{i}=1}^{n_{i}}\Big|\sigma_{i-1}^{\mathbf{H}}\left(\Delta_{i}^{{\bf H}}\left(j_{i}\right),\tilde{w}_{i}\left(t,j_{i-1},j_{i}\right),\tilde{b}_{i}\left(t,j_{i}\right),{\bf H}_{i}\left(j_{i}\right),{\bf H}_{i-1}\left(j_{i-1}\right)\right)\\
 & \qquad-\sigma_{i-1}^{\mathbf{H}}\left(\Delta_{i}^{H}\left(C_{i}\left(j_{i}\right)\right),w_{i}\left(t,C_{i-1}\left(j_{i-1}\right),C_{i}\left(j_{i}\right)\right),b_{i}\left(t,C_{i}\left(j_{i}\right)\right),H_{i}\left(C_{i}\left(j_{i}\right)\right),H_{i-1}\left(C_{i-1}\left(j_{i-1}\right)\right)\right)\Big|,\\
Q_{4,i}\left(t\right) & =\Bigg|\frac{1}{n_{i}}\sum_{j_{i}=1}^{n_{i}}\sigma_{i-1}^{\mathbf{H}}\left(\Delta_{i}^{H}\left(C_{i}\left(j_{i}\right)\right),w_{i}\left(t,C_{i-1}\left(j_{i-1}\right),C_{i}\left(j_{i}\right)\right),b_{i}\left(t,C_{i}\left(j_{i}\right)\right),H_{i}\left(C_{i}\left(j_{i}\right)\right),H_{i-1}\left(C_{i-1}\left(j_{i-1}\right)\right)\right)\\
 & \qquad-\mathbb{E}_{C_{i}}\left[\sigma_{i-1}^{\mathbf{H}}\left(\Delta_{i}^{H}\left(C_{i}\right),w_{i}\left(t,C_{i-1}\left(j_{i-1}\right),C_{i}\right),b_{i}\left(t,C_{i}\right),H_{i}\left(C_{i}\right),H_{i-1}\left(C_{i-1}\left(j_{i-1}\right)\right)\right)\right]\Bigg|.
\end{align*}
Let us first bound $Q_{3,i}$. This is similar to the bounding of
$G_{L-1}$. In particular, by Assumption \ref{enu:Assump_backward},
we have: 
\[
\bigg(\frac{1}{n_{i-1}}\sum_{j_{i-1}=1}^{n_{i-1}}\mathbb{E}_{Z}\left[\left|Q_{3,i}\left(t\right)\right|\right]^{2}\bigg)^{1/2}\leq K\left(Q_{3,i}^{\left(1\right)}\left(t\right)+Q_{3,i}^{\left(2\right)}\left(t\right)+Q_{3,i}^{\left(3\right)}\left(t\right)+Q_{3,i}^{\left(4\right)}\left(t\right)\right),
\]
in which
\begin{align*}
Q_{3,i}^{\left(1\right)}\left(t\right) & =\bigg(\frac{1}{n_{i-1}}\sum_{j_{i-1}=1}^{n_{i-1}}\bigg(\frac{1}{n_{i}}\sum_{j_{i}=1}^{n_{i}}\left(1+\left|\tilde{w}_{i}\left(t,j_{i-1},j_{i}\right)\right|+\left|w_{i}\left(t,C_{i-1}\left(j_{i-1}\right),C_{i}\left(j_{i}\right)\right)\right|+\left|\tilde{b}_{i}\left(t,j_{i}\right)\right|+\left|b_{i}\left(t,C_{i}\left(j_{i}\right)\right)\right|\right)\\
 & \qquad\times\mathbb{E}_{Z}\left[\left|\Delta_{i}^{{\bf H}}\left(j_{i}\right)-\Delta_{i}^{H}\left(C_{i}\left(j_{i}\right)\right)\right|\right]\bigg)^{2}\bigg)^{1/2},\\
Q_{3,i}^{\left(2\right)}\left(t\right) & =\bigg(\frac{1}{n_{i-1}}\sum_{j_{i-1}=1}^{n_{i-1}}\bigg(\frac{1}{n_{i}}\sum_{j_{i}=1}^{n_{i}}\mathbb{E}_{Z}\left[1+\left|\Delta_{i}^{{\bf H}}\left(j_{i}\right)\right|+\left|\Delta_{i}^{H}\left(C_{i}\left(j_{i}\right)\right)\right|\right]\\
 & \qquad\times\left(\left|\tilde{w}_{i}\left(t,j_{i-1},j_{i}\right)-w_{i}\left(t,C_{i-1}\left(j_{i-1}\right),C_{i}\left(j_{i}\right)\right)\right|+\left|\tilde{b}_{i}\left(t,j_{i}\right)-b_{i}\left(t,C_{i}\left(j_{i}\right)\right)\right|\right)\bigg)^{2}\bigg)^{1/2},\\
Q_{3,i}^{\left(3\right)}\left(t\right) & =\bigg(\frac{1}{n_{i-1}}\sum_{j_{i-1}=1}^{n_{i-1}}\bigg(\frac{1}{n_{i}}\sum_{j_{i}=1}^{n_{i}}\mathbb{E}_{Z}\Big[\left(1+\left|\Delta_{i}^{{\bf H}}\left(j_{i}\right)\right|+\left|\Delta_{i}^{H}\left(C_{i}\left(j_{i}\right)\right)\right|\right)\\
 & \qquad\times\left(1+\left|\tilde{w}_{i}\left(t,j_{i-1},j_{i}\right)\right|+\left|w_{i}\left(t,C_{i-1}\left(j_{i-1}\right),C_{i}\left(j_{i}\right)\right)\right|+\left|\tilde{b}_{i}\left(t,j_{i}\right)\right|+\left|b_{i}\left(t,C_{i}\left(j_{i}\right)\right)\right|\right)\\
 & \qquad\times\left|{\bf H}_{i}\left(j_{i}\right)-H_{i}\left(C_{i}\left(j_{i}\right)\right)\right|\Big]\bigg)^{2}\bigg)^{1/2},\\
Q_{3,i}^{\left(4\right)}\left(t\right) & =\bigg(\frac{1}{n_{i-1}}\sum_{j_{i-1}=1}^{n_{i-1}}\bigg(\frac{1}{n_{i}}\sum_{j_{i}=1}^{n_{i}}\mathbb{E}_{Z}\Big[\left(1+\left|\Delta_{i}^{{\bf H}}\left(j_{i}\right)\right|+\left|\Delta_{i}^{H}\left(C_{i}\left(j_{i}\right)\right)\right|\right)\\
 & \qquad\times\left(1+\left|\tilde{w}_{i}\left(t,j_{i-1},j_{i}\right)\right|+\left|w_{i}\left(t,C_{i-1}\left(j_{i-1}\right),C_{i}\left(j_{i}\right)\right)\right|+\left|\tilde{b}_{i}\left(t,j_{i}\right)\right|+\left|b_{i}\left(t,C_{i}\left(j_{i}\right)\right)\right|\right)\\
 & \qquad\times\left|{\bf H}_{i-1}\left(j_{i-1}\right)-H_{i-1}\left(C_{i-1}\left(j_{i-1}\right)\right)\right|\Big]\bigg)^{2}\bigg)^{1/2}.
\end{align*}
To bound $Q_{3,i}^{\left(1\right)}$, we use Cauchy-Schwarz's inequality
and Fact 1 to obtain that on the event ${\cal E}$:
\begin{align*}
Q_{3,i}^{\left(1\right)}\left(t\right) & \leq\bigg(\frac{1}{n_{i-1}n_{i}}\sum_{j_{i-1}=1}^{n_{i-1}}\sum_{j_{i}=1}^{n_{i}}\left(1+\left|\tilde{w}_{i}\left(t,j_{i-1},j_{i}\right)\right|+\left|w_{i}\left(t,C_{i-1}\left(j_{i-1}\right),C_{i}\left(j_{i}\right)\right)\right|+\left|\tilde{b}_{i}\left(t,j_{i}\right)\right|+\left|b_{i}\left(t,C_{i}\left(j_{i}\right)\right)\right|\right)^{2}\\
 & \qquad\times\frac{1}{n_{i}}\sum_{j_{i}=1}^{n_{i}}\mathbb{E}_{Z}\left[\left|\Delta_{i}^{{\bf H}}\left(j_{i}\right)-\Delta_{i}^{H}\left(C_{i}\left(j_{i}\right)\right)\right|\right]^{2}\bigg)^{1/2}\\
 & \leq K_{T}G_{i}\left(t\right).
\end{align*}
By Cauchy-Schwarz's inequality and Fact 1, we have a bound on $Q_{3,i}^{\left(2\right)}$
on the event ${\cal E}$:
\begin{align*}
Q_{3,i}^{\left(2\right)}\left(t\right) & \leq K\bigg(\frac{1}{n_{i}}\sum_{j_{i}=1}^{n_{i}}\mathbb{E}_{Z}\left[1+\left|\Delta_{i}^{{\bf H}}\left(j_{i}\right)\right|^{2}+\left|\Delta_{i}^{H}\left(C_{i}\left(j_{i}\right)\right)\right|^{2}\right]\bigg)^{1/2}\\
 & \qquad\times\bigg(\frac{1}{n_{i-1}n_{i}}\sum_{j_{i-1}=1}^{n_{i-1}}\sum_{j_{i}=1}^{n_{i}}\left|\tilde{w}_{i}\left(t,j_{i-1},j_{i}\right)-w_{i}\left(t,C_{i-1}\left(j_{i-1}\right),C_{i}\left(j_{i}\right)\right)\right|^{2}+\left|\tilde{b}_{i}\left(t,j_{i}\right)-b_{i}\left(t,C_{i}\left(j_{i}\right)\right)\right|^{2}\bigg)^{1/2}\\
 & \leq K_{T}\mathscr{D}_{t}\left(W,\tilde{W}\right).
\end{align*}
Similarly, on the event ${\cal E}$:
\begin{align*}
Q_{3,i}^{\left(3\right)}\left(t\right) & \stackrel{\left(a\right)}{\leq}K_{T}\left(1+B\right)\frac{1}{n_{i}}\sum_{j_{i}=1}^{n_{i}}\mathbb{E}_{Z}\left[\left(1+\left|\Delta_{i}^{{\bf H}}\left(j_{i}\right)\right|+\left|\Delta_{i}^{H}\left(C_{i}\left(j_{i}\right)\right)\right|\right)\left|{\bf H}_{i}\left(j_{i}\right)-H_{i}\left(C_{i}\left(j_{i}\right)\right)\right|\right]\\
 & \stackrel{\left(b\right)}{\leq}K_{T}\left(1+B\right)\bigg(\frac{1}{n_{i}}\sum_{j_{i}=1}^{n_{i}}\mathbb{E}_{Z}\left[1+\left|\Delta_{i}^{{\bf H}}\left(j_{i}\right)\right|^{2}+\left|\Delta_{i}^{H}\left(C_{i}\left(j_{i}\right)\right)\right|^{2}\right]\bigg)^{1/2}F_{i}\left(t\right)\\
 & \stackrel{\left(c\right)}{\leq}K_{T}\left(1+B\right)F_{i}\left(t\right),
\end{align*}
where we use Fact 2 and Fact 3 in step $\left(a\right)$, Cauchy-Schwarz's
inequality in step $\left(b\right)$ and Fact 1 in step $\left(c\right)$.
With the same argument, on the event ${\cal E}$:
\begin{align*}
Q_{3,i}^{\left(4\right)}\left(t\right) & \stackrel{\left(a\right)}{\leq}K_{T}\left(1+B\right)\bigg(\frac{1}{n_{i-1}}\sum_{j_{i-1}=1}^{n_{i-1}}\bigg(\frac{1}{n_{i}}\sum_{j_{i}=1}^{n_{i}}\mathbb{E}_{Z}\Big[\left(1+\left|\Delta_{i}^{{\bf H}}\left(j_{i}\right)\right|+\left|\Delta_{i}^{H}\left(C_{i}\left(j_{i}\right)\right)\right|\right)\\
 & \qquad\times\left|{\bf H}_{i-1}\left(j_{i-1}\right)-H_{i-1}\left(C_{i-1}\left(j_{i-1}\right)\right)\right|\Big]\bigg)^{2}\bigg)^{1/2}\\
 & \stackrel{\left(b\right)}{\leq}K_{T}\left(1+B\right)\bigg(\frac{1}{n_{i}}\sum_{j_{i}=1}^{n_{i}}\mathbb{E}_{Z}\left[1+\left|\Delta_{i}^{{\bf H}}\left(j_{i}\right)\right|^{2}+\left|\Delta_{i}^{H}\left(C_{i}\left(j_{i}\right)\right)\right|^{2}\right]\\
 & \qquad\times\frac{1}{n_{i-1}}\sum_{j_{i-1}=1}^{n_{i-1}}\mathbb{E}_{Z}\left[\left|{\bf H}_{i-1}\left(j_{i-1}\right)-H_{i-1}\left(C_{i-1}\left(j_{i-1}\right)\right)\right|^{2}\right]\bigg)^{1/2}\\
 & \stackrel{\left(c\right)}{\leq}K_{T}\left(1+B\right)F_{i-1}\left(t\right),
\end{align*}
where again we use Fact 2 and Fact 3 in step $\left(a\right)$, Cauchy-Schwarz's
inequality in step $\left(b\right)$ and Fact 1 in step $\left(c\right)$.
Therefore on the events ${\cal E}$, ${\cal E}_{t,i}^{\Delta}$ and
${\cal E}_{t,L}^{\mathbf{H}}$, 
\begin{align*}
\bigg(\frac{1}{n_{i-1}}\sum_{j_{i-1}=1}^{n_{i-1}}\mathbb{E}_{Z}\left[\left|Q_{3,i}\left(t\right)\right|\right]^{2}\bigg)^{1/2} & \leq K_{T}\left(G_{i}\left(t\right)+\mathscr{D}_{t}\left(W,\tilde{W}\right)+\left(1+B\right)\left(F_{i}\left(t\right)+F_{i-1}\left(t\right)\right)\right)\\
 & \leq K_{T}^{2L-i+2}\bigg(\left(1+B\right)\mathscr{D}_{t}\left(W,\tilde{W}\right)+\left(1+B^{2}\right)\bigg(\delta_{L}^{\Delta}+\sum_{j=i}^{L-2}\beta_{j}\bigg)\bigg).
\end{align*}
Next let us bound $Q_{4,i}$. For brevity, let us write 
\[
Z_{i}^{\Delta}\left(t,c_{i-1},c_{i}\right)=\sigma_{i-1}^{\mathbf{H}}\left(\Delta_{i}^{H}\left(c_{i}\right),w_{i}\left(t,c_{i-1},c_{i}\right),b_{i}\left(t,c_{i}\right),H_{i}\left(c_{i}\right),H_{i-1}\left(c_{i-1}\right)\right).
\]
Recall that $C_{i-1}\left(j_{i-1}\right)$ and $C_{i}\left(j_{i}\right)$
are independent. We thus have: 
\[
\mathbb{E}_{C_{i}\left(j_{i}\right)}\left[Z_{i}^{\Delta}\left(t,C_{i-1}\left(j_{i-1}\right),C_{i}\left(j_{i}\right)\right)\middle|C_{i-1}\left(j_{i-1}\right)\right]=\mathbb{E}_{C_{i}}\left[Z_{i}^{\Delta}\left(t,C_{i-1}\left(j_{i-1}\right),C_{i}\right)\right].
\]
Furthermore $\left\{ C_{i}\left(j_{i}\right)\right\} _{j_{i}\in\left[n_{i}\right]}$
are $\eta_{i}$-independent by Assumption \ref{assump:neuronal-embedding}.
We also have that almost surely,
\begin{align*}
\left|Z_{i}^{\Delta}\left(t,C_{i-1}\left(j_{i-1}\right),C_{i}\left(j_{i}\right)\right)\right| & \leq K\left(1+\left|\Delta_{i}^{H}\left(C_{i}\left(j_{i}\right)\right)\right|\right)\left(1+\left|w_{i}\left(t,C_{i-1}\left(j_{i-1}\right),C_{i}\left(j_{i}\right)\right)\right|+\left|b_{i}\left(t,C_{i}\left(j_{i}\right)\right)\right|\right)\\
 & \leq K_{T}\left(1+B^{2}\right),
\end{align*}
by Assumption \ref{enu:Assump_backward} and Fact 2. Then by Lemma
\ref{lem:square hoeffding}, noting that $\beta_{i-1}\ge K\eta_{i}$,
\[
\mathbb{P}\left(\mathbb{E}_{Z}\left[Q_{4,i}\left(t\right)\right]\geq K_{T}\left(1+B^{2}\right)\beta_{i-1}\right)\leq\left(1/\beta_{i-1}\right)\exp\left(-n_{i}\beta_{i-1}^{2}/K_{T}\right).
\]
We thus have, by taking a union bound over $j_{i-1}\in\left[n_{i-1}\right]$,
on the events ${\cal E}$, ${\cal E}_{t,i}^{\Delta}$ and ${\cal E}_{t,L}^{\mathbf{H}}$,
\[
G_{i-1}\left(t\right)\le K_{T}^{2L-i+2}\bigg(\left(1+B\right)\mathscr{D}_{t}\left(W,\tilde{W}\right)+\left(1+B^{2}\right)\bigg(\delta_{L}^{\Delta}+\sum_{j=i-1}^{L-2}\beta_{j}\bigg)\bigg)
\]
with probability at least $1-\left(n_{i-1}/\beta_{i-1}\right)\exp\left(-n_{i}\beta_{i-1}^{2}/K_{T}\right)$.
We thus get: 
\[
\mathbb{P}\left({\cal E}_{t,L}^{\mathbf{H}}\cap{\cal E}_{t,i-1}^{\Delta};{\cal E}\right)\ge\mathbb{P}\left({\cal E}_{t,L}^{\mathbf{H}}\cap{\cal E}_{t,i}^{\Delta};{\cal E}\right)-\frac{n_{i-1}}{\beta_{i-1}}\exp\left(-n_{i-1}\beta_{i-1}^{2}/K_{T}\right)\ge\mathbb{P}\left({\cal E}_{t,L}^{\mathbf{H}};{\cal E}\right)-\sum_{j=i-1}^{L-2}\frac{n_{j}}{\beta_{j}}\exp\left(-n_{j+1}\beta_{j}^{2}/K_{T}\right),
\]
which proves the claim.

\paragraph*{Step 5 - Claim 4.}

We reuse the notations introduced in the previous step. For $2\leq i\leq L$,
we have: 
\begin{align*}
 & \left|\Delta_{i}^{\mathbf{b}}\left(Z,j_{i};\tilde{W}\left(t\right)\right)-\Delta_{i}^{b}\left(Z,C_{i}\left(j_{i}\right);W\left(t\right)\right)\right|\\
 & =\Bigg|\frac{1}{n_{i-1}}\sum_{j_{i-1}=1}^{n_{i-1}}\sigma_{i}^{\mathbf{b}}\left(\Delta_{i}^{{\bf H}}\left(j_{i}\right),\tilde{w}_{i}\left(t,j_{i-1},j_{i}\right),\tilde{b}_{i}\left(t,j_{i}\right),{\bf H}_{i}\left(j_{i}\right),{\bf H}_{i-1}\left(j_{i-1}\right)\right)\\
 & \qquad-\mathbb{E}_{C_{i-1}}\left[\sigma_{i}^{\mathbf{b}}\left(\Delta_{i}^{H}\left(C_{i}\left(j_{i}\right)\right),w_{i}\left(t,C_{i-1},C_{i}\left(j_{i}\right)\right),b_{i}\left(t,C_{i}\left(j_{i}\right)\right),H_{i}\left(C_{i}\left(j_{i}\right)\right),H_{i-1}\left(C_{i-1}\right)\right)\right]\Big|\\
 & \leq Q_{5,i}\left(t\right)+Q_{6,i}\left(t\right),
\end{align*}
which gives, by Assumption \ref{enu:Assump_lrSchedule},
\[
D_{i}^{b}\left(t\right)\leq K\bigg(\frac{1}{n_{i}}\sum_{j_{i}=1}^{n_{i}}\mathbb{E}_{Z}\left[\left|Q_{5,i}\left(t\right)\right|+\left|Q_{6,i}\left(t\right)\right|\right]^{2}\bigg)^{1/2},
\]
in which we define 
\begin{align*}
Q_{5,i}\left(t\right) & =\frac{1}{n_{i-1}}\sum_{j_{i-1}=1}^{n_{i-1}}\Big|\sigma_{i}^{\mathbf{b}}\left(\Delta_{i}^{{\bf H}}\left(j_{i}\right),\tilde{w}_{i}\left(t,j_{i-1},j_{i}\right),\tilde{b}_{i}\left(t,j_{i}\right),{\bf H}_{i}\left(j_{i}\right),{\bf H}_{i-1}\left(j_{i-1}\right)\right)\\
 & \qquad-\sigma_{i}^{\mathbf{b}}\left(\Delta_{i}^{H}\left(C_{i}\left(j_{i}\right)\right),w_{i}\left(t,C_{i-1}\left(j_{i-1}\right),C_{i}\left(j_{i}\right)\right),b_{i}\left(t,C_{i}\left(j_{i}\right)\right),H_{i}\left(C_{i}\left(j_{i}\right)\right),H_{i-1}\left(C_{i-1}\left(j_{i-1}\right)\right)\right)\Big|,\\
Q_{6,i}\left(t\right) & =\Bigg|\frac{1}{n_{i-1}}\sum_{j_{i-1}=1}^{n_{i-1}}\sigma_{i}^{\mathbf{b}}\left(\Delta_{i}^{H}\left(C_{i}\left(j_{i}\right)\right),w_{i}\left(t,C_{i-1}\left(j_{i-1}\right),C_{i}\left(j_{i}\right)\right),b_{i}\left(t,C_{i}\left(j_{i}\right)\right),H_{i}\left(C_{i}\left(j_{i}\right)\right),H_{i-1}\left(C_{i-1}\left(j_{i-1}\right)\right)\right)\\
 & \qquad-\mathbb{E}_{C_{i-1}}\left[\sigma_{i}^{\mathbf{b}}\left(\Delta_{i}^{H}\left(C_{i}\left(j_{i}\right)\right),w_{i}\left(t,C_{i-1},C_{i}\left(j_{i}\right)\right),b_{i}\left(t,C_{i}\left(j_{i}\right)\right),H_{i}\left(C_{i}\left(j_{i}\right)\right),H_{i-1}\left(C_{i-1}\right)\right)\right]\Bigg|.
\end{align*}
Similar to the bounding of $D_{i}^{w}\left(t\right)$, we have on
the event ${\cal E}$,
\[
\bigg(\frac{1}{n_{i}}\sum_{j_{i}=1}^{n_{i}}\mathbb{E}_{Z}\left[\left|Q_{5,i}\left(t\right)\right|\right]^{2}\bigg)^{1/2}\leq K_{T}\left(F_{i-1}\left(t\right)+G_{i}\left(t\right)+\mathscr{D}_{t}\left(W,\tilde{W}\right)+F_{i}\left(t\right)\right).
\]
To bound $Q_{6,i}$, for brevity, let us write 
\[
Z_{i}^{b}\left(t,c_{i-1},c_{i}\right)=\sigma_{i}^{\mathbf{b}}\left(\Delta_{i}^{H}\left(c_{i}\right),w_{i}\left(t,c_{i-1},c_{i}\right),b_{i}\left(t,c_{i}\right),H_{i}\left(c_{i}\right),H_{i-1}\left(c_{i-1}\right)\right).
\]
Recall that $C_{i-1}\left(j_{i-1}\right)$ and $C_{i}\left(j_{i}\right)$
are independent. We thus have: 
\[
\mathbb{E}_{C_{i-1}\left(j_{i-1}\right)}\left[Z_{i}^{b}\left(t,C_{i-1}\left(j_{i-1}\right),C_{i}\left(j_{i}\right)\right)\middle|C_{i}\left(j_{i}\right)\right]=\mathbb{E}_{C_{i-1}}\left[Z_{i}^{b}\left(t,C_{i-1},C_{i}\left(j_{i}\right)\right)\right].
\]
Furthermore $\left\{ C_{i-1}\left(j_{i-1}\right)\right\} _{j_{i-1}\in\left[n_{i-1}\right]}$
are $\eta_{i-1}$-independent by Assumption \ref{assump:neuronal-embedding}.
We also have that almost surely,
\begin{align*}
\left|Z_{i}^{b}\left(t,C_{i-1}\left(j_{i-1}\right),C_{i}\left(j_{i}\right)\right)\right| & \leq K\left(1+\left|\Delta_{i}^{H}\left(C_{i}\left(j_{i}\right)\right)\right|\right)\\
 & \leq K_{T}\left(1+B\right),
\end{align*}
by Assumption \ref{enu:Assump_backward} and Fact 2. Then by Lemma
\ref{lem:square hoeffding}, and since $\gamma_{i}\ge K\eta_{i-1}$,
\[
\mathbb{P}\left(\mathbb{E}_{Z}\left[Q_{6,i}\right]\geq K_{T}\left(1+B\right)\gamma_{i}\right)\leq\left(1/\gamma_{i}\right)\exp\left(-n_{i-1}\gamma_{i}^{2}/K_{T}\right).
\]
Notice that $\delta_{L}^{b}\geq\gamma_{i}$. We thus have, by taking
a union bound over $j_{i}\in\left[n_{i}\right]$, on the events ${\cal E}$,
${\cal E}_{t,1}^{\Delta}$ and ${\cal E}_{t,L}^{\mathbf{H}}$, 
\[
D_{i}^{b}\left(t\right)\le K_{T}\left(\left(1+B\right)\mathscr{D}_{t}\left(W,\tilde{W}\right)+\left(1+B^{2}\right)\delta_{L}^{b}\right)
\]
with probability at least $1-\left(n_{i}/\gamma_{i}\right)\exp\left(-n_{i-1}\gamma_{i}^{2}/K_{T}\right)$.
The claim then follows again from the union bound.
\end{proof}

\subsubsection{Proof of Proposition \ref{prop:gradient descent - bounded}}
\begin{proof}[Proof of Proposition \ref{prop:gradient descent - bounded}]
We consider $t\leq T$, for a given terminal time $T\in\epsilon\mathbb{N}_{\geq0}$.
We again reuse the notation $K_{t}$ from the proof of Proposition
\ref{prop:particle coupling - bounded}. Note that $K_{t}\leq K_{T}$
for all $t\leq T$. We also note that at initialization, $\mathscr{D}_{0}\left(\mathbf{W},\tilde{W}\right)=0$.
We start with a few preliminary fact:

\paragraph*{Fact 1: moment bounds.}

We recall a useful fact from the proof of Proposition \ref{prop:particle coupling - bounded}:
with probability at least $1-KLn_{\max}\exp\left(-Kn_{\min}^{1/52}\right)$,
the event ${\cal E}$ occurs, and ${\cal E}$ contains the following:
\begin{align*}
\interleave\mathbf{W}\interleave_{0}=\interleave\tilde{W}\interleave_{0} & \leq K,\\
\max_{1\leq i\leq L}\bigg(\frac{1}{n_{i}}\sum_{j_{i}=1}^{n_{i}}\sup_{t\leq T}\underset{Z\sim{\cal P}}{{\rm ess\text{-}sup}}\left|\Delta_{i}^{\mathbf{H}}\left(Z,j_{i};\tilde{W}\left(t\right)\right)\right|^{50}\bigg)^{1/50},\quad\interleave\tilde{W}\interleave_{T} & \leq K_{T}.
\end{align*}
We further remark that since $\interleave\mathbf{W}\interleave_{0}\leq K$,
from Lemma \ref{lem:bounds NN a priori}, we have on the event ${\cal E}$:
\[
\max_{1\leq i\leq L}\bigg(\frac{1}{n_{i}}\sum_{j_{i}=1}^{n_{i}}\sup_{t\leq T}\underset{Z\sim{\cal P}}{{\rm ess\text{-}sup}}\left|\Delta_{i}^{\mathbf{H}}\left(Z,j_{i};\mathbf{W}\left(\left\lfloor t/\epsilon\right\rfloor \right)\right)\right|^{50}\bigg)^{1/50},\quad\interleave\mathbf{W}\interleave_{\left\lfloor T/\epsilon\right\rfloor }\leq K_{T}.
\]
We also observe that the randomness of the event ${\cal E}$ is entirely
by the samples of the coupling procedure $\left\{ C_{1}\left(j_{1}\right),...,C_{L}\left(j_{L}\right):\;j_{i}\in\left[n_{i}\right],\;i=1,...,L\right\} $.

\paragraph*{Fact 2: maximal bounds.}

We also recall another useful fact from the proof of Proposition \ref{prop:particle coupling - bounded}:
on the event ${\cal E}$, almost surely,
\begin{align*}
\max_{2\leq i\leq L}\max_{j_{i-1}\in\left[n_{i-1}\right],\;j_{i}\in\left[n_{i}\right]}\sup_{t\leq T}\left|\tilde{w}_{i}\left(t,j_{i-1},j_{i}\right)\right| & \leq K_{T}\left(1+B\right),\\
\max_{2\leq i\leq L}\max_{j_{i}\in\left[n_{i}\right]}\sup_{t\leq T}\left|\tilde{b}_{i}\left(t,j_{i}\right)\right| & \leq K_{T}\left(1+B\right),
\end{align*}
In fact, the same extends to $\mathbf{W}$: on the event ${\cal E}$,
almost surely,
\begin{align*}
\max_{2\leq i\leq L}\max_{j_{i-1}\in\left[n_{i-1}\right],\;j_{i}\in\left[n_{i}\right]}\sup_{t\leq T}\left|\mathbf{w}_{i}\left(\left\lfloor t/\epsilon\right\rfloor ,j_{i-1},j_{i}\right)\right| & \leq K_{T}\left(1+B\right),\\
\max_{2\leq i\leq L}\max_{j_{i}\in\left[n_{i}\right]}\sup_{t\leq T}\left|\mathbf{b}_{i}\left(\left\lfloor t/\epsilon\right\rfloor ,j_{i}\right)\right| & \leq K_{T}\left(1+B\right),\\
\max_{1\leq i\leq L}\max_{j_{i}\in\left[n_{i}\right]}\sup_{t\leq T}\underset{Z\sim{\cal P}}{{\rm ess\text{-}sup}}\left|\Delta_{i}^{\mathbf{H}}\left(Z,j_{i};\mathbf{W}\left(\left\lfloor t/\epsilon\right\rfloor \right)\right)\right| & \leq K_{T}\left(1+B\right).
\end{align*}
Indeed, let us consider the claim for $\mathbf{w}_{i}$. By Assumption
\ref{enu:Assump_backward}, for ${\cal P}$-almost every $z$,
\[
\sup_{t\geq0}\max_{j_{L-1}\in\left[n_{L-1}\right]}\left|\Delta_{L}^{\mathbf{w}}\left(z,j_{L-1},1;\mathbf{W}\left(\left\lfloor t/\epsilon\right\rfloor \right)\right)\right|\leq K\left(1+\sup_{t\geq0}\left|\Delta_{L}^{\mathbf{H}}\left(z,1;\mathbf{W}\left(\left\lfloor t/\epsilon\right\rfloor \right)\right)\right|\right)\leq K,
\]
which implies, by Assumption \ref{enu:Assump_lrSchedule}, that almost
surely, for any $j_{L-1}\in\left[n_{L-1}\right]$,
\[
\sup_{t\leq T}\left|\mathbf{w}_{L}\left(\left\lfloor t/\epsilon\right\rfloor ,j_{L-1},1\right)\right|\leq{\rm ess\text{-}sup}\left|w_{L}^{0}\left(C_{L-1},1\right)\right|+KT\leq K_{T}\left(1+B\right).
\]
Next assuming that $\sup_{t\leq T}\left|\mathbf{w}_{i}\left(\left\lfloor t/\epsilon\right\rfloor ,j_{i-1},j_{i}\right)\right|\leq K_{T}\left(1+B\right)$
almost surely for a given $i\geq2$, by Assumption \ref{enu:Assump_backward},
we have on the event ${\cal E}$, for any $j_{i-1}\in\left[n_{i-1}\right]$,
$t\leq T$ and ${\cal P}$-almost every $z$:
\begin{align*}
 & \left|\Delta_{i-1}^{\mathbf{H}}\left(z,j_{i-1};\mathbf{W}\left(\left\lfloor t/\epsilon\right\rfloor \right)\right)\right|\\
 & \leq\frac{K}{n_{i}}\sum_{j_{i}=1}^{n_{i}}\left(1+\sup_{t\leq T}\underset{Z\sim{\cal P}}{{\rm ess\text{-}sup}}\left|\Delta_{i}^{\mathbf{H}}\left(Z,j_{i};\mathbf{W}\left(\left\lfloor t/\epsilon\right\rfloor \right)\right)\right|\right)\left(1+\left|\mathbf{w}_{i}\left(\left\lfloor t/\epsilon\right\rfloor ,j_{i-1},j_{i}\right)\right|+\left|\mathbf{b}_{i}\left(\left\lfloor t/\epsilon\right\rfloor ,j_{i}\right)\right|\right)\\
 & \leq\frac{K}{n_{i}}\sum_{j_{i}=1}^{n_{i}}\left(1+\sup_{t\leq T}\underset{Z\sim{\cal P}}{{\rm ess\text{-}sup}}\left|\Delta_{i}^{\mathbf{H}}\left(Z,j_{i};\mathbf{W}\left(\left\lfloor t/\epsilon\right\rfloor \right)\right)\right|\right)\left(K_{T}\left(1+B\right)+\left|\mathbf{b}_{i}\left(\left\lfloor t/\epsilon\right\rfloor ,j_{i}\right)\right|\right)\\
 & \leq K\bigg(1+\bigg(\frac{1}{n_{i}}\sum_{j_{i}=1}^{n_{i}}\sup_{t\leq T}\underset{Z\sim{\cal P}}{{\rm ess\text{-}sup}}\left|\Delta_{i}^{\mathbf{H}}\left(Z,j_{i};\mathbf{W}\left(\left\lfloor t/\epsilon\right\rfloor \right)\right)\right|^{2}\bigg)^{1/2}\bigg)\bigg(K_{T}\left(1+B\right)+\bigg(\frac{1}{n_{i}}\sum_{j_{i}=1}^{n_{i}}\left|\mathbf{b}_{i}\left(\left\lfloor t/\epsilon\right\rfloor ,j_{i}\right)\right|^{2}\bigg)^{1/2}\bigg)\\
 & \leq K_{T}\left(1+B\right),
\end{align*}
where the last step follows from Fact 1. Again by Assumption \ref{enu:Assump_backward},
we then obtain:
\[
\left|\Delta_{i-1}^{\mathbf{w}}\left(z,j_{i-1},j_{i};\mathbf{W}\left(\left\lfloor t/\epsilon\right\rfloor \right)\right)\right|\leq K_{T}\left(1+B\right),
\]
which implies, by Assumption \ref{enu:Assump_lrSchedule}, that almost
surely on the event ${\cal E}$, for any $j_{i-1}\in\left[n_{i-1}\right]$
and $j_{i}\in\left[n_{i}\right]$:
\[
\sup_{t\leq T}\left|\mathbf{w}_{i}\left(\left\lfloor t/\epsilon\right\rfloor ,j_{i-1},j_{i}\right)\right|\leq{\rm ess\text{-}sup}\left|w_{i-1}^{0}\left(C_{i-1},C_{i}\right)\right|+K_{T}\left(1+B\right)T\leq K_{T}\left(1+B\right).
\]
This proves the claim for $\mathbf{w}_{i}$, and the rest of the claims
are similarly proven.

Now let us consider $2\leq i\leq L$ and particularly the task of
bounding
\[
\bigg(\frac{1}{n_{i-1}n_{i}}\sum_{j_{i-1}=1}^{n_{i-1}}\sum_{j_{i}=1}^{n_{i}}\left|\mathbf{w}_{i}\left(\left\lfloor t/\epsilon\right\rfloor ,j_{i-1},j_{i}\right)-\tilde{w}_{i}\left(t,j_{i-1},j_{i}\right)\right|^{2}\bigg)^{1/2},
\]
which is a quantity in $\mathscr{D}_{T}\left(\tilde{W},\mathbf{W}\right)$.
As shown in the proof of Proposition \ref{prop:particle coupling - bounded}:
\[
\sup_{t\leq T-\zeta}\sup_{0\leq\zeta'\leq\zeta}\max_{i}\left(\tilde{A}_{i}^{w}\left(t,\zeta'\right),\tilde{A}_{i}^{b}\left(t,\zeta'\right)\right)\leq K_{T}\left(1+B\right)\zeta
\]
almost surely, where we recall
\begin{align*}
\tilde{A}_{i}^{w}\left(t,\zeta\right) & =\bigg(\frac{1}{n_{i-1}n_{i}}\sum_{j_{i-1}=1}^{n_{i-1}}\sum_{j_{i}=1}^{n_{i}}\left|\partial_{1}\tilde{w}_{i}\left(t+\zeta,j_{i-1},j_{i}\right)-\partial_{1}\tilde{w}_{i}\left(t,j_{i-1},j_{i}\right)\right|^{2}\bigg)^{1/2},
\end{align*}
As such, by Assumption \ref{enu:Assump_lrSchedule}, we have the decomposition:
\begin{align*}
 & \bigg(\frac{1}{n_{i-1}n_{i}}\sum_{j_{i-1}=1}^{n_{i-1}}\sum_{j_{i}=1}^{n_{i}}\left|\mathbf{w}_{i}\left(\left\lfloor t/\epsilon\right\rfloor ,j_{i-1},j_{i}\right)-\tilde{w}_{i}\left(t,j_{i-1},j_{i}\right)\right|^{2}\bigg)^{1/2}\\
 & =\bigg(\frac{1}{n_{i-1}n_{i}}\sum_{j_{i-1}=1}^{n_{i-1}}\sum_{j_{i}=1}^{n_{i}}\bigg|\epsilon\sum_{k=0}^{\left\lfloor t/\epsilon\right\rfloor -1}\xi_{i}^{\mathbf{w}}\left(k\epsilon\right)\Delta_{i}^{{\bf w}}\left(z\left(k\right),j_{i-1},j_{i};\mathbf{W}\left(k\right)\right)-\int_{s=0}^{t}\partial_{1}\tilde{w}_{i}\left(s,j_{i-1},j_{i}\right)ds\bigg|^{2}\bigg)^{1/2}\\
 & \leq\bigg(\frac{1}{n_{i-1}n_{i}}\sum_{j_{i-1}=1}^{n_{i-1}}\sum_{j_{i}=1}^{n_{i}}\bigg|\epsilon\sum_{k=0}^{\left\lfloor t/\epsilon\right\rfloor -1}\xi_{i}^{\mathbf{w}}\left(k\epsilon\right)\Delta_{i}^{{\bf w}}\left(z\left(k\right),j_{i-1},j_{i};\mathbf{W}\left(k\right)\right)-\epsilon\sum_{k=0}^{\left\lfloor t/\epsilon\right\rfloor -1}\partial_{1}\tilde{w}_{i}\left(k\epsilon,j_{i-1},j_{i}\right)\bigg|^{2}\bigg)^{1/2}+tK_{T}\left(1+B\right)\epsilon\\
 & \leq K\bigg(\frac{1}{n_{i-1}n_{i}}\sum_{j_{i-1}=1}^{n_{i-1}}\sum_{j_{i}=1}^{n_{i}}\left|Q_{1}\left(\left\lfloor t/\epsilon\right\rfloor ,j_{i-1},j_{i}\right)\right|^{2}+\left|Q_{2}\left(\left\lfloor t/\epsilon\right\rfloor ,j_{i-1},j_{i}\right)\right|^{2}\bigg)^{1/2}+tK_{T}\left(1+B\right)\epsilon,
\end{align*}
where we define 
\begin{align*}
Q_{1,i}\left(\left\lfloor t/\epsilon\right\rfloor ,j_{i-1},j_{i}\right) & =\epsilon\sum_{k=0}^{\left\lfloor t/\epsilon\right\rfloor -1}\mathbb{E}_{Z}\left[\left|\Delta_{i}^{\mathbf{w}}\left(Z,j_{i-1},j_{i};\mathbf{W}\left(k\right)\right)-\Delta_{i}^{\mathbf{w}}\left(Z,j_{i-1},j_{i};\tilde{W}\left(k\epsilon\right)\right)\right|\right],\\
Q_{2,i}\left(\left\lfloor t/\epsilon\right\rfloor ,j_{i-1},j_{i}\right) & =\bigg|\epsilon\sum_{k=0}^{\left\lfloor t/\epsilon\right\rfloor -1}\xi_{i}^{\mathbf{w}}\left(k\epsilon\right)\left(\Delta_{i}^{{\bf w}}\left(z\left(k\right),j_{i-1},j_{i};\mathbf{W}\left(k\right)\right)-\mathbb{E}_{Z}\left[\Delta_{i}^{\mathbf{w}}\left(Z,j_{i-1},j_{i};\mathbf{W}\left(k\right)\right)\right]\right)\bigg|.
\end{align*}
(Here $\sum_{k=0}^{\left\lfloor t/\epsilon\right\rfloor -1}=0$ if
$\left\lfloor t/\epsilon\right\rfloor =0$.) The task is then to bound
$Q_{1,i}$ and $Q_{2,i}$.

\paragraph{Bounding $Q_{1,i}$.}

We take note of a simple identity:
\begin{align*}
\frac{1}{\left|J\right|}\sum_{j\in J}\bigg(\epsilon\sum_{k=0}^{\left\lfloor t/\epsilon\right\rfloor -1}f\left(j,k\right)\bigg)^{2} & =\epsilon^{2}\sum_{k_{1}=0}^{\left\lfloor t/\epsilon\right\rfloor -1}\sum_{k_{2}=0}^{\left\lfloor t/\epsilon\right\rfloor -1}\frac{1}{\left|J\right|}\sum_{j\in J}f\left(j,k_{1}\right)f\left(j,k_{2}\right)\\
 & \leq\epsilon^{2}\sum_{k_{1}=0}^{\left\lfloor t/\epsilon\right\rfloor -1}\sum_{k_{2}=0}^{\left\lfloor t/\epsilon\right\rfloor -1}\bigg(\frac{1}{\left|J\right|}\sum_{j\in J}\left|f\left(j,k_{1}\right)\right|^{2}\bigg)^{1/2}\bigg(\frac{1}{\left|J\right|}\sum_{j\in J}\left|f\left(j,k_{2}\right)\right|^{2}\bigg)^{1/2}\\
 & =\bigg(\epsilon\sum_{k=0}^{\left\lfloor t/\epsilon\right\rfloor -1}\bigg(\frac{1}{\left|J\right|}\sum_{j\in J}\left|f\left(j,k\right)\right|^{2}\bigg)^{1/2}\bigg)^{2}.
\end{align*}
As such, by Assumption \ref{enu:Assump_backward}:
\begin{align*}
 & \bigg(\frac{1}{n_{i-1}n_{i}}\sum_{j_{i-1}=1}^{n_{i-1}}\sum_{j_{i}=1}^{n_{i}}\left|Q_{1,i}\left(\left\lfloor t/\epsilon\right\rfloor ,j_{i-1},j_{i}\right)\right|^{2}\bigg)^{1/2}\leq\epsilon\sum_{k=0}^{\left\lfloor t/\epsilon\right\rfloor -1}D_{i}\left(k\right)\\
 & \qquad\leq K\epsilon\sum_{k=0}^{\left\lfloor t/\epsilon\right\rfloor -1}\left(D_{i}^{\left(1\right)}\left(k\right)+G_{i}\left(k\right)+\mathscr{D}_{k\epsilon}\left(\mathbf{W},\tilde{W}\right)+F_{i}\left(k\right)\right),
\end{align*}
in which we define:
\begin{align*}
D_{i}\left(k\right) & =\bigg(\frac{1}{n_{i-1}n_{i}}\sum_{j_{i-1}=1}^{n_{i-1}}\sum_{j_{i}=1}^{n_{i}}\underset{Z\sim{\cal P}}{{\rm ess\text{-}sup}}\left|\Delta_{i}^{\mathbf{w}}\left(Z,j_{i-1},j_{i};\mathbf{W}\left(k\right)\right)-\Delta_{i}^{\mathbf{w}}\left(Z,j_{i-1},j_{i};\tilde{W}\left(k\epsilon\right)\right)\right|^{2}\bigg)^{1/2},\\
D_{i}^{\left(1\right)}\left(k\right) & =\bigg(\frac{1}{n_{i}}\sum_{j_{i}=1}^{n_{i}}\underset{Z\sim{\cal P}}{{\rm ess\text{-}sup}}\left(1+\left|\Delta_{i}^{\mathbf{H}}\left(Z,j_{i};\tilde{W}\left(k\epsilon\right)\right)\right|^{2}+\left|\Delta_{i}^{\mathbf{H}}\left(Z,j_{i};\mathbf{W}\left(k\right)\right)\right|^{2}\right)\\
 & \qquad\times\frac{1}{n_{i-1}}\sum_{j_{i-1}=1}^{n_{i-1}}\underset{Z\sim{\cal P}}{{\rm ess\text{-}sup}}\left|\mathbf{H}_{i-1}\left(X,j_{i-1};\tilde{W}\left(k\epsilon\right)\right)-\mathbf{H}_{i-1}\left(X,j_{i-1};\mathbf{W}\left(k\right)\right)\right|^{2}\bigg)^{1/2},\\
G_{i}\left(k\right) & =\bigg(\frac{1}{n_{i}}\sum_{j_{i}=1}^{n_{i}}\underset{Z\sim{\cal P}}{{\rm ess\text{-}sup}}\left|\Delta_{i}^{{\bf H}}\left(Z,j_{i};\tilde{W}\left(k\epsilon\right)\right)-\Delta_{i}^{\mathbf{H}}\left(Z,j_{i};\mathbf{W}\left(k\right)\right)\right|^{2}\bigg)^{1/2},\\
F_{i}\left(k\right) & =\bigg(\frac{1}{n_{i}}\sum_{j_{i}=1}^{n_{i}}\underset{Z\sim{\cal P}}{{\rm ess\text{-}sup}}\left|\mathbf{H}_{i}\left(X,j_{i};\tilde{W}\left(k\epsilon\right)\right)-\mathbf{H}_{i}\left(X,j_{i};\mathbf{W}\left(k\right)\right)\right|^{2}\bigg)^{1/2}.
\end{align*}
By Lemma \ref{lem:bounds NN a priori} and Fact 1, on the event ${\cal E}$:
\[
D_{i}^{\left(1\right)}\left(k\right)\leq K_{T}F_{i-1}\left(k\right),
\]
which implies
\[
\bigg(\frac{1}{n_{i-1}n_{i}}\sum_{j_{i-1}=1}^{n_{i-1}}\sum_{j_{i}=1}^{n_{i}}\left|Q_{1,i}\left(\left\lfloor t/\epsilon\right\rfloor ,j_{i-1},j_{i}\right)\right|^{2}\bigg)^{1/2}\leq K_{T}\epsilon\sum_{k=0}^{\left\lfloor t/\epsilon\right\rfloor -1}\left(F_{i-1}\left(k\right)+G_{i}\left(k\right)+\mathscr{D}_{k\epsilon}\left(\mathbf{W},\tilde{W}\right)+F_{i}\left(k\right)\right).
\]
We proceed with bounding $F_{i}$ and $G_{i}$.

To bound $F_{i}$, by Assumption \ref{enu:Assump_forward} and Cauchy-Schwarz's
inequality:
\begin{align*}
\left|F_{i}\left(k\right)\right|^{2} & \leq\frac{K}{n_{i-1}n_{i}}\sum_{j_{i-1}=1}^{n_{i-1}}\sum_{j_{i}=1}^{n_{i}}\left(1+\left|\tilde{w}_{i}\left(k\epsilon,j_{i-1},j_{i}\right)\right|^{2}+\left|\mathbf{w}_{i}\left(k,j_{i-1},j_{i}\right)\right|^{2}+\left|\tilde{b}_{i}\left(k\epsilon,j_{i}\right)\right|^{2}+\left|\mathbf{b}_{i}\left(k,j_{i}\right)\right|^{2}\right)\left|F_{i-1}\left(k\right)\right|^{2}\\
 & \quad+K\mathscr{D}_{k\epsilon}^{2}\left(\mathbf{W},\tilde{W}\right)\\
 & \leq K_{T}\left|F_{i-1}\left(k\right)\right|^{2}+K\mathscr{D}_{k\epsilon}^{2}\left(\mathbf{W},\tilde{W}\right),
\end{align*}
where the last display holds on the event ${\cal E}$ by Fact 1. Notice
that by Assumption \ref{enu:Assump_forward}, $\left|F_{1}\left(k\right)\right|\leq K\mathscr{D}_{k\epsilon}\left(\mathbf{W},\tilde{W}\right)$.
Therefore, on the event ${\cal E}$,
\[
\max_{1\leq i\leq L}\left|F_{i}\left(k\right)\right|\leq K_{T}\mathscr{D}_{k\epsilon}\left(\mathbf{W},\tilde{W}\right),
\]
which is the desired bound for $F_{i}$.

Next let us bound $G_{i}$. By Assumption \ref{enu:Assump_backward},
we have: 
\[
G_{i-1}\left(k\right)\leq K\left(G_{i}^{\left(1\right)}\left(k\right)+G_{i}^{\left(2\right)}\left(k\right)+G_{i}^{\left(3\right)}\left(k\right)+G_{i}^{\left(4\right)}\left(k\right)\right),
\]
in which
\begin{align*}
G_{i-1}^{\left(1\right)}\left(k\right) & =\bigg(\frac{1}{n_{i-1}}\sum_{j_{i-1}=1}^{n_{i-1}}\bigg(\frac{1}{n_{i}}\sum_{j_{i}=1}^{n_{i}}\left(1+\left|\tilde{w}_{i}\left(k\epsilon,j_{i-1},j_{i}\right)\right|+\left|\mathbf{w}_{i}\left(k,j_{i-1},j_{i}\right)\right|+\left|\tilde{b}_{i}\left(k\epsilon,j_{i}\right)\right|+\left|\mathbf{b}_{i}\left(k,j_{i}\right)\right|\right)\\
 & \qquad\times\underset{Z\sim{\cal P}}{{\rm ess\text{-}sup}}\left|\Delta_{i}^{{\bf H}}\left(Z,j_{i};\tilde{W}\left(k\epsilon\right)\right)-\Delta_{i}^{{\bf H}}\left(Z,j_{i};\mathbf{W}\left(k\right)\right)\right|\bigg)^{2}\bigg)^{1/2},\\
G_{i-1}^{\left(2\right)}\left(k\right) & =\bigg(\frac{1}{n_{i-1}}\sum_{j_{i-1}=1}^{n_{i-1}}\bigg(\frac{1}{n_{i}}\sum_{j_{i}=1}^{n_{i}}\underset{Z\sim{\cal P}}{{\rm ess\text{-}sup}}\left(1+\left|\Delta_{i}^{{\bf H}}\left(Z,j_{i};\tilde{W}\left(k\epsilon\right)\right)\right|+\left|\Delta_{i}^{{\bf H}}\left(Z,j_{i};\mathbf{W}\left(k\right)\right)\right|\right)\\
 & \qquad\times\left(\left|\tilde{w}_{i}\left(k\epsilon,j_{i-1},j_{i}\right)-\mathbf{w}_{i}\left(k,j_{i-1},j_{i}\right)\right|+\left|\tilde{b}_{i}\left(k\epsilon,j_{i}\right)-\mathbf{b}_{i}\left(k,j_{i}\right)\right|\right)\bigg)^{2}\bigg)^{1/2},\\
G_{i-1}^{\left(3\right)}\left(k\right) & =\bigg(\frac{1}{n_{i-1}}\sum_{j_{i-1}=1}^{n_{i-1}}\bigg(\frac{1}{n_{i}}\sum_{j_{i}=1}^{n_{i}}\underset{Z\sim{\cal P}}{{\rm ess\text{-}sup}}\left(1+\left|\Delta_{i}^{{\bf H}}\left(Z,j_{i};\tilde{W}\left(k\epsilon\right)\right)\right|+\left|\Delta_{i}^{{\bf H}}\left(Z,j_{i};\mathbf{W}\left(k\right)\right)\right|\right)\\
 & \qquad\times\left(1+\left|\tilde{w}_{i}\left(k\epsilon,j_{i-1},j_{i}\right)\right|+\left|\mathbf{w}_{i}\left(k,j_{i-1},j_{i}\right)\right|+\left|\tilde{b}_{i}\left(k\epsilon,j_{i}\right)\right|+\left|\mathbf{b}_{i}\left(k,j_{i}\right)\right|\right)\\
 & \qquad\times\underset{Z\sim{\cal P}}{{\rm ess\text{-}sup}}\left|\mathbf{H}_{i}\left(X,j_{i};\tilde{W}\left(k\epsilon\right)\right)-\mathbf{H}_{i}\left(X,j_{i};\mathbf{W}\left(k\right)\right)\right|\bigg)^{2}\bigg)^{1/2},\\
G_{i-1}^{\left(4\right)}\left(k\right) & =\bigg(\frac{1}{n_{i-1}}\sum_{j_{i-1}=1}^{n_{i-1}}\bigg(\frac{1}{n_{i}}\sum_{j_{i}=1}^{n_{i}}\underset{Z\sim{\cal P}}{{\rm ess\text{-}sup}}\left(1+\left|\Delta_{i}^{{\bf H}}\left(Z,j_{i};\tilde{W}\left(k\epsilon\right)\right)\right|+\left|\Delta_{i}^{{\bf H}}\left(Z,j_{i};\mathbf{W}\left(k\right)\right)\right|\right)\\
 & \qquad\times\left(1+\left|\tilde{w}_{i}\left(k\epsilon,j_{i-1},j_{i}\right)\right|+\left|\mathbf{w}_{i}\left(k,j_{i-1},j_{i}\right)\right|+\left|\tilde{b}_{i}\left(k\epsilon,j_{i}\right)\right|+\left|\mathbf{b}_{i}\left(k,j_{i}\right)\right|\right)\\
 & \qquad\times\underset{Z\sim{\cal P}}{{\rm ess\text{-}sup}}\left|\mathbf{H}_{i-1}\left(X,j_{i-1};\tilde{W}\left(k\epsilon\right)\right)-\mathbf{H}_{i-1}\left(X,j_{i-1};\mathbf{W}\left(k\right)\right)\right|\bigg)^{2}\bigg)^{1/2}.
\end{align*}
To bound $G_{i-1}^{\left(1\right)}$, by Cauchy-Schwarz's inequality
and Fact 1, on the event ${\cal E}$:
\begin{align*}
G_{i-1}^{\left(1\right)}\left(k\right) & \leq\bigg(\frac{1}{n_{i-1}n_{i}}\sum_{j_{i-1}=1}^{n_{i-1}}\sum_{j_{i}=1}^{n_{i}}\left(1+\left|\tilde{w}_{i}\left(k\epsilon,j_{i-1},j_{i}\right)\right|+\left|\mathbf{w}_{i}\left(k,j_{i-1},j_{i}\right)\right|+\left|\tilde{b}_{i}\left(k\epsilon,j_{i}\right)\right|+\left|\mathbf{b}_{i}\left(k,j_{i}\right)\right|\right)^{2}\\
 & \qquad\times\frac{1}{n_{i}}\sum_{j_{i}=1}^{n_{i}}\underset{Z\sim{\cal P}}{{\rm ess\text{-}sup}}\left|\Delta_{i}^{{\bf H}}\left(Z,j_{i};\tilde{W}\left(k\epsilon\right)\right)-\Delta_{i}^{{\bf H}}\left(Z,j_{i};\mathbf{W}\left(k\right)\right)\right|\bigg)^{2}\bigg)^{1/2}\\
 & \leq K_{T}G_{i}\left(k\right).
\end{align*}
We also have a bound on $G_{i-1}^{\left(2\right)}$ on the event ${\cal E}$:
\begin{align*}
G_{i-1}^{\left(2\right)}\left(k\right) & \leq K\bigg(\frac{1}{n_{i}}\sum_{j_{i}=1}^{n_{i}}\underset{Z\sim{\cal P}}{{\rm ess\text{-}sup}}\left(1+\left|\Delta_{i}^{{\bf H}}\left(Z,j_{i};\tilde{W}\left(k\epsilon\right)\right)\right|^{2}+\left|\Delta_{i}^{{\bf H}}\left(Z,j_{i};\mathbf{W}\left(k\right)\right)\right|^{2}\right)\bigg)^{1/2}\\
 & \qquad\times\bigg(\frac{1}{n_{i-1}n_{i}}\sum_{j_{i-1}=1}^{n_{i-1}}\sum_{j_{i}=1}^{n_{i}}\left|\tilde{w}_{i}\left(k\epsilon,j_{i-1},j_{i}\right)-\mathbf{w}_{i}\left(k,j_{i-1},j_{i}\right)\right|^{2}+\left|\tilde{b}_{i}\left(k\epsilon,j_{i}\right)-\mathbf{b}_{i}\left(k,j_{i}\right)\right|^{2}\bigg)^{1/2}\\
 & \leq K_{T}\mathscr{D}_{k\epsilon}\left(\mathbf{W},\tilde{W}\right).
\end{align*}
Similarly, by Fact 1 and Fact 2, on the event ${\cal E}$:
\begin{align*}
G_{i-1}^{\left(3\right)}\left(k\right) & \leq K_{T}\left(1+B\right)\frac{1}{n_{i}}\sum_{j_{i}=1}^{n_{i}}\underset{Z\sim{\cal P}}{{\rm ess\text{-}sup}}\left(1+\left|\Delta_{i}^{{\bf H}}\left(Z,j_{i};\tilde{W}\left(k\epsilon\right)\right)\right|+\left|\Delta_{i}^{{\bf H}}\left(Z,j_{i};\mathbf{W}\left(k\right)\right)\right|\right)\\
 & \qquad\times\underset{Z\sim{\cal P}}{{\rm ess\text{-}sup}}\left|\mathbf{H}_{i}\left(X,j_{i};\tilde{W}\left(k\epsilon\right)\right)-\mathbf{H}_{i}\left(X,j_{i};\mathbf{W}\left(k\right)\right)\right|\\
 & \leq K_{T}\left(1+B\right)\bigg(\frac{1}{n_{i}}\sum_{j_{i}=1}^{n_{i}}\underset{Z\sim{\cal P}}{{\rm ess\text{-}sup}}\left(1+\left|\Delta_{i}^{{\bf H}}\left(Z,j_{i};\tilde{W}\left(k\epsilon\right)\right)\right|^{2}+\left|\Delta_{i}^{{\bf H}}\left(Z,j_{i};\mathbf{W}\left(k\right)\right)\right|^{2}\right)\bigg)^{1/2}F_{i}\left(k\right)\\
 & \leq K_{T}\left(1+B\right)F_{i}\left(k\right),\\
G_{i-1}^{\left(4\right)}\left(k\right) & \leq K_{T}\left(1+B\right)\frac{1}{n_{i}}\sum_{j_{i}=1}^{n_{i}}\underset{Z\sim{\cal P}}{{\rm ess\text{-}sup}}\left(1+\left|\Delta_{i}^{{\bf H}}\left(Z,j_{i};\tilde{W}\left(k\epsilon\right)\right)\right|+\left|\Delta_{i}^{{\bf H}}\left(Z,j_{i};\mathbf{W}\left(k\right)\right)\right|\right)F_{i-1}\left(k\right)\\
 & \leq K_{T}\left(1+B\right)F_{i-1}\left(k\right).
\end{align*}
Therefore on the event ${\cal E}$: 
\begin{align*}
G_{i-1}\left(k\right) & \leq K_{T}\left(G_{i}\left(k\right)+\mathscr{D}_{k\epsilon}\left(\mathbf{W},\tilde{W}\right)+\left(1+B\right)\left(F_{i}\left(k\right)+F_{i-1}\left(k\right)\right)\right)\\
 & \leq K_{T}\left(G_{i}\left(k\right)+\left(1+B\right)\mathscr{D}_{k\epsilon}\left(\mathbf{W},\tilde{W}\right)\right).
\end{align*}
Notice that by Assumption \ref{enu:Assump_backward},
\[
G_{L}\left(k\right)\leq KF_{L}\left(k\right)\leq K_{T}\mathscr{D}_{k\epsilon}\left(\mathbf{W},\tilde{W}\right).
\]
Therefore, on the event ${\cal E}$,
\[
\max_{1\leq i\leq L}\left|G_{i}\left(k\right)\right|\leq K_{T}\left(1+B\right)\mathscr{D}_{k\epsilon}\left(\mathbf{W},\tilde{W}\right),
\]
which is the desired bound for $G_{i}$.

Together these bounds yield
\[
\max_{2\leq i\leq L}\bigg(\frac{1}{n_{i-1}n_{i}}\sum_{j_{i-1}=1}^{n_{i-1}}\sum_{j_{i}=1}^{n_{i}}\left|Q_{1,i}\left(\left\lfloor t/\epsilon\right\rfloor ,j_{i-1},j_{i}\right)\right|^{2}\bigg)^{1/2}\leq\epsilon K_{T}\left(1+B\right)\sum_{k=0}^{\left\lfloor t/\epsilon\right\rfloor -1}\mathscr{D}_{k\epsilon}\left(\mathbf{W},\tilde{W}\right).
\]

\paragraph{Bounding $Q_{2,i}$.}

For brevity, let us write 
\begin{align*}
Z_{k} & =\xi_{i}^{\mathbf{w}}\left(k\epsilon\right)\left(\Delta_{i}^{{\bf w}}\left(z\left(k\right),j_{i-1},j_{i};\mathbf{W}\left(k\right)\right)-\mathbb{E}_{Z}\left[\Delta_{i}^{\mathbf{w}}\left(Z,j_{i-1},j_{i};\mathbf{W}\left(k\right)\right)\right]\right),\\
\underline{Z}_{k} & =\sum_{\ell=0}^{k-1}Z_{\ell},\qquad\underline{Z}_{0}=0.
\end{align*}
Let ${\cal F}_{k}$ be the sigma-algebra generated by $\left\{ z\left(s\right):\;s\in\left\{ 0,...,k-1\right\} \right\} $.
Recall that it is independent of the samples $\left\{ C_{1}\left(j_{1}\right),...,C_{L}\left(j_{L}\right):\;j_{i}\in\left[n_{i}\right],\;i=1,...,L\right\} $
and hence the event ${\cal E}$. Note that $\left\{ \underline{Z}_{k}\right\} _{k\in\mathbb{N}}$
is a martingale adapted to $\left\{ {\cal F}_{k}\right\} _{k\in\mathbb{N}}$.
Furthermore, for $k\leq T/\epsilon$, the martingale difference is
bounded: 
\[
\left|Z_{k}\right|\leq K\underset{Z\sim{\cal P}}{{\rm ess\text{-}sup}}\left|\Delta_{i}^{{\bf w}}\left(Z,j_{i-1},j_{i};\mathbf{W}\left(k\right)\right)\right|\leq K\left(1+\underset{Z\sim{\cal P}}{{\rm ess\text{-}sup}}\left|\Delta_{i}^{{\bf H}}\left(Z,j_{i};\mathbf{W}\left(k\right)\right)\right|\right)\leq K_{T}\left(1+B\right),
\]
which holds on the event ${\cal E}$, by Assumptions \ref{enu:Assump_lrSchedule}
and \ref{enu:Assump_backward} and Fact 2. Therefore, by Theorem \ref{thm:azuma-hilbert},
we have: 
\[
\mathbb{P}\left(\max_{u\in\left\{ 0,1,...,T/\epsilon\right\} }Q_{2,i}\left(u,j_{i-1},j_{i}\right)\geq\left(1+B\right)\zeta;{\cal E}\right)\leq2\exp\left(-\frac{\zeta^{2}}{K_{T}T\epsilon}\right).
\]

\paragraph{Putting together.}

Applying the union bound to the bound on $Q_{2,i}$, we then get that
on the event ${\cal E}$, with probability at least $1-2n_{i}n_{i-1}\exp\left(-\zeta^{2}/\left(K_{T}T\epsilon\right)\right)$,
for all $t\leq T$, 
\begin{align*}
 & \sup_{s\leq t,\;j_{i-1}\in\left[n_{i-1}\right],\;j_{i}\in\left[n_{i}\right]}\bigg(\frac{1}{n_{i-1}n_{i}}\sum_{j_{i-1}=1}^{n_{i-1}}\sum_{j_{i}=1}^{n_{i}}\left|\mathbf{w}_{i}\left(\left\lfloor s/\epsilon\right\rfloor ,j_{i-1},j_{i}\right)-\tilde{w}_{i}\left(s,j_{i-1},j_{i}\right)\right|^{2}\bigg)^{1/2}\\
 & \leq K_{T}\left(1+B\right)\bigg(\epsilon\sum_{k=0}^{\left\lfloor t/\epsilon\right\rfloor -1}\mathscr{D}_{k\epsilon}\left(\mathbf{W},\tilde{W}\right)+\zeta+\epsilon\bigg).
\end{align*}
One can obtain similar bounds for $\mathbf{b}_{i}$ and $\mathbf{w}_{1}$.
Together these bounds yield that with probability at least 
\[
1-2\bigg(n_{1}+\sum_{i=2}^{L}n_{i}n_{i-1}\bigg)\exp\left(-\zeta^{2}/\left(K_{T}T\epsilon\right)\right)-KLn_{\max}\exp\left(-Kn_{\min}^{1/52}\right),
\]
we have for all $t\leq T$, 
\[
\mathscr{D}_{\left\lfloor t/\epsilon\right\rfloor \epsilon}\left(\tilde{W},\mathbf{W}\right)\leq K_{T}\left(1+B\right)\bigg(\epsilon\sum_{k=0}^{\left\lfloor t/\epsilon\right\rfloor -1}\mathscr{D}_{k\epsilon}\left(\mathbf{W},\tilde{W}\right)+\zeta+\epsilon\bigg),
\]
which implies, by Gronwall's lemma, 
\[
\mathscr{D}_{T}\left(\tilde{W},\mathbf{W}\right)\leq\left(\zeta+\epsilon\right)\exp\left(K_{T}\left(1+B\right)\right).
\]
Choosing $\zeta=\sqrt{K_{T}\epsilon\log\left(2\left(n_{1}+\sum_{i=2}^{L}n_{i}n_{i-1}\right)/\delta\right)}$
completes the proof.
\end{proof}

\subsubsection{Proof of Proposition \ref{prop:truncation}}
\begin{proof}[Proof of Proposition \ref{prop:truncation}]
We again reuse the notation $K_{t}$ from the proof of Proposition
\ref{prop:particle coupling - bounded}. Note that $K_{t}\leq K_{T}$
for all $t\leq T$. It is easy to see from Theorem \ref{thm:existence ODE}
that the trajectory $\underline{W}\left(t\right)$ exists and is unique.
Let us recall the mapping $F$ and the space ${\cal W}_{T}$ from
the proof Theorem \ref{thm:existence ODE}; we note that $F$ is associated
with the initialization $W\left(0\right)$. Since $F\left(\underline{W}\right)\left(t\right)-W\left(0\right)=\underline{W}\left(t\right)-\underline{W}\left(0\right)$
and $W$ is a fixed point of $F$, we have:
\[
\left\Vert W-\underline{W}\right\Vert _{t}\leq\left\Vert W-\underline{W}\right\Vert _{0}+\left\Vert W-F\left(\underline{W}\right)\right\Vert _{t}=\left\Vert W-\underline{W}\right\Vert _{0}+\left\Vert F\left(W\right)-F\left(\underline{W}\right)\right\Vert _{t}.
\]
Due to truncation, it is immediate that
\[
\left|\underline{w}_{1}\left(0,c_{1}\right)\right|=\left|w_{1}\left(0,c_{1}\right)\right|,\;\left|\underline{w}_{i}\left(0,c_{i-1},c_{i}\right)\right|\leq\left|w_{i}\left(0,c_{i-1},c_{i}\right)\right|,\;\left|\underline{b}_{i}\left(0,c_{i}\right)\right|\leq\left|b_{i}\left(0,c_{i}\right)\right|,\quad2\leq i\leq L.
\]
As such, by repeating the argument of Lemma \ref{lem:bounds MF a priori},
one can show that $\underline{W}\in{\cal W}_{T}$ and that
\[
\mathbb{P}\left(\mathsf{max}_{T}^{w}\left(\underline{W}\right)\geq K_{0}\left(T\right)u\right)\leq2Le^{1-K_{1}u^{2}}\qquad\forall u\geq0.
\]
Thus, Lemma \ref{lem:difference MF} gives:
\[
\left\Vert F\left(W\right)-F\left(\underline{W}\right)\right\Vert _{t}\leq K_{T}\left(\left(1+B\right)\int_{0}^{t}\left\Vert W-\underline{W}\right\Vert _{s}ds+e^{-KB^{2}}\right),
\]
which implies, by the previous bound,
\[
\left\Vert W-\underline{W}\right\Vert _{t}\leq K_{T}\left(\left(1+B\right)\int_{0}^{t}\left\Vert W-\underline{W}\right\Vert _{s}ds+e^{-KB^{2}}\right)+\left\Vert W-\underline{W}\right\Vert _{0}.
\]
Hence Gronwall's lemma yields:
\[
\left\Vert W-\underline{W}\right\Vert _{T}\leq\left(\left\Vert W-\underline{W}\right\Vert _{0}+e^{-KB^{2}}\right)e^{K_{T}\left(1+B\right)}.
\]
Notice that for $2\leq i\leq L$:
\begin{align*}
\mathbb{E}\left[\left|w_{i}^{0}\left(C_{i-1},C_{i}\right)-\underline{w}_{i}\left(0,C_{i-1},C_{i}\right)\right|^{2}\right] & =\mathbb{E}\left[\left|w_{i}^{0}\left(C_{i-1},C_{i}\right)-B\right|^{2}\mathbb{I}\left(\left|w_{i}^{0}\left(C_{i-1},C_{i}\right)\right|>B\right)\right]\\
 & \leq\mathbb{E}\left[\left|w_{i}^{0}\left(C_{i-1},C_{i}\right)\right|^{2}\mathbb{I}\left(\left|w_{i}^{0}\left(C_{i-1},C_{i}\right)\right|>B\right)\right]\\
 & \leq\mathbb{E}\left[\left|w_{i}^{0}\left(C_{i-1},C_{i}\right)\right|^{4}\right]^{1/2}\mathbb{P}\left(\left|w_{i}^{0}\left(C_{i-1},C_{i}\right)\right|>B\right)^{1/2}\\
 & \leq Ke^{-KB^{2}},
\end{align*}
where the last displays comes from Assumption \ref{assump:init} and
in particular we have \cite{vershynin2010introduction}:
\[
\mathbb{P}\left(\left|w_{i}^{0}\left(C_{i-1},C_{i}\right)\right|\geq r\right)\leq Ke^{-Kr^{2}},\qquad\forall r\geq0.
\]
Similarly,
\[
\mathbb{E}\left[\left|b_{i}^{0}\left(C_{i}\right)-\underline{b}_{i}\left(0,C_{i}\right)\right|^{2}\right]\leq Ke^{-KB^{2}}.
\]
Also recall that $\underline{w}_{1}\left(0,c_{1}\right)=w_{1}^{0}\left(c_{1}\right)$.
As such, a similar bound holds for $\left\Vert W-\underline{W}\right\Vert _{0}$
and this gives the desired bound on $\left\Vert W-\underline{W}\right\Vert _{T}$.

The derivation for $\left\Vert \tilde{W}-\underline{\tilde{W}}\right\Vert _{T}$
is similar. Indeed Lemma \ref{lem:initialization_compare} indicates
that for any fixed $r\geq0$, with probability at least $1-KLn_{\max}\exp\left(-Ke^{-Kr^{2}}n_{\min}^{1/52}\right)$,
we have:
\[
\interleave\tilde{W}\interleave_{0}\leq\interleave W\interleave_{0}+e^{-Kr^{2}}\leq K,
\]
as well as that for all $i\in\left\{ 2,...,L\right\} $,
\begin{align*}
\frac{1}{n_{i-1}n_{i}}\sum_{j_{i-1}=1}^{n_{i-1}}\sum_{j_{i}=1}^{n_{i}}\mathbb{I}\left(\left|w_{i}^{0}\left(C_{i-1}\left(j_{i-1}\right),C_{i}\left(j_{i}\right)\right)\right|\geq r\right) & \leq\mathbb{P}\left(\left|w_{i}^{0}\left(C_{i-1},C_{i}\right)\right|\geq r\right)+e^{-Kr^{2}}\leq Ke^{-Kr^{2}},\\
\frac{1}{n_{i}}\sum_{j_{i}=1}^{n_{i}}\mathbb{I}\left(\left|b_{i}^{0}\left(C_{i}\left(j_{i}\right)\right)\right|\geq r\right) & \leq\mathbb{P}\left(\left|b_{i}^{0}\left(C_{i}\right)\right|\geq r\right)+e^{-Kr^{2}}\leq Ke^{-Kr^{2}}.
\end{align*}
By taking $r=B$ and performing an argument similar to the bounding
of $\left\Vert W-\underline{W}\right\Vert _{T}$, we obtain:
\[
\left\Vert \tilde{W}-\underline{\tilde{W}}\right\Vert _{T}\leq\left(\left\Vert \tilde{W}-\underline{\tilde{W}}\right\Vert _{0}+e^{-KB^{2}}\right)e^{K_{T}\left(1+B\right)}\leq Ke^{-KB^{2}+K_{T}\left(1+B\right)},
\]
with probability at least $1-KLn_{\max}\exp\left(-Ke^{-KB^{2}}n_{\min}^{1/52}\right)$.
The derivation for $\left\Vert \mathbf{W}-\underline{\mathbf{W}}\right\Vert _{T}$
is also similar.
\end{proof}

\subsection{Proofs of Corollaries \ref{cor:gradient descent quality}, \ref{cor:MF_insensitivity}
and \ref{cor:MF-twolayers}}
\begin{lem}
\label{lem:a priori MF - time difference}Consider the MF trajectory
$W\left(t\right)$, $t\leq T$, under Assumptions \ref{enu:Assump_lrSchedule}-\ref{enu:Assump_backward}
and \ref{assump:init}. For any $\zeta\geq0$, $\left\Vert W-W_{\zeta}\right\Vert _{T}\leq K_{T+\zeta}\zeta$,
where $W_{\zeta}\left(t\right)=W\left(t+\zeta\right)$ and $K_{T+\zeta}$
is a finite constant that depends on the initialization $W\left(0\right)$
and grows continuously with $\zeta$.
\end{lem}

\begin{proof}
We reuse the notation $K_{t}$ from the proof of Proposition \ref{prop:particle coupling - bounded}.
By Assumption \ref{enu:Assump_backward} and Lemma \ref{lem:bounds NN a priori},
for $2\leq i\leq L$,
\[
\mathbb{E}\left[\sup_{t\leq T+\zeta}\mathbb{E}_{Z}\left[\left|\Delta_{i}^{w}\left(t,Z,C_{i-1},C_{i}\right)\right|^{2}\right]\right]\leq K\bigg(1+\mathbb{E}\left[\sup_{t\leq T+\zeta}\mathbb{E}_{Z}\left[\left|\Delta_{i}^{H}\left(t,Z,C_{i}\right)\right|^{2}\right]\right]\bigg)\leq K_{T+\zeta},
\]
and therefore, by Assumption \ref{enu:Assump_lrSchedule},
\[
\mathbb{E}\left[\sup_{t\leq T}\left|w_{i}\left(t+\zeta,C_{i-1},C_{i}\right)-w_{i}\left(t,C_{i-1},C_{i}\right)\right|^{2}\right]^{1/2}\leq K_{T+\zeta}\zeta.
\]
One can also deduce a similar bound for $b_{i}$ and $w_{1}$.
\end{proof}
\begin{proof}[Proof of Corollary \ref{cor:gradient descent quality}]
We reuse the notation $K_{t}$ from the proof of Proposition \ref{prop:particle coupling - bounded}.
We have the following decomposition for $i\in\left[L\right]$:
\begin{align*}
 & \bigg|\frac{1}{n_{i}}\sum_{j_{i}=1}^{n_{i}}\mathbb{E}_{Z}\left[\psi\left({\bf H}_{i}\left(\left\lfloor t/\epsilon\right\rfloor ,X,j_{i}\right)\right)\right]-\mathbb{E}_{Z}\mathbb{E}_{C_{i}}\left[\psi\left(H_{i}\left(t,X,C_{i}\right)\right)\right]\bigg|\\
 & \leq\frac{1}{n_{i}}\sum_{j_{i}=1}^{n_{i}}\left|\mathbb{E}_{Z}\left[\psi\left({\bf H}_{i}\left(\left\lfloor t/\epsilon\right\rfloor ,X,j_{i}\right)\right)\right]-\mathbb{E}_{Z}\left[\psi\left(H_{i}\left(\left\lfloor t/\epsilon\right\rfloor \epsilon,X,C_{i}\left(j_{i}\right)\right)\right)\right]\right|\\
 & \qquad+\bigg|\frac{1}{n_{i}}\sum_{j_{i}=1}^{n_{i}}\mathbb{E}_{Z}\left[\psi\left(H_{i}\left(\left\lfloor t/\epsilon\right\rfloor \epsilon,X,C_{i}\left(j_{i}\right)\right)\right)\right]-\mathbb{E}_{Z}\mathbb{E}_{C_{i}}\left[\psi\left(H_{i}\left(\left\lfloor t/\epsilon\right\rfloor \epsilon,X,C_{i}\right)\right)\right]\bigg|\\
 & \qquad+\mathbb{E}_{Z}\mathbb{E}_{C_{i}}\left[\left|\psi\left(H_{i}\left(\left\lfloor t/\epsilon\right\rfloor \epsilon,X,C_{i}\right)\right)-\psi\left(H_{i}\left(t,X,C_{i}\right)\right)\right|\right]\\
 & \leq K\bigg(\frac{1}{n_{i}}\sum_{j_{i}=1}^{n_{i}}\mathbb{E}_{Z}\left[\left|{\bf H}_{i}\left(\left\lfloor t/\epsilon\right\rfloor ,X,j_{i}\right)-H_{i}\left(\left\lfloor t/\epsilon\right\rfloor \epsilon,X,C_{i}\left(j_{i}\right)\right)\right|^{2}\right]\bigg)^{1/2}\\
 & \qquad+\bigg|\frac{1}{n_{i}}\sum_{j_{i}=1}^{n_{i}}\mathbb{E}_{Z}\left[\psi\left(H_{i}\left(\left\lfloor t/\epsilon\right\rfloor \epsilon,X,C_{i}\left(j_{i}\right)\right)\right)\right]-\mathbb{E}_{Z}\mathbb{E}_{C_{i}}\left[\psi\left(H_{i}\left(\left\lfloor t/\epsilon\right\rfloor \epsilon,X,C_{i}\right)\right)\right]\bigg|\\
 & \qquad+K\mathbb{E}_{Z}\mathbb{E}_{C_{i}}\left[\left|H_{i}\left(\left\lfloor t/\epsilon\right\rfloor \epsilon,X,C_{i}\right)-H_{i}\left(t,X,C_{i}\right)\right|\right]\\
 & =Q_{1,i}\left(t\right)+Q_{2,i}\left(t\right)+Q_{3,i}\left(t\right),
\end{align*}
where we use the fact $\psi$ is $K$-Lipschitz and Cauchy-Schwarz's
inequality. We provide bounds on each term. Note that by the fact
$\psi$ is $K$-Lipschitz and Assumption \ref{enu:Assump_forward}:
\[
\left|\mathbb{E}_{Z}\left[\psi\left(Y,\hat{\mathbf{y}}\left(\left\lfloor t/\epsilon\right\rfloor ,X\right)\right)\right]-\mathbb{E}_{Z}\left[\psi\left(Y,\hat{y}\left(t,X\right)\right)\right]\right|\leq K\mathbb{E}_{Z}\left[\left|\mathbf{H}_{L}\left(\left\lfloor t/\epsilon\right\rfloor ,X,1\right)-H_{L}\left(t,X,1\right)\right|\right],
\]
and as such, bounding $Q_{1,L}$ gives the last claim in the corollary.

\paragraph*{Bounding $Q_{1,i}$.}

By Assumption \ref{enu:Assump_forward} and Cauchy-Schwarz's inequality,
for $i\geq2$:
\begin{align*}
\left|Q_{1,i}\left(t\right)\right|^{2} & \leq\frac{K}{n_{i-1}n_{i}}\sum_{j_{i-1}=1}^{n_{i-1}}\sum_{j_{i}=1}^{n_{i}}\Big(1+\left|\mathbf{w}_{i}\left(\left\lfloor t/\epsilon\right\rfloor ,j_{i-1},j_{i}\right)\right|^{2}+\left|w_{i}\left(t,C_{i-1}\left(j_{i-1}\right),C_{i}\left(j_{i}\right)\right)\right|^{2}\\
 & \qquad\qquad\qquad+\left|\mathbf{b}_{i}\left(\left\lfloor t/\epsilon\right\rfloor ,j_{i}\right)\right|^{2}+\left|b_{i}\left(t,C_{i}\left(j_{i}\right)\right)\right|^{2}\Big)\left|Q_{1,i-1}\left(t\right)\right|^{2}\\
 & \qquad+K\mathscr{D}_{t}^{2}\left(W,\mathbf{W}\right)\\
 & \leq\frac{K}{n_{i-1}n_{i}}\sum_{j_{i-1}=1}^{n_{i-1}}\sum_{j_{i}=1}^{n_{i}}\left(1+\left|w_{i}\left(t,C_{i-1}\left(j_{i-1}\right),C_{i}\left(j_{i}\right)\right)\right|^{2}+\left|b_{i}\left(t,C_{i}\left(j_{i}\right)\right)\right|^{2}+\mathscr{D}_{t}^{2}\left(W,\mathbf{W}\right)\right)\left|Q_{1,i-1}\left(t\right)\right|^{2}\\
 & \qquad+K\mathscr{D}_{t}^{2}\left(W,\mathbf{W}\right)\\
 & \stackrel{\left(a\right)}{\leq}K_{T}\left(1+\mathscr{D}_{t}^{2}\left(W,\mathbf{W}\right)\right)\left|Q_{1,i-1}\left(t\right)\right|^{2}+K\mathscr{D}_{t}^{2}\left(W,\mathbf{W}\right)\\
 & \leq K_{T}\left(1+\mathscr{D}_{t}^{2}\left(W,\mathbf{W}\right)\right)\left|Q_{1,i-1}\left(t\right)\right|^{2},
\end{align*}
where $\left(a\right)$ holds for all $t\leq T$ and all $i\geq2$
with probability at least $1-KLn_{\max}\exp\left(-Kn_{\min}^{1/52}\right)$,
by Lemmas \ref{lem:initialization_compare} and \ref{lem:bounds NN a priori}.
Also from Assumption \ref{enu:Assump_forward}: $Q_{1,1}\left(t\right)\leq K\mathscr{D}_{t}\left(W,\mathbf{W}\right)$.
We thus have:
\[
\max_{i\in\left[L\right]}\sup_{t\leq T}Q_{1,i}\left(t\right)\leq K_{T}\left(1+\mathscr{D}_{T}^{L}\left(W,\mathbf{W}\right)\right)\mathscr{D}_{T}\left(W,\mathbf{W}\right).
\]
with probability at least $1-KLn_{\max}\exp\left(-Kn_{\min}^{1/52}\right)$.

\paragraph*{Bounding $Q_{2,i}$.}

Recall that $\left\{ C_{i}\left(j_{i}\right)\right\} _{j_{i}\in\left[n_{i}\right]}$
are $\eta_{i}$-independent and $\eta_{i}\leq n_{i}^{-1/2}$. Since
$\psi$ is $K$-bounded, we have by Theorem \ref{thm:iid-hilbert-concentration}
and the union bound that
\[
\sup_{t\le T}Q_{2,i}\left(t\right)\leq\sqrt{\frac{K}{n_{i}}\log\left(\frac{KT}{\epsilon\delta}\right)}
\]
with probability at least $1-\delta$.

\paragraph*{Bounding $Q_{3,i}$.}

By Assumption \ref{enu:Assump_forward} and Lemma \ref{lem:bounds NN a priori},
for all $t\leq T$ and $i\geq2$:
\begin{align*}
\left|Q_{3,i}\left(t\right)\right|^{2} & \leq K\mathbb{E}\left[1+\left|w_{i}\left(t,C_{i-1},C_{i}\right)\right|^{2}+\left|b_{i}\left(t,C_{i}\right)\right|^{2}+\left|w_{i}\left(\left\lfloor t/\epsilon\right\rfloor \epsilon,C_{i-1},C_{i}\right)\right|^{2}+\left|b_{i}\left(\left\lfloor t/\epsilon\right\rfloor \epsilon,C_{i}\right)\right|^{2}\right]\left|Q_{3,i-1}\left(t\right)\right|^{2}\\
 & \quad+K\mathbb{E}\left[\left|w_{i}\left(t,C_{i-1},C_{i}\right)-w_{i}\left(\left\lfloor t/\epsilon\right\rfloor \epsilon,C_{i-1},C_{i}\right)\right|^{2}+\left|b_{i}\left(t,C_{i}\right)-b_{i}\left(\left\lfloor t/\epsilon\right\rfloor \epsilon,C_{i}\right)\right|^{2}\right]\\
 & \leq K_{T}\left|Q_{3,i-1}\left(t\right)\right|^{2}+K\mathbb{E}\left[\left|w_{i}\left(t,C_{i-1},C_{i}\right)-w_{i}\left(\left\lfloor t/\epsilon\right\rfloor \epsilon,C_{i-1},C_{i}\right)\right|^{2}+\left|b_{i}\left(t,C_{i}\right)-b_{i}\left(\left\lfloor t/\epsilon\right\rfloor \epsilon,C_{i}\right)\right|^{2}\right].
\end{align*}
Similarly,
\[
\left|Q_{3,1}\left(t\right)\right|^{2}\leq K\mathbb{E}\left[\left|w_{1}\left(t,C_{1}\right)-w_{1}\left(\left\lfloor t/\epsilon\right\rfloor \epsilon,C_{1}\right)\right|^{2}\right].
\]
We thus obtain from Lemma \ref{lem:a priori MF - time difference}:
\[
\max_{i\in\left[L\right]}\sup_{t\leq T}Q_{3,i}\left(t\right)\leq K_{T}\epsilon.
\]

\paragraph*{Putting together.}

All previous bounds show that
\begin{align*}
 & \sup_{t\leq T}\bigg|\frac{1}{n_{i}}\sum_{j_{i}=1}^{n_{i}}\mathbb{E}_{Z}\left[\psi\left({\bf H}_{i}\left(\left\lfloor t/\epsilon\right\rfloor ,X,j_{i}\right)\right)\right]-\mathbb{E}_{Z}\mathbb{E}_{C_{i}}\left[\psi\left(H_{i}\left(t,X,C_{i}\right)\right)\right]\bigg|\\
 & \leq K_{T}\left(1+\mathscr{D}_{T}^{L}\left(W,\mathbf{W}\right)\right)\mathscr{D}_{T}\left(W,\mathbf{W}\right)+\sqrt{\frac{K}{n_{i}}\log\left(\frac{KT}{\epsilon\delta}\right)}+K_{T}\epsilon
\end{align*}
with probability at least $1-\delta-KLn_{\max}\exp\left(-Kn_{\min}^{1/52}\right)$.
Together with Theorem \ref{thm:gradient descent coupling}, we obtain
the claim.
\end{proof}
\begin{proof}[Proof of Corollary \ref{cor:MF_insensitivity}]
In the following, for a set $J=\left\{ N_{1},...,N_{L}\right\} $
with $N_{L}=1$, we write $J\to\infty$ to mean that $N_{1},...,N_{L-1}\to\infty$
such that for $N_{\max}=\max J$ and $N_{\min}=\min\left\{ N_{1},...,N_{L-1}\right\} $,
we have $N_{\min}\to\infty$ and $N_{\min}^{-c}\log N_{\max}\to0$
for any $c>0$.

For a given $T\geq0$ and a set of integers $I=\left\{ n_{1},...,n_{L}\right\} $,
for any two sets ${\cal W}^{\left(1\right)}$ and ${\cal W}^{\left(2\right)}$
of the form
\[
{\cal W}^{\left(1\right)}=\Big\{ w_{1}^{\left(1\right)}\left(t,r_{1}\right),\;w_{i}^{\left(1\right)}\left(t,r_{i-1},r_{i}\right),\;b_{i}^{\left(1\right)}\left(t,r_{i}\right):\quad r_{i}\in\left[n_{i}\right],\;i\in\left[L\right],\;t\in\left[0,T\right]\Big\},
\]
and similar for ${\cal W}^{\left(2\right)}$, let us equip a distance
metric:
\begin{align*}
d_{I,T}\left({\cal W}^{\left(1\right)},{\cal W}^{\left(2\right)}\right) & =\max\bigg(\max_{1\leq i\leq L}d_{I,T}^{w,i}\left({\cal W}^{\left(1\right)},{\cal W}^{\left(2\right)}\right),\;\max_{2\leq i\leq L}d_{I,T}^{b,i}\left({\cal W}^{\left(1\right)},{\cal W}^{\left(2\right)}\right)\bigg),\\
d_{I,T}^{w,1}\left({\cal W}^{\left(1\right)},{\cal W}^{\left(2\right)}\right) & =\bigg(\frac{1}{n_{1}}\sum_{r_{1}=1}^{n_{1}}\sup_{t\leq T}\left|w_{1}^{\left(1\right)}\left(t,r_{1}\right)-w_{1}^{\left(2\right)}\left(t,r_{1}\right)\right|^{2}\bigg)^{1/2},\\
d_{I,T}^{w,i}\left({\cal W}^{\left(1\right)},{\cal W}^{\left(2\right)}\right) & =\bigg(\frac{1}{n_{i-1}n_{i}}\sum_{r_{i-1}=1}^{n_{i-1}}\sum_{r_{i}=1}^{n_{i}}\sup_{t\leq T}\left|w_{i}^{\left(1\right)}\left(t,r_{i-1},r_{i}\right)-w_{i}^{\left(2\right)}\left(t,r_{i-1},r_{i}\right)\right|^{2}\bigg)^{1/2},\\
d_{I,T}^{b,i}\left({\cal W}^{\left(1\right)},{\cal W}^{\left(2\right)}\right) & =\bigg(\frac{1}{n_{i}}\sum_{r_{i}=1}^{n_{i}}\sup_{t\leq T}\left|b_{i}^{\left(1\right)}\left(t,r_{i}\right)-b_{i}^{\left(2\right)}\left(t,r_{i}\right)\right|^{2}\bigg)^{1/2},\qquad2\leq i\leq L.
\end{align*}
Let us also consider the space ${\cal F}_{I,T}$ of $1$-bounded Lipchitz
functions $f$ w.r.t. to this distance metric:
\[
\left|f\left({\cal W}^{\left(1\right)}\right)-f\left({\cal W}^{\left(2\right)}\right)\right|\leq2\land d_{I,T}\left({\cal W}^{\left(1\right)},{\cal W}^{\left(2\right)}\right).
\]

\paragraph*{Step 1: Coupling via finite-width networks.}

Recall that $\left(\Omega,P,\left\{ w_{i}^{0}\right\} _{i\in\left[L\right]},\left\{ b_{i}^{0}\right\} _{2\leq i\leq L}\right)$
satisfies Assumption \ref{assump:neuronal-embedding}, i.e. $\bar{\eta}$-independence.
As such, for each index $J=\left\{ N_{1},...,N_{L}\right\} $ of $\mathsf{Init}$,
one can find a sampling rule for which the samples $\left\{ C_{i}\left(j_{i}\right)\right\} _{j_{i}\in\left[N_{i}\right]}$
are $\eta_{i}$-independent for $i\leq L-1$ and $\eta_{i}\to0$ as
$N_{i}\to\infty$. Then one obtains a neural network initialization
${\bf W}(0)$ with law $\rho$ by setting 
\[
{\bf w}_{1}\left(0,j_{1}\right)=w_{1}\left(0,C_{1}(j_{1})\right),\;{\bf w}_{i}\left(0,j_{i-1},j_{i}\right)=w_{i}\left(0,C_{i-1}\left(j_{i-1}\right),C_{i}\left(j_{i}\right)\right),
\]
\[
{\bf b}_{i}\left(0,j_{i}\right)=b_{i}\left(0,C_{i}\left(j_{i}\right)\right),\quad2\leq i\leq L.
\]
Similarly using $\left(\hat{\Omega},\hat{P},\left\{ \hat{w}_{i}^{0}\right\} _{i\in\left[L\right]},\left\{ \hat{b}_{i}^{0}\right\} _{2\leq i\leq L}\right)$we
obtain $\hat{{\bf W}}(0)$ with the same law $\rho$ by setting 
\[
\hat{{\bf w}}_{1}\left(0,j_{1}\right)=\hat{w}_{1}\left(0,\hat{C}_{1}(j_{1})\right),\;\hat{{\bf w}}_{i}\left(0,j_{i-1},j_{i}\right)=\hat{w}_{i}\left(0,\hat{C}_{i-1}\left(j_{i-1}\right),\hat{C}_{i}\left(j_{i}\right)\right),
\]
\[
\hat{{\bf b}}_{i}\left(0,j_{i}\right)=\hat{b}_{i}\left(0,\hat{C}_{i}\left(j_{i}\right)\right),\quad2\leq i\leq L,
\]
where $\left\{ \hat{C}_{i}\left(j_{i}\right)\right\} _{j_{i}\in\left[N_{i}\right]}$
are $\eta_{i}$-independent for $i\leq L-1$. We consider the evolution
${\bf W}\left(t\right)$ starting from ${\bf W}\left(0\right)$ (which
is independent of $W$ once ${\bf W}(0)$ is fixed). Note that $\mathbf{W}\left(t\right)$
is a deterministic function of its initialization $\mathbf{W}\left(0\right)$
and the data $\left\{ z\left(s\right)\right\} _{s\leq t}$. Similarly,
we consider the counterpart for $\hat{W}$: the evolution $\hat{\mathbf{W}}\left(t\right)$
as a function of the initialization $\hat{\mathbf{W}}\left(0\right)$
and the data $\left\{ \hat{z}\left(s\right)\right\} _{s\leq t}$.
Due to sharing the same distribution for both the initialization and
the data, these evolutions have the same law. In other words, for
any $\theta>0$,
\begin{align*}
\inf_{{\rm coupling\,of\,}\left(\mathbf{W},\hat{{\bf W}}\right)}\mathbb{P}\left(\left\Vert \mathbf{W}-\hat{{\bf W}}\right\Vert _{T}\geq\theta\right)= & 0,
\end{align*}
in which
\begin{align*}
\left\Vert \mathbf{W}-\hat{{\bf W}}\right\Vert _{T}=\max\bigg( & \bigg(\frac{1}{N_{1}}\sum_{j_{1}=1}^{N_{1}}\sup_{t\leq T}\left|{\bf w}_{1}\left(\left\lfloor t/\epsilon\right\rfloor ,j_{1}\right)-\hat{{\bf w}}_{1}\left(\left\lfloor t/\epsilon\right\rfloor ,j_{1}\right)\right|^{2}\bigg)^{1/2},\\
 & \max_{2\leq i\leq L}\bigg(\frac{1}{N_{i-1}N_{i}}\sum_{j_{i-1}=1}^{N_{i-1}}\sum_{j_{i}=1}^{N_{i}}\sup_{t\leq T}\left|{\bf w}_{i}\left(\left\lfloor t/\epsilon\right\rfloor ,j_{i-1},j_{i}\right)-\hat{{\bf w}}_{i}\left(\left\lfloor t/\epsilon\right\rfloor ,j_{i-1},j_{i}\right)\right|^{2}\bigg)^{1/2},\\
 & \max_{2\leq i\leq L}\bigg(\frac{1}{N_{i}}\sum_{j_{i}=1}^{N_{i}}\sup_{t\leq T}\left|{\bf b}_{i}\left(\left\lfloor t/\epsilon\right\rfloor ,j_{i}\right)-\hat{{\bf b}}_{i}\left(\left\lfloor t/\epsilon\right\rfloor ,j_{i}\right)\right|^{2}\bigg)^{1/2}\bigg).
\end{align*}
Theorem \ref{thm:gradient descent coupling} implies that following
the coupling procedure, for any $\delta>0$, with probability at least
$1-\delta-o_{J;L}$,
\[
\mathscr{D}_{T}\left(W,\mathbf{W}\right)\leq o_{\epsilon,J;\delta,T,L},
\]
where here and in the following, we denote by $o_{J;L}$ and $o_{\epsilon,J;\delta,T,L}$
appropriate quantities that may change from line to line with $o_{J;L}\to0$
and $o_{\epsilon,J;\delta,T,L}\to0$ as the learning rate $\epsilon\to0$
and $J\to\infty$. Here without loss of generality, we assume $o_{\epsilon,J;\delta,T,L}>0$.
We also have a similar result for $\mathscr{D}_{T}\left(\hat{W},\hat{\mathbf{W}}\right)$.
As such,
\begin{align*}
 & \inf_{{\rm coupling\,of\,}\left({\cal W},\hat{{\cal W}}\right)}\mathbb{P}\left(d_{J,T}\left({\cal W}_{J,T},\hat{{\cal W}}_{J,T}\right)\geq o_{\epsilon,J;\delta,T,L}\right)\\
 & \leq\mathbb{P}\left(\mathscr{D}_{T}\left(W,\mathbf{W}\right)\geq o_{\epsilon,J;\delta,T,L}\right)+\mathbb{P}\left(\mathscr{D}_{T}\left(\hat{W},\hat{\mathbf{W}}\right)\geq o_{\epsilon,J;\delta,T,L}\right)\\
 & \quad+\inf_{{\rm coupling\,of\,}\left(\mathbf{W},\hat{{\bf W}}\right)}\mathbb{P}\left(\left\Vert \mathbf{W}-\hat{{\bf W}}\right\Vert _{T}\geq o_{\epsilon,J;\delta,T,L}\right)\\
 & \leq2\delta+2o_{J;L},
\end{align*}
where we define
\begin{align*}
{\cal W}_{J,T} & =\Big\{ w_{1}\left(t,C_{1}\left(j_{1}\right)\right),\;w_{i}\left(t,C_{i-1}\left(j_{i-1}\right),C_{i}\left(j_{i}\right)\right),\;b_{i}\left(t,C_{i}\left(j_{i}\right)\right):\quad j_{i}\in\left[N_{i}\right],\;i\in\left[L\right],\;t\in\left[0,T\right]\Big\},\\
\hat{{\cal W}}_{J,T} & =\Big\{\hat{w}_{1}\left(t,\hat{C}_{1}\left(j_{1}\right)\right),\;\hat{w}_{i}\left(t,\hat{C}_{i-1}\left(j_{i-1}\right),\hat{C}_{i}\left(j_{i}\right)\right),\;\hat{b}_{i}\left(t,\hat{C}_{i}\left(j_{i}\right)\right):\quad j_{i}\in\left[N_{i}\right],\;i\in\left[L\right],\;t\in\left[0,T\right]\Big\}.
\end{align*}
This gives a sense of approximate closeness between $W$ and $\hat{W}$
on a set $J=\left\{ N_{1},...,N_{L}\right\} $ with sufficiently large
size $N_{i}$, importantly under the assumption of $\bar{\eta}$-independence.
To extend this to arbitrary finite sizes, we perform the following
argument.

\paragraph*{Step 2: Extension to finite sizes.}

For a given fixed set $I=\left\{ n_{1},...,n_{L}\right\} $ with $n_{L}=1$,
let us consider the following sub-sampling procedure: for each $i\in\left[L\right]$
and each $r_{i}\in\left[n_{i}\right]$, we independently sample $V_{i}\left(r_{i}\right)$
uniformly from $\left[N_{i}\right]$, and then set $S_{i}\left(r_{i}\right)=C_{i}\left(V_{i}\left(r_{i}\right)\right)$
and $\hat{S}_{i}\left(r_{i}\right)=\hat{C}_{i}\left(V_{i}\left(r_{i}\right)\right)$.
Let us define
\[
{\cal W}_{I,T}=\Big\{ w_{1}\left(t,S_{1}\left(r_{1}\right)\right),\;w_{i}\left(t,S_{i-1}\left(r_{i-1}\right),S_{i}\left(r_{i}\right)\right),\;b_{i}\left(t,S_{i}\left(r_{i}\right)\right):\quad r_{i}\in\left[n_{i}\right],\;i\in\left[L\right],\;t\in\left[0,T\right]\Big\},
\]
and $\hat{{\cal W}}_{I,T}$ similarly. We prove that ${\rm Law}({\cal W}_{I,T})$
and ${\rm Law}(\hat{{\cal W}}_{I,T})$ are close in an appropriate
sense. This shall be done via a connection with $d_{J,T}\left({\cal W}_{J,T},\hat{{\cal W}}_{J,T}\right)$
on the set $J$.

Let $\mathbb{E}_{V}$ denote the expectation w.r.t. the sub-sampling
procedure only (i.e. w.r.t. the randomness of $\left\{ V_{i}\left(r_{i}\right):\;r_{i}\in\left[n_{i}\right],\;i\in\left[L\right]\right\} $).
Notice that
\begin{align*}
\mathbb{E}_{V}\left[\left|d_{I,T}^{w,i}\left({\cal W}_{I,T},\hat{{\cal W}}_{I,T}\right)\right|^{2}\right] & =\left|d_{J,T}^{w,i}\left({\cal W}_{J,T},\hat{{\cal W}}_{J,T}\right)\right|^{2},\\
\mathbb{E}_{V}\left[\left|d_{I,T}^{b,i}\left({\cal W}_{I,T},\hat{{\cal W}}_{I,T}\right)\right|^{2}\right] & =\left|d_{J,T}^{b,i}\left({\cal W}_{J,T},\hat{{\cal W}}_{J,T}\right)\right|^{2}.
\end{align*}
Using this fact and Markov's inequality, for any $\theta>0$:
\begin{align*}
 & \mathbb{P}\left(d_{I,T}^{w,i}\left({\cal W}_{I,T},\hat{{\cal W}}_{I,T}\right)\geq\theta\right)\\
 & \leq\mathbb{P}\left(d_{J,T}^{w,i}\left({\cal W}_{J,T},\hat{{\cal W}}_{J,T}\right)\geq\theta^{2}\right)+\mathbb{E}\left[\mathbb{E}_{V}\left[\mathbb{I}\left(d_{I,T}^{w,i}\left({\cal W}_{I,T},\hat{{\cal W}}_{I,T}\right)\geq\theta\right)\right]\mathbb{I}\left(d_{J,T}^{w,i}\left({\cal W}_{J,T},\hat{{\cal W}}_{J,T}\right)\leq\theta^{2}\right)\right]\\
 & \leq\mathbb{P}\left(d_{J,T}^{w,i}\left({\cal W}_{J,T},\hat{{\cal W}}_{J,T}\right)\geq\theta^{2}\right)+\theta^{-2}\mathbb{E}\left[\mathbb{E}_{V}\left[\left|d_{I,T}^{w,i}\left({\cal W}_{I,T},\hat{{\cal W}}_{I,T}\right)\right|^{2}\right]\mathbb{I}\left(d_{J,T}^{w,i}\left({\cal W}_{J,T},\hat{{\cal W}}_{J,T}\right)\leq\theta^{2}\right)\right]\\
 & \leq\mathbb{P}\left(d_{J,T}^{w,i}\left({\cal W}_{J,T},\hat{{\cal W}}_{J,T}\right)\geq\theta^{2}\right)+\theta^{2}.
\end{align*}
The bound on $d_{J,T}\left({\cal W}_{J,T},\hat{{\cal W}}_{J,T}\right)$
gives:
\[
\inf_{{\rm coupling\,of\,}\left({\cal W},\hat{{\cal W}}\right)}\mathbb{P}\left(d_{J,T}^{w,i}\left({\cal W}_{J,T},\hat{{\cal W}}_{J,T}\right)\geq\tilde{O}_{\epsilon,J;\delta,T,L}\left(1\right)\right)\leq2\delta+2o_{J;L}.
\]
Then by taking $\theta=\sqrt{o_{\epsilon,J;\delta,T,L}}$, we have:
\begin{align*}
\inf_{{\rm coupling\,of\,}\left({\cal W},\hat{{\cal W}}\right)}\mathbb{P}\left(d_{I,T}^{w,i}\left({\cal W}_{I,T},\hat{{\cal W}}_{I,T}\right)\geq\sqrt{o_{\epsilon,J;\delta,T,L}}\right) & \leq e_{T,L}\left(\delta,\epsilon,J\right),\\
e_{T,L}\left(\delta,\epsilon,J\right) & =2\delta+2o_{J;L}+o_{\epsilon,J;\delta,T,L}.
\end{align*}
A similar fact holds for $d_{I,T}^{b,i}$, and therefore by the union
bound,
\[
\inf_{{\rm coupling\,of\,}\left({\cal W},\hat{{\cal W}}\right)}\mathbb{P}\left(d_{I,T}\left({\cal W}_{I,T},\hat{{\cal W}}_{I,T}\right)\geq\sqrt{o_{\epsilon,J;\delta,T,L}\left(\epsilon,J\right)}\right)\leq2Le_{T,L}\left(\delta,\epsilon,J\right).
\]
In particular, this implies for any $f\in{\cal F}_{I,T}$:
\[
\left|\mathbb{E}\left[f\left({\cal W}_{I,T}\right)\right]-\mathbb{E}\left[f\left({\cal \hat{W}}_{I,T}\right)\right]\right|\leq4Le_{T,L}\left(\delta,\epsilon,J\right)+\sqrt{o_{\epsilon,J;\delta,T,L}}.
\]
This describes closeness between ${\rm Law}({\cal W}_{I,T})$ and
${\rm Law}(\hat{{\cal W}}_{I,T})$. Note that this is not sufficient
to conclude the proof (via taking $\epsilon\to0$, $J\to\infty$,
$\delta\to0$): the left-hand side involves the random variables $C_{i}\left(V_{i}\left(r_{i}\right)\right)$,
which firstly does not remove $\bar{\eta}$-independence and secondly
is not independent of $J$ since $V_{i}\left(r_{i}\right)\in\left[N_{i}\right]$.

\paragraph*{Step 3: Removing $\bar{\eta}$-independence.}

Let $\left\{ U_{i}\left(j_{i}\right)\right\} _{j_{i}\in\left[N_{i}\right]}$
be drawn i.i.d. from $P_{i}$, independently for each $i\in\left[L\right]$,
as in the statement of the corollary. We recall that $\left\{ C_{i}\left(j_{i}\right)\right\} _{j_{i}\in\left[N_{i}\right]}$
are $\eta_{i}$-independent for $i\leq L-1$ with $\eta_{i}\to0$
as $N_{i}\to\infty$. We also note $N_{L}=n_{L}=1$ and hence $U_{L}\left(1\right)=C_{L}\left(1\right)$.
As such, for any $1$-bounded function $g$:
\[
\left|\mathbb{E}_{\sim V}\left[g\left(S_{i}\left(r_{i}\right):\;r_{i}\in\left[n_{i}\right],\;i\in\left[L\right]\right)\right]-\mathbb{E}_{\sim V}\left[g\left(U_{i}\left(V_{i}\left(r_{i}\right)\right):\;r_{i}\in\left[n_{i}\right],\;i\in\left[L\right]\right)\right]\right|\le o_{J;L,I},
\]
where $o_{J;L,I}$ is a deterministic quantity such that $o_{J;L,I}\to0$
as $J\to\infty$ and $\mathbb{E}_{\sim V}$ denotes the expectation
w.r.t. everything excluding the sub-sampling procedure. Indeed supposing
that $\left\{ \tilde{V}_{1}\left(1\right),...,\tilde{V}_{1}\left(n_{1}\right)\right\} $
is a permutation of $\left\{ V_{1}\left(1\right),...,V_{1}\left(n_{1}\right)\right\} $
such that $\tilde{V}_{1}\left(1\right)>...>\tilde{V}_{1}\left(n_{1}\right)$.
Using the $\bar{\eta}$-independence property, we have the following
for $|\zeta_{1}|\le\eta_{1}$:
\begin{align*}
 & \mathbb{E}_{\sim V}\left[g\left(S_{i}\left(r_{i}\right):\;r_{i}\in\left[n_{i}\right],\;i\in\left[L\right]\right)\right]\\
 & =\mathbb{E}_{\sim V}\mathbb{E}_{\sim C_{1}\left(\tilde{V}_{1}\left(1\right)\right)}\mathbb{E}_{C_{1}\left(\tilde{V}_{1}\left(1\right)\right)|\sim C_{1}\left(\tilde{V}_{1}\left(1\right)\right)}\left[g\left(C_{1}\left(\tilde{V}_{1}\left(r_{1}\right)\right),\;C_{i}\left(V_{i}\left(r_{i}\right)\right):\;r_{1}\in\left[n_{1}\right],\;r_{i}\in\left[n_{i}\right]\;{\rm for}\;i\geq2\right)\right]\\
 & =\mathbb{E}_{\sim V}\mathbb{E}_{\sim C_{1}\left(\tilde{V}_{1}\left(1\right)\right)}\mathbb{E}_{U_{1}\left(\tilde{V}_{1}\left(1\right)\right)}\left[g\left(U_{1}\left(\tilde{V}_{1}\left(1\right)\right),\;C_{1}\left(\tilde{V}_{1}\left(r_{1}\right)\right),\;C_{i}\left(V_{i}\left(r_{i}\right)\right):\;r_{1}\in\left[n_{1}\right]\backslash\left\{ 1\right\} ,\;r_{i}\in\left[n_{i}\right]\;{\rm for}\;i\geq2\right)\right]+\zeta_{1},
\end{align*}
where conditioning on the sub-sampling, $\mathbb{E}_{\sim C_{1}\left(\tilde{V}_{1}\left(1\right)\right)}$
is the expectation w.r.t. everything excluding $C_{1}\left(\tilde{V}_{1}\left(1\right)\right)$,
and $\mathbb{E}_{C_{1}\left(\tilde{V}_{1}\left(1\right)\right)|\sim C_{1}\left(\tilde{V}_{1}\left(1\right)\right)}$
is w.r.t. $C_{1}\left(\tilde{V}_{1}\left(1\right)\right)$ conditioning
on everything else. (Here we have assumed that $V_{1}\left(1\right),...,V_{1}\left(n_{1}\right)$
are all distinct, since any repeated elements can be removed without
affecting the argument.) By iterating this decomposition, we obtain
the claim.

On the other hand, since $\left\{ U_{i}\left(j_{i}\right)\right\} _{j_{i}\in\left[N_{i}\right]}$
are i.i.d., it is easy to see that
\[
\mathbb{E}\left[g\left(U_{i}\left(V_{i}\left(r_{i}\right)\right):\;r_{i}\in\left[n_{i}\right],\;i\in\left[L\right]\right)\right]=\mathbb{E}\left[g\left(U_{i}\left(r_{i}\right):\;r_{i}\in\left[n_{i}\right],\;i\in\left[L\right]\right)\right].
\]
Together with the result from the previous step, we thus have for
any $f\in{\cal F}_{I,T}$:
\[
\left|\mathbb{E}\left[f\left({\cal W}\left(I,T\right)\right)\right]-\mathbb{E}\left[f\left(\hat{{\cal W}}\left(I,T\right)\right)\right]\right|\leq2o_{J;L,I}+8Le_{T,L}\left(\delta,\epsilon,J\right)+2\sqrt{o_{\epsilon,J;\delta,T,L}},
\]
where we recall
\begin{align*}
{\cal W}\left(I,T\right) & =\Big\{ w_{1}\left(t,U_{1}\left(j_{1}\right)\right),\;w_{i}\left(t,U_{i-1}\left(j_{i-1}\right),U_{i}\left(j_{i}\right)\right),\;b_{i}\left(t,U_{i}\left(j_{i}\right)\right):\quad j_{i}\in\left[n_{i}\right],\;i\in\left[L\right],\;t\in\left[0,T\right]\Big\},
\end{align*}
and similarly for $\hat{{\cal W}}\left(I,T\right)$. Note that the
left-hand side is completely independent of $J$, $\epsilon$ and
$\delta$. So by taking $\epsilon\to0$, $J\to\infty$, $\delta\to0$,
we have:
\[
\mathbb{E}\left[f\left({\cal W}\left(I,T\right)\right)\right]=\mathbb{E}\left[f\left(\hat{{\cal W}}\left(I,T\right)\right)\right],
\]
which completes the proof.
\end{proof}
\begin{proof}[Proof of Corollary \ref{cor:MF-twolayers}]
For any test function $\psi:\;\mathbb{W}_{1}\times\mathbb{W}_{2}\to\mathbb{R}$
that is bounded with bounded gradient, we have:
\begin{align*}
\frac{d}{dt}\int\psi\left(u_{1},u_{2}\right)d\rho_{t}\left(u_{1},u_{2}\right) & =\frac{d}{dt}\mathbb{E}_{C_{1}}\left[\psi\left(w_{1}\left(t,C_{1}\right),w_{2}\left(t,C_{1},1\right)\right)\right]\\
 & =-\mathbb{E}_{C_{1}}\left[\left\langle \nabla_{1}\psi\left(w_{1}\left(t,C_{1}\right),w_{2}\left(t,C_{1},1\right)\right),\xi_{1}^{\mathbf{w}}\left(t\right)\mathbb{E}_{Z}\left[\Delta_{1}^{w}\left(t,Z,C_{1}\right)\right]\right\rangle \right]\\
 & \qquad-\mathbb{E}_{C_{1}}\left[\left\langle \nabla_{2}\psi\left(w_{1}\left(t,C_{1}\right),w_{2}\left(t,C_{1},1\right)\right),\xi_{2}^{\mathbf{w}}\left(t\right)\mathbb{E}_{Z}\left[\Delta_{2}^{w}\left(t,Z,C_{1},1\right)\right]\right\rangle \right]\\
 & \stackrel{\left(a\right)}{=}-\mathbb{E}_{C_{1}}\left[\left\langle \nabla_{1}\psi\left(w_{1}\left(t,C_{1}\right),w_{2}\left(t,C_{1},1\right)\right),\xi_{1}^{\mathbf{w}}\left(t\right)\mathbb{E}_{Z}\left[\underline{\Delta}_{1}^{w}\left(u_{1},u_{2};Z,\rho_{t}\right)\right]\right\rangle \right]\\
 & \qquad-\mathbb{E}_{C_{1}}\left[\left\langle \nabla_{2}\psi\left(w_{1}\left(t,C_{1}\right),w_{2}\left(t,C_{1},1\right)\right),\xi_{2}^{\mathbf{w}}\left(t\right)\mathbb{E}_{Z}\left[\underline{\Delta}_{2}^{w}\left(u_{1},u_{2};Z,\rho_{t}\right)\right]\right\rangle \right]\\
 & =-\int\left\langle \nabla\psi\left(u_{1},u_{2}\right),G\left(u_{1},u_{2};\rho_{t}\right)\right\rangle d\rho_{t}\left(u_{1},u_{2}\right),
\end{align*}
where step $\left(a\right)$ can be checked easily by inspection.
This shows that $\rho_{t}$ satisfies the claimed distributional partial
differential equation. The rest of the claims follow in a similar
vein to the proof of Corollary \ref{cor:gradient descent quality}.
\end{proof}

\section{Remaining details for Section \ref{sec:iid_init}\label{sec:Remaining-proofs-iid-init}}

\subsection{Infinite-$M$ limit of the canonical MF limit under i.i.d. initializations\label{subsec:Infinite-M-limit-full}}

We give the full description of the infinite-$M$ limit $W^{*}$ of
the canonical MF limit, described in Section \ref{subsec:Infinite-M-canonical}.
To that end, let us first consider depth $L\ge5$. Let $\left\{ w_{i}^{*}\right\} _{i=1}^{L}$
and $\left\{ b_{i}^{*}\right\} _{i=2}^{L}$ be functions satisfying
the following dynamics:
\begin{align*}
\frac{\partial}{\partial t}w_{1}^{*}\left(t,u_{1}\right) & =-\xi_{1}^{\mathbf{w}}\left(t\right)\mathbb{E}_{Z}\left[\Delta_{1}^{w*}\left(t,Z,u_{1}\right)\right],\\
\frac{\partial}{\partial t}w_{2}^{*}\left(t,u_{1},u_{2},v_{2}\right) & =-\xi_{2}^{\mathbf{w}}\left(t\right)\mathbb{E}_{Z}\left[\Delta_{2}^{w*}\left(t,Z,u_{1},u_{2},v_{2}\right)\right],\\
\frac{\partial}{\partial t}w_{i}^{*}\left(t,u_{i},v_{i-1},v_{i}\right) & =-\xi_{i}^{\mathbf{w}}\left(t\right)\mathbb{E}_{Z}\left[\Delta_{i}^{w*}\left(t,Z,u_{i},v_{i-1},v_{i}\right)\right],\qquad i=3,...,L-2,\\
\frac{\partial}{\partial t}w_{L-1}^{*}\left(t,u_{L-1},u_{L},v_{L-2},v_{L-1}\right) & =-\xi_{L-1}^{\mathbf{w}}\left(t\right)\mathbb{E}_{Z}\left[\Delta_{L-1}^{w*}\left(t,Z,u_{L-1},u_{L},v_{L-2},v_{L-1}\right)\right],\\
\frac{\partial}{\partial t}w_{L}^{*}\left(t,u_{L},v_{L-1}\right) & =-\xi_{L}^{\mathbf{w}}\left(t\right)\mathbb{E}_{Z}\left[\Delta_{L}^{w*}\left(t,Z,u_{L},v_{L-1}\right)\right],\\
\frac{\partial}{\partial t}b_{i}^{*}\left(t,v_{i}\right) & =-\xi_{i}^{\mathbf{b}}\left(t\right)\mathbb{E}_{Z}\left[\Delta_{i}^{b*}\left(t,Z\right)\right],\qquad i=2,...,L-2,\\
\frac{\partial}{\partial t}b_{L-1}^{*}\left(t,u_{L},v_{L-1}\right) & =-\xi_{L-1}^{\mathbf{b}}\left(t\right)\mathbb{E}_{Z}\left[\Delta_{L-1}^{b*}\left(t,Z,u_{L}\right)\right],\\
\frac{\partial}{\partial t}b_{L}^{*}\left(t\right) & =-\xi_{L}^{\mathbf{b}}\left(t\right)\mathbb{E}_{Z}\left[\Delta_{L}^{b*}\left(t,Z\right)\right],\\
 & \forall u_{i}\in{\rm supp}\left(\rho_{\mathbf{w}}^{i}\right)\;{\rm for}\;i=1,...,L,\\
 & \forall v_{i}\in{\rm supp}\left(\rho_{\mathbf{b}}^{i}\right)\;{\rm for}\;i=2,...,L-1,
\end{align*}
with the initialization $w_{1}^{*}(0,u_{1})=u_{1}$, $w_{2}^{*}\left(0,\cdot,u_{2},\cdot\right)=u_{2}$,
$w_{i}^{*}\left(0,u_{i},\cdot,\cdot\right)=u_{i}$ for $i=3,...,L-2$,
$w_{L-1}^{*}\left(0,u_{L-1},\cdot,\cdot,\cdot\right)=u_{L-1}$, $w_{L}^{*}(0,u_{L},\cdot)=u_{L}$,
$b_{i}^{*}\left(0,v_{i}\right)=v_{i}$ for $i=2,...,L-2$, $b_{L-1}^{*}\left(0,\cdot,v_{L-1}\right)=v_{L-1}$
and $b_{L}^{*}(0)$ a deterministic constant that $\rho_{\mathbf{b}}^{L}\left(b_{L}^{*}(0)\right)=1$
(i.e. $b_{L}^{*}(0)=\mathfrak{p}_{L}\left(1\right)$ according to
Eq. (\ref{eq:iid_init_embedding_constr2})). Here the quantities are
defined by the following forward and backward recursions:
\begin{itemize}
\item Forward recursion:
\end{itemize}
\begin{align*}
H_{1}^{*}\left(t,x,u_{1}\right) & =\phi_{1}\left(w_{1}^{*}\left(t,u_{1}\right),x\right),\\
H_{2}^{*}\left(t,x,v_{2}\right) & =\int\phi_{2}\left(w_{2}^{*}\left(t,u_{1},u_{2},v_{2}\right),b_{2}^{*}\left(t,v_{2}\right),H_{1}^{*}\left(t,x,u_{1}\right)\right)\rho_{\mathbf{w}}^{1}\left(du_{1}\right)\rho_{\mathbf{w}}^{2}\left(du_{2}\right),\\
H_{i}^{*}\left(t,x,v_{i}\right) & =\int\phi_{i}\left(w_{i}^{*}\left(t,u_{i},v_{i-1},v_{i}\right),b_{i}^{*}\left(t,v_{i}\right),H_{i-1}^{*}\left(t,x,v_{i-1}\right)\right)\rho_{\mathbf{w}}^{i}\left(du_{i}\right)\rho_{\mathbf{b}}^{i-1}\left(dv_{i-1}\right),\\
 & \qquad i=3,...,L-2,\\
H_{L-1}^{*}\left(t,x,u_{L},v_{L-1}\right) & =\int\phi_{L-1}\left(w_{L-1}^{*}\left(t,u_{L-1},u_{L},v_{L-2},v_{L-1}\right),b_{L-1}^{*}\left(t,u_{L},v_{L-1}\right),H_{L-2}^{*}\left(t,x,v_{L-2}\right)\right)\\
 & \qquad\times\rho_{\mathbf{w}}^{L-1}\left(du_{L-1}\right)\rho_{\mathbf{b}}^{L-2}\left(dv_{L-2}\right),\\
H_{L}^{*}\left(t,x\right) & =\int\phi_{L}\left(w_{L}^{*}\left(t,u_{L},v_{L-1}\right),b_{L}^{*}\left(t\right),H_{L-1}^{*}\left(t,x,u_{L},v_{L-1}\right)\right)\rho_{\mathbf{w}}^{L}\left(du_{L}\right)\rho_{\mathbf{b}}^{L-1}\left(dv_{L-1}\right),\\
\hat{y}^{*}\left(t,x\right) & =\phi_{L+1}\left(H_{L}^{*}\left(t,x\right)\right).
\end{align*}

\begin{itemize}
\item Backward recursion:
\end{itemize}
\begin{align*}
\Delta_{L}^{H*}\left(t,z\right) & =\sigma_{L}^{\mathbf{H}}\left(y,\hat{y}^{*}\left(t,x\right),H_{L}^{*}\left(t,x\right)\right),\\
\Delta_{L}^{w*}\left(t,z,u_{L},v_{L-1}\right) & =\sigma_{L}^{\mathbf{w}}\left(\Delta_{L}^{H*}\left(t,z\right),w_{L}^{*}\left(t,u_{L},v_{L-1}\right),b_{L}^{*}\left(t\right),H_{L}^{*}\left(t,x\right),H_{L-1}^{*}\left(t,x,u_{L},v_{L-1}\right)\right),\\
\Delta_{L}^{b*}\left(t,z\right) & =\int\sigma_{L}^{\mathbf{b}}\left(\Delta_{L}^{H*}\left(t,z\right),w_{L}^{*}\left(t,u_{L},v_{L-1}\right),b_{L}^{*}\left(t\right),H_{L}^{*}\left(t,x\right),H_{L-1}^{*}\left(t,x,u_{L},v_{L-1}\right)\right)\\
 & \qquad\times\rho_{\mathbf{w}}^{L}\left(du_{L}\right)\rho_{\mathbf{b}}^{L-1}\left(dv_{L-1}\right),\\
\Delta_{L-1}^{H*}\left(t,z,u_{L},v_{L-1}\right) & =\sigma_{L-1}^{\mathbf{H}}\left(\Delta_{L}^{H*}\left(t,z\right),w_{L}^{*}\left(t,u_{L},v_{L-1}\right),b_{L}^{*}\left(t\right),H_{L}^{*}\left(t,x\right),H_{L-1}^{*}\left(t,x,u_{L},v_{L-1}\right)\right),\\
\Delta_{L-1}^{w*}\left(t,z,u_{L-1},u_{L},v_{L-2},v_{L-1}\right) & =\sigma_{L-1}^{\mathbf{w}}\Big(\Delta_{L-1}^{H*}\left(t,z,u_{L},v_{L-1}\right),w_{L-1}^{*}\left(t,u_{L-1},u_{L},v_{L-2},v_{L-1}\right),b_{L-1}^{*}\left(t,u_{L},v_{L-1}\right),\\
 & \qquad H_{L-1}^{*}\left(t,x,u_{L},v_{L-1}\right),H_{L-2}^{*}\left(t,x,v_{L-2}\right)\Big),\\
\Delta_{L-1}^{b*}\left(t,z,u_{L},v_{L-1}\right) & =\int\sigma_{L-1}^{\mathbf{b}}\Big(\Delta_{L-1}^{H*}\left(t,z,u_{L},v_{L-1}\right),w_{L-1}^{*}\left(t,u_{L-1},u_{L},v_{L-2},v_{L-1}\right),b_{L-1}^{*}\left(t,u_{L},v_{L-1}\right),\\
 & \qquad H_{L-1}^{*}\left(t,x,u_{L},v_{L-1}\right),H_{L-2}^{*}\left(t,x,v_{L-2}\right)\Big)\rho_{\mathbf{w}}^{L-1}\left(du_{L-1}\right)\rho_{\mathbf{b}}^{L-2}\left(dv_{L-2}\right),\\
\Delta_{L-2}^{H*}\left(t,z,v_{L-2}\right) & =\int\sigma_{L-2}^{\mathbf{H}}\Big(\Delta_{L-1}^{H*}\left(t,z,u_{L},v_{L-1}\right),w_{L-1}^{*}\left(t,u_{L-1},u_{L},v_{L-2},v_{L-1}\right),b_{L-1}^{*}\left(t,u_{L},v_{L-1}\right),\\
 & \qquad H_{L-1}^{*}\left(t,x,u_{L},v_{L-1}\right),H_{L-2}^{*}\left(t,x,v_{L-2}\right)\Big)\rho_{\mathbf{w}}^{L}\left(du_{L}\right)\rho_{\mathbf{w}}^{L-1}\left(du_{L-1}\right)\rho_{\mathbf{b}}^{L-1}\left(dv_{L-1}\right),\\
\Delta_{i}^{w*}\left(t,z,u_{i},v_{i-1},v_{i}\right) & =\sigma_{i}^{\mathbf{w}}\left(\Delta_{i}^{H*}\left(t,z,v_{i}\right),w_{i}^{*}\left(t,u_{i},v_{i-1},v_{i}\right),b_{i}^{*}\left(t,v_{i}\right),H_{i}^{*}\left(t,x,v_{i}\right),H_{i-1}^{*}\left(t,x,v_{i-1}\right)\right),\\
\Delta_{i}^{b*}\left(t,z,v_{i}\right) & =\int\sigma_{i}^{\mathbf{b}}\left(\Delta_{i}^{H*}\left(t,z,v_{i}\right),w_{i}^{*}\left(t,u_{i},v_{i-1},v_{i}\right),b_{i}^{*}\left(t,v_{i}\right),H_{i}^{*}\left(t,x,v_{i}\right),H_{i-1}^{*}\left(t,x,v_{i-1}\right)\right)\\
 & \qquad\times\rho_{\mathbf{w}}^{i}\left(du_{i}\right)\rho_{\mathbf{b}}^{i-1}\left(dv_{i-1}\right),\\
\Delta_{i-1}^{H*}\left(t,z,v_{i-1}\right) & =\int\sigma_{i-1}^{\mathbf{H}}\left(\Delta_{i}^{H*}\left(t,z,v_{i}\right),w_{i}^{*}\left(t,u_{i},v_{i-1},v_{i}\right),b_{i}^{*}\left(t,v_{i}\right),H_{i}^{*}\left(t,x,v_{i}\right),H_{i-1}^{*}\left(t,x,v_{i-1}\right)\right)\\
 & \qquad\times\rho_{\mathbf{w}}^{i}\left(du_{i}\right)\rho_{\mathbf{b}}^{i}\left(dv_{i}\right),\\
 & \qquad{\rm for}\;i=L-2,...,3,\\
\Delta_{2}^{w*}\left(t,z,u_{1},u_{2},v_{2}\right) & =\sigma_{2}^{\mathbf{w}}\left(\Delta_{2}^{H*}\left(t,z,v_{2}\right),w_{2}^{*}\left(t,u_{1},u_{2},v_{2}\right),b_{2}^{*}\left(t,v_{2}\right),H_{2}^{*}\left(t,x,v_{2}\right),H_{1}^{*}\left(t,x,u_{1}\right)\right),\\
\Delta_{2}^{b*}\left(t,z,v_{2}\right) & =\int\sigma_{2}^{\mathbf{b}}\left(\Delta_{2}^{H*}\left(t,z,v_{2}\right),w_{2}^{*}\left(t,u_{1},u_{2},v_{2}\right),b_{2}^{*}\left(t,v_{2}\right),H_{2}^{*}\left(t,x,v_{2}\right),H_{1}^{*}\left(t,x,u_{1}\right)\right)\\
 & \qquad\times\rho_{\mathbf{w}}^{2}\left(du_{2}\right)\rho_{\mathbf{w}}^{1}\left(du_{1}\right),\\
\Delta_{1}^{H*}\left(t,z,u_{1}\right) & =\int\sigma_{1}^{\mathbf{H}}\left(\Delta_{2}^{H*}\left(t,z,v_{2}\right),w_{2}^{*}\left(t,u_{1},u_{2},v_{2}\right),b_{2}^{*}\left(t,v_{2}\right),H_{2}^{*}\left(t,x,v_{2}\right),H_{1}^{*}\left(t,x,u_{1}\right)\right)\\
 & \qquad\times\rho_{\mathbf{w}}^{2}\left(du_{2}\right)\rho_{\mathbf{b}}^{2}\left(dv_{2}\right),\\
\Delta_{1}^{w*}\left(t,z,u_{1}\right) & =\sigma_{1}^{\mathbf{w}}\left(\Delta_{1}^{H*}\left(t,z,u_{1}\right),w_{1}^{*}\left(t,u_{1}\right),x\right).
\end{align*}
In the case $L=3$ and $L=4$, we define the dynamics of $w_{i}^{*}$
and $b_{i}^{*}$ similarly. In particular, for $L=4$, one can simply
disregard all above equations that are with invalid indices. For $L=3$,
we define:
\begin{align*}
\frac{\partial}{\partial t}w_{1}^{*}\left(t,u_{1}\right) & =-\xi_{1}^{\mathbf{w}}\left(t\right)\mathbb{E}_{Z}\left[\Delta_{1}^{w*}\left(t,Z,u_{1}\right)\right],\\
\frac{\partial}{\partial t}w_{2}^{*}\left(t,u_{1},u_{2},u_{3},v_{2}\right) & =-\xi_{2}^{\mathbf{w}}\left(t\right)\mathbb{E}_{Z}\left[\Delta_{2}^{w*}\left(t,Z,u_{1},u_{2},u_{3},v_{2}\right)\right],\\
\frac{\partial}{\partial t}w_{3}^{*}\left(t,u_{3},v_{2}\right) & =-\xi_{3}^{\mathbf{w}}\left(t\right)\mathbb{E}_{Z}\left[\Delta_{3}^{w*}\left(t,Z,u_{3},v_{2}\right)\right],\\
\frac{\partial}{\partial t}b_{2}^{*}\left(t,u_{3},v_{2}\right) & =-\xi_{2}^{\mathbf{b}}\left(t\right)\mathbb{E}_{Z}\left[\Delta_{2}^{b*}\left(t,Z,u_{3},v_{2}\right)\right],\\
\frac{\partial}{\partial t}b_{3}^{*}\left(t\right) & =-\xi_{3}^{\mathbf{b}}\left(t\right)\mathbb{E}_{Z}\left[\Delta_{3}^{b*}\left(t,Z\right)\right],\\
 & \forall u_{i}\in{\rm supp}\left(\rho_{\mathbf{w}}^{i}\right)\;{\rm for}\;i=1,2,3,\\
 & \forall v_{i}\in{\rm supp}\left(\rho_{\mathbf{b}}^{i}\right)\;{\rm for}\;i=2,3,
\end{align*}
in which the quantities are:
\begin{align*}
H_{1}^{*}\left(t,x,u_{1}\right) & =\phi_{1}\left(w_{1}^{*}\left(t,u_{1}\right),x\right),\\
H_{2}^{*}\left(t,x,u_{3},v_{2}\right) & =\int\phi_{2}\left(w_{2}^{*}\left(t,u_{1},u_{2},u_{3},v_{2}\right),b_{2}^{*}\left(t,u_{3},v_{2}\right),H_{1}^{*}\left(t,x,u_{1}\right)\right)\rho_{\mathbf{w}}^{1}\left(du_{1}\right)\rho_{\mathbf{w}}^{2}\left(du_{2}\right),\\
H_{3}^{*}\left(t,x\right) & =\int\phi_{3}\left(w_{3}^{*}\left(t,u_{3},v_{2}\right),b_{3}^{*}\left(t\right),H_{2}^{*}\left(t,x,u_{3},v_{2}\right)\right)\rho_{\mathbf{w}}^{3}\left(du_{3}\right)\rho_{\mathbf{b}}^{2}\left(dv_{2}\right),\\
\hat{y}^{*}\left(t,x\right) & =\phi_{4}\left(H_{3}^{*}\left(t,x\right)\right),\\
\Delta_{3}^{H*}\left(t,z\right) & =\sigma_{3}^{\mathbf{H}}\left(y,\hat{y}^{*}\left(t,x\right),H_{3}^{*}\left(t,x\right)\right),\\
\Delta_{3}^{w*}\left(t,z,u_{3},v_{2}\right) & =\sigma_{3}^{\mathbf{w}}\left(\Delta_{3}^{H*}\left(t,z\right),w_{3}^{*}\left(t,u_{3},v_{2}\right),b_{3}^{*}\left(t\right),H_{3}^{*}\left(t,x\right),H_{2}^{*}\left(t,x,u_{3},v_{2}\right)\right),\\
\Delta_{3}^{b*}\left(t,z\right) & =\int\sigma_{3}^{\mathbf{b}}\left(\Delta_{3}^{H*}\left(t,z\right),w_{3}^{*}\left(t,u_{3},v_{2}\right),b_{3}^{*}\left(t\right),H_{3}^{*}\left(t,x\right),H_{2}^{*}\left(t,x,u_{3},v_{2}\right)\right)\rho_{\mathbf{w}}^{3}\left(du_{3}\right)\rho_{\mathbf{b}}^{2}\left(dv_{2}\right),\\
\Delta_{2}^{H*}\left(t,z,u_{3},v_{2}\right) & =\sigma_{3}^{\mathbf{H}}\left(\Delta_{3}^{H*}\left(t,z\right),w_{3}^{*}\left(t,u_{3},v_{2}\right),b_{3}^{*}\left(t\right),H_{3}^{*}\left(t,x\right),H_{2}^{*}\left(t,x,u_{3},v_{2}\right)\right),\\
\Delta_{2}^{w*}\left(t,z,u_{1},u_{2},u_{3},v_{2}\right) & =\sigma_{2}^{\mathbf{w}}\left(\Delta_{2}^{H*}\left(t,z,u_{3},v_{2}\right),w_{2}^{*}\left(t,u_{1},u_{2},u_{3},v_{2}\right),b_{2}^{*}\left(t,u_{3},v_{2}\right),H_{2}^{*}\left(t,x,u_{3},v_{2}\right),H_{1}^{*}\left(t,x,u_{1}\right)\right),\\
\Delta_{2}^{b*}\left(t,z,u_{3},v_{2}\right) & =\int\sigma_{2}^{\mathbf{b}}\left(\Delta_{2}^{H*}\left(t,z,u_{3},v_{2}\right),w_{2}^{*}\left(t,u_{1},u_{2},u_{3},v_{2}\right),b_{2}^{*}\left(t,u_{3},v_{2}\right),H_{2}^{*}\left(t,x,u_{3},v_{2}\right),H_{1}^{*}\left(t,x,u_{1}\right)\right)\\
 & \qquad\times\rho_{\mathbf{w}}^{1}\left(du_{1}\right)\rho_{\mathbf{w}}^{2}\left(du_{2}\right),\\
\Delta_{1}^{H*}\left(t,z,u_{1}\right) & =\int\sigma_{1}^{\mathbf{H}}\left(\Delta_{2}^{H*}\left(t,z,u_{3},v_{2}\right),w_{2}^{*}\left(t,u_{1},u_{2},u_{3},v_{2}\right),b_{2}^{*}\left(t,u_{3},v_{2}\right),H_{2}^{*}\left(t,x,u_{3},v_{2}\right),H_{1}^{*}\left(t,x,u_{1}\right)\right)\\
 & \qquad\times\rho_{\mathbf{w}}^{2}\left(du_{2}\right)\rho_{\mathbf{w}}^{3}\left(du_{3}\right),\rho_{\mathbf{b}}^{2}\left(dv_{2}\right),\\
\Delta_{1}^{w*}\left(t,z,u_{1}\right) & =\sigma_{1}^{\mathbf{w}}\left(\Delta_{1}^{H*}\left(t,z,u_{1}\right),w_{1}^{*}\left(t,u_{1}\right),x\right).
\end{align*}
Finally let $W^{*}\left(t\right)=\left\{ w_{1}^{*}\left(t,\cdot\right),w_{i}^{*}\left(t,\cdot\right),b_{i}^{*}\left(t,\cdot\right),\;\;i=2,...,L\right\} $.
The existence and uniqueness of such dynamics follow similarly to
the proof of Theorem \ref{thm:existence ODE}.
\begin{thm}[Complete statement of Theorem \ref{thm:iid dynamics}]
\label{thm:iid dynamics-full}Given $\left(\rho_{\mathbf{w}}^{1},...,\rho_{\mathbf{w}}^{L},\rho_{\mathbf{b}}^{2},...,\rho_{\mathbf{b}}^{L}\right)$
and an integer $M$, construct the canonical neuronal ensemble $\left(\Omega^{M},P^{M}\right)$,
the random variables $\left(C_{1},...,C_{L}\right)\sim P^{M}=\prod_{i=1}^{L}P_{i}^{M}$
and the canonical MF limit $W^{M}$ as described in Section \ref{subsec:Canonical-neuronal-embeddings}.
Also construct the dynamics $W^{*}$ described in Section \ref{subsec:Infinite-M-canonical}.

For $L\geq5$, define the following:
\begin{align*}
w_{1}^{\infty}\left(t,c_{1}\right) & =w_{1}^{*}\left(t,w_{1}^{0}\left(c_{1}\right)\right),\\
w_{2}^{\infty}\left(t,c_{1},c_{2}\right) & =w_{2}^{*}\left(t,w_{1}^{0}\left(c_{1}\right),w_{2}^{0}\left(c_{1},c_{2}\right),b_{2}^{0}\left(c_{2}\right)\right),\\
w_{i}^{\infty}\left(t,c_{i-1},c_{i}\right) & =w_{i}^{*}\left(t,w_{i}^{0}\left(c_{i-1},c_{i}\right),b_{i-1}^{0}\left(c_{i-1}\right),b_{i}^{0}\left(c_{i}\right)\right),\qquad i=3,...,L-2,\\
w_{L-1}^{\infty}\left(t,c_{L-2},c_{L-1}\right) & =w_{L-1}^{*}\left(t,w_{L-1}^{0}\left(c_{L-2},c_{L-1}\right),w_{L}^{0}\left(c_{L-1},1\right),b_{L-2}^{0}\left(c_{L-2}\right),b_{L-1}^{0}\left(c_{L-1}\right)\right),\\
w_{L}^{\infty}\left(t,c_{L-1},1\right) & =w_{L}^{*}\left(t,w_{L}^{0}\left(c_{L-1},1\right),b_{L-1}^{0}\left(c_{L-1}\right)\right),\\
b_{i}^{\infty}\left(t,c_{i}\right) & =b_{i}^{*}\left(t,b_{i}^{0}\left(c_{i}\right)\right),\qquad i=2,...,L-2,\\
b_{L-1}^{\infty}\left(t,c_{L-1}\right) & =b_{L-1}^{*}\left(t,w_{L}^{0}\left(c_{L-1},1\right),b_{L-1}^{0}\left(c_{L-1}\right)\right),\\
b_{L}^{\infty}\left(t,1\right) & =b_{L}^{*}\left(t\right),\\
 & c_{i}\in\Omega_{i}=\Lambda\times\mathbb{N}_{>0},\quad i=1,...,L-1.
\end{align*}
For $L=4$, we define similarly by disregarding the equations with
invalid indices. For $L=3$, we define:
\begin{align*}
w_{1}^{\infty}\left(t,c_{1}\right) & =w_{1}^{*}\left(t,w_{1}^{0}\left(c_{1}\right)\right),\\
w_{2}^{\infty}\left(t,c_{1},c_{2}\right) & =w_{2}^{*}\left(t,w_{2}^{0}\left(c_{1},c_{2}\right),w_{3}^{0}\left(c_{2},1\right),w_{1}^{0}\left(c_{1}\right),b_{2}^{0}\left(c_{2}\right)\right),\\
w_{3}^{\infty}\left(t,c_{2},1\right) & =w_{3}^{*}\left(t,w_{3}^{0}\left(c_{2},1\right),b_{2}^{0}\left(c_{2}\right)\right),\\
b_{2}^{\infty}\left(t,c_{2}\right) & =b_{2}^{*}\left(t,w_{3}^{0}\left(c_{2},1\right),b_{2}^{0}\left(c_{2}\right)\right),\\
b_{3}^{\infty}\left(t\right) & =b_{3}^{*}\left(t\right).
\end{align*}
We also let $W^{\infty}\left(t\right)=\left\{ w_{1}^{\infty}\left(t,\cdot\right),w_{i}^{\infty}\left(t,\cdot,\cdot\right),b_{i}^{\infty}\left(t,\cdot\right),\;\;i=2,...,L\right\} $.
Let us consider:
\begin{align*}
\left\langle W^{M}-W^{\infty}\right\rangle _{t} & =\max\left(\max_{1\leq i\leq L}\left\langle w_{i}^{M}-w_{i}^{\infty}\right\rangle _{t},\;\max_{2\leq i\leq L}\left\langle b_{i}^{M}-b_{i}^{\infty}\right\rangle _{t}\right),\\
\left\langle w_{i}^{M}-w_{i}^{\infty}\right\rangle _{t} & =\mathbb{E}\left[\left|w_{i}^{M}\left(t,C_{i-1},C_{i}\right)-w_{i}^{\infty}\left(t,C_{i-1},C_{i}\right)\right|^{2}\right]^{1/2},\\
\left\langle b_{i}^{M}-b_{i}^{\infty}\right\rangle _{t} & =\mathbb{E}\left[\left|b_{i}^{M}\left(t,C_{i}\right)-b_{i}^{\infty}\left(t,C_{i}\right)\right|^{2}\right]^{1/2},\qquad i=2,...,L,\\
\left\langle w_{1}^{M}-w_{1}^{\infty}\right\rangle _{t} & =\mathbb{E}\left[\left|w_{1}^{M}\left(t,C_{1}\right)-w_{1}^{\infty}\left(t,C_{1}\right)\right|^{2}\right]^{1/2}.
\end{align*}

Then under Assumptions \ref{enu:Assump_lrSchedule}-\ref{enu:Assump_backward}
and \ref{assump:init}, for any $T\geq0$ and $L\geq2$,
\[
\sup_{t\leq T}\left\langle W^{M}-W^{\infty}\right\rangle _{t}\le\frac{K_{T,L}}{M^{0.499}},
\]
for sufficiently large $M=M\left(T,L\right)$, where $K_{T,L}$ is
a constant that depends on $T$ and $L$. Furthermore, for $L\geq4$
and $2\leq i\leq L-2$,
\[
\sup_{t\leq T}\mathbb{E}\left[\left|H_{i}\left(X,C_{i};W^{M}(t)\right)-H_{i}^{*}\left(t,X,b_{i}^{0}(C_{i})\right)\right|^{2}\right]^{1/2}\leq\frac{K_{T,L}}{M^{0.499}}.
\]
\end{thm}

\subsection{Proof of Theorem \ref{thm:iid dynamics-full}\label{subsec:Proof-Theorem-iid-full}}
\begin{proof}[Proof of Theorem \ref{thm:iid dynamics-full}]
Let us consider the case $L\geq5$; the case where $L\leq4$ is similarly
proven. We use $K_{T,L}$ to denote a generic constant that depends
on $T$ and $L$ and may change from line to line.

\paragraph*{Step 1.}

By following the argument of Lemma \ref{lem:bounds MF a priori},
one can show that the following quantities are bounded by $K_{T,L}$:
\[
\left\Vert W^{\infty}\right\Vert _{T},\quad\mathbb{E}\left[\sup_{t\leq T}\underset{Z\sim{\cal P}}{{\rm ess\text{-}sup}}\left|H_{1}^{*}\left(t,X,w_{1}^{0}(C_{1})\right)\right|^{50}\right],\quad\max_{2\leq i\leq L-2}\mathbb{E}\left[\sup_{t\leq T}\underset{Z\sim{\cal P}}{{\rm ess\text{-}sup}}\left|H_{i}^{*}\left(t,X,b_{i}^{0}(C_{i})\right)\right|^{50}\right],
\]
\[
\mathbb{E}\left[\sup_{t\leq T}\underset{Z\sim{\cal P}}{{\rm ess\text{-}sup}}\left|H_{L-1}^{*}\left(t,X,w_{L}^{0}(C_{L-1},1),b_{L-1}^{0}(C_{L-1})\right)\right|^{50}\right],\quad\sup_{t\leq T}\underset{Z\sim{\cal P}}{{\rm ess\text{-}sup}}\left|H_{L}^{*}\left(t,X\right)\right|,
\]
\[
\mathbb{E}\left[\sup_{t\leq T}\underset{Z\sim{\cal P}}{{\rm ess\text{-}sup}}\left|\Delta_{1}^{H*}\left(t,Z,w_{1}^{0}(C_{1})\right)\right|^{50}\right],\quad\max_{2\leq i\leq L-2}\mathbb{E}\left[\sup_{t\leq T}\underset{Z\sim{\cal P}}{{\rm ess\text{-}sup}}\left|\Delta_{i}^{H*}\left(t,Z,b_{i}^{0}\left(C_{i}\right)\right)\right|^{50}\right],
\]
\[
\mathbb{E}\left[\sup_{t\leq T}\underset{Z\sim{\cal P}}{{\rm ess\text{-}sup}}\left|\Delta_{L-1}^{H*}\left(t,Z,w_{L}^{0}\left(C_{L-1},1\right),b_{L-1}^{0}(C_{L-1})\right)\right|^{50}\right],\quad\sup_{t\leq T}\underset{Z\sim{\cal P}}{{\rm ess\text{-}sup}}\left|\Delta_{L}^{H*}\left(t,Z\right)\right|.
\]
Likewise one can also show that for any $B\geq0$,
\[
\mathbb{P}\left(\mathsf{max}_{T}^{w}\left(W^{\infty}\right)\geq K_{T,L}B\right)\leq2K_{T,L}e^{-KB^{2}},
\]
in which
\begin{align*}
\mathsf{max}_{T}^{w}\left(W^{\infty}\right) & =\max_{2\leq i\leq L}\sup_{t\leq T}\left|w_{i}^{\infty}\left(t,C_{i-1},C_{i}\right)\right|.
\end{align*}

\paragraph*{Step 2.}

Let us define
\begin{align*}
D_{1}\left(t\right) & =\mathbb{E}\left[\left|H_{1}\left(X,C_{1};W^{\infty}(t)\right)-H_{1}^{*}\left(t,X,w_{1}^{0}(C_{1})\right)\right|^{2}\right],\\
D_{i}\left(t\right) & =\mathbb{E}\left[\left|H_{i}\left(X,C_{i};W^{\infty}(t)\right)-H_{i}^{*}\left(t,X,b_{i}^{0}(C_{i})\right)\right|^{2}\right],\quad i=2,...,L-2,\\
D_{L-1}\left(t\right) & =\mathbb{E}\left[\left|H_{L-1}\left(X,C_{L-1};W^{\infty}(t)\right)-H_{L-1}^{*}\left(t,X,w_{L}^{0}(C_{L-1},1),b_{L-1}^{0}(C_{L-1})\right)\right|^{2}\right],\\
D_{L}\left(t\right) & =\mathbb{E}\left[\left|H_{L}\left(X,1;W^{\infty}(t)\right)-H_{L}^{*}\left(t,X\right)\right|^{2}\right].
\end{align*}
We claim that for $t\le T$,
\[
D_{i}\left(t\right)\leq\frac{K_{T,L}}{M},\qquad i\in\left[L\right].
\]
Firstly it is immediate that
\[
H_{1}\left(X,C_{1};W^{\infty}(t)\right)=\phi_{1}\left(w_{1}^{*}(t,w_{1}^{0}(C_{1})),X\right)=H_{1}^{*}\left(t,X,w_{1}^{0}(C_{1})\right),
\]
and hence $D_{1}\left(t\right)=0$. For $i=2$, we have 
\begin{align*}
D_{2}\left(t\right) & =\mathbb{E}\bigg[\bigg|\mathbb{E}_{C_{1}}\left[\phi_{2}\left(w_{2}^{*}(t,w_{1}^{0}(C_{1}),w_{2}^{0}(C_{1},C_{2}),b_{2}^{0}(C_{2})),b_{2}^{*}(t,b_{2}^{0}(C_{2})),H_{1}(X,C_{1};W^{\infty}(t))\right)\right]\\
 & \qquad-\int\phi_{2}\left(w_{2}^{*}\left(t,u_{1},u_{2},b_{2}^{0}(C_{2})\right),b_{2}^{*}\left(t,b_{2}^{0}(C_{2})\right),H_{1}^{*}\left(t,x,u_{1}\right)\right)\rho_{\mathbf{w}}^{1}\left(du_{1}\right)\rho_{\mathbf{w}}^{2}\left(du_{2}\right)\bigg|^{2}\bigg].
\end{align*}
Recalling that $w_{2}^{0}(C_{1},C_{2})=\mathfrak{q}_{2}(\theta_{1},\theta_{2})(\lambda_{2})$,
$b_{2}^{0}(C_{2})=\mathfrak{p}_{2}(\theta_{2})(\lambda_{2})$ and
$w_{1}^{0}(C_{1})=\mathfrak{p}_{1}(\theta_{1})(\lambda_{1})$ from
the construction of Section \ref{subsec:Canonical-neuronal-embeddings},
we have: 
\begin{align*}
 & \mathbb{E}_{C_{2}}\bigg[\left|\mathbb{E}_{C_{1}}\left[\phi_{2}\left(w_{2}^{*}(t,w_{1}^{0}(C_{1}),w_{2}^{0}(C_{1},C_{2}),b_{2}^{0}(C_{2})),b_{2}^{*}(t,b_{2}^{0}(C_{2})),H_{1}(x,C_{1};W^{\infty}(t))\right)\right]\right|^{2}\bigg]\\
 & \stackrel{\left(a\right)}{=}\mathbb{E}_{\theta_{1},\lambda_{1},\theta_{1}',\lambda_{1}',\theta_{2},\lambda_{2}}\bigg[\Big\langle\phi_{2}\left(w_{2}^{*}(t,\mathfrak{p}_{1}(\theta_{1})(\lambda_{1}),\mathfrak{q}_{2}(\theta_{1},\theta_{2})(\lambda_{2}),\mathfrak{p}_{2}(\theta_{2})(\lambda_{2})),b_{2}^{*}(t,\mathfrak{p}_{2}(\theta_{2})(\lambda_{2})),H_{1}^{*}(t,x,\mathfrak{p}_{1}(\theta_{1})(\lambda_{1}))\right),\\
 & \qquad\phi_{2}\left(w_{2}^{*}\left(t,\mathfrak{p}_{1}(\theta_{1}')(\lambda_{1}'),\mathfrak{q}_{2}(\theta_{1}',\theta_{2})(\lambda_{2}),\mathfrak{p}_{2}(\theta_{2})(\lambda_{2})\right),b_{2}^{*}(t,\mathfrak{p}_{2}(\theta_{2})(\lambda_{2})),H_{1}^{*}(t,x,\mathfrak{p}_{1}(\theta_{1}')(\lambda_{1}'))\right)\Big\rangle\bigg]\\
 & \stackrel{\left(b\right)}{=}\mathbb{E}_{\theta_{1},\theta_{1}'}\bigg[\mathbb{I}(\theta_{1}=\theta_{1}')\int\left\langle \phi_{2}\left(w_{2}^{*}(t,u_{1},u_{2},v_{2}),b_{2}^{*}(t,v_{2}),H_{1}^{*}(t,x,u_{1})\right),\phi_{2}\left(w_{2}^{*}(t,u_{1}',u_{2},v_{2}),b_{2}^{*}(t,v_{2}),H_{1}^{*}(t,x,u_{1}')\right)\right\rangle \\
 & \qquad\qquad\qquad\qquad\qquad\times\rho_{\mathbf{w}}^{1}\left(du_{1}\right)\rho_{\mathbf{w}}^{1}\left(du_{1}'\right)\rho_{\mathbf{w}}^{2}\left(du_{2}\right)\rho_{\mathbf{b}}^{2}\left(dv_{2}\right)\bigg]\\
 & \quad+\mathbb{E}_{\theta_{1},\theta_{1}'}\bigg[\mathbb{I}(\theta_{1}\ne\theta_{1}')\int\phi_{2}\left(w_{2}^{*}(t,u_{1},u_{2},v_{2}),b_{2}^{*}(t,v_{2}),H_{1}^{*}(t,x,u_{1})\right)\phi_{2}\left(w_{2}^{*}(t,u_{1}',u_{2}',v_{2}),b_{2}^{*}(t,v_{2}),H_{1}^{*}(t,x,u_{1}')\right)\\
 & \qquad\qquad\qquad\qquad\qquad\times\rho_{\mathbf{w}}^{1}\left(du_{1}\right)\rho_{\mathbf{w}}^{2}\left(du_{2}\right)\rho_{\mathbf{w}}^{1}\left(du_{1}'\right)\rho_{\mathbf{w}}^{2}\left(du_{2}'\right)\rho_{\mathbf{b}}^{2}\left(dv_{2}\right)\bigg]\\
 & =\frac{1}{M}\int\left\langle \phi_{2}\left(w_{2}^{*}(t,u_{1},u_{2},v_{2}),b_{2}^{*}(t,v_{2}),H_{1}^{*}(t,x,u_{1})\right),\phi_{2}\left(w_{2}^{*}(t,u_{1}',u_{2},v_{2}),b_{2}^{*}(t,v_{2}),H_{1}^{*}(t,x,u_{1}')\right)\right\rangle \\
 & \qquad\qquad\times\rho_{\mathbf{w}}^{1}\left(du_{1}\right)\rho_{\mathbf{w}}^{1}\left(du_{1}'\right)\rho_{\mathbf{w}}^{2}\left(du_{2}\right)\rho_{\mathbf{b}}^{2}\left(dv_{2}\right)\\
 & \quad+\frac{M-1}{M}\int\left|\int\phi_{2}\left(w_{2}^{*}(t,u_{1},u_{2},v_{2}),b_{2}^{*}(t,v_{2}),H_{1}^{*}(t,x,u_{1})\right)\rho_{\mathbf{w}}^{1}\left(du_{1}\right)\rho_{\mathbf{w}}^{2}\left(du_{2}\right)\right|^{2}\rho_{\mathbf{b}}^{2}\left(dv_{2}\right),
\end{align*}
where in step $\left(a\right)$, $\left(\theta_{1}',\lambda_{1}'\right)\sim{\rm Unif}\left(\left[M\right]\right)\times P_{0}$
is an independent copy of $\left(\theta_{1},\lambda_{1}\right)$ and
is independent of $\left(\theta_{2},\lambda_{2}\right)$, and step
$\left(b\right)$ is by the construction of $\mathfrak{p}_{1}$, $\mathfrak{p}_{2}$
and $\mathfrak{q}_{2}$. It is also easy to see that
\begin{align*}
 & \mathbb{E}_{C_{2}}\bigg[\Big\langle\mathbb{E}_{C_{1}}\left[\phi_{2}\left(w_{2}^{*}(t,w_{1}^{0}(C_{1}),w_{2}^{0}(C_{1},C_{2}),b_{2}^{0}(C_{2})),b_{2}^{*}(t,b_{2}^{0}(C_{2})),H_{1}(x,C_{1};W^{\infty}(t))\right)\right],\\
 & \qquad\int\phi_{2}\left(w_{2}^{*}(t,u_{1},u_{2},b_{2}^{0}(C_{2})),b_{2}^{*}(t,b_{2}^{0}(C_{2})),H_{1}^{*}(t,x,u_{1})\right)\rho_{\mathbf{w}}^{1}\left(du_{1}\right)\rho_{\mathbf{w}}^{2}\left(du_{2}\right)\Big\rangle\bigg]\\
 & =\int\left|\int\phi_{2}\left(w_{2}^{*}(t,u_{1},u_{2},v_{2}),b_{2}^{*}(t,v_{2}),H_{1}^{*}(t,x,u_{1})\right)\rho_{\mathbf{w}}^{1}\left(du_{1}\right)\rho_{\mathbf{w}}^{2}\left(du_{2}\right)\right|^{2}\rho_{\mathbf{b}}^{2}\left(dv_{2}\right).
\end{align*}
Therefore, for $t\leq T$,
\begin{align*}
D_{2}\left(t\right) & \leq\frac{1}{M}\int\mathbb{E}\left[\left|\left\langle \phi_{2}\left(w_{2}^{*}(t,u_{1},u_{2},v_{2}),b_{2}^{*}(t,v_{2}),H_{1}^{*}(t,X,u_{1})\right),\phi_{2}\left(w_{2}^{*}(t,u_{1}',u_{2},v_{2}),b_{2}^{*}(t,v_{2}),H_{1}^{*}(t,X,u_{1}')\right)\right\rangle \right|\right]\\
 & \qquad\qquad\times\rho_{\mathbf{w}}^{1}\left(du_{1}\right)\rho_{\mathbf{w}}^{1}\left(du_{1}'\right)\rho_{\mathbf{w}}^{2}\left(du_{2}\right)\rho_{\mathbf{b}}^{2}\left(dv_{2}\right)\\
 & \quad+\frac{1}{M}\int\mathbb{E}\left[\left|\int\phi_{2}\left(w_{2}^{*}(t,u_{1},u_{2},v_{2}),b_{2}^{*}(t,v_{2}),H_{1}^{*}(t,X,u_{1})\right)\rho_{\mathbf{w}}^{1}\left(du_{1}\right)\rho_{\mathbf{w}}^{2}\left(du_{2}\right)\right|^{2}\right]\rho_{\mathbf{b}}^{2}\left(dv_{2}\right)\\
 & \leq\frac{K}{M}\int\mathbb{E}\left[\left|\phi_{2}\left(w_{2}^{*}(t,u_{1},u_{2},v_{2}),b_{2}^{*}(t,v_{2}),H_{1}^{*}(t,X,u_{1})\right)\right|^{2}\right]\rho_{\mathbf{w}}^{1}\left(du_{1}\right)\rho_{\mathbf{w}}^{2}\left(du_{2}\right)\rho_{\mathbf{b}}^{2}\left(dv_{2}\right)\\
 & \leq\frac{K_{T,L}}{M}
\end{align*}
where we use Step 1 and Assumption \ref{enu:Assump_forward} in the
last step. For $i\in\left\{ 3,...,L-2\right\} $, recall that $w_{i}^{0}(C_{i-1},C_{i})=\mathfrak{q}_{i}(\theta_{i-1},\theta_{i})(\lambda_{i})$,
$b_{i}^{0}(C_{i})=\mathfrak{p}_{i}(\theta_{i})(\lambda_{i})$ and
$b_{i-1}^{0}(C_{i-1})=\mathfrak{p}_{i-1}(\theta_{i-1})(\lambda_{i-1})$
from the construction of Section \ref{subsec:Canonical-neuronal-embeddings}.
Then similar to the argument for $i=2$: 
\begin{align*}
 & \mathbb{E}_{C_{i}}\bigg[\left|\mathbb{E}_{C_{i-1}}\left[\phi_{i}\left(w_{i}^{*}(t,w_{i}^{0}(C_{i-1},C_{i}),b_{i-1}^{0}(C_{i-1}),b_{i}^{0}(C_{i})),b_{i}^{*}(t,b_{i}^{0}(C_{i})),H_{i-1}^{*}(t,x,b_{i-1}^{0}(C_{i-1}))\right)\right]\right|^{2}\bigg]\\
 & =\frac{1}{M}\int\left\langle \phi_{i}\left(w_{i}^{*}(t,u_{i},v_{i-1},v_{i}),b_{i}^{*}(t,v_{i}),H_{i-1}^{*}(t,x,v_{i-1})\right),\phi_{i}\left(w_{i}^{*}(t,u_{i},v_{i-1}',v_{i}),b_{i}^{*}(t,v_{i}),H_{i-1}^{*}(t,x,v_{i-1}')\right)\right\rangle \\
 & \qquad\qquad\times\rho_{\mathbf{w}}^{i}\left(du_{i}\right)\rho_{\mathbf{b}}^{i-1}\left(dv_{i-1}\right)\rho_{\mathbf{b}}^{i-1}\left(dv_{i-1}'\right)\rho_{\mathbf{b}}^{i}\left(dv_{i}\right)\\
 & \quad+\frac{M-1}{M}\int\left|\int\phi_{2}\left(w_{i}^{*}(t,u_{i},v_{i-1},v_{i}),b_{i}^{*}(t,v_{i}),H_{i-1}^{*}(t,x,v_{i-1})\right)\rho_{\mathbf{w}}^{i}\left(du_{i}\right)\rho_{\mathbf{b}}^{i-1}\left(dv_{i-1}\right)\right|^{2}\rho_{\mathbf{b}}^{i}\left(dv_{i}\right),\\
 & \mathbb{E}_{C_{i}}\bigg[\Big\langle\mathbb{E}_{C_{i-1}}\left[\phi_{i}\left(w_{i}^{*}(t,w_{i}^{0}(C_{i-1},C_{i}),b_{i-1}^{0}(C_{i-1}),b_{i}^{0}(C_{i})),b_{i}^{*}(t,b_{i}^{0}(C_{i})),H_{i-1}^{*}(t,x,b_{i-1}^{0}(C_{i-1}))\right)\right],\\
 & \qquad\qquad\int\phi_{i}\left(w_{i}^{*}\left(t,u_{i},v_{i-1},b_{i}^{0}(C_{i})\right),b_{i}^{*}\left(t,b_{i}^{0}(C_{i})\right),H_{i-1}^{*}\left(t,x,v_{i-1}\right)\right)\rho_{\mathbf{w}}^{i}\left(du_{i}\right)\rho_{\mathbf{b}}^{i-1}\left(dv_{i-1}\right)\Big\rangle\bigg]\\
 & =\int\left|\int\phi_{i}\left(w_{i}^{*}\left(t,u_{i},v_{i-1},v_{i}\right),b_{i}^{*}\left(t,v_{i}\right),H_{i-1}^{*}\left(t,x,v_{i-1}\right)\right)\rho_{\mathbf{w}}^{i}\left(du_{i}\right)\rho_{\mathbf{b}}^{i-1}\left(dv_{i-1}\right)\right|^{2}\rho_{\mathbf{b}}^{i}\left(dv_{i}\right),
\end{align*}
which then gives, by Step 1 and Assumption \ref{enu:Assump_forward}:
\begin{align*}
 & \mathbb{E}_{C_{i}}\bigg[\bigg|\mathbb{E}_{C_{i-1}}\left[\phi_{i}\left(w_{i}^{*}(t,w_{i}^{0}(C_{i-1},C_{i}),b_{i-1}^{0}(C_{i-1}),b_{i}^{0}(C_{i})),b_{i}^{*}(t,b_{i}^{0}(C_{i})),H_{i-1}^{*}(t,x,b_{i-1}^{0}(C_{i-1}))\right)\right]\\
 & \qquad\qquad-H_{i}^{*}(t,X,b_{i}^{0}(C_{i}))\bigg|^{2}\bigg]\\
 & =\mathbb{E}_{C_{i}}\bigg[\bigg|\mathbb{E}_{C_{i-1}}\left[\phi_{i}\left(w_{i}^{*}(t,w_{i}^{0}(C_{i-1},C_{i}),b_{i-1}^{0}(C_{i-1}),b_{i}^{0}(C_{i})),b_{i}^{*}(t,b_{i}^{0}(C_{i})),H_{i-1}^{*}(t,x,b_{i-1}^{0}(C_{i-1}))\right)\right]\\
 & \qquad\qquad-\int\phi_{i}\left(w_{i}^{*}\left(t,u_{i},v_{i-1},b_{i}^{0}(C_{i})\right),b_{i}^{*}\left(t,b_{i}^{0}(C_{i})\right),H_{i-1}^{*}\left(t,x,v_{i-1}\right)\right)\rho_{\mathbf{w}}^{i}\left(du_{i}\right)\rho_{\mathbf{b}}^{i-1}\left(dv_{i-1}\right)\bigg|^{2}\bigg]\\
 & \leq\frac{K}{M}\int\left|\phi_{i}\left(w_{i}^{*}(t,u_{i},v_{i-1},v_{i}),b_{i}^{*}(t,v_{i}),H_{i-1}^{*}(t,x,v_{i-1})\right)\right|^{2}\rho_{\mathbf{w}}^{i}\left(du_{i}\right)\rho_{\mathbf{b}}^{i-1}\left(dv_{i-1}\right)\rho_{\mathbf{b}}^{i}\left(dv_{i}\right)\\
 & \leq\frac{K_{T,L}}{M}.
\end{align*}
Next, notice that again by Step 1 and Assumption \ref{enu:Assump_forward}:
\begin{align*}
 & \mathbb{E}\bigg[\bigg|H_{i}\left(X,C_{i};W^{\infty}(t)\right)\\
 & \qquad-\mathbb{E}_{C_{i-1}}\left[\phi_{i}\left(w_{i}^{*}(t,w_{i}^{0}(C_{i-1},C_{i}),b_{i-1}^{0}(C_{i-1}),b_{i}^{0}(C_{i})),b_{i}^{*}(t,b_{i}^{0}(C_{i})),H_{i-1}^{*}(t,X,b_{i-1}^{0}(C_{i-1}))\right)\right]\bigg|^{2}\bigg]\\
 & =\mathbb{E}\bigg[\bigg|\mathbb{E}_{C_{i-1}}\left[\phi_{i}(w_{i}^{*}(t,w_{i}^{0}(C_{i-1},C_{i}),b_{i-1}^{0}(C_{i-1}),b_{i}^{0}(C_{i})),b_{i}^{*}(t,b_{i}^{0}(C_{i})),H_{i-1}(X,C_{i-1};W^{\infty}(t)))\right]\\
 & \qquad\qquad-\mathbb{E}_{C_{i-1}}\left[\phi_{i}\left(w_{i}^{*}(t,w_{i}^{0}(C_{i-1},C_{i}),b_{i-1}^{0}(C_{i-1}),b_{i}^{0}(C_{i})),b_{i}^{*}(t,b_{i}^{0}(C_{i})),H_{i-1}^{*}(t,X,b_{i-1}^{0}(C_{i-1}))\right)\right]\bigg|^{2}\bigg]\\
 & \le K\mathbb{E}\bigg[\bigg|\mathbb{E}_{C_{i-1}}\Big[\left(1+\left|w_{i}^{*}(t,w_{i}^{0}(C_{i-1},C_{i}),b_{i-1}^{0}(C_{i-1}),b_{i}^{0}(C_{i}))\right|+\left|b_{i}^{*}(t,b_{i}^{0}(C_{i}))\right|\right)\\
 & \qquad\qquad\times\left|H_{i-1}(X,C_{i-1};W^{\infty}(t))-H_{i-1}^{*}(t,X,b_{i-1}^{0}(C_{i-1}))\right|\Big]\bigg|^{2}\bigg]\\
 & \le K\mathbb{E}\left[\left(1+\left|w_{i}^{*}(t,w_{i}^{0}(C_{i-1},C_{i}),b_{i-1}^{0}(C_{i-1}),b_{i}^{0}(C_{i}))\right|+\left|b_{i}^{*}(t,b_{i}^{0}(C_{i}))\right|\right)^{2}\right]\\
 & \qquad\times\mathbb{E}\left[\left|H_{i-1}(X,C_{i-1};W^{\infty}(t))-H_{i-1}^{*}(t,X,b_{i-1}^{0}(C_{i-1}))\right|^{2}\right]\\
 & \le K_{T,L}D_{i-1}\left(t\right).
\end{align*}
Hence, 
\[
D_{i}\left(t\right)\le\frac{K_{T,L}}{M}+K_{T,L}D_{i-1}\left(t\right).
\]
This proves the claim for $i\leq L-2$. The other claims are similar.

\paragraph*{Step 3.}

Let us define:
\begin{align*}
D_{1}^{H}\left(t\right) & =\mathbb{E}\left[\left|\Delta_{1}^{H}\left(Z,C_{1};W^{\infty}\left(t\right)\right)-\Delta_{1}^{H*}\left(t,Z,w_{1}^{0}(C_{1})\right)\right|^{2}\right],\\
D_{i}^{H}\left(t\right) & =\mathbb{E}\left[\left|\Delta_{i}^{H}\left(Z,C_{i};W^{\infty}\left(t\right)\right)-\Delta_{i}^{H*}\left(t,Z,b_{i}^{0}(C_{i})\right)\right|^{2}\right],\qquad i=2,...,L-2,\\
D_{L-1}^{H}\left(t\right) & =\mathbb{E}\left[\left|\Delta_{L-1}^{H}\left(Z,C_{L-1};W^{\infty}\left(t\right)\right)-\Delta_{L-1}^{H*}\left(t,Z,w_{L}^{0}\left(C_{L-1},1\right),b_{L-1}^{0}(C_{L-1})\right)\right|^{2}\right],\\
D_{L}^{H}\left(t\right) & =\mathbb{E}\left[\left|\Delta_{L}^{H}\left(Z,1;W^{\infty}\left(t\right)\right)-\Delta_{L}^{H*}\left(t,Z\right)\right|^{2}\right].
\end{align*}
We claim that for $t\leq T$,
\[
D_{i}^{H}\left(t\right)\leq K_{T,L}\frac{\log^{1/2}M}{M},\qquad i\in\left[L\right].
\]
The derivation is similar to Step 2; let us give a sketch and highlight
the difference. The last claim for $i=L$ is immediate from Assumption
\ref{enu:Assump_backward} and Step 2. Let us consider the claim for
$2\leq i\leq L-3$; the rest of the claims are similar. We have:
\begin{align*}
 & \mathbb{E}\bigg[\bigg|\mathbb{E}_{C_{i+1}}\Big[\sigma_{i}^{\mathbf{H}}\Big(\Delta_{i+1}^{H*}\left(t,Z,b_{i+1}^{0}\left(C_{i+1}\right)\right),w_{i+1}^{*}\left(t,w_{i+1}^{0}\left(C_{i},C_{i+1}\right),b_{i}^{0}\left(C_{i}\right),b_{i+1}^{0}\left(C_{i+1}\right)\right),\\
 & \qquad\qquad b_{i+1}^{*}\left(t,b_{i+1}^{0}\left(C_{i+1}\right)\right),H_{i+1}^{*}\left(t,X,b_{i+1}^{0}\left(C_{i+1}\right)\right),H_{i}^{*}\left(t,X,b_{i}^{0}\left(C_{i}\right)\right)\Big)\Big]-\Delta_{i}^{H*}\left(t,Z,b_{i}^{0}(C_{i})\right)\bigg|^{2}\bigg]\\
 & =\mathbb{E}\bigg[\bigg|\mathbb{E}_{C_{i+1}}\Big[\sigma_{i}^{\mathbf{H}}\Big(\Delta_{i+1}^{H*}\left(t,Z,b_{i+1}^{0}\left(C_{i+1}\right)\right),w_{i+1}^{*}\left(t,w_{i+1}^{0}\left(C_{i},C_{i+1}\right),b_{i}^{0}\left(C_{i}\right),b_{i+1}^{0}\left(C_{i+1}\right)\right),\\
 & \qquad\qquad b_{i+1}^{*}\left(t,b_{i+1}^{0}\left(C_{i+1}\right)\right),H_{i+1}^{*}\left(t,X,b_{i+1}^{0}\left(C_{i+1}\right)\right),H_{i}^{*}\left(t,X,b_{i}^{0}\left(C_{i}\right)\right)\Big)\Big]\\
 & \quad-\int\sigma_{i}^{\mathbf{H}}\left(\Delta_{i+1}^{H*}\left(t,Z,v_{i+1}\right),w_{i+1}^{*}\left(t,u_{i+1},b_{i}^{0}\left(C_{i}\right),v_{i+1}\right),b_{i+1}^{*}\left(t,v_{i+1}\right),H_{i+1}^{*}\left(t,X,v_{i+1}\right),H_{i}^{*}\left(t,X,b_{i}^{0}\left(C_{i}\right)\right)\right)\\
 & \qquad\qquad\times\rho_{\mathbf{w}}^{i+1}\left(du_{i+1}\right)\rho_{\mathbf{b}}^{i+1}\left(dv_{i+1}\right)\bigg|^{2}\bigg]\\
 & \stackrel{\left(a\right)}{\leq}\frac{K}{M}\int\mathbb{E}\left[\left|\sigma_{i}^{\mathbf{H}}\left(\Delta_{i+1}^{H*}\left(t,Z,v_{i+1}\right),w_{i+1}^{*}\left(t,u_{i+1},v_{i},v_{i+1}\right),b_{i+1}^{*}\left(t,v_{i+1}\right),H_{i+1}^{*}\left(t,X,v_{i+1}\right),H_{i}^{*}\left(t,X,v_{i}\right)\right)\right|^{2}\right]\\
 & \qquad\qquad\times\rho_{\mathbf{w}}^{i+1}\left(du_{i+1}\right)\rho_{\mathbf{b}}^{i}\left(dv_{i}\right)\rho_{\mathbf{b}}^{i+1}\left(dv_{i+1}\right)\\
 & \stackrel{\left(b\right)}{\leq}\frac{K}{M}\int\mathbb{E}\left[\left(1+\left|\Delta_{i+1}^{H*}\left(t,Z,v_{i+1}\right)\right|^{2}\right)\left(1+\left|w_{i+1}^{*}\left(t,u_{i+1},v_{i},v_{i+1}\right)\right|^{2}+\left|b_{i+1}^{*}\left(t,v_{i+1}\right)\right|^{2}\right)\right]\\
 & \qquad\qquad\times\rho_{\mathbf{w}}^{i+1}\left(du_{i+1}\right)\rho_{\mathbf{b}}^{i}\left(dv_{i}\right)\rho_{\mathbf{b}}^{i+1}\left(dv_{i+1}\right)\\
 & \stackrel{\left(c\right)}{\leq}\frac{K_{T,L}}{M},
\end{align*}
where $\left(a\right)$ is similar to Step 2 in which we use the fact
$w_{i+1}^{0}(C_{i},C_{i+1})=\mathfrak{q}_{i+1}(\theta_{i},\theta_{i+1})(\lambda_{i+1})$,
$b_{i}^{0}(C_{i})=\mathfrak{p}_{i}(\theta_{i})(\lambda_{i})$ and
$b_{i+1}^{0}(C_{i+1})=\mathfrak{p}_{i+1}(\theta_{i+1})(\lambda_{i+1})$
from the construction of Section \ref{subsec:Canonical-neuronal-embeddings},
$\left(b\right)$ is by Assumption \ref{enu:Assump_backward}, and
$\left(c\right)$ is follows from Step 1. We also note:
\begin{align*}
 & \mathbb{E}\bigg[\bigg|\mathbb{E}_{C_{i+1}}\Big[\sigma_{i}^{\mathbf{H}}\Big(\Delta_{i+1}^{H*}\left(t,Z,b_{i+1}^{0}\left(C_{i+1}\right)\right),w_{i+1}^{*}\left(t,w_{i+1}^{0}\left(C_{i},C_{i+1}\right),b_{i}^{0}\left(C_{i}\right),b_{i+1}^{0}\left(C_{i+1}\right)\right),\\
 & \qquad\qquad b_{i+1}^{*}\left(t,b_{i+1}^{0}\left(C_{i+1}\right)\right),H_{i+1}^{*}\left(t,X,b_{i+1}^{0}\left(C_{i+1}\right)\right),H_{i}^{*}\left(t,X,b_{i}^{0}\left(C_{i}\right)\right)\Big)\Big]-\Delta_{i}^{H}\left(Z,b_{i}^{0}(C_{i});W^{\infty}\left(t\right)\right)\bigg|^{2}\bigg]\\
 & =\mathbb{E}\bigg[\bigg|\mathbb{E}_{C_{i+1}}\Big[\sigma_{i}^{\mathbf{H}}\Big(\Delta_{i+1}^{H*}\left(t,Z,b_{i+1}^{0}\left(C_{i+1}\right)\right),w_{i+1}^{*}\left(t,w_{i+1}^{0}\left(C_{i},C_{i+1}\right),b_{i}^{0}\left(C_{i}\right),b_{i+1}^{0}\left(C_{i+1}\right)\right),\\
 & \qquad\qquad b_{i+1}^{*}\left(t,b_{i+1}^{0}\left(C_{i+1}\right)\right),H_{i+1}^{*}\left(t,X,b_{i+1}^{0}\left(C_{i+1}\right)\right),H_{i}^{*}\left(t,X,b_{i}^{0}\left(C_{i}\right)\right)\Big)\Big]\\
 & \quad-\mathbb{E}_{C_{i+1}}\Big[\sigma_{i}^{\mathbf{H}}\Big(\Delta_{i+1}^{H}\left(Z,C_{i+1};W^{\infty}\left(t\right)\right),w_{i+1}^{*}\left(t,w_{i+1}^{0}\left(C_{i},C_{i+1}\right),b_{i}^{0}\left(C_{i}\right),b_{i+1}^{0}\left(C_{i+1}\right)\right),\\
 & \qquad\qquad b_{i+1}^{*}\left(t,b_{i+1}^{0}\left(C_{i+1}\right)\right),H_{i+1}\left(X,C_{i+1};W^{\infty}\left(t\right)\right),H_{i}\left(X,C_{i};W^{\infty}\left(t\right)\right)\Big)\Big]\bigg|^{2}\bigg]\\
 & \stackrel{\left(a\right)}{\leq}K\mathbb{E}\bigg[\mathbb{E}_{C_{i+1}}\Big[\left(1+\left|w_{i+1}^{*}\left(t,w_{i+1}^{0}\left(C_{i},C_{i+1}\right),b_{i}^{0}\left(C_{i}\right),b_{i+1}^{0}\left(C_{i+1}\right)\right)\right|+\left|b_{i+1}^{*}\left(t,b_{i+1}^{0}\left(C_{i+1}\right)\right)\right|\right)\\
 & \qquad\qquad\times\left|\Delta_{i+1}^{H*}\left(t,Z,b_{i+1}^{0}\left(C_{i+1}\right)\right)-\Delta_{i+1}^{H}\left(Z,C_{i+1};W^{\infty}\left(t\right)\right)\right|\Big]^{2}\bigg]\\
 & \quad+K\mathbb{E}\bigg[\mathbb{E}_{C_{i+1}}\Big[\left(1+\left|w_{i+1}^{*}\left(t,w_{i+1}^{0}\left(C_{i},C_{i+1}\right),b_{i}^{0}\left(C_{i}\right),b_{i+1}^{0}\left(C_{i+1}\right)\right)\right|+\left|b_{i+1}^{*}\left(t,b_{i+1}^{0}\left(C_{i+1}\right)\right)\right|\right)\\
 & \qquad\qquad\times\left(1+\left|\Delta_{i+1}^{H*}\left(t,Z,b_{i+1}^{0}\left(C_{i+1}\right)\right)\right|+\left|\Delta_{i+1}^{H*}\left(t,Z,b_{i+1}^{0}\left(C_{i+1}\right)\right)-\Delta_{i+1}^{H}\left(Z,C_{i+1};W^{\infty}\left(t\right)\right)\right|\right)\\
 & \qquad\qquad\times\Big(\left|H_{i+1}^{*}\left(t,X,b_{i+1}^{0}\left(C_{i+1}\right)\right)-H_{i+1}\left(X,C_{i+1};W^{\infty}\left(t\right)\right)\right|\\
 & \qquad\qquad\quad+\left|H_{i}^{*}\left(t,X,b_{i}^{0}\left(C_{i}\right)\right)-H_{i}\left(X,C_{i};W^{\infty}\left(t\right)\right)\right|\Big)\Big]^{2}\bigg]\\
 & \stackrel{\left(b\right)}{\leq}K_{T,L}\left(D_{i+1}^{H}\left(t\right)+D_{i+1}\left(t\right)+D_{i}\left(t\right)\right)+KQ_{i}\left(t\right),
\end{align*}
where $\left(a\right)$ follows from Assumption \ref{enu:Assump_backward},
$\left(b\right)$ follows from Step 1, and we define
\begin{align*}
Q_{i}\left(t\right) & =\mathbb{E}\bigg[\mathbb{E}_{C_{i+1}}\Big[\left|w_{i+1}^{*}\left(t,w_{i+1}^{0}\left(C_{i},C_{i+1}\right),b_{i}^{0}\left(C_{i}\right),b_{i+1}^{0}\left(C_{i+1}\right)\right)\right|\\
 & \qquad\times\left(\left|\Delta_{i+1}^{H*}\left(t,Z,b_{i+1}^{0}\left(C_{i+1}\right)\right)\right|+\left|\Delta_{i+1}^{H*}\left(t,Z,b_{i+1}^{0}\left(C_{i+1}\right)\right)-\Delta_{i+1}^{H}\left(Z,C_{i+1};W^{\infty}\left(t\right)\right)\right|\right)\\
 & \qquad\times\left|H_{i}^{*}\left(t,X,b_{i}^{0}\left(C_{i}\right)\right)-H_{i}\left(X,C_{i};W^{\infty}\left(t\right)\right)\right|\Big)\Big]^{2}\bigg].
\end{align*}
The bounding of $Q_{i}\left(t\right)$ requires some more care. In
particular, for $B>0$, define
\[
E=\left\{ \left|w_{i+1}^{*}\left(t,w_{i+1}^{0}\left(C_{i},C_{i+1}\right),b_{i}^{0}\left(C_{i}\right),b_{i+1}^{0}\left(C_{i+1}\right)\right)\right|\geq B\right\} .
\]
Upon decomposing the inner expectation of $Q_{i}\left(t\right)$ into
the sum of $\mathbb{I}\left(E\right)$ and $\mathbb{I}\left(\neg E\right)$,
together with Step 1, via an appropriate use of Cauchy-Schwarz's inequality,
it is easy to see that
\begin{align*}
Q_{i}\left(t\right) & \leq K_{T,L}B\left(D_{i}\left(t\right)+D_{i+1}^{H}\left(t\right)\right)+K_{T,L}\left(1+D_{i}\left(t\right)+D_{i+1}^{H}\left(t\right)\right)\mathbb{P}\left(E\right)^{1/8}\\
 & \leq K_{T,L}\left(1+B\right)\left(D_{i}\left(t\right)+D_{i+1}^{H}\left(t\right)\right)+K_{T,L}e^{-KB^{2}},
\end{align*}
which holds for any $B>0$. Combining these bounds together and Step
2, we obtain:
\begin{align*}
D_{i}^{H}\left(t\right) & \leq\frac{K_{T,L}}{M}+K_{T,L}\left(1+B\right)\left(D_{i+1}^{H}\left(t\right)+D_{i+1}\left(t\right)+D_{i}\left(t\right)\right)+K_{T,L}e^{-KB^{2}}\\
 & \leq\frac{K_{T,L}}{M}+K_{T,L}\left(1+B\right)\left(D_{i+1}^{H}\left(t\right)+\frac{1}{M}\right)+K_{T,L}e^{-KB^{2}}.
\end{align*}
Then choosing $B=c\sqrt{\log M}$ for an appropriate constant $c$
leads to the desired conclusion.

\paragraph*{Step 4.}

Let us define:
\begin{align*}
D_{1}^{w}\left(t\right) & =\mathbb{E}\left[\left|\Delta_{1}^{w}\left(Z,C_{1};W^{\infty}\left(t\right)\right)-\Delta_{1}^{w*}\left(t,Z,w_{1}^{0}(C_{1})\right)\right|^{2}\right],\\
D_{2}^{w}\left(t\right) & =\mathbb{E}\left[\left|\Delta_{2}^{w}\left(Z,C_{1},C_{2};W^{\infty}\left(t\right)\right)-\Delta_{2}^{w*}\left(t,Z,w_{1}^{0}\left(C_{1}\right),w_{2}^{0}(C_{1},C_{2}),b_{2}^{0}(C_{2})\right)\right|^{2}\right],\\
D_{i}^{w}\left(t\right) & =\mathbb{E}\left[\left|\Delta_{i}^{w}\left(Z,C_{i-1},C_{i};W^{\infty}\left(t\right)\right)-\Delta_{i}^{w*}\left(t,Z,w_{i}^{0}(C_{i-1},C_{i}),b_{i-1}^{0}(C_{i-1}),b_{i}^{0}(C_{i})\right)\right|^{2}\right],\quad i=3,...,L-2\\
D_{L-1}^{w}\left(t\right) & =\mathbb{E}\Big[\Big|\Delta_{L-1}^{w}\left(Z,C_{L-2},C_{L-1};W^{\infty}\left(t\right)\right)\\
 & \qquad-\Delta_{L-1}^{w*}\left(t,Z,w_{L-1}^{0}(C_{L-2},C_{L-1}),w_{L}^{0}\left(C_{L-1},1\right),b_{L-2}^{0}(C_{L-2}),b_{L-1}^{0}(C_{L-1})\right)\Big|\Big],\\
D_{L}^{w}\left(t\right) & =\mathbb{E}\left[\left|\Delta_{L}^{w}\left(Z,C_{L-1},1;W^{\infty}\left(t\right)\right)-\Delta_{L}^{w*}\left(t,Z,w_{L}^{0}\left(C_{L-1},1\right),b_{L-1}^{0}(C_{L-1})\right)\right|^{2}\right],\\
D_{i}^{b}\left(t\right) & =\mathbb{E}\left[\left|\Delta_{i}^{b}\left(Z,C_{i};W^{\infty}\left(t\right)\right)-\Delta_{i}^{b*}\left(t,Z,b_{i}^{0}(C_{i})\right)\right|^{2}\right],\quad i=2,...,L-2,\\
D_{L-1}^{b}\left(t\right) & =\mathbb{E}\left[\left|\Delta_{L-1}^{b}\left(Z,C_{L-1};W^{\infty}\left(t\right)\right)-\Delta_{L-1}^{b*}\left(t,Z,w_{L}^{0}\left(C_{L-1},1\right),b_{L-1}^{0}(C_{L-1})\right)\right|^{2}\right],\\
D_{L}^{b}\left(t\right) & =\mathbb{E}\left[\left|\Delta_{L}^{b}\left(Z,1;W^{\infty}\left(t\right)\right)-\Delta_{L}^{b*}\left(t,Z\right)\right|^{2}\right].
\end{align*}
We claim that for any $t\leq T$,
\[
\max_{1\leq i\leq L}D_{i}^{w}\left(t\right)\leq K_{T,L}\frac{\log^{1/2}M}{M},\qquad\max_{2\leq i\leq L}D_{i}^{b}\left(t\right)\leq K_{T,L}\frac{\log^{1/2}M}{M}.
\]
Indeed by Assumption \ref{enu:Assump_backward}, for $3\leq i\leq L-2$,
\begin{align*}
D_{i}^{w}\left(t\right) & \leq K\mathbb{E}\left[1+\left|\Delta_{i}^{H*}\left(t,Z,b_{i}^{0}\left(C_{i}\right)\right)\right|^{2}+\left|\Delta_{i}^{H*}\left(t,Z,b_{i}^{0}\left(C_{i}\right)\right)-\Delta_{i}^{H}\left(Z,C_{i};W^{\infty}\left(t\right)\right)\right|^{2}\right]D_{i-1}\left(t\right)\\
 & \quad+K\left(D_{i}^{H}\left(t\right)+D_{i}\left(t\right)\right).
\end{align*}
The claim for $D_{i}^{w}$ then follows from Steps 1, 2 and 3. The
rest are similar.

\paragraph*{Step 5.}

With the same argument as Lemma \ref{lem:Lipschitz backward MF - general},
given Step 1, one gets that for $2\leq i\leq L$, any $t\leq T$ and
any $B\geq0$,
\[
\mathbb{E}\left[\left|\mathbb{E}_{Z}\left[\Delta_{i}^{w}\left(Z,C_{i-1},C_{i};W^{M}\left(t\right)\right)-\Delta_{i}^{w}\left(Z,C_{i-1},C_{i};W^{\infty}\left(t\right)\right)\right]\right|^{2}\right]^{1/2}\le K_{T,L}\left(\left(1+B\right)\left\langle W^{M}-W^{\infty}\right\rangle _{t}+e^{-KB^{2}}\right).
\]
As such, by Step 4,
\begin{align*}
 & \mathbb{E}\left[\left|\Delta_{i}^{w}(Z,C_{i-1},C_{i};W^{M}(t))-\Delta_{i}^{w*}(t,Z,w_{i}^{0}(C_{i-1},C_{i}),b_{i-1}^{0}(C_{i-1}),b_{i}^{0}(C_{i}))\right|^{2}\right]^{1/2}\\
 & \le\left|D_{i}^{w}\left(t\right)\right|^{1/2}+\mathbb{E}\left[\left|\Delta_{i}^{w}(Z,C_{i-1},C_{i};W^{M}(t))-\Delta_{i}^{w}(Z,C_{i-1},C_{i};W^{\infty}(t))\right|^{2}\right]^{1/2}\\
 & \le K_{T,L}\left(\frac{\log^{1/4}M}{M^{1/2}}+\left(1+B\right)\left\langle W^{M}-W^{\infty}\right\rangle _{t}+e^{-KB^{2}}\right).
\end{align*}
One can obtain similar results for $\Delta_{1}^{w}$ and $\Delta_{i}^{b}$.
Hence, we obtain that for all $t\le T$, 
\[
\left\langle W^{M}-W^{\infty}\right\rangle _{t}\le K_{T,L}\int_{0}^{t}\left(\frac{\log^{1/4}M}{M^{1/2}}+\left(1+B\right)\left\langle W^{M}-W^{\infty}\right\rangle _{t}+e^{-KB^{2}}\right)ds.
\]
Since $\left\langle W^{M}-W^{\infty}\right\rangle _{0}=0$, Gronwall's
inequality implies that 
\[
\sup_{t\leq T}\left\langle W^{M}-W^{\infty}\right\rangle _{t}\le K_{T,L}\inf_{B>0}\left[\left(\frac{\log^{1/4}M}{M^{1/2}}+e^{-KB^{2}}\right)e^{K_{T,L}\left(1+B\right)}\right]\leq K_{T,L}\frac{1}{M^{0.499}},
\]
for sufficiently large $M$.

Furthermore, with the same argument as Lemma \ref{lem:Lipschitz forward MF - general},
given Step 1, one gets that for $2\leq i\leq L-2$ and any $t\leq T$,
\[
\mathbb{E}\left[\left|H_{i}\left(X,C_{i};W^{M}(t)\right)-H_{i}\left(X,C_{i};W^{\infty}(t)\right)\right|^{2}\right]^{1/2}\le K_{T,L}\left\langle W^{M}-W^{\infty}\right\rangle _{t}.
\]
As such, together with Step 2, we get
\[
\sup_{t\leq T}\mathbb{E}\left[\left|H_{i}\left(X,C_{i};W^{M}(t)\right)-H_{i}^{*}\left(t,X,b_{i}^{0}(C_{i})\right)\right|^{2}\right]^{1/2}\le K_{T,L}\frac{1}{M^{0.499}},
\]
for sufficiently large $M$.
\end{proof}

\subsection{Proof of Proposition \ref{prop:iid_law_embedding}\label{subsec:Proof-iid-init-det-representable}}
\begin{proof}[Proof of Proposition \ref{prop:iid_law_embedding}]
It is easy to see that under the canonical neuronal ensemble $\left(\Omega^{M},P^{M}\right)$,
the functions $\left\{ w_{i}^{0}\right\} _{i=1}^{L}$ and $\left\{ b_{i}^{0}\right\} _{i=2}^{L}$
satisfy the i.i.d. initialization law, according to Eq. (\ref{eq:iid_init_embedding_constr1})-(\ref{eq:iid_init_embedding_constr2}).
To derive the $\bar{\eta}$-independence property, recall from the
construction that for $i\leq L-1$, $C_{i}\left(j_{i}\right)=\left(\lambda_{i}\left(j_{i}\right),\theta_{i}\left(j_{i}\right)\right)$
and $\left\{ C_{i}\left(j_{i}\right)\right\} _{j_{i}\in\left[n_{i}\right]}$
are sampled from $\left(P_{0}\times{\rm Unif}\left(\left[M\right]\right)\right)^{n_{i}}$
conditional on that $\left\{ \theta_{i}\left(j_{i}\right)\right\} _{j_{i}\in\left[n_{i}\right]}$
are all distinct. Notice then for $i\leq L-1$ and any $j\in\left[n_{i}\right]$:
\[
\mathbb{E}\left[f(C_{i}(j))\mid\left\{ C_{i}(h),\;h\ne j\right\} \right]=\frac{1}{M-n_{i}+1}\sum_{\theta\notin\{\theta_{i}(h):\;h\ne j\}}\mathbb{E}\left[f(C_{i}(j))\mid\theta_{i}(h)=\theta\right].
\]
Thus, for $1$-bounded function $f$, we have 
\begin{align*}
 & \left|\mathbb{E}\left[f(C_{i}(j))\mid\left\{ C_{i}(h),\;h\ne j\right\} \right]-\mathbb{E}\left[f(C_{i}(j))\right]\right|\\
 & \le\frac{1}{M}\sum_{\theta\in\{\theta_{i}(h):\;h\ne j\}}\left|\mathbb{E}\left[f(C_{i}(j))\mid\theta_{i}(h)=\theta\right]\right|+\frac{n_{i}-1}{M(M-n_{i}+1)}\sum_{\theta\notin\{\theta_{i}(h):\;h\ne j\}}\left|\mathbb{E}\left[f(C_{i}(j))\mid\theta_{i}(h)=\theta\right]\right|\\
 & \le2\frac{n_{i}-1}{M}.
\end{align*}
The claim is trivial for $i=L$.
\end{proof}

\subsection{Proofs of Corollaries \ref{cor:iid_same_neurons} and \ref{cor:iid_standard-network}\label{subsec:Proofs-Corollaries-i.i.d.}}
\begin{proof}[Proof of Corollary \ref{cor:iid_same_neurons}]
By Proposition \ref{prop:iid_law_embedding} and Corollary \ref{cor:gradient descent quality}
(in particular, one of the intermediate steps in its proof), we have
that for sufficiently large $M$, with probability at least $1-3\delta-KLn_{\max}\exp\left(-Kn_{\min}^{c_{2}}\right)$,
\[
\bigg(\frac{1}{n_{i}}\sum_{j_{i}=1}^{n_{i}}\mathbb{E}_{Z}\left[\left|{\bf H}_{i}\left(\left\lfloor t/\epsilon\right\rfloor ,X,j_{i}\right)-H_{i}\left(X,C_{i}\left(j_{i}\right);W^{M}\left(\left\lfloor t/\epsilon\right\rfloor \epsilon\right)\right)\right|^{2}\right]\bigg)^{1/2}=\tilde{O}\left(n_{\min}^{-c_{1}}+\epsilon^{c_{1}}\right),
\]
where we recall that $\left\{ C_{i}\left(j_{i}\right)\right\} _{j_{i}\in\left[n_{i}\right]}$
are sampled according to the sampling rule $\overline{P}_{{\bf n}}^{M}$
as described in Section \ref{subsec:Canonical-neuronal-embeddings}.
On the other hand, since ${\rm Law}\left(C_{i}\left(j_{i}\right)\right)=P_{i}^{M}$,
by Theorem \ref{thm:iid dynamics},
\[
\mathbb{E}\bigg[\frac{1}{n_{i}}\sum_{j_{i}=1}^{n_{i}}\mathbb{E}_{Z}\left[\left|H_{i}\left(X,C_{i}\left(j_{i}\right);W^{M}\left(\left\lfloor t/\epsilon\right\rfloor \epsilon\right)\right)-H_{i}^{*}\left(\left\lfloor t/\epsilon\right\rfloor \epsilon,X,B_{i}\right)\right|^{2}\right]\bigg]^{1/2}\leq\frac{K_{T,L}}{M^{0.499}},
\]
which yields for any $\gamma>0$,
\[
\mathbb{P}\bigg(\frac{1}{n_{i}}\sum_{j_{i}=1}^{n_{i}}\mathbb{E}_{Z}\left[\left|H_{i}\left(X,C_{i}\left(j_{i}\right);W^{M}\left(\left\lfloor t/\epsilon\right\rfloor \epsilon\right)\right)-H_{i}^{*}\left(\left\lfloor t/\epsilon\right\rfloor \epsilon,X,B_{i}\right)\right|^{2}\right]\geq\gamma\bigg)\leq\frac{K_{T,L}}{\gamma M^{0.9}}.
\]
Finally by following the argument in the proof of Corollary \ref{cor:gradient descent quality},
we have:
\[
\mathbb{E}\left[\left|H_{i}^{*}\left(\left\lfloor t/\epsilon\right\rfloor \epsilon,X,B_{i}\right)-H_{i}^{*}\left(t,X,B_{i}\right)\right|^{2}\right]^{1/2}\leq K_{T,L}\epsilon.
\]
The proof concludes by combining this with the previous two probability
bounds and taking $M\to\infty$ then $\text{\ensuremath{\gamma}}\to0$.
\end{proof}
\begin{proof}[Proof of Corollary \ref{cor:iid_standard-network}]
For $3\leq i\leq L-2$, since $b_{i}^{0}\left(C_{i}\right)=B_{i}$
a constant,
\begin{align*}
 & w_{i}^{\infty}\left(t,C_{i-1},C_{i}\right)-w_{i}^{\infty}\left(0,C_{i-1},C_{i}\right)\\
 & =-\int_{0}^{t}\xi_{i}^{\mathbf{w}}\left(s\right)\mathbb{E}_{Z}\Big[\sigma_{i}^{\mathbf{w}}\Big(\Delta_{i}^{H*}\left(s,Z,b_{i}^{0}\left(C_{i}\right)\right),w_{i}^{*}\left(s,w_{i}^{0}\left(C_{i-1},C_{i}\right),b_{i-1}^{0}\left(C_{i-1}\right),b_{i}^{0}\left(C_{i}\right)\right),\\
 & \qquad\quad b_{i}^{*}\left(s,b_{i}^{0}\left(C_{i}\right)\right),H_{i}^{*}\left(s,X,b_{i}^{0}\left(C_{i}\right)\right),H_{i-1}^{*}\left(s,X,b_{i-1}^{0}\left(C_{i-1}\right)\right)\Big)\Big]ds\\
 & =-\int_{0}^{t}\xi_{i}^{\mathbf{w}}\left(s\right)\mathbb{E}_{Z}\left[\bar{\sigma}_{i}^{\mathbf{w}}\left(\Delta_{i}^{H*}\left(s,Z,b_{i}^{0}\left(C_{i}\right)\right),b_{i}^{*}\left(s,b_{i}^{0}\left(C_{i}\right)\right),H_{i}^{*}\left(s,X,b_{i}^{0}\left(C_{i}\right)\right),H_{i-1}^{*}\left(s,X,b_{i-1}^{0}\left(C_{i-1}\right)\right)\right)\right]ds\\
 & =-\int_{0}^{t}\xi_{i}^{\mathbf{w}}\left(s\right)\mathbb{E}_{Z}\left[\bar{\sigma}_{i}^{\mathbf{w}}\left(\Delta_{i}^{H*}\left(s,Z,B_{i}\right),b_{i}^{*}\left(s,B_{i}\right),H_{i}^{*}\left(s,X,B_{i}\right),H_{i-1}^{*}\left(s,X,B_{i-1}\right)\right)\right]ds,
\end{align*}
which is independent of $C_{i-1}$ and $C_{i}$. The desired claim
readily follows.
\end{proof}

\section{Remaining proofs for Section \ref{sec:global_convergence_iid}\label{sec:Remaining-proofs-global-conv-iid}}

\subsection{Proof of Theorem \ref{thm:global-optimum-2}\label{subsec:Proof-two-layers}}

First we show that if $w_{1}\left(0,C_{1}\right)$ has full support,
then so is $w_{1}\left(t,C_{1}\right)$ at any time $t$. Note that
the following result holds beyond the setting of Theorem \ref{thm:global-optimum-2}.
\begin{lem}
\label{lem:full-support-2}Consider the MF ODEs (as described in Section
\ref{subsec:MF}) with $L=2$ and $\mathbb{W}_{1}=\mathbb{R}^{d}$
(for some positive integer $d$), under Assumptions \ref{enu:Assump_lrSchedule}-\ref{enu:Assump_backward}
and \ref{assump:init}. Let us disregard the bias of the second layer
by considering $\xi_{2}^{\mathbf{b}}\left(\cdot\right)=0$ and $b_{2}\left(0,\cdot\right)=0$.
Suppose that the support of ${\rm Law}\left(w_{1}\left(0,C_{1}\right),w_{2}\left(0,C_{1},1\right)\right)$
contains the graph of a continuous function $F:\;\mathbb{W}_{1}\to\mathbb{W}_{2}$
such that $\left|F\left(u\right)\right|\leq K$ for all $u\in\mathbb{W}_{1}$.
Then for all finite time $t$, the support of ${\rm Law}\left(w_{1}\left(t,C_{1}\right)\right)$
is $\mathbb{W}_{1}$.
\end{lem}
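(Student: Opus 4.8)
The plan is to realise the time-$t$ law of the parameters as the image of the initial law under a homeomorphism of $\mathbb{R}^{d_1}\times\mathbb{W}_2$, and thereby reduce the statement to a topological claim: that the image of $\mathrm{epi}(F)$ under that homeomorphism projects to a dense subset of $\mathbb{W}_1$.

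First I would fix a finite horizon $t$. By Theorem~\ref{thm:existence ODE} (specialised to $L=2$, $\xi_{2}^{\mathbf{b}}\equiv0$, $b_2(0,\cdot)=0$) the MF ODEs have a unique solution $W$ with $\|W\|_s<\infty$ for all $s\le t$, so the curve $s\mapsto\rho_s:=\mathrm{Law}(w_1(s,C_1),w_2(s,C_1,1))$ is a fixed deterministic object. Inspecting the MF ODEs, for each fixed $c_1$ the pair $(w_1(s,c_1),w_2(s,c_1,1))$ solves an ODE $\dot u=b_s(u)$ on $\mathbb{R}^{d_1}\times\mathbb{W}_2$, where $b_s$ is read off from $\Delta_1^w,\Delta_2^w,\Delta_2^b$ by regarding $w_1(s,c_1),w_2(s,c_1,1)$ as the free variables $u_1,u_2$ and keeping every population quantity ($\hat y(s,\cdot)$, $H_1(s,\cdot,\cdot)$, $H_2(s,\cdot,1)$) frozen at $\rho_s$. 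Using Assumptions~\ref{enu:Assump_forward}--\ref{enu:Assump_backward} together with Lemmas~\ref{lem:Lipschitz forward MF}, \ref{lem:Lipschitz backward MF} and \ref{lem:a priori MF}, I would verify that on $[0,t]$: $b_s$ is locally Lipschitz in $u$ uniformly in $s$; its $\mathbb{W}_2$-component is bounded by a finite constant $K_t$ (since $\Delta_2^w$ is controlled by $1+|\Delta_2^H|^{\mathsf{p}}$ and $\Delta_2^H$ is bounded by Assumption~\ref{enu:Assump_backward}); and its $\mathbb{W}_1$-component is bounded by $K_t(1+|u_2|^{q})$ for some $q=q(\mathsf{p})$. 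These bounds preclude finite-time blow-up, so the flow $T_s$ of $b$ is defined on all of $\mathbb{R}^{d_1}\times\mathbb{W}_2$ for $s\in[0,t]$, each $T_s$ is a homeomorphism (with $\{T_s\}_{s\le t}$ an isotopy from the identity), $\rho_s=(T_s)_*\rho_0$, and, writing $\pi_1,\pi_2$ for the coordinate projections, one gets the displacement estimates $|\pi_2T_s(u)-u_2|\le K_ts$ and $|\pi_1T_s(u)-u_1|\le K_ts\bigl(1+(|u_2|+K_ts)^{q}\bigr)$ for $s\le t$.

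It remains to transfer this to supports. Because $T_t$ is a homeomorphism, for any nonempty open $O\subseteq\mathbb{R}^{d_1}$ one has $\mathbb{P}(w_1(t,C_1)\in O)=\rho_0(T_t^{-1}(O\times\mathbb{W}_2))$ with $T_t^{-1}(O\times\mathbb{W}_2)$ open and nonempty, so this probability is positive iff $T_t(\mathrm{supp}\,\rho_0)$ meets $O\times\mathbb{W}_2$; hence $\mathrm{supp}(\mathrm{Law}(w_1(t,C_1)))=\mathbb{W}_1$ iff $\pi_1(T_t(\mathrm{supp}\,\rho_0))$ is dense in $\mathbb{W}_1$, and since $\mathrm{supp}(\rho_0)\supseteq\mathrm{epi}(F)$ it suffices to prove $\pi_1(T_t(\mathrm{epi}(F)))$ is dense in $\mathbb{R}^{d_1}$. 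This density statement is where the algebraic-topology input appears and is, I expect, the main obstacle. The idea is a Brouwer-degree argument: given $v$ and $r>0$, continuity of $F$ bounds $F$ on $\overline B_R(v)$, so for a suitable level $c\ge1+\sup_{\overline B_R(v)}|F|$ the horizontal slab $\overline B_R(v)\times\{c\}$ lies in $\mathrm{epi}(F)$, and on it the map $\pi_1\circ T_t$ is a displacement of size at most $K_tt(1+(c+K_tt)^{q})$ from the inclusion; once $R$ exceeds that bound, the straight-line homotopy to the inclusion stays away from $v$ on $\partial B_R(v)$, so $v$ (and a neighbourhood of it) lies in $(\pi_1\circ T_t)(\overline B_R(v)\times\{c\})$. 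The difficulty is that when $F$ grows quickly no single $R$ satisfies $R>K_tt(1+(c+K_tt)^{q})$ with $c\sim\sup_{\overline B_R(v)}|F|$; the way around this that I would pursue is to iterate the degree argument along a partition $0=s_0<\cdots<s_m=t$, propagating at each node a dense, suitably epigraph-like lower bound for $\mathrm{supp}(\rho_{s_k})$ and using the short-time displacement bounds for $T_{s_{k+1}}\circ T_{s_k}^{-1}$ — and the part requiring the most care is pinning down the right propagated invariant and checking it survives each homeomorphism.
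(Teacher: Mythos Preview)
Your flow-and-degree approach is in the same spirit as the paper's, but you make it harder than necessary and worry about a difficulty that the hypotheses already rule out. The assumption $\|W\|_0<\infty$ includes $\|w_2\|_0=\sup_{c_1}|w_2(0,c_1,1)|<\infty$, so the support of $\rho_0$ has bounded second coordinate; in particular the graph of $F$ (contained in the epigraph, hence in $\mathrm{supp}\,\rho_0$) lies in a slab $\{|u_2|\le \|W\|_0\}$, and your displacement estimate $|\pi_1 T_t(u)-u_1|\le K_t t\bigl(1+(|u_2|+K_t t)^q\bigr)$ is then uniform on the graph. The fast-growing-$F$ scenario you propose to handle by time-slicing cannot occur, so the iteration is unnecessary.

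The paper exploits this more directly. It restricts to the graph of $F$ from the start: choose (in a suitable neuronal embedding) $C_1(u)\in\Omega_1$ with $(w_1(0,C_1(u)),w_2(0,C_1(u),1))=(u,F(u))$, set $M(t,u)=w_1(t,C_1(u))$, and use Lemma~\ref{lem:a priori MF} (which bounds $|\Delta_1^w(z,c_1;W(t))|$ uniformly in $c_1\in\Omega_1$, with a constant depending only on $\|W\|_0$ and $t$) to get $|M(t,u)-u|\le K_T t$ uniformly in $u$. This already gives a continuous map $M_t:\mathbb{R}^{d_1}\to\mathbb{R}^{d_1}$ with bounded displacement, which one extends to $\mathbb{S}^{d_1}$ by fixing the point at infinity; $M$ is then a homotopy from $M_0=\mathrm{id}$ to $M_t$, and were $M_t$ not surjective it would factor through $\mathbb{S}^{d_1}\setminus\{p\}$ and make the identity null-homotopic, contradicting the non-contractibility of $\mathbb{S}^{d_1}$. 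So the paper replaces your Brouwer-on-balls argument by a single homotopy on the sphere, and uses only the graph of $F$, not the full epigraph.
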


\begin{proof}
Since the support of ${\rm Law}\left(w_{1}\left(0,C_{1}\right),w_{2}\left(0,C_{1},1\right)\right)$
contains the graph of $F:\;\mathbb{W}_{1}\to\mathbb{W}_{2}$, we can
choose the neuronal embedding so that there is a choice $C_{1}\left(u\right)$
for each $u\in\mathbb{W}_{1}$ such that $w_{1}\left(0,C_{1}\left(u\right)\right)=u$
and $w_{2}\left(0,C_{1}\left(u\right),1\right)=F\left(u\right)$,
and furthermore for any neighborhood $U$ of $\left(u,F\left(u\right)\right)$,
$\left(w_{1}\left(0,C_{1}\right),w_{2}\left(0,C_{1},1\right)\right)$
lies in $U$ with positive probability. For an arbitrary $T\geq0$,
let us define $M:\;\left[0,T\right]\times\mathbb{W}_{1}\to\mathbb{W}_{1}$
by $M\left(t,u\right)=w_{1}\left(t,C_{1}\left(u\right)\right)$.

We show that $M$ is continuous. In the following, we define $K_{t}$
to be a generic constant that changes with $t$ and is finite with
finite $t$. We first have from Assumption \ref{enu:Assump_backward}
that:
\begin{align*}
\left|\Delta_{2}^{H}\left(t,z,1\right)\right| & \leq K,\\
\left|\Delta_{2}^{w}\left(t,z,c_{1},1\right)\right| & \leq K\left(1+\left|\Delta_{2}^{H}\left(t,z,1\right)\right|\right)\leq K,
\end{align*}
which implies, by Assumption \ref{enu:Assump_lrSchedule},
\[
\left|w_{2}\left(t,c_{1},1\right)\right|\leq\left|w_{2}\left(0,c_{1},1\right)\right|+K_{t}.
\]
In particular, for any $u\in\mathbb{W}_{1}$,
\[
\left|w_{2}\left(t,C_{1}\left(u\right),1\right)\right|\leq F\left(u\right)+K_{t}\leq K_{t}.
\]
We then have from Assumptions \ref{enu:Assump_forward}-\ref{enu:Assump_backward}
that
\begin{align*}
\left|H_{1}\left(t,x,c_{1}\right)-H_{1}\left(t,x,c_{1}'\right)\right| & \leq K\left|w_{1}\left(t,c_{1}\right)-w_{1}\left(t,c_{1}'\right)\right|,\\
\left|\Delta_{2}^{w}\left(t,z,c_{1},1\right)-\Delta_{2}^{w}\left(t,z,c_{1}',1\right)\right| & \leq K\left(1+\left|\Delta_{2}^{H}\left(t,z,1\right)\right|\right)\left|H_{1}\left(t,x,c_{1}\right)-H_{1}\left(t,x,c_{1}'\right)\right|\\
 & \quad+K\left|w_{2}\left(t,c_{1},1\right)-w_{2}\left(t,c_{1}',1\right)\right|\\
 & \leq K\left(\left|w_{2}\left(t,c_{1},1\right)-w_{2}\left(t,c_{1}',1\right)\right|+\left|w_{1}\left(t,c_{1}\right)-w_{1}\left(t,c_{1}'\right)\right|\right),\\
\left|\Delta_{1}^{H}\left(t,z,c_{1}\right)-\Delta_{1}^{H}\left(t,z,c_{1}'\right)\right| & \leq K\left(1+\left|\Delta_{2}^{H}\left(t,z,1\right)\right|\right)\Big(\left|w_{2}\left(t,c_{1},1\right)-w_{2}\left(t,c_{1}',1\right)\right|\\
 & \qquad+\left(1+\left|w_{2}\left(t,c_{1},1\right)\right|+\left|w_{2}\left(t,c_{1}',1\right)\right|\right)\left|H_{1}\left(t,x,c_{1}\right)-H_{1}\left(t,x,c_{1}'\right)\right|\Big)\\
 & \leq K\left|w_{2}\left(t,c_{1},1\right)-w_{2}\left(t,c_{1}',1\right)\right|\\
 & \quad+K_{t}\left(1+\left|w_{2}\left(0,c_{1},1\right)\right|+\left|w_{2}\left(0,c_{1}',1\right)\right|\right)\left|w_{1}\left(t,c_{1}\right)-w_{1}\left(t,c_{1}'\right)\right|,\\
\left|\Delta_{1}^{w}\left(t,z,c_{1}\right)-\Delta_{1}^{w}\left(t,z,c_{1}'\right)\right| & \leq K\left(\left|\Delta_{1}^{H}\left(t,z,c_{1}\right)-\Delta_{1}^{H}\left(t,z,c_{1}'\right)\right|+\left|w_{1}\left(t,c_{1}\right)-w_{1}\left(t,c_{1}'\right)\right|\right)\\
 & \leq K\left|w_{2}\left(t,c_{1},1\right)-w_{2}\left(t,c_{1}',1\right)\right|\\
 & \quad+K_{t}\left(1+\left|w_{2}\left(0,c_{1},1\right)\right|+\left|w_{2}\left(0,c_{1}',1\right)\right|\right)\left|w_{1}\left(t,c_{1}\right)-w_{1}\left(t,c_{1}'\right)\right|.
\end{align*}
Defining 
\[
R\left(t\right)=\left|w_{2}\left(t,C_{1}\left(u\right),1\right)-w_{2}\left(t,C_{1}\left(u'\right),1\right)\right|^{2}+\left|w_{1}\left(t,C_{1}\left(u\right)\right)-w_{1}\left(t,C_{1}\left(u'\right)\right)\right|^{2}
\]
for some $u,u'\in\mathbb{W}_{1}$, we then have for any $t\leq T$:
\begin{align*}
\frac{d}{dt}R\left(t\right) & \leq K_{T}\left(1+\left|w_{2}\left(0,C_{1}\left(u\right),1\right)\right|+\left|w_{2}\left(0,C_{1}\left(u'\right),1\right)\right|\right)^{2}R\left(t\right)\\
 & =K_{T}\left(1+\left|F\left(u\right)\right|+\left|F\left(u'\right)\right|\right)^{2}R\left(t\right)\\
 & \leq K_{T}R\left(t\right),
\end{align*}
which implies that $R\left(t\right)\leq R\left(0\right)\exp\left(K_{T}t\right)$.
In addition, by Assumption \ref{enu:Assump_backward},
\begin{align*}
\left|\Delta_{1}^{H}\left(t,z,c_{1}\right)\right| & \leq K\left(1+\left|\Delta_{2}^{H}\left(t,z,1\right)\right|\right)\left(1+\left|w_{2}\left(t,c_{1},1\right)\right|\right)\\
 & \leq K\left|w_{2}\left(0,c_{1},1\right)\right|+K_{t},\\
\left|\Delta_{1}^{w}\left(t,z,c_{1}\right)\right| & \leq K\left(1+\left|\Delta_{1}^{H}\left(t,z,c_{1}\right)\right|\right)\\
 & \leq K\left|w_{2}\left(0,c_{1},1\right)\right|+K_{t},
\end{align*}
which leads to
\[
\left|w_{1}\left(t,c_{1}\right)-w_{1}\left(t',c_{1}\right)\right|\leq K_{t\lor t'}\left(1+\left|w_{2}\left(0,c_{1},1\right)\right|\right)\left|t-t'\right|.
\]
Since $R\left(0\right)=\left|F\left(u\right)-F\left(u'\right)\right|^{2}+\left|u-u'\right|^{2}\to0$
as $u\to u'$, we deduce that, for $t,t'\leq T$,
\begin{align*}
\left|w_{1}\left(t,C_{1}\left(u\right)\right)-w_{1}\left(t',C_{1}\left(u'\right)\right)\right| & \leq\left|w_{1}\left(t,C_{1}\left(u\right)\right)-w_{1}\left(t',C_{1}\left(u\right)\right)\right|\\
 & \quad+\left|w_{1}\left(t',C_{1}\left(u\right)\right)-w_{1}\left(t',C_{1}\left(u'\right)\right)\right|\\
 & \leq K_{T}\left(1+\left|F\left(u\right)\right|\right)\left|t-t'\right|+\sqrt{R\left(0\right)}\exp\left(K_{T}T\right)\\
 & \to0
\end{align*}
as $\left(u,t\right)\to\left(u',t'\right)$. This shows that $M\left(t,u\right)=w_{1}\left(t,C_{1}\left(u\right)\right)$
is continuous.

Recall that $\mathbb{W}_{1}=\mathbb{R}^{d}$, and consider the sphere
$\mathbb{S}^{d}$ which is a compactification of $\mathbb{W}_{1}$.
We extend $M:\;\left[0,T\right]\times\mathbb{S}^{d}\to\mathbb{S}^{d}$
fixing the point at infinity, which remains a continuous map since
\[
\left|M\left(t,u\right)-u\right|=\left|M\left(t,u\right)-M\left(0,u\right)\right|=\left|w_{1}\left(t,C_{1}\left(u\right)\right)-w_{1}\left(0,C_{1}\left(u\right)\right)\right|\leq K_{T}\left(1+\left|F\left(u\right)\right|\right)t\leq K_{T}t.
\]
Let $M_{t}:\mathbb{W}_{1}\to\mathbb{W}_{1}$ be defined by $M_{t}(u)=M(t,u)$.
Observe that if $M_{t}$ is surjective for all $t$, then the support
of ${\rm Law}\left(w_{1}\left(t,C_{1}\right)\right)$ is $\mathbb{W}_{1}$,
since for a neighborhood $B$ of $M(t,u)=w_{1}(t,C_{1}(u))$, $\mathbb{P}(w_{1}(t,C_{1})\in B)=\mathbb{P}(w_{1}(0,C_{1})\in M_{t}^{-1}(B))>0$.
It is indeed true that $M_{t}$ is surjective for all $t$ for the
following reason. If $M_{t}$ fails to be surjective for some $t$,
then for some $p\in\mathbb{S}^{d}$, $M_{t}:\;\mathbb{S}^{d}\to\mathbb{S}^{d}\backslash\left\{ p\right\} \to\mathbb{S}^{d}$
is homotopic to the constant map, but $M$ then gives a homotopy from
the identity map $M_{0}$ on the sphere to a constant map, which is
a contradiction as the sphere $\mathbb{S}^{d}$ is not contractible.
This finishes the proof of the claim.
\end{proof}
We are ready to prove Theorem \ref{thm:global-optimum-2}. We recall
the setting of Theorem \ref{thm:global-optimum-2}, and in particular,
the neural network (\ref{eq:two-layer-nn}).
\begin{proof}[Proof of Theorem \ref{thm:global-optimum-2}]
It is easy to check that Assumptions \ref{enu:Assump_lrSchedule}-\ref{enu:Assump_backward}
hold. Therefore, by Theorem \ref{thm:existence ODE}, the solution
to the MF ODEs exists uniquely, and by Lemma \ref{lem:full-support-2},
the support of ${\rm Law}\left(w_{1}\left(t,C_{1}\right)\right)$
is $\mathbb{R}^{d}$ at all $t$. We recall from the convergence assumption
the limits $\bar{w}_{1}$ and $\bar{w}_{2}$, and we shall first prove
$\left(\bar{w}_{1},\bar{w}_{2}\right)$ is a global minimizer of $\mathscr{L}$
in Case 1 and $\mathscr{L}\left(\bar{w}_{1},\bar{w}_{2}\right)=0$
in Case 2.

By the convergence assumption, we have that for any $\epsilon>0$,
there exists $T\left(\epsilon\right)$ such that for all $t\geq T\left(\epsilon\right)$
and $P$-almost every $c_{1}$:
\begin{align*}
\epsilon & \geq\left|\mathbb{E}_{Z}\left[\partial_{2}{\cal L}\left(Y,\hat{y}\left(t,X\right)\right)\varphi_{2}'\left(H_{2}\left(t,X,1\right)\right)\varphi_{1}\left(\left\langle w_{1}\left(t,c_{1}\right),X\right\rangle \right)\right]\right|\\
 & =\left|\left\langle \mathbb{E}_{Z}\left[\partial_{2}{\cal L}\left(Y,\hat{y}\left(t,X\right)\right)\middle|X=x\right]\varphi_{2}'\left(H_{2}\left(t,x,1\right)\right),\varphi_{1}\left(\left\langle w_{1}\left(t,c_{1}\right),x\right\rangle \right)\right\rangle _{L^{2}\left({\cal P}_{X}\right)}\right|.
\end{align*}
Let ${\cal H}\left(f_{1},f_{2},x\right)=\mathbb{E}_{Z}\left[\partial_{2}{\cal L}\left(Y,\hat{y}\left(X;f_{1},f_{2}\right)\right)\middle|X=x\right]\varphi_{2}'\left(H_{2}\left(x;f_{1},f_{2}\right)\right)$.
Since ${\rm Law}\left(w_{1}\left(t,C_{1}\right)\right)$ has full
support, we obtain that for $u$ in a dense subset of $\mathbb{R}^{d}$,
\[
\left|\left\langle {\cal H}\left(w_{1}\left(t,\cdot\right),w_{2}\left(t,\cdot,1\right),x\right),\varphi_{1}\left(\left\langle u,x\right\rangle \right)\right\rangle _{L^{2}\left({\cal P}_{X}\right)}\right|\leq\epsilon.
\]
Since $\varphi_{1}'$ is bounded and $\left|X\right|\leq K$, $\mathbb{E}_{X}\left[\left|\varphi_{1}(\left\langle u',X\right\rangle )-\varphi_{1}(\left\langle u,X\right\rangle )\right|^{2}\right]\to0$
as $u'\to u$. Hence, 
\[
\left|\left\langle {\cal H}\left(w_{1}\left(t,\cdot\right),w_{2}\left(t,\cdot,1\right),x\right),\varphi_{1}\left(\left\langle u,x\right\rangle \right)\right\rangle _{L^{2}\left({\cal P}_{X}\right)}\right|\leq\epsilon,
\]
for all $u\in\mathbb{R}^{d}$. We claim that ${\cal H}\left(w_{1}\left(t,\cdot\right),w_{2}\left(t,\cdot,1\right),X\right)\to{\cal H}\left(\bar{w}_{1},\bar{w}_{2},X\right)$
in $L^{1}\left({\cal P}_{X}\right)$ as $t\to\infty$. Assuming this
claim, since $\varphi_{1}$ is bounded, we have for every $u\in\mathbb{R}^{d}$,
\[
\left\langle {\cal H}\left(\bar{w}_{1},\bar{w}_{2},x\right),\varphi_{1}\left(u,x\right)\right\rangle _{L^{2}\left({\cal P}_{X}\right)}=0.
\]
Since $\left\{ \varphi_{1}\left(\left\langle u,\cdot\right\rangle \right):\;u\in\mathbb{R}^{d}\right\} $
has dense span in $L^{2}\left({\cal P}_{X}\right)$,
\[
{\cal H}\left(\bar{w}_{1},\bar{w}_{2},x\right)=\mathbb{E}\left[\partial_{2}{\cal L}\left(Y,\hat{y}\left(X;\bar{w}_{1},\bar{w}_{2}\right)\right)\middle|X=x\right]\varphi_{2}'\left(H_{2}\left(x;\bar{w}_{1},\bar{w}_{2}\right)\right)=0,
\]
for ${\cal P}_{X}$-almost every $x$.

In Case 1, $\varphi_{2}'$ is non-zero, we get $\mathbb{E}\left[\partial_{2}{\cal L}\left(Y,\hat{y}\left(X;\bar{w}_{1},\bar{w}_{2}\right)\right)\middle|X=x\right]=0$
for ${\cal P}_{X}$-almost every $x$. For ${\cal L}$ convex in the
second variable, for any measurable function $\tilde{y}(x)$, 
\[
{\cal L}\left(y,\tilde{y}\left(x\right)\right)-{\cal L}\left(y,\hat{y}\left(x;\bar{w}_{1},\bar{w}_{2}\right)\right)\ge\partial_{2}{\cal L}\left(y,\hat{y}\left(x;\bar{w}_{1},\bar{w}_{2}\right)\right)\left(\tilde{y}\left(x\right)-\hat{y}\left(x;\bar{w}_{1},\bar{w}_{2}\right)\right).
\]
Taking expectation, we get $\mathbb{E}_{Z}\left[{\cal L}\left(Y,\tilde{y}\left(X\right)\right)\right]\geq\mathscr{L}\left(\bar{w}_{1},\bar{w}_{2}\right)$,
i.e. $\left(\bar{w}_{1},\bar{w}_{2}\right)$ is a global minimizer
of $\mathscr{L}$.

In Case 2, since $y$ is a function of $x$ and $\varphi_{2}'$ is
non-zero, we obtain $\partial_{2}{\cal L}\left(y,\hat{y}\left(x;\bar{w}_{1},\bar{w}_{2}\right)\right)=0$
and hence ${\cal L}\left(y,\hat{y}\left(x;\bar{w}_{1},\bar{w}_{2}\right)\right)=0$
for ${\cal P}_{X}$-almost every $x$. That is, $\mathscr{L}\left(\bar{w}_{1},\bar{w}_{2}\right)=0$.

We now prove the claim. Using the assumptions and recalling the coupling
$\pi_{t}$ in Assumption \ref{assump:two-layers}.4:
\begin{align*}
 & \mathbb{E}\left[\left|{\cal H}(w_{1}(t,\cdot),w_{2}(t,\cdot,1),X)-{\cal H}(\bar{w}_{1}(t,\cdot),\bar{w}_{2}(t,\cdot,1),X)\right|\right]\\
 & \le\mathbb{E}\left[\left|\partial_{2}{\cal L}\left(Y,\hat{y}\left(X;w_{1},w_{2}\right)\right)\varphi_{2}'\left(H_{2}\left(X;w_{1},w_{2}\right)\right)-\partial_{2}{\cal L}\left(Y,\hat{y}\left(X;\bar{w}_{1},\bar{w}_{2}\right)\right)\varphi_{2}'\left(H_{2}\left(X;\bar{w}_{1},\bar{w}_{2}\right)\right)\right|\right]\\
 & \le K\left(\mathbb{E}\left[\left|\varphi_{2}'\left(H_{2}\left(X;w_{1},w_{2}\right)\right)-\varphi_{2}'\left(H_{2}\left(X;\bar{w}_{1},\bar{w}_{2}\right)\right)\right|+\left|\varphi_{2}\left(H_{2}\left(X;w_{1},w_{2}\right)\right)-\varphi_{2}\left(H_{2}\left(X;\bar{w}_{1},\bar{w}_{2}\right)\right)\right|\right]\right)\\
 & \le K\mathbb{E}\left[\left|H_{2}\left(X;w_{1},w_{2}\right)-H_{2}\left(X;\bar{w}_{1},\bar{w}_{2}\right)\right|\right]\\
 & \le K\mathbb{E}_{\pi_{t}}\left[\left|\bar{w}_{2}(C_{1})-w_{2}(t,C_{1}',1)\right|+\left|\bar{w}_{2}(C_{1})\right|\left|\varphi_{1}\left(\left\langle w_{1}(t,C_{1}'),x\right\rangle \right)-\varphi_{1}\left(\left\langle \bar{w}_{1}(C_{1}),x\right\rangle \right)\right|\right]\\
 & \le K\mathbb{E}_{\pi_{t}}\left[\left|\bar{w}_{2}(C_{1})-w_{2}(t,C_{1}',1)\right|+\left|\bar{w}_{2}(C_{1})\right|\left|w_{1}(t,C_{1}')-\bar{w}_{1}(C_{1})\right|\right],
\end{align*}
which converges to $0$ by assumption. This proves the claim.

Finally to connect $\mathscr{L}\left(\bar{w}_{1},\bar{w}_{2}\right)$
with $\mathscr{L}\left(W\left(t\right)\right)$ in the limit $t\to\infty$,
we have:
\begin{align*}
\left|\mathscr{L}\left(W\left(t\right)\right)-\mathscr{L}\left(\bar{w}_{1},\bar{w}_{2}\right)\right| & =\left|\mathbb{E}_{Z}\left[{\cal L}\left(Y,\hat{y}\left(X;W\left(t\right)\right)\right)-{\cal L}\left(Y,\hat{y}\left(X;\bar{w}_{1},\bar{w}_{2}\right)\right)\right]\right|\\
 & \leq K\left|\mathbb{E}_{Z}\left[\hat{y}\left(X;W\left(t\right)\right)-\hat{y}\left(X;\bar{w}_{1},\bar{w}_{2}\right)\right]\right|\\
 & \leq K\mathbb{E}_{\pi_{t}}\left[\left|\bar{w}_{2}\left(C_{1}\right)\right|\left|w_{1}\left(t,C_{1}'\right)-\bar{w}_{1}\left(C_{1}\right)\right|+\left|w_{2}\left(t,C_{1}',1\right)-\bar{w}_{2}\left(C_{1}\right)\right|\right]
\end{align*}
which again converges to $0$ by assumption. This completes the proof.
\end{proof}

\subsection{Proof of Proposition \ref{prop:converse-three-layers}\label{subsec:Proof-converse-three-layers}}
\begin{proof}[Proof of Proposition \ref{prop:converse-three-layers}]
We recall
\begin{align*}
\frac{\partial}{\partial t}w_{2}^{*}\left(t,u_{1},u_{2},u_{3}\right) & =-\mathbb{E}_{Z}\Big[\partial_{2}{\cal L}\left(Y,\hat{y}^{*}\left(t,X\right)\right)w_{3}^{*}\left(t,u_{3}\right)\\
 & \qquad\quad\times\varphi_{3}'\left(H_{3}^{*}\left(t,X\right)\right)\varphi_{2}'\left(H_{2}^{*}\left(t,X,u_{3}\right)\right)\varphi_{1}\left(\left\langle w_{1}^{*}\left(t,u_{1}\right),X\right\rangle \right)\Big],
\end{align*}
for $u_{1}\in\mathbb{R}^{d}$, $u_{2}\in{\rm supp}\left(\rho^{2}\right)$,
$u_{3}\in{\rm supp}\left(\rho^{3}\right)$. By the regularity assumption,
\[
\left|\frac{\partial}{\partial t}w_{2}^{*}\left(t,u_{1},u_{2},u_{3}\right)\right|\leq K\mathbb{E}_{Z}\left[\left|\partial_{2}{\cal L}\left(Y,\hat{y}^{*}\left(t,X\right)\right)\right|\right]\left|w_{3}^{*}\left(t,u_{3}\right)\right|.
\]
Note that the right-hand side is independent of $u_{1}$ and $u_{2}$.
Since $\int\left|w_{3}^{*}\left(t,u_{3}'\right)-\bar{w}_{3}(u_{3})\right|d\pi_{t}^{3}\left(u_{3},u_{3}'\right)\to0$
as $t\to\infty$ for a coupling $\pi_{t}^{3}$ of $\rho_{3}$ and
itself, we have for some finite $t_{0}\leq K$,
\[
\mathbb{E}\left[\left|\bar{w}_{3}(U_{3})\right|\right]\leq\mathbb{E}\left[\left|w_{3}^{*}\left(t_{0},U_{3}\right)\right|\right]+K\leq K,
\]
where the last step is by an argument similar to the proof of Lemma
\ref{lem:bounds MF a priori} and the initialization assumption. As
such, for all $t$ sufficiently large, we have:
\begin{align*}
\sup_{u_{1}\in\mathbb{R}^{d},\;u_{2}\in{\rm supp}\left(\rho^{2}\right)}\mathbb{E}_{U_{3}\sim\rho^{3}}\left[\left|\frac{\partial}{\partial t}w_{2}^{*}\left(t,u_{1},u_{2},U_{3}\right)\right|\right] & \leq K\mathbb{E}_{Z}\left[\left|\partial_{2}{\cal L}\left(Y,\hat{y}^{*}\left(t,X\right)\right)\right|\right]\mathbb{E}\left[\left|w_{3}^{*}\left(t,U_{3}\right)\right|\right]\\
 & \leq K\mathbb{E}_{Z}\left[\left|\partial_{2}{\cal L}\left(Y,\hat{y}^{*}\left(t,X\right)\right)\right|\right]\left(K+\mathbb{E}\left[\left|\bar{w}_{3}\left(U_{3}\right)\right|\right]\right)\\
 & \leq K\mathbb{E}_{Z}\left[\left|\partial_{2}{\cal L}\left(Y,\hat{y}^{*}\left(t,X\right)\right)\right|\right].
\end{align*}
The proof concludes once we show that $\mathbb{E}_{Z}\left[\left|\partial_{2}{\cal L}\left(Y,\hat{y}^{*}\left(t,X\right)\right)\right|\right]\to0$
as $t\to\infty$.

For a fixed $x$, let us write ${\cal L}\left(t,x\right)=\mathbb{E}\left[{\cal L}(Y,\hat{y}^{*}(t,X))\middle|X=x\right]$
and $\partial_{2}{\cal L}(t,x)=\mathbb{E}\left[\partial_{2}{\cal L}(Y,\hat{y}^{*}(t,X))\middle|X=x\right]$
for brevity. Consider Case 1. We claim that if there is an increasing
sequence of time $t_{i}$ so that $\lim_{i\to\infty}\left[{\cal L}(t_{i},x)-\inf_{\hat{y}}\mathbb{E}\left[{\cal L}(Y,\hat{y})\middle|X=x\right]\right]=0$,
then $\lim_{i\to\infty}\left|\partial_{2}{\cal L}(t_{i},x)\right|=0$.
Indeed, it suffices to show that for any subsequence $t_{i_{j}}$
of $t_{i}$, there exists a further subsequence $t_{i_{j_{k}}}$ such
that $\lim_{k\to\infty}\left|\partial_{2}{\cal L}(t_{i_{j_{k}}},x)\right|=0$.
In any subsequence $t_{i_{j}}$ of $t_{i}$, using that ${\cal L}(t_{i_{j}},x)$
is convergent and the fact ${\cal L}(y,\hat{y})\to\infty$ as $|\hat{y}|\to\infty$,
we have $\hat{y}^{*}(t_{i_{j}},x)$ is bounded. Hence, we obtain a
subsequence $t_{i_{j_{k}}}$ for which $\hat{y}^{*}(t_{i_{j_{k}}},x)$
converges to some limit $\hat{y}^{*}$. By continuity, we have $\mathbb{E}[{\cal L}(Y,\hat{y}^{*})|X=x]=\lim_{k\to\infty}{\cal L}(t_{i_{j_{k}}},x)=\inf_{\hat{y}}\mathbb{E}[{\cal L}(Y,\hat{y})|X=x]$.
Thus, since ${\cal L}$ is convex in the second variable, we have
$\mathbb{E}[\partial_{2}{\cal L}(Y,\hat{y}^{*})|X=x]=0$. Thus, $\lim_{k\to\infty}\left|\partial_{2}{\cal L}(t_{i_{j_{k}}},x)\right|=\left|\mathbb{E}[\partial_{2}{\cal L}(Y,\hat{y}^{*})|X=x]\right|=0$,
as claimed. Similarly, we obtain in Case 2 that if there is an increasing
sequence of time $t_{i}$ so that $\lim_{i\to\infty}{\cal L}(t_{i},x)=0$,
then $\lim_{i\to\infty}\left|\partial_{2}{\cal L}(t_{i},x)\right|=0$.

To show that $\mathbb{E}_{Z}\left[\left|\partial_{2}{\cal L}\left(t,X\right)\right|\right]\to0$
as $t\to\infty$, it suffices to show that for any increasing sequence
of times $t_{i}$ tending to infinity, there exists a subsequence
$t_{i_{j}}$ of $t_{i}$ such that $\mathbb{E}_{Z}\left[\left|\partial_{2}{\cal L}\left(t_{i_{j}},X\right)\right|\right]\to0$.
In Case 1, we have $\lim_{i\to\infty}\mathscr{L}\left(W^{*}\left(t_{i}\right)\right)=\inf_{\tilde{y}}\mathbb{E}_{Z}\left[{\cal L}\left(Y,\tilde{y}\left(X\right)\right)\right]$,
so $\lim_{i\to\infty}\mathbb{E}_{Z}\left[{\cal L}\left(t_{i},X\right)-\inf_{\tilde{y}\left(X\right)}\mathbb{E}_{Z}\left[{\cal L}\left(Y,\tilde{y}\left(X\right)\right)\middle|X\right]\right]=0$.
Since ${\cal L}\left(t_{i},X\right)-\inf_{\tilde{y}\left(X\right)}\mathbb{E}_{Z}\left[{\cal L}\left(Y,\tilde{y}\left(X\right)\right)\middle|X\right]$
is nonnegative, it converges to $0$ in probability. Thus, there is
a further subsequence $t_{i_{j}}$ for which ${\cal L}\left(t_{i_{j}},X\right)-\inf_{\tilde{y}\left(X\right)}\mathbb{E}_{Z}\left[{\cal L}\left(Y,\tilde{y}\left(X\right)\right)\middle|X\right]$
converges to $0$ ${\cal P}$-almost surely. By the previous claim,
$\left|\partial_{2}{\cal L}\left(t_{i_{j}},X\right)\right|$ converges
to $0$ ${\cal P}$-almost surely. Since $\left|\partial_{2}{\cal L}\left(t_{i_{j}},X\right)\right|$
is bounded ${\cal P}$-almost surely, we obtain that $\mathbb{E}_{Z}\left[\left|\partial_{2}{\cal L}\left(t_{i_{j}},X\right)\right|\right]\to0$
from the bounded convergence theorem. The result in Case 2 can be
established similarly.
\end{proof}

\section{Remaining proofs for Section \ref{sec:global_convergence_general}\label{sec:Remaining-proofs-global-conv-general}}

\subsection{Proof of Proposition \ref{prop:converse-multilayer}\label{subsec:Proof-converse-multi}}
\begin{proof}[Proof of Proposition \ref{prop:converse-multilayer}]
We recall
\[
\frac{\partial}{\partial t}w_{L}\left(t,c_{L-1},1\right)=-\mathbb{E}_{Z}\left[\partial_{2}{\cal L}\left(Y,\hat{y}\left(X;W\left(t\right)\right)\right)\varphi_{L}'\left(H_{L}\left(X,1;W\left(t\right)\right)\right)\varphi_{L-1}\left(H_{L-1}\left(X,c_{L-1};W\left(t\right)\right)\right)\right],
\]
for $c_{L-1}\in\Omega_{L-1}$. By the regularity assumption,
\[
\left|\frac{\partial}{\partial t}w_{L}\left(t,c_{L-1},1\right)\right|\leq K\mathbb{E}_{Z}\left[\left|\partial_{2}{\cal L}\left(Y,\hat{y}\left(X;W\left(t\right)\right)\right)\right|\right].
\]
Note that the right-hand side is independent of $c_{L-1}$. Then as
argued in the proof of Proposition \ref{prop:converse-three-layers}
(Section \ref{subsec:Proof-converse-three-layers}), $\mathbb{E}_{Z}\left[\left|\partial_{2}{\cal L}\left(Y,\hat{y}\left(X;W\left(t\right)\right)\right)\right|\right]\to0$
as $t\to\infty$. This completes the proof.
\end{proof}

\section{Remaining proofs for Section \ref{sec:Global-convergence-ms}\label{sec:Remaining-proofs-global-conv-ms}}

\subsection{Proof of Theorem \ref{thm:global-optimum-2-ms}\label{subsec:Proof-two-layers-ms}}
\begin{proof}[Proof of Theorem \ref{thm:global-optimum-2-ms}]
For brevity, let us write
\[
\overline{H}_{2}(x)=H_{2}(x,1;\overline{W}),\qquad\bar{y}(x)=\hat{y}(x;\overline{W}).
\]
We also define
\begin{align*}
G_{2}(t,u_{1}) & =\mathbb{E}_{Z}\left[\partial_{2}{\cal L}(Y,\hat{y}(t,X))\varphi_{2}'(H_{2}(t,X,1))\varphi_{1}(\langle u_{1},X\rangle)\right],\\
G_{1}(t,u_{1}) & =\mathbb{E}_{Z}\left[\partial_{2}{\cal L}(Y,\hat{y}(t,X))\varphi_{2}'(H_{2}(t,X,1))\varphi_{1}'(\langle u_{1},X\rangle)X\right],\\
\overline{G}_{2}(u_{1}) & =\mathbb{E}_{Z}\left[\partial_{2}{\cal L}(Y,\overline{y}(X))\varphi_{2}'(\overline{H}_{2}(X))\varphi_{1}(\langle u_{1},X\rangle)\right],\\
\overline{G}_{1}(u_{1}) & =\mathbb{E}_{Z}\left[\partial_{2}{\cal L}(Y,\overline{y}(X))\varphi_{2}'(\overline{H}_{2}(X))\varphi_{1}'(\langle u_{1},X\rangle)X\right].
\end{align*}
We claim that as $t\to\infty$,
\[
\mathbb{E}\left[\left|H_{2}(t,X,1)-\overline{H}_{2}(X)\right|\right]\to0,\qquad\mathbb{E}\left[\left|\hat{y}(t,X)-\bar{y}(X)\right|\right]\to0,
\]
\[
\mathbb{E}\left[\left|\partial_{2}{\cal L}(Y,\hat{y}(t,X))-\partial_{2}{\cal L}(Y,\overline{y}(X))\right|\right]\to0,
\]
and uniformly in $u_{1}$,
\[
\left|G_{1}(t,u_{1})-\overline{G}_{1}(u_{1})\right|\to0,\qquad\left|G_{2}(t,u_{1})-\overline{G}_{2}(u_{1})\right|\to0.
\]
Indeed recall the coupling $\pi_{t}$ in Assumption \ref{assump:Morse-Sard-2},
we have from Assumption \ref{assump:two-layers}.3:
\begin{align*}
\mathbb{E}_{X}\left[\left|H_{2}(t,X,1)-\overline{H}_{2}(X)\right|\right] & =\mathbb{E}_{X}\left[\left|\mathbb{E}_{(C_{1},C_{1}')\sim\pi_{t}}\left[w_{2}(t,C_{1}',1)\varphi_{1}(\langle w_{1}(t,C_{1}'),X\rangle)-\overline{w}_{2}(C_{1})\varphi_{1}(\langle\overline{w}_{1}(C_{1}),X\rangle)\right]\right|\right]\\
 & \le K\mathbb{E}_{\pi_{t}}\left[\left|\overline{w}_{2}(C_{1})\right|\left|w_{1}(t,C_{1}')-\overline{w}_{1}(C_{1})\right|+\left|w_{2}(t,C_{1}',1)-\overline{w}_{2}(C_{1})\right|\right],
\end{align*}
which tends to $0$ as $t\to\infty$ by Assumption \ref{assump:Morse-Sard-2}.
The other claims can be derived similarly.

Consider the limit potential $\overline{{\cal F}}$ given by 
\[
\overline{{\cal F}}(u_{1})=\frac{1}{2}\left|\overline{G}_{2}(u_{1})\right|^{2}.
\]
By Assumption \ref{assump:two-layers}.3, $u_{1}\mapsto\overline{{\cal F}}\left(u_{1}\right)$
is continuous. Notice that 
\[
\nabla\overline{{\cal F}}(u_{1})=\frac{1}{2}\cdot2\overline{G}_{2}(u_{1})\nabla\left(\overline{G}_{2}(u_{1})\right)=\overline{G}_{2}(u_{1})\overline{G}_{1}(u_{1}).
\]
Let $\overline{{\cal F}}^{\infty}:\mathbb{S}^{d-1}\to\mathbb{R}$
be defined by $\overline{{\cal F}}^{\infty}(\tilde{u}_{1})=\lim_{r\to\infty}\overline{{\cal F}}(r\tilde{u}_{1})$,
which exists by Assumption \ref{assump:Morse-Sard-2}. We shall argue
that $\overline{{\cal F}}(u_{1})=0$ for all $u_{1}\in\mathbb{R}^{d}$,
by contradiction. To that end, let us assume that $\overline{{\cal F}}(u_{1})\ne0$
for some $u_{1}$. Note that $\overline{{\cal F}}$ is bounded by
a constant by Assumption \ref{assump:two-layers}.3. Thus, either
there is a local maximizer $u_{1}^{*}$ of $\overline{{\cal F}}$
with $\overline{{\cal F}}(u_{1}^{*})>0$ or there is a local maximizer
$\tilde{u}_{1}^{*}$ of $\overline{{\cal F}}^{\infty}$ with $\overline{{\cal F}}^{\infty}(\tilde{u}_{1}^{*})>0$.

First consider the case that $\overline{{\cal F}}$ has a local maximizer
$u_{1}^{*}$ with $\overline{{\cal F}}(u_{1}^{*})>0$. Under Assumption
\ref{assump:Morse-Sard-2}, there exists $\delta\in\left(0,\overline{{\cal F}}\left(u_{1}^{*}\right)\right)$
arbitrarily small so that for $S_{\delta}$ the connected component
of the set $\{u:\;\overline{{\cal F}}(u)>\overline{{\cal F}}(u_{1}^{*})-\delta\}$
that contains $u_{1}^{*}$, there is $\xi>0$ such that $\left|\nabla\overline{{\cal F}}(u_{1})\right|>\xi$
for all $u_{1}\in\partial{\rm cl}\left(S_{\delta}\right)$. Let $T_{0}$
be sufficiently large so that for $t\ge T_{0}$, we have if $u_{1}\in\partial{\rm cl}\left(S_{\delta}\right)$,
$\left|\overline{G}_{1}(u_{1})-G_{1}(t,u_{1})\right|\le\xi/\sqrt{8\overline{{\cal F}}(u_{1}^{*})}$,
which implies
\begin{align}
\left\langle \overline{G}_{1}(u_{1}),G_{1}(t,u_{1})\right\rangle  & \geq\left|\overline{G}_{1}(u_{1})\right|^{2}-\left|\overline{G}_{1}(u_{1})\right|\left|\overline{G}_{1}(u_{1})-G_{1}(t,u_{1})\right|\nonumber \\
 & \geq\left|\overline{G}_{1}(u_{1})\right|^{2}-\frac{\xi}{\sqrt{8\overline{{\cal F}}(u_{1}^{*})}}\left|\overline{G}_{1}(u_{1})\right|\nonumber \\
 & \stackrel{\left(a\right)}{\geq}\left|\overline{G}_{1}(u_{1})\right|^{2}-\frac{\left|\nabla\overline{{\cal F}}\left(u_{1}\right)\right|}{2\left|\overline{G}_{2}(u_{1})\right|}\left|\overline{G}_{1}(u_{1})\right|\nonumber \\
 & =\frac{\left|\nabla\overline{{\cal F}}\left(u_{1}\right)\right|^{2}}{4\overline{{\cal F}}\left(u_{1}\right)}\nonumber \\
 & >\frac{\xi^{2}}{4\overline{{\cal F}}(u_{1}^{*})},\label{eq:proof-global-opt-2ms-1}
\end{align}
where $\left(a\right)$ is because $2\overline{{\cal F}}(u_{1}^{*})>2\overline{{\cal F}}(u_{1})=\left|\overline{G}_{2}\left(u_{1}\right)\right|^{2}$
for any $u_{1}\in\partial{\rm cl}\left(S_{\delta}\right)$ by local
maximality of $u_{1}^{*}$ and continuity of $\overline{{\cal F}}$.
Also, we further enlarge $T_{0}$ so that for $t\ge T_{0}$ and any
$u_{1}\in{\rm cl}\left(S_{\delta}\right)$, $\left|\overline{G}_{2}(u_{1})-G_{2}(t,u_{1})\right|\le\frac{1}{2}\sqrt{\overline{{\cal F}}(u_{1}^{*})-\delta}$
and hence 
\begin{align}
G_{2}(t,u_{1}) & \geq\overline{G}_{2}(u_{1})-\frac{1}{2}\sqrt{\overline{{\cal F}}(u_{1}^{*})-\delta}>\overline{G}_{2}(u_{1})-\frac{1}{2}\sqrt{\overline{{\cal F}}(u_{1})}=\overline{G}_{2}(u_{1})-\frac{1}{2}\left|\overline{G}_{2}(u_{1})\right|,\label{eq:proof-global-opt-2ms-2.1}\\
G_{2}(t,u_{1}) & \leq\overline{G}_{2}(u_{1})+\frac{1}{2}\sqrt{\overline{{\cal F}}(u_{1}^{*})-\delta}<\overline{G}_{2}(u_{1})+\frac{1}{2}\sqrt{\overline{{\cal F}}(u_{1})}=\overline{G}_{2}(u_{1})+\frac{1}{2}\left|\overline{G}_{2}(u_{1})\right|.\label{eq:proof-global-opt-2ms-2.2}
\end{align}
Furthermore notice that
\begin{equation}
\frac{\partial}{\partial t}\overline{G}_{2}(w_{1}(t,C_{1}))=\left\langle \overline{G}_{1}(w_{1}(t,C_{1})),\frac{\partial}{\partial t}w_{1}(t,C_{1})\right\rangle =-w_{2}(t,C_{1},1)\left\langle \overline{G}_{1}(w_{1}(t,C_{1})),G_{1}(t,w_{1}(t,C_{1}))\right\rangle .\label{eq:proof-global-opt-2ms-3}
\end{equation}
Let $\tilde{\Omega}_{1}$ be the subset of $\Omega_{1}$ consisting
of $c_{1}$ where $\left|w_{2}(0,c_{1},1)\right|<\left|F(w_{1}(0,c_{1}))\right|+1$
for $F$ given in Assumption \ref{assump:two-layers}.2. The proof
of Lemma \ref{lem:full-support-2} in fact shows that for any $t\geq0$
and any open subset $B$ of $\mathbb{R}^{d}$, there exists a positive
mass of $C_{1}\in\tilde{\Omega}_{1}$ such that $w_{1}(t,C_{1})\in B$.
In the following, we consider $C_{1}\in\tilde{\Omega}_{1}$. We further
divide the argument into two cases: $\overline{G}_{2}(u_{1}^{*})>0$
and $\overline{G}_{2}(u_{1}^{*})<0$.

Let us consider the case that $\overline{G}_{2}(u_{1}^{*})>0$. Then
we can choose sufficiently small $\delta$ such that $\overline{G}_{2}(u_{1})>0$
for all $u_{1}\in{\rm cl}\left(S_{\delta}\right)$. Furthermore consider
the scenario that there exists $T\ge T_{0}$ such that a positive
mass of $(w_{1}(T,C_{1}),w_{2}(T,C_{1},1))$ with $C_{1}\in\tilde{\Omega}_{1}$
has $w_{1}(T,C_{1})\in S_{\delta}$ and $w_{2}(T,C_{1},1)<0$. Note
that if $w_{1}(t,C_{1})\in{\rm cl}\left(S_{\delta}\right)$,
\[
\frac{\partial}{\partial t}w_{2}(t,C_{1},1)=-G_{2}(t,w_{1}(t,C_{1}))\leq-\left(\overline{G}_{2}(w_{1}(t,C_{1}))-\frac{1}{2}\left|\overline{G}_{2}(w_{1}(t,C_{1}))\right|\right)<0
\]
by Eq. (\ref{eq:proof-global-opt-2ms-2.1}). Define $T_{1}=\inf\left\{ t\geq T:\;w_{1}(t,C_{1})\notin S_{\delta}\right\} $.
Then $t\mapsto w_{2}\left(t,C_{1},1\right)$ is decreasing on $t\in[T,T_{1})$.
Let us argue that $T_{1}=\infty$. Indeed, suppose $T_{1}$ is finite.
We then have, by continuity, $w_{1}(T_{1},C_{1})\in\partial{\rm cl}\left(S_{\delta}\right)$
and $w_{2}(T_{1},C_{1},1)\le w_{2}(T,C_{1},1)<0$. As such, $\frac{\partial}{\partial t}\overline{G}_{2}(w_{1}(T_{1},C_{1}))>0$
by Eq. (\ref{eq:proof-global-opt-2ms-1}) and (\ref{eq:proof-global-opt-2ms-3}).
By continuity, for some $\gamma>0$, $\frac{\partial}{\partial t}\overline{G}_{2}(w_{1}(T_{1}+t,C_{1}))>0$
for all $t\in\left[0,\gamma\right]$. But then $\overline{G}_{2}(w_{1}(T_{1}+t,C_{1}))\ge\overline{G}_{2}(w_{1}(T_{1},C_{1}))\ge\sqrt{2(\overline{{\cal F}}(u_{1}^{*})-\delta)}$,
and hence $w_{1}(T_{1}+t,C_{1})\in S_{\delta}$ for all $t\leq\gamma$,
contradicting the definition of $T_{1}$. Therefore $T_{1}=\infty$,
i.e. for $t\ge T$ and $C_{1}\in\tilde{\Omega}_{1}$ with $w_{1}(T,C_{1})\in S_{\delta}$
and $w_{2}(T,C_{1},1)<0$, we have $w_{1}(t,C_{1})\in S_{\delta}$,
which implies 
\[
G_{2}(t,w_{1}\left(t,C_{1}\right))\stackrel{\left(a\right)}{\ge}\frac{1}{2}\overline{G}_{2}\left(w_{1}\left(t,C_{1}\right)\right)=\sqrt{\frac{1}{2}\overline{{\cal F}}\left(w_{1}\left(t,C_{1}\right)\right)}\geq\sqrt{\frac{1}{2}(\overline{{\cal F}}(u_{1}^{*})-\delta)},
\]
where $\left(a\right)$ is by Eq. (\ref{eq:proof-global-opt-2ms-2.1})
and the fact $\overline{G}_{2}(u_{1})>0$ for all $u_{1}\in{\rm cl}\left(S_{\delta}\right)$.
In particular, there is a positive mass of $(w_{1}(t,C_{1}),w_{2}(t,C_{1},1))$
with $G_{2}(t,w_{1}\left(t,C_{1}\right))\ge\sqrt{(\overline{{\cal F}}(u_{1}^{*})-\delta)/2}$
for all $t\ge T$. Noting that 
\[
\frac{d}{dt}\mathbb{E}\left[{\cal L}(Y,\hat{y}(t,X))\right]=-\mathbb{E}\left[\left|G_{2}(t,w_{1}\left(t,C_{1}\right))\right|^{2}\right]-\mathbb{E}\left[\left|G_{1}(t,w_{1}\left(t,C_{1}\right))\right|^{2}\right]\leq-\mathbb{E}\left[\left|G_{2}(t,w_{1}\left(t,C_{1}\right))\right|^{2}\right],
\]
we obtain $\frac{d}{dt}\mathbb{E}\left[{\cal L}(Y,\hat{y}(t,X))\right]$
being bounded above by a strictly negative constant for all $t\ge T$,
which is a contradiction since ${\cal L}$ is bounded below.

Next consider the scenario that for all $t\ge T_{0}$, the probability
that $w_{1}(t,C_{1})\in S_{\delta}$ and $w_{2}(t,C_{1},1)<0$ on
$C_{1}\in\tilde{\Omega}_{1}$ is zero. Let us argue that for any $t\ge T_{0}$
and for a.e. $C_{1}\in\tilde{\Omega}_{1}$ with $w_{1}(t,C_{1})\in S_{\delta}$,
we have $w_{1}(s,C_{1})\in S_{\delta}$ for all $s\in[T_{0},t]$.
Indeed, consider $t$ and $C_{1}\in\tilde{\Omega}_{1}$ such that
$w_{1}(t,C_{1})\in S_{\delta}$ and $w_{1}(T',C_{1})\notin S_{\delta}$
for some $T'\in[T_{0},t)$. Let $t'=\sup\{s\in\left[T',t\right]:\;w_{1}(s,C_{1})\notin S_{\delta}\}<t$.
By continuity, $w_{1}(t',C_{1})\in\partial{\rm cl}\left(S_{\delta}\right)$
and so by Eq. (\ref{eq:proof-global-opt-2ms-1}),
\[
\left\langle \overline{G}_{1}(w_{1}\left(t',C_{1}\right)),G_{1}(t',w_{1}\left(t',C_{1}\right))\right\rangle >\frac{\xi^{2}}{4\overline{{\cal F}}(u_{1}^{*})}.
\]
By continuity, there exists $t''\in\left(t',t\right)$ such that for
all $s\in[t',t'']$,
\[
\left\langle \overline{G}_{1}(w_{1}\left(s,C_{1}\right)),G_{1}(s,w_{1}\left(s,C_{1}\right))\right\rangle \geq\frac{\xi^{2}}{100\overline{{\cal F}}(u_{1}^{*})}.
\]
By definition of $t'$, we also have $w_{1}(s,C_{1})\in S_{\delta}$
and therefore $w_{2}(s,C_{1},1)\ge0$ for any $s\in(t',t]$. Then
by Eq. (\ref{eq:proof-global-opt-2ms-3}), $\frac{\partial}{\partial t}\overline{G}_{2}(w_{1}(s,C_{1}))\leq0$
for all $s\in(t',t'']$ and therefore
\[
\overline{G}_{2}(w_{1}(t'',C_{1}))\leq\overline{G}_{2}(w_{1}(t',C_{1}))=\sqrt{2(\overline{{\cal F}}(u_{1}^{*})-\delta)},
\]
where the equality follows from $w_{1}(t',C_{1})\in\partial{\rm cl}\left(S_{\delta}\right)$.
However this contradicts with $w_{1}\left(t'',C_{1}\right)\in S_{\delta}$.
Therefore it holds that for any $t\ge T_{0}$ , for a.e. $C_{1}\in\tilde{\Omega}_{1}$
with $w_{1}(t,C_{1})\in S_{\delta}$, $w_{1}(s,C_{1})\in S_{\delta}$
and therefore $w_{2}(s,C_{1},1)\ge0$ for all $s\in[T_{0},t]$. Since
$w_{1}\left(t,C_{1}\right)$ on $C_{1}\in\tilde{\Omega}_{1}$ has
full support at any $t\geq0$, we have for any $t_{0}\geq T_{0}$,
there is a positive mass on $C_{1}\in\tilde{\Omega}_{1}$ such that
$w_{1}(t_{0},C_{1})\in S_{\delta}$ and hence, as shown, $w_{1}(s,C_{1})\in S_{\delta}$
and $w_{2}(s,C_{1},1)\ge0$ for all $s\in\left[T_{0},t_{0}\right]$.
Note that we have $w_{2}(T_{0},C_{1},1)\le M(T_{0})$ for some finite
$M(T_{0})>0$ for $C_{1}\in\tilde{\Omega}_{1}$ (which follows from
the fact $\left|\frac{\partial}{\partial t}w_{2}\left(t,\cdot,1\right)\right|\leq K$
by Assumption \ref{assump:two-layers}.3 and that $\left|w_{2}(0,C_{1},1)\right|<\left|F(w_{1}(0,C_{1}))\right|+1\leq K$).
Also note that for $w_{1}(s,C_{1})\in S_{\delta}$ and $s\geq T_{0}$,
\[
\frac{\partial}{\partial t}w_{2}(s,C_{1},1)=-G_{2}(s,w_{1}\left(s,C_{1}\right))\stackrel{\left(a\right)}{\leq}-\frac{1}{2}\overline{G}_{2}(w_{1}(s,C_{1}))=-\sqrt{\frac{1}{2}\overline{{\cal F}}\left(w_{1}\left(s,C_{1}\right)\right)}\leq-\sqrt{\frac{1}{2}(\overline{{\cal F}}(u_{1}^{*})-\delta)}
\]
a strictly negative constant, where $\left(a\right)$ is by Eq. (\ref{eq:proof-global-opt-2ms-2.1})
and the fact $\overline{G}_{2}(u_{1})>0$ for all $u_{1}\in{\rm cl}\left(S_{\delta}\right)$.
As such, for any $t_{0}\geq T_{0}$ such that
\[
M\left(T_{0}\right)-\left(t_{0}-T_{0}\right)\sqrt{\frac{1}{2}(\overline{{\cal F}}(u_{1}^{*})-\delta)}<0,
\]
there is a positive mass on $C_{1}\in\tilde{\Omega}_{1}$ such that
firstly $w_{2}(s,C_{1},1)\ge0$ for all $s\in\left[T_{0},t_{0}\right]$
and secondly there exists $t\in\left[T_{0},t_{0}\right]$ in which
\[
w_{2}(t,C_{1},1)\leq M\left(T_{0}\right)-\left(t-T_{0}\right)\sqrt{\frac{1}{2}(\overline{{\cal F}}(u_{1}^{*})-\delta)}<0.
\]
We again obtain a contradiction.

The case $\overline{G}_{2}(u_{1}^{*})<0$ can be treated similarly,
with the use of Eq. (\ref{eq:proof-global-opt-2ms-2.1}) replaced
by Eq. (\ref{eq:proof-global-opt-2ms-2.2}). Both cases lead to a
contradiction, ruling out the possibility that there is a local maximizer
$u_{1}^{*}$ of $\overline{{\cal F}}$ with $\overline{{\cal F}}(u_{1}^{*})>0$.

Next consider the case $\overline{{\cal F}}$ does not have any local
maximizer in $\mathbb{R}^{d}$ but $\overline{{\cal F}}^{\infty}$
has a local maximizer $\tilde{u}_{1}^{*}$ with $\overline{{\cal F}}^{\infty}(\tilde{u}_{1}^{*})>0$.
Under Assumption \ref{assump:Morse-Sard-2} (and with the same argument
in the discussion that follows), there exists $\delta\in\left(0,\overline{{\cal F}}^{\infty}\left(\tilde{u}_{1}^{*}\right)\right)$
arbitrarily small so that for $S_{\delta}$ the connected component
of the set $\{u\in\mathbb{R}^{d}:\;\overline{{\cal F}}(u)>\overline{{\cal F}}^{\infty}(\tilde{u}_{1}^{*})-\delta\}$
which contains $r\tilde{u}_{1}^{*}$ for all $r$ sufficiently large,
there is $\xi>0$ such that $\left|\nabla\overline{{\cal F}}(u)\right|>\xi$
for all $u\in\partial{\rm cl}\left(S_{\delta}\right)$. The rest of
the argument can be repeated as before to yield a contradiction.

In short, we have shown that $\overline{{\cal F}}(u_{1})=\frac{1}{2}\left|\overline{G}_{2}(u_{1})\right|^{2}=0$,
and equivalently,
\[
\mathbb{E}_{Z}\left[\partial_{2}{\cal L}(Y,\overline{y}(X))\varphi_{2}'(\overline{H}_{2}(X))\varphi_{1}(\langle u_{1},X\rangle)\right]=0
\]
for all $u_{1}\in\mathbb{R}^{d}$. The remaining proof follows identically
as in the proof of Theorem \ref{thm:global-optimum-2}.
\end{proof}
\bibliographystyle{amsalpha}
\bibliography{MultiLayerNN_general}

\end{document}